\definecolor{mydarkblue}{rgb}{0,0.08,0.5}
\renewcommand{\citet}[1]{\cite{#1}}
\newtheorem{lemma}{Lemma}
\newtheorem{corollary}{Corollary}
\newtheorem{theorem}{Theorem}
\newtheorem*{theorem*}{Theorem}
\newtheorem{proposition}{Proposition}
\newtheorem*{proposition*}{Proposition}
\newtheorem{assumption}{Assumption}
\newtheorem{conjecture}{Conjecture}		
\theoremstyle{definition}
\newtheorem{definition}{Definition}
\newcommand{\eg}{{\it e.g.}}
\newcommand{\ie}{{\it i.e.}}
\newcommand{\cf}{{\it cf.}}
\newcommand{\etal}{{\it et al.}}
\newcommand{\lem}{Lemma}
\newcommand{\lems}{Lemmas}
\newcommand{\thm}{Theorem}
\newcommand{\prop}{Proposition}
\newcommand{\defin}{Definition}
\newcommand{\eq}{Equation}
\newcommand{\eqs}{Equations}
\newcommand{\fig}{Figure}
\newcommand{\figs}{Figures}
\newcommand{\tab}{Table}
\newcommand{\sect}{Section}
\newcommand{\sects}{Sections}
\newcommand{\subsect}{Section}
\newcommand{\subsects}{Sections}	
\newcommand{\app}{Appendix}
\newcommand{\subapp}{Appendix}
\definecolor{darkgray}{rgb}{0.5, 0.5, 0.5}
\newcommand\darkgray[1]{{\color{darkgray}#1}}
\definecolor{green}{rgb}{0.0, 0.5, 0.0}
\def\be{\begin{equation}}
	\def\ee{\end{equation}}
\def\beas{\begin{eqnarray*}}
	\def\eeas{\end{eqnarray*}}
\def\bea{\begin{eqnarray}}
	\def\eea{\end{eqnarray}}
\newcommand{\hbf}{{\mathbf h}}
\newcommand{\xbf}{{\mathbf x}}
\newcommand{\x}{{\mathbf x}}
\newcommand{\ybf}{{\mathbf y}}
\newcommand{\zbf}{{\mathbf z}}
\newcommand{\ubf}{{\mathbf u}}
\newcommand{\vbf}{{\mathbf v}}
\newcommand{\wbf}{{\mathbf w}}
\newcommand{\w}{{\mathbf w}}
\newcommand{\sbf}{{\mathbf s}}
\newcommand{\ebf}{{\mathbf e}}
\newcommand{\abf}{{\mathbf a}}
\newcommand{\bbf}{{\mathbf b}}
\newcommand{\obf}{{\mathbf o}}
\newcommand{\qbf}{{\mathbf q}}
\newcommand{\K}{{\mathcal K}}
\newcommand{\1}{{\mathbf 1}}
\newcommand{\Abf}{{\mathbf A}}
\newcommand{\Bbf}{{\mathbf B}}
\newcommand{\Dbf}{{\mathbf D}}
\newcommand{\Gbf}{{\mathbf G}}
\newcommand{\Ibf}{{\mathbf I}}
\newcommand{\Obf}{{\mathbf O}}
\newcommand{\Wbf}{{\mathbf W}}
\newcommand{\Xbf}{{\mathbf X}}
\newcommand{\Qbf}{{\mathbf Q}}
\newcommand{\Ubf}{{\mathbf U}}
\newcommand{\Vbf}{{\mathbf V}}
\newcommand{\Sbf}{{\mathbf S}}
\newcommand{\Mbf}{{\mathbf M}}
\newcommand{\Zbf}{{\mathbf Z}}
\newcommand{\A}{{\mathcal A}}
\newcommand{\B}{{\mathcal B}}
\newcommand{\D}{{\mathcal D}}
\newcommand{\M}{{\mathcal M}}
\newcommand{\RR}{{\mathcal R}}
\renewcommand{\S}{{\mathcal S}}
\newcommand{\T}{{\mathcal T}}
\newcommand{\U}{{\mathcal U}}
\newcommand{\V}{{\mathcal V}}
\newcommand{\W}{{\mathcal W}}
\newcommand{\X}{{\mathcal X}}
\newcommand{\XX}{{\mathcal X}}
\newcommand{\Y}{{\mathcal Y}}
\newcommand{\OO}{{\mathcal O}}
\newcommand{\I}{{\mathcal I}}
\newcommand{\J}{{\mathcal J}}
\renewcommand{\L}{\mathcal{L}}
\newcommand{\R}{{\mathbb R}}
\newcommand{\N}{{\mathbb N}}
\newcommand{\normbig}[1]{\big \| #1 \big \|}
\newcommand{\normnoflex}[1]{\| #1 \|}
\newcommand{\inprod}[2]  {\left\langle{#1},{#2}\right\rangle}
\newcommand{\inprodbig}[2]  {\big \langle{#1},{#2} \big \rangle}
\newcommand{\inprodBig}[2]  {\Big \langle{#1},{#2} \Big \rangle}
\newcommand{\inprodbigg}[2]  {\bigg \langle{#1},{#2} \bigg \rangle}
\newcommand{\inprodnoflex}[2]{\langle{#1},{#2}\rangle}
\DeclareMathOperator*{\argmax}{argmax} 
\DeclareMathOperator*{\argmin}{argmin}
\newcommand{\vectbig}[1]{\text{vec}\brk1{#1}} 
\newcommand{\vectnoflex}[1]{\text{vec}\brk{#1}} 
\newcommand{\tfmat}[1]{\llbracket#1\rrbracket}
\newcommand{\tfmatflex}[1]{\left\llbracket#1\right\rrbracket}
\newcommand{\mat}[2]{\delim{\llbracket}{\rrbracket}*{#1 ; #2}}
\newcommand{\matbig}[2]{\bigl\llbracket#1 ; #2 \bigr\rrbracket}
\newcommand{\tenp}{\otimes}
\newcommand{\kronp}{\circ}
\newcommand{\hadmp}{\odot}
\newcommand{\sign}{\mathrm{sign}}
\DeclareMathOperator{\Tr}{Tr}
\newcommand{\rank}{\mathrm{rank}}
\newcommand{\erank}{\mathrm{erank}}
\newcommand{\irank}{\mathrm{irank}}
\newcommand{\seprank}[2]{\mathrm{sep}\brk*{#1 ; #2}}
\newcommand{\sepranknoflex}[2]{\mathrm{sep}\brk{#1 ; #2}}
\newcommand{\seprankbig}[2]{\mathrm{sep}\brk1{#1 ; #2}}
\newcommand{\htfseprank}{\mathrm{sep}}
\DeclareFontFamily{U}{mathx}{\hyphenchar\font45}
\DeclareFontShape{U}{mathx}{m}{n}{<-> mathx10}{}
\DeclareSymbolFont{mathx}{U}{mathx}{m}{n}
\DeclareMathAccent{\widebar}{0}{mathx}{"73}
\newcommand{\frodist}{\D_{Fro}}
\newcommand{\matrixend}{\Wbf_{\text{M}}}
\newcommand{\matrixenddot}{\dot{\Wbf}_{\text{M}}}
\newcommand{\matrixendbar}{\widebar{\Wbf}_{\text{M}}}
\newcommand{\tftensorend}{\W_{\text{T}}}
\newcommand{\tftensorendbar}{\widebar{\W}_{\text{T}}}
\newcommand{\tensorend}{\W_{\text{H}}}
\newcommand{\tensorendmap}{\mathcal{H}}
\newcommand{\htmodetree}{\T}
\newcommand{\mfendloss}{\L_{\text{M}}}
\newcommand{\mfobj}{\phi_{\text{M}}}
\newcommand{\tfendloss}{\L_{\text{T}}}
\newcommand{\tfobj}{\phi_{\text{T}}}
\newcommand{\htfendloss}{\L_{\text{H}}}
\newcommand{\htfobj}{\phi_{\text{H}}}
\newcommand{\mfcomp}[1]{\mathcal{C}^{ ( #1 ) }_{\text{M}}}
\newcommand{\tfcomp}[1]{\mathcal{C}^{ ( #1 ) }_{\text{T}}}
\newcommand{\htfcomp}[2]{\mathcal{C}^{ ( #1, #2 ) }_{\text{H}}}
\newcommand{\mfsing}[1]{\sigma^{ ( #1 ) }_{\text{M}}}
\newcommand{\tfcompnorm}[1]{\sigma^{ ( #1 ) }_{\text{T}}}
\newcommand{\htfcompnorm}[2]{\sigma^{ ( #1, #2 ) }_{\text{H}}}
\newcommand{\htftensorpart}[2]{\W^{(#1, #2)}}
\newcommand{\weightmat}[1]{\Wbf^{(#1)}}
\newcommand{\padcol}{\mathrm{PadC}}
\newcommand{\padrow}{\mathrm{PadR}}
\newcommand{\localcomp}{\mathrm{LC}}
\newcommand{\children}{C}
\newcommand{\subtree}{\htmodetree}
\newcommand{\parent}{Pa}
\newcommand{\interior}{\mathrm{int}}
\newcommand{\lsib}{\mathrm{\overleftarrow{S}}}
\newcommand{\rsib}{\mathrm{\overrightarrow{S}}}
\newcommand{\graph}{\mathcal{G}}
\newcommand{\vertices}{\mathcal{V}}
\newcommand{\edges}{\mathcal{E}}
\newcommand{\neigh}{\mathcal{N}}
\newcommand{\neighin}{\mathcal{N}_{in}}
\newcommand{\neighout}{\mathcal{N}_{out}}
\newcommand{\cut}{\mathcal{C}}
\newcommand{\cutset}{\mathcal{S}}
\newcommand{\cutdir}{\mathcal{C}^{\to}}
\newcommand{\cutsetdir}{\mathcal{S}^{\to}}
\newcommand{\agg}{\text{\sc{aggregate}}}
\newcommand{\fmat}{\Xbf}
\newcommand{\fvec}[1]{\xbf^{(#1)}}
\newcommand{\hidvec}[2]{\hbf^{(#1, #2)}}
\newcommand{\mindim}{D}
\newcommand{\indim}{D_{x}}
\newcommand{\hdim}{D_{h}}
\newcommand{\params}{\theta}
\newcommand{\funcvert}[3]{f^{(#1, #2, #3)}}
\newcommand{\funcgraph}[2]{f^{(#1, #2)}}
\newcommand{\nwalk}[3]{\rho_{#1} \brk{#2, #3}}
\newcommand{\deltatensor}[1]{\boldsymbol{\delta}^{(#1)}}
\newcommand{\tensorendgraph}{\Ftensor}
\newcommand{\fvecdup}[2]{\xbf^{(#1, #2)}}
\newcommand{\deltatensordup}[3]{\boldsymbol{\delta}^{(#1, #2, #3)}}
\newcommand{\weightmatdup}[3]{\Wbf^{(#1, #2, #3)}}
\newcommand{\contract}[1]{*_{#1}}
\newcommand{\contractlast}{*}
\newcommand{\modemapgraph}{\mu}
\newcommand{\dupmap}[3]{\phi_{#1, #2, #3}}
\newcommand{\dupmapvertex}[3]{\phi^{(t)}_{#1, #2, #3}}
\newcommand{\multiset}[1]{\brk[c]*{ \!\! \brk[c]*{ #1 } \!\! } }
\newcommand{\multisetbig}[1]{\brk[c]!{ \!\! \brk[c]!{ #1 } \!\! } }
\newcommand{\tngraph}{\TT}
\newcommand{\tnvertices}[1]{\vertices_{#1}}
\newcommand{\tnedges}[1]{\edges_{#1}}
\newcommand{\tnedgeweights}[1]{w_{#1}}
\newcommand{\mulcutweight}[1]{w_{#1}^{\Pi}}
\newcommand{\modcutweight}[1]{\widetilde{w}_{#1}^{\Pi}}
\newcommand{\modcutedges}[1]{ \widetilde{\edges}_{#1} }
\newcommand{\tnverticesleaf}[2]{\vertices_{#1} \brk[s]{ #2 }}
\newcommand{\gridmatnoflex}[1]{\Bbf \brk{#1}}
\newcommand{\gridtensor}[1]{\Btensor \brk*{#1}}
\newcommand{\gridtensorbig}[1]{\Btensor \brk1{#1}}
\newcommand{\gridtensornoflex}[1]{\Btensor \brk{#1}}
\newcommand{\walkin}[2]{ \mathrm{WI}_{#1} \brk{ #2 } }
\newcommand{\walkinvert}[3]{ \mathrm{WI}_{#1, #2} \brk{ #3 } }
\newcommand{\edgetypemap}{\tau}
\def\multisetcoeff#1#2{\ensuremath{\brk*{ \kern-.3em\brk*{\genfrac{}{}{0pt}{}{#1}{#2} }\kern-.3em}}}
\def\multisetcoeffnoflex#1#2{\ensuremath{\brk1{ \kern-.3em\brk1{\genfrac{}{}{0pt}{}{#1}{#2} }\kern-.3em}}}
\newcommand{\Atensor}{\boldsymbol{\mathcal{A}}}
\newcommand{\Btensor}{\boldsymbol{\mathcal{B}}}
\newcommand{\Ftensor}{\boldsymbol{\mathcal{F}}}
\newcommand{\Ttensor}{\boldsymbol{\mathcal{T}}}
\newcommand{\TT}{{\mathcal T}}
\newcommand{\WW}{{\mathcal W}}
\author{Noam Razin} 
\keywords{} 
\begin{document}

\frontmatter 

\pagestyle{plain} 


\begin{titlepage}
\begin{center}

\vspace*{.06\textheight}
{
	\includegraphics[width=0.3\textwidth]{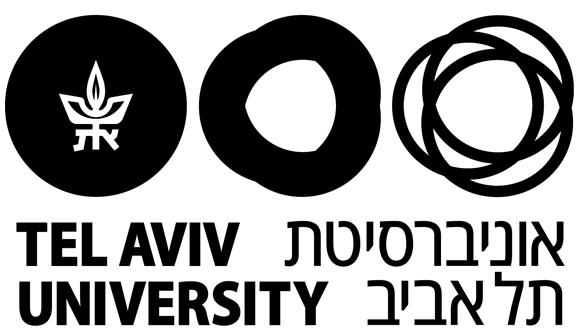} \\[0.5cm]
	\fontsize{12}{12}\selectfont Raymond and Beverly Sackler Faculty of Exact Sciences \\[1mm] Blavatnik School of Computer Science 
}\vspace{1.5cm} 

\HRule \\[0.4cm] 
{\huge \bfseries \ttitle\par}\vspace{0.4cm} 
\HRule \\[1.5cm] 

\begin{center} \large
\fontsize{13}{16}\selectfont 
Author:\\
\textbf{\authorname} 
\end{center}
\vspace{-0.5mm}
\begin{center} \large
\fontsize{13}{16}\selectfont 
Supervisor:\\
\textbf{\supname} 
\vspace{2mm}
\end{center}

\vfill

\fontsize{13}{16}\selectfont  \textit{A dissertation submitted for the degree of \\ \degreename}\\[0cm] 
 
\vfill
\vspace{3mm}
{\large August 2024}\\[3.5cm] 

\vfill
\end{center}
\end{titlepage}

\clearpage


\begin{acknowledgements}
\parindent 0pt
\topsep 4pt plus 1pt minus 2pt
\partopsep 1pt plus 0.5pt minus 0.5pt
\itemsep 2pt plus 1pt minus 0.5pt
\parsep 2pt plus 1pt minus 0.5pt
\parskip .5pc

\addchaptertocentry{\acknowledgementname} 
\vspace{-3mm}
It is hard to imagine how my PhD would have looked like without the support and guidance of the people around me.

I had the privilege of being the first graduate student of my advisor, Nadav Cohen.
This meant a unique opportunity to learn from him the ins and outs of being an academic: from acquiring research taste to technical and presentation skills. 
Though I see myself as somewhat of a perfectionist, when working with Nadav you quickly learn that you can always strive for better.
His guidance has undoubtedly left a strong imprint on me.
Above all, I greatly appreciate Nadav for consistently pushing me forward while having my professional and personal benefits in mind.
I could not have wished for a better advisor.

I want to thank my collaborators, friends, and mentors from Tel Aviv University and Apple, for making this experience such an enjoyable and productive one, including: Asaf Maman, Tom Verbin, Yotam Alexander, Nimrod De La Vega, Yoni Slutzky, Yuval Milo, Raja Giryes, Amir Globerson, Hattie Zhou, Arwen Bradley, Omid Saremi, Vimal Thilak, Preetum Nakkiran, Joshua Susskind, and Etai Littwin.

I am deeply grateful to my family for their unconditional love and everything that they have done to support me.
In particular, my parents have always been models of excellence for me.
While they never pushed me towards any specific direction, I believe that they have successfully instilled in me a drive to fully pursue my interests, to which I owe much of the ability to complete my PhD.
I also want to thank Prof. Rivka Dresner-Pollak for her close care,  for which I will forever be indebted.

Lastly, during my studies I was fortunate to meet the love of my life, Eshbal.
Besides making my life more complete, she is largely responsible for any especially well-designed figure, slide, or poster that I have presented (you might be able to guess when we met by the style of figures in this thesis).
Even if it was just for meeting Eshbal, doing a PhD was well worth it.
\end{acknowledgements}

\clearpage


\begin{abstract}
\parindent 0pt
\topsep 4pt plus 1pt minus 2pt
\partopsep 1pt plus 0.5pt minus 0.5pt
\itemsep 2pt plus 1pt minus 0.5pt
\parsep 2pt plus 1pt minus 0.5pt
\parskip .5pc

	\addchaptertocentry{\abstractname}
	\vspace{-3mm}
	Despite the extreme popularity of deep learning in science and industry, its formal understanding is limited.
Common practices are based primarily on trial-and-error and intuition, often leading to suboptimal results.
Consequently, there is significant interest in developing a formal theory of deep learning, with the hopes that it will shed light on empirical findings, and lead to principled methods for improving the efficiency, reliability, and performance of neural networks.

This thesis puts forth notions of \emph{rank} as key for developing a theory of deep learning.
Specifically, building on a connection between certain neural network architectures and tensor factorizations, we employ notions of rank for studying the fundamental aspects of \emph{generalization} and \emph{expressiveness}.

With regards to generalization, the mysterious ability of neural networks to generalize is widely believed to stem from an \emph{implicit regularization}~---~a tendency of gradient-based training towards predictors of low complexity, for some yet unknown measure of complexity.
Through dynamical analyses, we establish an implicit regularization towards low rank in several types of neural network architectures (for corresponding notions of rank).
Notably, this implicit rank minimization differs from any type of norm minimization, in contrast to prior beliefs.
Implications of this finding for explaining generalization over natural data (\eg, audio, images, and text), as well as practical applications (novel regularization schemes), are presented.

With regards to expressiveness, we theoretically characterize the ability of graph neural networks to model interactions via \emph{separation rank}~---~a measure commonly used for quantifying entanglement in quantum physics.
As a practical application of our theory, we design an edge sparsification algorithm that preserves the ability of graph neural networks to model interactions.
Empirical evaluations demonstrate that it markedly outperform alternative methods.

\end{abstract}


\setcounter{tocdepth}{1}
\tableofcontents 




%
%


%
%
%
%


\mainmatter 


{
\parindent 0pt
\topsep 4pt plus 1pt minus 2pt
\partopsep 1pt plus 0.5pt minus 0.5pt
\itemsep 2pt plus 1pt minus 0.5pt
\parsep 2pt plus 1pt minus 0.5pt
\parskip .5pc

\part{Introduction}
\label{part:intro}

In the past decade, \emph{deep learning} has been experiencing unprecedented success, and is largely responsible for the technological breakthroughs referred to in the public as “artificial intelligence'' (see, \eg,~\cite{krizhevsky2012imagenet,mikolov2013distributed,silver2016mastering,goodfellow2016deep,brown2020language,achiam2023gpt}).
However, despite the extreme popularity of deep learning in science and industry, its formal understanding is limited.
Common practices are based primarily on trial-and-error and intuition, often leading to suboptimal results, as well as compromise in important aspects including safety and robustness~\cite{szegedy2014intriguing,liu2020privacy}.
As a result, there is significant interest in developing a formal theory of deep learning, with the hopes that it will shed light on empirical phenomena, and lead to principled methods for improving the efficiency, reliability, and performance of neural networks.

From the perspective of learning theory, understanding deep learning requires addressing the fundamental questions of \emph{optimization}, \emph{generalization}, and \emph{expressiveness}.
Optimization concerns the effectiveness of gradient-based methods in minimizing neural network training objectives that are non-convex.
Generalization treats the performance of a neural network beyond its training data.
Lastly, expressiveness refers to the ability of practically sized neural networks to represent rich classes of functions.

This thesis focuses on two of the fundamental questions~---~generalization and expressiveness.
It puts forth notions of rank as key for developing a theory of deep learning.
Our approach adopts tools from dynamical systems theory and tensor analysis, building on a recent connection between certain neural network architectures and \emph{tensor factorizations}~\cite{cohen2016expressive,cohen2016convolutional,cohen2017analysis,levine2018benefits,khrulkov2018expressive,khrulkov2019generalized}.\footnote{
	For the sake of this thesis, \emph{tensors} can be thought of as $N$-dimensional arrays, with $N \in \N$ arbitrary.
	For example, matrices correspond to the special case~$N = 2$ and vectors to $N = 1$. 
}
The main theoretical contributions and their practical implications are summarized below.

\subsection*{Generalization via Implicit Rank Minimization (\cref{part:gen})}

One of the central mysteries in deep learning is the ability of neural networks to generalize over natural data (\eg, audio, images, and text) when trained via gradient-based methods, despite having far more learnable parameters than training examples. 
This generalization takes place even in the absence of any explicit regularization~\cite{zhang2021understanding}.
Thus, conventional wisdom is that gradient-based training induces an \emph{implicit regularization}~---~a tendency to fit training examples with predictors of minimal complexity, for some measure of complexity~\cite{neyshabur2015search,neyshabur2017implicit}.
The fact that natural data gives rise to generalization is accordingly understood to result from an agreement between the complexity measure implicitly minimized during training and the complexity of the data.
More specifically, from the amenability of natural data to be fit with predictors of low complexity.

Mathematically formalizing the above intuition is regarded as a major open problem in the theory of deep learning.
A significant challenge towards doing so is that we lack definitions for predictor complexity that are both implicitly minimized during training of neural networks and capture the essence of natural data (in the sense of natural data being fittable with low complexity).
One widespread hope, initially articulated in~\cite{neyshabur2015search}, is that a characterization based on minimization of norms may apply.
Namely, it is known that for linear regression gradient-based methods converge to the solution with minimal Euclidean norm (see, \eg, Section~5 in~\cite{zhang2021understanding}), and the hope is that this result can carry over to neural networks if we allow replacing the Euclidean norm with a different (possibly architecture-dependent) norm~\cite{gunasekar2017implicit,soudry2018implicit,gunasekar2018implicit,gunasekar2018characterizing,li2018algorithmic,ji2019gradient,wu2020implicit,woodworth2020kernel}.

A standard testbed for studying this prospect is \emph{matrix factorization}~---~a model equivalent to \emph{linear neural networks}, \ie~fully-connected neural networks with no non-linearity~\cite{saxe2014exact}.
In \cref{chap:imp_reg_not_norms} (based on~\cite{razin2020implicit}) we prove that, in contrast to prior belief~\cite{gunasekar2017implicit}, the implicit regularization in matrix factorization cannot be captured by norms.
Specifically, we show that there exist settings in which it drives \emph{all} norms towards \emph{infinity} in favor of minimizing \emph{rank}. 
This indicates that, rather than perceiving the implicit regularization via norms, a potentially more useful interpretation is minimization of rank.

Capitalizing on this interpretation, in~\cref{chap:imp_reg_tf,chap:imp_reg_htf} (based on \cite{razin2021implicit} and \cite{razin2022implicit}, respectively), we establish that the tendency towards low rank extends from linear neural networks to more practical \emph{non-linear neural networks} (with polynomial non-linearity), which are equivalent to \emph{tensor factorizations}.
By characterizing the dynamics that gradient-based methods induce on such networks, we show that these result in a bias towards low rank, for architecture-dependent notions of rank defined over tensors.
To the best of my knowledge, our results constituted the first evidence for implicit regularization minimizing a notion of rank in non-linear neural networks.
Subsequent works have demonstrated that an analogous phenomenon occurs in other types of neural networks as well~\cite{jacot2022implicit,wang2023implicit,timor2023implicit,frei2023implicit}.

Motivated by the fact that notions of rank capture implicit regularization in certain non-linear neural networks, we empirically explore their potential as measures of complexity for explaining generalization over natural data.
We find that it is possible to fit standard image recognition datasets with predictors of extremely low rank. 
This leads us to believe that notions of rank may pave way to explaining both implicit regularization in deep learning and properties of natural data translating it to generalization. 

In terms of practical impact, based on our theory we develop an explicit regularization scheme for improving the performance of convolutional neural networks over tasks involving non-local interactions.
Other research groups have also built upon our analyses of implicit rank minimization for designing practical deep learning systems~\cite{jing2020implicit,huh2021low}.

\subsection*{Expressiveness of Graph Neural Networks via Separation Rank (\cref{part:gnns})}

In~\cref{chap:gnn_interactions} (based on~\cite{razin2023ability}), we extend the aforementioned connection between neural networks and tensor factorizations for studying the expressiveness of graph neural networks (GNNs)~\cite{hamilton2020graph}.
GNNs are widely used for modeling complex interactions between entities represented as vertices of a graph~\cite{duvenaud2015convolutional,kipf2017semi,gilmer2017neural,ying2018graph,wu2020comprehensive}.
Yet, a formal characterization of their ability to model interactions is lacking.
We address this gap by formalizing strength of interactions via \emph{separation rank}~\cite{beylkin2009multivariate,cohen2017inductive}~---~a measure widely used for quantifying entanglement in quantum physics.
Through this notion of rank, we characterize the ability of certain GNNs to model interaction between a given subset of vertices and its complement, \ie~between the sides of a given partition of input vertices.

Our analysis reveals that the ability of a GNN to model interaction is primarily determined by the partition's \emph{walk index}~---~a graph-theoretical characteristic defined by the number of walks originating from the boundary of the partition.
This formalizes conventional wisdom by which GNNs can model stronger interaction between regions of the input graph that are more interconnected.
Experiments corroborate the result by demonstrating that GNNs perform better on tasks requiring interactions across partitions with a higher walk index.

As a practical application of our theory, we design an \emph{edge sparsification} algorithm.
Edge sparsification concerns removal of edges from a graph for reducing computational and/or memory costs, while attempting to maintain selected properties of the graph (\cf~\cite{baswana2007simple,spielman2011graph,hamann2016structure,chakeri2016spectral,sadhanala2016graph,voudigari2016rank,li2020sgcn,chen2021unified}).
In the context of GNNs, our interest lies in maintaining the prediction accuracy as the number of edges removed increases.

We propose an algorithm for removing edges, called \emph{Walk Index Sparsification} (\emph{WIS}), which preserves the ability of a GNN to model interactions.
WIS is simple, computationally efficient, and in our experiments has markedly outperformed alternative methods in terms of attainable prediction accuracies across edge sparsity levels.
More broadly, it showcases the potential of improving GNNs by theoretically analyzing the interactions they can model via separation rank.

\subsection*{Included Work}

To recap, this thesis is based on the contents of the following papers.
\begin{enumerate}
	\item \cref{chap:imp_reg_not_norms} is based on~\cite{razin2020implicit} (published at NeurIPS 2020).
	
	\item \cref{chap:imp_reg_tf} is based on~\cite{razin2021implicit} (published at ICML 2021).
	
	\item \cref{chap:imp_reg_htf} is based on~\cite{razin2022implicit} (published at ICML 2022).
	
	\item \cref{chap:gnn_interactions} is based on~\cite{razin2023ability} (published at NeurIPS 2023).
\end{enumerate}

\subsection*{Excluded Work}
\label{chap:intro:excluded}

Aside from the research included in the thesis, during my doctoral studies I led or contributed to several other works, listed below.

\begin{enumerate}
	\item \cite{alexander2023what} (published at NeurIPS 2023) uses tools from quantum physics to characterize which properties of a data distribution make it suitable for locally connected neural networks.
	
	\item \cite{zhou2024what} (published at ICLR 2024) studies the length generalization ability of Transformer neural networks.
	
	\item \cite{razin2024vanishing} (published at ICLR 2024) identifies a vanishing gradients problem that occurs when finetuning language models via reinforcement learning.
	
	\item \cite{razin2024implicit} (published at ICML 2024) characterizes how the implicit regularization of policy gradient affects extrapolation to unseen initial states, focusing on linear quadratic control.
\end{enumerate}

\part{Generalization via Implicit Rank Minimization}
\label{part:gen}

\chapter{Implicit Regularization in Deep Learning May Not Be Explainable \\ by Norms} 
\label{chap:imp_reg_not_norms}

This chapter covers the results of~\cite{razin2020implicit}.

\section{Background and Overview}
\label{mf:sec:overview}

As discussed in~\cref{part:intro}, the ability of neural networks to generalize is widely believed to stem from an implicit regularization of gradient-based training towards predictors of low complexity.
A prominent test-bed for studying implicit regularization in deep learning is \emph{matrix completion} (\cf~\cite{gunasekar2017implicit,arora2019implicit}): given a randomly chosen subset of entries from an unknown matrix~$\Wbf^*$, the task is to recover the unseen entries. 
This may be viewed as a prediction problem, where each entry in~$\Wbf^*$ stands for a data point: observed entries constitute the training set, and the average reconstruction error over the unobserved entries is the test error, quantifying generalization.

Fitting the observed entries in matrix completion is obviously an underdetermined problem with multiple solutions.
However, an extensive body of work (see~\cite{davenport2016overview} for a survey) has shown that if~$\Wbf^*$ is low-rank, certain technical assumptions (\eg~``incoherence'') are satisfied and sufficiently many entries are observed, then various algorithms can achieve approximate or even exact recovery.
Of these, a well-known method based upon convex optimization finds the minimal nuclear norm\footnote{
	The \emph{nuclear norm} of a matrix is the sum of its singular values.
}
matrix among those fitting observations (see~\cite{candes2009exact}).

One may try to solve matrix completion using shallow neural networks. A natural approach, \emph{matrix factorization}, boils down to parameterizing the solution as a product of two matrices~---~$\Wbf = \Wbf_2 \Wbf_1$~---~and optimizing the resulting (non-convex) objective for fitting observations.
Formally, this can be viewed as training a depth two linear neural network. 
It is possible to explicitly constrain the rank of the produced solution by limiting the shared dimension of~$\Wbf_1$ and~$\Wbf_2$.
However,~\cite{gunasekar2017implicit} showed that in practice, even when the rank is unconstrained, running gradient descent with small learning rate (step size) and initialization close to the origin (zero) tends to produce low-rank solutions, and thus allows accurate recovery if~$\Wbf^*$ is low-rank. 
Accordingly, they conjectured that the implicit regularization in matrix factorization boils down to minimization of nuclear norm:
\begin{conjecture}[from~\cite{gunasekar2017implicit}, informally stated] \label{conj:nuclear_norm}
	With small enough learning rate and initialization close enough to the origin, gradient descent on a full-dimensional matrix factorization converges to a minimal nuclear norm solution.
\end{conjecture}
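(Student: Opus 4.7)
The plan is to refute \cref{conj:nuclear_norm} by exhibiting an explicit matrix completion instance on which gradient flow over a matrix factorization, initialized arbitrarily close to the origin, converges to an end-to-end product $\Wbf = \Wbf_2 \Wbf_1$ whose nuclear norm (and ideally every norm) is strictly larger than the minimum-nuclear-norm fitting matrix~---~indeed, diverges to infinity. Since the conjecture concerns the double limit of vanishing learning rate and vanishing initialization, the cleanest route is to pass to the gradient flow limit and analyze the resulting ODE system on $(\Wbf_1, \Wbf_2)$, then translate the counterexample back to gradient descent by continuity.

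First I would search for a small observation pattern (say on a $3 \times 3$ or $4 \times 4$ matrix) whose affine fitting set is a one-parameter family $\Wbf(t) = \Wbf_0 + t \cdot \Delta$ designed so that (a) the nuclear norm, together with every Schatten and entrywise norm, of $\Wbf(t)$ grows unboundedly as $\lvert t \rvert \to \infty$, but (b) the rank of $\Wbf(t)$ strictly drops at $t = \pm \infty$, \ie~the leading singular direction swamps the rest. A natural template is an observation set that forces a ``seesaw'' between two singular modes, so that zero training loss can be approached along the fitting set only by sending one singular value to infinity and collapsing the others.

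Second, I would exploit the dynamical characterization of deep matrix factorization~---~in particular the fact that small-initialization gradient flow sharply separates the singular values of the product matrix, with the top mode escaping first and subsequent modes remaining near zero for a long transient. Identifying the dominant singular direction of $\Wbf_2 \Wbf_1$ at initialization (driven by the observed entries) and tracking its evolution along the flow, one expects the trajectory to track a rank-$1$ manifold for most of the dynamics. Reducing to this manifold, the loss becomes a one-dimensional function of the free parameter $t$ along which it can be driven to zero only as $\lvert t \rvert \to \infty$, forcing $\lVert \Wbf \rVert \to \infty$ in every norm by equivalence on rank-$1$ matrices.

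The principal obstacle is making this heuristic rigorous uniformly in the initialization scale~$\alpha \to 0$. Concretely, one must show: (i) an invariance/attractivity statement, that after an initial transient the product stays within an $o_\alpha(1)$ neighborhood of the rank-$1$ manifold defined by the leading initial singular direction; (ii) that on this effective manifold the loss is monotonically decreased only by driving $t$ to $\pm \infty$, so no finite-norm limit point can be a zero-loss solution; and (iii) enough compactness/continuity to pass from gradient flow back to gradient descent with sufficiently small step size. Steps (i) and (ii) are the crux and will likely require a Lyapunov analysis of the singular value ODEs, perhaps reducing to an explicit solution in a symmetric or diagonal subcase. Once a single instance is shown to satisfy (i)--(iii), the conclusion contradicts \cref{conj:nuclear_norm}, and the stronger claim~---~that \emph{all} norms diverge, not only the nuclear norm~---~follows from norm equivalence on the rank-$1$ limit manifold together with divergence of the top singular value.
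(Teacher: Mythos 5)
Your high-level construction criterion~---~a one-parameter affine fitting family along which every norm diverges while the rank essentially drops~---~is exactly the one the paper uses: the observations in Equation~\eqref{mf:eq:obs} leave $(\Wbf)_{1,1}$ free, and Propositions~\ref{mf:prop:sol_set_norms} and~\ref{mf:prop:sol_set_rank} establish precisely the tension you describe (norm minimization confines $(\Wbf)_{1,1}$ to a bounded interval, rank minimization demands $\abs{(\Wbf)_{1,1}} \to \infty$). Where you diverge from the paper is in the dynamical mechanism. You propose to show that small-initialization gradient flow tracks a rank-one manifold and that on this manifold zero loss forces the free parameter to infinity. The paper instead uses a far more elementary invariant: by the singular-value dynamics of \cite{arora2019implicit} (Lemma~\ref{mf:lem:prod_mat_sing_dyn}), the sign of $\det(\matrixend(t))$ is conserved under balanced gradient flow (Lemma~\ref{mf:lem:det_does_not_change_sign}). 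With $\det(\matrixend(0))>0$ and the $2\times 2$ construction, positivity of $w_{1,1}w_{2,2} - w_{1,2}w_{2,1}$ combined with $w_{1,2},w_{2,1}\to 1$ and $w_{2,2}\to 0$ immediately forces $\abs{w_{1,1}} \gtrsim 1/\sqrt{\mfendloss(t)}$ (Theorem~\ref{mf:thm:norms_up_finite}). This yields a quantitative lower bound on every (quasi-)norm at \emph{every} time $t$, holds with probability $0.5$ over initialization, and requires no attractivity or uniformity-in-$\alpha$ analysis whatsoever.

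The concrete gap in your route is step (i): an $o_\alpha(1)$-attractivity statement for the rank-one manifold \emph{valid for all time} is not available and is in tension with what you need to conclude. The known incremental-learning results for matrix (and tensor) factorization guarantee proximity to a low-rank trajectory only up to a prescribed time horizon or distance from the origin, with the admissible initialization scale shrinking as that horizon grows; but your conclusion requires the iterate's norm to diverge, i.e.\ to leave every bounded region, so for any fixed $\alpha>0$ the tracking guarantee expires before the divergence you want to certify. You would then still need a separate argument that the loss actually tends to zero along the true trajectory and that no finite-norm limit point exists~---~which is essentially the whole difficulty restated. (Note the paper itself only proves global loss convergence in the special case of depth two with scaled identity initialization, Proposition~\ref{mf:prop:loss_down}, treating the general case empirically; its norm lower bound is deliberately conditional on the loss value so as to decouple the two questions.) If you want to salvage your approach, the determinant-sign conservation is the missing ingredient that replaces manifold tracking; note also that it exploits squareness of the $2\times 2$ end matrix, so your proposed $3\times 3$ or $4\times 4$ seesaw would require identifying an analogous conserved quantity (the paper's Appendix~\ref{mf:app:dimensions} handles larger dimensions by embedding the $2\times 2$ block and pinning the remaining singular values at one).
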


In a subsequent work,~\cite{arora2019implicit} considered \emph{deep matrix factorization}, obtained by adding depth to the setting studied in~\cite{gunasekar2017implicit}.
Namely, they considered solving matrix completion by training a depth~$L$ linear neural network, \ie~by running gradient descent on the parameterization $\Wbf = \Wbf_L \cdots \Wbf_1$, with $L \in \N$ arbitrary (and the dimensions of $\{ \Wbf_l \}_{l = 1}^L$ set such that rank is unconstrained).
It was empirically shown that deeper matrix factorizations (larger~$L$) yield more accurate recovery when~$\Wbf^*$ is low-rank.
Moreover, it was conjectured that the implicit regularization, for any depth $L \geq 2$, can \emph{not} be described as minimization of a mathematical norm (or quasi-norm\footnote{
A \emph{quasi-norm}~$\norm{\cdot}$ on a vector space~$\V$ is a function from~$\V$ to~$\R_{\geq 0}$ that satisfies the same axioms as a norm, except for the triangle inequality, which is replaced by the weaker requirement: there exists $c \geq 1$ such that for all $\vbf_1, \vbf_2 \in \V$ it holds that $\norm{ \vbf_1 + \vbf_2} \leq c \cdot ( \norm{\vbf_1} + \norm{\vbf_2} )$.
\label{note:qnorm}
}).
\begin{conjecture}[based on~\cite{arora2019implicit}, informally stated] \label{conj:no_norm}
	Given a (shallow or deep) matrix factorization, for any norm (or quasi-norm)~$\norm{\cdot}$, there exists a set of observed entries with which small learning rate and initialization close to the origin can \emph{not} ensure convergence of gradient descent to a minimal (in terms of~$\norm{\cdot}$) solution.
\end{conjecture}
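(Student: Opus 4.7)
The plan is to \emph{prove} the conjecture by establishing a strictly stronger statement: there exists a single matrix completion instance (a single choice of observed entries, values, and matrix dimensions) on which, for every depth $L \ge 2$, gradient descent with sufficiently small learning rate and initialization sufficiently close to the origin yields a trajectory $\Wbf_t = \Wbf_L(t)\cdots\Wbf_1(t)$ whose end-to-end product $\Wbf(t)$ satisfies $\|\Wbf(t)\|\to\infty$ for every norm (or quasi-norm) on the ambient matrix space. Because any norm-minimizing solution must have finite value in that norm, such a divergent trajectory cannot converge to a minimizer of any norm, and the conjecture follows by specializing to an arbitrary fixed~$\|\cdot\|$.

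First, I would design the observations. On a small fixed grid (a $3\times 3$ example is already enough), I would pick an entry pattern whose completion set, restricted to matrices of the minimal rank achievable, is \emph{unbounded}: every rank-minimizing completion has at least one coordinate diverging along a one-parameter family, so the rank-minimizing limit lives only on the boundary at infinity. The clean target is a configuration in which all loss-zero matrices of minimal rank form a curve $\{\Wbf(s)\}_{s\in\R}$ with $\|\Wbf(s)\|\to\infty$ as $|s|\to\infty$, while higher-rank loss-zero completions include a bounded region. This separates the two candidate behaviors for implicit bias: finite-norm (higher-rank) completion versus unbounded (low-rank) completion.

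Next, I would carry out a dynamical analysis of gradient flow (the small-step, continuous-time limit of gradient descent). Leveraging the singular value dynamics for deep matrix factorization — where each singular value $\sigma_r$ of $\Wbf$ evolves at a rate proportional to $\sigma_r^{\,2-2/L}$ times the projection of the residual onto its singular direction — balanced near-zero initialization preserves an effective low-rank structure along the trajectory. Combined with the geometry of the observation set built above, this will force the trajectory to stay near the curve of minimal-rank completions; since that curve is unbounded, fitting the observations arbitrarily well along it requires a singular value (and hence an entry of~$\Wbf(t)$) to diverge. Equivalence of norms in finite dimensions then upgrades divergence of a single entry to divergence in every norm (and every quasi-norm, which is likewise bounded below by a constant times the max-entry norm on a finite-dimensional space).

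The main obstacle I expect is the dynamical step: concretely showing that the trajectory is \emph{trapped} near the unbounded curve rather than escaping to a bounded higher-rank completion. I would handle this by (i) choosing the observations so that the tangent directions to any bounded higher-rank completion are orthogonal to the leading unbalanced singular direction at small initialization, so the small-initialization gradient flow commits to the rank-reducing direction before excitation of secondary singular values occurs, and (ii) invoking continuity to transfer the gradient-flow conclusion to discrete gradient descent with sufficiently small step size over any bounded time horizon, then taking the horizon to infinity while tracking the monotone growth of the relevant singular value. Once the divergence is in hand, the passage from the stronger single-instance statement to \cref{conj:no_norm} as stated is immediate, since for a given norm $\|\cdot\|$ this very instance witnesses the failure of convergence to a $\|\cdot\|$-minimizer.
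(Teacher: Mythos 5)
Your high-level plan~---~exhibit a single completion instance on which fitting the observations drives \emph{all} norms and quasi-norms to infinity, then conclude that no norm-minimizer can be the limit~---~is exactly the route the paper takes (\cref{mf:sec:analysis}). The gap is in the step you yourself flag as the main obstacle: the claim that the trajectory is ``trapped'' near the unbounded curve of minimal-rank completions before it can reach a bounded higher-rank zero-loss completion. Nothing in the available singular-value dynamics (rate $\propto \sigma_r^{2-2/L}$ times a residual projection) rules out convergence to a bounded global minimizer; the implicit low-rank bias of deep matrix factorization is precisely the phenomenon under investigation, so invoking it as a lemma is circular, and your step (i) about tangent directions being ``orthogonal to the leading unbalanced singular direction'' is not an argument that can be executed as stated. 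Moreover, your strengthened claim~---~divergence for \emph{every} sufficiently small initialization~---~is both unnecessary and very likely false for the kind of instance you describe: in the paper's construction the divergence provably fails to occur for initializations whose end matrix has the ``wrong'' determinant sign, and the conjecture only requires that small learning rate and near-zero initialization \emph{cannot ensure} norm minimization, so a positive-probability event suffices.

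The paper's mechanism avoids the trapping argument entirely. It works with a $2$-by-$2$ problem (\cref{mf:eq:obs}: $y_{1,2}=y_{2,1}=1$, $y_{2,2}=0$, entry $(1,1)$ unobserved), where every zero-loss solution has full rank but the infimal rank is $1$, attained only as $\abs{(\Wbf)_{1,1}}\to\infty$ (\cref{mf:prop:sol_set_norms,mf:prop:sol_set_rank}). Instead of tracking a low-rank manifold, it uses a conservation law: under gradient flow with balanced initialization the sign of $\det(\matrixend(t))$ never changes (via the singular-value dynamics of~\cite{arora2019implicit}). Conditioned on $\det(\matrixend(0))>0$~---~an event of probability $0.5$ under standard symmetric initializations (\cref{mf:prop:det_pos})~---~the inequality $w_{1,1}w_{2,2} > w_{1,2}w_{2,1}$ together with $\abs{w_{2,2}}\le\sqrt{2\mfendloss}$, $w_{1,2},w_{2,1}\approx 1$ immediately forces $\abs{w_{1,1}}\gtrsim 1/\sqrt{\mfendloss(t)}$, hence all norms and quasi-norms diverge whenever the loss is driven to zero (\cref{mf:thm:norms_up_finite,mf:cor:norms_up_asymp}); convergence of the loss itself is handled separately (proved for depth $2$ with scaled identity initialization, \cref{mf:prop:loss_down}, and shown empirically otherwise). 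If you want a completable proof, replace your dynamical trapping step with an invariant of this kind, and weaken your target from ``all small initializations'' to ``a positive-probability set of initializations,'' which is all the conjecture needs.
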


Conjectures~\ref{conj:nuclear_norm} and~\ref{conj:no_norm} contrast each other, and more broadly, represent opposing perspectives on the question of whether norms may be able to explain implicit regularization in deep learning.
In this chapter, we resolve the tension between the two conjectures by affirming the latter.
In particular, we prove that there exist natural matrix completion problems where fitting observations via gradient descent on a depth~$L \geq 2$ matrix factorization leads~---~with probability~$0.5$ or more over (arbitrarily small) random initialization~---~\emph{all} norms (and quasi-norms) to \emph{grow towards infinity}, while the rank essentially decreases towards its minimum.
This result is in fact stronger than the one suggested by Conjecture~\ref{conj:no_norm}, in the sense that:
\emph{(i)}~not only is each norm (or quasi-norm) disqualified by some setting, but there are actually settings that jointly disqualify all norms (and quasi-norms);
and
\emph{(ii)}~not only are norms (and quasi-norms) not necessarily minimized, but they can grow towards infinity.
We corroborate the analysis with empirical demonstrations.

Our findings imply that, rather than viewing implicit regularization in (shallow or deep) matrix factorization as minimizing a norm, a potentially more useful interpretation is \emph{minimization of rank}.
As a step towards assessing the generality of this interpretation, we empirically explore an extension of matrix factorization to \emph{tensor factorization}.
Our experiments show that in analogy with matrix factorization, gradient descent on a tensor factorization tends to produce solutions with low rank, where rank is defined in the context of tensors (we will theoretically establish this tendency towards low rank in \cref{chap:imp_reg_tf}).

Similarly to how matrix factorization corresponds to a linear neural network whose input-output mapping is represented by a matrix, it is known (see~\cite{cohen2016expressive}) that tensor factorization corresponds to a certain \emph{non-linear convolutional} neural network whose input-output mapping is represented by a tensor.
We thus obtain a second exemplar of a neural network architecture whose implicit regularization strives to lower a notion of rank for its input-output mapping.
This indicates that the phenomenon may be general, and formalizing notions of rank for input-output mappings of contemporary models may be key to explaining generalization in deep learning.
In \cref{chap:imp_reg_tf,chap:imp_reg_htf} we theoretically support this hypothesis by employing the equivalence between tensor factorizations and certain neural networks.

\medskip

The remainder of the chapter is organized as follows.
\cref{mf:sec:model} presents the deep matrix factorization model.
\cref{mf:sec:analysis} delivers our analysis, showing that its implicit regularization can drive all norms to infinity.
Lastly, experiments with both the analyzed setting and tensor factorization are given in \cref{mf:sec:experiments}.

\section{Deep Matrix Factorization} 
\label{mf:sec:model}

Suppose we would like to complete a $D$-by-$D'$ matrix based on a set of observations $\{ y_{i , j} \in \R \}_{( i , j ) \in \Omega}$, where $\Omega \subset \{ 1 , 2 , \ldots , D \} \times \{ 1 , 2 , \ldots , D' \}$.
A standard (underdetermined) loss function for the task is:
\be
\mfendloss : \R^{D \times D'} \to \R_{\geq 0}
\quad , \quad
\mfendloss ( \Wbf ) = \frac{1}{2} \sum\nolimits_{( i , j ) \in \Omega} \big( ( \Wbf )_{i , j} - y_{i , j} \big)^2
\label{mf:eq:loss}
\text{\,.}
\ee
Employing a depth~$L$ matrix factorization, with hidden dimensions $D_1 , \ldots, D_{L - 1} \in \N$, amounts to optimizing the \emph{overparameterized objective}:
\be
\mfobj ( \Wbf_1, \ldots , \Wbf_L ) := \mfendloss ( \matrixend ) = \frac{1}{2} \sum\nolimits_{( i , j ) \in \Omega} \big( ( \matrixend )_{i , j} - y_{i , j} \big)^2
\label{mf:eq:oprm_obj}
\text{\,,}
\ee
where $\Wbf_l \in \R^{D_l \times D_{l - 1}}$, $l = 1 , \ldots , L$, with $D_L := D , D_0 := D'$, and:
\be
\matrixend := \Wbf_L \cdots \Wbf_1
\label{mf:eq:prod_mat}
\text{\,,}
\ee
referred to as the \emph{end matrix} of the factorization.
Our interest lies on the implicit regularization of gradient descent, \ie~on the type of end matrices (Equation~\eqref{mf:eq:prod_mat}) it will find when applied to the overparameterized objective (Equation~\eqref{mf:eq:oprm_obj}).
Accordingly, and in line with prior work (\cf~\cite{gunasekar2017implicit,arora2019implicit}), we focus on the case in which the search space is unconstrained, meaning $\min \{ D_l \}_{l=0}^L = \min \{ D_0 , D_L \}$ (rank is not limited by the parameterization).

As a theoretical surrogate for gradient descent with small learning rate and near-zero initialization, similarly to~\cite{gunasekar2017implicit} and~\cite{arora2019implicit} (as well as other works analyzing linear neural networks, \eg~\cite{saxe2014exact,arora2018optimization,lampinen2019analytic,arora2019convergence}), we study \emph{gradient flow} (gradient descent with infinitesimally small learning rate):\footnote{
	A technical subtlety of optimization in continuous time is that in principle, it is possible to asymptote (diverge to infinity) after finite time.
	In such a case, the asymptote is regarded as the end of optimization, and time tending to infinity ($t \to \infty$) is to be interpreted as tending towards that point.
}
\be
\dot{\Wbf}_l ( t ) := \tfrac{d}{dt} \Wbf_l ( t ) = - \tfrac{\partial}{\partial \Wbf_l} \phi ( \Wbf_1 ( t ) , \ldots , \Wbf_L ( t ) )
\quad , ~ t \geq 0 ~ , ~ l = 1, \ldots , L 
\label{mf:eq:gf}
\text{\,,}
\ee
and assume \emph{balancedness} at initialization,~\ie:
\be
\Wbf_{l + 1} ( 0 )^\top \Wbf_{l + 1} ( 0 ) = \Wbf_l ( 0 ) \Wbf_l ( 0 )^\top
\quad , ~ l = 1, \ldots , L - 1
\label{mf:eq:balance}
\text{\,.}
\ee
In particular, when considering random initialization, we assume that $\{ \Wbf_l ( 0 ) \}_{l = 1}^L$ are drawn from a joint probability distribution by which Equation~\eqref{mf:eq:balance} holds almost surely.
This is an idealization of standard random near-zero initializations, \eg~Xavier~\cite{glorot2010understanding} and He~\cite{he2015delving}, by which Equation~\eqref{mf:eq:balance} holds approximately with high probability.
The condition of balanced initialization (Equation~\eqref{mf:eq:balance}) played an important role in the analysis of~\cite{arora2018optimization}, facilitating derivation of a differential equation governing the end matrix of a linear neural network (see Lemma~\ref{mf:lem:prod_mat_dyn} in Appendix~\ref{mf:app:lemma:dmf}).
It was shown in~\cite{arora2018optimization} empirically (and will be demonstrated again in Section~\ref{mf:sec:experiments}) that there is an excellent match between the theoretical predictions of gradient flow with balanced initialization, and its practical realization via gradient descent with small learning rate and near-zero initialization.
Other works (\eg,~\cite{arora2019convergence,ji2019gradient}) have supported this match theoretically, and we provided additional support in Appendix~A of \cite{razin2020implicit} by extending our theory to the case of unbalanced initialization (Equation~\eqref{mf:eq:balance}~holding~approximately).

Formally stated, Conjecture~\ref{conj:nuclear_norm} from~\cite{gunasekar2017implicit} treats the case $L = 2$, where the end matrix~$\matrixend$ (Equation~\eqref{mf:eq:prod_mat}) holds~$\alpha \cdot \Wbf_{init}$ at initialization, $\Wbf_{init}$ being a fixed arbitrary full-rank matrix, and $\alpha$ a varying positive scalar.\footnote{
	The formal statement in~\cite{gunasekar2017implicit} applies to symmetric matrix factorization and positive definite~$\Wbf_{init}$, but it is claimed thereafter that affirming the conjecture would imply the same for the asymmetric setting considered here.
	We also note that the conjecture is stated in the context of matrix sensing, thus in particular applies to matrix completion (a special case).
}
Taking time to infinity ($t \to \infty$) and then initialization size to zero ($\alpha \to 0^+$), the conjecture postulates that if the limit end matrix $\matrixendbar := \lim_{\alpha \to 0^+} \lim_{t \to \infty} \matrixend$ exists and is a global optimum for the loss~$\mfendloss ( \cdot )$ (Equation~\eqref{mf:eq:loss}), \ie~$\mfendloss (\matrixendbar ) = 0$, then it will be a global optimum with minimal nuclear norm, meaning $\matrixendbar \in \argmin_{\Wbf : \mfendloss ( \Wbf ) = 0} \norm{\Wbf}_{nuclear}$.

In contrast to Conjecture~\ref{conj:nuclear_norm}, Conjecture~\ref{conj:no_norm} from~\cite{arora2019implicit} can be interpreted as saying that for any depth $L \geq 2$ and any norm or quasi-norm~$\norm{\cdot}$, there exist observations $\{ y_{i , j} \}_{( i , j ) \in \Omega}$ for which global optimization of loss ($\lim_{\alpha \to 0^+} \lim_{t \to \infty} \mfendloss ( \matrixend ) = 0$) does not imply minimization of~$\norm{\cdot}$ (\ie~we may have $\lim_{\alpha \to 0^+} \lim_{t \to \infty} \norm{ \matrixend } \neq \min_{\Wbf : \mfendloss ( \Wbf ) = 0} \norm{\Wbf}$).
Due to technical subtleties (for example the requirement of Conjecture~\ref{conj:nuclear_norm} that a double limit of the end matrix with respect to time and initialization size exists), Conjectures~\ref{conj:nuclear_norm} and~\ref{conj:no_norm} are not necessarily contradictory.
However, they are in direct opposition in terms of the stances they represent~---~one supports the prospect of norms being able to explain implicit regularization in matrix factorization, and the other does not.
The current chapter seeks a resolution.

\section{Implicit Regularization Can Drive All Norms to Infinity}
\label{mf:sec:analysis}

In this section we prove that for matrix factorization of depth~$L \geq 2$, there exist observations $\{ y_{i , j} \}_{( i , j ) \in \Omega}$ with which optimizing the overparameterized objective (Equation~\eqref{mf:eq:oprm_obj}) via gradient flow (Equations \eqref{mf:eq:gf} and~\eqref{mf:eq:balance}) leads~---~with probability~$0.5$ or more over random (``symmetric'') initialization~---~\emph{all} norms and quasi-norms of the end matrix (Equation~\eqref{mf:eq:prod_mat}) to \emph{grow towards infinity}, while its rank essentially decreases towards minimum.
By this we not only affirm Conjecture~\ref{conj:no_norm}, but in fact go beyond it in the following sense:
\emph{(i)}~the conjecture allows chosen observations to depend on the norm or quasi-norm under consideration, while we show that the same set of observations can apply jointly to all norms and quasi-norms;
and
\emph{(ii)}~the conjecture requires norms and quasi-norms to be larger than minimal, while we establish growth towards infinity.

For simplicity of presentation, the current section delivers our construction and analysis in the setting $D = D' = 2$ (\ie~$2$-by-$2$ matrix completion)~---~extension to different dimensions is straightforward (see Appendix~\ref{mf:app:dimensions}).
We begin (\cref{mf:sec:analysis:setting}) by introducing our chosen observations $\{ y_{i , j} \}_{( i , j ) \in \Omega}$ and discussing their properties.
Subsequently (\cref{mf:sec:analysis:norms_up}), we show that with these observations, decreasing loss often increases all norms and quasi-norms while lowering rank.
Minimization of loss is treated thereafter (\cref{mf:sec:analysis:loss_down}).
Finally (\cref{mf:sec:analysis:robust}), robustness of our construction to perturbations is established.

\subsection{A Simple Matrix Completion Problem} \label{mf:sec:analysis:setting}

Consider the problem of completing a $2$-by-$2$ matrix based on the following observations:
\be
\Omega = \{ ( 1 , 2 ) , ( 2 , 1 ) , ( 2 , 2 ) \}
\quad , \quad
y_{1 , 2} = 1
~ , ~
y_{2 , 1} = 1
~ , ~
y_{2 , 2} = 0
\label{mf:eq:obs}
\text{\,.}
\ee
The solution set for this problem (\ie~the set of matrices obtaining zero loss) is:
\be
\S = \left\{ \Wbf \in \R^{2 \times 2} : ( \Wbf )_{1 , 2} = 1 ,  ( \Wbf )_{2 , 1} = 1 , ( \Wbf )_{2 , 2} = 0 \right\}
\label{mf:eq:sol_set}
\text{\,.}
\ee
Proposition~\ref{mf:prop:sol_set_norms} below states that minimizing a norm or quasi-norm along $\Wbf \in \S$ requires confining $( \Wbf )_{1 , 1}$ to a bounded interval, which for Schatten-$p$ (quasi-)norms (in particular for nuclear, Frobenius, and spectral norms)\footnote{
	For $p \,{\in}\, ( 0 , \infty ]$, the \emph{Schatten-$p$ (quasi-)norm} of a matrix $\Wbf \in \R^{D \times D'}$ with singular values $\{ \sigma_r ( \Wbf ) \}_{r = 1}^{\min \{ D , D' \}}$ is defined as $\big( \sum_{r = 1}^{\min \{ D , D' \}} \sigma_r^p ( \Wbf ) \big)^{1 / p}$ if $p < \infty$ and as $\max \{ \sigma ( \Wbf ) \}_{r = 1}^{\min \{ D , D' \}}$ if $p = \infty$. 
	It is a norm if $p \geq 1$ and a quasi-norm if $p < 1$.
	Notable special cases are nuclear (trace), Frobenius and spectral norms, corresponding to $p = 1$, $2$, and $\infty$, respectively.
}
is simply the singleton~$\{ 0 \}$.
\begin{proposition} \label{mf:prop:sol_set_norms}
	For any norm or quasi-norm over matrices~$\norm{\cdot}$ and any~$\epsilon > 0$, there exists a bounded interval $I_{\norm{\cdot} , \epsilon} \subset \R$ such that if $\Wbf \in \S$ is an $\epsilon$-minimizer of~$\norm{\cdot}$ (\ie~$\norm{\Wbf} \leq \inf_{\Wbf' \in \S} \norm{\Wbf'} + \epsilon$) then necessarily~$( \Wbf )_{1 , 1} \in I_{\norm{\cdot} , \epsilon}$.
	If~$\norm{\cdot}$ is a Schatten-$p$ (quasi-)norm, then in addition $\Wbf \in \S$ minimizes~$\norm{\cdot}$ (\ie~$\norm{\Wbf} = \inf_{\Wbf' \in \S} \norm{\Wbf'}$) if and only if~$( \Wbf )_{1 , 1} = 0$.
\end{proposition}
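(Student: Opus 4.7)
The plan is to parameterize $\S$ by its only free entry $w := (\Wbf)_{1,1} \in \R$, writing
\[
	\Wbf(w) \;=\; \begin{pmatrix} w & 1 \\ 1 & 0 \end{pmatrix} \;=\; \Wbf(0) + w \cdot E_{1,1},
\]
where $E_{1,1}$ denotes the matrix with a $1$ in its top-left entry and zeros elsewhere. Under this parameterization, the first claim reduces to showing that the scalar map $w \mapsto \norm{\Wbf(w)}$ is coercive, i.e.\ tends to infinity as $|w| \to \infty$, for every norm or quasi-norm $\norm{\cdot}$ on $\R^{2 \times 2}$. Once coercivity is established, every sublevel set in $w$ is bounded, and in particular the $\epsilon$-sublevel set around the (finite, since attained at, say, $w=0$) infimum yields the desired bounded interval $I_{\norm{\cdot}, \epsilon}$.

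To establish coercivity, I would combine absolute homogeneity of $\norm{\cdot}$ (which holds for quasi-norms as well, see Footnote~\ref{note:qnorm}) with the relaxed triangle inequality. Namely, letting $c \geq 1$ be the quasi-norm constant,
\[
	|w| \cdot \norm{E_{1,1}} \;=\; \norm{w \cdot E_{1,1}} \;=\; \norm{\Wbf(w) - \Wbf(0)} \;\leq\; c \bigl( \norm{\Wbf(w)} + \norm{\Wbf(0)} \bigr),
\]
which rearranges to $\norm{\Wbf(w)} \geq \tfrac{|w| \, \norm{E_{1,1}}}{c} - \norm{\Wbf(0)}$. Since $E_{1,1} \neq 0$ implies $\norm{E_{1,1}} > 0$, the right-hand side diverges as $|w| \to \infty$, proving the first assertion of the proposition.

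For the Schatten-$p$ refinement, I would exploit two simple invariants of $\Wbf(w)$. First, $\det \Wbf(w) = -1$ is constant along $\S$, so the singular values satisfy $\sigma_1(w)\sigma_2(w) = |\det \Wbf(w)| = 1$ for every $w$. Second, $\sigma_1(w)^2 + \sigma_2(w)^2 = \norm{\Wbf(w)}_{Fro}^2 = w^2 + 2$, so the two singular values coincide if and only if $w = 0$. Minimizing $\sigma_1^p + \sigma_2^p$ (for $p \in (0,\infty)$) or $\max(\sigma_1, \sigma_2)$ (for $p = \infty$) over nonnegative reals subject to $\sigma_1 \sigma_2 = 1$ is then a one-line AM-GM argument, and in every case equality is attained exactly when $\sigma_1 = \sigma_2 = 1$. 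Composing with the increasing map $x \mapsto x^{1/p}$ for finite $p$, this translates directly to the Schatten-$p$ (quasi-)norm being minimized on $\S$ if and only if $w = 0$, which is the second assertion.

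The main obstacle is essentially conceptual rather than technical: making sure the quasi-norm hypothesis is used cleanly. Specifically, although the triangle inequality fails, absolute homogeneity and positive definiteness still hold, and the relaxed triangle inequality with constant $c$ is exactly what licenses the coercivity estimate above. Once that structural point is accepted, everything else is bookkeeping with the singular values of a one-parameter family of matrices whose determinant is pinned by the observations~\eqref{mf:eq:obs}.
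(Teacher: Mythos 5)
Your proposal is correct. The first assertion is handled exactly as in the paper: both you and the paper isolate the free entry $w = (\Wbf)_{1,1}$, apply the weakened triangle inequality to the decomposition $\Wbf(w) = w\,\ebf_1\ebf_1^\top + \Wbf(0)$, and obtain the coercive lower bound $\norm{\Wbf(w)} \geq \tfrac{|w|\,\norm{\ebf_1\ebf_1^\top}}{c} - \norm{\Wbf(0)}$, from which boundedness of the $\epsilon$-sublevel set is immediate. (One cosmetic slip: the infimum over $\S$ need not be \emph{attained} at $w=0$ for a general quasi-norm; all you need is that it is finite, which follows from $\inf_{\Wbf' \in \S}\norm{\Wbf'} \leq \norm{\Wbf(0)} < \infty$.) For the Schatten-$p$ refinement your route genuinely differs from the paper's, and is cleaner. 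The paper computes the singular values explicitly as $\sigma_{1,2}(w) = \bigl|\bigl(w \pm \sqrt{w^2+4}\bigr)/2\bigr|$ and differentiates $\sigma_1^p + \sigma_2^p$ in $w$ to establish monotonicity on each half-line, which yields the ``minimized iff $w=0$'' claim (and a bit more). You instead exploit the two invariants $\sigma_1\sigma_2 = |\det\Wbf(w)| = 1$ and $\sigma_1^2+\sigma_2^2 = \norm{\Wbf(w)}_{Fro}^2 = w^2+2$: AM--GM gives $\sigma_1^p+\sigma_2^p \geq 2$ (resp.\ $\max\{\sigma_1,\sigma_2\}\geq 1$) with equality iff $\sigma_1=\sigma_2$, and the second invariant shows $\sigma_1=\sigma_2$ happens on $\S$ exactly at $w=0$, with strict inequality otherwise. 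This avoids all calculus and explicit singular-value formulas, at the cost of proving only the stated iff rather than the monotonicity in $|w|$ that the paper's computation also delivers (and which the paper reuses in Proposition~\ref{mf:prop:sol_set_rank}). Both arguments are complete for the proposition as stated.
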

\begin{proof}[Proof sketch (proof in Appendix~\ref{mf:app:proofs:sol_set_norms})]
The (weakened) triangle inequality allows lower bounding $\norm{\cdot}$ by $\abs{( \Wbf )_{1 , 1}}$ (up to multiplicative and additive constants).
Thus, the set of $( \Wbf )_{1 , 1}$ values corresponding to $\epsilon$-minimizers must be bounded.
	If $\norm{\cdot}$ is a Schatten-$p$ (quasi-)norm, a straightforward analysis shows it is monotonically increasing with respect to $\abs{( \Wbf )_{1 , 1}}$, implying it is minimized if and only if $( \Wbf )_{1 , 1} = 0$.
\end{proof}

In addition to norms and quasi-norms, we are also interested in the evolution of rank throughout optimization of a deep matrix factorization.
More specifically, we are interested in the prospect of rank being implicitly minimized, as demonstrated empirically in~\cite{gunasekar2017implicit,arora2019implicit}.
The discrete nature of rank renders its direct analysis unfavorable from a dynamical perspective (the rank of a matrix implies little about its proximity to low-rank), thus we consider the following surrogate measures:
\emph{(i)}~\emph{effective rank} (Definition~\ref{def:erank} below; from~\cite{roy2007effective})~---~a continuous extension of rank used for numerical analyses;
and
\emph{(ii)}~\emph{distance from infimal rank} (Definition~\ref{def:irank_dist} below)~---~(Frobenius) distance from the minimal rank that a given set of matrices may approach.

According to Proposition~\ref{mf:prop:sol_set_rank} below, these measures independently imply that, although all solutions to our matrix completion problem~---~\ie~all $\Wbf \in \S$ (see Equation~\eqref{mf:eq:sol_set})~---~have rank~$2$, it is possible to essentially minimize the rank to~$1$ by taking $\abs{( \Wbf )_{1 , 1}} \to \infty$.
Recalling Proposition~\ref{mf:prop:sol_set_norms}, we conclude that in our setting, there is a direct contradiction between minimizing norms or quasi-norms and minimizing rank~---~the former requires confinement to some bounded interval, whereas the latter demands divergence towards infinity.
This is the critical feature of our construction, allowing us to deem whether the implicit regularization in deep matrix factorization favors norms (or quasi-norms) over rank or vice versa.
\begin{definition}[from~\cite{roy2007effective}] \label{def:erank}
	The \emph{effective rank of a matrix $0 \neq \Wbf  \,{\in}\, \R^{D \times D'}$} with singular values $\{ \sigma_r ( \Wbf ) \}_{r = 1}^{\min \{ D , D' \}}$ is defined to be:
	\[
	\erank ( \Wbf ) :=  \exp \brk*{  H \brk*{ \rho_1 ( \Wbf ) , \ldots , \rho_{\min \{ D , D' \}} ( \Wbf ) } }
	\text{\,,}
	\]
	where $\{ \rho_r ( \Wbf ) := \nicefrac{\sigma_r ( \Wbf )}{\sum_{r' = 1}^{\min \{ D , D' \}} \sigma_{r'} ( \Wbf )} \}_{r = 1}^{\min \{ D , D' \}}$ is a distribution induced by the singular values, and $H ( \rho_1 ( \Wbf ), \ldots , \rho_{\min \{ D , D' \}} ( \Wbf ) ):= - \sum\nolimits_{r = 1}^{\min \{ D , D' \}} \rho_r ( \Wbf ) \cdot \ln \rho_r ( \Wbf )$ is its Shannon entropy (by convention $0 \cdot \ln 0 = 0$).
\end{definition}

\begin{definition} \label{def:irank_dist}
	We denote by $\frodist ( \S , \S' ) := \inf \{ \norm{\Wbf - \Wbf'}_{Fro} : \Wbf \in \S , \Wbf' \in \S' \}$ the (Frobenius) distance between $\S , \S' \subset \R^{D \times D'}$, by $\frodist ( \Wbf , \S' )  := \inf \{ \norm{W - \Wbf'}_{Fro} : \Wbf' \in \S' \}$ the distance between $\Wbf \in \R^{D \times D'}$ and~$\S'$, and by $\M_R  := \{ \Wbf \in \R^{D \times D'} : \rank ( \Wbf ) \leq R \}$, for $R = 0, \ldots , \min \{ D , D' \}$, the set of matrices with rank~$R$ or less.
	The \emph{infimal rank of the set~$\S$}, denoted $\irank ( \S )$, is defined to be the minimal~$R$ such that $\frodist ( \S , \M_R ) = 0$.
	The \emph{distance of a matrix $\Wbf \in \R^{D \times D'}$ from the infimal rank of~$\S$} is defined to be $\frodist ( \Wbf , \M_{\irank ( \S )} )$.
\end{definition}

\begin{proposition} \label{mf:prop:sol_set_rank}
	The effective rank (Definition~\ref{def:erank}) takes the values~$( 1 , 2 ]$ along~$\S$ (Equation~\eqref{mf:eq:sol_set}).
	For $\Wbf \in \S$, it is maximized when $( \Wbf )_{1 , 1} = 0$, and monotonically decreases to~$1$ as $\abs{( \Wbf )_{1 , 1}}$ grows.
	Correspondingly, the infimal rank (Definition~\ref{def:irank_dist}) of~$\S$ is~$1$, and the distance of $\Wbf \in \S$ from this infimal rank is maximized when $( \Wbf )_{1 , 1} = 0$, monotonically decreasing to~$0$ as $\abs{( \Wbf )_{1 , 1}}$ grows.
\end{proposition}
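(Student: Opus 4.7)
I would parameterize any $\Wbf \in \S$ by a single scalar $a := (\Wbf)_{1,1} \in \R$, so that $\Wbf$ is the matrix with entries $a, 1, 1, 0$. The whole statement then reduces to analyzing two explicit one-parameter functions of $a$: the effective rank and the distance from $\M_1$. As a preliminary computation, I would form $\Wbf \Wbf^\top$ and read off its trace ($a^2 + 2$) and determinant ($1$), yielding the two singular values
\[
\sigma_{1,2}(\Wbf)^2 = \tfrac{1}{2}\brk*{ (a^2 + 2) \pm |a|\sqrt{a^2 + 4} } \text{,}
\]
with the convention $\sigma_1 \geq \sigma_2 > 0$. Two facts to extract immediately: $\sigma_1 \sigma_2 = |\det \Wbf| = 1$, so every $\Wbf \in \S$ has rank exactly $2$; and $\sigma_1 = \sigma_2$ iff $a = 0$.

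For the effective rank claim, I would note that for a $2 \times 2$ matrix $\erank(\Wbf) = \exp H(\rho_1, \rho_2)$, where $H$ is the binary entropy of the distribution induced by $(\sigma_1, \sigma_2)$. Since $\sigma_2 > 0$ always, both $\rho_r$ are strictly positive, so $H \in (0, \ln 2]$, giving $\erank(\Wbf) \in (1, 2]$. The maximum $\ln 2$ of binary entropy is attained iff $\rho_1 = \rho_2 = 1/2$, which by the preliminary computation occurs exactly at $a = 0$. The monotonicity statement then reduces to showing that $\sigma_1/\sigma_2$ is strictly increasing in $|a|$, since the binary entropy $H(\rho_1, \rho_2)$ is a strictly decreasing function of the ratio $\sigma_1/\sigma_2$ (equivalently, of $|\rho_1 - \rho_2|$). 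Using $\sigma_1 \sigma_2 = 1$, this ratio equals $\sigma_1^2$, which from the explicit formula is manifestly strictly increasing in $|a|$; moreover $\sigma_1^2 \to \infty$ as $|a| \to \infty$, which sends $H \to 0$ and hence $\erank(\Wbf) \to 1$.

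For the infimal rank claim, I would use the standard identity $\frodist(\Wbf, \M_1) = \sigma_2(\Wbf)$ (Eckart--Young). From $\sigma_1 \sigma_2 = 1$ and $\sigma_1 \to \infty$ as $|a| \to \infty$, one gets $\sigma_2(\Wbf) \to 0$, hence $\frodist(\S, \M_1) = 0$ and so $\irank(\S) \leq 1$. To rule out $\irank(\S) = 0$ I would observe that every $\Wbf \in \S$ satisfies $\norm{\Wbf}_{Fro} \geq \sqrt{2}$, so $\frodist(\S, \M_0) = \frodist(\S, \{0\}) \geq \sqrt{2} > 0$, giving $\irank(\S) = 1$. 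The distance from $\M_{\irank(\S)} = \M_1$ is therefore $\sigma_2(\Wbf)$, which from the explicit formula is maximized at $a = 0$ (where it equals $1$) and strictly monotonically decreases to $0$ as $|a|$ grows.

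The only step that is not entirely mechanical is verifying that $\erank$ is strictly monotone in $|a|$ rather than merely attaining its maximum there. The convenient feature of $2\times 2$ matrices is that the binary entropy is a symmetric, strictly concave function of $\rho_1 \in (0,1)$ maximized at $1/2$, so monotonicity of $\erank$ in $|a|$ follows once monotonicity of $\sigma_1/\sigma_2$ in $|a|$ is in hand; the latter is the one short algebraic check that the plan really rests on.
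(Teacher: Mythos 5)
Your proposal is correct and follows essentially the same route as the paper's proof: both reduce the statement to the explicit one-parameter family of singular values, establish their monotonicity in $\abs{(\Wbf)_{1,1}}$, and deduce the effective-rank claims from the binary entropy of the induced distribution and the infimal-rank claims from $\frodist(\Wbf, \M_1) = \sigma_2(\Wbf)$. The only cosmetic differences are that you obtain the singular values from the trace and determinant of $\Wbf\Wbf^\top$ and exploit $\sigma_1\sigma_2 = 1$ to streamline the monotonicity and limit arguments, whereas the paper works with the closed-form expressions $(\pm x + \sqrt{x^2+4})/2$ and differentiates directly.
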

\begin{proof}[Proof sketch (proof in Appendix~\ref{mf:app:proofs:sol_set_rank})]
Analyzing the singular values of $\Wbf \in \S$~---~$\sigma_1 (\Wbf) \geq \sigma_2 (\Wbf) \geq 0$~---~reveals that:
\emph{(i)}~$\sigma_1 (\Wbf)$ attains a minimal value of~$1$ when $(\Wbf)_{1 , 1} = 0$, monotonically increasing to~$\infty$ as $\abs{( \Wbf )_{1 , 1}}$ grows;
and	\emph{(ii)}~$\sigma_2 (\Wbf)$ attains a maximal value of~$1$ when $(\Wbf)_{1 , 1} = 0$, monotonically decreasing to~$0$ as $\abs{( \Wbf )_{1 , 1}}$ grows.
The results for effective rank, infimal rank and distance from infimal rank readily follow from this characterization.
\end{proof}

\subsection{Decreasing Loss Increases Norms} \label{mf:sec:analysis:norms_up}

Consider the process of solving our matrix completion problem (\cref{mf:sec:analysis:setting}) with gradient flow over a depth $L \geq 2$ matrix factorization (\cref{mf:sec:model}).
Theorem~\ref{mf:thm:norms_up_finite} below states that if the end matrix (Equation~\eqref{mf:eq:prod_mat}) has positive determinant at initialization, lowering the loss leads norms and quasi-norms to increase, while the rank essentially decreases.
\begin{theorem} \label{mf:thm:norms_up_finite}
	Suppose we complete the observations in Equation~\eqref{mf:eq:obs} by employing a depth $L \geq 2$ matrix factorization, \ie~by minimizing the overparameterized objective (Equation~\eqref{mf:eq:oprm_obj}) via gradient flow (Equations~\eqref{mf:eq:gf} and~\eqref{mf:eq:balance}).
	Denote by~$\matrixend ( t )$ the end matrix (Equation~\eqref{mf:eq:prod_mat}) at time~$t \geq 0$ of optimization, and by $\mfendloss ( t ) := \mfendloss ( \matrixend ( t ) )$ the corresponding loss (Equation~\eqref{mf:eq:loss}).
	Assume that $\det (\matrixend ( 0 ) ) > 0$.
	Then, for any norm or quasi-norm over matrices~$\norm{\cdot}$:
	\be
	\norm{ \matrixend ( t )} \geq a_{\norm{\cdot}} \cdot \frac{1}{\sqrt{\mfendloss (t)}} -  b_{\norm{ \cdot }}
	\quad , ~ t \geq 0
	\label{mf:eq:norms_lb}
	\text{\,,}
	\ee
	where $b_{\norm{\cdot}} := \max \{ \sqrt{2} a_{\norm{\cdot}} , 8c_{\norm{\cdot}}^2 \max_{i,j \in \{1,2\}} \norm{\ebf_i \ebf_j^\top} \}$, $a_{\norm{\cdot}} := \norm{\ebf_1 \ebf_1^\top} / ( \sqrt{2} c_{\norm{\cdot}} )$, the vectors $\ebf_1, \ebf_2 \in \R^2$ form the standard basis, and $c_{\norm{\cdot}} \geq 1$ is a constant with which $\norm{\cdot}$ satisfies the weakened triangle inequality (see Footnote~\ref{note:qnorm}).
	On the other hand:
	\bea
	& \erank ( \matrixend ( t ) ) \leq \inf_{\Wbf' \in \S} \erank (\Wbf') + \frac{2\sqrt{12}}{\ln (2)} \cdot \sqrt{\mfendloss (t)}
	& \quad , ~ t \geq 0
	\label{mf:eq:erank_ub}
	\text{\,,}
	\\[1.5mm]
	& \frodist ( \matrixend ( t ) , \M_{\irank ( \S )} ) \leq 3\sqrt{2} \cdot \sqrt{ \mfendloss (t)}
	& \quad , ~ t \geq 0
	\label{mf:eq:irank_dist_ub}
	\text{\,,}
	\eea
	where $\erank ( \cdot )$ stands for effective rank (Definition~\ref{def:erank}), and $D ( \cdot \hspace{0.5mm} , \M_{\irank ( \S )} )$ represents distance from the infimal rank (Definition~\ref{def:irank_dist}) of the solution set~$\S$ (Equation~\eqref{mf:eq:sol_set}).
\end{theorem}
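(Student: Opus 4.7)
The plan is to reduce all three inequalities to a single key fact: for gradient flow on a depth $L \geq 2$ matrix factorization with balanced initialization, the sign of $\det(\matrixend(t))$ is preserved, so $\det(\matrixend(t)) > 0$ for every $t \geq 0$. Every element of $\S$ has determinant $-1$ (since any $\Wbf \in \S$ satisfies $(\Wbf)_{1,2} = (\Wbf)_{2,1} = 1$ and $(\Wbf)_{2,2} = 0$), so a trajectory initialized with positive determinant can never reach $\S$ in Frobenius norm; if the loss nonetheless tends to zero, the only escape is for $\abs{(\matrixend(t))_{1,1}}$ to diverge. To establish determinant-sign preservation, I would invoke the dynamical characterization of $\matrixend(t)$ from \cref{mf:lem:prod_mat_dyn}. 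The induced evolution of the singular values of $\matrixend(t)$ carries a $\sigma_r^{2 - 2/L}$ scaling which, for $L \geq 2$, causes any singular value approaching zero to do so only asymptotically, ruling out a sign flip in finite time.

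Granted this preservation, the lower bound on $\abs{(\matrixend(t))_{1,1}}$ is immediate. Writing $a := (\matrixend(t))_{1,1}$, $b := (\matrixend(t))_{1,2}$, $c := (\matrixend(t))_{2,1}$, $d := (\matrixend(t))_{2,2}$, the loss bound gives $\abs{b - 1}, \abs{c - 1}, \abs{d} \leq \sqrt{2 \mfendloss(t)}$; once $\mfendloss(t)$ is below an absolute threshold, both $b, c > 0$, and the identity $a d = \det(\matrixend(t)) + b c > (1 - \sqrt{2\mfendloss(t)})^2$ yields $\abs{a} \geq (1 - \sqrt{2\mfendloss(t)})^2 / \sqrt{2\mfendloss(t)}$, which scales as $1/\sqrt{\mfendloss(t)}$. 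For losses above the threshold, the claimed inequalities become vacuous thanks to the $b_{\norm{\cdot}}$ offset and the constants in \cref{mf:eq:erank_ub,mf:eq:irank_dist_ub}.

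The three conclusions then follow from short calculations. For any (quasi-)norm $\norm{\cdot}$ with weakened triangle constant $c_{\norm{\cdot}}$, decomposing $\matrixend(t) = a \cdot \ebf_1 \ebf_1^\top + R$, where the residual $R$ has entries bounded in absolute value by $1 + \sqrt{2\mfendloss(t)}$, the weakened triangle inequality $\norm{a \cdot \ebf_1 \ebf_1^\top} \leq c_{\norm{\cdot}} \cdot ( \norm{\matrixend(t)} + \norm{R} )$ together with a further decomposition of $R$ into three rank-one terms delivers \cref{mf:eq:norms_lb}. For the distance from $\M_{\irank(\S)} = \M_1$, I would compare $\matrixend(t)$ to the unique rank-one matrix agreeing with it on the first row and first column, whose $(2,2)$ entry equals $bc/a$; the two matrices differ only at entry $(2,2)$, giving $\frodist(\matrixend(t), \M_1) \leq \abs{d - bc/a} \leq \abs{d} + \abs{bc}/\abs{a}$, which is $O(\sqrt{\mfendloss(t)})$ by the bounds above. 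Finally, $\sigma_1(\matrixend(t)) \geq \abs{a}$ (the spectral norm dominates any entry) and $\sigma_2(\matrixend(t)) \leq \frodist(\matrixend(t), \M_1) = O(\sqrt{\mfendloss(t)})$, so the induced distribution $(\rho_1, \rho_2)$ has $\rho_2 = O(\mfendloss(t))$ and Shannon entropy $O(\sqrt{\mfendloss(t)})$ via a standard $x \ln(1/x)$ estimate, yielding \cref{mf:eq:erank_ub} after noting $\exp(H) - 1 \leq 2H$ for $H \leq \ln 2$.

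The main obstacle will be the first step---establishing $\det(\matrixend(t)) > 0$ throughout optimization. This is the only place where depth $L \geq 2$ enters essentially; the analogous statement for $L = 1$ is false, since there the overparameterized objective is simply $\mfendloss$ and the trajectory does reach $\S$. Thus the core work lies in extracting from \cref{mf:lem:prod_mat_dyn} (or from a direct analysis of the singular-value dynamics it induces) a quantitative barrier preventing the determinant from changing sign, after which the norm, rank, and effective-rank bounds reduce to essentially bookkeeping.
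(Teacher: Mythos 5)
Your proposal mirrors the paper's proof: it establishes positivity of $\det(\matrixend(t))$ via the singular-value dynamics of Lemma~\ref{mf:lem:prod_mat_sing_dyn}, derives the lower bound $\abs{(\matrixend(t))_{1,1}} \gtrsim 1/\sqrt{\mfendloss(t)}$ from the determinant identity and the entry-wise loss bounds, and then obtains the three conclusions by the weakened triangle inequality and singular-value estimates, exactly as in Appendix~\ref{mf:app:proofs:norms_up_finite}. The only cosmetic differences are your rank-one comparison matrix with $(2,2)$ entry $bc/a$ in place of the paper's perturbation argument via the symmetric projection onto $\S$, and your use of $\exp(H)-1\leq 2H$ rather than $\exp(H)\leq 1+H/\ln 2$, which would yield a slightly worse constant than the stated $\tfrac{2\sqrt{12}}{\ln 2}$; both are bookkeeping matters that do not affect correctness of the approach.
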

\begin{proof}[Proof sketch (proof in Appendix~\ref{mf:app:proofs:norms_up_finite})]
Using a dynamical characterization for the singular values of the end matrix, developed in \cite{arora2019implicit} (restated in Appendix~\ref{mf:app:lemma:dmf} as Lemma~\ref{mf:lem:prod_mat_sing_dyn}), we show that the latter's determinant does not change sign, \ie~it remains positive.
This allows us to lower bound $\abs{( \matrixend )_{1, 1} (t)}$ by $1 / \sqrt{\mfendloss (t)}$ (up to multiplicative and additive constants).
Relating $\abs{ (\matrixend )_{1, 1} (t)}$ to (quasi-)norms, effective rank and distance from infimal rank then leads to the desired bounds.
\end{proof}

An immediate consequence of Theorem~\ref{mf:thm:norms_up_finite} is that, if the end matrix (Equation~\eqref{mf:eq:prod_mat}) has positive determinant at initialization, convergence to zero loss leads \emph{all} norms and quasi-norms to \emph{grow to infinity}, while the rank is essentially minimized.
This is formalized in Corollary~\ref{mf:cor:norms_up_asymp} below.
\begin{corollary} \label{mf:cor:norms_up_asymp}
	Under the conditions of Theorem~\ref{mf:thm:norms_up_finite}, if the loss is globaly optimized, \ie~if $\lim_{t \to \infty} \mfendloss ( t ) = 0$, then for any norm or quasi-norm over matrices~$\norm{\cdot}$:
	\[
	\lim\nolimits_{t \to \infty} \norm{\matrixend ( t )} = \infty
	\text{\,,}
	\]
	where $\matrixend ( t )$ is the end matrix of the deep factorization (Equation~\eqref{mf:eq:prod_mat}) at time~$t$ of optimization.
	On the other hand:
	\[
	\lim\nolimits_{t \to \infty} \erank ( \matrixend ( t ) ) = \inf\nolimits_{\Wbf' \in \S} \erank (\Wbf')
	~~~ , ~~~
	\lim\nolimits_{t \to \infty} \frodist ( \matrixend ( t ) , \M_{\irank ( \S )} ) = 0
	\text{\,,}
	\]
	where $\erank ( \cdot )$ stands for effective rank (Definition~\ref{def:erank}), and $D ( \cdot \hspace{0.5mm} , \M_{\irank ( \S )} )$ represents distance from the infimal rank (Definition~\ref{def:irank_dist}) of the solution set~$\S$ (Equation~\eqref{mf:eq:sol_set}).
\end{corollary}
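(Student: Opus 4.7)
The plan is to derive the corollary as an immediate limiting consequence of Theorem \ref{mf:thm:norms_up_finite}, using the hypothesis $\lim_{t \to \infty} \mfendloss(t) = 0$ to send each finite-time bound to its asymptotic value. All three assertions reduce to sending $t \to \infty$ in the inequalities \eqref{mf:eq:norms_lb}, \eqref{mf:eq:erank_ub}, and \eqref{mf:eq:irank_dist_ub} of that theorem, and then supplying trivial matching bounds where needed.

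First, for any norm or quasi-norm $\norm{\cdot}$, I would take $t \to \infty$ in the bound \eqref{mf:eq:norms_lb}. Because $\ebf_1 \ebf_1^\top \neq 0$ implies $\norm{\ebf_1 \ebf_1^\top} > 0$, the constant $a_{\norm{\cdot}}$ is strictly positive, while $b_{\norm{\cdot}}$ is a fixed finite constant. Hence $a_{\norm{\cdot}}/\sqrt{\mfendloss(t)} - b_{\norm{\cdot}} \to +\infty$ as $\mfendloss(t) \to 0^+$, which forces $\norm{\matrixend(t)} \to \infty$.

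For the distance from infimal rank, \eqref{mf:eq:irank_dist_ub} directly gives $\frodist(\matrixend(t), \M_{\irank(\S)}) \leq 3\sqrt{2}\sqrt{\mfendloss(t)} \to 0$, and the trivial lower bound $\frodist \geq 0$ closes the limit. For the effective rank, \eqref{mf:eq:erank_ub} yields $\limsup_{t\to\infty} \erank(\matrixend(t)) \leq \inf_{\Wbf' \in \S} \erank(\Wbf')$. To obtain the matching $\liminf$ inequality, I would observe that by Definition \ref{def:erank} the effective rank equals the exponential of a Shannon entropy and is therefore at least $1$ for every nonzero matrix, while Proposition \ref{mf:prop:sol_set_rank} identifies $\inf_{\Wbf' \in \S} \erank(\Wbf')$ as exactly $1$. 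Since $\mfendloss(t) \to 0$ forces, say, $(\matrixend(t))_{1,2} \to 1$, the end matrix is eventually nonzero, giving $\erank(\matrixend(t)) \geq 1 = \inf_{\Wbf' \in \S} \erank(\Wbf')$ for all sufficiently large $t$.

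There is no real obstacle: the corollary is essentially a repackaging of Theorem \ref{mf:thm:norms_up_finite} via $t \to \infty$. The only sliver of care needed is the matching lower bound for the effective rank, and this is disposed of in one line by combining the universal inequality $\erank \geq 1$ for nonzero matrices with the explicit value of the infimum supplied by Proposition \ref{mf:prop:sol_set_rank}.
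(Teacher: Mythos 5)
Your proposal is correct and matches the paper's proof, which likewise just takes the limit $\mfendloss(t) \to 0$ in the bounds of Theorem~\ref{mf:thm:norms_up_finite}. The extra care you take with the matching lower bound $\erank(\matrixend(t)) \geq 1 = \inf_{\Wbf' \in \S} \erank(\Wbf')$ is a valid (and slightly more explicit) justification of a step the paper leaves implicit.
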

\begin{proof}
	Taking the limit $\mfendloss ( t ) \to 0$ in the bounds given by Theorem~\ref{mf:thm:norms_up_finite} establishes the results.
\end{proof}

Theorem~\ref{mf:thm:norms_up_finite} and Corollary~\ref{mf:cor:norms_up_asymp} imply that in our setting (\cref{mf:sec:analysis:setting}), where minimizing norms (or quasi-norms) and minimizing rank contradict each other, the implicit regularization of deep matrix factorization is willing to completely give up on the former in favor of the latter, at least on the condition that the end matrix (Equation~\eqref{mf:eq:prod_mat}) has positive determinant at initialization.
How probable is this condition?
By Proposition~\ref{mf:prop:det_pos} below, it holds with probability~$0.5$ if the end matrix is initialized by any one of a wide array of common distributions, including matrix Gaussian distribution with zero mean and independent entries, and a product of such.
We note that rescaling (multiplying by~$\alpha > 0$) initialization does not change the sign of end matrix's determinant, therefore as postulated by Conjecture~\ref{conj:no_norm}, initialization close to the origin does \emph{not} ensure convergence to solution with minimal norm or quasi-norm.
\begin{proposition} \label{mf:prop:det_pos}
	If $\Wbf \in \R^{D \times D}$ is a random matrix whose entries are drawn independently from continuous distributions, each symmetric about the origin, then $\Pr ( \det ( \Wbf ) > 0 ) = \Pr ( \det ( \Wbf ) < 0 ) = 0.5$.
	Furthermore, for $L \in \N$, if $\Wbf_1 , \ldots , \Wbf_L \in \R^{D \times D}$ are random matrices drawn independently from continuous distributions, and there exists $l \in \{ 1 , \ldots , L \}$ with $\Pr ( \det ( \Wbf_l ) > 0 ) = 0.5$, then $\Pr ( \det ( \Wbf_L \cdots \Wbf_1 ) > 0 ) = \Pr ( \det ( \Wbf_L \cdots \Wbf_1 ) < 0 ) = 0.5$.
\end{proposition}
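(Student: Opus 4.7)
The plan is to prove the two assertions in turn by exploiting a symmetry that negates the determinant while preserving the distribution of the random matrix (or matrices).

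\textbf{First assertion.} I would begin by observing that the set of singular $D \times D$ real matrices, being the zero set of the polynomial $\det$, forms a proper algebraic subvariety of $\R^{D^2}$ and therefore has Lebesgue measure zero. Since $\Wbf$ has independent entries drawn from continuous distributions, the joint law of $\Wbf$ is absolutely continuous with respect to Lebesgue measure on $\R^{D \times D}$, which gives $\Pr(\det(\Wbf) = 0) = 0$. Next, let $\Dbf := \mathrm{diag}(-1, 1, \ldots, 1) \in \R^{D \times D}$. The matrix $\Dbf \Wbf$ is obtained by negating the first row of $\Wbf$, so $\det(\Dbf \Wbf) = -\det(\Wbf)$. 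On the other hand, because the entries of $\Wbf$ are independent and each is symmetric about zero, negating any subset of them preserves the joint law; hence $\Dbf \Wbf \stackrel{d}{=} \Wbf$. Combining these two observations,
\[
\Pr(\det(\Wbf) > 0) = \Pr(\det(\Dbf \Wbf) > 0) = \Pr(-\det(\Wbf) > 0) = \Pr(\det(\Wbf) < 0),
\]
and since the two probabilities together with $\Pr(\det(\Wbf) = 0) = 0$ sum to one, each must equal $0.5$.

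\textbf{Second assertion.} Here I would use multiplicativity of the determinant: $\det(\Wbf_L \cdots \Wbf_1) = \prod_{k=1}^L \det(\Wbf_k)$. As in the first part, since each $\Wbf_k$ is drawn from a continuous distribution on $\R^{D \times D}$ and the singular matrices form a Lebesgue-null set, $\det(\Wbf_k) \neq 0$ almost surely for every $k$. Define the sign variables $S_k := \mathrm{sign}(\det(\Wbf_k)) \in \{-1, +1\}$ a.s. Independence of $\Wbf_1, \ldots, \Wbf_L$ implies independence of $S_1, \ldots, S_L$. By hypothesis, $S_l$ is a Rademacher variable (uniform on $\{\pm 1\}$). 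Writing
\[
\mathrm{sign}(\det(\Wbf_L \cdots \Wbf_1)) = S_l \cdot \prod_{k \neq l} S_k,
\]
and conditioning on the value $s \in \{-1, +1\}$ of the independent factor $\prod_{k \neq l} S_k$, I find that $S_l \cdot s$ is again Rademacher for every $s$. Integrating over $s$ preserves this, so the sign of the full product is Rademacher, yielding the claimed $0.5$/$0.5$ split.

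\textbf{Main obstacle.} The argument is essentially elementary; the only delicate point is justifying that a "continuous distribution" on $\R^{D \times D}$ (respectively, with independent continuously-distributed entries) places zero mass on the set of singular matrices, which reduces to the measure-zero property of an algebraic variety of positive codimension. Once this and the distribution-preserving involution $\Wbf \mapsto \Dbf\Wbf$ are in place, both assertions follow from short symmetry-and-independence computations.
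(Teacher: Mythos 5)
Your proof is correct and follows essentially the same route as the paper's: the row-negation involution combined with symmetry of the entries for the first assertion, and multiplicativity of the determinant together with independence (the paper phrases your conditioning step as the law of total probability) for the second, with the measure-zero argument for singular matrices underpinning both. No gaps.
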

\begin{proof}[Proof sketch (proof in~\cref{mf:app:proofs:det_pos})]
Multiplying a row of $\Wbf$ by $-1$ keeps its distribution intact while flipping the sign of its determinant.
This implies $\Pr ( \det ( \Wbf ) > 0 ) = \Pr ( \det ( \Wbf ) < 0 )$.
The first result then follows from the fact that a matrix drawn from a continuous distribution is almost surely non-singular.
The second result is an outcome of the same fact, as well as the multiplicativity of determinant and the law of total probability.
\end{proof}

\subsection{Convergence to Zero Loss} \label{mf:sec:analysis:loss_down}

It is customary in the theory of deep learning (\cf~\cite{gunasekar2017implicit,gunasekar2018implicit,arora2019implicit}) to distinguish between implicit regularization~---~which concerns the type of solutions found in training~---~and the complementary question of whether training loss is globally optimized.
We supplement our implicit regularization analysis (\cref{mf:sec:analysis:norms_up}) by addressing this complementary question in two ways:
\emph{(i)}~in \cref{mf:sec:experiments} we empirically demonstrate that on the matrix completion problem we analyze (\cref{mf:sec:analysis:setting}), gradient descent over deep matrix factorizations (\cref{mf:sec:model}) indeed drives training loss towards global optimum, \ie~towards zero;
and
\emph{(ii)}~in Proposition~\ref{mf:prop:loss_down} below we theoretically establish convergence to zero loss for the special case of depth~$2$ and scaled identity initialization (treatment of additional depths and initialization schemes is left for future work).
We note that when combined with Corollary~\ref{mf:cor:norms_up_asymp}, Proposition~\ref{mf:prop:loss_down} affirms that in the latter special case, all norms and quasi-norms indeed grow to infinity while rank is essentially minimized.\footnote{
	Notice that under (positively) scaled identity initialization the determinant of the end matrix (Equation~\eqref{mf:eq:prod_mat}) is positive, as required by Corollary~\ref{mf:cor:norms_up_asymp}.
}
\begin{proposition} \label{mf:prop:loss_down}
	Consider the setting of Theorem~\ref{mf:thm:norms_up_finite} in the special case of depth $L = 2$ and initial end matrix (Equation~\eqref{mf:eq:prod_mat}) $\matrixend ( 0 ) = \alpha \cdot \Ibf$, where $\Ibf$ stands for the identity matrix and $\alpha \in ( 0 , 1 ]$.
	Under these conditions $\lim_{t \to \infty} \mfendloss ( t ) = 0$, \ie~the training loss is globally optimized.
\end{proposition}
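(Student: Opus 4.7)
The plan is to exploit the symmetry of both the initialization and the observation pattern to reduce the gradient flow to a three-dimensional ODE inside the symmetric positive-definite cone, and then show $\mfendloss ( t ) \to 0$ using monotonicity of the loss together with direct analysis of the reduced system.

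\textbf{Symmetry reduction.} A natural balanced realization of $\matrixend ( 0 ) = \alpha \Ibf$ is $\Wbf_1 ( 0 ) = \Wbf_2 ( 0 ) = \sqrt{\alpha}\,\Ibf$, which additionally satisfies $\Wbf_1 ( 0 ) = \Wbf_2 ( 0 )^\top$. Because the observations in \cref{mf:eq:obs} are placed symmetrically across the diagonal ($y_{1,2} = y_{2,1}$), whenever $\Wbf$ is symmetric so is $\nabla_{\Wbf} \mfendloss ( \Wbf )$, and a direct computation on the factor dynamics $\dot \Wbf_1 = -\Wbf_2^\top \nabla_{\Wbf} \mfendloss ( \matrixend )$, $\dot \Wbf_2 = -\nabla_{\Wbf} \mfendloss ( \matrixend )\, \Wbf_1^\top$ shows that the relation $\Wbf_1 ( t ) = \Wbf_2 ( t )^\top$ is preserved. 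Hence $\matrixend ( t ) = \Wbf_2 ( t ) \Wbf_2 ( t )^\top$ is symmetric PSD for all $t$, and combining with the invariance $\det ( \matrixend ( t ) ) > 0$ established in the proof of \cref{mf:thm:norms_up_finite}, $\matrixend ( t )$ is in fact symmetric PD. The end-matrix ODE then collapses to $\dot \matrixend = -\{ \matrixend , \nabla_{\Wbf} \mfendloss ( \matrixend ) \}$, which yields $\dot \mfendloss = -2\,\Tr ( \matrixend\, \nabla_{\Wbf} \mfendloss ( \matrixend )^{2} ) \le 0$.

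\textbf{Reduced 3-D system and trapping region.} Writing $\matrixend = \left( \begin{smallmatrix} a & b \\ b & c \end{smallmatrix} \right)$, the flow becomes
\[
\dot a = 2 b ( 1 - b ) , \qquad \dot b = ( a + c )( 1 - b ) - b c , \qquad \dot c = 2 b ( 1 - b ) - 2 c^{2} .
\]
At $b = 0$ we have $\dot b = a + c > 0$, and at $b = 1$ we have $\dot b = -c < 0$ (using $c > 0$, which follows from $\det \matrixend > 0$), so $b ( t ) \in [ 0 , 1 )$ throughout. Consequently $\dot a \ge 0$ gives $a ( t ) \ge \alpha$, and $\dot a - \dot c = 2 c^{2} \ge 0$ gives $a ( t ) \ge c ( t )$; in particular the trajectory stays bounded away from the zero matrix.

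\textbf{Convergence and main obstacle.} Since $\mfendloss ( t )$ is monotone non-increasing and non-negative, it converges to some $\ell \ge 0$; the goal is $\ell = 0$, argued by contradiction. If $a ( t )$ is bounded, the trajectory is confined to a compact subset of the PD cone; LaSalle's invariance principle then places the $\omega$-limit inside $\{ \matrixend : \Tr ( \matrixend\, \nabla_{\Wbf} \mfendloss ( \matrixend )^{2} ) = 0 \}$, and a direct case analysis (using $a \ge \alpha > 0$ together with flow-invariance) rules out every point with $\mfendloss \ge \ell > 0$. If instead $a ( t ) \to \infty$, then the equation $\dot b = ( a + c )( 1 - b ) - b c$ implies that whenever $1 - b \ge \delta > 0$ we have $\dot b \gtrsim a \delta$, so $b$ can remain below $1 - \delta$ only for time $O ( 1 / ( a \delta ) )$ before being pushed up; combined with $a \to \infty$, this forces $( 1 - b ( t ) ) \to 0$. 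Substituting $b \to 1$ into $\dot c = 2 b ( 1 - b ) - 2 c^{2}$ gives $\dot c \le - 2 c^{2} + o ( 1 )$, driving $c ( t ) \to 0$, whence $\mfendloss ( t ) \to 0$, contradicting $\ell > 0$. The main obstacle is the unbounded-$a$ case: turning the heuristic ``$1 - b$ cannot persist above $\delta$ for long'' into a genuine pointwise statement, which requires showing that once $1 - b$ drops below the threshold $b c / ( a + c )$ it cannot climb back (since $\tfrac{d}{dt}(1-b) < 0$ there), and controlling this threshold as $a \to \infty$.
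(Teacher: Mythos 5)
Your setup coincides with the paper's: the same symmetric reduction to a positive-definite end matrix, the same three entry-wise ODEs, and the same trapping facts ($w_{1,1} \geq \alpha$, $w_{1,2} \in [0,1]$, $w_{2,2} > 0$). But the convergence step is where the proof actually lives, and yours has a genuine gap that you partly acknowledge. In the unbounded-$a$ case, the claim that once $1-b$ drops below the threshold $bc/(a+c)$ it cannot climb back "since $\tfrac{d}{dt}(1-b) < 0$ there" has the sign reversed: below the threshold we have $\dot b = (a+c)(1-b) - bc < 0$, hence $\tfrac{d}{dt}(1-b) = -\dot b > 0$, so $1-b$ climbs back up toward the threshold. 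The region $\{1-b \leq bc/(a+c)\}$ is therefore not forward-invariant, and since the threshold is itself time-varying, the pointwise statement you need does not follow from the sign of $\dot b$ alone. The bounded-$a$ branch is also only asserted: LaSalle places the $\omega$-limit inside $\{\Tr(\matrixend \nabla \mfendloss(\matrixend)^2) = 0\}$, but limit points may be singular PSD matrices for which this does not force $\nabla \mfendloss = 0$, and the "direct case analysis" ruling these out is precisely the nontrivial content you would still have to supply.

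The paper closes both cases at once with a quantitative estimate your proposal is missing. From $\tfrac{d}{dt}\mfendloss = -2\Tr(\nabla\mfendloss\, \matrixend\, \nabla\mfendloss) \leq -2\lambda_1(\matrixend)\,\sigma_2(\nabla\mfendloss)^2$, together with $\lambda_1(\matrixend) \geq w_{1,1} \geq \alpha$ and the explicit computation $\sigma_2(\nabla\mfendloss) \geq \tfrac{1}{2}(1-w_{1,2})^2$ (valid because your trapping region gives $0 \leq w_{1,2} \leq 1$ and $0 < w_{2,2} \leq 1$), one gets $\tfrac{d}{dt}\mfendloss \leq -\tfrac{\alpha}{2}(1-w_{1,2})^4$, hence $\int_0^\infty (1-w_{1,2}(t))^4\, dt < \infty$. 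Since $\dot w_{1,2} \geq -w_{2,2} \geq -1$ (the possibly large term $w_{1,1}(1-w_{1,2})$ is nonnegative), the derivative of $(1-w_{1,2})^4$ is bounded above, and a Barbalat-type lemma yields $w_{1,2} \to 1$. Then $\dot w_{2,2} \leq \epsilon - 2 w_{2,2}^2$ for large $t$ combined with $w_{2,2} > 0$ forces $w_{2,2} < \sqrt{\epsilon}$ at some time, and monotonicity of the loss finishes. This route needs no dichotomy on $a$, no LaSalle, and no control of the threshold dynamics; I recommend replacing your convergence step with it.
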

\begin{proof}[Proof sketch (proof in~\cref{mf:app:proofs:loss_down})]
We first establish that the end matrix is positive definite for all $t$.
This simplifies a dynamical characterization from~\cite{arora2018optimization} (restated as Lemma~\ref{mf:lem:prod_mat_dyn} in \cref{mf:app:lemmas}), yielding lucid differential equations governing the entries of the end matrix.
Careful analysis of these equations then completes the proof.
\end{proof}

\subsection{Robustness to Perturbations} \label{mf:sec:analysis:robust}

Our analysis (\cref{mf:sec:analysis:norms_up}) has shown that when applying a deep matrix factorization (\cref{mf:sec:model}) to the matrix completion problem defined in \cref{mf:sec:analysis:setting}, if the end matrix (Equation~\eqref{mf:eq:prod_mat}) has positive determinant at initialization~---~a condition that holds with probability~$0.5$ under the wide variety of random distributions specified by Proposition~\ref{mf:prop:det_pos}~---~then the implicit regularization drives \emph{all} norms and quasi-norms \emph{towards infinity}, while rank is essentially driven towards its minimum.
A natural question is how common this phenomenon is, and in particular, to what extent does it persist if the observed entries we defined (Equation~\eqref{mf:eq:obs}) are perturbed.

Theorem~\ref{mf:thm:norms_up_finite_perturb} below generalizes Theorem~\ref{mf:thm:norms_up_finite} (from \cref{mf:sec:analysis:norms_up}) to the case of arbitrary non-zero values for the off-diagonal observations~$y_{1 , 2} , y_{2 , 1}$, and an arbitrary value for the diagonal observation~$y_{2 , 2}$.
In this generalization, the assumption (from Theorem~\ref{mf:thm:norms_up_finite}) of the end matrix's determinant at initialization being positive is modified to an assumption of it having the same sign as~$y_{1 , 2} \cdot y_{2 , 1}$ (the probability of which is also~$0.5$ under the random distributions covered by Proposition~\ref{mf:prop:det_pos}).
Conditioned on the modified assumption, the smaller $\abs{y_{2 , 2}}$ is compared to~$\abs{y_{1 , 2} \cdot y_{2 , 1}}$, the higher the implicit regularization is guaranteed to drive norms and quasi-norms, and the lower it is guaranteed to essentially drive the rank.
Two immediate implications of Theorem~\ref{mf:thm:norms_up_finite_perturb} are:
\emph{(i)}~if the diagonal observation is unperturbed ($y_{2 , 2} = 0$), the off-diagonal ones ($y_{1 , 2} , y_{2 , 1}$) can take on \emph{any} non-zero values, and the phenomenon of implicit regularization driving norms and quasi-norms towards infinity (while essentially driving rank towards its minimum) will persist;
and
\emph{(ii)}~this phenomenon gracefully recedes as the diagonal observation is perturbed away from zero.
We note that Theorem~\ref{mf:thm:norms_up_finite_perturb} applies even if the unobserved entry is repositioned, thus our construction is robust not only to perturbations in observed values, but also to an arbitrary change in the observed locations.
See \cref{mf:app:experiments:further} for empirical demonstrations.
\begin{theorem} \label{mf:thm:norms_up_finite_perturb}
	Consider the setting of Theorem~\ref{mf:thm:norms_up_finite} subject to the following changes:
	\emph{(i)}~the observations from Equation~\eqref{mf:eq:obs} are generalized to:
	\be
	\Omega = \{ ( 1 , 2 ) , ( 2 , 1 ) , ( 2 , 2 ) \}
	~~~ , ~~~
	y_{1 , 2} = z \in \R {\setminus} \{ 0 \}
	~ , ~
	y_{2 , 1} = z' \in \R {\setminus} \{ 0 \}
	~ , ~
	y_{2 , 2} = \epsilon \in \R
	\label{mf:eq:obs_perturb}
	\text{\,,}
	\ee
	leading to the following solution set in place of that from Equation~\eqref{mf:eq:sol_set}:
	\be
	\widetilde{\S} = \left\{ \Wbf \in \R^{2 , 2} : ( \Wbf )_{1 , 2} = z ,  ( \Wbf )_{2 , 1} = z' , ( \Wbf )_{2 , 2} = \epsilon \right\}
	\label{mf:eq:sol_set_perturb}
	\text{\,;}
	\ee
	and
	\emph{(ii)}~the assumption $\det ( \matrixend ( 0 ) ) > 0$ is generalized to $\sign ( \det ( \matrixend ( 0 ) ) ) = \sign ( z \cdot z' )$, where $\matrixend ( t )$ denotes the end matrix (Equation~\eqref{mf:eq:prod_mat}) at time~$t \geq 0$ of optimization.
	Under these conditions, for any norm or quasi-norm over matrices~$\norm{\cdot}$:
	\be
	\norm{ \matrixend ( t )} \geq a_{\norm{\cdot}} \cdot \frac{\abs{z} \cdot |z'|}{\abs{ \epsilon } + \sqrt{2 \mfendloss (t) } } - b_{\norm{\cdot}}
	\quad , ~ t \geq 0
	\label{mf:eq:norms_lb_perturb}
	\text{\,,}
	\ee
	where $b_{\norm{\cdot}} \! := \! \max \! \big \{ \nicefrac{ a_{\norm{\cdot}} \cdot \abs{z} \cdot |z'|}{\left ( \abs{ \epsilon } + \min \{ \abs{z}, |z'| \} \right )} \, , \, 8 c_{\norm{\cdot}}^2 \max \{ \abs{z}, |z'|, \abs{\epsilon} \} \max_{ i,j \in \{1,2\}  } \norm{ \ebf_i \ebf_j^\top } \big \}$, $a_{\norm{\cdot}} := \norm{\ebf_1 \ebf_1^\top} / c_{\norm{\cdot}}$,
	the vectors $\ebf_1, \ebf_2 \in \R^2$ form the standard basis, and $c_{\norm{\cdot}} \geq 1$ is a constant with which $\norm{\cdot}$ satisfies the weakened triangle inequality (see Footnote~\ref{note:qnorm}).
	On the other hand:
	\bea
	&\hspace{-13mm}  \erank ( \matrixend ( t ) ) \leq \inf_{\Wbf' \in \widetilde{S}} \erank(\Wbf') + \frac{16}{\min \{ \abs{z} , |z'| \} } \Big ( \abs{\epsilon} + \sqrt{2\mfendloss (t)} \Big )
	& , ~ t \geq 0
	\label{mf:eq:erank_ub_perturb}
	\text{\,,}
	\\[1mm]
	& \hspace{-13mm} \frodist ( \matrixend ( t ) , \M_{\irank ( \widetilde{\S} )} ) \leq 4 \abs{\epsilon} + \Big (4 + \frac{ \sqrt{ \abs{z} \cdot |z'| } }{\min \{\abs{z} , |z'| \}} \Big ) \sqrt{2 \mfendloss (t)}
	&  , ~ t \geq 0
	\label{mf:eq:irank_dist_ub_perturb}
	\text{\,,}
	\eea
	where $\erank ( \cdot )$ stands for effective rank (Definition~\ref{def:erank}), and $D ( \cdot \hspace{0.5mm} , \M_{\irank ( \widetilde{\S} )} )$ represents distance from the infimal rank (Definition~\ref{def:irank_dist}) of the solution set~\smash{$\widetilde{\S}$}.
	Moreover, Equations~\eqref{mf:eq:norms_lb_perturb}, \eqref{mf:eq:erank_ub_perturb} and~\eqref{mf:eq:irank_dist_ub_perturb} hold even if the above setting is further generalized as follows:
	\emph{(i)}~the unobserved entry resides in location $( i , j ) \in \{ 1, 2 \} \times \{ 1, 2 \}$, with $z , z' \in \R {\setminus} \{ 0 \}$ observed in the adjacent locations and $\epsilon \in \R$ in the diagonally-opposite one;
	and
	\emph{(ii)}~the sign of $\det ( \matrixend ( 0 ) )$ is equal to that of $z \cdot z'$ if $i = j$, and opposite to it otherwise.
\end{theorem}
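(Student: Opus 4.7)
The plan is to mirror the proof of \cref{mf:thm:norms_up_finite}, carefully adjusting the constants to accommodate arbitrary nonzero off-diagonal observations $z, z'$, an arbitrary diagonal perturbation $\epsilon$, and eventually an arbitrary placement of the unobserved entry. The first step is to invoke the singular-value dynamics of the end matrix from~\cite{arora2019implicit} (Lemma~\ref{mf:lem:prod_mat_sing_dyn}) to conclude that $\sign(\det(\matrixend(t)))$ is preserved throughout gradient flow. Combined with the hypothesis $\sign(\det(\matrixend(0))) = \sign(z \cdot z')$, this yields $\sign(\det(\matrixend(t))) = \sign(z \cdot z')$ for every $t \geq 0$, which is the sole invariant we will use from optimization.

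The technical heart of the argument is a lower bound on $\abs{(\matrixend)_{1,1}(t)}$. From the definition of $\mfendloss$, the bound $\abs{(\matrixend)_{i,j}(t) - y_{i,j}} \leq \sqrt{2 \mfendloss(t)}$ holds for each observed $(i,j)$. In the regime where $\sqrt{2\mfendloss(t)} < \min\{\abs{z}, \abs{z'}\}$, the reverse triangle inequality gives $\abs{(\matrixend)_{1,2}(t)}, \abs{(\matrixend)_{2,1}(t)} > 0$ and forces $(\matrixend)_{1,2}(t) \cdot (\matrixend)_{2,1}(t)$ to share the sign of $z \cdot z'$. Since $\det(\matrixend(t))$ has that same sign, the two terms in the identity
\[
(\matrixend)_{1,1}(t) \cdot (\matrixend)_{2,2}(t) \;=\; \det(\matrixend(t)) \,+\, (\matrixend)_{1,2}(t) \cdot (\matrixend)_{2,1}(t)
\]
do not cancel, so $\abs{(\matrixend)_{1,1}(t) \cdot (\matrixend)_{2,2}(t)} \geq (\abs{z} - \sqrt{2\mfendloss(t)})(\abs{z'} - \sqrt{2\mfendloss(t)})$. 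Dividing by $\abs{(\matrixend)_{2,2}(t)} \leq \abs{\epsilon} + \sqrt{2 \mfendloss(t)}$ yields a lower bound on $\abs{(\matrixend)_{1,1}(t)}$ of the desired form $\abs{z}\cdot\abs{z'}/(\abs{\epsilon} + \sqrt{2\mfendloss(t)})$, minus loss-dependent slack. The complementary regime $\sqrt{2\mfendloss(t)} \geq \min\{\abs{z}, \abs{z'}\}$ is handled by noting that the claimed inequality is trivial there provided $b_{\norm{\cdot}}$ is chosen large enough, which is exactly what the first branch of $b_{\norm{\cdot}}$ in the theorem statement encodes.

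With the scalar lower bound on $\abs{(\matrixend)_{1,1}(t)}$ in hand, the remaining steps proceed almost verbatim as in the unperturbed case. Equation~\eqref{mf:eq:norms_lb_perturb} follows by applying the weakened triangle inequality of $\norm{\cdot}$ (with constant $c_{\norm{\cdot}}$) to separate $(\matrixend)_{1,1}(t) \cdot \ebf_1 \ebf_1^\top$ from the other three entries, whose magnitudes are controlled by $\abs{z}, \abs{z'}, \abs{\epsilon}$ and $\sqrt{2\mfendloss(t)}$; the second branch of $b_{\norm{\cdot}}$ absorbs these bounded contributions. For Equations~\eqref{mf:eq:erank_ub_perturb} and~\eqref{mf:eq:irank_dist_ub_perturb}, I would redo the singular-value analysis behind \cref{mf:prop:sol_set_rank} for the perturbed solution set $\widetilde{\S}$, whose infimal rank is still $1$ (attained in the limit $\abs{(\Wbf)_{1,1}} \to \infty$): when $\abs{(\matrixend)_{1,1}(t)}$ is large and the other entries are close to $z, z', \epsilon$, the top singular value grows while the bottom singular value is of order $1/\abs{(\matrixend)_{1,1}(t)}$ plus loss-dependent error, which translates directly into the stated upper bounds on effective rank and Frobenius distance to $\M_{\irank(\widetilde{\S})}$. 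Finally, the extension to an arbitrary unobserved location $(i,j)$ is obtained by permuting the rows and columns of $\matrixend$ so as to move the unobserved entry to $(1,1)$: a single row or column swap flips the sign of $\det$, which accounts exactly for the stated rule that $\sign(\det(\matrixend(0)))$ equals $\sign(z \cdot z')$ when $i=j$ and its negation when $i \neq j$. I expect the main obstacle to be the bookkeeping in the borderline regime of the second paragraph, where the reverse-triangle loss slack and the sign-transition threshold must both be absorbed uniformly in $t$ into a single additive constant $b_{\norm{\cdot}}$ without degrading the leading $\abs{z}\cdot\abs{z'}/(\abs{\epsilon}+\sqrt{2\mfendloss(t)})$ scaling.
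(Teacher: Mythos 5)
Your proposal is correct and follows essentially the same route as the paper's proof: sign-preservation of $\det(\matrixend(t))$ via the singular-value dynamics, the determinant identity combined with the entry-wise loss bounds to lower bound $\abs{(\matrixend)_{1,1}(t)}$ by $\abs{z}\cdot\abs{z'}/(\abs{\epsilon}+\sqrt{2\mfendloss(t)})$ up to slack absorbed into $b_{\norm{\cdot}}$, the weakened triangle inequality for the norm bound, and a redone singular-value analysis of $\widetilde{\S}$ for the effective-rank and infimal-rank bounds. Your permutation argument for the repositioned-entry case is a slightly cleaner packaging of what the paper handles by repeating the lemmas with the modified sign assumption, and both yield the same conclusion.
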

\begin{proof}[Proof sketch (proof in~\cref{mf:app:proofs:norms_up_finite_perturb})]
	The proof follows a line similar to that of Theorem~\ref{mf:thm:norms_up_finite}, with slightly more involved derivations.
\end{proof}

\textbf{Disqualifying implicit minimization of norms in finite settings.}~~Theorem~\ref{mf:thm:norms_up_finite} (in \cref{mf:sec:analysis:norms_up}), which applies to our original (unperturbed) matrix completion problem (\cref{mf:sec:analysis:setting}), has shown that the implicit regularization of deep matrix factorization does not minimize norms or quasi-norms, by establishing lower bounds (Equation~\eqref{mf:eq:norms_lb} with~$\norm{\cdot}$ ranging over all possible norms and quasi-norms) that tend to infinity as the training loss~$\mfendloss ( t )$ converges to zero.
The more general Theorem~\ref{mf:thm:norms_up_finite_perturb} allows disqualifying implicit minimization of norms or quasi-norms without requiring divergence to infinity.
To see this, consider the lower bounds it establishes (Equation~\eqref{mf:eq:norms_lb_perturb} with~$\norm{\cdot}$ ranging over all norms and quasi-norms), and in particular their limits as $\mfendloss ( t ) \to 0$. 
If the observed value~$\epsilon$ is different from zero, these limits are all finite.
Moreover, given a particular norm or quasi-norm~$\norm{\cdot}$, we may choose~$\epsilon$ different from zero, yet small enough such that the lower bound for~$\norm{\cdot}$ has limit arbitrarily larger than the infimum of~$\norm{\cdot}$ over the solution set.

\section{Experiments}
\label{mf:sec:experiments}

This section presents our empirical evaluations.
We begin in Section~\ref{mf:sec:experiments:analyzed} with deep matrix factorization (Section~\ref{mf:sec:model}) applied to the settings we analyzed (Section~\ref{mf:sec:analysis}).
Then, we turn to Section~\ref{mf:sec:experiments:tensor} and experiment with an extension to tensor (multi-dimensional array) factorization.
For brevity, many details behind our implementation, as well as some experiments, are deferred to Appendix~\ref{mf:app:experiments}.

\subsection{Analyzed Settings} \label{mf:sec:experiments:analyzed}

\begin{figure}[t]
	\begin{center}
		\hspace*{-2mm}
		\subfloat{
			\includegraphics[width=0.31\textwidth]{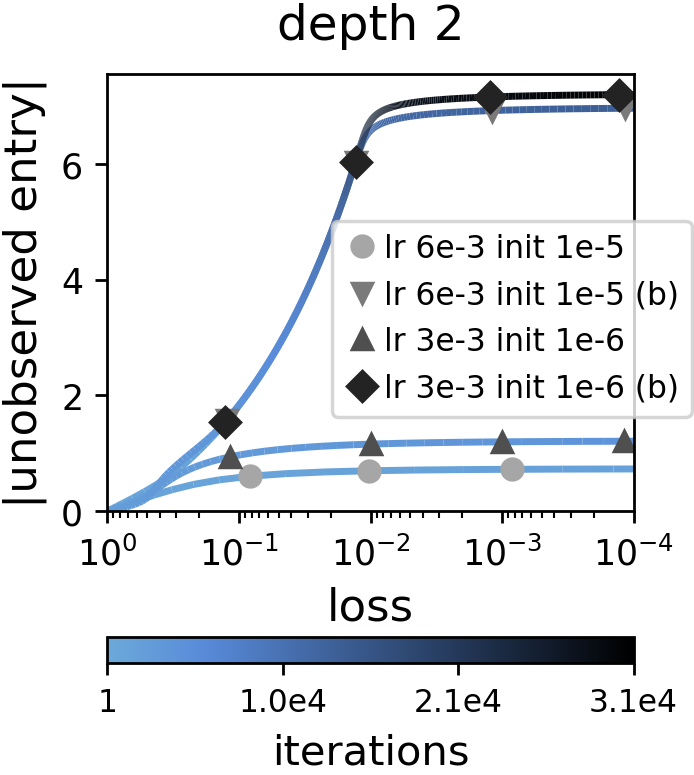}
		}
		~
		\subfloat{
			\includegraphics[width=0.31\textwidth]{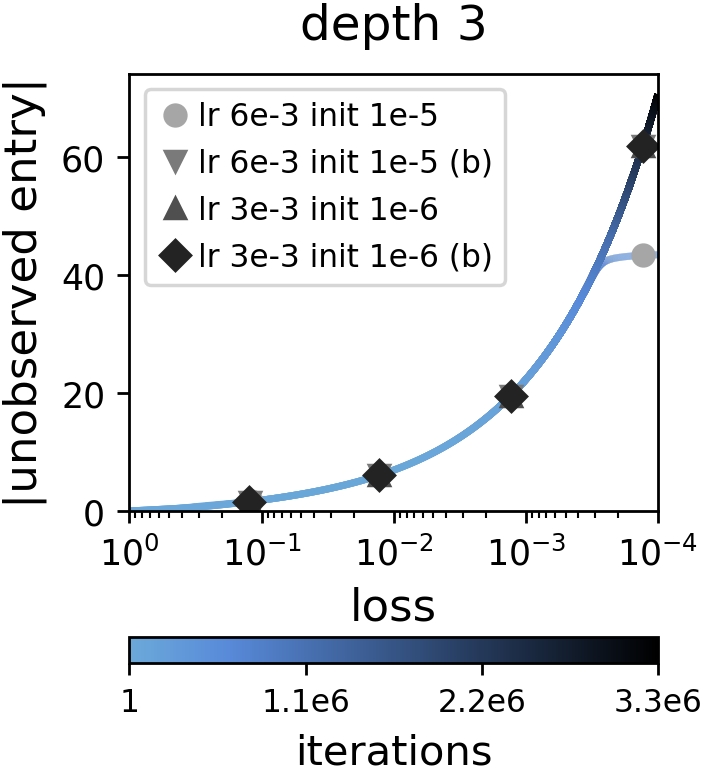}
		}
		~
		\subfloat{
			\includegraphics[width=0.31\textwidth]{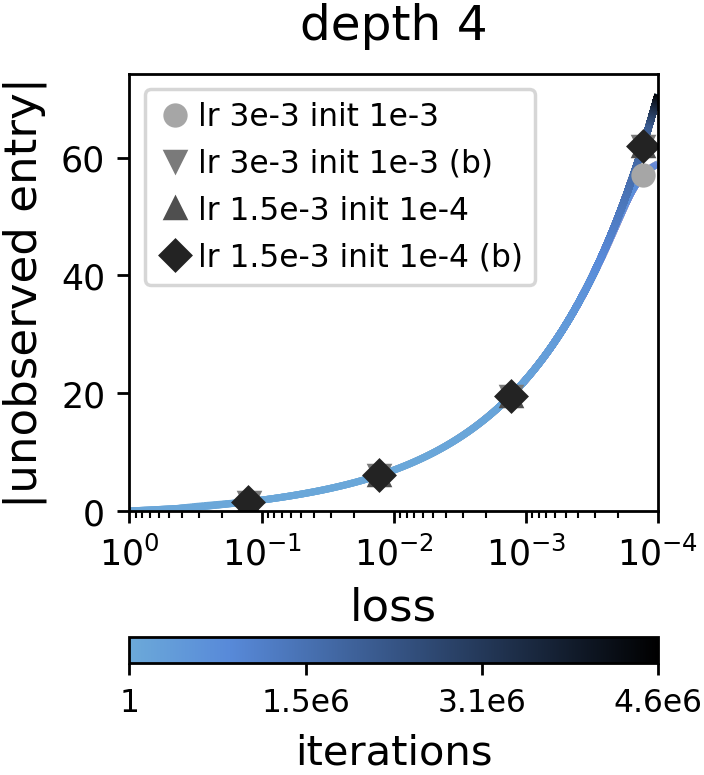}
		}
	\end{center}
	\vspace{-2mm}
	\caption{
		Implicit regularization in matrix factorization can drive \emph{all} norms (and quasi-norms) \emph{towards infinity}.
		For the matrix completion problem defined in Section~\ref{mf:sec:analysis:setting}, our analysis (Section~\ref{mf:sec:analysis:norms_up}) implies that with small learning rate and initialization close to the origin, when the product matrix (Equation~\eqref{mf:eq:prod_mat}) is initialized to have positive determinant, gradient descent on a matrix factorization leads absolute value of unobserved entry to increase (which in turn means norms and quasi-norms increase) as loss decreases, \ie~as observations are fit.
		This is demonstrated in the plots above, which for representative runs, show absolute value of unobserved entry as a function of the loss (Equation~\ref{mf:eq:loss}), with iteration number encoded by color.
		Each plot corresponds to a different depth for the matrix factorization, and presents runs with varying configurations of learning rate and initialization (abbreviated as ``lr'' and ``init'', respectively).
		Both balanced (Equation~\ref{mf:eq:balance}) and unbalanced (layer-wise independent) random initializations were evaluated (former is marked by~``(b)'').
		Independently for each depth, runs were iteratively carried out, with both learning rate and standard deviation for initialization decreased after each run, until the point where further reduction did not yield a noticeable change (presented runs are those from the last iterations of this process).
		Notice that depth, balancedness, and small learning rate and initialization, all contribute to the examined effect (absolute value of unobserved entry increasing as loss decreases), with the transition from depth~$2$ to $3$~or~more being most significant.
		Notice also that all runs initially follow the same curve, differing from one another in the point at which they divert (enter a phase where examined effect is lesser).
		A complete investigation of these phenomena is left for future work.
		For further implementation details, and similar experiments with different matrix dimensions, as well as perturbed and repositioned observations, see Appendix~\ref{mf:app:experiments}.
	}
	\label{mf:fig:experiment_dmf}
\end{figure}

In~\cite{gunasekar2017implicit}, Gunasekar~\etal~experimented with matrix factorization, arriving at Conjecture~\ref{conj:nuclear_norm}.
In the following work~\cite{arora2019implicit}, Arora~\etal~empirically evaluated additional settings, ultimately arguing against Conjecture~\ref{conj:nuclear_norm}, and raising Conjecture~\ref{conj:no_norm}.
Our analysis (Section~\ref{mf:sec:analysis}) affirmed Conjecture~\ref{conj:no_norm}, by providing a setting in which gradient descent (with infinitesimally small learning rate and initialization arbitrarily close to the origin) over (shallow or deep) matrix factorization provably drives \emph{all} norms (and quasi-norms) \emph{towards infinity}.
Specifically, we established that running gradient descent on the overparameterized matrix completion objective in Equation~\eqref{mf:eq:oprm_obj}, where the observed entries are those defined in Equation~\eqref{mf:eq:obs}, leads the unobserved entry to diverge to infinity as loss converges to zero.
Figure~\ref{mf:fig:experiment_dmf} demonstrates this phenomenon empirically.
Figures~\ref{mf:fig:experiment_dmf_diff_dimensions} and~\ref{mf:fig:experiment_dmf_perturb} in Appendix~\ref{mf:app:experiments:further} extend the experiment by considering, respectively:
different matrix dimensions (see Appendix~\ref{mf:app:dimensions});
and
perturbations and repositionings applied to observations (\cf~Section~\ref{mf:sec:analysis:robust}).
The figures confirm that the inability of norms (and quasi-norms) to explain implicit regularization in matrix factorization translates from theory to practice.

\subsection{From Matrix to Tensor Factorization} \label{mf:sec:experiments:tensor}

At the heart of our analysis (Section~\ref{mf:sec:analysis}) lies a matrix completion problem whose solution set (Equation~\eqref{mf:eq:sol_set}) entails a direct contradiction between minimizing norms (or quasi-norms) and minimizing rank.
We have shown that on this problem, gradient descent over (shallow or deep) matrix factorization is willing to completely give up on the former in favor of the latter.
This suggests that, rather than viewing implicit regularization in matrix factorization through the lens of norms (or quasi-norms), a potentially more useful interpretation is \emph{minimization of rank}.
Indeed, while global minimization of rank is in the worst case computationally hard (\cf~\cite{recht2011null}), it has been shown in~\cite{arora2019implicit} (theoretically as well as empirically) that the dynamics of gradient descent over matrix factorization promote sparsity of singular values, and thus they may be interpreted as searching for low rank locally.
As a step towards assessing the generality of this interpretation, we empirically explore an extension of matrix factorization to \emph{tensor factorization}.\footnote{
	There exist many types of tensor factorizations (\cf~\cite{kolda2009tensor,hackbusch2012tensor}).
	We treat here the classic and most basic one, known as \emph{CANDECOMP/PARAFAC (CP)}.
	\label{note:cp_decomp}
}

In the context of matrix completion, (depth~$2$) matrix factorization amounts to optimizing the loss in Equation~\eqref{mf:eq:loss} by applying gradient descent to the parameterization $\Wbf = \sum_{r = 1}^R \wbf_r \otimes \wbf'_r$, where $R \in \N$ is a predetermined constant, $\otimes$ stands for tensor product (\ie~outer product), and $\{ \wbf_r \in \R^D \}_{r = 1}^R , \{ \wbf'_r \in \R^{D'} \}_{r = 1}^R$ are the optimized parameters.\footnote{
	To see that this parameterization is equivalent to the usual form $\Wbf = \Wbf_2 \Wbf_1$, simply view $R$ as the dimension shared between $\Wbf_1$ and~$\Wbf_2$, $\{ \wbf_r \}_{r = 1}^R$ as the columns of~$\Wbf_2$, and $\{ \wbf'_r \}_{r = 1}^R$ as the rows of~$\Wbf_1$.
}
The minimal~$R$ required for this parameterization to be able to express a given~$\widebar{\Wbf} \in \R^{D \times D'}$ is precisely the latter's rank.
Implicit regularization towards low rank means that even when $R$ is large enough for expressing any matrix (\ie~$R \geq \min \{ D , D' \}$), solutions expressible (or approximable) with small~$R$ tend to be learned.

\begin{figure}[t]
	\begin{center}
		\hspace*{-3.5mm}
		\subfloat{
			\includegraphics[width=0.48\textwidth]{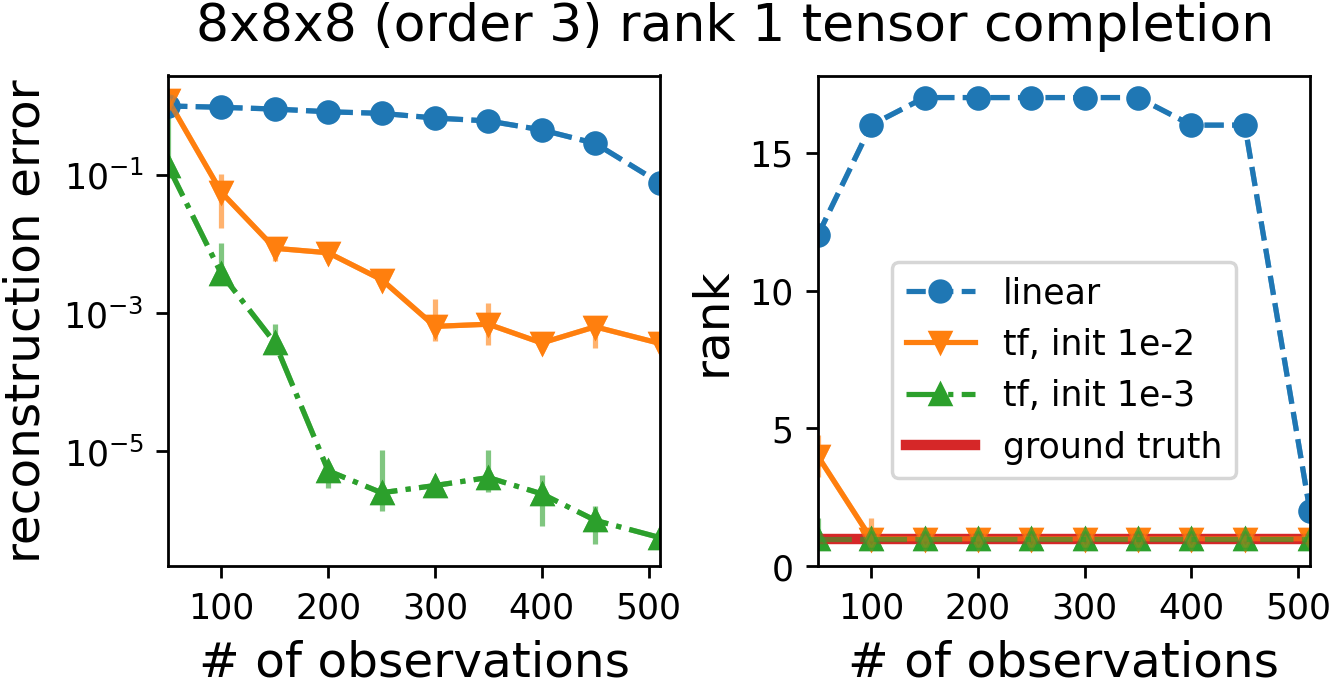}
		} ~~
		\subfloat{
			\includegraphics[width=0.48\textwidth]{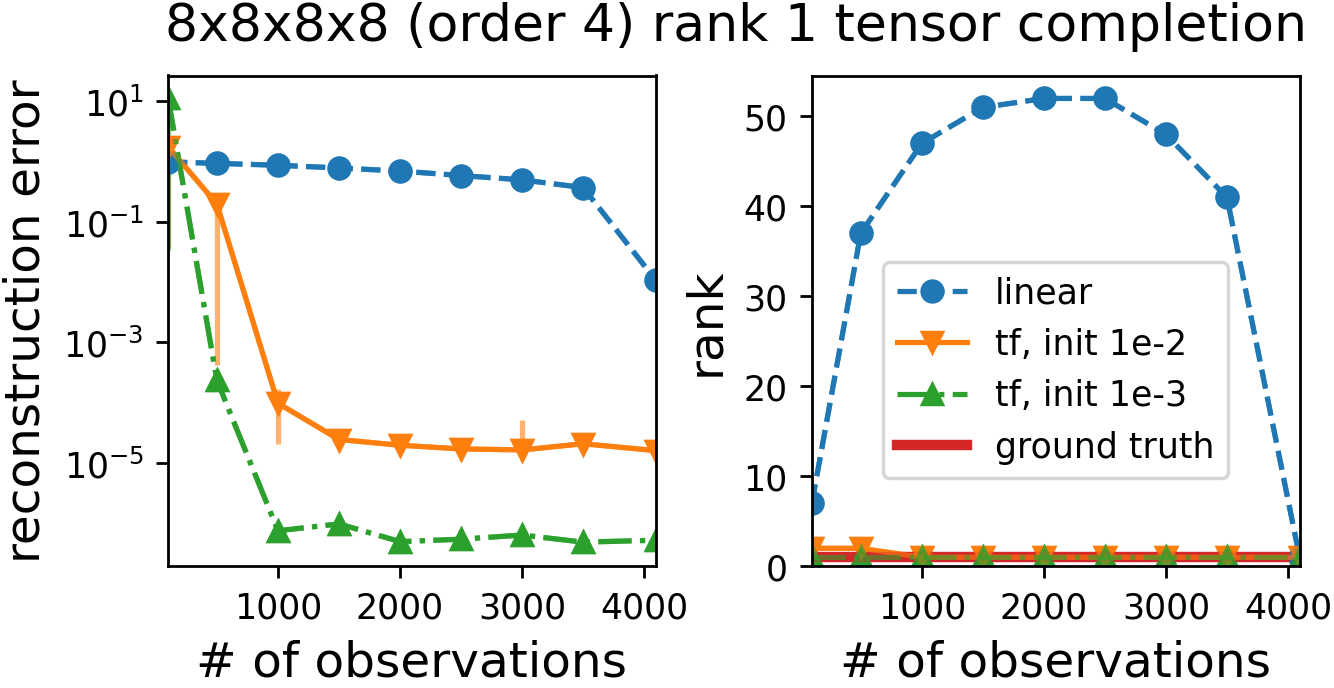}
		}
	\end{center}
	\caption{
		Gradient descent over tensor factorization exhibits an implicit regularization towards low tensor rank.
		Plots above report results of tensor completion experiments, comparing:
		\emph{(i)}~minimization of loss (Equation~\eqref{mf:eq:loss_tensor}) via gradient descent over tensor factorization (Equation~\eqref{mf:eq:tf} with~$R$ large enough for expressing any tensor) starting from (small) random initialization (method is abbreviated as~``tf'');
		against
		\emph{(ii)}~trivial baseline that matches observations while holding zeros in unobserved locations~---~equivalent to minimizing loss via gradient descent over linear parameterization (\ie~directly over~$\W$) starting from zero initialization (hence this method is referred to as ``linear'').
		Each pair of plots corresponds to a randomly drawn low-rank ground truth tensor, from which multiple sets of observations varying in size were randomly chosen.
		The ground truth tensors corresponding to left and right pairs both have rank~$1$ (for results obtained with additional ground truth ranks see Figure~\ref{mf:fig:experiment_tf_r3} in Appendix~\ref{mf:app:experiments:further}), with sizes $8$-by-$8$-by-$8$ (order~$3$) and $8$-by-$8$-by-$8$-by-$8$ (order~$4$) respectively.
		The plots in each pair show reconstruction errors (Frobenius distance from ground truth) and ranks (numerically estimated) of final solutions as a function of the number of observations in the task, with error bars spanning interquartile range ($25$'th to $75$'th percentiles) over multiple trials (differing in random seed for initialization), and markers showing median.
		For gradient descent over tensor factorization, we employed an adaptive learning rate scheme to reduce run times (see Appendix~\ref{mf:app:experiments:details} for details), and iteratively ran with decreasing standard deviation for initialization, until the point at which further reduction did not yield a noticeable change (presented results are those from the last iterations of this process, with the corresponding standard deviations annotated by ``init'').
		Notice that gradient descent over tensor factorization indeed exhibits an implicit tendency towards low rank (leading to accurate reconstruction of low-rank ground truth tensors), and that this tendency is stronger with smaller initialization.
		For further details and experiments see Appendix~\ref{mf:app:experiments}.
	}
	\label{mf:fig:experiment_tf}
\end{figure}

A generalization of the above is obtained by switching from matrices (tensors of \emph{order}~$2$) to tensors of arbitrary order~$N \in \N$.
This gives rise to a \emph{tensor completion} problem, with corresponding loss:
\be
\tfendloss : \R^{D_1 \times \cdots \times D_N} \to \R_{\geq 0}
\quad , \quad
\tfendloss ( \W ) = \frac{1}{2} \sum\nolimits_{( i_1 , \ldots , i_N ) \in \Omega} \big( ( \W )_{i_1 , \ldots , i_N} - y_{i_1  , \ldots , i_N} \big)^2
\label{mf:eq:loss_tensor}
\text{\,,}
\ee
where $\{ y_{i_1 , \ldots , i_N} \in \R \}_{( i_1 , \ldots , i_N ) \in \Omega}$, $\Omega \subset \{ 1 , \ldots , D_1 \} \times \cdots \times \{ 1 , 2 , \ldots , D_N \}$, stands for the set of observed entries.
One may employ a tensor factorization by minimizing the loss in Equation~\eqref{mf:eq:loss_tensor} via gradient descent over the parameterization:
\be
\W = \sum\nolimits_{r = 1}^R \wbf^{ 1 }_r  \otimes \cdots \otimes \wbf^{ N }_r
\quad , ~
\wbf^{ n }_r \in \R^{D_n}
~ , ~
r = 1 , \ldots , R
~ , ~
n = 1 , \ldots , N
\label{mf:eq:tf}
\text{\,,}
\ee
where again, $R \in \N$ is a predetermined constant, $\otimes$~stands for tensor product (\ie~outer product), and $\{ \wbf^{ n }_r \}_{r = 1}^{R} \hspace{0mm}_{n = 1}^{N}$ are the optimized parameters.
In analogy with the matrix case, the minimal~$R$ required for this parameterization to be able to express a given~$\widebar{\W} \in \R^{D_1 \times \cdots \times D_N}$ is defined to be the latter's \emph{tensor rank}.\footnote{
When referring to tensor rank, we mean the classic \emph{CP-rank} (see~\cite{kolda2009tensor}).
}
An implicit regularization towards low rank here would mean that even when $R$ is large enough for expressing any tensor, solutions expressible (or approximable) with small~$R$ tend to be learned. 

Figure~\ref{mf:fig:experiment_tf} displays results of tensor completion experiments, in which tensor factorization (optimization of loss in Equation~\eqref{mf:eq:loss_tensor} via gradient descent over parameterization in Equation~\eqref{mf:eq:tf}) is applied to observations drawn from a low-rank ground truth tensor.
As can be seen in terms of both reconstruction error (distance from ground truth tensor) and tensor rank of the produced solutions, tensor factorizations indeed exhibit an implicit regularization towards low rank.
The phenomenon thus goes beyond the special case of matrix (order~$2$ tensor) factorization.
In \cref{chap:imp_reg_tf} we will theoretically support this finding.

\medskip

\begin{figure}[t]
	\begin{center}
		\includegraphics[width=0.9\textwidth]{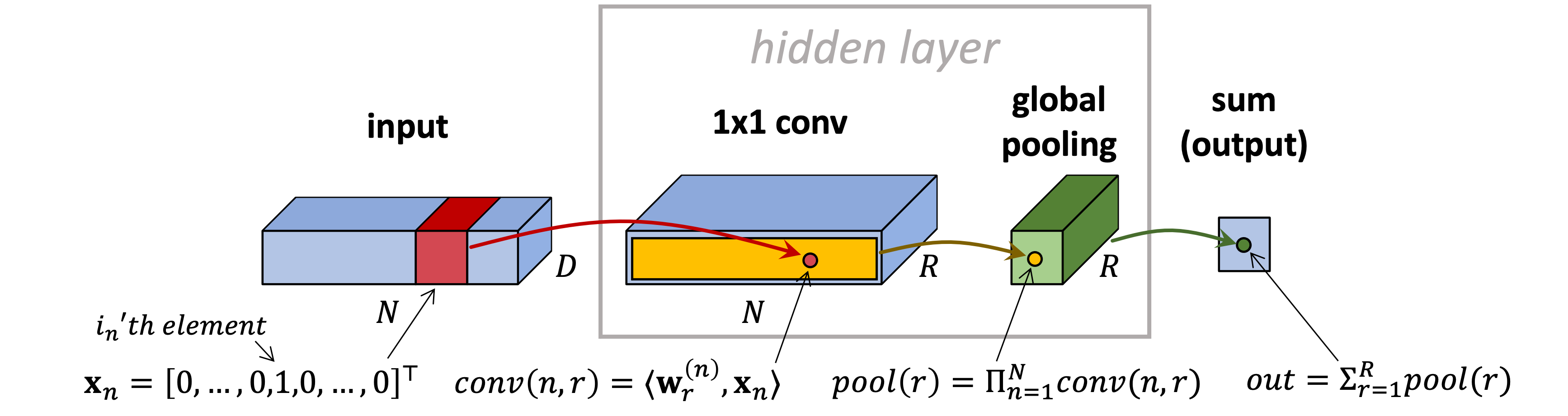}
	\end{center}
	\vspace{-2mm}
	\caption{
		Tensor factorization corresponds to a \emph{non-linear} convolutional neural network (with polynomial non-linearity), analogously to how matrix factorization corresponds to a linear neural network.
		The input to the network is a tuple $( i_1, \ldots , i_N ) \in \{ 1 , \ldots , D_1 \} \times \cdots \times \{ 1 , \ldots , D_N \}$, represented via one-hot vectors $( \xbf_1, \ldots , \xbf_N ) \in \R^{D_1} \times \cdots \times \R^{D_N}$ (illustration assumes $D_1 = \cdots = D_N = D$ to avoid clutter).
		These vectors are processed by a hidden layer comprising:
		\emph{(i)}~locally connected linear operator with $R$~channels, the $r$'th one computing inner products against filters $( \wbf^{1}_r , \ldots , \wbf^{N}_r ) \in \R^{D_1} \times \cdots \times \R^{D_N}$ (this operator is referred to as ``$1 {\times} 1$~conv'', appealing to the case of weight sharing, \ie~$\wbf^{1}_r  = \cdots = \wbf^{N}_r$);
		followed by
		\emph{(ii)}~global pooling computing products of all activations in each channel (which induces polynomial non-linearity).
		The result of the hidden layer is then reduced through summation to a scalar~---~output of the network.
		Overall, given input tuple $( i_1 , \ldots , i_N )$, the network outputs $( \W )_{i_1 , \ldots , i_N}$, where $\W \in \R^{D_1 \times \cdots \times D_N}$ is given by the tensor factorization in Equation~\eqref{mf:eq:tf}.
		Notice that the number of terms~($R$) and the tunable parameters~($\{ \wbf^{n}_r \}_{r , n}$) in the factorization respectively correspond to the width and the learnable filters of the network.
		Our tensor factorization (Equation~\eqref{mf:eq:tf}) was derived as an extension of a shallow (depth~$2$) matrix factorization, and accordingly, the convolutional neural network it corresponds to is shallow (has a single hidden layer).
		Endowing the factorization with hierarchical structures would render it equivalent to a \emph{deep} convolutional neural network (see~\cite{cohen2016expressive} for details).
		We will investigate the implicit regularization of these models in~\cref{chap:imp_reg_htf}.
	}
	\label{mf:fig:convac}
\end{figure}

As discussed in Section~\ref{mf:sec:overview}, matrix completion can be seen as a prediction problem, and matrix factorization as its solution with a \emph{linear neural network}.
In a similar vein, tensor completion may be viewed as a prediction problem, and tensor factorization as its solution with a certain (depth~$2$) \emph{non-linear} convolutional neural network~---~see Figure~\ref{mf:fig:convac}.
The non-linearity of this neural network is polynomial and stems from product pooling layers.
Analogously to how the input-output mapping of a linear neural network is naturally represented by a matrix, that of the neural network equivalent to tensor factorization admits a natural representation as a tensor.
Our experiments (Figure~\ref{mf:fig:experiment_tf} and Figure~\ref{mf:fig:experiment_tf_r3} in Appendix~\ref{mf:app:experiments:further}) show that when learned via gradient descent, this tensor tends to have low tensor rank.
We thus obtain a second exemplar of a neural network architecture whose implicit regularization strives to lower a notion of rank for its input-output mapping.
This indicates that the phenomenon may be general, and formalizing notions of rank for input-output mappings of contemporary models may be key to explaining generalization in deep learning.
In \cref{chap:imp_reg_tf,chap:imp_reg_htf} we theoretically support this hypothesis by employing the equivalence between tensor factorizations and certain neural networks.

\chapter{Implicit Regularization in Tensor Factorization} 
\label{chap:imp_reg_tf}

The contents of this chapter are based on~\cite{razin2021implicit}.

\section{Background and Overview}
\label{tf:sec:overview}

\cref{chap:imp_reg_not_norms} considered the implicit regularization in matrix factorization, under the context of matrix completion problems.
Recall that in matrix completion, we are given a randomly chosen subset of entries from an unknown matrix~$\Wbf^* \in \R^{D \times D'}$, and our goal is to recover unseen entries.
This can be viewed as a prediction problem, where the set of possible inputs is $\X = \{ 1 , ...\, , D \} {\times} \{ 1 , ...\, , D' \}$, the possible labels are $\Y = \R$, and the label of $( i , j ) \in \X$ is~$( \Wbf^* )_{i , j}$.
Under this viewpoint, observed entries constitute the training set, and the average reconstruction error over unobserved entries is the test error, quantifying generalization.
A predictor, \ie~a function from $\X$ to~$\Y$, can then be seen as a matrix.

Although it was initially conjectured that the implicit regularization in matrix factorization minimizes some norm \cite{gunasekar2017implicit}, as we showed in \cref{chap:imp_reg_not_norms}, there exist cases in which no norm is being minimized.
Specifically, there exist matrix completion problems in which fitting the observed entries leads \emph{all norms to grow towards infinity} in favor of \emph{minimizing rank}.
Recent studies \cite{arora2019implicit,li2021towards} further suggest that gradient descent with small learning rate and near-zero initialization induces an incremental rank learning process, which results in low rank solutions.

A central question that arises is to what extent is the study of implicit regularization in matrix factorization relevant to more practical settings.
The experiments in~\cref{mf:sec:experiments:tensor} have shown that the tendency towards low rank extends from matrices (two-dimensional arrays) to \emph{tensors} (multi-dimensional arrays).
Namely, in the task of $N$-dimensional \emph{tensor completion}, which (analogously to matrix completion) can be viewed as a prediction problem over $N$~input variables, training a \emph{tensor factorization}\footnote{
	Recall that by ``tensor factorization'' we refer throughout to the classic CP factorization~\cite{kolda2009tensor}.
}
via gradient descent with small learning rate and near-zero initialization tends to produce tensors (predictors) with low \emph{tensor rank}.
Analogously to how matrix factorization may be viewed as a linear neural network, tensor factorization can be seen as a certain \emph{non-linear} convolutional neural network (two-layer network with polynomial non-linearity, \cf~\cite{cohen2016expressive}), and so it represents a setting much closer to practical deep learning.

In this chapter we theoretically analyze the implicit regularization in tensor factorization.
We circumvent the notorious difficulty of tensor problems \cite{hillar2013most} by adopting a dynamical systems perspective.
Characterizing the evolution that gradient descent with small learning rate and near-zero initialization induces on the components of a factorization, we show that their norms are subject to a momentum-like effect, in the sense that they move slower when small and faster when large.
This implies a form of greedy low tensor rank search, generalizing phenomena known for the case of matrices.
We employ the finding to prove that, with the classic Huber loss from robust statistics~\cite{huber1964robust}, arbitrarily small initialization leads tensor factorization to follow a trajectory of rank one tensors for an arbitrary amount of time or distance.
Experiments validate our analysis, demonstrating implicit regularization towards low tensor rank in a wide array of configurations.

Recall that, as discussed in~\cref{part:intro}, a major challenge towards understanding generalization in deep learning is that we lack definitions for predictor complexity that are both implicitly minimized during training of neural networks and capture the essence of natural data, in the sense of it being fittable with low complexity.
Motivated by the fact that tensor rank captures the implicit regularization of a non-linear neural network, we empirically explore its potential to serve as a measure of complexity for multivariable predictors.
We find that it is possible to fit standard image recognition datasets~---~MNIST~\cite{lecun1998mnist} and Fashion-MNIST~\cite{xiao2017fashion}~---~with predictors of extremely low tensor rank, far beneath what is required for fitting random data.
This leads us to believe that tensor rank (or more advanced notions such as hierarchical tensor ranks) may pave way to explaining both implicit regularization of contemporary deep neural networks, and the properties of natural data translating this implicit regularization to generalization.

\medskip

The remainder of the chapter is organized as follows.
Section~\ref{tf:sec:tf} presents the tensor factorization model, as well as its interpretation as a neural network.
Section~\ref{tf:sec:dynamic} characterizes its dynamics, followed by Section~\ref{tf:sec:rank} which employs the characterization to establish (under certain conditions) implicit tensor rank minimization.
Lastly, experiments demonstrating both the dynamics of learning and the ability of tensor rank to capture the essence of standard datasets are given in Section~\ref{tf:sec:experiments}.
Extension of our results to tensor sensing (more general setting than tensor completion) is discussed in Appendix~\ref{tf:app:sensing}.

\section{Tensor Factorization} \label{tf:sec:tf}

Consider the task of completing an $N$-dimensional tensor ($N \geq 3$) with axis lengths $D_1, \ldots, D_N \in \N$, or, in standard tensor analysis terminology, an \emph{order}~$N$ tensor with \emph{modes} of \emph{dimensions} $D_1, \ldots, D_N$.
Given a set of observations $\{ y_{i_1, \ldots, i_N} \in \R \}_{(i_1, \ldots, i_N) \in \Omega }$, where $\Omega$ is a subset of all possible index tuples, a standard (undetermined) loss function for the task is:
\be
\tfendloss : \R^{D_1 \times \cdots \times D_N} \to \R_{\geq 0} ~~~ , ~~~ \tfendloss ( \W ) \,{=}\, \frac{1}{\abs{\Omega}} \hspace{-0.5mm} \sum\nolimits_{(i_1, \ldots, i_N) \in \Omega} \hspace{-0.5mm} \ell \left ( ( \W )_{i_1, \ldots, i_N} \,{-}\, y_{i_1, \ldots, i_N} \right )
\text{\,,}
\label{tf:eq:tc_loss}
\ee
where $\ell : \R \to \R_{\geq 0}$ is differentiable and locally smooth.
A typical choice for~$\ell ( \cdot )$ is $\ell ( z ) = \frac{1}{2} z^2$, corresponding to $\ell_2$~loss.
Other options are also common, for example the Huber loss from robust statistics~\cite{huber1964robust}~---~a differentiable surrogate for $\ell_1$~loss.

Performing tensor completion with an $R$-component tensor factorization amounts to optimizing the following (non-convex) objective:
\be
\tfobj \brk1{ \{ \wbf_r^n \}_{r = 1}^R\hspace{0mm}_{n = 1}^N } := \tfendloss \left ( \tftensorend \right )
\text{\,,}
\label{tf:eq:cp_objective}
\ee
defined over \emph{weight vectors}  $\{ \wbf_r^n \in \R^{D_n} \}_{r = 1}^R\hspace{0mm}_{n = 1}^N$, where:
\be
\tftensorend := \sum\nolimits_{r = 1}^R \wbf_r^1 \tenp \cdots \tenp \wbf_r^N
\label{tf:eq:end_tensor}
\ee
is referred to as the \emph{end tensor} of the factorization, with $\tenp$ representing tensor product (\ie~outer product).
The minimal number of components $R$ required in order for $\tftensorend$ to be able to express a given tensor $\W \in \R^{D_1 \times \cdots \times D_N}$, is defined to be the \emph{tensor rank} of~$\W$.
One may explicitly restrict the tensor rank of solutions produced by the tensor factorization via limiting~$R$.
However, since our interest lies in the implicit regularization induced by gradient descent, \ie~in the type of end tensors (Equation~\eqref{tf:eq:end_tensor}) it will find when applied to the objective $\tfobj (\cdot)$ (Equation~\eqref{tf:eq:cp_objective}) with no explicit constraints, we treat the case where $R$ can be arbitrarily large.

In line with analyses of matrix factorization (\eg~\cite{gunasekar2017implicit,arora2018optimization,arora2019implicit,eftekhari2020implicit,li2021towards}), we model small learning rate for gradient descent through the infinitesimal limit, \ie~through \emph{gradient flow}:
\be
\frac{d}{dt} \wbf_r^{n} (t) := - \frac{\partial}{\partial \wbf_r^{n}} \tfobj  \brk1{ \{ \wbf_{r'}^{n'} (t) \}_{r' = 1}^R\hspace{0mm}_{n' = 1}^N } ~~ , ~ t \geq 0 ~ , ~ r = 1, \ldots, R ~ , ~ n = 1, \ldots, N 
\label{tf:eq:cp_gf}
\text{\,,}
\ee
where $\{ \wbf_r^n (t) \}_{r = 1}^R\hspace{0mm}_{n = 1}^N$ denote the weight vectors at time~$t$ of optimization.

Our aim is to theoretically investigate the prospect of implicit regularization towards low tensor rank, \ie~of gradient flow with near-zero initialization learning a solution that can be represented with a small number of components.

\subsection{Interpretation as Neural Network} \label{tf:sec:tf:nn}

\begin{figure*}[t]
	\begin{center}
		\includegraphics[width=0.67\textwidth]{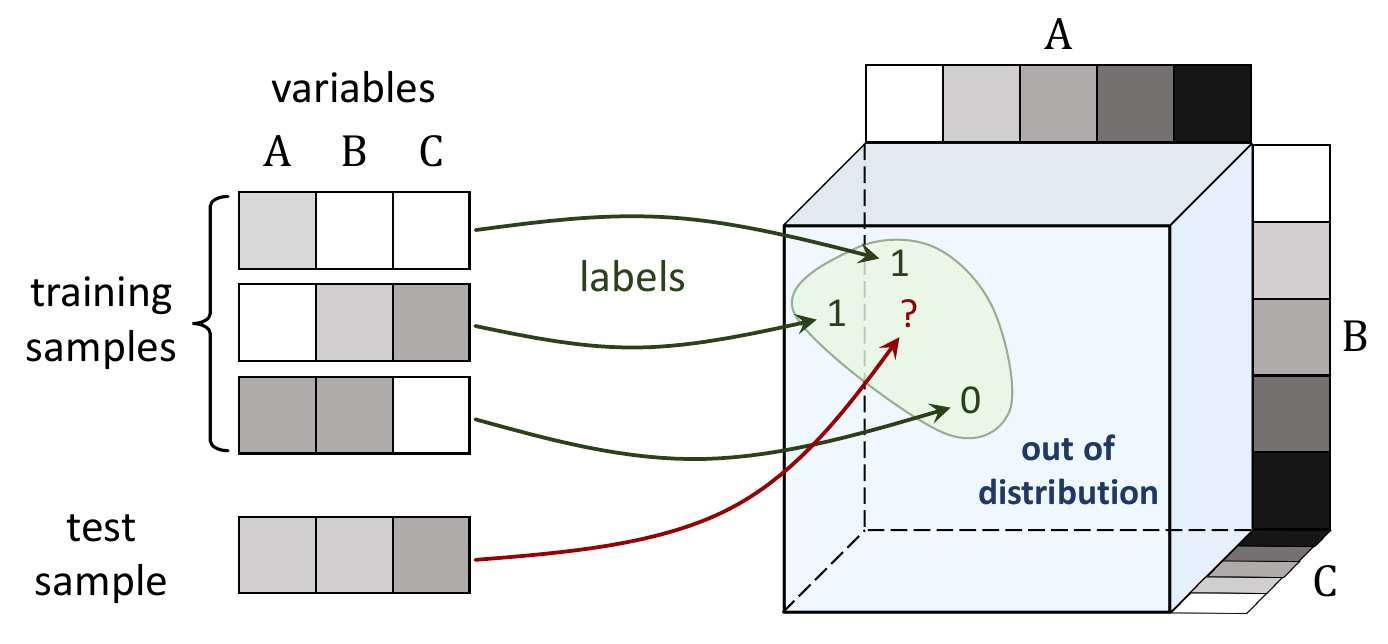}
	\end{center}
	\vspace{-1mm}
	\caption{
		Prediction tasks over discrete variables can be viewed as tensor completion problems.		
		Consider the task of learning a predictor from domain $\X = \{ 1 , \ldots , D_1 \} \times \cdots \times \{ 1 , \ldots , D_N \}$ to range $\Y = \R$ (figure assumes $N = 3$ and $D_1 = \cdots = D_N = 5$ for the sake of illustration).
		Each input sample is associated with a location in an order~$N$ tensor with mode (axis) dimensions $D_1, \ldots, D_N$, where the value of a variable (depicted as a shade of gray) determines the index of the corresponding mode (marked by ``A", ``B" or ``C").
		The associated location stores the label of the sample.
		Under this viewpoint, training samples are observed entries, drawn according to an unknown distribution from a ground truth tensor.
		Learning a predictor amounts to completing the unobserved entries, with test error measured by (weighted) average reconstruction error.
		In many standard prediction tasks (\eg~image recognition), only a small subset of the input domain has non-negligible probability.
		From the tensor completion perspective this means that observed entries reside in a restricted part of the tensor, and reconstruction error is weighted accordingly (entries outside the support of the distribution are neglected).
	}
	\label{tf:fig:pred_prob_as_tc}
\end{figure*}

Tensor completion can be viewed as a prediction problem, where each mode corresponds to a discrete input variable.
For an unknown tensor $\W^* \in \R^{D_1 \times \cdots \times D_N}$, inputs are index tuples of the form $(i_1, \ldots, i_N)$, and the label associated with such an input is $( \W^*)_{i_1, \ldots, i_N}$.
Under this perspective, the training set consists of the observed entries, and the average reconstruction error over unseen entries measures test error.
The standard case, in which observations are drawn uniformly across the tensor and reconstruction error weighs all entries equally, corresponds to a data distribution that is uniform, but other distributions are also viable.

Consider for example the task of predicting a continuous label for a $100$-by-$100$ binary image.
This can be formulated as an order $10000$ tensor completion problem, where all modes are of dimension $2$.
Each input image corresponds to a location (entry) in the tensor~$\W^*$, holding its continuous label.
As image pixels are (typically) not distributed independently and uniformly, locations in the tensor are not drawn uniformly when observations are generated, and are not weighted equally when reconstruction error is computed.
See Figure~\ref{tf:fig:pred_prob_as_tc} for further illustration of how a general prediction task (with discrete inputs and scalar output) can be formulated as a tensor completion problem.

As discussed in \cref{mf:sec:experiments:tensor} of \cref{chap:imp_reg_not_norms}, under the above formulation, tensor factorization can be viewed as a two-layer convolutional neural network with polynomial non-linearity, where the non-linearity stems from product pooling layers.
Given an input, \ie~a location in the tensor, the network produces an output equal to the value that the factorization holds at the given location.
This equivalence between tensor factorization and a non-linear convolutional neural network was illustrated in Figure~\ref{mf:fig:convac} of \cref{chap:imp_reg_not_norms}.
A major drawback of matrix factorization as a theoretical surrogate for modern deep learning is that it misses the critical aspect of non-linearity.
Tensor factorization goes beyond the realm of linear predictors~---~a significant step towards practical neural networks.

\section{Dynamical Characterization} \label{tf:sec:dynamic}

In this section we derive a dynamical characterization for the norms of individual components in the tensor factorization.
The characterization implies that with small learning rate and near-zero initialization, components tend to be learned incrementally, giving rise to a bias towards low tensor rank solutions.
This finding is used in Section~\ref{tf:sec:rank} to prove (under certain conditions) implicit tensor rank minimization, and is demonstrated empirically in Section~\ref{tf:sec:experiments}.\footnote{
	We note that all results in this section apply even if the tensor completion loss~$\tfendloss (\cdot)$ (Equation~\eqref{tf:eq:tc_loss}) is replaced by any differentiable and locally smooth function.
	The proofs in Appendix~\ref{tf:app:proofs} already account for this more general setting.
	\label{note:dyn_holds_for_diff_local_smooth_loss}
}

For the rest of the chapter, unless specified otherwise, when referring to a norm we mean the standard Frobenius (Euclidean) norm, denoted by~$\norm{\cdot}$.

The following lemma establishes an invariant of the dynamics, showing that the differences between squared norms of vectors in the same component are constant through time.

\begin{lemma}
	\label{tf:lem:balancedness_conservation_body}
	For all $r \in \{ 1, \ldots, R \}$ and $n, \bar{n} \in \{ 1 , \ldots, N \}$:
	\[
	\norm*{ \w_r^{n} (t) }^2 - \norm*{ \w_r^{\bar{n}} (t) }^2 = \norm*{ \w_r^{n} (0) }^2 - \norm*{ \w_r^{\bar{n}} (0) }^2 ~~, ~t \geq 0
	\text{\,.}
	\]
\end{lemma}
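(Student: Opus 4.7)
The plan is to prove the stated invariant by differentiating each squared norm with respect to time, and showing that the resulting expression is the same for every choice of mode index $n$ within a fixed component $r$. Integrating the identity $\frac{d}{dt}\norm{\w_r^n(t)}^2 = \frac{d}{dt}\norm{\w_r^{\bar n}(t)}^2$ then yields the conservation law.

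First I would use the gradient flow equation \eqref{tf:eq:cp_gf} to write
\[
\tfrac{d}{dt}\norm{\w_r^n(t)}^2 \;=\; 2 \inprod{\w_r^n(t)}{\dot{\w}_r^n(t)} \;=\; -2 \inprod{\w_r^n(t)}{\tfrac{\partial \tfobj}{\partial \w_r^n}\bigl(\{\w_{r'}^{n'}(t)\}\bigr)}.
\]
Next, I would compute the partial derivative of $\tfobj$ via the chain rule through $\tftensorend$. Since $\tftensorend = \sum_{r'=1}^R \w_{r'}^1 \tenp \cdots \tenp \w_{r'}^N$ is linear in $\w_r^n$ (with the other factors held fixed), the partial derivative $\frac{\partial \tftensorend}{\partial \w_r^n}$ is obtained by omitting $\w_r^n$ from the outer product. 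Therefore, writing $G(t) := \nabla \tfendloss(\tftensorend(t))$, an entry-wise calculation gives
\[
\Bigl(\tfrac{\partial \tfobj}{\partial \w_r^n}\Bigr)_{i_n} \;=\; \sum_{i_1,\ldots,i_{n-1},i_{n+1},\ldots,i_N} (G)_{i_1,\ldots,i_N} \prod_{n' \neq n} (\w_r^{n'})_{i_{n'}}.
\]

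Now I would contract this with $\w_r^n$ itself. Summing the above against $(\w_r^n)_{i_n}$ restores the missing factor in the product, yielding
\[
\inprod{\w_r^n}{\tfrac{\partial \tfobj}{\partial \w_r^n}} \;=\; \sum_{i_1,\ldots,i_N} (G)_{i_1,\ldots,i_N} \prod_{n'=1}^N (\w_r^{n'})_{i_{n'}} \;=\; \inprod{G}{\w_r^1 \tenp \cdots \tenp \w_r^N}.
\]
The right-hand side is manifestly symmetric in the choice of mode $n$: it depends only on the component index $r$, not on which weight vector of that component we contracted against. Hence for any $n, \bar n \in \{1,\ldots,N\}$ we obtain $\frac{d}{dt}\norm{\w_r^n(t)}^2 = \frac{d}{dt}\norm{\w_r^{\bar n}(t)}^2$, so the difference $\norm{\w_r^n(t)}^2 - \norm{\w_r^{\bar n}(t)}^2$ has zero time derivative and is therefore equal to its value at $t=0$.

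There is no real obstacle here; the main care is in book-keeping the chain rule so that the ``missing'' vector in the tensor product gets contracted with $\w_r^n$ to produce the fully symmetric scalar $\inprod{G}{\bigotimes_{n'} \w_r^{n'}}$. The argument is independent of the specific form of $\tfendloss$ (any differentiable loss would do), which matches Footnote~\ref{note:dyn_holds_for_diff_local_smooth_loss} of the paper.
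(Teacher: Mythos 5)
Your proposal is correct and follows essentially the same route as the paper: differentiate the squared norm, use the chain rule to show $\inprod{\w_r^n}{\partial \tfobj/\partial \w_r^n} = \inprod{\nabla\tfendloss(\tftensorend)}{\tenp_{n'=1}^N \w_r^{n'}}$ independently of $n$ (the paper packages this as Lemmas~\ref{tf:lem:cp_gradient} and~\ref{tf:lem:inp_with_tenp_to_mat_kronp} in matricization/Kronecker notation, where you do it entry-wise), and integrate. No gaps.
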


\begin{proof}[Proof sketch (for proof see Lemma~\ref{tf:lem:balancedness_conservation} in Appendix~\ref{tf:app:proofs:useful_lemmas:tf})]
	The claim readily follows by showing that under gradient flow $\frac{d}{dt}  \normnoflex{ \w_r^{n} (t) }^2 = \frac{d}{dt}  \normnoflex{ \w_r^{\bar{n}} (t) }^2$ for all $t \geq 0$.
\end{proof}

Lemma~\ref{tf:lem:balancedness_conservation_body} naturally leads to the definition below.

\begin{definition}
	\label{def:unbalancedness_magnitude}
	The \emph{unbalancedness magnitude} of the weight vectors $\{ \w_r^n \in \R^{D_n} \}_{r = 1}^R\hspace{0mm}_{n = 1}^N$ is defined to be:
	\[
	\max\nolimits_{r \in \{ 1, \ldots, R\}, ~n, \bar{n} \in \{ 1, \ldots, N\}} \abs*{ \norm*{ \w_r^n }^2 - \norm*{ \w_r^{\bar{n}} }^2 }
	\text{\,.}
	\]
\end{definition}

By Lemma~\ref{tf:lem:balancedness_conservation_body}, the unbalancedness magnitude is constant during optimization, and thus, is determined at initialization.
When weight vectors are initialized near the origin~---~regime of interest~---~the unbalancedness magnitude is small, approaching zero as initialization scale decreases.

Theorem~\ref{tf:thm:dyn_fac_comp_norm_unbal} below provides a dynamical characterization for norms of individual components in the tensor factorization.

\begin{theorem}
	\label{tf:thm:dyn_fac_comp_norm_unbal}
	Assume unbalancedness magnitude $\epsilon \geq 0$ at initialization, and denote by~$\tftensorend (t)$ the end tensor (Equation~\eqref{tf:eq:end_tensor}) at time $t \geq 0$ of optimization.
	Then, for any $r \in \{ 1, \ldots, R \}$ and time $t \geq 0$ at which $\normnoflex{ \tenp_{n = 1}^N \w_r^{n} (t) } > 0$:\footnote{
		When $\| \tenp_{n = 1}^N \w_r^{n} (t) \|$ is zero it may not be differentiable.
	}
	\begin{itemize}[leftmargin=3.5mm]
		\item If $\gamma_r (t) := \inprodnoflex{ - \nabla \tfendloss ( \tftensorend (t) ) }{ \tenp_{n = 1}^N \widehat{\w}_r^{n} (t) } \geq 0$, then:
		\be
		\begin{split}
			& \hspace{-3.2mm} \frac{d}{dt} \normnoflex{ \tenp_{n = 1}^N \w_r^{n} (t) } \leq N \gamma_r (t) ( \normnoflex{ \tenp_{n = 1}^N \w_r^{n} (t) }^{\frac{2}{N}} + \epsilon )^{N - 1} \text{\,,} \\[1mm]
			& \hspace{-3.2mm} \frac{d}{dt} \normnoflex{ \tenp_{n = 1}^N \w_r^{n} (t) } \geq N \gamma_r (t) \cdot \frac{ \normnoflex{ \tenp_{n = 1}^N \w_r^{n} (t) }^2 }{  \normnoflex{ \tenp_{n = 1}^N \w_r^{n} (t) }^{\frac{2}{N}} + \epsilon }
			\text{\,,}
		\end{split}
		\label{tf:eq:dyn_fac_comp_norm_unbal_pos}
		\ee
		
		\item otherwise, if $\gamma_r (t)  < 0$, then:
		\be
		\begin{split}
			& \hspace{-3.2mm} \frac{d}{dt} \normnoflex{ \tenp_{n = 1}^N \w_r^{n} (t) } \geq N \gamma_r (t) ( \normnoflex{ \tenp_{n = 1}^N \w_r^{n} (t) }^{\frac{2}{N}} + \epsilon )^{N - 1} \text{\,,} \\[1mm]
			& \hspace{-3.2mm} \frac{d}{dt} \norm{ \tenp_{n = 1}^N \w_r^{n} (t) } \leq N \gamma_r (t) \cdot \frac{ \norm{ \tenp_{n = 1}^N \w_r^{n} (t) }^2 }{  \norm{ \tenp_{n = 1}^N \w_r^{n} (t) }^{\frac{2}{N}} + \epsilon }
			\text{\,,}
		\end{split}
		\label{tf:eq:dyn_fac_comp_norm_unbal_neg}
		\ee
	\end{itemize}
	where $\widehat{\w}_r^{n} (t) := \w_r^{n} (t) / \normnoflex{ \w_r^{n} (t) }$ for $n = 1, \ldots, N$.
\end{theorem}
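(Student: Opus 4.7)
The plan is to convert gradient flow on the weight vectors into a scalar ODE for the component norm $P_r(t) := \normnoflex{\tenp_{n=1}^N \wbf_r^n(t)} = \prod_{n=1}^N \normnoflex{\wbf_r^n(t)}$, and then bound its right-hand side using the near-balancedness guaranteed by Lemma~\ref{tf:lem:balancedness_conservation_body}.

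First I would compute, for each $n$, the gradient $\nabla_{\wbf_r^n}\tfobj$, which is obtained by contracting $\nabla\tfendloss(\tftensorend)$ with $\tenp_{n'\neq n}\wbf_r^{n'}$ along all modes other than mode $n$. Taking its inner product with $\wbf_r^n$ and inserting the gradient flow equation~\eqref{tf:eq:cp_gf} gives, for every $n$,
\[
\tfrac{1}{2}\tfrac{d}{dt}\normnoflex{\wbf_r^n(t)}^2 \;=\; -\inprodnoflex{\nabla\tfendloss(\tftensorend(t))}{\tenp_{m=1}^N \wbf_r^m(t)} \;=\; P_r(t)\,\gamma_r(t),
\]
where the last equality uses the definition of $\gamma_r(t)$ together with $\tenp_m \wbf_r^m = P_r \cdot \tenp_m \widehat{\wbf}_r^m$. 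The right-hand side is independent of $n$, which simultaneously reproves Lemma~\ref{tf:lem:balancedness_conservation_body}. Writing $a_n(t) := \normnoflex{\wbf_r^n(t)}$ and differentiating $P_r = \prod_n a_n$ then yields
\[
\dot P_r(t) \;=\; \sum_{n=1}^N \dot a_n(t) \prod_{m\neq n} a_m(t) \;=\; \gamma_r(t)\sum_{n=1}^N \prod_{m\neq n} a_m(t)^2,
\]
using $\dot a_n = P_r\gamma_r / a_n$, which is valid wherever $a_n(t)>0$ — a regime ensured by the assumption $P_r(t)>0$.

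The key step is to replace each $a_m(t)^2$ by an expression involving $P_r(t)^{2/N}$ and $\epsilon$. By Lemma~\ref{tf:lem:balancedness_conservation_body} together with Definition~\ref{def:unbalancedness_magnitude}, the values $a_1(t)^2,\ldots,a_N(t)^2$ all lie in a common interval of length at most $\epsilon$; since $P_r(t)^{2/N}$ is their geometric mean, it lies in the same interval, so
\[
P_r(t)^{2/N} - \epsilon \;\leq\; a_m(t)^2 \;\leq\; P_r(t)^{2/N} + \epsilon \qquad \text{for every } m.
\]
Exploiting $\prod_{m\neq n} a_m^2 = P_r^2 / a_n^2$, this sandwiches each summand as
\[
\frac{P_r(t)^2}{P_r(t)^{2/N} + \epsilon} \;\leq\; \prod_{m\neq n} a_m(t)^2 \;\leq\; \bigl(P_r(t)^{2/N} + \epsilon\bigr)^{N-1}.
\]
Summing these $N$ identical bounds and multiplying by $\gamma_r(t)$ produces~\eqref{tf:eq:dyn_fac_comp_norm_unbal_pos} when $\gamma_r(t) \geq 0$, and yields the reversed inequalities~\eqref{tf:eq:dyn_fac_comp_norm_unbal_neg} when $\gamma_r(t) < 0$ (multiplication by a negative scalar flips the direction).

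The hardest part will not be a deep insight but rather the initial tensor-algebra identity that turns $\inprodnoflex{\wbf_r^n}{\nabla_{\wbf_r^n}\tfobj}$ into $\inprodnoflex{\nabla\tfendloss(\tftensorend)}{\tenp_m \wbf_r^m}$; once this is in hand, the remainder is elementary scalar calculus combined with the observation that the geometric mean of numbers lying within $\epsilon$ of each other is itself within $\epsilon$ of each of them.
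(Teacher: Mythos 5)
Your proposal is correct and follows essentially the same route as the paper's proof: differentiate the product of weight-vector norms, use the gradient-flow identity $\tfrac{d}{dt}\normnoflex{\wbf_r^n}^2 = -2\inprodnoflex{\nabla\tfendloss(\tftensorend)}{\tenp_m \wbf_r^m}$ to get $\dot P_r = \gamma_r \sum_n \prod_{m\neq n} a_m^2$, and then bound each factor via the conserved unbalancedness, $a_m^2 \leq P_r^{2/N}+\epsilon$ (the paper phrases this through the minimum rather than your geometric-mean sandwich, but it is the same inequality, and your lower half of the sandwich is unused anyway).
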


\begin{proof}[Proof sketch (proof in Appendix~\ref{tf:app:proofs:dyn_fac_comp_norm_unbal})]
Differentiating a component's norm with respect to time, we obtain $\frac{d}{dt} \normnoflex{ \tenp_{n = 1}^N \w_r^{n} (t) } = \gamma_r (t) \cdot \sum_{n = 1}^N \prod_{n' \neq n} \normnoflex{ \w_r^{n'} (t) }^2$.
The desired bounds then follow from using conservation of unbalancedness magnitude (as implied by Lemma~\ref{tf:lem:balancedness_conservation_body}), and showing that $\normnoflex{ \w_r^{n'} (t) }^2 \leq \normnoflex{ \tenp_{n = 1}^N \w_r^n (t) }^{ 2 / N } + \epsilon$ for all $t \geq 0$ and $n' \in  \{ 1, \ldots, N \}$.
\end{proof}

Theorem~\ref{tf:thm:dyn_fac_comp_norm_unbal} shows that when unbalancedness magnitude at initialization (denoted~$\epsilon$) is small, the evolution rates of component norms are roughly proportional to their size exponentiated by $2 - 2 / N$, where $N$ is the order of the tensor factorization.
Consequently, component norms are subject to a momentum-like effect, by which they move slower when small and faster when large.
This suggests that when initialized near zero, components tend to remain close to the origin, and then, upon reaching a critical threshold, quickly grow until convergence, creating an incremental learning effect that yields implicit regularization towards low tensor rank.
This phenomenon is used in Section~\ref{tf:sec:rank} to formally prove (under certain conditions) implicit tensor rank minimization, and is demonstrated empirically in Section~\ref{tf:sec:experiments}.

When the unbalancedness magnitude at initialization is exactly zero, our dynamical characterization takes on a particularly lucid form.

\begin{corollary}
	\label{tf:cor:dyn_fac_comp_norm_balanced}
	Assume unbalancedness magnitude zero at initialization.
	Then, with notations of Theorem~\ref{tf:thm:dyn_fac_comp_norm_unbal}, for any $r \in \{ 1, \ldots, R \}$, the norm of the $r$'th component evolves by:
	\vspace{-3.5mm}
	\be
	\frac{d}{dt} \norm{ \tenp_{n = 1}^N \w_r^{n} (t) } = N \gamma_r (t) \cdot \norm{ \tenp_{n = 1}^N \w_r^{n} (t) }^{2 - \frac{2}{N}} 
	\text{\,,}
	\label{tf:eq:dyn_fac_comp_norm}
	\ee
	where by convention $\widehat{\w}_r^{n} (t) = 0$ if $\w_r^{n} (t) = 0$.
\end{corollary}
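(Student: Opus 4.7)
The plan is to obtain Corollary~\ref{tf:cor:dyn_fac_comp_norm_balanced} as the $\epsilon = 0$ specialization of Theorem~\ref{tf:thm:dyn_fac_comp_norm_unbal}. By Lemma~\ref{tf:lem:balancedness_conservation_body}, zero unbalancedness magnitude at initialization is preserved for all time, so the hypothesis of the theorem applies with $\epsilon = 0$ throughout the trajectory. The core algebraic observation is that, with $\epsilon = 0$, the upper and lower bounds in Equation~\eqref{tf:eq:dyn_fac_comp_norm_unbal_pos} degenerate to a single value:
\[
\brk*{ \norm{ \tenp_{n=1}^N \w_r^{n}(t) }^{2/N} }^{N-1} \;=\; \norm{ \tenp_{n=1}^N \w_r^{n}(t) }^{2 - 2/N} \;=\; \frac{ \norm{ \tenp_{n=1}^N \w_r^{n}(t) }^{2} }{ \norm{ \tenp_{n=1}^N \w_r^{n}(t) }^{2/N} } \text{\,,}
\]
so when $\gamma_r(t) \geq 0$ the derivative is squeezed between matching bounds and equals $N \gamma_r(t) \cdot \norm{ \tenp_{n=1}^N \w_r^{n}(t) }^{2 - 2/N}$. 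The identical algebraic collapse makes the two bounds in Equation~\eqref{tf:eq:dyn_fac_comp_norm_unbal_neg} coincide when $\gamma_r(t) < 0$, yielding Equation~\eqref{tf:eq:dyn_fac_comp_norm} at every $t$ for which the component's norm is positive.

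The remaining case is when $\norm{ \tenp_{n=1}^N \w_r^{n}(t) } = 0$. Since unbalancedness vanishes, Lemma~\ref{tf:lem:balancedness_conservation_body} forces $\norm{ \w_r^{1}(t) } = \cdots = \norm{ \w_r^{N}(t) }$, and hence vanishing of the end-product norm forces $\w_r^{n}(t) = \0$ for every $n$. Inspecting the gradient flow~\eqref{tf:eq:cp_gf}, the partial derivative $\partial \tfobj / \partial \w_r^{n}$ is a contraction of $\nabla \tfendloss$ against $\tenp_{n' \neq n} \w_r^{n'}$, which vanishes whenever any $\w_r^{n'}$ equals $\0$; hence $\tfrac{d}{dt}\w_r^{n}(t) = \0$ for all $n$, and the component is identically zero on a neighbourhood of $t$. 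Both sides of Equation~\eqref{tf:eq:dyn_fac_comp_norm} are therefore zero: on the right, the convention $\widehat{\w}_r^{n}(t) = 0$ makes $\gamma_r(t) = 0$, while $\norm{ \tenp_{n=1}^N \w_r^{n}(t) }^{2 - 2/N}$ vanishes since the exponent $2 - 2/N$ is strictly positive for $N \geq 2$.

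No step poses a genuine obstacle: once the $\epsilon = 0$ substitution is made, the two bounds of Theorem~\ref{tf:thm:dyn_fac_comp_norm_unbal} agree by elementary algebra, and the degenerate zero-norm case is trivial thanks to the balance constraint. The only point requiring any care is ensuring that the derivative formula extends continuously across times at which $\gamma_r(t)$ changes sign or at which a component first leaves (or returns to) the origin; this follows immediately from the uniform expression $N\gamma_r(t)\cdot\norm{\tenp_{n=1}^N \w_r^{n}(t)}^{2-2/N}$ being continuous in both $\gamma_r(t)$ and $\norm{\tenp_{n=1}^N \w_r^{n}(t)}$.
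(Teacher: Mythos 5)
Your main line of argument is the same as the paper's: specialize Theorem~\ref{tf:thm:dyn_fac_comp_norm_unbal} to $\epsilon = 0$, observe that the upper and lower bounds in Equations~\eqref{tf:eq:dyn_fac_comp_norm_unbal_pos} and~\eqref{tf:eq:dyn_fac_comp_norm_unbal_neg} collapse to the common value $N \gamma_r (t) \cdot \normnoflex{ \tenp_{n = 1}^N \w_r^{n} (t) }^{2 - 2/N}$, and treat the degenerate case $\normnoflex{ \tenp_{n = 1}^N \w_r^{n} (t) } = 0$ separately. The first part is correct and complete.

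The gap is in the degenerate case. From $\w_r^{1} (t) = \cdots = \w_r^{N} (t) = 0$ and the vanishing of $\partial \tfobj / \partial \w_r^{n}$ at that configuration, you conclude that the component is ``identically zero on a neighbourhood of $t$.'' A vanishing time derivative at the single instant $t$ does not imply local constancy (consider $t \mapsto t^2$ at $t = 0$); what you have shown is only that the constant-zero trajectory for the $r$'th component is \emph{consistent} with the gradient flow equations, not that it is the trajectory actually followed. Closing this requires uniqueness of solutions to the initial value problem: one constructs a solution of Equation~\eqref{tf:eq:cp_gf} in which the $r$'th component is identically zero (the remaining components evolving as an $(R-1)$-component factorization) and which passes through the given state, and then invokes uniqueness to conclude it coincides with the actual trajectory. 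This is precisely what the paper's Lemma~\ref{tf:lem:balanced_param_vector_norm_no_sign_change} (via Lemma~\ref{tf:lem:width_R_equivalent_to_larger_width_with_zero_init} and a standard ODE uniqueness theorem) does. The point is not pedantic: without it, the component norm could in principle vanish at an isolated time while being positive nearby, in which case $t \mapsto \normnoflex{ \tenp_{n = 1}^N \w_r^{n} (t) }$ need not be differentiable at $t$ at all (see the footnote to Theorem~\ref{tf:thm:dyn_fac_comp_norm_unbal}), and the left-hand side of Equation~\eqref{tf:eq:dyn_fac_comp_norm} would be undefined rather than zero. Your closing remark that continuity of the expression $N \gamma_r (t) \cdot \normnoflex{ \cdot }^{2 - 2/N}$ handles components ``leaving or returning to the origin'' misses this: the issue is differentiability of the norm at zero, not continuity of the right-hand side.
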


\begin{proof}[Proof sketch (proof in Appendix~\ref{tf:app:proofs:dyn_fac_comp_norm_balanced})]
	If the time~$t$ is such that $\normnoflex{ \tenp_{n = 1}^N \w_r^{n} (t) } > 0$, Equation~\eqref{tf:eq:dyn_fac_comp_norm} readily follows from applying Theorem~\ref{tf:thm:dyn_fac_comp_norm_unbal} with $\epsilon = 0$.
	For the case where $\normnoflex{ \tenp_{n = 1}^N \w_r^{n} (t) } = 0$, we show that the component $\tenp_{n = 1}^N \w_r^{n} (t)$ must be identically zero throughout, hence both sides of Equation~\eqref{tf:eq:dyn_fac_comp_norm} are equal to zero.
\end{proof}

It is worthwhile highlighting the relation to matrix factorization.
There, an implicit bias towards low rank emerges from incremental learning dynamics similar to above, with singular values standing in place of component norms.
In fact, the dynamical characterization given in Corollary~\ref{tf:cor:dyn_fac_comp_norm_balanced} is structurally identical to the one provided by Theorem~3 in \cite{arora2019implicit} for singular values of a matrix factorization.
We thus obtained a generalization from matrices to tensors, notwithstanding the notorious difficulty often associated with the latter (\cf~\cite{hillar2013most}).


\section{Implicit Tensor Rank Minimization} \label{tf:sec:rank}

In this section we employ the dynamical characterization derived in Section~\ref{tf:sec:dynamic} to theoretically establish implicit regularization towards low tensor rank.
Specifically, we prove that under certain technical conditions, arbitrarily small initialization leads tensor factorization to follow a trajectory of rank one tensors for an arbitrary amount of time or distance.
As a corollary, we obtain that if the tensor completion problem admits a rank one solution, and all rank one trajectories uniformly converge to it, tensor factorization with infinitesimal initialization will converge to it as well.
Our analysis generalizes to tensor factorization recent results developed in \cite{li2021towards} for matrix factorization.
As typical in transitioning from matrices to tensors, this generalization entails significant challenges necessitating use of fundamentally different techniques.

For technical reasons, our focus in this section lies on the Huber loss from robust statistics~\cite{huber1964robust}, given by:
\be
\hspace{-0.125mm}
\ell_h : \R \to \R_{\geq 0}
~~~ , ~~~
\ell_h ( z ) \,{:=} \begin{cases}
	\frac{1}{2} z^2 &  , \abs{z} < \delta_h \\
	\delta_h (\abs{z} - \frac{1}{2} \delta_h) &  , \text{otherwise}
\end{cases}
\text{\,,}
\label{tf:eq:huber_loss}
\ee
where $\delta_h > 0$, referred to as the transition point of the loss, is predetermined.
Huber loss is often used as a differentiable surrogate for $\ell_1$~loss, in which case~$\delta_h$ is chosen to be small.
We will assume it is smaller than observed tensor entries:\footnote{
	Note that this entails assumption of non-zero observations.
}
\begin{assumption}
	\label{tf:assump:delta_h}
	$\delta_h < | y_{i_1, \ldots, i_N} | ~ , \forall (i_1, \ldots, i_N) \in \Omega$.
\end{assumption}

We will consider an initialization $\{ \abf_r^n \in \R^{D_n} \}_{r = 1}^R\hspace{0mm}_{n = 1}^N$ for the weight vectors of the tensor factorization, and will scale this initialization towards zero.
In line with infinitesimal initializations being captured by unbalancedness magnitude zero (\cf~Section~\ref{tf:sec:dynamic}), we assume that this is the case:
\begin{assumption}
	\label{tf:assump:a_balance}
	The initialization $\{ \abf_r^n \}_{r = 1}^R\hspace{0mm}_{n = 1}^N$ has unbalancedness magnitude zero.
\end{assumption}
We further assume that within $\{ \abf_r^n \}_{r , n}$ there exists a leading component (subset $\{ \abf_{\bar{r}}^n \}_n$), in the sense that it is larger than others, while having positive projection on the attracting force at the origin, \ie~on minus the gradient of the loss~$\tfendloss ( \cdot )$ (Equation~\eqref{tf:eq:tc_loss}) at zero:
\begin{assumption}
	\label{tf:assump:a_lead_comp}
	There exists $\bar{r} \in \{ 1, \ldots, R \}$ such~that:
	\be
	\hspace{-6mm}
	\begin{split}
		& \inprod{ - \nabla \tfendloss ( 0 ) }{ \tenp_{n = 1}^N \widehat{\abf}_{\bar{r}}^{n} } > 0 \text{\,,} \\
		& \normnoflex{ \abf_{\bar{r}}^n } > \normnoflex{ \abf_{r}^n } {\cdot} \hspace{-1mm} \left ( \tfrac{ \norm{ \nabla \tfendloss ( 0 ) } }{  \inprod{ - \nabla \tfendloss ( 0 ) }{  \tenp_{n = 1}^N \widehat{\abf}_{ \bar{r} }^{n} }} \right )^{ 1 / (N - 2) } ~~, ~\forall r \neq \bar{r} \text{\,,}
	\end{split}
	\label{tf:eq:assump_components_sep_at_init}
	\ee
	where $\widehat{\abf}_{\bar{r}}^{n} := \abf_{\bar{r}}^{n} / \normnoflex{ \abf_{\bar{r}}^{n} }$ for $n = 1, \ldots, N$.
\end{assumption}

Let $\alpha > 0$, and suppose we run gradient flow on the tensor factorization (see Section~\ref{tf:sec:tf}) starting from the initialization $\{ \abf_r^n \}_{r , n}$ scaled by~$\alpha$.
That is, we set:
\[
\w_r^n (0) = \alpha \cdot \abf_r^n \quad , ~r = 1, \ldots, R ~,n = 1, \ldots, N
\text{\,,}
\]
and let $\{ \w_r^n ( t ) \}_{r , n}$ evolve per Equation~\eqref{tf:eq:cp_gf}.
Denote by $\tftensorend (t)$, $t \geq 0$, the trajectory induced on the end tensor (Equation~\eqref{tf:eq:end_tensor}).
We will study the evolution of this trajectory through time.
A hurdle that immediately arises is that, by the dynamical characterization of Section~\ref{tf:sec:dynamic}, when the initialization scale $\alpha$ tends to zero (regime of interest), the time it takes $\tftensorend (t)$ to escape the origin grows to infinity.\footnote{
	To see this, divide both sides of Equation~\eqref{tf:eq:dyn_fac_comp_norm} from Corollary~\ref{tf:cor:dyn_fac_comp_norm_balanced} by $\normnoflex{ \tenp_{n = 1}^N \w_r^{n} (t) }^{2 - 2 / N}$, and integrate with respect to~$t$.
	It follows that the norm of a component at any fixed time tends to zero as initialization scale $\alpha$ decreases.
	This implies that for any $D > 0$, when taking $\alpha \to 0$, the time required for a component to reach norm~$D$ grows to infinity.
}
We overcome this hurdle by considering a \emph{reference sphere}~---~a sphere around the origin with sufficiently small radius:
\be
\S := \{ \W \in \R^{d_1, \ldots, d_N} : \| \W \| = \rho \}
\text{\,,}
\label{tf:eq:ref_sphere}
\ee
where $\rho \in ( 0 , \min_{(i_1, \ldots, i_N) \in \Omega} \abs{ y_{i_1, \ldots, i_N} } - \delta_h )$ can be chosen arbitrarily.
With the reference sphere $\S$ at hand, we define a time-shifted version of the trajectory $\tftensorend (t)$, aligning $t = 0$ with the moment at which $\S$ is reached:
\be
\tftensorendbar ( t ) := \tftensorend \brk*{ t + \inf \{ t' \geq 0 : \tftensorend ( t' ) \in \S \} }
\label{tf:eq:end_tensor_time_shift}
\text{\,,}
\ee
where by definition $\inf \{ t' \geq 0 : \tftensorend ( t' ) \in \S \} = 0$ if $\tftensorend (t)$ does not reach~$\S$.
Unlike the original trajectory~$\tftensorend (t)$, the shifted one $\tftensorendbar ( t )$ disregards the process of escaping the origin, and thus admits a concrete meaning to the time elapsing from optimization commencement.

We will establish proximity of $\tftensorendbar ( t )$ to trajectories of rank one tensors.
We say that $\W_1 ( t ) \in \R^{D_1 \times \cdots \times D_N}$, $t \geq 0$, is a \emph{rank one trajectory}, if it coincides with some trajectory of an end tensor in a one-component factorization, \ie~if there exists an initialization for gradient flow over a tensor factorization with $R = 1$ components, leading the induced end tensor to evolve by~$\W_1 ( t )$.
If the latter initialization has unbalancedness magnitude zero (\cf~Definition~\ref{def:unbalancedness_magnitude}), we further say that $\W_1 ( t )$ is a \emph{balanced rank one trajectory}.\footnote{
	Note that the definitions of rank one trajectory and balanced rank one trajectory allow for $\W_1 ( t )$ to have rank zero (\ie~to be equal to zero) at some or all times~$t \geq 0$.
}

We are now in a position to state our main result, by which arbitrarily small initialization leads tensor factorization to follow a (balanced) rank one trajectory for an arbitrary amount of time or distance.
\begin{theorem}
	\label{tf:thm:approx_rank_1}
	Under Assumptions \ref{tf:assump:delta_h}, \ref{tf:assump:a_balance} and~\ref{tf:assump:a_lead_comp}, for any distance from origin $B > 0$, time duration $T > 0$, and degree of approximation $\epsilon \in ( 0 , 1 )$, if initialization scale $\alpha$ is sufficiently small,\footnote{
		Hiding problem-dependent constants, an initialization scale of $\epsilon B^{-1}  \exp ( - \OO (B^{2} T) )$ suffices.
		Exact constants are specified at the beginning of the proof in Appendix~\ref{tf:app:proofs:approx_rank_1}.
	}
	then:
	\emph{(i)}~$\tftensorend ( t )$ reaches the reference sphere~$\S$;
	and
	\emph{(ii)}~there exists a balanced rank one trajectory $\W_1 ( t )$ emanating from~$\S$, such that $\| \tftensorendbar ( t ) - \W_1 ( t ) \| \leq \epsilon$ at least until $t \geq T$ or $\| \tftensorendbar ( t ) \| \geq B$.
\end{theorem}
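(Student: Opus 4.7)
The plan is to split the proof into two phases mirroring the picture from Section~\ref{tf:sec:dynamic}: an \textbf{escape} phase, during which the leading component singled out by Assumption~\ref{tf:assump:a_lead_comp} outruns the others so that $\tftensorend(t)$ first reaches $\S$, and a \textbf{tracking} phase, during which $\tftensorendbar(t)$ is shown to stay within $\epsilon$ of a suitable balanced rank one trajectory $\W_1(t)$ by a Gr\"onwall-type stability argument. Two structural features of the Huber loss are used throughout: $\nabla \tfendloss$ is uniformly bounded (because $\ell_h'$ is clipped at $\pm \delta_h$) and globally Lipschitz, with constants depending only on $|\Omega|$ and $\delta_h$. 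Assumption~\ref{tf:assump:a_balance} ensures Corollary~\ref{tf:cor:dyn_fac_comp_norm_balanced} applies, so each component norm obeys the clean ODE $\tfrac{d}{dt}\|\tenp_{n=1}^N \w_r^n(t)\| = N\gamma_r(t)\|\tenp_{n=1}^N \w_r^n(t)\|^{2-2/N}$.

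\textbf{Escape phase.} While $\tftensorend(t)$ is close to the origin, Lipschitzness of $\nabla\tfendloss$ gives $\gamma_r(t) = \gamma_r^{(0)} + O(\|\tftensorend(t)\|)$ with $\gamma_r^{(0)} := \langle -\nabla\tfendloss(0),\, \tenp_{n=1}^N \widehat{\abf}_r^n\rangle$; by Assumption~\ref{tf:assump:a_lead_comp}, $\gamma_{\bar r}^{(0)} > 0$ while $|\gamma_r^{(0)}| \leq \|\nabla\tfendloss(0)\|$ for $r \neq \bar r$. The separable ODE $dx/dt = Ncx^{2-2/N}$ has blow-up time $x(0)^{-(N-2)/N}/((N-2)c)$, and plugging in the balanced initial datum $\rho_r(0) = \alpha^N \prod_n \|\abf_r^n\|$ shows that the inequality in Equation~\eqref{tf:eq:assump_components_sep_at_init} is \emph{precisely} the condition that the leading component's blow-up time be strictly smaller than that of every other component, by a multiplicative gap independent of $\alpha$. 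Rigorizing this requires two auxiliary estimates handled by a bootstrap argument: the unit vectors $\widehat{\w}_r^n(t)$ stay close to $\widehat{\abf}_r^n$ while the factorization is small (since the rotating force on them is itself $O(\|\tftensorend(t)\|)$), and the residual $\gamma_r(t) - \gamma_r^{(0)}$ is small enough to preserve the blow-up-time separation. The output of this phase is a time $t_\alpha < \infty$ at which $\tftensorend(t_\alpha) \in \S$ (establishing claim~(i)), together with $\rho_{\bar r}(t_\alpha) = \rho - o_\alpha(1)$ and $\rho_r(t_\alpha) = O(\alpha^{\beta})$ for $r \neq \bar r$, for an explicit $\beta > 0$ determined by the gap.

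\textbf{Tracking phase.} Define $\W_1(t)$ to be the balanced rank one trajectory whose initial condition is the normalized leading component inherited from the escape phase, so that $\W_1(0) \in \S$ and $\|\tftensorendbar(0) - \W_1(0)\| = o_\alpha(1)$. I would work at the level of weight vectors, coupling each $\w_{\bar r}^n$ with its one-component counterpart $\ubf^n$ generating $\W_1$, and treating each non-leading $\w_r^n$ separately. For the non-leading norms, Theorem~\ref{tf:thm:dyn_fac_comp_norm_unbal} gives $\tfrac{d}{dt}\rho_r \leq NM\rho_r^{2-2/N}$ with $M := \sup \|\nabla\tfendloss\|$, whose solution from $\rho_r(0) = O(\alpha^\beta)$ blows up only at time of order $\alpha^{-\beta(N-2)/N}$, which exceeds $T$ once $\alpha$ is sufficiently small. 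Combined with Lipschitzness of $\nabla\tfendloss$ and the a priori bound $\|\tftensorendbar(t)\| \leq B$ on the stopping interval, the coupled ODEs for $\w_{\bar r}^n - \ubf^n$ yield, via Gr\"onwall, a bound of the shape $\|\tftensorendbar(t) - \W_1(t)\| \leq C_{B,T}\,\alpha^\beta\, e^{C_{B,T} T}$, which is at most $\epsilon$ for $\alpha$ small enough, matching the initialization-scale schedule advertised in the theorem.

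\textbf{Main obstacle.} I expect the delicate part to be the escape phase rather than the tracking one. The blow-up-time heuristic is easy; converting it into a rigorous uniform statement requires simultaneous control of all $R$ component norms \emph{and} their directions, because non-leading components contribute small but nonzero cross-terms in the gradient driving the leading one. Showing that these perturbations cannot rotate the leading direction nor equalize the blow-up times, uniformly as $\alpha \to 0$, is the tensor analogue of the top-singular-value isolation argument used for matrix factorization in~\cite{li2021towards}, and appears to be the unique place in the proof where the precise exponent $1/(N-2)$ in Assumption~\ref{tf:assump:a_lead_comp} is essential.
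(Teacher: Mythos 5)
Your two-phase architecture (escape to the reference sphere, then Gr\"onwall tracking of a rank one trajectory) matches the paper's, and your tracking phase is essentially the paper's Stage II: there too one defines $\W_1(t)$ from the normalized leading component at the hitting time of $\S$, bounds the weight-vector distance at that time, and invokes local smoothness of the objective over a ball of radius determined by $B$ to run Gr\"onwall until $t \geq T$ or $\|\tftensorendbar(t)\| \geq B$ (the paper folds the non-leading components into the same Gr\"onwall bound by viewing the rank one trajectory as an $R$-component factorization padded with zeros, rather than controlling them separately as you propose, but that difference is cosmetic).

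The genuine gap is in the escape phase, and it is exactly the step you flag as the "main obstacle." You treat $\nabla\tfendloss$ as merely Lipschitz, write $\gamma_r(t) = \gamma_r^{(0)} + O(\|\tftensorend(t)\|)$, and propose a bootstrap to show that the directions $\widehat{\w}_r^n(t)$ barely rotate and that the blow-up-time separation survives the perturbation. This bootstrap is not carried out, and it is not clear it would close as sketched: near the origin the force on each weight vector and the weight vector itself are both small, so "the rotating force is $O(\|\tftensorend(t)\|)$" does not by itself control the unit vectors, and the escape time diverges as $\alpha \to 0$, so errors of size $O(\|\tftensorend(t)\|)$ accumulate over an unbounded time window. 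The missing idea — and the entire reason the theorem is stated for the Huber loss under Assumption~\ref{tf:assump:delta_h} with $\rho < \min_{(i_1,\ldots,i_N)\in\Omega}|y_{i_1,\ldots,i_N}| - \delta_h$ — is that inside the reference sphere the Huber gradient is \emph{exactly constant}: $\nabla\L_h(\W) = \nabla\L_h(0)$ whenever $\|\W\|\leq\rho$, since every residual then sits in the clipped regime of $\ell_h'$. There is consequently no perturbation to bootstrap against and no cross-term from the non-leading components (the gradient does not depend on $\tftensorend(t)$ at all until $\S$ is reached). The paper then shows that under a constant gradient each $\gamma_r(t)$ is monotonically non-decreasing, so $\gamma_{\bar r}(t)\geq\gamma_{\bar r}(0)>0$, and integrates the separable ODEs exactly: the leading component's lower bound blows up at time $\propto \alpha^{2-N}$ while Assumption~\ref{tf:assump:a_lead_comp} guarantees the other components' upper bounds remain $O(\alpha^N)$ up to that time. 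Your blow-up-time heuristic and the role of the exponent $1/(N-2)$ are correct, but without the exact-constancy observation the argument as written does not go through.
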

\begin{proof}[Proof sketch (proof in Appendix~\ref{tf:app:proofs:approx_rank_1})]
	Using the dynamical characterization from Section~\ref{tf:sec:dynamic} (Lemma~\ref{tf:lem:balancedness_conservation_body} and Corollary~\ref{tf:cor:dyn_fac_comp_norm_balanced}), and the fact that $\nabla \tfendloss (\cdot)$ is locally constant around the origin, we establish that 
	\emph{(i)} $\tftensorend (t)$ reaches the reference sphere $\S$; 
	and 
	\emph{(ii)} at that time, the norm of the $\bar{r}$'th component is of constant scale (independent of $\alpha$), while the norms of all other components are $\OO (\alpha^N)$.
	Thus, taking $\alpha$ towards zero leads $\tftensorend (t)$ to arrive at~$\S$ while being arbitrarily close to the initialization of a balanced rank one trajectory~---~$\W_1 (t)$.
	Since the objective is locally smooth, this ensures $\tftensorendbar ( t )$ is within distance~$\epsilon$ from $\W_1 (t)$ for an arbitrary amount of time or distance.
	That is, if $\alpha$ is sufficiently small, $\| \tftensorendbar ( t ) - \W_1 ( t ) \| \leq \epsilon$ at least until $t \geq T$ or $\| \tftensorendbar ( t ) \| \geq B$.
\end{proof}

As an immediate corollary of Theorem~\ref{tf:thm:approx_rank_1}, we obtain that if all balanced rank one trajectories uniformly converge to a global minimum, tensor factorization with infinitesimal initialization will do so too.
In particular, its implicit regularization will direct it towards a solution with tensor~rank~one.
\begin{corollary}
	\label{corollary:converge_rank_1}
	Assume the conditions of Theorem~\ref{tf:thm:approx_rank_1} (Assumptions \ref{tf:assump:delta_h}, \ref{tf:assump:a_balance} and~\ref{tf:assump:a_lead_comp}), and in addition, that all balanced rank one trajectories emanating from~$\S$ converge to a tensor $\W^* \in \R^{D_1 \times \cdots \times D_N}$ uniformly, in the sense that they are all confined to some bounded domain, and for any $\epsilon > 0$, there exists a time~$T$ after which they are all within distance $\epsilon$ from~$\W^*$.
	Then, for any $\epsilon > 0$, if initialization scale $\alpha$ is sufficiently small, there exists a time~$T$ for which $\| \tftensorend ( T ) - \W^* \| \leq \epsilon$.
\end{corollary}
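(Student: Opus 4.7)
The plan is to reduce the corollary to a direct application of Theorem~\ref{tf:thm:approx_rank_1}, chaining its approximation guarantee with the uniform convergence hypothesis via the triangle inequality. Fix $\epsilon > 0$; we may assume $\epsilon \leq 1$, since proving the result for smaller $\epsilon$ implies the result for larger $\epsilon$. By the confinement hypothesis there exists $B_0 > 0$ such that $\normnoflex{ \W_1 (t) } \leq B_0$ for every balanced rank one trajectory $\W_1$ emanating from $\S$ and every $t \geq 0$. By the uniform convergence hypothesis there exists a time $T_1 > 0$ such that $\normnoflex{ \W_1 (T_1) - \W^* } \leq \epsilon / 2$ for every such trajectory.

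Next, invoke Theorem~\ref{tf:thm:approx_rank_1} with distance parameter $B := B_0 + 1$, time duration $T_1$, and approximation parameter $\epsilon / 2$. For initialization scale $\alpha$ sufficiently small, the theorem guarantees that $\tftensorend (t)$ reaches $\S$ at some finite time $\tau_\alpha := \inf \{ t' \geq 0 : \tftensorend (t') \in \S \}$, and that there exists a balanced rank one trajectory $\W_1 (t)$ emanating from $\S$ for which $\normnoflex{ \tftensorendbar (t) - \W_1 (t) } \leq \epsilon / 2$ at least until $t \geq T_1$ or $\normnoflex{ \tftensorendbar (t) } \geq B$. The second alternative is vacuous: at any $t$ with $\normnoflex{ \tftensorendbar (t) } \geq B = B_0 + 1$, the reverse triangle inequality together with $\normnoflex{ \W_1 (t) } \leq B_0$ would force $\normnoflex{ \tftensorendbar (t) - \W_1 (t) } \geq 1 > \epsilon / 2$, contradicting the approximation bound. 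Hence the first alternative occurs, and in particular the approximation is valid at $t = T_1$.

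The triangle inequality then yields
\[
\normnoflex{ \tftensorendbar (T_1) - \W^* } \leq \normnoflex{ \tftensorendbar (T_1) - \W_1 (T_1) } + \normnoflex{ \W_1 (T_1) - \W^* } \leq \epsilon / 2 + \epsilon / 2 = \epsilon .
\]
Undoing the time shift in \eqref{tf:eq:end_tensor_time_shift} and setting $T := T_1 + \tau_\alpha$, this reads $\normnoflex{ \tftensorend (T) - \W^* } \leq \epsilon$, as required. The only mildly delicate point is ruling out the premature exit alternative $\normnoflex{ \tftensorendbar (t) } \geq B$ in Theorem~\ref{tf:thm:approx_rank_1}; this is precisely why we inflate the uniform bound $B_0$ by a constant before feeding it as the distance parameter. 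All remaining work is bookkeeping absorbed by the hypotheses or by the previously established theorem.
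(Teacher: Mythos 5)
Your proof is correct and follows essentially the same route as the paper's: invoke Theorem~\ref{tf:thm:approx_rank_1} with a radius inflated beyond the uniform bound on the rank one trajectories, rule out the premature-exit alternative by the (reverse) triangle inequality, and conclude with the triangle inequality at the convergence time. The only cosmetic difference is that you normalize $\epsilon \leq 1$ whereas the paper feeds $\min\{\epsilon/2, 1/2\}$ into the theorem; these are interchangeable.
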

\begin{proof}[Proof sketch (proof in Appendix~\ref{tf:app:proofs:converge_rank_1})]
	Let $T' > 0$ be a time at which all balanced rank one trajectories that emanated from $\S$ are within distance~$\epsilon / 2$ from $\W^*$.
	By Theorem~\ref{tf:thm:approx_rank_1}, if $\alpha$ is sufficiently small, $\tftensorendbar ( t )$~is guaranteed to be within distance~$\epsilon / 2$ from a balanced rank one trajectory that emanated from~$\S$, at least until time~$T'$.
	Recalling that $\tftensorendbar ( t )$ is a time-shifted version of~$\tftensorend ( t )$, the desired result follows from the triangle inequality.
\end{proof}

\section{Experiments} \label{tf:sec:experiments}

In this section we present our experiments.
Section~\ref{tf:sec:experiments:dyn} corroborates our theoretical analyses (Sections~\ref{tf:sec:dynamic} and~\ref{tf:sec:rank}), evaluating tensor factorization (Section~\ref{tf:sec:tf}) on synthetic low (tensor) rank tensor completion problems.
Section~\ref{tf:sec:experiments:tensor_rank_complexity} explores tensor rank as a measure of complexity, examining its ability to capture the essence of standard datasets.
For brevity, we defer a description of implementation details, as well as some experiments, to Appendix~\ref{tf:app:experiments}.

\subsection{Dynamics of Learning}
\label{tf:sec:experiments:dyn}

\begin{figure*}
	\begin{center}
		\hspace*{-2mm}
		\subfloat{
			\includegraphics[width=0.24\textwidth]{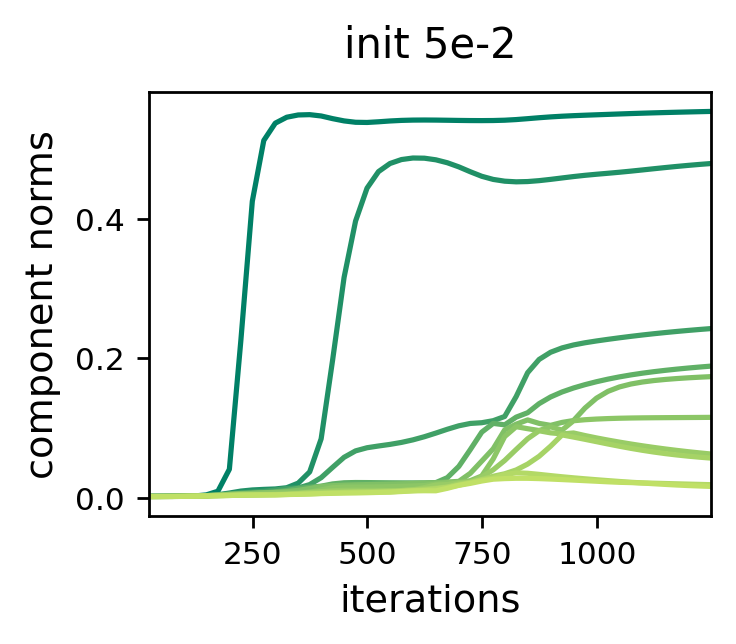}
		}
		\subfloat{
			\includegraphics[width=0.24\textwidth]{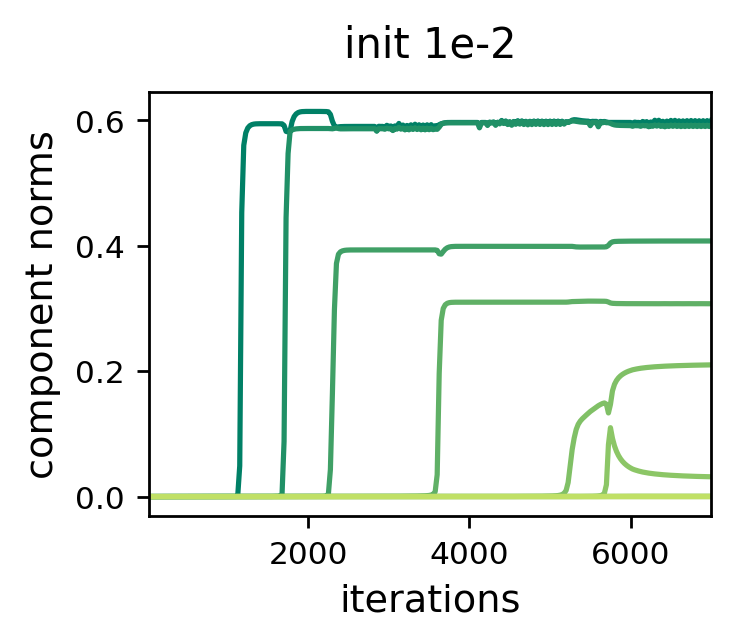}
		}
		\subfloat{
			\includegraphics[width=0.24\textwidth]{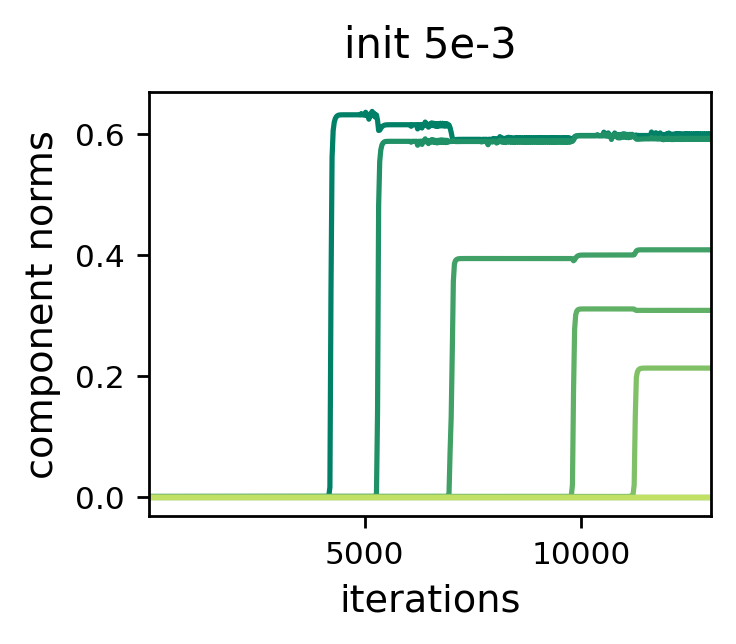}
		}
		\subfloat{
			\includegraphics[width=0.24\textwidth]{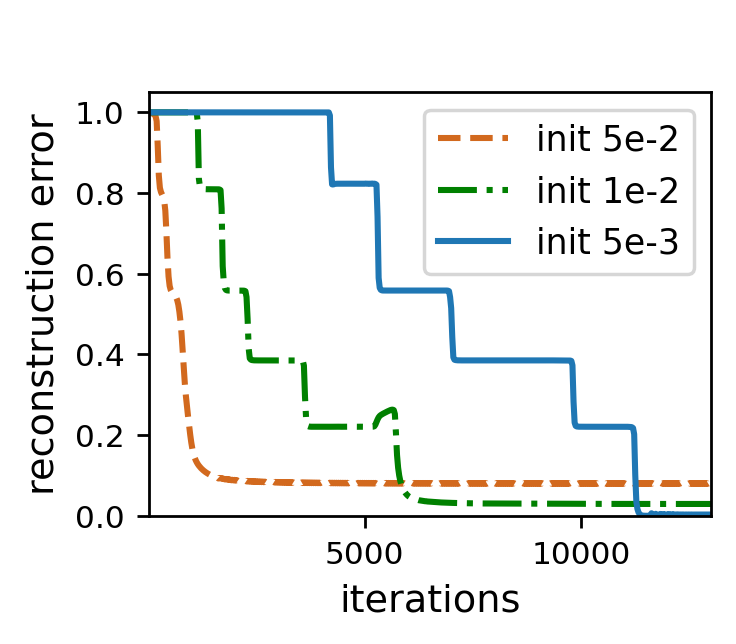}
		}
	\end{center}
	\vspace{-2mm}
	\caption{
		Dynamics of gradient descent over tensor factorization --- incremental learning of components yields low tensor rank solutions.
		Presented plots correspond to the task of completing a (tensor) rank $5$ ground truth tensor of size $10$-by-$10$-by-$10$-by-$10$ (order $4$) based on $2000$ observed entries chosen uniformly at random without repetition (smaller sample sizes led to solutions with tensor rank lower than that of the ground truth tensor).
		In each experiment, the $\ell_2$ loss (more precisely, Equation~\eqref{tf:eq:tc_loss} with $\ell ( z ) := z^2$) was minimized via gradient descent over a tensor factorization with $R = 1000$ components (large enough to express any tensor), starting from (small) random initialization.
		First (left) three plots show (Frobenius) norms of the ten largest components under three standard deviations for initialization~---~$0.05, 0.01,$ and $0.005$.
		Further reduction of initialization scale yielded no noticeable change.
		The rightmost plot compares reconstruction errors (Frobenius distance from ground truth) from the three runs.
		To facilitate more efficient experimentation, we employed an adaptive learning rate scheme (see Appendix~\ref{tf:app:experiments:details} for details).
		Notice that, in accordance with the theoretical analysis of Section~\ref{tf:sec:dynamic}, component norms move slower when small and faster when large, creating an incremental process in which components are learned one after the other.
		This effect is enhanced as initialization scale is decreased, producing low tensor rank solutions that accurately reconstruct the low (tensor) rank ground truth tensor.
		In particular, even though the factorization consists of $1000$ components, when initialization is sufficiently small, only five (tensor rank of the ground truth tensor) substantially depart from zero.
		Appendix~\ref{tf:app:experiments} provides further implementation details, as well as similar experiments with: \emph{(i)} Huber loss (see Equation~\eqref{tf:eq:huber_loss}) instead of $\ell_2$~loss; \emph{(ii)} ground truth tensors of different orders and (tensor) ranks; and \emph{(iii)} tensor sensing (see Appendix~\ref{tf:app:sensing}).
		\vspace{-0.5mm}
	}
	\label{tf:fig:tc_mse_ord4}
\end{figure*}

In \cref{mf:sec:experiments:tensor} of \cref{chap:imp_reg_not_norms} we empirically showed that, with small learning rate and near-zero initialization, gradient descent over tensor factorization exhibits an implicit regularization towards low tensor rank.
The theory in Sections~\ref{tf:sec:dynamic} and~\ref{tf:sec:rank} explains this implicit regularization through a dynamical analysis~---~we prove that the movement of component norms is attenuated when small and enhanced when large, thus creating an incremental learning effect which becomes more potent as initialization scale decreases.
Figure~\ref{tf:fig:tc_mse_ord4} demonstrates this phenomenon empirically on synthetic low (tensor) rank tensor completion problems.
Figures~\ref{tf:fig:tc_huber_ord4},~\ref{tf:fig:tc_mse_ord3} and~\ref{tf:fig:ts_mse_ord4} in Appendix~\ref{tf:app:experiments:further} extend the experiment, corroborating our analyses in a wide array of settings.

\subsection{Tensor Rank as Measure of Complexity}
\label{tf:sec:experiments:tensor_rank_complexity}

Implicit regularization in deep learning is typically viewed as a tendency of gradient-based optimization to fit training examples with predictors whose ``complexity'' is as low as possible.
The fact that ``natural'' data gives rise to generalization while other types of data (\eg~random) do not, is understood to result from the former being amenable to fitting by predictors of lower complexity.
A major challenge in formalizing this intuition is that we lack definitions for predictor complexity that are both quantitative (\ie~admit quantitative generalization bounds) and capture the essence of natural data (types of data on which neural networks generalize in practice), in the sense of it being fittable~with~low~complexity.

As discussed in Section~\ref{tf:sec:tf:nn}, learning a predictor with multiple discrete input variables and a continuous output can be viewed as a tensor completion problem.
Specifically, with $N \in \N$, $D_1 , \ldots , D_N \in \N$, learning a predictor from domain $\X = \{ 1 , \ldots , D_1 \} \times \cdots \times \{ 1 , \ldots , D_N \}$ to range $\Y = \R$ corresponds to completion of an order~$N$ tensor with mode (axis) dimensions $D_1 , \ldots , D_N$. 
Under this correspondence, any predictor can simply be thought of as a tensor, and vice versa.
We have shown that solving tensor completion via tensor factorization amounts to learning a predictor through a certain neural network (Section~\ref{tf:sec:tf:nn}), whose implicit regularization favors solutions with low tensor rank (Sections \ref{tf:sec:dynamic} and~\ref{tf:sec:rank}).
Motivated by these connections, the current subsection empirically explores tensor rank as a measure of complexity for predictors, by evaluating the extent to which it captures natural data, \ie~allows the latter to be fit with low complexity predictors.

As representatives of natural data, we chose the classic MNIST dataset \cite{lecun1998mnist}~---~perhaps the most common benchmark for demonstrating ideas in deep learning~---~and its more modern counterpart Fashion-MNIST \cite{xiao2017fashion}.
A hurdle posed by these datasets is that they involve classification into multiple categories, whereas the equivalence to tensors applies to predictors whose output is a scalar.
It is possible to extend the equivalence by equating a multi-output predictor with multiple tensors, in which case the predictor is associated with multiple tensor ranks.
However, to facilitate a simple presentation, we avoid this extension and simply map each dataset into multiple one-vs-all binary classification problems.
For each problem, we associate the label~$1$ with the active category and $0$ with all the rest, and then attempt to fit training examples with predictors of low tensor rank, reporting the resulting mean squared error, \ie~the residual of the fit.
This is compared against residuals obtained when fitting two types of random data: one generated via shuffling labels, and the other by replacing inputs with noise.

\begin{figure*}
	\begin{center}
		\hspace*{-1mm}
		\includegraphics[width=1\textwidth]{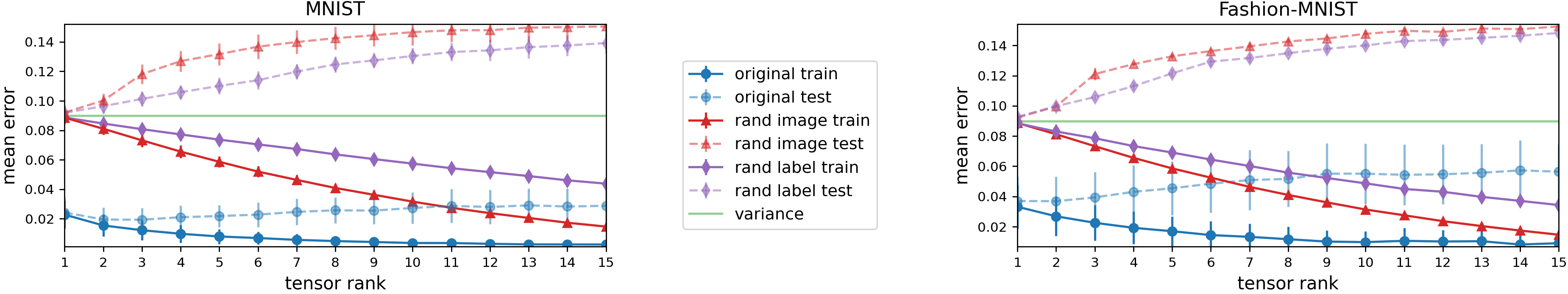}
	\end{center}
	\vspace{-2mm}
	\caption{
		Evaluation of tensor rank as measure of complexity~---~standard datasets can be fit accurately with predictors of extremely low tensor rank (far beneath what is required by random datasets), suggesting it may capture the essence of natural data.
		Left and right plots show results of fitting MNIST and Fashion-MNIST datasets, respectively, with predictors of increasing tensor rank.
		Original datasets are compared against two random variants: one generated by replacing images with noise (``rand image''), and the other via shuffling labels (``rand label'').
		As described in the text (Section~\ref{tf:sec:experiments:tensor_rank_complexity}), for simplicity of presentation, each dataset was mapped into multiple (ten) one-vs-all prediction tasks (label~$1$ for active category, $0$~for the rest), with fit measured via mean squared error.
		Separately for each one-vs-all prediction task and each value $k \in \{ 1 , \ldots , 15 \}$ for the tensor rank, we applied an approximate numerical method (see Appendix~\ref{tf:app:experiments:details:natural_data} for details) to find the predictor of tensor rank~$k$ (or less) with which the mean squared error over training examples is minimal.
		We report this mean squared error, as well as that obtained by the predictor on the test set (to mitigate impact of outliers, large squared errors over test samples were clipped~---~see Appendix~\ref{tf:app:experiments:details:natural_data} for details).
		Plots show, for each value of~$k$, mean (as marker) and standard deviation (as error bar) of these errors taken over the different one-vs-all prediction tasks.
		Notice that the original datasets are fit accurately (low train error) by predictors of tensor rank as low as one, whereas random datasets are not (with tensor rank one, residuals of their fit are close to trivial, \ie~to the variance of the label).
		This suggests that tensor rank as a measure of complexity for predictors has potential to capture the essence of natural data.
		Notice also that, as expected, accurate fit with low tensor rank coincides with accurate prediction on test set, \ie~with generalization.
		For further details, as well as an experiment showing that linear predictors are incapable of accurately fitting the datasets, see Appendix~\ref{tf:app:experiments}.
	}
	\label{tf:fig:mnist_fmnist_rank}
\end{figure*}

Both MNIST and Fashion-MNIST comprise $28$-by-$28$ grayscale images, with each pixel taking one of $256$ possible values.
Tensors associated with predictors are thus of order~$784$, with dimension $256$ in each mode (axis).\footnote{
	In practice, when associating predictors with tensors, it is often beneficial to modify the representation of the input (\cf~\cite{cohen2016expressive}).
	For example, in the context under discussion, rather than having the discrete input variables hold pixel intensities, they may correspond to small image patches, where each patch is represented by the index of a centroid it is closest to, with centroids determined via clustering applied to all patches across all images in the dataset.
	For simplicity, we did not transform representations in our experiments, and simply operated over raw image pixels.
}
A general rank one tensor can then be expressed as a tensor product (\ie~outer product) between $784$ vectors of dimension $256$ each, and accordingly has roughly $784 \cdot 256$ degrees of freedom.
This significantly exceeds the number of training examples in the datasets ($60000$), hence it is no surprise that we could easily fit them, as well as their random variants, with a predictor whose tensor rank is one.
To account for the comparatively small training sets, and render their fit more challenging, we quantized pixels to hold one of two values, \ie~we reduced images from grayscale to black and white.
Following the quantization, tensors associated with predictors have dimension two in each mode, and the number of degrees of freedom in a general rank one tensor is roughly $784 \cdot 2$~---~well below the number of training examples.
We may thus expect to see a difference between the tensor ranks needed for fitting original datasets and those required by the random ones.
This is confirmed by Figure~\ref{tf:fig:mnist_fmnist_rank}, displaying the results of the experiment. 

Figure~\ref{tf:fig:mnist_fmnist_rank} shows that with predictors of low tensor rank, MNIST and Fashion-MNIST can be fit much more accurately than the random datasets.
Moreover, as one would presume, accurate fit with low tensor rank coincides with accurate prediction on unseen data (test set), \ie~with generalization.
Combined with the rest of our results, we interpret this finding as an indication that tensor rank may shed light on both implicit regularization of neural networks, and the properties of natural data translating this implicit regularization to generalization.

\chapter{Implicit Regularization in Hierarchical Tensor Factorization \\ and Deep Convolutional Neural Networks} 
\label{chap:imp_reg_htf}

The contents of this chapter are based on~\cite{razin2022implicit}.

\section{Background and Overview}
\label{htf:sec:overview}

\cref{chap:imp_reg_not_norms,chap:imp_reg_tf} focused on the implicit regularization in matrix and tensor factorization.
Matrix factorization refers to minimizing a given loss (over matrices) by parameterizing the solution as a product of matrices, and optimizing the resulting objective via gradient descent.
Tensor factorization is a generalization of this procedure to multi-dimensional arrays.
There, a tensor is learned through gradient descent over a sum-of-outer-products parameterization (see~\cref{tf:sec:tf}).
By adopting a dynamical viewpoint, in \cref{chap:imp_reg_tf} we established that gradient descent (with small learning rate and near-zero initialization) over tensor factorization induces a momentum-like effect on the components of the factorization, leading them to move slowly when small and quickly when large.
This implies a form of incremental learning that results in low tensor rank solutions, analogous to the incremental rank learning phenomenon identified by \cite{arora2019implicit} for matrix factorization.

From a deep learning perspective, matrix factorization can be seen as a linear neural network, and, in a similar vein, tensor factorization corresponds to a certain shallow (depth two) non-linear convolutional neural network (see~\cref{tf:sec:tf:nn}).
As theoretical surrogates for deep learning, the practical relevance of these models is limited.
The former lacks non-linearity, while the latter misses depth~---~both crucial features of modern neural networks.
A natural extension of matrix and tensor factorizations that accounts for both non-linearity and depth is \emph{hierarchical tensor factorization},\footnote{
	The term “hierarchical tensor factorization'' refers throughout to a variant of the Hierarchical Tucker factorization \cite{hackbusch2009new}, presented in Section~\ref{htf:sec:htf}.
}
which corresponds to a class of \emph{deep non-linear} convolutional neural networks~\cite{cohen2016expressive} (with polynomial non-linearity) that have demonstrated promising performance in practice~\cite{cohen2014simnets,cohen2016deep,sharir2016tensorial,stoudenmire2018learning,grant2018hierarchical,felser2021quantum}, and have been key to the study of expressiveness in deep learning~\cite{cohen2016expressive,cohen2016convolutional,cohen2017inductive,cohen2017analysis,cohen2018boosting,sharir2018expressive,levine2018benefits,levine2018deep,balda2018tensor,khrulkov2018expressive,khrulkov2019generalized,levine2019quantum}.

In this chapter, we provide the first analysis of implicit regularization in hierarchical tensor factorization.
As opposed to tensor factorization, which is a simple construct dating back to at least the early 20'th century~\cite{hitchcock1927expression}, hierarchical tensor factorization was formally introduced only recently~\cite{hackbusch2009new}, and is much more elaborate.
We circumvent the challenges brought forth by the added hierarchy through identification of \emph{local components}, and characterization of their evolution under gradient descent (with small learning rate and near-zero initialization).
The characterization reveals that they are subject to a momentum-like effect, identical to that in matrix and tensor factorizations.
Accordingly, local components are learned incrementally, leading to solutions with low \emph{hierarchical tensor rank}~---~a central concept in tensor analysis~\cite{grasedyck2010hierarchical,grasedyck2013literature}. 
Theoretical and empirical demonstrations validate our analysis.

For the deep convolutional networks corresponding to hierarchical tensor factorization, hierarchical tensor rank is known to measure the strength of dependencies modeled between spatially distant input regions (patches of pixels in the context of image classification)~---~see~\cite{cohen2017inductive,levine2018benefits,levine2018deep}.
The established tendency towards low hierarchical tensor rank therefore implies a bias towards local (short-range) dependencies, in accordance with the fact that convolutional networks often struggle or completely fail to learn tasks entailing long-range dependencies (see, \eg,~\cite{wang2016temporal,linsley2018learning,mlynarski2019convolutional,hong2020graph, kim2020disentangling}).
However, while this failure is typically attributed solely to a limitation in expressive capability (\ie~to an inability of convolutional networks to represent functions modeling long-range dependencies~---~see~\cite{cohen2017inductive,linsley2018learning,kim2020disentangling}), our analysis reveals that it also originates from implicit regularization.
This suggests that the difficulty in learning long-range dependencies may be countered via \emph{explicit} regularization, in contrast to conventional wisdom by which architectural modifications are needed.
Through a series of controlled experiments we confirm this prospect, demonstrating that explicit regularization designed to promote high hierarchical tensor rank can significantly improve the performance of modern convolutional networks (\eg~ResNet18 and ResNet34 from~\cite{he2016deep}) on tasks involving long-range dependencies.

Our results bring forth the possibility that deep learning architectures considered suboptimal for certain tasks (\eg~convolutional networks for natural language processing tasks) may be greatly improved through a right choice of explicit regularization.
Theoretical understanding of implicit regularization may be key to discovering such regularizers.

\medskip

The remainder of the chapter is organized as follows.
For completeness, in Section~\ref{htf:sec:prelim} we outline relevant dynamical characterizations of implicit regularization in matrix and tensor factorizations (from \cite{arora2019implicit} and \cref{chap:imp_reg_tf}, respectively).
Section~\ref{htf:sec:htf} presents the hierarchical tensor factorization model, as well as its interpretation as a deep non-linear convolutional network.
In Section~\ref{htf:sec:inc_rank_lrn} we characterize the dynamics of gradient descent over hierarchical tensor factorization, establishing that they lead to low hierarchical tensor rank.
Section~\ref{htf:sec:low_htr_implies_locality} explains why low hierarchical tensor rank means locality for the corresponding convolutional network.
Lastly, Section~\ref{htf:sec:countering_locality} demonstrates that the locality of modern convolutional networks can be countered using dedicated explicit regularization.


\section{Preliminaries: Matrix and Tensor Factorizations}
\label{htf:sec:prelim}

The dynamical analysis delivered in the current chapter for hierarchical tensor factorization is analogous to prior dynamical analyses for matrix and tensor factorization, carried out in \cite{arora2019implicit} and \cref{chap:imp_reg_tf}, respectively.
For completeness, this section overviews these analyses under a unified notation.

Throughout the chapter, when referring to a norm we mean the standard Frobenius (Euclidean) norm, denoted $\norm{\cdot}$.
For $N \in \N$, we let $[N] := \{ 1, \ldots, N \}$.
For vectors, matrices, or tensors, parenthesized superscripts denote elements in a collection, \eg~$( \wbf^{(n)} \in \R^D )_{ n = 1}^N$, while subscripts refer to entries, \eg~$\Wbf_{i,j} \in \R$ is the $(i,j)$'th entry of $\Wbf \in \R^{D \times D'}$.
A colon indicates all entries in an axis, \eg~$\Wbf_{i, :} \in \R^{D'}$ is the $i$'th  row and $\Wbf_{:, j} \in \R^{D}$ is the $j$'th column of~$\Wbf$.

\begin{figure*}[t]
	\begin{center}
		\hspace{-3mm}
		\includegraphics[width=1\textwidth]{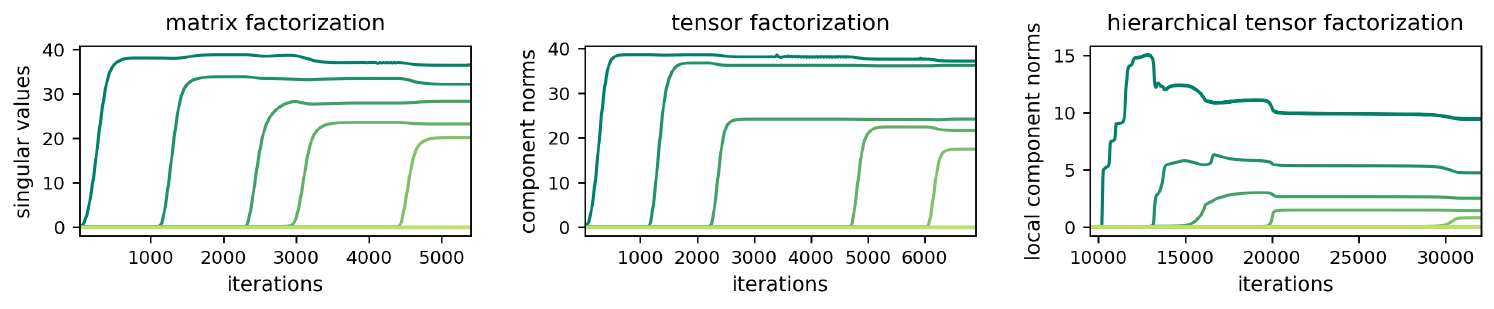}
	\end{center}
	\vspace{-3mm}
	\caption{
		Dynamics of gradient descent over matrix, tensor, and hierarchical tensor factorizations~---~incremental learning leads to low matrix, tensor, and hierarchical tensor ranks, respectively.
		\textbf{Left:} top $10$ singular values of the end matrix in a depth $3$ matrix factorization when minimizing the mean squared error over observed entries from a matrix rank $5$ ground truth (matrix completion loss).
		\textbf{Middle:} top $10$ component norms of an order $3$ tensor factorization when minimizing the mean squared error over observed entries from a tensor rank $5$ ground truth (tensor completion loss).
		\textbf{Right:} top $10$ local component norms at node $\{1, 2, 3, 4 \}$ of an order $4$ hierarchical tensor factorization induced by a perfect binary mode tree (Definition~\ref{htf:def:mode_tree}), when minimizing the mean squared error over observed entries from a hierarchical tensor rank $(5, 5, 5, 5, 5, 5)$ (Definition~\ref{htf:def:ht_rank}) ground truth (tensor completion loss).
		\textbf{All:} initial factorization weights were sampled independently from a zero-mean Gaussian distribution. 
		Notice that, in accordance with existing analyses for matrix and tensor factorizations (Section~\ref{htf:sec:prelim}) and our analysis for hierarchical tensor factorization (Section~\ref{htf:sec:inc_rank_lrn}), the singular values, component norms, and local component norms move slowly when small and quickly when large, creating an incremental learning process that results in effectively low matrix, tensor, and hierarchical tensor rank solutions, respectively.
		In all factorizations this implicit regularization led to accurate reconstruction of the low rank ground truth (reconstruction errors were $0.001$, $0.001$, and $0.005$, respectively).
		For further details such as loss definitions and factorization dimensions, as well as additional experiments for hierarchical tensor factorization, see	Appendix~\ref{htf:app:experiments}.
	}
	\label{htf:fig:mf_tf_htf_dynamics}
\end{figure*}

\subsection{Matrix Factorization: Incremental Matrix Rank Learning}
\label{htf:sec:prelim:mf}

Consider the task of minimizing a differentiable and locally smooth\footnote{
	A differentiable function $g: \R^D \to \R$ is \emph{locally smooth} if for any compact subset $\B \subset \R^D$ there exists $\beta \in \R_{\geq 0}$ such that $\normnoflex{ \nabla g(\xbf) - \nabla g(\ybf) } \leq \beta \cdot \norm{\xbf - \ybf}$ for all $\xbf, \ybf \in \B$.
}
loss $\mfendloss : \R^{D \times D'} \,{\to}\, \R_{\geq 0}$ ($D, D' \in \N$).
For example, $\mfendloss$ can be a matrix completion loss~---~mean squared error over observed entries from a ground truth matrix.
Matrix factorization with hidden dimensions $D_2, \ldots, D_{L} \in \N$ refers to parameterizing the solution $\matrixend \in \R^{D \times D'}$ as a product of $L$ matrices, \ie~as $\matrixend = \Wbf^{(L)} \cdots \Wbf^{(1)}$, where $\Wbf^{(l)} \in \R^{D_{l} \times  D_{l - 1}}$ for $l = 1, \ldots, L$, $D_0 := D'$, and $D_{L} := D$, and minimizing the resulting objective $\mfobj \big ( \Wbf^{(1)}, \ldots, \Wbf^{(L)} \big ) := \mfendloss (\matrixend)$ using gradient descent.
We call $\matrixend$ the \emph{end matrix} of the factorization.
It is possible to explicitly constrain the values that $\matrixend$ can take by limiting the hidden dimensions $D_2, \ldots, D_L$.
However, from an implicit regularization perspective, the case of interest is where the search space is unconstrained, thus we consider $D_2, \ldots, D_L \geq \min \{ D, D' \}$.
Matrix factorization can be viewed as applying a linear neural network for minimizing $\mfendloss$, and as such, serves a prominent theoretical model in deep learning (see \eg~\cite{gunasekar2017implicit,du2018algorithmic,li2018algorithmic,arora2019implicit,gidel2019implicit,mulayoff2020unique,blanc2020implicit,gissin2020implicit,razin2020implicit,chou2020gradient,yun2021unifying,li2021towards}).

Several characterizations of implicit regularization in matrix factorization have suggested that gradient descent, with small learning rate and near-zero initialization, induces a form of incremental matrix rank learning~\cite{gidel2019implicit,gissin2020implicit,chou2020gradient,li2021towards}.
Below we follow the presentation of~\cite{arora2019implicit}, which in line with other analyses, modeled small learning rate through the infinitesimal limit, \ie~via \emph{gradient flow}:
\[
\begin{split}
	\frac{d}{dt} \Wbf^{(l)} (t) = - \frac{\partial}{\partial \Wbf^{(l) } } \mfobj \big ( \Wbf^{(1)} (t), \ldots, \Wbf^{(L)} (t) \big )
\end{split}
\]
for all $t \geq 0$ and $l \in [L]$.
Under gradient flow, the difference $\Wbf^{(l)} (t) \Wbf^{(l)} (t)^\top - \Wbf^{(l + 1)} (t)^\top \Wbf^{(l + 1)} (t)$ remains constant through time for any $l \in [L - 1]$ (see~\cite{arora2018optimization}).
This implies that the \emph{unbalancedness magnitude}, defined as $\max\nolimits_{l} \normnoflex{  \Wbf^{(l)} (t) \Wbf^{(l)} (t)^\top - \Wbf^{(l + 1)} (t)^\top \Wbf^{(l + 1)} (t) }$, does not change through time, thus becomes relatively small as optimization moves away from the origin, more so the closer initialization is to zero.
Accordingly, it is common practice to treat the case of unbalancedness magnitude zero as an idealization of standard near-zero initializations (see, \eg,~\cite{saxe2014exact,arora2018optimization,bartlett2018gradient,lampinen2019analytic,arora2019implicit,elkabetz2021continuous,bah2022learning}).

With unbalancedness magnitude zero, the $r$'th singular value of the end matrix $\matrixend (t) = \Wbf^{(L)} ( t ) \cdots \Wbf^{(1)} ( t )$ ($r \in \left [ \min \{ D, D' \} \right ]$), denoted $\mfsing{r} (t) \in \R$, evolves by (\cf~\cite{arora2019implicit}):\footnote{
	The dynamical characterization of singular values in Equation~\eqref{htf:eq:sing_val_mf_dyn} requires $\mfendloss$ to be analytic, a property met by standard loss functions such as the square and cross-entropy losses.
}
\be
\vspace{0.5mm}
\frac{d}{dt} \mfsing{r} (t) = \mfsing{r} (t)^{2 - \frac{2}{L}} L \inprodbig{ - \nabla \mfendloss ( \matrixend (t) ) }{ \mfcomp{r} (t)  }
\text{\,,}
\label{htf:eq:sing_val_mf_dyn}
\ee
where $\mfcomp{r} (t) := \ubf^{(r)} (t) \vbf^{(r)} (t)^\top \in \R^{D \times D'}$ is the $r$'th singular component of $\matrixend (t)$, meaning $\ubf^{(r)} (t) \in \R^D$ and $\vbf^{(r)} (t) \in \R^{D'}$ are, respectively, left and right singular vectors of $\matrixend (t)$ corresponding to~$\mfsing{r} (t)$.
As evident from Equation~\eqref{htf:eq:sing_val_mf_dyn}, 
two factors govern the evolution rate of a singular value $ \mfsing{r} (t)$.
The first factor, $\inprodnoflex{ - \nabla \mfendloss ( \matrixend (t) ) }{ \mfcomp{r} (t)  }$, is a projection of the singular component $\mfcomp{r} (t)$ onto $- \nabla \mfendloss ( \matrixend (t) )$, the direction of steepest descent with respect to the end matrix.
The more the singular component is aligned with $- \nabla \mfendloss ( \matrixend (t) )$, the faster the singular value grows.
The second, more critical factor, is $\mfsing{r} (t)^{2 - \frac{2}{L } } L$, which implies that the rate of change of the singular value is proportional to its size exponentiated by $2 - 2 / L$ (recall that $L$ is the depth of the matrix factorization).
This brings rise to a momentum-like effect, which attenuates the movement of small singular values and accelerates the movement of large ones.
We may thus expect that if the matrix factorization is initialized near the origin, singular values progress slowly at first, and then, one after the other they reach a critical threshold and quickly rise, until convergence is attained.
Such incremental learning phenomenon leads to low matrix rank solutions.
It is demonstrated empirically in Figure~\ref{htf:fig:mf_tf_htf_dynamics} (left), which reproduces an experiment from~\cite{arora2019implicit}.
We note that under certain technical conditions, the incremental matrix rank learning phenomenon can be used to prove exact matrix rank minimization~\cite{li2021towards}.

\subsection{Tensor Factorization: Incremental Tensor Rank Learning}
\label{htf:sec:prelim:tf}

A depth two matrix factorization boils down to parameterizing a sought-after solution as a sum of tensor (outer) products between column vectors of $\Wbf^{(2)}$ and row vectors of $\Wbf^{(1)}$. 
Namely, since $\matrixend = \Wbf^{(2)} \Wbf^{(1)}$ we may write $\matrixend = \sum_{r = 1}^{R} \Wbf^{(2)}_{:, r} \tenp \Wbf^{(1)}_{r, :}$, where $R$ is the dimension shared between $\Wbf^{(1)}$ and $\Wbf^{(2)}$, and $\tenp$ stands for the tensor product.
Note that the minimal number of summands $R$ required for $\matrixend$ to express a given matrix $\Wbf$ is precisely the latter's matrix rank.

By allowing each summand to be a tensor product of more than two vectors, we may transition from a factorization for matrices to a factorization for tensors.
In tensor factorization, a sought-after solution $\tftensorend \in \R^{D_1, \times \cdots \times D_N}$~---~an \emph{order} $N \geq 3$ tensor with \emph{modes} (axes) of \emph{dimensions} $D_1, \ldots, D_N \in \N$~---~is parameterized as $\tftensorend = \sum_{r = 1}^R \Wbf^{(1)}_{:,r} \tenp \cdots \tenp \Wbf^{(N)}_{:, r}$, where $\Wbf^{(n)} \in \R^{D_n \times R}$ for $n \in [N]$.
Each term $ \Wbf^{(1)}_{:,r} \tenp \cdots \tenp \Wbf^{(N)}_{:, r}$ in this sum is called a \emph{component}, and $\tftensorend$ is referred to as the \emph{end tensor} of the factorization.
Given a differentiable and locally smooth loss $\tfendloss : \R^{D_1 \times \cdots \times D_N} \to \R_{\geq 0}$, \eg~mean squared error over observed entries from a ground truth tensor (\ie~a tensor completion loss), the goal is to minimize the objective $\tfobj \big (\Wbf^{(1)}, \ldots, \Wbf^{(N)} \big ) := \tfendloss ( \tftensorend )$.
In analogy with matrix factorization, the minimal number of components $R$ required for $\tftensorend$ to express a given tensor $\W \in \R^{D_1 \times \cdots \times D_N}$ is defined to be the latter's \emph{tensor rank}, and the case of interest is when $R$ is sufficiently large to not restrict tensor rank (\ie~to admit an unconstrained search space).

Similarly to how matrix factorization corresponds to a linear neural network, tensor factorization is known (see \cref{tf:sec:tf:nn}) to be equivalent to a certain shallow (depth two) non-linear convolutional network (with polynomial non-linearity).
By virtue of this equivalence, illustrated in Figure~\ref{htf:fig:tf_htf_as_convnet} (top), tensor factorization is considered closer to practical deep learning than matrix factorization.

\begin{figure*}[t]
	\begin{center}
		\includegraphics[width=1\textwidth]{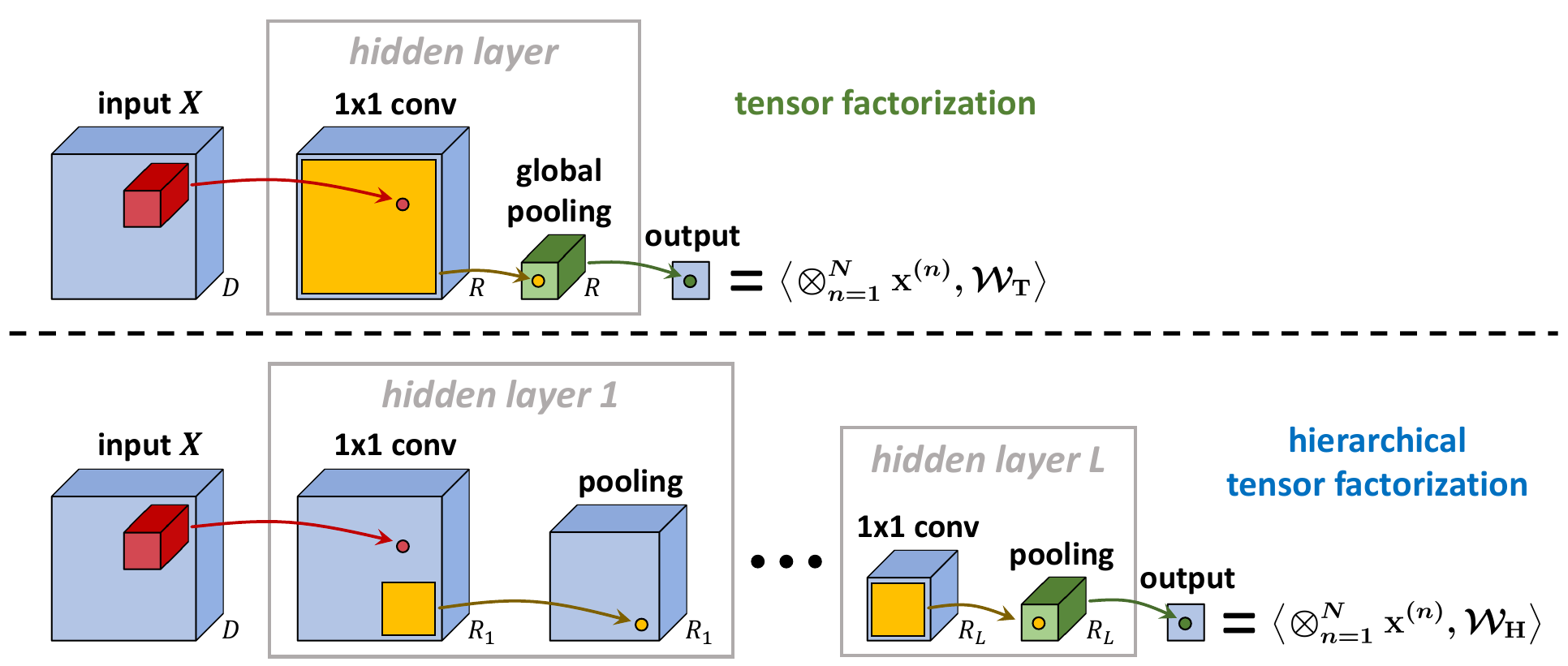}
	\end{center}
	\vspace{-2mm}
	\caption{
		Tensor factorization corresponds to a class of shallow (depth two) non-linear convolutional networks, while hierarchical tensor factorization corresponds to a class of \emph{deep} non-linear convolutional networks (with polynomial non-linearity).
		\textbf{Top:} illustration of the shallow network equivalent to tensor factorization processes.
		The illustration is analogous to \cref{mf:fig:convac} from \cref{chap:imp_reg_not_norms}, and is repeated for ease of comparison with the deep non-linear convolutional network corresponding to hierarchical tensor factorization. 
		Given an input $( \xbf^{(1)}, \ldots, \xbf^{(N)} ) \in \R^{D_1} \times \cdots \times \R^{D_N}$ (illustration assumes $D_1 = \cdots = D_N = D$ to avoid clutter), the network processes it using a single hidden layer, which consists of: \emph{(i)} locally connected linear operator with $R$ channels, computing \smash{$( \Wbf^{(1)} )^\top \xbf^{(1)}, \ldots, ( \Wbf^{(N)} )^\top \xbf^{(N)}$} with learnable weights $\Wbf^{(1)}, \ldots, \Wbf^{(N)}$ (this operator is referred to as ‘‘$1 \times 1$ conv'' in appeal to the common case of weight sharing, \ie~$\Wbf^{(1)} = \cdots = \Wbf^{(N)}$); and \emph{(ii)} channel-wise global product pooling (which induces polynomial non-linearity).
		Summing over the resulting activations then yields the scalar output $\inprodbig{ \tenp_{n = 1}^N \xbf^{(n)} }{ \sum_{r = 1}^R \tenp_{n = 1}^N \Wbf^{(n)}_{:, r} } = \inprodbig{  \tenp_{n = 1}^N \xbf^{(n)}  }{ \tftensorend }$.
		Hence, functions realized by this class of networks are naturally represented via tensor factorization, where the number of components $R$ and the weight matrices $\Wbf^{(1)}, \ldots, \Wbf^{(N)}$ of the factorization correspond to the width and learnable weights of the network, respectively.
		\textbf{Bottom:} for a hierarchical tensor factorization induced by a perfect $P$-ary mode tree (Definition~\ref{htf:def:mode_tree}), the equivalent network is a deep variant of that associated with tensor factorization.
		It has $L = \log_P N$ hidden layers instead of just one, with channel-wise product pooling operating over windows of size $P$ as opposed to globally.
		After passing an input \smash{$( \xbf^{(1)}, \ldots, \xbf^{(N)} ) \in \R^{D_1} \times \cdots \times \R^{D_N}$} through all hidden layers, a final linear layer produces the network's scalar output $\inprodbig{  \tenp_{n = 1}^N \xbf^{(n)}  }{ \tensorend }$, where $\tensorend$ is the end tensor of the hierarchical tensor factorization (Equation~\eqref{htf:eq:ht_end_tensor}), whose weight matrices are equal to the network's learnable weights.
		Thus, functions realized by this class of networks are naturally represented via hierarchical tensor factorization.
		We note that, as shown in~\cite{cohen2016convolutional}, by considering \emph{generalized hierarchical tensor factorizations} it is possible to account for various non-linearities beyond polynomial, \ie~for product pooling being converted to a different pooling operator (\eg~max or average), optionally preceded by a non-linear activation (\eg~rectified linear unit).
	}
	\label{htf:fig:tf_htf_as_convnet}
	\vspace{-2mm}
\end{figure*}

As in matrix factorization, gradient flow over tensor factorization induces invariants of optimization.
In particular, the differences between squared norms of vectors in the same component, \ie~\smash{$\normnoflex{ \Wbf_{:,r}^{ (n) } (t) }^2 - \normnoflex{ \Wbf_{:, r}^{ (n') } (t) }^2$} for $n, n' \in [N]$ and $r \in [R]$, are constant through time (\cf~\cref{tf:lem:balancedness_conservation_body} in \cref{tf:sec:dynamic}).
This leads to the following definition of unbalancedness magnitude: \smash{$\max_{n, n', r} \big | \normnoflex{ \Wbf_{:,r}^{ (n) } (t) }^2 - \normnoflex{ \Wbf_{:, r}^{ (n') } (t) }^2  \big |$}, which does not change during optimization, therefore remains small throughout if initialization is close to the origin.
Under the idealized assumption of unbalancedness magnitude zero (corresponding to infinitesimally small initialization), the norm of the $r$'th component in the factorization ($r \in [R]$), \ie~\smash{$\tfcompnorm{r} (t) := \normnoflex{ \tenp_{n = 1}^N \Wbf^{(n)}_{:, r} (t) }$}, evolves by (\cf~\cref{tf:cor:dyn_fac_comp_norm_balanced} in \cref{tf:sec:dynamic}): 
\be
\frac{d}{dt} \tfcompnorm{r} (t) = \tfcompnorm{r} (t)^{2 - \frac{2}{N}} N \inprodbig{ - \nabla \tfendloss ( \tftensorend (t) ) }{ \tfcomp{r} (t) }
\text{\,,}
\label{htf:eq:comp_norm_tf_dyn}
\ee
where $\tfcomp{r} (t) :=  \tenp_{n = 1}^N \widebar{\Wbf}^{(n)}_{:, r} (t)$, with $\widebar{\Wbf}^{(n)}_{:, r} (t)$ defined as $\Wbf^{(n)}_{:, r} (t) / \normnoflex{\Wbf^{(n)}_{:, r} (t)}$ for all $n \in [N]$ (by convention, if $\Wbf^{(n)}_{:, r} (t) = 0$ then $\widebar{\Wbf}^{(n)}_{:, r} (t) = 0$), denotes the $r$'th normalized component.
Comparing Equation~\eqref{htf:eq:comp_norm_tf_dyn} to Equation~\eqref{htf:eq:sing_val_mf_dyn} reveals that the evolution rate of a component norm in tensor factorization is \emph{structurally identical} to that of a singular value in matrix factorization.
Specifically, it is determined by two factors, analogous to those in Equation~\eqref{htf:eq:sing_val_mf_dyn}:
\emph{(i)}~a projection of the normalized component \smash{$\tfcomp{r} (t)$ onto $ - \nabla \tfendloss ( \tftensorend (t) )$}, which encourages growth of components that are aligned with the direction of steepest descent with respect to the end tensor; 
and
\emph{(ii)}~\smash{$\tfcompnorm{r} (t)^{2 - \frac{2}{N}} N$}, which induces a momentum-like effect, leading component norms to move slower when small and faster when large.
This suggests that, in analogy with matrix factorization, components tend to be learned incrementally, yielding a bias towards low tensor rank.
For completeness, Figure~\ref{htf:fig:mf_tf_htf_dynamics} (middle) demonstrates the phenomenon empirically, reproducing the experiment from~\cref{tf:sec:experiments:dyn}.
Recall that, similarly to the case of matrix factorization, under certain technical conditions we used in \cref{tf:sec:rank} the incremental tensor rank learning phenomenon to prove exact tensor rank minimization.

\section{Hierarchical Tensor Factorization}
\label{htf:sec:htf}

In this section we present the hierarchical tensor factorization model.
We begin by informally introducing the core concepts (\subsect~\ref{htf:sec:htf:informal}), after which we delve into the formal definitions (\subsect~\ref{htf:sec:htf:formal}).

\subsection{Informal Overview and Interpretation as Deep Non-Linear Convolutional Network}
\label{htf:sec:htf:informal}

As discussed in \subsect~\ref{htf:sec:prelim:tf}, tensor factorization produces an order $N$ end tensor through a sum of components, each combining $N$ vectors using the tensor product operator.
It is customary to represent this computation through a shallow tree structure with $N$ leaves, corresponding to the weight matrices $\weightmat{1}, \ldots, \weightmat{N}$, that are directly connected to the root, which computes the end tensor $\tftensorend = \sum_{r = 1}^R \Wbf^{(1)}_{:,r} \tenp \cdots \tenp \Wbf^{(N)}_{:, r}$.

Generalizing this scheme to an arbitrary tree gives rise to hierarchical tensor factorization.
Given a tree, or formally, a \emph{mode tree} of the hierarchical tensor factorization (\defin~\ref{htf:def:mode_tree}), the scheme progresses from leaves to root.
Each internal node combines tensors produced by its children to form higher-order tensors, until finally the root outputs an order $N$ end tensor.
Different mode trees bring about different hierarchical tensor factorizations, which are essentially a composition of many local tensor factorizations, each corresponding to a different location in the mode tree.
We refer to the components of these local tensor factorizations as the \emph{local components} (\defin~\ref{htf:def:local_comp}) of the hierarchical factorization~---~see \fig~\ref{htf:fig:tf_htf_comp_tree} for an illustration.

\begin{figure*}
	\begin{center}
		\includegraphics[width=1\textwidth]{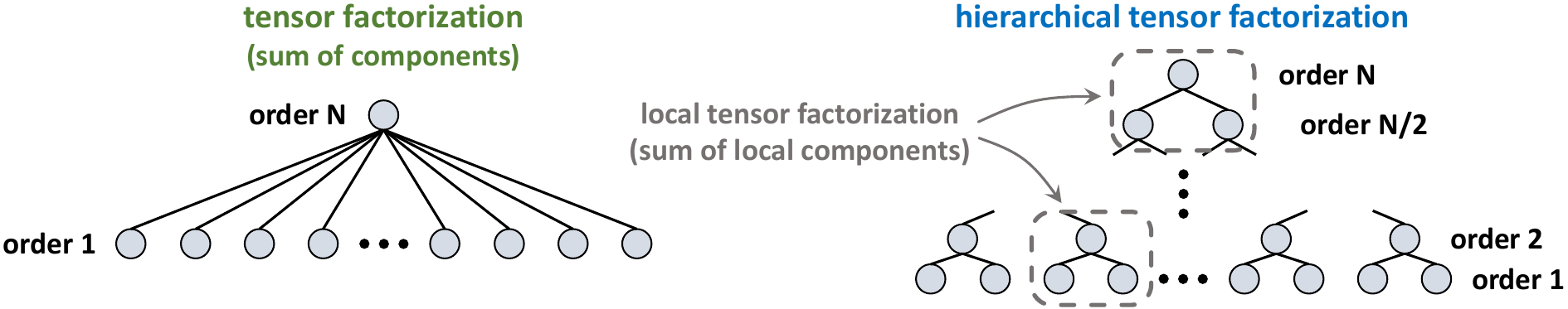}
	\end{center}
	\vspace{-2mm}
	\caption{
		Hierarchical tensor factorization consists of multiple local tensor factorizations.
		\textbf{Left:} tensor factorization represents an order $N$ tensor as a sum of components, each combining $N$ vectors through the tensor product operator.
		Accordingly, it is represented by a shallow tree where all leaves are directly connected to the root.
		\textbf{Right:} hierarchical tensor factorization adheres to an arbitrary tree structure (figure depicts a perfect binary tree), producing an order $N$ tensor by iteratively combining multiple local tensor factorizations.
		The components of the local tensor factorizations constituting the hierarchical tensor factorization are defined to be its local components.
		For a formal description of hierarchical tensor factorization see \subsect~\ref{htf:sec:htf:formal}.
	}
	\label{htf:fig:tf_htf_comp_tree}
	\vspace{-1.5mm}
\end{figure*}

A mode tree of a hierarchical tensor factorization induces a notion of rank called \emph{hierarchical tensor rank} (\defin~\ref{htf:def:ht_rank}).
The hierarchical tensor rank is a tuple whose entries correspond to locations in the mode tree.
The value held by an entry is characterized by the number of local components at the corresponding location, similarly to how tensor rank is characterized by the number of components in a tensor factorization (see \subsect~\ref{htf:sec:prelim:tf}).
Motivated by matrix and tensor ranks being implicitly minimized in matrix and tensor factorizations, respectively (\sect~\ref{htf:sec:prelim}), in \sect~\ref{htf:sec:inc_rank_lrn} we explore the possibility of hierarchical tensor rank being implicitly minimized in hierarchical tensor factorization.
That is, we investigate the prospect of gradient descent (with small learning rate and near-zero initialization) over hierarchical tensor factorization learning solutions that can be represented with few local components at all locations of the mode tree.

\textbf{Equivalence to a class of deep non-linear convolutional networks.}~~As discussed in \sect~\ref{htf:sec:prelim}, matrix factorization can be seen as a linear neural network, and, in a similar vein, tensor factorization corresponds to a certain shallow (depth two) non-linear convolutional network (with polynomial non-linearity).
A drawback of these models as theoretical surrogates for deep learning is that the former lacks non-linearity, while the latter misses depth.
Hierarchical tensor factorization accounts for both of these limitations: for appropriate mode trees, it is known (see~\cite{cohen2016expressive}) to be equivalent to a class of deep non-linear convolutional networks (with polynomial non-linearity).
These networks have demonstrated promising performance in practice~\cite{cohen2014simnets,cohen2016deep,sharir2016tensorial,stoudenmire2018learning,grant2018hierarchical,felser2021quantum}, and their equivalence to hierarchical tensor factorization has been key to the study of expressiveness in deep learning~\cite{cohen2016expressive,cohen2016convolutional,cohen2017inductive,cohen2017analysis,cohen2018boosting,sharir2018expressive,levine2018benefits,levine2018deep,balda2018tensor,khrulkov2018expressive,khrulkov2019generalized,levine2019quantum}.
The equivalence is illustrated in \fig~\ref{htf:fig:tf_htf_as_convnet} (bottom) and rigorously proven in \app~\ref{htf:app:htf_cnn}.

\subsection{Formal Presentation}
\label{htf:sec:htf:formal}
The structure of a hierarchical tensor factorization is determined by a mode tree.
\begin{definition}
	\label{htf:def:mode_tree}
	Let $N \in \N$.
	A \emph{mode tree} $\T$ over $[N]$ is a rooted tree in which:
	\begin{itemize}[topsep=0pt, partopsep=0pt,itemsep=1pt]
		\item every node is labeled by a subset of $[N]$;
		\item there are exactly $N$ leaves, labeled $\{ 1 \}, \ldots, \{ N \}$; and
		\item the label of an interior (non-leaf) node is the union of the labels of its children.
	\end{itemize}
\end{definition}

\noindent
We identify nodes with their labels, \ie~with the corresponding subsets of~$[N]$, and accordingly treat~$\T$ as a subset of~$2^{[N]}$.
Furthermore, we denote the set of all interior nodes by $\interior (\T) \subset \T$, the parent of a non-root node $\nu \in \T \setminus \{ [N] \}$ by $\parent (\nu) \in \T$, and the children of $\nu \in \interior (\T)$ by $\children (\nu) \subset \T$.
When enumerating over children of a node, \ie~over $\children (\nu)$ for $\nu \in \interior (\htmodetree)$, an arbitrary fixed ordering is assumed.

One may consider various mode trees, each leading to a different hierarchical tensor factorization.
Notable choices include: \emph{(i)} a shallow tree (comprising only leaves and root), which reduces the hierarchical tensor factorization to a tensor factorization (\subsect~\ref{htf:sec:prelim:tf}); and \emph{(ii)} a perfect binary tree (applicable if $N$ is a power of $2$) whose corresponding hierarchical tensor factorization is perhaps the most extensively studied.
\fig~\ref{htf:fig:tf_htf_comp_loc_comp}(a) illustrates these two choices.

\begin{figure*}
	\begin{center}
		\includegraphics[width=1\textwidth]{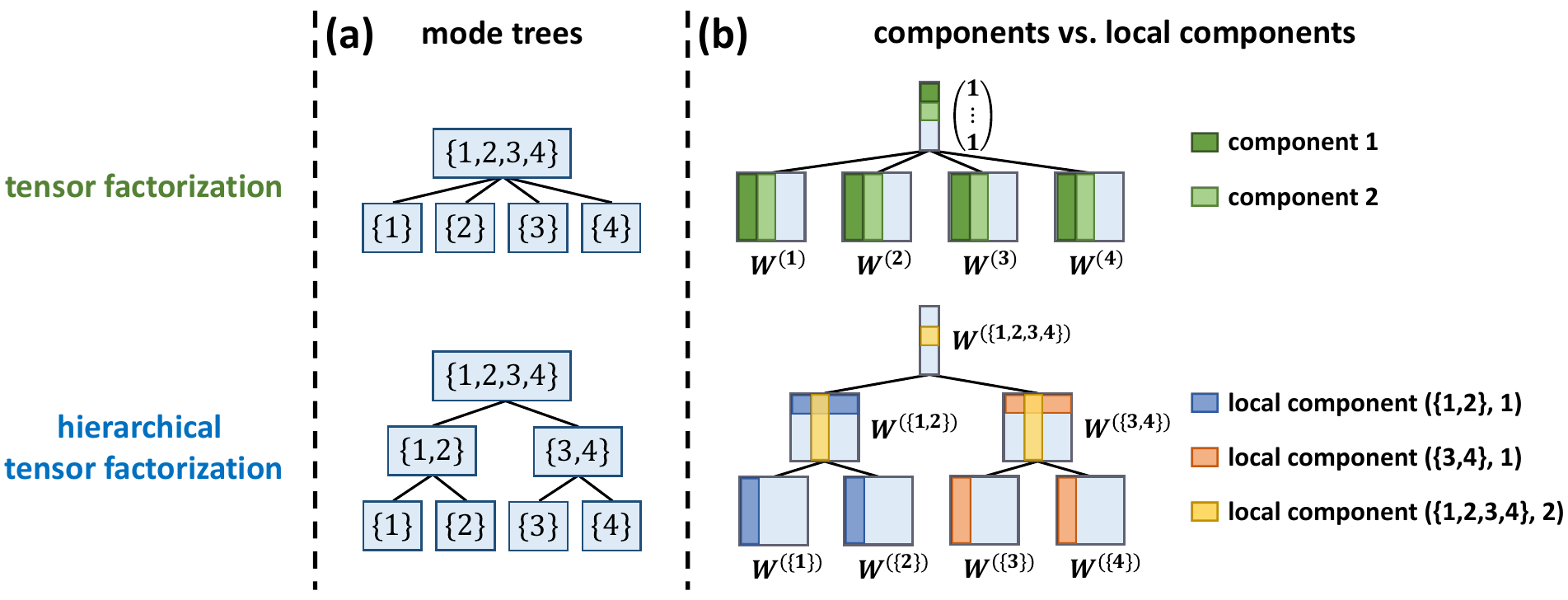}
	\end{center}
	\vspace{-2mm}
	\caption{
		\textbf{(a)} Exemplar mode trees (\defin~\ref{htf:def:mode_tree}) for order $N = 4$ hierarchical tensor factorization.
		Top corresponds to the degenerate case of tensor factorization, while bottom represents the most common choice (perfect binary tree).
		\textbf{(b)} Components of a tensor factorization (top) vs. local components (\defin~\ref{htf:def:local_comp}) of a hierarchical tensor factorization (bottom).
		The $r$'th component of a tensor factorization can be seen as the tensor product between a linear coefficient, which is set to~$1$, and the $r$'th columns of $\Wbf^{(1)}, \ldots, \Wbf^{(N)}$.
		The local components of a hierarchical tensor factorization are the components of the local tensor factorizations forming it. For example, the $r$'th local component at node $\{1, 2\}$ in the hierarchical tensor factorization illustrated above is the tensor product between the $r$'th row of $\weightmat{\{1, 2\}}$ and the $r$'th columns of its children's weight matrices $\weightmat{\{ 1 \}}$ and $\weightmat{\{ 2\}}$.
	}
	\label{htf:fig:tf_htf_comp_loc_comp}
	\vspace{-1.5mm}
\end{figure*}

Along with a mode tree $\htmodetree$, what defines a hierarchical tensor factorization are the number of local components at each interior node, denoted $( R_{\nu} \in \N )_{\nu \in \interior ( \htmodetree )}$.
The induced hierarchical tensor factorization is parameterized by weight matrices $( \weightmat{\nu} \in \R^{R_\nu \times R_{\parent (\nu)} } )_{ \nu \in \htmodetree}$, where $R_{\parent ( [N] )} := 1$ and $R_{\{1\}} := D_1, \ldots, R_{\{N\}} := D_N$.
It creates the \emph{end tensor} $\tensorend \in \R^{D_1 \times \cdots \times D_N}$ by constructing intermediate tensors of increasing order while traversing $\htmodetree$ from leaves to root as follows:
\be
\begin{split}
	&\text{for all $\nu \in \left \{ \{1\}, \ldots, \{ N\} \right \}$ and $r \in [ R_{ \parent (\nu ) } ]$:} \\
	& \quad\underbrace{ \htftensorpart{\nu}{r} }_{\text{order $1$}} := \weightmat{\nu}_{:, r} \text{\,,} \\[1mm]
	&\text{for all $\nu \in \interior (\htmodetree) \setminus \left \{ [N] \right \}$ and $r \in [ R_{\parent (\nu )} ]$ (traverse interior nodes of $\htmodetree$ } \\[-1.2mm]
	& \text{from leaves to root, non-inclusive):} \\
	&\quad \underbrace{\htftensorpart{\nu}{r}}_{\text{order $\abs{\nu}$}} := \pi_\nu \biggl ( \sum\nolimits_{r' = 1}^{R_\nu} \weightmat{\nu}_{r', r} \Bigl [ \tenp_{\nu_c \in \children ( \nu )} \htftensorpart{ \nu_c }{ r' } \Bigr ] \biggr ) \text{\,,} \\[0mm]
	&\underbrace{\tensorend}_{\text{order $N$}} := \pi_{ [N] } \biggl ( \sum\nolimits_{r' = 1}^{R_{ [N] }} \weightmat{ [N] }_{r', 1} \Bigl [ \tenp_{ \nu_c \in \children ( [N] )} \htftensorpart{\nu_c}{r'} \Bigr ] \biggr ) 
	\text{\,,}
\end{split}
\label{htf:eq:ht_end_tensor}
\ee
where $\pi_\nu$, for $\nu \in \htmodetree$, is a \emph{mode permutation} operator which arranges the modes (axes) of its input such that they comply with an ascending order of $\nu$.\footnote{
	For $\nu \in \interior (\htmodetree)$, denote its $K := \abs{ \children (\nu) }$ children by $\nu_1, \ldots, \nu_K$, and the elements of $\nu_k$ by $j^k_1 < \cdots < j^k_{ \abs{\nu_k}}$ for $k \in [K]$.
	Let $h : [\abs{\nu}] \to [\abs{\nu}]$ be the permutation sorting $\big ( j^1_1 , \ldots,  j^1_{ \abs{\nu_1} } ,  \ldots, j^{K}_1, \ldots, j^K_{ \abs{\nu_K} } \big )$ in ascending order.
	Then, the mode permutation operator for $\nu$ is defined by: $\pi_\nu (\W)_{d_1, \ldots, d_{ \abs{\nu} } } = \W_{d_{h(1)}, \ldots, d_{h ( \abs{\nu} ) }}$, where $\W$ is an order $\abs{ \nu }$ tensor.
}

Hierarchical tensor factorization can be viewed as a composition of multiple local tensor factorizations, one for each interior node in the mode tree.
The local tensor factorization for $\nu \in \interior (\htmodetree)$ comprises $R_\nu$ components, referred to as the local components at node $\nu$ of the hierarchical tensor factorization~---~see \fig~\ref{htf:fig:tf_htf_comp_loc_comp}(b) for an illustration, and \defin~\ref{htf:def:local_comp} below.

\begin{definition}
	\label{htf:def:local_comp}
	For $\nu \in \interior (\htmodetree)$ and $ r \in [R_\nu]$, the $(\nu, r)$'th \emph{local component} of the hierarchical tensor factorization is $\weightmat{\nu}_{r, :} \! \tenp \! \bigl ( \tenp_{\nu_c \in \children (\nu)} \weightmat{\nu_c}_{:, r} \bigr )$.
	We use $\localcomp (\nu, r)$ to denote the set comprising $\weightmat{\nu}_{r, :}$ and $\big ( \weightmat{\nu_c}_{:, r} \big )_{\nu_c \in \children ( \nu ) }$, and \smash{$\htfcompnorm{\nu}{r} := \normbig{ \tenp_{\wbf \in \localcomp (\nu, r)} \wbf }$} to denote the norm of the $(\nu, r)$'th local component.
\end{definition}

Mode trees of hierarchical tensor factorizations give rise to the notion of hierarchical tensor rank (\cf~\cite{grasedyck2013literature}), which is based on matrix ranks of specific \emph{matricizations} of a tensor (\cf~\subsect~3.4 in~\cite{kolda2006multilinear}).

\begin{definition}
	\label{htf:def:matricization}
	The \emph{matricization} of $\W \in \R^{D_1 \times \cdots \times D_N}$ with respect to $I \subset [N]$, denoted $\mat{\W}{I} \in \R^{\prod_{i \in I} D_i \times \prod_{j \in [N] \setminus I} D_j}$, is its arrangement as a matrix where rows correspond to modes indexed by $I$ and columns correspond to the remaining modes.\footnote{
		Denoting the elements in $I$ by $i_1 < \cdots < i_{\abs{I}}$ and those in $[N] \setminus I$ by $j_1 < \cdots < j_{ N - \abs{I} }$, the matricization $\mat{\W}{I}$ holds the entries of $\W$ such that $\W_{d_1, \ldots, d_N}$ is placed in row index $1 + \sum_{l = 1}^{\abs{I}} (d_{i_{l}} - 1) \prod_{l' = 1}^{l - 1} D_{i_{l'}}$ and column index $1 + \sum_{l = 1}^{  N - \abs{I} } ( d_{ j_{l} } - 1 ) \prod_{l' = 1}^{l - 1} D_{ j_{l'} }$.
	}
\end{definition}

\begin{definition}
	\label{htf:def:ht_rank}
	The \emph{hierarchical tensor rank} of $\W \in \R^{D_1 \times \cdots \times D_N}$ with respect to mode tree $\T$ is the tuple comprising the matrix ranks of $\W$'s matricizations according to all nodes in $\htmodetree$ except for the root, \ie~$( \rank \mat{ \W }{ \nu  } )_{\nu \in {\T \setminus \{ [N] \} }}$.
	The order of entries in the tuple does not matter as long as it is consistent.
\end{definition}
\noindent
Unless stated otherwise, when referring to the hierarchical tensor rank of a hierarchical tensor factorization's end tensor, the rank is with respect to the mode tree of the factorization. 
Hierarchical tensor rank differs markedly from tensor rank.
Specifically, even when the hierarchical tensor rank is low, \ie~the matrix ranks of matricizations according to all nodes in the mode tree are low, the tensor rank is typically extremely high (exponential in the order of the tensor~---~see~\cite{cohen2016deep}).

Lemma~\ref{htf:lem:loc_comp_ht_rank_bound} below states that the number of local components in a hierarchical tensor factorization controls the hierarchical tensor rank of its end tensor.
More precisely, $R_\nu$~---~the number of local components at $\nu \in \interior (\htmodetree)$~---~upper bounds the matrix rank of matricizations according to the children of $\nu$.

\begin{lemma}[adaptation of \thm~7 in~\cite{cohen2018boosting}]
	\label{htf:lem:loc_comp_ht_rank_bound}
	For any interior node $\nu \in \interior (\htmodetree)$ and child $\nu_c \in \children (\nu)$, it holds that $\rank \mat{\tensorend }{ \nu_c} \leq R_{\nu}$.
\end{lemma}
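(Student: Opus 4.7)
The plan is to show that the end tensor admits a decomposition of the form
\[
\tensorend \;=\; \pi\Bigl(\sum\nolimits_{r'=1}^{R_\nu} \htftensorpart{\nu_c}{r'} \tenp \B_{r'}\Bigr)
\text{\,,}
\]
where $\pi$ is a mode permutation restoring the ascending order of modes $1,\ldots,N$, and each $\B_{r'}$ is a tensor whose modes are exactly those in $[N]\setminus\nu_c$. Given such a representation, the desired bound follows from the standard matricization identity $\mat{\A \tenp \B}{\nu_c} = \vect(\A)\vect(\B)^\top$ for tensors $\A,\B$ whose modes partition $[N]$ along $\nu_c$ vs.\ $[N]\setminus\nu_c$: then $\mat{\tensorend}{\nu_c}$ is (up to row and column permutations induced by $\pi$) a sum of $R_\nu$ rank-one matrices, which yields $\rank \mat{\tensorend}{\nu_c} \leq R_\nu$.

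The decomposition is built by induction on ancestors $\nu' \supseteq \nu$ of $\nu_c$, progressing from $\nu$ up to the root $[N]$. For the base case $\nu' = \nu$, the defining formula for $\htftensorpart{\nu}{r}$ in Equation~\eqref{htf:eq:ht_end_tensor} is already a sum of $R_\nu$ tensor products over $r' \in [R_\nu]$; isolating the factor $\htftensorpart{\nu_c}{r'}$ in each term yields
\[
\htftensorpart{\nu}{r} \;=\; \pi_\nu\Bigl(\sum\nolimits_{r'=1}^{R_\nu} \htftensorpart{\nu_c}{r'} \tenp \B^{(\nu)}_{r',r}\Bigr)
\text{\,,}
\]
with $\B^{(\nu)}_{r',r} := \weightmat{\nu}_{r',r} \cdot \bigl(\tenp_{\nu_c' \in \children(\nu)\setminus\{\nu_c\}} \htftensorpart{\nu_c'}{r'}\bigr)$, a tensor with modes in $\nu\setminus\nu_c$. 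For the inductive step at $\nu'' = \parent(\nu')$, exactly one child of $\nu''$ contains $\nu_c$, namely $\nu'$ itself. Substituting the inductive expression for $\htftensorpart{\nu'}{r''}$ into the formula for $\htftensorpart{\nu''}{r}$ and swapping the two finite sums (over $r'' \in [R_{\nu''}]$ and $r' \in [R_\nu]$) lets me pull the factor $\htftensorpart{\nu_c}{r'}$ outside, giving a representation of the same shape at $\nu''$. The new coefficient tensors $\B^{(\nu'')}_{r',r}$ combine $\B^{(\nu')}_{r',r''}$, the scalars $\weightmat{\nu''}_{r'',r}$, and the sibling factors $\htftensorpart{\mu}{r''}$ for $\mu \in \children(\nu'')\setminus\{\nu'\}$, and have modes in $\nu''\setminus\nu_c$. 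Crucially, the outer sum is always indexed by $r' \in [R_\nu]$, so the range $R_\nu$ persists all the way to the root; evaluating the inductive statement at $\nu' = [N]$ (where $R_{\parent([N])} = 1$) gives the decomposition of $\tensorend$ stated above.

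The main point requiring care, rather than a genuine obstacle, is bookkeeping the mode permutations $\pi_{\nu'}$ introduced at each interior node along the way. I would absorb them into the single composite permutation $\pi$ acting on the $N$ modes of $\tensorend$. Because $\pi$ maps modes in $\nu_c$ to modes in $\nu_c$ and modes in $[N]\setminus\nu_c$ to themselves~---~it merely reorders modes within each group to restore the ascending convention~---~it permutes rows and columns within the respective blocks of the $\nu_c$-matricization without altering its rank, so the bound $R_\nu$ is preserved.
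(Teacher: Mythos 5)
Your proof is correct and follows essentially the same route as the paper's: both propagate the $R_\nu$-term decomposition arising at node $\nu$ up through the ancestors of $\nu_c$ to the root, and handle the mode permutations as rank-preserving row/column permutations of the $\nu_c$-matricization. The only difference is cosmetic~---~you keep the argument at the tensor level and matricize once at the end (sum of $R_\nu$ rank-one matrices), whereas the paper matricizes the recursion from the start and tracks a factor matrix $\Bbf^{(\nu_c)}$ with $R_\nu$ columns via Kronecker-product identities.
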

\begin{proof}
	Deferred to \subapp~\ref{htf:app:proofs:loc_comp_ht_rank_bound}.
\end{proof}
\vspace{-1mm}
\noindent
We may explicitly restrict the hierarchical tensor rank of end tensors $\tensorend$ (\eq~\eqref{htf:eq:ht_end_tensor}) by limiting $( R_\nu )_{\nu \in \interior (\htmodetree)}$.
However, since our interest lies in the implicit regularization of gradient descent, \ie~in the types of end tensors it will find without explicit constraints, we treat the case where $( R_\nu )_{\nu \in \interior (\htmodetree)}$ can be arbitrarily large.

Given a differentiable and locally smooth loss $\htfendloss : \R^{D_1 \times \cdots \times D_N} \to \R_{\geq 0}$, we consider parameterizing the solution as a hierarchical tensor factorization (\eq~\eqref{htf:eq:ht_end_tensor}), and optimizing the resulting (non-convex) objective:
\be
\htfobj \big ( \big ( \weightmat{\nu} \big )_{\nu \in \htmodetree} \big ) := \htfendloss ( \tensorend )
\text{\,.}
\label{htf:eq:htf_obj}
\ee
In line with analyses of implicit regularization in matrix and tensor factorizations (see \sect~\ref{htf:sec:prelim}), we model small learning rate for gradient descent via gradient flow:
\be
\frac{d}{dt} \weightmat{\nu} (t) = - \frac{\partial}{ \partial \weightmat{\nu} } \htfobj \big (  \big ( \weightmat{\nu'} (t) \big )_{\nu' \in \htmodetree} \big )
\label{htf:eq:gf_htf}
\ee
for all $t \geq 0$ and $\nu \in \htmodetree$, where $( \weightmat{\nu} (t) )_{\nu \in \htmodetree}$ denote the weight matrices at time $t$ of optimization.

Over matrix and tensor factorizations, gradient flow initialized near zero is known to minimize matrix and tensor ranks, respectively (\sect~\ref{htf:sec:prelim}).
In particular, it leads to solutions that can be represented using few components.
A natural question that arises is whether a similar phenomenon takes place in hierarchical tensor factorization: does gradient flow with small initialization learn solutions that can be represented with few local components at all locations of the mode tree?
That is, does it learn solutions of low hierarchical tensor rank?
In \sect~\ref{htf:sec:inc_rank_lrn} we answer this question affirmatively.

\section{Incremental Hierarchical Tensor Rank Learning}
\label{htf:sec:inc_rank_lrn}

In this section we theoretically analyze the implicit regularization in hierarchical tensor factorization.
Our analysis extends known results for matrix and tensor factorizations outlined in \sect~\ref{htf:sec:prelim}.
In particular, we show that the implicit regularization in hierarchical tensor factorization induces an incremental learning process that results in low hierarchical tensor rank, similarly to how matrix and tensor factorizations incrementally learn solutions with low matrix and tensor ranks, respectively.
To facilitate this extension, while overcoming the challenges arising from the complexity of the hierarchical tensor factorization model, we characterize the evolution of the local components introduced in \sect~\ref{htf:sec:htf}.
Our analysis is delivered in \subsects~\ref{htf:sec:inc_rank_lrn:evolution},~\ref{htf:sec:inc_rank_lrn:imp_htr_min}, and~\ref{htf:sec:inc_rank_lrn:partial_order}.
For the convenience of the reader, \subsect~\ref{htf:sec:inc_rank_lrn:informal} provides an informal overview.

\subsection{Informal Overview}
\label{htf:sec:inc_rank_lrn:informal}

As discussed in \sect~\ref{htf:sec:prelim}, for both matrix and tensor factorizations, there exists an invariant of optimization whose deviation from zero is referred to as unbalancedness magnitude, and it is common to treat the case of unbalancedness magnitude zero as an idealization of standard near-zero initializations.
With unbalancedness magnitude zero, singular values in a matrix factorization evolve by \eq~\eqref{htf:eq:sing_val_mf_dyn}, and component norms in a tensor factorization move per \eq~\eqref{htf:eq:comp_norm_tf_dyn}.
Equations~\eqref{htf:eq:sing_val_mf_dyn} and~\eqref{htf:eq:comp_norm_tf_dyn} are structurally identical, and are interpreted as implying incremental learning of singular values and component norms, respectively, \ie~of matrix and tensor ranks, respectively.
This interpretation was initially supported by experiments (such as those reported in \fig~\ref{htf:fig:mf_tf_htf_dynamics} (left and middle)), and later via proofs of exact matrix and tensor rank minimization under certain technical conditions.

In \subsect~\ref{htf:sec:inc_rank_lrn:evolution} we show that in analogy with matrix and tensor factorizations, hierarchical tensor factorization entails an invariant of optimization (\lem~\ref{htf:lem:loc_comp_sq_norm_diff_invariant}), which leads to a corresponding notion of unbalancedness magnitude (\defin~\ref{htf:def:unbal_mag}).
For the canonical case of unbalancedness magnitude zero (corresponding to standard near-zero initializations), we prove that the norm of the $r$'th local component associated with node $\nu$ in the mode tree, denoted $\htfcompnorm{\nu}{r} (t)$, evolves by (\thm~\ref{htf:thm:loc_comp_norm_bal_dyn}):
\be
\vspace{3.5mm}
\frac{d}{dt} \htfcompnorm{\nu}{r} \! (t) \! = \!  \htfcompnorm{\nu}{r} \! (t)^{2 - \frac{2}{ L_{\nu} } } L_\nu \inprodbig{- \nabla \htfendloss ( \tensorend (t) ) }{ \htfcomp{\nu}{r} (t) }
\text{,}
\label{htf:eq:informal_loc_comp_norm_bal_dyn}
\ee
where $L_\nu$ is the number of weight vectors in the local component and $\htfcomp{\nu}{r} (t)$ is the direction it imposes on the end tensor $\tensorend (t)$.
\app~\ref{htf:app:dyn_arbitrary} generalizes the above theorem by relieving the assumption of unbalancedness magnitude zero.
Namely, it establishes that \eq~\eqref{htf:eq:informal_loc_comp_norm_bal_dyn} holds approximately when unbalancedness magnitude at initialization is small.
\eq~\eqref{htf:eq:informal_loc_comp_norm_bal_dyn} is structurally identical to Equations~\eqref{htf:eq:sing_val_mf_dyn} and~\eqref{htf:eq:comp_norm_tf_dyn}, therefore the evolution rate of a local component norm in hierarchical tensor factorization mirrors the evolution rates of a singular value in matrix factorization and a component norm in tensor factorization.
One is thus led to interpret \eq~\eqref{htf:eq:informal_loc_comp_norm_bal_dyn} as implying incremental learning of local component norms, \ie~of hierarchical tensor rank (see \sect~\ref{htf:sec:htf}).
We support this interpretation through experiments analogous to those typically conducted for supporting the interpretation of Equations~\eqref{htf:eq:sing_val_mf_dyn} and~\eqref{htf:eq:comp_norm_tf_dyn} as implying incremental learning of matrix and tensor ranks, respectively~---~see \fig~\ref{htf:fig:mf_tf_htf_dynamics} (right) as well as \app~\ref{htf:app:experiments}.
Moreover, we consider technical conditions similar to those assumed for proving exact matrix and tensor rank minimization by matrix and tensor factorizations, respectively, and establish theoretical results aimed at facilitating a proof of exact hierarchical tensor rank minimization~---~see \subsect~\ref{htf:sec:inc_rank_lrn:imp_htr_min}.
Completing the missing steps for deriving such a proof is regarded as a promising direction for future work.

Lastly, we discuss the fact that hierarchical tensor rank does not adhere to a natural total ordering, and the potential of partially ordered complexity measures to further our understanding of implicit regularization in deep learning.
See \subsect~\ref{htf:sec:inc_rank_lrn:partial_order} for details.

\subsection{Evolution of Local Component Norms}
\label{htf:sec:inc_rank_lrn:evolution}

\lem~\ref{htf:lem:loc_comp_sq_norm_diff_invariant} below establishes an invariant of optimization: the differences between squared norms of weight vectors in the same local component are constant through time.
\begin{lemma}
	\label{htf:lem:loc_comp_sq_norm_diff_invariant}
	For all $\nu \in \interior (\htmodetree)$, $r \in [R_\nu]$, and $\wbf, \wbf' \in \localcomp (\nu, r)$:
	\[
	\norm{ \wbf (t) }^2 - \norm{ \wbf' (t) }^2 = \norm{ \wbf (0) }^2 - \norm{ \wbf' (0) }^2 \quad , t \geq 0
	\text{\,.}
	\]
\end{lemma}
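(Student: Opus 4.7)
The plan is to prove that, along the gradient flow \eqref{htf:eq:gf_htf}, the time derivative $\frac{d}{dt}\|\wbf(t)\|^2$ takes the same value for every $\wbf \in \localcomp(\nu, r)$; integration then yields the desired invariant. Applying the chain rule to $\htfobj = \htfendloss \circ \tensorend$, one has
\[
\tfrac{d}{dt}\|\wbf(t)\|^2 \,=\, -2\inprodbig{\nabla \htfendloss(\tensorend(t))}{\,\textstyle\sum_i \wbf(t)_i\,\partial \tensorend(t)/\partial \wbf_i\,}\text{\,,}
\]
so it suffices to exhibit a tensor $\htftensorpart{\nu,r}{*}(t)$ for which $\sum_i \wbf_i\,\partial \tensorend/\partial \wbf_i \,=\, \htftensorpart{\nu,r}{*}$ uniformly over $\wbf \in \localcomp(\nu, r)$.

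The key structural observation I intend to exploit is that every vector in $\localcomp(\nu, r)$ enters the recursion \eqref{htf:eq:ht_end_tensor} in a single common location: the $r$-th summand in the construction of $\htftensorpart{\nu}{\bar r}$ (for $\bar r \in [R_{\parent(\nu)}]$), which reads $\weightmat{\nu}_{r, \bar r} \cdot \tenp_{\nu_c \in \children(\nu)} \htftensorpart{\nu_c}{r}$. Holding all other weights fixed and viewing $\tensorend$ as a function of a single $\wbf \in \localcomp(\nu, r)$, this summand is linear in $\wbf$ (by multilinearity of the tensor product and linearity of $\pi_\nu$), while no other summand depends on $\wbf$. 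Hence $\tensorend(\wbf) = A_\wbf \wbf + B_\wbf$, with $A_\wbf\wbf$ linear in $\wbf$ and $B_\wbf = \tensorend|_{\wbf=0}$ independent of it.

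Crucially, setting \emph{any} element of $\localcomp(\nu, r)$ to zero has the same effect on $\tensorend$: if $\wbf = \weightmat{\nu}_{r, :}$, the coefficient in front of $\tenp_{\nu_c}\htftensorpart{\nu_c}{r}$ vanishes for every $\bar r$; if $\wbf = \weightmat{\nu_c}_{:, r}$ for some $\nu_c \in \children(\nu)$, then $\htftensorpart{\nu_c}{r} = 0$ and the same summand is annihilated. In either case, the $r$-th summand at node $\nu$ disappears and nothing else changes. Thus $B_\wbf = B_{\wbf'}$ for all $\wbf, \wbf' \in \localcomp(\nu, r)$, and consequently $A_\wbf \wbf = A_{\wbf'} \wbf'$. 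Invoking Euler's identity for homogeneous functions of degree one, $\sum_i \wbf_i\,\partial\tensorend/\partial \wbf_i = A_\wbf \wbf$, so this common value furnishes the desired $\wbf$-independent tensor $\htftensorpart{\nu,r}{*}(t)$. Consequently $\frac{d}{dt}(\|\wbf(t)\|^2 - \|\wbf'(t)\|^2) = 0$, and integrating from $0$ to $t$ gives the stated conservation law.

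I expect the main obstacle to be formalizing the claim that, for \emph{every} position of $\nu$ in the mode tree, the dependence of $\tensorend$ on $\wbf \in \localcomp(\nu, r)$ is routed exclusively through the $r$-th summand at node $\nu$ and is linear there. This requires a short induction along the path from $\nu$ up to the root $[N]$: at each ancestor $\nu' \in \htmodetree$, the recursion combines the child tensor containing $\wbf$'s influence as a single factor in a sum of tensor products, so linearity in $\wbf$ propagates upward, and the mode permutations $\pi_{\nu'}$ pose no issue since they are linear. Edge cases~---~children of $\nu$ that are leaves (where $\htftensorpart{\{j\}}{r} = \weightmat{\{j\}}_{:, r}$ depends linearly on $\weightmat{\{j\}}_{:, r}$ by definition) and $\nu$ itself being a child of the root (where the same aggregation rule applies with $R_{\parent([N])} = 1$)~---~can be absorbed into this induction uniformly. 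In spirit, the argument parallels that of Lemma~\ref{tf:lem:balancedness_conservation_body}, with the $(\nu, r)$-th local component here playing the role of a single component in tensor factorization.
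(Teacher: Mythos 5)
Your proof is correct and takes essentially the same route as the paper's: both reduce the claim to showing that $\tfrac{d}{dt}\norm{\wbf(t)}^2 = 2\inprod{-\nabla \htfendloss(\tensorend(t))}{\sum_i \wbf_i\,\partial\tensorend/\partial\wbf_i}$ with the tensor $\sum_i \wbf_i\,\partial\tensorend/\partial\wbf_i$ independent of which $\wbf \in \localcomp(\nu,r)$ is chosen, and then integrate. Your observation that zeroing any element of the local component annihilates exactly the $r$'th summand at node $\nu$ (hence the constant parts $B_\wbf$ coincide, hence so do the linear parts $A_\wbf\wbf$) is precisely the content of the paper's Lemmas~\ref{htf:lem:ht_weightvec_grad} and~\ref{htf:lem:tensorend_comp_eq_zeroing_cols_or_rows}, which instead identify the common tensor explicitly as $\htfcompnorm{\nu}{r} \cdot \htfcomp{\nu}{r}$.
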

\begin{proof}[Proof sketch (proof in \subapp~\ref{htf:app:proofs:loc_comp_sq_norm_diff_invariant})]
We show that $\tfrac{d}{dt} \normnoflex{ \wbf (t) }^2 = \tfrac{d}{dt} \normnoflex{\wbf' (t) }^2$ for all $t \geq 0$.
Then, integrating both sides with respect to time completes the proof.
\end{proof}
\noindent
The above invariant leads to the following definition of unbalancedness magnitude.
\begin{definition}
	\label{htf:def:unbal_mag}
	The \emph{unbalancedness magnitude} of a hierarchical tensor factorization (\eq~\eqref{htf:eq:ht_end_tensor}) is:
	\[
	\max\nolimits_{ \nu \in \interior (\htmodetree) , r \in [R_\nu] , \wbf , \wbf' \in \localcomp (\nu, r) } \abs{ \norm{\wbf }^2 - \norm{\wbf'}^2 }
	\text{\,.}
	\]
\end{definition}
\noindent
\lem~\ref{htf:lem:loc_comp_sq_norm_diff_invariant} implies that the unbalancedness magnitude remains constant throughout optimization.
In the common regime of near-zero initialization, it will start off small, and stay small throughout.
The closer initialization is to zero, the smaller the unbalancedness magnitude is.
In accordance with analyses for matrix and tensor factorizations (see \sect~\ref{htf:sec:prelim}), we treat the case of unbalancedness magnitude zero as an idealization of standard near-zero initializations.
\thm~\ref{htf:thm:loc_comp_norm_bal_dyn} analyzes this case, characterizing the dynamics for norms of local components.

\begin{theorem}
	\label{htf:thm:loc_comp_norm_bal_dyn}
	Assume unbalancedness magnitude zero at initialization.
	Let $\tensorend (t)$ denote the end tensor (\eq~\eqref{htf:eq:ht_end_tensor}) and $\big ( \htfcompnorm{\nu}{r} (t) \big )_{\nu \in \interior (\htmodetree), r \in [R_\nu] }$ denote the norms of local components (\defin~\ref{htf:def:local_comp}) at time $t \geq 0$ of optimization.
	Then, for any $\nu \in \interior (\htmodetree)$ and $r \in[R_\nu]$:
	\be
	\frac{d}{dt} \htfcompnorm{\nu}{r} \! (t) \! = \!  \htfcompnorm{\nu}{r} \! (t)^{2 - \frac{2}{ L_{\nu} } } L_\nu \inprodbig{- \nabla \htfendloss ( \tensorend (t) ) }{ \htfcomp{\nu}{r} (t) }
	\text{\,,}
	\label{htf:eq:loc_comp_norm_bal_dyn}
	\ee
	where $L_\nu := \abs{ \children (\nu) } + 1$ is the number of weight vectors in a local component at node $\nu$, and $\htfcomp{\nu}{r} (t) \in \R^{D_1, \ldots, D_N}$ is the end tensor obtained by normalizing the $r$'th local component at node $\nu$ and setting all other local components at node $\nu$ to zero, \ie~by replacing  in \eq~\eqref{htf:eq:ht_end_tensor} $\htftensorpart{\nu}{r'}$ with $\pi_{\nu} \big ( \big ( \htfcompnorm{\nu}{r} \big )^{-1} \weightmat{\nu}_{r, r'} \big [  \tenp_{\nu_c \in \children (\nu)} \htftensorpart{\nu_c}{r}  \big ] \big )$ for all $r' \in [ R_{\parent ( \nu )}]$.
	By convention, $\htfcomp{\nu}{r} (t) = 0$ if $\htfcompnorm{\nu}{r} (t) = 0$.
\end{theorem}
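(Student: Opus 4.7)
The plan is to reduce the computation to two structural ingredients: \emph{(i)}~the invariant of Lemma~\ref{htf:lem:loc_comp_sq_norm_diff_invariant}, which under the zero-unbalancedness hypothesis forces all weight vectors inside a fixed local component to share a common norm at every time; and \emph{(ii)}~the fact that, for fixed values of the remaining weights, $\tensorend$ depends on the $L_\nu$ weight vectors of $\localcomp(\nu, r)$ in a multilinear manner, which is exactly the structure needed to invoke Euler's identity.

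Concretely, I would first use Lemma~\ref{htf:lem:loc_comp_sq_norm_diff_invariant} together with the zero-unbalancedness assumption to conclude $\|\wbf(t)\| = \|\wbf'(t)\|$ for every $\wbf, \wbf' \in \localcomp(\nu, r)$ and all $t \ge 0$; calling this common norm $\eta(t)$, we have $\htfcompnorm{\nu}{r}(t) = \eta(t)^{L_\nu}$. Then, for a single $\wbf \in \localcomp(\nu, r)$, the gradient flow \eqref{htf:eq:gf_htf} together with the chain rule gives
\[
\tfrac{d}{dt}\|\wbf(t)\|^2 \;=\; -2\,\inprodbig{\nabla \htfendloss(\tensorend(t))}{\partial_\wbf \tensorend(t)\,[\wbf(t)]}
\text{\,,}
\]
where $\partial_\wbf \tensorend\,[\wbf]$ denotes the tensor obtained from the Jacobian of $\tensorend$ with respect to $\wbf$ contracted against $\wbf$ itself in the $\wbf$-dimension.

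The key structural step is the multilinear identification. Unwinding~\eqref{htf:eq:ht_end_tensor}, each weight vector in $\localcomp(\nu, r)$~---~whether the row $\weightmat{\nu}_{r,:}$ or a child column $\weightmat{\nu_c}_{:,r}$~---~enters $\tensorend$ only through the $r'=r$ summand of the local tensor factorization at node $\nu$, and does so linearly. Consequently $\tensorend$ splits as a part independent of $\localcomp(\nu, r)$ plus a part that is multilinear of total degree $L_\nu$ in its vectors. Euler's identity applied to this multilinear part yields $\sum_{\wbf \in \localcomp(\nu,r)} \partial_\wbf \tensorend\,[\wbf] = L_\nu\,\widetilde{\tensorend}_{\nu,r}$, and comparing the substitution defining $\htfcomp{\nu}{r}$ (keep only $r'=r$ at node $\nu$ and divide by $\htfcompnorm{\nu}{r}$) to this multilinear part identifies $\widetilde{\tensorend}_{\nu,r} = \htfcompnorm{\nu}{r}\cdot\htfcomp{\nu}{r}$. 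Summing the previous displayed equation over $\wbf \in \localcomp(\nu, r)$ and using again that all $L_\nu$ summands on the left coincide (invariant), I obtain $\tfrac{d}{dt}\eta^2 = -2\,\htfcompnorm{\nu}{r}\,\inprodbig{\nabla \htfendloss(\tensorend)}{\htfcomp{\nu}{r}}$; substituting $\eta^2 = \htfcompnorm{\nu}{r}^{2/L_\nu}$ and rearranging via the chain rule recovers~\eqref{htf:eq:loc_comp_norm_bal_dyn}. The degenerate case $\htfcompnorm{\nu}{r}(t)=0$ is handled separately: zero unbalancedness then forces every $\wbf \in \localcomp(\nu, r)$ to vanish at $t$, so both sides of~\eqref{htf:eq:loc_comp_norm_bal_dyn} equal zero (right-hand side by the convention $\htfcomp{\nu}{r}=0$).

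The main obstacle will be the bookkeeping behind the multilinearity claim: I need to verify that the row $\weightmat{\nu}_{r,:}$ of $\weightmat{\nu}$ and the various columns $\weightmat{\nu_c}_{:,r}$ of the child weight matrices play algebraically symmetric, separately-linear roles in the tree recursion~\eqref{htf:eq:ht_end_tensor}, despite entering as different types of slices, and then to precisely identify the resulting degree-$L_\nu$ contribution as $\htfcompnorm{\nu}{r}\cdot\htfcomp{\nu}{r}$. Once this accounting is in place, the rest of the argument is a direct generalization of the balanced-dynamics derivations for matrix and tensor factorizations recalled in Section~\ref{htf:sec:prelim}.
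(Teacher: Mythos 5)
Your main computation is correct and follows essentially the same route as the paper's: the invariant of \lem~\ref{htf:lem:loc_comp_sq_norm_diff_invariant} reduces everything to a single common norm $\eta(t)$ with $\htfcompnorm{\nu}{r}(t) = \eta(t)^{L_\nu}$, and the multilinear dependence of $\tensorend$ on the $L_\nu$ vectors of $\localcomp(\nu, r)$ identifies the relevant directional derivative with $\htfcompnorm{\nu}{r}\cdot\htfcomp{\nu}{r}$. The paper packages the latter as a per-vector identity~---~$\partial_{\wbf}\tensorend[\wbf] = \htfcompnorm{\nu}{r}\cdot\htfcomp{\nu}{r}$ for each $\wbf \in \localcomp(\nu,r)$ separately, via \lem~\ref{htf:lem:ht_weightvec_grad} and \lem~\ref{htf:lem:tensorend_comp_eq_zeroing_cols_or_rows}~---~rather than through Euler's identity on the sum; since a multilinear map satisfies $\partial_{\xbf_i}\Phi[\xbf_i] = \Phi$ argument by argument, the two formulations are interchangeable, and the ``bookkeeping obstacle'' you anticipate is exactly the content of \lem~\ref{htf:lem:tensorend_comp_eq_zeroing_cols_or_rows} (the row $\weightmat{\nu}_{r,:}$ and the columns $\weightmat{\nu_c}_{:,r}$ each enter $\tensorend$ linearly and only through the $r$'th summand of the local factorization at $\nu$, so the decomposition you describe does hold).

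The one genuine gap is the degenerate case. When $\htfcompnorm{\nu}{r}(t) = 0$ you conclude that ``both sides equal zero,'' but the left-hand side is a time derivative, and $\htfcompnorm{\nu}{r}(\cdot) = \prod_{\wbf \in \localcomp(\nu,r)}\norm{\wbf(\cdot)}$ need not be differentiable at an isolated zero (compare $t \mapsto \abs{t}$); knowing that every vector of $\localcomp(\nu,r)$ vanishes at the single time $t$ says nothing about the derivative there. What is needed~---~and what the paper supplies via \lem~\ref{htf:lem:bal_zero_stays_zero}, using uniqueness of solutions to the gradient-flow ODE~---~is that a local component vanishing at some time vanishes identically for all $t \geq 0$, so that $\htfcompnorm{\nu}{r} \equiv 0$ and its derivative is trivially zero. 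Adding this step makes your argument complete.
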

\begin{proof}[Proof sketch (proof in \subapp~\ref{htf:app:proofs:loc_comp_norm_bal_dyn})]
	If $\htfcompnorm{\nu}{r} (t)$ is zero at some $t \geq 0$, then we show that it must be identically zero through time, leading both sides of \eq~\eqref{htf:eq:loc_comp_norm_bal_dyn} to be equal to zero.
	Otherwise, differentiating the local component's norm with respect to time, we obtain:
	\[
	\begin{split}
		\tfrac{d}{dt} \htfcompnorm{\nu}{r} (t) =
		\inprodbig{ - \nabla \htfendloss ( 	\tensorend (t)) }{ \htfcomp{\nu}{r} (t) }  \cdot \sum\nolimits_{ \wbf \in \localcomp (\nu, r) } \prod\nolimits_{ \wbf' \in \localcomp (\nu, r) \setminus \{ \wbf \} } \norm{ \wbf' (t) }^2
		\text{\,.}
	\end{split}
	\]
	Since the unbalancedness magnitude is zero at initialization, \lem~\ref{htf:lem:loc_comp_sq_norm_diff_invariant} implies that $\normnoflex{ \wbf (t) }^2 = \normnoflex{ \wbf' (t) }^2 = \htfcompnorm{\nu}{r} (t)^{2 / L_\nu}$ for all $\wbf, \wbf' \in \localcomp (\nu, r)$, which together with the expression above for $\tfrac{d}{dt} \htfcompnorm{\nu}{r} (t)$ establishes \eq~\eqref{htf:eq:loc_comp_norm_bal_dyn}.
\end{proof}
\vspace{-0.5mm}

As can be seen from \eq~\eqref{htf:eq:loc_comp_norm_bal_dyn}, the evolution of local component norms in a hierarchical tensor factorization is structurally identical to the evolution of singular values in matrix factorization (\eq~\eqref{htf:eq:sing_val_mf_dyn}) and component norms in tensor factorization (\eq~\eqref{htf:eq:comp_norm_tf_dyn}).
Specifically, it is dictated by two factors: a projection term, $\inprodbig{- \nabla \htfendloss ( \tensorend (t) ) }{ \htfcomp{\nu}{r} (t) }$, and a self-dependence term, $\htfcompnorm{\nu}{r} (t)^{2 - \frac{2}{ L_{\nu} } }  L_\nu$.
Analogous to a singular component $\mfcomp{r} (t)$ in matrix factorization and a normalized component $\tfcomp{r} (t)$ in tensor factorization, $\htfcomp{\nu}{r} (t)$ is the direction that the $(\nu, r)$'th local component imposes on $\tensorend (t)$.\footnote{
	Indeed, just as in matrix factorization $\matrixend = \sum_{r} \mfsing{r} \cdot \mfcomp{r}$, and in tensor factorization $\tftensorend = \sum_{r} \tfcompnorm{r} \cdot \tfcomp{r}$, the end tensor of a hierarchical tensor factorization decomposes as $\tensorend  = \sum_{r = 1}^{R_\nu} \htfcompnorm{\nu}{r} \cdot \htfcomp{\nu}{r}$ (implied by \lems~\ref{htf:lem:ht_multilinear} and~\ref{htf:lem:tensorend_comp_eq_zeroing_cols_or_rows} in \app~\ref{htf:app:proofs}).
}
The projection of $\htfcomp{\nu}{r} (t)$ onto $- \nabla \htfendloss ( \tensorend (t) )$ therefore promotes growth of local components that align $\tensorend (t)$ with $- \nabla \htfendloss ( \tensorend (t) )$, the direction of steepest descent.
More critical is the self-dependence term, $\htfcompnorm{\nu}{r} (t)^{2 - \frac{2}{ L_{\nu} } } L_\nu$, which induces a momentum-like effect that attenuates the movement of small local components and accelerates the movement of large ones.
It suggests that, in analogy with matrix and tensor factorizations, local components tend to be learned incrementally, yielding a bias towards low hierarchical tensor rank.
This prospect is affirmed empirically in \fig~\ref{htf:fig:mf_tf_htf_dynamics} (right) as well as \app~\ref{htf:app:experiments}, and is supported theoretically in \subsect~\ref{htf:sec:inc_rank_lrn:imp_htr_min}.

\vspace{-1.25mm}

\textbf{Evolution of local component norms under arbitrary initialization.}~~\thm~\ref{htf:thm:loc_comp_norm_bal_dyn} can be extended to account for arbitrary initialization, \ie~for initialization with unbalancedness magnitude different from zero.
For conciseness we defer this extension to \app~\ref{htf:app:dyn_arbitrary}, while noting that if initialization has small unbalancedness magnitude~---~as is the case with any near-zero initialization~---~then local component norms approximately evolve per \eq~\eqref{htf:eq:loc_comp_norm_bal_dyn}, \ie~the result of \thm~\ref{htf:thm:loc_comp_norm_bal_dyn} approximately holds.

\subsection{Implicit Hierarchical Tensor Rank Minimization}
\label{htf:sec:inc_rank_lrn:imp_htr_min}

As discussed in \sect~\ref{htf:sec:prelim}, under certain technical conditions, the incremental matrix and tensor rank learning phenomena, induced by the implicit regularization in matrix and tensor factorizations, can be used to prove exact matrix and tensor rank minimization, respectively.
Below we consider similar technical conditions, and provide theoretical results aimed at facilitating an analogous proof for hierarchical tensor factorization, \ie~a proof that its implicit regularization leads to exact hierarchical tensor rank minimization.
We begin by illustrating how, under said conditions, the incremental hierarchical tensor rank learning phenomenon established in \thm~\ref{htf:thm:loc_comp_norm_bal_dyn} leads to solutions with many small local components (\subsect~\ref{htf:sec:inc_rank_lrn:imp_htr_min:illustration}).
We then show that this implies proximity to low hierarchical tensor rank (\subsect~\ref{htf:sec:inc_rank_lrn:imp_htr_min:proximity}).
Throughout the above, the main step missing in order to derive a complete proof of exact hierarchical tensor rank minimization, is confirmation that a certain alignment inequality (\eq~\eqref{htf:eq:alignment_assump}) holds throughout optimization.
We regard this as an important direction for future work.

\subsubsection{Illustrative Demonstration of Small Local Components}
\label{htf:sec:inc_rank_lrn:imp_htr_min:illustration}

Below we qualitatively demonstrate how the dynamical characterization derived in \subsect~\ref{htf:sec:inc_rank_lrn:evolution} implies that the implicit regularization in hierarchical tensor factorization can lead to solutions with small local components.
Under the setting and notation of \thm~\ref{htf:thm:loc_comp_norm_bal_dyn}, consider an initialization $\big (  \Ubf^{(\nu)}  \in \R^{R_\nu \times R_{\parent (\nu)} } \big )_{\nu \in \htmodetree}$ for the weight matrices of the hierarchical tensor factorization, scaled by $\alpha \in \R_{ > 0}$.
That is, $\weightmat{\nu} (0) = \alpha \cdot \Ubf^{(\nu)}$ for all $\nu \in \htmodetree$.
Focusing on some interior node $\nu \in \interior (\htmodetree)$, let $r, \bar{r} \in [R_\nu]$, and assume for simplicity that $\nu$ is not degenerate, in the sense that~it has more than one child.
Suppose also that at initialization the norm of the $(\nu, r)$'th local component is greater than the norm of the $(\nu, \bar{r})$'th local component, \ie~$\htfcompnorm{\nu}{r} (0) > \htfcompnorm{\nu}{\bar{r}} (0)$, and that $\htfcomp{\nu}{r} (t)$ is at least as aligned as $\htfcomp{\nu}{\bar{r}} (t)$ with the direction of steepest descent up to a time $T > 0$, \ie~for all $t \in [0, T]$: 
\be
\inprodbig{- \nabla \htfendloss ( \tensorend (t) ) }{ \htfcomp{\nu}{r} (t) } \geq \inprodbig{- \nabla \htfendloss ( \tensorend (t) ) }{ \htfcomp{\nu}{\bar{r}} (t) }
\text{.}
\label{htf:eq:alignment_assump}
\ee
Then, by \thm~\ref{htf:thm:loc_comp_norm_bal_dyn} for all $t \in [0, T]$:\footnote{
	A local component cannot reach the origin unless it was initialized there (implied by \lem~\ref{htf:lem:bal_zero_stays_zero} in \app~\ref{htf:app:proofs}).
	Accordingly, we disregard the trivial case where $\htfcompnorm{\nu}{\bar{r}} (t) = 0$ for some $t \in [0, T]$.
}
\[
\htfcompnorm{\nu}{\bar{r}} (t)^{ - 2 + \frac{2}{L_\nu}} \frac{d}{dt} \htfcompnorm{\nu}{\bar{r}} (t) \leq  \htfcompnorm{\nu}{r} (t)^{- 2 + \frac{2}{L_\nu} } \frac{d}{dt} \htfcompnorm{\nu}{r} (t)
\text{\,.}
\]
Integrating both sides with respect to time, we may upper bound $\htfcompnorm{\nu}{\bar{r}} (t)$ with a function of $\htfcompnorm{\nu}{r} (t)$:
\be
\htfcompnorm{\nu}{\bar{r}} (t) \leq \Big [ \htfcompnorm{\nu}{r} (t)^{ - \frac{L_\nu - 2}{L_\nu}} + \alpha^{ - (L_\nu - 2) } \cdot const \Big ]^{ - \frac{L_\nu}{L_\nu - 2} }
\text{\,,}
\label{htf:eq:illustrative_upper_bound_comp_norms}
\ee
where $const$ stands for a positive value that does not depend on $t$ and $\alpha$.
\eq~\eqref{htf:eq:illustrative_upper_bound_comp_norms} reveals a gap between \smash{$\htfcompnorm{\nu}{r} (t)$} and \smash{$\htfcompnorm{\nu}{\bar{r}} (t)$} that is more significant the smaller the initialization scale $\alpha$ is.
In particular, regardless of how large $\htfcompnorm{\nu}{r} (t)$ is, $\htfcompnorm{\nu}{\bar{r}} (t)$ is upper bounded by a value that approaches zero as $\alpha \to 0$.
Hence, initializing near zero produces solutions with small local components.

\subsubsection{Small Local Components Imply Proximity to Low Hierarchical Tensor Rank}
\label{htf:sec:inc_rank_lrn:imp_htr_min:proximity}

The following proposition establishes that small local components in a hierarchical tensor factorization imply that its end tensor can be well approximated with low hierarchical tensor rank.

\begin{proposition}
	\label{htf:prop:low_rank_dist_bound}
	Consider an assignment for the weight matrices \smash{$\big ( \weightmat{\nu} \in \R^{R_\nu \times R_{\parent (\nu)} }  \big )_{ \nu \in \htmodetree}$} of a hierarchical tensor factorization, and let $B := \max_{\nu \in \htmodetree} \normnoflex{ \weightmat{\nu}}$.
	Assume without loss of generality that at each $\nu \in \interior (\htmodetree)$, local components are ordered by their norms, \ie~$\htfcompnorm{\nu}{1} \geq \cdots \geq \htfcompnorm{\nu}{R_\nu}$.
	Then, for any $\epsilon \geq 0$ and $( R'_{\nu} \in \N )_{\nu \in \interior (\htmodetree)}$, if \,$\sum\nolimits_{r = R'_\nu + 1}^{R_\nu} \htfcompnorm{\nu}{r} \leq \epsilon \cdot (\abs{\htmodetree} - N)^{-1} B^{\abs{\children(\nu)} + 1 - \abs{\htmodetree}}$ for all $\nu \in \interior (\htmodetree)$, it holds that:
	\[
	\inf_{ \substack{ \W \in \R^{D_1 \times \cdots \times D_N} \text{ s.t. } \\ \forall \nu \in \htmodetree \setminus \{ [N] \} :~\rank \mat{\W }{ \nu} \leq R'_{\parent (\nu) } } } \norm{ \tensorend - \W } \leq \epsilon
	\text{\,,}
	\]
	\ie~$\tensorend$ is within $\epsilon$-distance from the set of tensors whose hierarchical tensor rank is no greater (element-wise) than $\big ( R'_{\parent (\nu)} \big )_{\nu \in \htmodetree \setminus \{ [N] \}}$.
\end{proposition}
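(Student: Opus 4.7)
My plan is to construct an explicit approximator $\W$ by truncating local components in the hierarchical factorization of $\tensorend$, and to control the approximation error by a telescoping sum over the interior nodes of $\htmodetree$. I define $\W$ as the end tensor of the modified factorization obtained by zeroing, at each interior node $\nu \in \interior(\htmodetree)$, the rows $r > R'_\nu$ of the weight matrix $\weightmat{\nu}$. By the ordering hypothesis, this removes the $R_\nu - R'_\nu$ smallest local components at $\nu$, so the effective number of active components at $\nu$ drops to $R'_\nu$. Lemma~\ref{htf:lem:loc_comp_ht_rank_bound} applied to the truncated factorization then yields $\rank \mat{\W}{\nu_c} \leq R'_{\parent(\nu_c)}$ for every non-root $\nu_c \in \htmodetree \setminus \{[N]\}$, placing $\W$ in the set over which the infimum is taken.

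To bound $\| \tensorend - \W \|$, I would enumerate $\interior(\htmodetree)$ as $\nu_1, \ldots, \nu_K$ with $K = |\htmodetree| - N$, and define intermediate tensors $\W^{(k)}$ by truncating only at $\nu_1, \ldots, \nu_k$; thus $\W^{(0)} = \tensorend$ and $\W^{(K)} = \W$. The triangle inequality reduces the task to bounding each $\| \W^{(k-1)} - \W^{(k)} \|$ separately. Since these two tensors differ only through $\weightmat{\nu_k}$, multilinearity of the end tensor in $\weightmat{\nu_k}$ combined with the decomposition recorded in the footnote after Theorem~\ref{htf:thm:loc_comp_norm_bal_dyn} gives
\[
\W^{(k-1)} - \W^{(k)} \;=\; \sum_{r = R'_{\nu_k} + 1}^{R_{\nu_k}} \widetilde{\sigma}_r \cdot \widetilde{\mathcal{C}}_r \text{\,,}
\]
where $\widetilde{\sigma}_r$ is the norm of the $(\nu_k, r)$-th local component and $\widetilde{\mathcal{C}}_r$ is its normalized direction on the end tensor, both computed in the partially truncated factorization of $\W^{(k-1)}$. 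Because previous truncations only zeroed rows of other weight matrices, they leave the row of $\weightmat{\nu_k}$ and the columns of $\weightmat{\nu_c}$ (for $\nu_c \in \children(\nu_k)$) that form the local component intact, so $\widetilde{\sigma}_r = \htfcompnorm{\nu_k}{r}$; simultaneously, every other weight matrix retains Frobenius norm at most $B$. The key geometric input is the bound $\| \widetilde{\mathcal{C}}_r \| \leq B^{|\htmodetree| - 1 - |\children(\nu_k)|}$, which holds because the normalized local component at $(\nu_k, r)$ contributes a factor of one, while each of the $|\htmodetree| - 1 - |\children(\nu_k)|$ weight matrices outside the local component contributes at most $B$. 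Plugging this bound together with the hypothesis $\sum_{r > R'_{\nu_k}} \htfcompnorm{\nu_k}{r} \leq \epsilon \cdot (|\htmodetree| - N)^{-1} B^{|\children(\nu_k)| + 1 - |\htmodetree|}$ into the preceding display yields $\| \W^{(k-1)} - \W^{(k)} \| \leq \epsilon / (|\htmodetree| - N)$, and summing over the $K = |\htmodetree| - N$ interior nodes gives the advertised $\epsilon$ error.

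The step I expect to be the most delicate is establishing the norm bound $\| \widetilde{\mathcal{C}}_r \| \leq B^{|\htmodetree| - 1 - |\children(\nu_k)|}$. Because the hierarchical construction in Equation~\eqref{htf:eq:ht_end_tensor} alternates tensor products (which are multiplicative in Frobenius norm via $\| \ubf \tenp \vbf \| = \|\ubf\| \|\vbf\|$) with parent-mediated sums, obtaining a clean product-of-$B$'s estimate with the exact exponent requires a careful leaf-to-root induction that at every ancestor of $\nu_k$ applies a Cauchy--Schwarz step to convert the parent-row sum into a product of Frobenius norms, and that exploits the cancellation between the normalization factor $(\htfcompnorm{\nu_k}{r})^{-1}$ implicit in $\widetilde{\mathcal{C}}_r$ and the norms of those rows and columns of $\weightmat{\nu_k}$ and $\weightmat{\nu_c}$ reappearing in the ancestor construction. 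Robustness of this estimate under the prior partial truncations then follows since zeroing rows can only decrease every Frobenius norm appearing in the bound.
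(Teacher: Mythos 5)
Your proposal is correct and follows essentially the same route as the paper's proof: truncate the excess local components, invoke \lem~\ref{htf:lem:loc_comp_ht_rank_bound} on the reduced factorization for the rank bound, telescope over the truncations, and bound each increment by the local component norm times $B^{\abs{\htmodetree} - 1 - \abs{\children(\nu)}}$ (the paper packages this last bound as \lem~\ref{htf:lem:dist_from_pruned_local_comp_end_tensor} and telescopes component-by-component rather than node-by-node, but this is cosmetic). One small imprecision: when $\nu_c \in \children(\nu_k)$ is itself an interior node truncated earlier, zeroing its rows does alter the column $\weightmat{\nu_c}_{:,r}$, so the claimed equality $\widetilde{\sigma}_r = \htfcompnorm{\nu_k}{r}$ should be the inequality $\widetilde{\sigma}_r \leq \htfcompnorm{\nu_k}{r}$ --- which is all the upper bound requires and is exactly the monotonicity-under-pruning observation the paper uses.
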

\begin{proof}[Proof sketch (proof in \subapp~\ref{htf:app:proofs:low_rank_dist_bound})]
	Let $\widebar{\W}_{\mathrm{H}}^{\S}$ be the end tensor obtained after pruning all local components indexed by $\S := \{ (\nu, r) : \nu \in \interior (\htmodetree) , r \in \{ R'_{\nu} + 1, \ldots, R_\nu \} \}$, \ie~after setting to zero the $r$'th row of $\weightmat{\nu}$ and the $r$'th column of $\weightmat{\nu_c}$ for all $(\nu, r) \in \S$ and $\nu_c \in \children (\nu)$.
	The desired result follows by showing that $\rank \mat{ \widebar{\W}_{\mathrm{H}}^{\S} }{ \nu} \leq R'_{ \parent (\nu) }$ for all $\nu \in \htmodetree \setminus \{ [N] \}$, and upper bounding $\norm{ \tensorend - \widebar{\W}_{\mathrm{H}}^{\S} }$ by $\epsilon$.
\end{proof}

\subsection{Partially Ordered Complexity Measure}
\label{htf:sec:inc_rank_lrn:partial_order}

Existing attempts to explain implicit regularization in deep learning typically argue for reduction of some complexity measure that is \emph{totally ordered} (meaning that within any two values for this measure, there must be one smaller than or equal to the other), for example a norm~\cite{gunasekar2017implicit,soudry2018implicit,li2018algorithmic,woodworth2020kernel,lyu2021gradient}.
Recent evidence suggests that obtaining a complete explanation through such complexity measures may not be possible \cite{razin2020implicit,vardi2021implicit}.
Hierarchical tensor rank (which we have shown to be implicitly reduced by a class of deep non-linear convolutional networks) represents a new type of complexity measure, in the sense that it is \emph{partially ordered}.  Specifically, while it entails a standard (product) partial order~---~$(r_1, \ldots, r_K) \leq (r'_1, \ldots, r'_K)$ if and only if $r_i \leq r'_i$ for all $i \in [K]$~---~it does not admit a natural total order.  
Indeed, \prop~\ref{htf:prop:htr_multiple_minima} below shows that there exist simple learning problems in which, among the data-fitting solutions, there are multiple minimal hierarchical tensor ranks, none smaller than or equal to the other.  
We believe the notion of a partially ordered complexity measure may pave the way to furthering our understanding of implicit regularization in deep learning.

\begin{proposition}
	\label{htf:prop:htr_multiple_minima}
	For every order $N \in \N_{\geq 3}$ and mode dimensions $D_1, \ldots, D_N \in \N_{\geq 2}$, there exists a tensor completion problem (\ie~a loss $\L ( \W ) = \frac{1}{\abs{\Omega}} \sum_{(d_1, \ldots, d_N) \in \Omega} ( \W_{d_1, \ldots, d_N} - \W^*_{d_1, \ldots, d_N})^2$ with ground truth $\W^* \in \R^{D_1 \times \cdots \times D_N}$ and set of observed entries $\Omega \subset [D_1] \times \cdots \times [D_N]$) in which, for every mode tree $\htmodetree$ over $[N]$ (\defin~\ref{htf:def:mode_tree}), the set of hierarchical tensor ranks for tensors fitting the observations includes multiple minimal elements (under the standard product partial order), none smaller than or equal to the other.
	That is, the set $\RR_\htmodetree := \left \{ (\rank \mat{\W }{ \nu})_{\nu \in \htmodetree \setminus \{ [N] \}} : \W \in \R^{D_1 \times \cdots \times D_N} , \L ( \W ) = 0 \right \}$ includes elements $(R_\nu )_{\nu \in \htmodetree \setminus \{ [N]\}}$ and $(R'_\nu )_{\nu \in \htmodetree \setminus \{ [N]\}}$ for which the following hold: \emph{(i)} there exists no $(R''_\nu )_{\nu \in \htmodetree \setminus \{ [N]\}} \in \RR_\htmodetree \setminus \{ (R_{\nu} )_{\nu  \in \htmodetree \setminus \{ [N] \}} , (R'_{\nu } )_{\nu \in \htmodetree \setminus \{ [N]\} } \}$ satisfying $(R''_\nu )_{\nu} \leq (R_\nu )_{\nu}$ or $(R''_\nu )_{\nu} \leq (R'_\nu )_{\nu}$; and \emph{(ii)} neither $(R_\nu )_{\nu} \leq (R'_\nu )_{\nu}$ nor $(R'_\nu )_{\nu} \leq (R_\nu )_{\nu}$.
\end{proposition}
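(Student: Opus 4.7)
The plan is to exhibit one explicit tensor completion problem and two explicit solutions $\W_1, \W_2$ such that for every mode tree $\htmodetree$ their hierarchical tensor ranks are incomparable minimal elements of $\RR_\htmodetree$. Take the four observations $(1,1,1,1,\ldots,1), (2,2,2,1,\ldots,1), (1,2,1,1,\ldots,1), (2,1,2,1,\ldots,1) \in [D_1] \times \cdots \times [D_N]$ with values $1, 1, 0, 0$. Define $\W_1$ to be supported on $\{(i_1, i_2, i_3, 1, \ldots, 1) : i_1 \in [2], i_2 = i_3 \in [2]\}$ with all support values equal to $1$, and $\W_2$ to be supported on $\{(i_1, i_2, i_3, 1, \ldots, 1) : i_1 = i_2 \in [2], i_3 \in [2]\}$ analogously. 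Direct substitution confirms both satisfy the observations, and computing the unfolding (leaf-matricization) ranks yields $(1, 2, 2, 1, \ldots, 1)$ for $\W_1$ and $(2, 2, 1, 1, \ldots, 1)$ for $\W_2$. These are incomparable at positions $1$ and $3$; since every mode tree includes all singleton leaves $\{1\}, \ldots, \{N\}$ in $\htmodetree \setminus \{[N]\}$, the full hierarchical tensor rank tuples of $\W_1$ and $\W_2$ inherit this incomparability, giving part (ii).

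The bulk of the argument is minimality (part (i)): no solution $\W$ has hierarchical tensor rank componentwise $\leq$ that of $\W_1$ unless the rank tuples coincide (and analogously for $\W_2$). The leaf constraints force $\rank \mat{\W}{\{j\}} \leq 1$ for every $j \in J := \{1, 4, \ldots, N\}$; combined with $\W \neq 0$ (enforced by $y_{1, \ldots, 1} = 1$), equality holds at each such leaf. A standard structural lemma then yields a Kronecker decomposition $\W = v^{(1)} \tenp T \tenp v^{(4)} \tenp \cdots \tenp v^{(N)}$ for nonzero $v^{(j)} \in \R^{D_j}$ and $T \in \R^{D_2 \times D_3}$. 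Plugging in the four observations pins down $T_{1,1}, T_{2,2} \neq 0$ and $T_{1,2} = T_{2,1} = 0$, forcing $\rank T \geq 2$. Since the outer Kronecker factor is rank one, the rank-product formula gives $\rank \mat{\W}{\nu} = \rank \mat{T}{\nu \cap \{2, 3\}}$ for every $\nu \subset [N]$, which coincides with $\rank \mat{\W_1}{\nu}$ exactly. Hence $\W$ has the same hierarchical tensor rank as $\W_1$ for any mode tree, ruling out anything strictly smaller. The symmetric argument (swapping modes $1$ and $3$) handles $\W_2$.

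The primary technical hurdle is the Kronecker factorization lemma invoked above. The plan for it is induction on $\abs{J}$: extract one rank-one factor $\W = v^{(j)} \tenp_j U^{(j)}$ at a time, and verify that $U^{(j)}$ retains rank-one matricizations at the remaining modes of $J \setminus \{j\}$~---~this follows from the Kronecker rank-product formula, since tensoring with a nonzero vector preserves matricization ranks at the other modes. Once this lemma is in place, all remaining steps reduce to routine substitutions and Kronecker rank computations.
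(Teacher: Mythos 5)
Your proposal is correct, and at its core it follows the same strategy as the paper's proof: an explicit completion problem whose observations live in a $2\times 2\times 2$ block of three designated modes, two solutions each coupling a different pair of those modes, incomparability read off the leaf matricizations (which every mode tree contains), and minimality established by showing that any solution whose hierarchical tensor rank is weakly below that of $\W_1$ (resp.\ $\W_2$) must have exactly the same rank tuple. Where you differ is in how minimality is organized. The paper argues by case analysis on the location of a node $\nu$ at which the rank would drop~---~distinguishing whether $\nu$ contains none, one, or several of the three active modes~---~and in the hardest case invokes a Tucker-decomposition argument to show the competing solution would have tensor rank one, contradicting an earlier case. You instead use the rank-one constraints at all leaves outside $\{2,3\}$ once, globally, to force the factorization $\W = v^{(1)} \tenp T \tenp v^{(4)} \tenp \cdots \tenp v^{(N)}$, pin down the $2\times 2$ block of $T$ as a nonsingular diagonal from the four observations, and then read off every matricization rank of $\W$ via the Kronecker rank-product formula. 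This buys a more uniform argument (no case split, and the equality of the full rank tuples falls out in one stroke), at the cost of having to state and prove the iterated rank-one extraction lemma carefully~---~including the minor point that the extracted modes $\{1,4,\ldots,N\}$ leave $\{2,3\}$ contiguous so the factorization can be written as an ordered tensor product (and even if it could not, a mode permutation would not affect matricization ranks). Both routes rest on the same underlying structural fact, so the choice is essentially one of presentation.
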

\begin{proof}[Proof sketch (proof in \subapp~\ref{htf:app:proofs:htr_multiple_minima})]
	We construct a tensor completion problem and two solutions $\W$ and $\W'$ (tensors fitting observed entries) such that, for every mode tree $\htmodetree$, the hierarchical tensor ranks with respect to $\htmodetree$ of $\W$ and $\W'$ are two different minimal elements of $\RR_\htmodetree$.
\end{proof}

\section{Low Hierarchical Tensor Rank Implies Locality} 
\label{htf:sec:low_htr_implies_locality}

In \sect~\ref{htf:sec:inc_rank_lrn} we established that the implicit regularization in hierarchical tensor factorization favors solutions with low hierarchical tensor rank.
A natural question that arises is what are the implications of this tendency for the class of deep convolutional networks equivalent to hierarchical tensor factorization (illustrated in \fig~\ref{htf:fig:tf_htf_as_convnet} (bottom)).
It is known~\cite{cohen2017inductive,levine2018benefits,levine2018deep} that for this class of networks, hierarchical tensor rank measures the strength of dependencies modeled between spatially distant input regions (patches of pixels in the context of image classification)~---~see brief explanation in \subsect~\ref{htf:sec:low_htr_implies_locality:sep_rank} below, and formal derivation in \app~\ref{htf:app:ht_sep_rank}.
An implicit regularization towards low hierarchical tensor rank thus implies a bias towards local (short-range) dependencies.
While seemingly benign, this observation is shown in \sect~\ref{htf:sec:countering_locality} to bring forth a practical method for improving performance of contemporary convolutional networks (\eg~ResNet18 and ResNet34 from~\cite{he2016deep}) on tasks with long-range dependencies.

\subsection{Locality via Separation Rank}
\label{htf:sec:low_htr_implies_locality:sep_rank}

Given a multivariate function $f$ with scalar output, a popular measure of dependencies between a set of input variables and its complement is known as \emph{separation rank}.  
The separation rank, formally presented in \defin~\ref{htf:def:sep_rank} below, was originally introduced in~\cite{beylkin2002numerical}, and has since been employed for various applications~\cite{harrison2003multiresolution,hackbusch2006efficient,beylkin2009multivariate}, as well as analyses of expressiveness in deep learning~\cite{cohen2017inductive,cohen2017analysis,levine2018benefits,levine2018deep,levine2020limits,wies2021transformer,levine2022inductive}.
It is also prevalent in quantum physics, where it serves as a measure of entanglement~\cite{levine2018deep}.

Consider the convolutional network equivalent to a hierarchical tensor factorization with mode tree $\htmodetree$.
It turns out (see formal derivation in \app~\ref{htf:app:ht_sep_rank}) that for functions realized by this network, separation ranks measuring dependencies between distinct regions of the input are precisely equal to entries of the hierarchical tensor rank with respect to $\htmodetree$ (recall that, as discussed in \sect~\ref{htf:sec:htf}, the hierarchical tensor rank is a tuple).  
Thus, low hierarchical tensor rank implies that the separation ranks are low, which in turn means that dependencies modeled between distinct input regions are weak, \ie~that only local dependencies are prominent.

\begin{definition}
	\label{htf:def:sep_rank}
	The \emph{separation rank} of $f : \times_{n = 1}^N \R^{D_n} \to \R$ with respect to $I \subset [N]$, denoted $\htfseprank (f; I)$, is the minimal $R \in \N \cup \{ 0 \}$ for which there exist $g_1, \ldots, g_R : \times_{i \in I} \R^{D_i} \to \R$ and $\bar{g}_1, \ldots, \bar{g}_R :\times_{j \in [N] \setminus I} \R^{D_j} \to \R$ such that:
	\[
	f \bigl ( \xbf^{(1)}, \ldots, \xbf^{(N)} \bigr )  =  \sum\nolimits{r = 1}^R g_r \big ( \big ( \xbf^{(i)} \big )_{i \in I} \big ) \cdot \bar{g}_r \bigl ( \bigl ( \xbf^{(j)} \bigr )_{j \in [N] \setminus I } \bigr )
	\text{.}
	\]
\end{definition}

\textbf{Interpretation.}~~The separation rank of~$f$ with respect to~$I$ is the minimal number of summands required to express~$f$, where each summand is a product of two functions~---~one that operates over variables indexed by~$I$, and another that operates over the remaining variables.
If $\htfseprank (f; I) = 1$, the function is separable, meaning it does not model any interaction between the sets of variables.
In a statistical setting, where $f$ is a probability density function, this would mean that $( \xbf^{(i)} )_{i \in I}$ and $( \xbf^{(j)} )_{j \in [N] \setminus I}$ are statistically independent.
The higher $\htfseprank (f; I)$ is, the farther $f$ is from separability, \ie~the stronger the dependencies it models between $( \xbf^{(i)} )_{i \in I}$ and~$( \xbf^{(j)} )_{j \in [N] \setminus I}$.

\section{Countering Locality of Convolutional Networks via Regularization} 
\label{htf:sec:countering_locality}

Convolutional networks often struggle or completely fail to learn tasks that entail strong dependence between spatially distant regions of the input (patches of pixels in image classification or tokens in natural language processing tasks)~---~see, \eg,~\cite{wang2016temporal,linsley2018learning,mlynarski2019convolutional,hong2020graph, kim2020disentangling}.
Conventional wisdom attributes this failure to the local nature of the architecture, \ie~to its inability to express long-range dependencies (see, \eg,~\cite{cohen2017inductive,linsley2018learning,kim2020disentangling}).
This suggests that addressing the problem requires modifying the architecture.
Our theory reveals that there is also an implicit regularization at play, giving rise to the possibility of countering the locality of convolutional networks via \emph{explicit} regularization, without modifying their architecture.
In the current section we affirm this possibility, demonstrating that carefully designed regularization can greatly improve the performance of contemporary convolutional networks on tasks involving long-range dependencies.
For brevity, we defer some implementation details and experiments to \app~\ref{htf:app:experiments}.

We conducted a series of experiments, using the ubiquitous ResNet18 and ResNet34 convolutional networks~\cite{he2016deep}, over two types of image classification datasets in which the distance between salient regions can be controlled.
The first type, referred to as "IsSameClass," comprises datasets we constructed, where the goal is to predict whether two randomly sampled CIFAR10~\cite{krizhevsky2009learning} images are of the same class.
Each input sample is a $32 \times 224$ image filled with zeros, in which the CIFAR10 images are placed (symmetrically around the center) at a predetermined distance from each other (to comply with ResNets, inputs were padded to have size $224 \times 224$).
By increasing the predetermined distance between CIFAR10 images, we produce datasets requiring stronger modeling of long-range dependencies.
The second type of datasets is taken from the Pathfinder challenge~\cite{linsley2018learning,kim2020disentangling,tay2021long}~---~a standard benchmark for modeling long-range dependencies.
In Pathfinder, each image contains two white circles and multiple dashed paths (curves) over a black background, and the goal is to predict whether the circles are connected by a path.  
The length of connecting paths is predetermined, allowing control over the (spatial) range of dependencies necessary to model.
Representative examples from IsSameClass and Pathfinder datasets are displayed in \fig~\ref{htf:fig:long_range_datasets_samples}.

\fig~\ref{htf:fig:long_range_and_reg_results} shows that when fitting IsSameClass and Pathfinder datasets, increasing the strength of long-range dependencies (\ie~the distance between images in IsSameClass, and the connecting path length in Pathfinder) leads to significant degradation in test accuracy, oftentimes resulting in performance no better than random guessing.
This phenomenon complies with existing evidence from~\cite{linsley2018learning,kim2020disentangling} showing failure of convolutional networks in learning tasks with long-range dependencies.
However, while \cite{linsley2018learning,kim2020disentangling} address the problem by modifying the architecture, we tackle it through explicit regularization (described in \subsect~\ref{htf:sec:countering_locality:reg} below) designed to promote high separation ranks (\defin~\ref{htf:def:sep_rank}), \ie~long-range dependencies between image regions.
As evident in \fig~\ref{htf:fig:long_range_and_reg_results}, our regularization significantly improves test accuracy.
This implies that the tendency towards locality of modern convolutional networks may in large part be due to implicit regularization, and not an inherent limitation of expressive power as often believed.
Our findings showcase that deep learning architectures considered suboptimal for certain tasks may be greatly improved through a right choice of explicit regularization.
Theoretical understanding of implicit regularization may be key to discovering such regularizers.

\begin{figure*}[t!]
	\begin{center}
		\hspace{0mm}
		\includegraphics[width=1\textwidth]{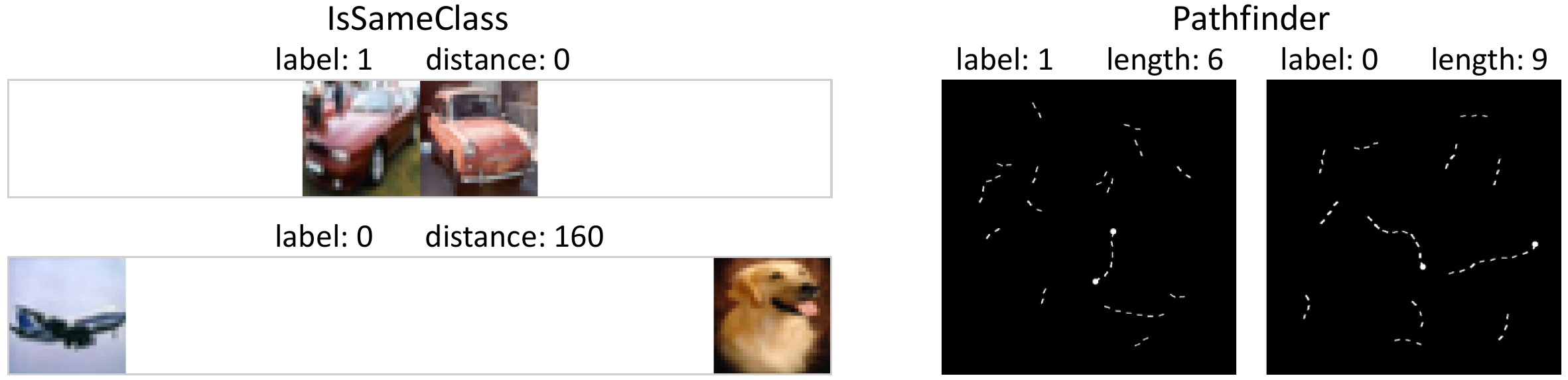}
	\end{center}
	\vspace{-2mm}
	\caption{
		Samples from IsSameClass and Pathfinder datasets.
		For further details on their creation process see \subapp~\ref{htf:app:experiments:details:conv}.
		\textbf{Left:} positive and negative samples from IsSameClass datasets with $0$ and $160$ pixels between images, respectively.
		The label is $1$ if the two CIFAR10 images are of the same class, and $0$ otherwise.
		For the sake of illustration, background is displayed as white instead of black, and padding is not shown (\ie~only the raw $32 \times 224$ input is presented).
		\textbf{Right:} positive and negative samples from Pathfinder challenge~\cite{linsley2018learning} datasets with connecting path lengths $6$ and $9$, respectively. 
		A connecting path is one that joins the two circles, and if present, its length is measured in the number of dashes.
		The label of a sample is $1$ if it includes a connecting path (\ie~if the two circles are connected), and $0$ otherwise.
	}
	\label{htf:fig:long_range_datasets_samples}
\end{figure*}

\begin{figure*}[t!]
	\begin{center}
		\vspace{1mm}
		\hspace{-3mm}
		\includegraphics[width=1\textwidth]{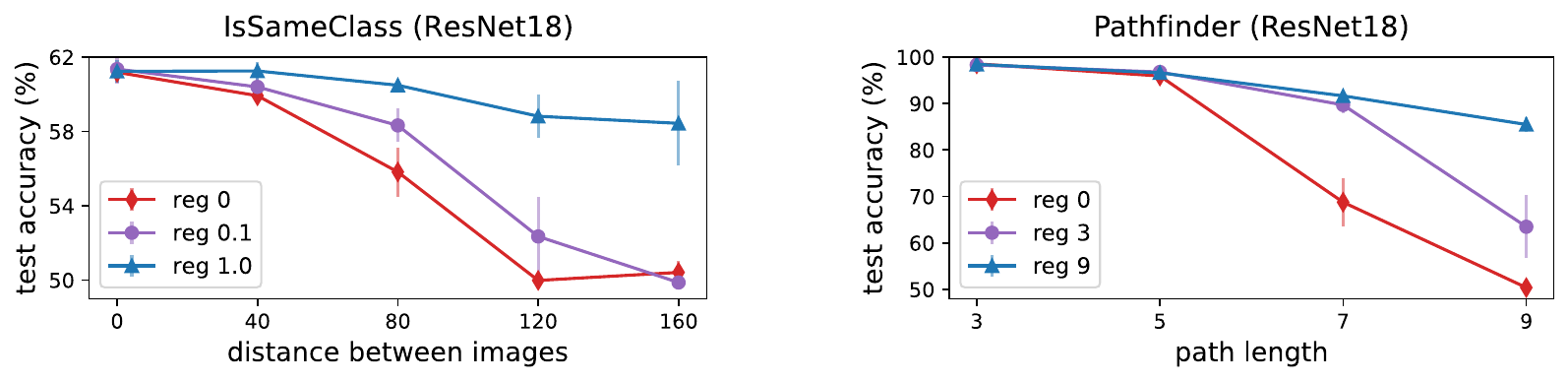}
	\end{center}
	\vspace{-3mm}
	\caption{
		Dedicated explicit regularization can counter the locality of convolutional networks, significantly improving performance on tasks with long-range dependencies.
		Plots present test accuracies achieved by a randomly initialized ResNet18 over IsSameClass (left) and Pathfinder (right) datasets, with varying spatial distances between salient regions of the input (CIFAR10 images in IsSameClass and connected circles in Pathfinder --- see \fig~\ref{htf:fig:long_range_datasets_samples}).
		For each dataset, the network was trained via stochastic gradient descent to minimize a regularized objective, consisting of the binary cross-entropy loss and the dedicated regularization described in \subsect~\ref{htf:sec:countering_locality:reg}.
		The legend specifies the regularization coefficients used.
		Markers and error bars report means and standard deviations, respectively, taken over five different runs for the corresponding combination of dataset and regularization coefficient.
		As expected, when increasing the (spatial) range of dependencies required to be modeled, the test accuracy obtained by an unregularized network (regularization coefficient zero) substantially deteriorates, reaching the vicinity of the trivial value $50\%$.
		Conventional wisdom attributes this failure to a limitation in the expressive capability of convolutional networks (\ie~to their inability to represent functions modeling long-range dependencies). 
		However, as can be seen, applying the dedicated regularization significantly improved performance, without any architectural modification.
		\app~\ref{htf:app:experiments} provides further implementation details, as well as additional experiments: \emph{(i)} using ResNet34; and \emph{(ii)} showing similar improvements when the baseline network (“reg $0$'') is already regularized via standard techniques (weight decay or dropout).
	}
	\label{htf:fig:long_range_and_reg_results}
\end{figure*}

\subsection{Explicit Regularization Promoting Long-Range Dependencies}
\label{htf:sec:countering_locality:reg}

We describe below the explicit regularization applied in our experiments to counter the locality of convolutional networks.
We emphasize that this regularization is based on our theory, and merely serves as an example to how the performance of convolutional networks on tasks involving long-range dependencies can be improved without modifying their architecture.
Further evaluation and improvement of our regularization are regarded as promising directions for future work.

Denote by $f_\Theta ( \Xbf )$ the output of a neural network, where $\Theta$ stands for its learnable weights, and $\Xbf := ( \xbf^{(1)}, \ldots, \xbf^{(N)} )$ represents an input image, with each $\xbf^{(n)}$ standing for a pixel.
Suppose we are given a subset of indices $I \subset [N]$, with complement $J := [N] \setminus I$, and we would like to encourage the network to learn a function $f_\Theta$ that models strong dependence between $\Xbf_I := ( \xbf^{(i)} )_{i \in I}$ (pixels indexed by $I$) and $\Xbf_J := ( \xbf^{(j)} )_{j \in J}$ (those indexed by~$J$).
As discussed in \subsect~\ref{htf:sec:low_htr_implies_locality:sep_rank}, a standard measure of such dependence is the separation rank, provided in \defin~\ref{htf:def:sep_rank}.
If the separation rank of $f_\Theta$ with respect to $I$ is one, meaning no dependence between $\Xbf_I$ and~$\Xbf_J$ is modeled, then we may write $f_\Theta ( \Xbf ) = g (\Xbf_I) \cdot \bar{g} (\Xbf_J)$ for some functions $g$ and $\bar{g}$.
This implies that $\nabla_{\Xbf_I} f_\Theta ( \Xbf ) = \bar{g} (\Xbf_{J}) \cdot \nabla g (\Xbf_I)$, meaning that a change in $\Xbf_J$ (with $\Xbf_I$ held fixed) does not affect the direction of $\nabla_{\Xbf_I} f_\Theta (\Xbf)$, only its magnitude (and possibly its sign).
This observation suggests that, in order to learn a function $f_\Theta$ modeling strong dependence between $\Xbf_I$ and $\Xbf_J$, one may add a regularization term that promotes a change in the direction of $\nabla_{\Xbf_I} f_\Theta (\Xbf)$ whenever $\Xbf_J$ is altered (with $\Xbf_I$ held fixed).

The regularization applied in our experiments is of the type outlined above, with $I$ and $J$ chosen to promote long-range dependencies.
Namely, at each iteration of stochastic gradient descent we randomly choose disjoint subsets of indices $I$ and $J$ corresponding to contiguous (distinct) image regions.
Then, for each image $\Xbf$ in the iteration's batch, we let $\Xbf'$ be the result of replacing the pixels in $\Xbf$ indexed by $J$ with alternative values taken from a different image in the training set.
Finally, we compute $\abs{ \inprod{ \nabla_{\Xbf_I} f_\Theta (\Xbf) }{ \nabla_{\Xbf_I} f_\Theta (\Xbf') } } \cdot  \normnoflex{ \nabla_{\Xbf_I} f_\Theta (\Xbf) }^{-1}  \normnoflex{ \nabla_{\Xbf_I} f_\Theta (\Xbf') }^{-1}$~---~(absolute value of) cosine of the angle between $\nabla_{\Xbf_I} f_\Theta (\Xbf)$ and $\nabla_{\Xbf_I} f_\Theta (\Xbf')$~---~average it across the batch, multiply the average by a constant coefficient, and add the result to the minimized objective.\footnote{
	Each artificially generated image $\Xbf'$ is used only to compute the regularization term, not as an additional training instance incurring its own loss.
	Our proposed regularization is therefore fundamentally different from data augmentation.
}
For further details see \subapp~\ref{htf:app:experiments:details:conv}.

\chapter{Related Work}
\label{chap:imp_reg_related}

\textbf{Implicit regularization.}~~A large and growing body of literature has theoretically investigated the implicit regularization brought forth by gradient-based optimization.
Works along this line have treated various models, including: linear predictors~\cite{soudry2018implicit,gunasekar2018implicit,nacson2019convergence,ji2019implicit,shachaf2021theoretical}; polynomially parameterized linear models with a single output~\cite{ji2019gradient,woodworth2020kernel,moroshko2020implicit,azulay2021implicit,haochen2021shape,pesme2021implicit,li2021happens,chou2021more}; shallow non-linear neural networks~\cite{hu2020surprising,vardi2021implicit,sarussi2021towards,mulayoff2021implicit,lyu2021gradient}; homogeneous networks~\cite{lyu2020gradient,vardi2021margin}; and ultra-wide networks~\cite{oymak2019overparameterized,chizat2020implicit}.
Arguably the most widely analyzed model is matrix factorization~\cite{gunasekar2017implicit,du2018algorithmic,li2018algorithmic,arora2019implicit,gidel2019implicit,mulayoff2020unique,blanc2020implicit,gissin2020implicit,razin2020implicit,chou2020gradient,eftekhari2021implicit,yun2021unifying,min2021explicit,li2021towards}, whose study we extended to tensor and hierarchical tensor factorizations in \cref{chap:imp_reg_tf,chap:imp_reg_htf}.
Among other contributions, our results generalize existing dynamical characterizations for matrix factorization (see \sect~\ref{htf:sec:prelim}) to tensor and hierarchical tensor factorizations~---~considerably richer and more complex models.

With regards to convolutional networks, theoretical investigations of their implicit regularization are scarce.
Existing works in this category treat linear~\cite{gunasekar2018implicit,jagadeesan2021inductive,kohn2021geometry} and homogeneous~\cite{nacson2019lexicographic,lyu2020gradient,ji2020directional} models.\footnote{
	There have also been works studying implicit effects of explicit regularizers for convolutional networks~\cite{ergen2021implicit}, but these are outside the scope of this thesis.
} 
None of these works have pointed out an implicit regularization towards local dependencies, as our theory does (\sects~\ref{htf:sec:inc_rank_lrn} and \ref{htf:sec:low_htr_implies_locality}).
Although the locality of convolutional networks is widely accepted, it is typically ascribed to expressive properties determined by their architecture (see, \eg,~\cite{cohen2017inductive,linsley2018learning,kim2020disentangling}).
Our work is the first to indicate that it also originates from implicit regularization.
As we demonstrate in \sect~\ref{htf:sec:countering_locality}, this observation can have far reaching implications to the performance of convolutional networks in practice.

\textbf{Matrix factorization.}~~The literature on matrix factorization for low-rank matrix recovery is far too broad to cover here~---~we refer to~\cite{chi2019nonconvex} for a recent survey, while mentioning that the technique is often attributed to~\cite{burer2003nonlinear}.
Notable works proving successful recovery of a low-rank matrix via matrix factorization trained by gradient descent with no explicit regularization are~\cite{tu2016low,ma2018implicit,li2018algorithmic}.
Of these, \cite{li2018algorithmic}~can be viewed as affirming Conjecture~\ref{conj:nuclear_norm} (from~\cite{gunasekar2017implicit}) for a certain special case.
\cite{belabbas2020implicit}~has affirmed Conjecture~\ref{conj:nuclear_norm} under different assumptions, but nonetheless argued empirically that it does not hold true in general, resonating with Conjecture~\ref{conj:no_norm} (from~\cite{arora2019implicit}).
To the best of our knowledge, no theoretical support for the latter was provided prior to its proof in \cite{razin2020implicit}, on which \cref{chap:imp_reg_not_norms} is based.

\textbf{Tensor factorizations.}~~Recovery of low rank tensors from incomplete observations via tensor factorizations is a setting of growing interest (\cf~\cite{acar2011scalable,narita2012tensor,anandkumar2014tensor,jain2014provable,yokota2016smooth,karlsson2016parallel,xia2017polynomial,zhou2017tensor,cai2019nonconvex} and the survey \cite{song2019tensor}).
In particular, hierarchical tensor factorization was recently introduced in~\cite{hackbusch2009new}, and used for recovery of low hierarchical tensor rank tensors~\cite{da2015optimization,steinlechner2016riemannian,rauhut2017low,kargas2020nonlinear,kargas2021supervised}. 
By virtue of its equivalence to different types of non-linear neural networks (with polynomial non-linearity), it has also been paramount to the study of expressiveness in deep learning~\cite{cohen2016expressive,sharir2016tensorial,cohen2016convolutional,cohen2017inductive,cohen2017analysis,sharir2018expressive,cohen2018boosting,levine2018benefits,levine2018deep,balda2018tensor,khrulkov2018expressive,khrulkov2019generalized,levine2019quantum}.
To the best of our knowledge, \cite{razin2021implicit,razin2022implicit}, on which \cref{chap:imp_reg_tf,chap:imp_reg_htf} are based, provided the first analysis of the implicit regularization induced by gradient-based optimization for tensor and hierarchical tensor factorizations.

\part{Expressiveness of Graph Neural Networks via Separation Rank}
\label{part:gnns}

\chapter{On the Ability of Graph Neural Networks to Model Interactions \\ Between Vertices}
\label{chap:gnn_interactions}

In \cref{part:gen} we employed a connection between neural networks (with polynomial non-linearity) and tensor factorizations for studying implicit regularization.
In this chapter, which is based on~\cite{razin2023ability}, we extend this connection to analyze the expressive power of certain \emph{graph neural networks (GNNs)}.
Specifically, we show that message-passing GNNs~\cite{hamilton2020graph} with product aggregation can be formulated via \emph{tensor networks}~---~a graphical language for expressing tensor factorizations.\footnote{
Tensor networks are widely used for constructing compact representations of quantum states in areas of physics (see,~\eg,~\cite{vidal2008class,orus2014practical}).
}
The formulation of GNNs as tensor networks then allows analyzing their ability to model interactions in an input graph.

\section{Background and Overview}
\label{gnn:sec:intro}

GNNs are a family of deep learning architectures, designed to model complex interactions between entities represented as vertices of a graph.
In recent years, GNNs have been successfully applied across a wide range of domains, including social networks, biochemistry, and recommender systems (see, \eg,~\cite{duvenaud2015convolutional,kipf2017semi,gilmer2017neural,hamilton2017inductive,velivckovic2018graph,ying2018graph,wu2020comprehensive,bronstein2021geometric}).
Consequently, significant interest in developing a mathematical theory behind GNNs has arisen.

One of the fundamental questions a theory of GNNs should address is \emph{expressiveness}, which concerns the class of functions a given architecture can realize.
Existing studies of expressiveness largely fall into three categories.
First, and most prominent, are characterizations of ability to distinguish non-isomorphic graphs~\citep{xu2019powerful,morris2019weisfeiler,maron2019provably,loukas2020hard,balcilar2021breaking,bodnar2021weisfeiler,barcelo2021graph,bouritsas2022improving,geerts2021let,geerts2022expressiveness,papp2022theoretical}, as measured by equivalence to classical Weisfeiler-Leman graph isomorphism tests~\citep{weisfeiler1968reduction}.
Second, are proofs for universal approximation of continuous permutation invariant or equivariant functions, possibly up to limitations in distinguishing some classes of graphs~\citep{maron2019universality,keriven2019universal,chen2019equivalence,loukas2020graph,azizian2021expressive,geerts2022expressiveness}.
Last, are works examining specific properties of GNNs such as frequency response~\citep{nt2019revisiting,balcilar2021analyzing} or computability of certain graph attributes, \eg~moments, shortest paths, and substructure multiplicity~\citep{dehmamy2019understanding,barcelo2020logical,chen2020can,garg2020generalization,loukas2020graph,chen2021graph,bouritsas2022improving,zhang2023rethinking}.

A major drawback of many existing approaches~---~in particular proofs of equivalence to Weisfeiler-Leman tests and those of universality~---~is that they operate in asymptotic regimes of unbounded network width or depth.
Moreover, to the best of our knowledge, none of the existing approaches formally characterize the strength of interactions GNNs can model between vertices, and how that depends on the structure of the input graph and the architecture of the neural network.

The current chapter addresses the foregoing gaps.
Namely, it theoretically quantifies the ability of fixed-size GNNs to model interactions between vertices, delineating the impact of the input graph structure and the neural network architecture (width and depth).
Strength of modeled interactions is formalized via \emph{separation rank}~\citep{beylkin2002numerical}~---~a commonly used measure for the interaction a function models between a subset of input variables and its complement (the rest of the input variables).\footnote{
Recall that, in \cref{htf:sec:low_htr_implies_locality,htf:sec:countering_locality}, we employed separation rank for designing an explicit regularization scheme that improves the performance convolutional neural networks over tasks involving long-range interactions.
}
Given a function and a partition of its input variables, the higher the separation rank, the more interaction the function models between the sides of the partition.
Separation rank is prevalent in quantum physics, where it can be viewed as a measure of entanglement~\citep{levine2018deep}.
It was previously used for analyzing variants of convolutional, recurrent, and self-attention neural networks, yielding both theoretical insights and practical tools~\citep{cohen2017inductive,cohen2017analysis,levine2018benefits,levine2018deep,levine2020limits,wies2021transformer,levine2022inductive}.
We employ it for studying GNNs.

\begin{figure*}[t]
	\vspace{0mm}
	\begin{center}
		\includegraphics[width=1\textwidth]{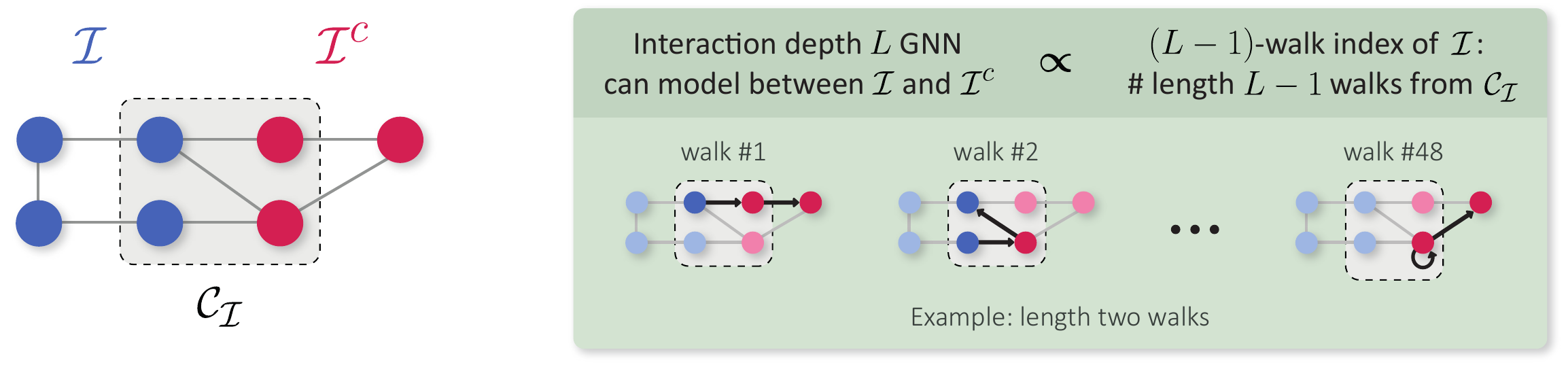}
	\end{center}
	\vspace{-1mm}
	\caption{
		Illustration of our main theoretical contribution: quantifying the ability of GNNs to model interactions between vertices of an input graph.
		Consider a partition of vertices $(\I, \I^c)$, illustrated on the left, and a depth~$L$ GNN with product aggregation (\cref{gnn:sec:gnns}).
		For graph prediction, as illustrated on the right, the strength of interaction the GNN can model between $\I$ and $\I^c$, measured via separation rank (\cref{gnn:sec:prelim:sep_rank}), is primarily determined by the partition's \emph{$(L - 1)$-walk index}~---~the number of length $L - 1$ walks emanating from $\cut_\I$, which is the set of vertices with an edge crossing the partition.
		The same holds for vertex prediction, except that there walk index is defined while only considering walks ending at the target vertex.
	}
	\label{gnn:fig:sep_rank_high_level}
\end{figure*}

We treat both graph prediction, where a single output is produced for an entire input graph, and vertex prediction, in which the network produces an output for every vertex.
For graph prediction, we prove that the separation rank of a depth $L$ GNN with respect to a partition of vertices is primarily determined by the partition's \emph{$(L - 1)$-walk index}~---~a graph-theoretical characteristic defined to be the number of length $L - 1$ walks originating from vertices with an edge crossing the partition.
The same holds for vertex prediction, except that there walk index is defined while only considering walks ending at the target vertex.
Our result, illustrated in \cref{gnn:fig:sep_rank_high_level}, implies that for a given input graph, the ability of GNNs to model interaction between a subset of vertices~$\I$ and its complement~$\I^c$, predominantly depends on the number of walks originating from the boundary between $\I$ and~$\I^c$.
We corroborate this proposition through experiments with standard GNN architectures, such as Graph Convolutional Network (GCN)~\citep{kipf2017semi} and Graph Isomorphism Network (GIN)~\citep{xu2019powerful}.

Our theory formalizes conventional wisdom by which GNNs can model stronger interaction between regions of the input graph that are more interconnected.
More importantly, we show that it facilitates an \emph{edge sparsification} algorithm that preserves the expressive power of GNNs (in terms of ability to model interactions).
Edge sparsification concerns removal of edges from a graph for reducing computational and/or memory costs, while attempting to maintain selected properties of the graph (\cf~\cite{baswana2007simple,spielman2011graph,hamann2016structure,chakeri2016spectral,sadhanala2016graph,voudigari2016rank,li2020sgcn,chen2021unified}).
In the context of GNNs, our interest lies in maintaining prediction accuracy as the number of edges removed from the input graph increases.

We propose an algorithm for removing edges, guided by our separation rank characterization.
The algorithm, named \emph{Walk Index Sparsification} (\emph{WIS}), is demonstrated to yield high predictive performance for GNNs (\eg~GCN and GIN) over standard benchmarks of various scales, even when removing a significant portion of edges.
WIS is simple, computationally efficient, and in our experiments has markedly outperformed alternative methods in terms of induced prediction accuracies across edge sparsity levels.
More broadly, WIS showcases the potential of improving GNNs by theoretically analyzing the interactions they can model, and we believe its further empirical investigation is a promising direction for future research.

\medskip

The remainder of the chapter is organized as follows.
\cref{gnn:sec:prelim} introduces notation and, for completeness, reintroduces the concept of separation rank (which was previously introduced in \cref{htf:sec:low_htr_implies_locality}).
\cref{gnn:sec:gnns} presents the theoretically analyzed GNN architecture.
\cref{gnn:sec:analysis} theoretically quantifies (via separation rank) its ability to model interactions between vertices of an input graph.
Finally, \cref{gnn:sec:sparsification} proposes and evaluates WIS~---~an edge sparsification algorithm for arbitrary GNNs, born from our theory.

\section{Preliminaries}
\label{gnn:sec:prelim}

\subsection{Notation}
\label{gnn:sec:prelim:notation}

For $N \in \N$, let $[N] := \brk[c]{1, \ldots, N}$.
We consider an undirected input graph $\graph = \brk{\vertices, \edges}$ with vertices $\vertices = [ \abs{\vertices} ]$ and edges $\edges \subseteq \{ \{ i, j \} : i, j \in \vertices \}$.
Vertices are equipped with features $\fmat := \brk{ \fvec{1}, \ldots, \fvec{\abs{\vertices}} } \in \R^{\indim \times \abs{\vertices}}$~---~one $\indim$-dimensional feature vector per vertex ($\indim \in \N$).
For $i \in \vertices$, we use $\neigh (i) := \brk[c]{j \in \vertices : \{ i, j \} \in \edges }$ to denote its set of neighbors, and, as customary in the context of GNNs, assume the existence of all self-loops, \ie~$i \in \neigh (i)$ for all $i \in \vertices$ (\cf~\cite{kipf2017semi,hamilton2020graph}).
Furthermore, for $\I \subseteq \vertices$ we let $\neigh (\I) := \cup_{i \in \I} \neigh (i)$ be the neighbors of vertices in $\I$, and $\I^c := \vertices \setminus \I$ be the complement of $\I$.
We use $\cut_\I$ to denote the boundary of the partition $(\I, \I^c)$, \ie~the set of vertices with an edge crossing the partition, defined by $\cut_\I := \{ i \in \I : \neigh (i) \cap \I^c \neq \emptyset \} \cup \{ j \in \I^c : \neigh (j) \cap \I \neq \emptyset \}$.\footnote{
	Due to the existence of self-loops, $\cut_\I$ is exactly the shared neighbors of $\I$ and $\I^c$, \ie~$\cut_\I = \neigh (\I) \cap \neigh (\I^c)$.
}
Lastly, we denote the number of length $l \in \N_{\geq 0}$ walks from any vertex in $\I \subseteq \vertices$ to any vertex in $\J \subseteq \vertices$ by $\nwalk{l}{\I}{\J}$.\footnote{
	For $l \in \N_{\geq 0}$, a sequence of vertices $i_{0}, \ldots, i_{l} \in \vertices$ is a length $l$ walk if $\{ i_{l' - 1} , i_{l'} \} \in \edges$ for all $l' \in [l]$.
}
In particular, $\nwalk{l}{\I}{\J} = \sum_{i \in \I, j \in \J} \nwalk{l}{ \{i\}}{\{j\}}$.

Note that we focus on undirected graphs for simplicity of presentation.
As discussed in~\cref{gnn:sec:analysis}, our results are extended to directed graphs in~\cref{gnn:app:extensions}.

\subsection{Separation Rank: A Measure of Modeled Interaction}
\label{gnn:sec:prelim:sep_rank}

\emph{Separation rank} is a prominent measure quantifying the interaction a multivariate function models between a subset of input variables and its complement (\ie~all other variables).
It was introduced in \cref{htf:sec:low_htr_implies_locality}, but for completeness we introduce it here once more.

The separation rank was originally defined in~\cite{beylkin2002numerical}, and has since been employed for various applications~\citep{harrison2003multiresolution,hackbusch2006efficient,beylkin2009multivariate}.
It is also a common measure of \emph{entanglement}, a profound concept in quantum physics quantifying interaction between particles~\citep{levine2018deep}.
In the context of deep learning, it enabled analyses of expressiveness and generalization in certain convolutional, recurrent, and self-attention neural networks, resulting in theoretical insights and practical methods (guidelines for neural architecture design, pretraining schemes, and regularizers~---~see~\cite{cohen2017inductive,cohen2017analysis,levine2018benefits,levine2018deep,levine2020limits,wies2021transformer,levine2022inductive} and \cref{chap:imp_reg_htf}).

Given a multivariate function $f : \brk{\R^{\indim}}^N \to \R$, its separation rank with respect to a subset of input variables $\I \subseteq [N]$ is the minimal number of summands required to express it, where each summand is a product of two functions~---~one that operates over variables indexed by~$\I$, and another that operates over the remaining variables.
Formally:
\begin{definition}
	\label{gnn:def:sep_rank}
	The \emph{separation rank} of $f : \brk{\R^{\indim}}^N \to \R$ with respect to $\I \subseteq [N]$ is the minimal $R \in \N \cup \{0\}$ for which there exist $g^{(1)}, \ldots, g^{(R)} : \brk{\R^{\indim}}^{\abs{\I}} \to \R$ and $\bar{g}^{(1)}, \ldots, \bar{g}^{(R)} : \brk{\R^{\indim}}^{ \abs{\I^c} } \to \R$ such that:
	\be
	\begin{split}
		f ( \fmat ) = \sum\nolimits_{r = 1}^R g^{(r)} (\fmat_\I) \cdot \bar{g}^{(r)} (\fmat_{\I^c}) 
		\text{\,,}
	\end{split}
	\label{gnn:eq:sep_rank}
	\ee
	where $\fmat := \brk{\fvec{1}, \ldots, \fvec{N}}$, $\fmat_\I := \brk{ \fvec{i} }_{i \in \I}$, and $\fmat_{\I^c} := \brk{ \fvec{j} }_{j \in \I^c}$.
	By convention, if $f$ is identically zero then $\sepranknoflex{f}{\I} = 0$, and if the set on the right hand side of~\cref{gnn:eq:sep_rank} is empty then $\seprank{f}{\I} = \infty$.
\end{definition}

\textbf{Interpretation.}~~If $\sepranknoflex{f}{\I} = 1$, the function is separable, meaning it does not model any interaction between $\fmat_\I$ and $\fmat_{\I^c}$, \ie~between the sides of the partition $(\I, \I^c)$.
Specifically, it can be represented as $f (\fmat) = g ( \fmat_\I ) \cdot \bar{g} ( \fmat_{\I^c} )$ for some functions $g$ and $\bar{g}$.
In a statistical setting, where $f$ is a probability density function, this would mean that $\fmat_\I$ and $\fmat_{\I^c}$ are statistically independent.
The higher $\sepranknoflex{f}{\I}$ is, the farther $f$ is from separability, implying stronger modeling of interaction between $\fmat_\I$ and $\fmat_{\I^c}$.

\section{Graph Neural Networks}
\label{gnn:sec:gnns}

GNNs architectures predominantly follow the message-passing paradigm~\citep{gilmer2017neural,hamilton2020graph}, whereby each vertex is associated with a hidden embedding that is updated according to its neighbors.
The initial embedding of $i \in \vertices$ is taken to be its input features: $\hidvec{0}{i} := \fvec{i} \in \R^{\indim}$.
Then, in a depth~$L$ message-passing GNN, a common update scheme for the hidden embedding of $i \in \vertices$ at layer $l \in [L]$ is:
\be
\hidvec{l}{i} = \agg \brk2{ \multisetbig{ \weightmat{l} \hidvec{l - 1}{j} : j \in \neigh (i) } }
\text{\,,}
\label{gnn:eq:gnn_update}
\ee
where $\multiset{\cdot}$ denotes a multiset, $\weightmat{1} \in \R^{\hdim \times \indim}, \weightmat{2} \in \R^{\hdim \times \hdim}, \ldots, \weightmat{L} \in \R^{\hdim \times \hdim}$ are learnable weight matrices, with $\hdim \in \N$ being the network's width (\ie~hidden dimension), and $\agg$ is a function combining multiple input vectors into a single vector.
A notable special case is GCN~\citep{kipf2017semi}, in which $\agg$ performs a weighted average followed by a non-linear activation function (\eg~ReLU).\footnote{
	In GCN, $\agg$ also has access to the degrees of vertices, which are used for computing the averaging weights.
	We omit the dependence on vertex degrees in our notation for conciseness.
}
Other aggregation operators are also viable, \eg~element-wise sum, max, or product (\cf~\cite{hamilton2017inductive,hua2022high}).
We note that distinguishing self-loops from other edges, and more generally, treating multiple edge types, is possible through the use of different weight matrices for different edge types~\citep{hamilton2017inductive,schlichtkrull2018modeling}.
For conciseness, we hereinafter focus on the case of a single edge type, and treat multiple edge types in~\cref{gnn:app:extensions}. 

After $L$ layers, the GNN generates hidden embeddings $\hidvec{L}{1}, \ldots, \hidvec{L}{\abs{\vertices}} \in \R^{\hdim}$.
For graph prediction, where a single output is produced for the whole graph, the hidden embeddings are usually combined into a single vector through the $\agg$ function.
A final linear layer with weights $\weightmat{o} \in \R^{1 \times \hdim}$ is then applied to the resulting vector.\footnote{
	We treat the case of output dimension one merely for the sake of presentation.
	Extension of our theory (delivered in~\cref{gnn:sec:analysis}) to arbitrary output dimension is straightforward~---~the results hold as stated for each of the functions computing an output entry.
}
Overall, the function realized by a depth~$L$ graph prediction GNN receives an input graph $\graph$ with vertex features $\fmat := \brk{\fvec{1}, \ldots, \fvec{ \abs{\vertices} }} \in \R^{\indim \times \abs{\vertices}}$, and returns:
\be
\hspace{-7.5mm}\text{(graph prediction)} \quad \funcgraph{\params}{\graph} \brk{ \fmat } := \weightmat{o} \agg \brk1{ \multisetbig{ \hidvec{L}{i} : i \in \vertices } }
\text{\,,}
\label{gnn:eq:graph_pred_gnn}
\ee
with $\params := \brk{ \weightmat{1}, \ldots, \weightmat{L}, \weightmat{o} }$ denoting the network's learnable weights.
For vertex prediction tasks, where the network produces an output for every $t \in \vertices$, the final linear layer is applied to each $\hidvec{L}{t}$ separately.
That is, for a target vertex $t \in \vertices$, the function realized by a depth~$L$ vertex prediction GNN is given by:
\be
\hspace{-42.5mm}\text{(vertex prediction)} \quad
\funcvert{\params}{\graph}{t} \brk {\fmat} := \weightmat{o} \hidvec{L}{t}
\text{\,.}
\label{gnn:eq:vertex_pred_gnn}
\ee

Our aim is to investigate the ability of GNNs to model interactions between vertices.
Prior studies of interactions modeled by different deep learning architectures have focused on neural networks with polynomial non-linearity, building on their representation as tensor networks~\citep{cohen2016expressive,cohen2017inductive,cohen2018boosting,sharir2018expressive,levine2018benefits,levine2018deep,balda2018tensor,khrulkov2018expressive,levine2019quantum,levine2020limits,wies2021transformer,levine2022inductive}.
Although neural networks with polynomial non-linearity are less common in practice, they have demonstrated competitive performance~\citep{cohen2014simnets,cohen2016deep,sharir2016tensorial,stoudenmire2018learning,chrysos2020p,felser2021quantum,hua2022high}, and hold promise due to their compatibility with quantum computation~\citep{grant2018hierarchical,bhatia2019matrix} and fully homomorphic encryption~\citep{gilad2016cryptonets}.
More importantly, their analyses brought forth numerous insights that were demonstrated empirically and led to development of practical tools for widespread deep learning models (with non-linearities such as ReLU).

Following the above, in our theoretical analysis (\cref{gnn:sec:analysis}) we consider GNNs with (element-wise) product aggregation, which are polynomial functions of their inputs.
Namely, the $\agg$ operator from~\cref{gnn:eq:gnn_update,gnn:eq:graph_pred_gnn} is taken to be:
\be
\agg \brk{ \XX } := \hadmp_{ \xbf \in \XX } \xbf
\text{\,,}
\label{gnn:eq:prod_gnn_agg}
\ee
where $\hadmp$ stands for the Hadamard product and $\XX$ is a multiset of vectors.
The resulting architecture can be viewed as a variant of the GNN proposed in~\cite{hua2022high}, where it was shown to achieve competitive performance in practice.
Central to our proofs are tensor network representations of GNNs with product aggregation (formally established in~\cref{gnn:app:prod_gnn_as_tn}), analogous to those used for analyzing other types of neural networks.
We empirically demonstrate our theoretical findings on popular GNNs (\cref{gnn:sec:analysis:experiments}), such as GCN and GIN with ReLU non-linearity, and use them to derive a practical edge sparsification algorithm (\cref{gnn:sec:sparsification}).

We note that some of the aforementioned analyses of neural networks with polynomial non-linearity were extended to account for additional non-linearities, including~ReLU, through constructs known as \emph{generalized tensor networks}~\citep{cohen2016convolutional}.
We thus believe our theory may be similarly extended, and regard this as an interesting direction for future work.


\section{Theoretical Analysis: The Effect of Input Graph Structure and Neural Network Architecture on Modeled Interactions}
\label{gnn:sec:analysis}

In this section, we employ separation rank (\cref{gnn:def:sep_rank}) to theoretically quantify how the input graph structure and network architecture (width and depth) affect the ability of a GNN with product aggregation to model interactions between input vertices.
We begin with an overview of the main results and their implications (\cref{gnn:sec:analysis:overview}), after which we delve into the formal analysis (\cref{gnn:sec:analysis:formal}).
Experiments demonstrate our theory's implications on common GNNs, such as GCN and GIN with ReLU non-linearity (\cref{gnn:sec:analysis:experiments}).

\subsection{Overview and Implications}
\label{gnn:sec:analysis:overview}

Consider a depth~$L$ GNN with width $\hdim$ and product aggregation (\cref{gnn:sec:gnns}).
Given a graph $\graph$, any assignment to the weights of the network $\params$ induces a multivariate function~---~$\funcgraph{\params}{\graph}$ for graph prediction (\cref{gnn:eq:graph_pred_gnn}) and $\funcvert{\params}{\graph}{t}$ for prediction over a given vertex $t \in \vertices$ (\cref{gnn:eq:vertex_pred_gnn})~---~whose variables correspond to feature vectors of input vertices.
The separation rank of this function with respect to $\I \subseteq \vertices$ thus measures the interaction modeled across the partition $(\I, \I^c)$, \ie~between the vertices in $\I$ and those in $\I^c$.
The higher the separation rank is, the stronger the modeled interaction.

\begin{figure*}[t]
	\vspace{0mm}
	\begin{center}
		\includegraphics[width=1\textwidth]{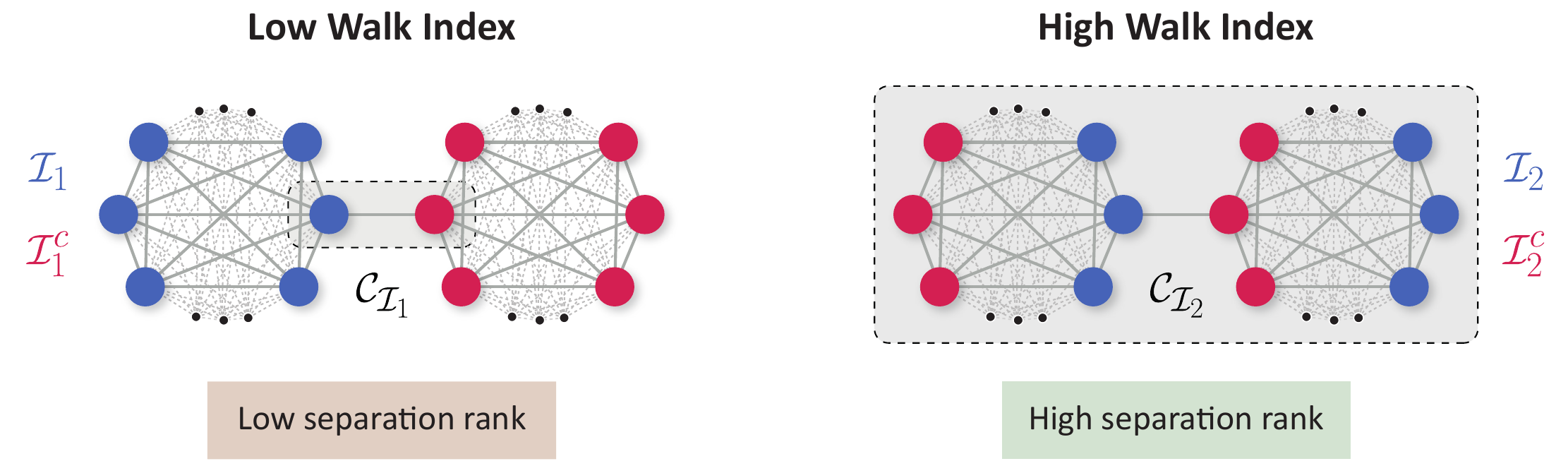}
	\end{center}
	\vspace{-1mm}
	\caption{
		Depth $L$ GNNs can model stronger interactions between sides of partitions that have a higher walk index (\cref{gnn:def:walk_index}).
		The partition $(\I_1, \I_1^c)$ (left) divides the vertices into two separate cliques, connected by a single edge.
		Only two vertices reside in $\cut_{\I_1}$~---~the set of vertices with an edge crossing the partition.
		Taking for example depth $L = 3$, the $2$-walk index of $\I_1$ is $\Theta ( | \vertices |^{2} )$ and its $(2, t)$-walk index is $\Theta ( | \vertices |)$, for $t \in \vertices$.
		In contrast, the partition $(\I_2, \I_2^c)$ (right) equally divides the vertices in each clique to different sides.
		All vertices reside in $\cut_{\I_2}$, meaning the $2$-walk index of $\I_2$ is $\Theta ( | \vertices |^{3} )$ and its $(2, t)$-walk index is $\Theta (| \vertices|^{2})$, for $t \in \vertices$.
		Hence, in both graph and vertex prediction scenarios, the walk index of $\I_1$ is relatively low compared to that of $\I_2$.
		Our analysis (\cref{gnn:sec:analysis:overview,gnn:sec:analysis:formal}) states that a higher separation rank can be attained with respect to $\I_2$, meaning stronger interaction can be modeled across $(\I_2, \I_2^c)$ than across $(\I_1, \I_1^c)$.
		We empirically confirm this prospect in~\cref{gnn:sec:analysis:experiments}.
	}
	\label{gnn:fig:sep_rank_analysis}
\end{figure*}

Key to our analysis are the following notions of \emph{walk index}, defined by the number of walks emanating from the boundary of the partition $(\I, \I^c)$, \ie~from vertices with an edge crossing the partition induced by~$\I$ (see~\cref{gnn:fig:sep_rank_high_level} for an illustration).

\begin{definition}
	\label{gnn:def:walk_index}
	Let $\I \subseteq \vertices$.
	Denote by $\cut_\I$ the set of vertices with an edge crossing the partition $(\I, \I^c)$, \ie~$\cut_\I := \{ i \in \I : \neigh (i) \cap \I^c \neq \emptyset \} \cup \{ j \in \I^c : \neigh (j) \cap \I \neq \emptyset \}$, and recall that $\nwalk{l}{\cut_\I}{\J}$ denotes the number of length $l \in \N_{\geq 0}$ walks from any vertex in $\cut_\I$ to any vertex in $\J \subseteq \vertices$.
	For~$L \in \N$:
	\begin{itemize}[leftmargin=2em]
		\vspace{-2mm}
		\item (graph prediction)~~we define the \emph{$(L - 1)$-walk index} of $\I$, denoted $\walkin{L -1}{\I}$, to be the number of length $L - 1$ walks originating from $\cut_\I$, \ie~$\walkin{L-1}{\I} := \nwalk{L - 1}{\cut_\I}{\vertices}$; and
		
		\item (vertex prediction)~~for $t \in \vertices$ we define the \emph{$(L -1 , t)$-walk index} of $\I$, denoted $\walkinvert{L-1}{t}{\I}$, to be the number of length $L - 1$ walks from $\cut_\I$ that end at $t$, \ie~$\walkinvert{L-1}{t}{\I} := \nwalk{L - 1}{\cut_\I}{\{t\}}$.
	\end{itemize}
\end{definition}

As our main theoretical contribution, we prove:
\begin{theorem}[informally stated]
	\label{gnn:thm:sep_rank_informal_bounds}
	For all weight assignments $\params$ and $t \in \vertices$:
	\begin{align*}
		\text{(graph prediction)} \quad &\log \brk1{ \seprankbig{ \funcgraph{\params}{\graph} }{\I} } 
		= 
		\OO \brk1{ \log \brk{\hdim} \cdot \walkin{L - 1}{\I} } \text{\,,} \\[0.5em]
		\text{(vertex prediction)} \quad &\log \brk1{ \seprankbig{ \funcvert{\params}{\graph}{ t } }{\I} }
		=
		\OO \brk1{ \log \brk{ \hdim } \cdot \walkinvert{L - 1}{ t }{ \I } } \text{\,.}
	\end{align*}
	Moreover, nearly matching lower bounds hold for almost all weight assignments.\footnote{
		Almost all in the sense that the lower bounds hold for all weight assignments but a set of Lebesgue measure zero.
	} 
\end{theorem}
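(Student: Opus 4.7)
The strategy is to transfer the separation rank question into a question about matricization ranks of a coefficient tensor, and then exploit the tensor network representation of product-aggregation GNNs mentioned in \cref{gnn:sec:gnns} (and formally constructed in the appendix referenced there). Since $\funcgraph{\params}{\graph}$ and $\funcvert{\params}{\graph}{t}$ are multilinear-like polynomials in the vertex features, there exists a coefficient tensor $\Atensor \in \R^{D_x \times \cdots \times D_x}$ (one mode per vertex) such that the realized function is $\inprodnoflex{\Atensor}{\fvec{1} \otimes \cdots \otimes \fvec{|\vertices|}}$ after a standard homogenization/lifting of features. A well-known correspondence (used in \cite{cohen2017inductive,levine2018deep} and in \cref{chap:imp_reg_htf}) then says that $\seprankbig{f}{\I} = \rank \mat{\Atensor}{\I}$, so it suffices to upper and lower bound this matrix rank.

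\textbf{Upper bound.} First I would write down the tensor network $\T_\graph$ realizing $\Atensor$: unrolling the product-aggregation update $\hidvec{l}{i} = \hadmp_{j \in \neigh(i)} \weightmat{l} \hidvec{l-1}{j}$ recursively produces, for each vertex $i$ (or only for the target $t$ in the vertex-prediction case), a tree of contractions indexed by length-$L$ walks ending at $i$; contracted edges carry dimension $D_h$, and open legs (one per walk endpoint among the input features) carry dimension $D_x$. Standard tensor-network min-cut reasoning (\cf~Claim~7 in~\cite{cohen2017inductive}) then yields $\rank \mat{\Atensor}{\I} \leq \prod_{e \in \cutset} \dim(e)$ for any edge cut $\cutset$ of $\T_\graph$ that separates open legs indexed by $\I$ from those indexed by $\I^c$. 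The cut I would use severs, for every boundary vertex $v \in \cut_\I$, the $D_h$-dimensional edges feeding the length-$(L-1)$ sub-walks rooted at $v$; in the graph-prediction setting these sub-walks can end anywhere, so the number of cut edges is on the order of $\nwalk{L-1}{\cut_\I}{\vertices} = \walkin{L-1}{\I}$, while in the vertex-prediction setting they must terminate at $t$, giving $\walkinvert{L-1}{t}{\I}$. Taking the logarithm yields the two claimed upper bounds.

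\textbf{Lower bound.} For the matching lower bound I would appeal to the lower semi-continuity of matrix rank: $\params \mapsto \rank \mat{\Atensor(\params)}{\I}$ jumps up only on a Zariski-closed set, so if some choice of $\params^\star$ attains a value $R^\star$, then $\rank \mat{\Atensor(\params)}{\I} \geq R^\star$ outside a set of Lebesgue measure zero. It therefore suffices to exhibit one $\params^\star$ realizing matricization rank $D_h^{\Omega(\walkin{L-1}{\I})}$ (respectively $D_h^{\Omega(\walkinvert{L-1}{t}{\I})}$). The natural choice is to take all $\weightmat{l}$ to be generic full-rank matrices (e.g.\ random Gaussian), so that along the cut identified above the induced linear maps act as generic tensors on the product space of cut edges; a dimension count on the unrolled tensor network then shows $\mat{\Atensor(\params^\star)}{\I}$ has rank at least the product of cut-edge dimensions, up to a bounded-degree polynomial correction absorbed into the constant in front of $\log(D_h)$.

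\textbf{Main obstacle.} The delicate point is controlling the lower bound. Producing a cut that matches walk index up to constants is clean, but proving that no "accidental" cancellations reduce the matricization rank requires a careful genericity / Schwartz--Zippel-type argument on a polynomial identity whose variables are the entries of $\weightmat{1}, \ldots, \weightmat{L}$. Concretely, I would identify a specific minor of $\mat{\Atensor(\params)}{\I}$ of size $D_h^{c \cdot \walkin{L-1}{\I}}$ whose determinant is a nonzero polynomial in $\params$ (verified by evaluating at a structured choice, such as diagonal weight matrices that make the tensor network contraction reduce to a tensor product of independent factors along the cut), and then invoke that a nonzero polynomial vanishes only on a measure-zero set. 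Handling the vertex-prediction case in parallel requires replacing the global aggregation step by the single-target readout, which only changes which walks contribute to the cut but leaves the genericity argument intact.
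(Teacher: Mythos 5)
Your upper bound is essentially the paper's argument: unroll the product-aggregation GNN into a tree tensor network and invoke the min-cut bound on matricization rank (the paper uses Claim~7 of~\cite{levine2018deep}, with a cut isolating all nodes associated with $\I$, plus a short geometric-series argument showing that walks of length $<L-1$ from $\cut_\I$ are dominated by those of length $L-1$). That part of your plan is sound.

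The lower bound, however, has a genuine gap. You claim that generic weights yield matricization rank $\hdim^{\Omega(\walkin{L-1}{\I})}$, to be certified by exhibiting a nonzero minor of that size. This is false: because each vertex feature $\fvec{i}$ is reused across all walks emanating from $i$, the function is a \emph{symmetric} polynomial of degree $\nwalk{L-1}{\cut}{\vertices}$ in the relevant features, and the resulting grid-tensor matricization has the structure of a Hadamard power $\hadmp^{P}\brk{\Zbf\Zbf^\top}$ with $P = \nwalk{L-1}{\cut}{\vertices}$, whose rank is capped by the multiset coefficient $\multisetcoeffnoflex{\mindim}{P} = \binom{\mindim + P - 1}{P}$ rather than by $\mindim^{P}$. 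Any minor of size exceeding this multiset coefficient has identically vanishing determinant, so your Schwartz--Zippel step cannot get off the ground at the scale you target. This is precisely why the paper's lower bound (\cref{gnn:thm:sep_rank_lower_bound}) is $\log\brk{\alpha_{\cut}}\cdot\nwalk{L-1}{\cut}{\vertices}$ with $\alpha_{\cut}\approx 1+(\mindim-1)/\nwalk{L-1}{\cut}{\vertices}$, and why the match is only ``up to logarithmic terms.'' Two further points you elide: (i) the exact identity $\seprank{f}{\I}=\rank\mat{\Atensor}{\I}$ for a single vertex-indexed coefficient tensor is not available here (the function is not multilinear in the features); the paper uses two different objects~---~a walk-indexed unrolled tensor for the upper bound and a grid tensor (\cref{gnn:lem:grid_tensor_sep_rank_lb}) for the lower bound, which only give inequalities in opposite directions; and (ii) the achievable lower bound is governed not by $\cut_\I$ itself but by an \emph{admissible subset} $\cut\in\cutset(\I)$ (\cref{gnn:def:admissible_subsets}), a restriction needed to make the explicit weight/template construction (zero-padded identities, an all-ones row matrix, and templates built from a matrix $\Zbf$ maximizing $\rank\brk{\hadmp^{P}\brk{\Zbf\Zbf^\top}}$) go through.
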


The upper and lower bounds are formally established by~\cref{gnn:thm:sep_rank_upper_bound,gnn:thm:sep_rank_lower_bound} in~\cref{gnn:sec:analysis:formal}, respectively, and are generalized to input graphs with directed edges and multiple edge types in~\cref{gnn:app:extensions}.
\cref{gnn:thm:sep_rank_informal_bounds} implies that, the $(L - 1)$-walk index of $\I$ in graph prediction and its $(L - 1, t)$-walk index in vertex prediction control the separation rank with respect to $\I$, and are thus paramount for modeling interaction between $\I$ and~$\I^c$~---~see~\cref{gnn:fig:sep_rank_analysis} for an illustration.
It thereby formalizes the conventional wisdom by which GNNs can model stronger interaction between areas of the input graph that are more interconnected.
We support this finding empirically with common GNN architectures (\eg~GCN and GIN with ReLU non-linearity) in~\cref{gnn:sec:analysis:experiments}.

One may interpret~\cref{gnn:thm:sep_rank_informal_bounds} as encouraging addition of edges to an input graph.
Indeed, the theorem states that such addition can enhance the GNN's ability to model interactions between input vertices.  
This accords with existing evidence by which increasing connectivity can improve the performance of GNNs in practice (see,~\eg,~\cite{gasteiger2019diffusion,alon2021bottleneck}). 
However, special care needs to be taken when adding edges: it may distort the semantic meaning of the input graph, and may lead to plights known as over-smoothing and over-squashing~\citep{li2018deeper,oono2019graph,chen2020measuring,alon2021bottleneck,banerjee2022oversquashing}.
Rather than employing~\cref{gnn:thm:sep_rank_informal_bounds} for adding edges, we use it to select which edges to preserve in a setting where some must be removed.  
That is, we employ it for designing an edge sparsification algorithm.  
The algorithm, named \emph{Walk Index Sparsification} (\emph{WIS}), is simple, computationally efficient, and in our experiments has markedly outperformed alternative methods in terms of induced prediction accuracy.   
We present and evaluate it in~\cref{gnn:sec:sparsification}.

\vspace{-0.5mm}

\subsection{Formal Presentation}
\label{gnn:sec:analysis:formal}

We begin by upper bounding the separation ranks a GNN can achieve.

\begin{theorem}
	\label{gnn:thm:sep_rank_upper_bound}
	For an undirected graph $\graph$ and $t \in \vertices$, let $\funcgraph{\params}{\graph}$ and $\funcvert{\params}{\graph}{t}$ be the functions realized by depth $L$ graph and vertex prediction GNNs, respectively, with width~$\hdim$, learnable weights $\params$, and product aggregation (\cref{gnn:eq:gnn_update,gnn:eq:graph_pred_gnn,gnn:eq:vertex_pred_gnn,gnn:eq:prod_gnn_agg}).
	Then, for any $\I \subseteq \vertices$ and assignment of weights $\params$ it holds that:
	\begin{align}
		\text{(graph prediction)} \quad &\log \brk1{ \seprankbig{ \funcgraph{\params}{\graph} }{\I} }
		\leq
		\log \brk{ \hdim } \cdot \brk1{4 \underbrace{ \nwalk{L - 1}{\cut_\I}{ \vertices } }_{ \walkin{L -1}{\I} } + 1 } \text{\,,}
		\label{gnn:eq:sep_rank_upper_bound_graph_pred} \\[0.5em]
		\text{(vertex prediction)} \quad &\log \brk1{ \seprankbig{ \funcvert{\params}{\graph}{ t } }{\I} }
		\leq
		\log \brk{ \hdim } \cdot 4 \underbrace{ \nwalk{L - 1}{\cut_\I}{ \{ t \} } }_{ \walkinvert{L - 1}{t}{\I} } \text{\,.}
		\label{gnn:eq:sep_rank_upper_bound_vertex_pred}
	\end{align}
\end{theorem}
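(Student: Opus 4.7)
The plan is to exploit the tensor network representation of GNNs with product aggregation alluded to (and, as stated in the excerpt, formally constructed in an appendix). Any such GNN is a multilinear function of the feature vectors $\fvec{1}, \ldots, \fvec{|\vertices|}$, so it admits a coefficient tensor $\Atensor$ with $f(\fmat) = \inprodnoflex{\Atensor}{\fvec{1} \tenp \cdots \tenp \fvec{|\vertices|}}$. A standard fact (used in all prior separation-rank analyses of polynomial neural networks) is $\seprank{f}{\I} \leq \rank \mat{\Atensor}{\I}$, and when $\Atensor$ is realized as a tensor network, $\rank \mat{\Atensor}{\I}$ is bounded by the product of the bond dimensions along any cut that severs the external legs indexed by $\I$ from those indexed by $\I^c$. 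The whole proof therefore reduces to exhibiting a well-chosen cut in the GNN's tensor network and counting its weight.

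First, I would unroll the depth-$L$ message passing into a layered tensor network. The update $\hidvec{l}{i} = \hadmp_{j \in \neigh(i)} \weightmat{l} \hidvec{l-1}{j}$ becomes, for each layer $l$ and vertex $i$, a local gadget consisting of one copy of $\weightmat{l}$ per neighbor $j$ together with a diagonal (copy) tensor realizing the Hadamard product; inputs $\fvec{i}$ are the leaves, and the final linear head (composed with an outer-product readout in the graph-prediction case) is the root. Every internal bond in this diagram has dimension $\hdim$ and can be identified with a single message-passing step along some edge of $\graph$; consequently, paths through the network from the root to a leaf $\fvec{j}$ are in bijection with length-$L$ walks in $\graph$ ending at the readout vertex ($t$ for vertex prediction, every $i \in \vertices$ for graph prediction).

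Next, I would construct the cut. Assign each internal tensor in the layered diagram to the $\I$-side or the $\I^c$-side according to which half of the partition its ``current'' vertex lies in, and mark as cut exactly those bonds whose two endpoints disagree. The key observation is combinatorial: such a disagreement on a bond at layer $l$ forces the walk through that bond to pass through $\cut_\I$ at layer $l$ or an adjacent layer, because $\cut_\I$ is precisely the set of vertices incident to edges straddling the partition. Tracing each cut bond backward from $\cut_\I$ toward the leaves yields a length-$(L-1)$ walk starting at $\cut_\I$ and ending at the readout, so the number of cut bonds is at most a small constant (absorbed into the factor $4$) times $\walkinvert{L-1}{t}{\I}$ in the vertex case and $\walkin{L-1}{\I}$ in the graph case. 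The extra $+1$ in \cref{gnn:eq:sep_rank_upper_bound_graph_pred} comes from the single bond of dimension $\hdim$ connecting the global outer-product readout to the rest of the network, which is cut exactly once in the graph-prediction setting but does not appear for vertex prediction. Multiplying the dimension~$\hdim$ across all cut bonds and taking a logarithm yields the stated bounds.

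The main obstacle I expect is the careful bookkeeping establishing the constant $4$ and ensuring no double-counting. A single intermediate tensor associated with vertex $i$ at layer $l$ participates in many walks (due to branching in the Hadamard product over $|\neigh(i)|$ neighbors), so it is essential to count bonds rather than walks directly and to verify that every cut bond gets charged to a \emph{distinct} length-$(L-1)$ walk emanating from $\cut_\I$. Handling self-loops (which inflate neighborhood sizes) and verifying that bonds lying strictly inside $\I$ or strictly inside $\I^c$ contribute nothing to the cut are the subtle points; once the $4$-to-$1$ charging from bonds to walks is set up correctly, the rest of the argument is a direct application of the tensor-network cut bound.
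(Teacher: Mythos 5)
Your overall strategy is the one the paper uses: represent the GNN as a tree tensor network, invoke the cut-based upper bound on the rank of the coefficient tensor's matricization (and hence on separation rank), and choose the cut that assigns every internal tensor to the side of the partition matching its vertex. Two details of that plan, however, are not right as stated, and they are exactly where the quantitative content of the theorem lives. First, the relevant bound is not the raw product of bond dimensions along the cut but a \emph{modified} cut weight in which all legs incident to the same $\delta$-tensor (the copy tensor implementing the Hadamard product) contribute only a single factor of $\hdim$. With this modification, the cut weight is governed by the number of $\delta$-tensors touched by the cut, not the number of crossing bonds, and that number at layer $l$ is $\sum_{i \in \cut_\I} \nwalk{L-l}{\{i\}}{\vertices} = \nwalk{L-l}{\cut_\I}{\vertices}$ (for graph prediction; replace $\vertices$ by $\{t\}$ for vertex prediction). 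Without the modification your count of crossing bonds would be inflated by neighborhood sizes and would not reduce to a walk count.

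Second, and more importantly, your claim that "tracing each cut bond backward \ldots yields a length-$(L-1)$ walk starting at $\cut_\I$" is incorrect: a $\delta$-tensor touched by the cut at layer $l$ corresponds to a walk of length $L - l$ from $\cut_\I$ to the readout, so summing over layers gives $\sum_{l=1}^{L} \nwalk{L-l}{\cut_\I}{\vertices}$, a sum over walks of \emph{all} lengths $0, 1, \ldots, L-1$, not a $4$-to-$1$ charging into length-$(L-1)$ walks. The factor $4$ is obtained by proving $\sum_{l=1}^{L} \nwalk{L-l}{\cut_\I}{\vertices} \leq 4\, \nwalk{L-1}{\cut_\I}{\vertices}$, and this is where self-loops are not a nuisance to be "handled" but the enabling mechanism: every length-$l$ walk from a vertex of $\cut_\I$ induces at least two distinct length-$(l+2)$ walks (loop twice on the start vertex, or step to a neighbor across the partition and back), giving $\nwalk{L-l}{\cut_\I}{\vertices} \leq 2^{-\lfloor l/2 \rfloor}\nwalk{L-1}{\cut_\I}{\vertices}$, after which a geometric series yields the constant $4$. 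This is also why the analogous bound for directed graphs retains the full sum over layers rather than a constant multiple of the top term. Your identification of the $+1$ in the graph-prediction bound with the readout aggregation is correct (it is the single contribution of the output $\delta$-tensor). Once you replace the heuristic charging argument with the layer-summation and the self-loop-based geometric decay, your proof coincides with the paper's.
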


\begin{proof}[Proof sketch (proof in~\cref{gnn:app:proofs:sep_rank_upper_bound})]
	In~\cref{gnn:app:prod_gnn_as_tn}, we show that the computations performed by a GNN with product aggregation can be represented as a \emph{tensor network}.
	In brief, a tensor network is a weighted graph that describes a sequence of arithmetic operations known as tensor contractions (see~\cref{gnn:app:prod_gnn_as_tn:tensors,gnn:app:prod_gnn_as_tn:tensor_networks} for a self-contained introduction to tensor networks).
	The tensor network corresponding to a GNN with product aggregation adheres to a tree structure~---~its leaves are associated with input vertex features and interior nodes embody the operations performed by the GNN.
	Importing machinery from tensor analysis literature, we prove that $\sepranknoflex{ \funcgraph{\params}{\graph} }{\I}$ is upper bounded by a minimal cut weight in the corresponding tensor network, among cuts separating leaves associated with input vertices in~$\I$ from leaves associated with input vertices in~$\I^c$.
	\cref{gnn:eq:sep_rank_upper_bound_graph_pred} then follows by finding such a cut in the tensor network with sufficiently low weight.
	\cref{gnn:eq:sep_rank_upper_bound_vertex_pred} is established analogously.
\end{proof}

A natural question is whether the upper bounds in~\cref{gnn:thm:sep_rank_upper_bound} are tight, \ie~whether separation ranks close to them can be attained.
We show that nearly matching lower bounds hold for almost all assignments of weights $\params$.
To this end, we define \emph{admissible subsets of $\cut_\I$}, based on a notion of vertex subsets with \emph{no repeating shared neighbors}.
\begin{definition}
	\label{gnn:def:no_rep_neighbors}
	We say that $\I, \J \subseteq \vertices$ \emph{have no repeating shared neighbors} if every $k \in \neigh (\I) \cap \neigh (\J)$ has only a single neighbor in each of $\I$ and $\J$, \ie~$\abs{\neigh (k) \cap \I} = \abs{\neigh (k) \cap \J} = 1$.
\end{definition}

\begin{definition}
	\label{gnn:def:admissible_subsets}
	For $\I \subseteq \vertices$, we refer to $\cut \subseteq \cut_\I$ as an \emph{admissible subset of $\cut_\I$} if there exist $\I' \subseteq \I, \J' \subseteq \I^c$ with no repeating shared neighbors such that $\cut = \neigh (\I') \cap \neigh (\J')$.
	We use $\cutset (\I)$ to denote the set comprising all admissible subsets of $\cut_\I$:
	\[
	\cutset (\I) := \brk[c]1{ \cut \subseteq \cut_\I : \cut \text{ is an admissible subset of $\cut_\I$} }
	\text{\,.}
	\]
\end{definition}

\cref{gnn:thm:sep_rank_lower_bound} below establishes that almost all possible values for the network's weights lead the upper bounds in~\cref{gnn:thm:sep_rank_upper_bound} to be tight, up to logarithmic terms and to the number of walks from $\cut_\I$ being replaced with the number of walks from any single $\cut \in \cutset (\I)$.
The extent to which $\cut_\I$ can be covered by an admissible subset thus determines how tight the upper bounds are.
Trivially, at least the shared neighbors of any $i \in \I, j \in \I^c$ can be covered, since $\neigh (i) \cap \neigh (j) \in \cutset (\I)$.
\cref{gnn:app:sep_rank_examples} shows that for various canonical graphs all of $\cut_\I$, or a large part of it, can be covered by an admissible subset.

\begin{theorem}
	\label{gnn:thm:sep_rank_lower_bound}
	Consider the setting and notation of~\cref{gnn:thm:sep_rank_upper_bound}.
	Given $\I \subseteq \vertices$, for almost all assignments of weights $\params$, \ie~for all but a set of Lebesgue measure zero, it holds that:
	\begin{align}
		\text{(graph prediction)} \quad &\log \brk1{ \seprankbig{ \funcgraph{\params}{\graph} }{\I} }
		\geq
		\max_{ \cut \in \cutset (\I) } \log \brk{ \alpha_{\cut} } \cdot \nwalk{L - 1}{\cut}{\vertices}
		\text{\,,}
		\label{gnn:eq:sep_rank_lower_bound_graph_pred} \\[0.5em]
		\text{(vertex prediction)} \quad &\log \brk1{ \seprankbig{ \funcvert{\params}{\graph}{ t } }{\I} }
		\geq
		\max_{ \cut \in \cutset (\I) } \log \brk{ \alpha_{\cut, t} } \cdot \nwalk{L - 1}{\cut}{ \{ t \} }
		\text{\,,}
		\label{gnn:eq:sep_rank_lower_bound_vertex_pred}
	\end{align}
	where:
	\[
	\begin{split}
	\alpha_{\cut} & := \begin{cases}
		\mindim^{1 / \nwalk{0}{\cut}{\vertices} } & , \text{if } L = 1 \\
		\brk{ \mindim - 1 } \cdot \nwalk{L - 1}{\cut}{\vertices}^{-1} + 1 & , \text{if } L \geq 2
	\end{cases}
	\text{\,,} \\
	\alpha_{\cut, t} & := \begin{cases}
		\mindim & , \text{if } L = 1 \\
		\brk{ \mindim - 1 } \cdot \nwalk{L - 1}{\cut}{ \{ t \}}^{-1} + 1 & , \text{if } L \geq 2
	\end{cases}
	\text{\,,}
	\end{split}
	\]
	with $\mindim := \min \brk[c]{ \indim, \hdim }$.
	If $\nwalk{L - 1}{\cut}{\vertices} = 0$ or $\nwalk{L - 1}{\cut}{ \{ t \}} = 0$, the respective lower bound (right hand side of~\cref{gnn:eq:sep_rank_lower_bound_graph_pred} or~\cref{gnn:eq:sep_rank_lower_bound_vertex_pred}) is zero by convention.
\end{theorem}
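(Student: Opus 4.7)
The plan is to mirror the tensor network formulation that was used for \cref{gnn:thm:sep_rank_upper_bound}, but now extracting a \emph{lower} bound on the rank of a particular matricization. Concretely, as established in \cref{gnn:app:prod_gnn_as_tn}, the function $\funcgraph{\params}{\graph}$ (respectively $\funcvert{\params}{\graph}{t}$) is multilinear in the input features, so there exists a tensor $\tensorendgraph \in (\R^{\indim})^{\otimes |\vertices|}$ (respectively $\tensorendvert{t}$) such that $\funcgraph{\params}{\graph}(\fmat) = \inprod{ \tensorendgraph }{ \otimes_{i \in \vertices} \fvec{i} }$. A standard fact from tensor analysis is that $\seprank{ \funcgraph{\params}{\graph} }{ \I } \geq \rank \tfmat{ \tensorendgraph ; \I }$, and similarly for the vertex prediction case. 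Thus it suffices to exhibit a single assignment of weights for which the rank of this matricization is at least $\alpha_{\cut}^{\nwalk{L-1}{\cut}{\vertices}}$ (respectively $\alpha_{\cut,t}^{\nwalk{L-1}{\cut}{\{t\}}}$) for a chosen admissible $\cut \in \cutset(\I)$; a standard genericity argument (the set of weights producing rank strictly smaller than a fixed value is the zero locus of a nontrivial polynomial, hence has Lebesgue measure zero) will then upgrade the bound to hold for almost all $\params$.

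Next I would fix an admissible $\cut \in \cutset(\I)$ together with witnessing $\I' \subseteq \I$ and $\J' \subseteq \I^c$ having no repeating shared neighbors, in the sense of \cref{gnn:def:no_rep_neighbors}. The role of this condition is to ensure that, for each $k \in \cut$, the walks of length $L-1$ emanating from $k$ can be split cleanly into those that eventually pass through $\I'$ versus those that pass through $\J'$, without any other vertex in $\cut$ interfering. This is precisely what will let me ``decouple'' the contributions of different walks when constructing the explicit submatrix. For the specific weight assignment, I would take weight matrices of the form $\weightmat{l} = [\Ibf \mid \Ibf \mid \cdots]$ padded with zeros so that the product aggregation along each walk multiplies together a chosen block of the feature vectors, and I would choose distinct feature ``coordinates'' (a basis of $\R^{\mindim}$) to index walks. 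Because of the no-repeating-shared-neighbors property, the monomials arising from distinct tuples of walk-labelings are linearly independent, yielding a diagonal (hence full rank) submatrix of $\tfmat{ \tensorendgraph ; \I }$ whose size matches $\alpha_\cut^{\nwalk{L-1}{\cut}{\vertices}}$. The $L = 1$ versus $L \geq 2$ split in the definition of $\alpha_\cut$ reflects the degenerate case in which no propagation occurs and all walks from $\cut$ have length zero: there the rank is bounded directly by $\mindim^{|\cut|}$, while for $L \geq 2$ one must account for the pigeonhole constraint that a single width-$\hdim$ hidden representation can carry at most $\mindim$ independent signals across each edge, giving the refined base $(\mindim - 1)/\nwalk{L-1}{\cut}{\vertices} + 1$.

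Maximizing over all admissible $\cut \in \cutset(\I)$ then yields \cref{gnn:eq:sep_rank_lower_bound_graph_pred}. The vertex-prediction bound \cref{gnn:eq:sep_rank_lower_bound_vertex_pred} is proven identically, except that walks are restricted to those terminating at $t$: the top linear readout $\weightmat{o}$ selects $\hidvec{L}{t}$, so only walks ending at $t$ contribute monomials to the corresponding tensor $\tensorendvert{t}$, and the counting argument above produces a diagonal submatrix of the correct size. The degenerate case where $\nwalk{L-1}{\cut}{\vertices}$ or $\nwalk{L-1}{\cut}{\{t\}}$ is zero is immediate by convention.

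The main obstacle, I expect, will be the explicit combinatorial bookkeeping in the second step: verifying that the monomials indexed by walk-labelings are genuinely linearly independent under the no-repeating-shared-neighbors hypothesis, and giving the precise counting that produces the base $\alpha_\cut$ rather than a cruder $\mindim$. This essentially amounts to a careful induction over layers $l = 1, \ldots, L - 1$, showing that at each layer the admissibility hypothesis prevents the product-aggregation step from collapsing distinct walk-labelings to the same monomial. Once this independence is established, the rank lower bound and the genericity extension are routine.
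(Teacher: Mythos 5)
Your overall skeleton (an explicit weight/input construction followed by a polynomial genericity argument) matches the paper's, but two of your central steps have genuine gaps. First, your starting point is false: $\funcgraph{\params}{\graph}$ is \emph{not} multilinear in the vertex features, since product aggregation makes each $\fvec{i}$ appear once per walk through $i$ (already for $L=1$ the output has degree $\abs{\neigh(i)}$ in $\fvec{i}$ because of the product over all vertices and the self-loops). Consequently there is no order-$\abs{\vertices}$ coefficient tensor with $\funcgraph{\params}{\graph}(\fmat) = \inprod{\tensorendgraph}{\tenp_{i \in \vertices}\fvec{i}}$; the true coefficient tensor has order $\nwalk{L}{\vertices}{\vertices}$ with duplicated modes, and the rank of its matricization only \emph{upper} bounds the separation rank~---~that is precisely how \cref{gnn:thm:sep_rank_upper_bound} is proved, so it cannot also serve as your lower bound. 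The tool the paper uses instead is the grid tensor: evaluate the function over a grid of template vectors to obtain an order-$\abs{\vertices}$ tensor $\gridtensor{f}$, and show $\rank\mat{\gridtensor{f}}{\I} \leq \seprank{f}{\I}$ (\cref{gnn:lem:grid_tensor_sep_rank_lb}). Without this (or an equivalent) step your argument does not get off the ground.

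Second, the mechanism producing the rank bound for $L \geq 2$ is not a diagonal submatrix indexed by walk labelings~---~note that $\alpha_{\cut}^{\nwalk{L-1}{\cut}{\vertices}}$ is generally not an integer, so ``a diagonal submatrix whose size matches'' it is not meaningful, and standard-basis templates collapse under the repeated entrywise products rather than separating walk labelings. The paper instead takes templates to be rows of a positive matrix $\Zbf$, uses the no-repeating-shared-neighbors property to show (by induction over layers) that the relevant $M \times M$ submatrix of $\mat{\gridtensor{\funcgraph{\params}{\graph}}}{\I}$ equals $\Sbf\brk{\hadmp^{P}\brk{\Zbf\Zbf^\top}}\Qbf$ with $P = \nwalk{L-1}{\cut}{\vertices}$ and $\Sbf,\Qbf$ full-rank diagonal, invokes a vector rearrangement inequality to choose $\Zbf$ so that $\hadmp^{P}\brk{\Zbf\Zbf^\top}$ attains its maximal rank $\multisetcoeffnoflex{\mindim}{P}$, and finally bounds $\multisetcoeffnoflex{\mindim}{P} \geq \brk{(\mindim-1)/P + 1}^{P} = \alpha_{\cut}^{P}$. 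Your ``pigeonhole'' reading of the base $\alpha_{\cut}$ is a rationalization rather than a derivation, and your $L=1$ claim of rank $\mindim^{\abs{\cut}}$ overshoots what the construction (and the stated bound, which is just $\mindim$ for graph prediction) gives. The genericity step and the reduction to a single admissible $\cut$ are fine.
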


\begin{proof}[Proof sketch (proof in~\cref{gnn:app:proofs:sep_rank_lower_bound})]
	Our proof follows a line similar to that used in~\cite{levine2020limits,wies2021transformer,levine2022inductive} for lower bounding the separation rank of self-attention neural networks.
	The separation rank of any $f : ( \R^{\indim})^{\abs{\vertices}} \to \R$ can be lower bounded by examining its outputs over a grid of inputs.
	Specifically, for $M \in \N$ \emph{template vectors} $\vbf^{(1)}, \ldots, \vbf^{(M)} \in \R^{\indim}$, we can create a \emph{grid tensor} for $f$ by evaluating it over each point in \smash{$\brk[c]{ \brk{ \vbf^{(d_1)} , \ldots, \vbf^{(d_{ \abs{ \vertices} } ) } } }_{d_1, \ldots, d_{ \abs{ \vertices }} = 1}^M$} and storing the outcomes in a tensor with $\abs{\vertices}$ axes of dimension $M$ each.
	Arranging the grid tensor as a matrix $\gridmatnoflex{f}$ where rows correspond to axes indexed by $\I$ and columns correspond to the remaining axes, we show that $\rank ( \gridmatnoflex{f} ) \leq \seprank{f}{\I}$.
	The proof proceeds by establishing that for almost every assignment of $\theta$, there exist template vectors with which $\log \brk{ \rank \brk{ \gridmatnoflex{ \funcgraph{\params}{\graph} } } }$ and $\log \brk{ \rank \brk{ \gridmatnoflex{ \funcvert{\params}{\graph}{t} } } }$ are greater than (or equal to) the right hand sides of~\cref{gnn:eq:sep_rank_lower_bound_graph_pred,gnn:eq:sep_rank_lower_bound_vertex_pred}, respectively.
\end{proof}

\textbf{Directed edges and multiple edge types.}~~ \cref{gnn:app:extensions}~generalizes~\cref{gnn:thm:sep_rank_upper_bound,gnn:thm:sep_rank_lower_bound} to the case of graphs with directed edges and an arbitrary number of edge types.

\subsection{Empirical Demonstration}
\label{gnn:sec:analysis:experiments}

Our theoretical analysis establishes that, the strength of interaction GNNs can model across a partition of input vertices is primarily determined by the partition’s walk index~---~a graph-theoretical characteristic defined by the number of walks originating from the boundary of the partition (see~\cref{gnn:def:walk_index}).  
The analysis formally applies to GNNs with product aggregation (see~\cref{gnn:sec:gnns}), yet we empirically demonstrate that its conclusions carry over to various other message-passing GNN architectures, namely GCN~\citep{kipf2017semi}, GAT~\citep{velivckovic2018graph}, and GIN~\citep{xu2019powerful} (with ReLU non-linearity).
Specifically, through controlled experiments, we show that such models perform better on tasks in which the partitions that require strong interaction are ones with higher walk index, given that all other aspects of the tasks are the same.
A description of these experiments follows.
For brevity, we defer some implementation details to~\cref{gnn:app:experiments:details}.

We constructed two graph prediction datasets, in which the vertex features of each input graph are patches of pixels from two randomly sampled Fashion-MNIST~\citep{xiao2017fashion} images, and the goal is to predict whether the two images are of the same class.\footnote{
	Images are sampled such that the amount of positive and negative examples are roughly balanced.
}
In both datasets, all input graphs have the same structure: two separate cliques with~$16$ vertices each, connected by a single edge.
The datasets differ in how the image patches are distributed among the vertices: in the first dataset each clique holds all the patches of a single image, whereas in the second dataset each clique holds half of the patches from the first image and half of the patches from the second image.
\cref{gnn:fig:sep_rank_analysis} illustrates how image patches are distributed in the first (left hand side of the figure) and second (right hand side of the figure) datasets, with blue and red marking assignment of vertices to images.

\begin{table*}
	\caption{
		In accordance with our theory (\cref{gnn:sec:analysis:overview,gnn:sec:analysis:formal}), GNNs can better fit datasets in which the partitions (of input vertices) that require strong interaction are ones with higher walk index (\cref{gnn:def:walk_index}).
		Table reports means and standard deviations, taken over five runs, of train and test accuracies obtained by GNNs of depth~$3$ and width~$16$ on two datasets: one in which the essential partition~---~\ie~the main partition requiring strong interaction~---~has low walk index, and another in which it has high walk index (see~\cref{gnn:sec:analysis:experiments} for a detailed description of the datasets).
		For all GNNs, the train accuracy attained over the second dataset is considerably higher than that attained over the first dataset.
		Moreover, the better train accuracy translates to better test accuracy.
		See~\cref{gnn:app:experiments:details} for further implementation details.
	}
	\begin{center}
		\small
		\begin{tabular}{llcc}
			\toprule
			& & \multicolumn{2}{c}{Essential Partition Walk Index} \\
			\cmidrule(lr){3-4}
			& & Low & High \\
			\midrule
			\multirow{2}{*}{GCN} & Train Acc. (\%) & $70.4~\pm$ \scriptsize{$1.7$} & $\mathbf{81.4}~\pm$ \scriptsize{$2.0$} \\
			& Test Acc. ~~(\%) & $52.7~\pm$ \scriptsize{$1.9$} & $\mathbf{66.2}~\pm$ \scriptsize{$1.1$} \\
			\midrule
			\multirow{2}{*}{GAT} & Train Acc. (\%) & $82.8~\pm$ \scriptsize{$2.6$} & $\mathbf{88.5}~\pm$ \scriptsize{$1.1$} \\
			& Test Acc. ~~(\%) & $69.6~\pm$ \scriptsize{$0.6$} & $\mathbf{72.1}~\pm$ \scriptsize{$1.2$} \\
			\midrule
			\multirow{2}{*}{GIN} & Train Acc. (\%) & $83.2~\pm$ \scriptsize{$0.8$} & $\mathbf{94.2}~\pm$ \scriptsize{$0.8$} \\
			& Test Acc. ~~(\%) & $53.7~\pm$ \scriptsize{$1.8$} & $\mathbf{64.8}~\pm$ \scriptsize{$1.4$} \\
			\bottomrule
		\end{tabular}
	\end{center}
	\label{tab:low_vs_high_walkindex}
\end{table*}

Each dataset requires modeling strong interaction across the partition separating the two images, referred to as the \emph{essential partition} of the dataset.
In the first dataset the essential partition separates the two cliques, thus it has low walk index.
In the second dataset each side of the essential partition contains half of the vertices from the first clique and half of the vertices from the second clique, thus the partition has high walk index.
For an example illustrating the gap between these walk indices see~\cref{gnn:fig:sep_rank_analysis}.

\cref{tab:low_vs_high_walkindex} reports the train and test accuracies achieved by GCN, GAT, and GIN (with ReLU non-linearity) over both datasets.
In compliance with our theory, the GNNs fit the dataset whose essential partition has high walk index significantly better than they fit the dataset whose essential partition has low walk index.
Furthermore, the improved train accuracy translates to improvements in test accuracy.

\section{Practical Application: Expressivity Preserving Edge Sparsification}
\label{gnn:sec:sparsification}

\cref{gnn:sec:analysis} theoretically characterizes the ability of a GNN to model interactions between input vertices. 
It reveals that this ability is controlled by a graph-theoretical property we call walk index (\cref{gnn:def:walk_index}). 
The current section derives a practical application of our theory, specifically, an \emph{edge sparsification} algorithm named \emph{Walk Index Sparsification} (\emph{WIS}), which preserves the ability of a GNN to model interactions when input edges are removed.  
We present WIS, and show that it yields high predictive performance for GNNs over standard vertex prediction benchmarks of various scales, even when removing a significant portion of edges.
In particular, we evaluate WIS using GCN~\citep{kipf2017semi}, GIN~\citep{xu2019powerful}, and ResGCN~\citep{li2020deepergcn} over multiple datasets, including: Cora~\citep{sen2008collective}, which contains thousands of edges, DBLP~\citep{bojchevski2018deep}, which contains tens of thousands of edges, and OGBN-ArXiv~\citep{hu2020open}, which contains more than a million edges.
WIS is simple, computationally efficient, and in our experiments has markedly outperformed alternative methods in terms of induced prediction accuracy across edge sparsity levels.  
We believe its further empirical investigation is a promising direction for future research.

\vspace{-1mm}
\subsection{Walk Index Sparsification (WIS)}
\label{gnn:sec:sparsification:wis}

Running GNNs over large-scale graphs can be prohibitively expensive in terms of runtime and memory.
A natural way to tackle this problem is edge sparsification~---~removing edges from an input graph while attempting to maintain prediction accuracy (\cf~\cite{li2020sgcn,chen2021unified}).\footnote{
	As opposed to \emph{edge rewiring} methods that add or remove only a few edges with the goal of improving prediction accuracy~(\eg,~\cite{zheng2020robust,luo2021learning,topping2022understanding,banerjee2022oversquashing}).
}\textsuperscript{,}\footnote{
	An alternative approach is to remove vertices from an input graph (see,~\eg,~\cite{leskovec2006sampling}).  
	However, this approach is unsuitable for vertex prediction tasks, so we limit our attention to edge sparsification.
}

Our theory (\cref{gnn:sec:analysis}) establishes that, the strength of interaction a depth~$L$ GNN can model across a partition of input vertices is determined by the partition's walk index, a quantity defined by the number of length $L - 1$ walks originating from the partition's boundary.
This brings forth a recipe for pruning edges.
First, choose partitions across which the ability to model interactions is to be preserved.
Then, for every input edge (excluding self-loops), compute a tuple holding what the walk indices of the chosen partitions will be if the edge is to be removed.
Lastly, remove the edge whose tuple is maximal according to a preselected order over tuples (\eg~an order based on the sum, min, or max of a tuple's entries).
This process repeats until the desired number of edges are removed.
The idea behind the above-described recipe, which we call \emph{General Walk Index Sparsification}, is that each iteration greedily prunes the edge whose removal takes the smallest toll in terms of ability to model interactions across chosen partitions~---~see~\cref{alg:walk_index_sparsification_general} in~\cref{gnn:app:walk_index_sparsification_general} for a formal outline.
Below we describe a specific instantiation of the recipe for vertex prediction tasks, which are particularly relevant with large-scale graphs, yielding our proposed algorithm~---~Walk Index Sparsification (WIS). 

In vertex prediction tasks, the interaction between an input vertex and the remainder of the input graph is of central importance.
Thus, it is natural to choose the partitions induced by singletons (\ie~the partitions $( \{t\}, \vertices \setminus \{t\})$, where $t \in \vertices$) as those across which the ability to model interactions is to be preserved.
We would like to remove edges while avoiding a significant deterioration in the ability to model interaction under any of the chosen partitions.  
To that end, we compare walk index tuples according to their minimal entries, breaking ties using the second smallest entries, and so forth.
This is equivalent to sorting (in ascending order) the entries of each tuple separately, and then ordering the tuples lexicographically.

\cref{alg:walk_index_sparsification_vertex} provides a self-contained description of the method attained by the foregoing choices.  
We refer to this method as $(L-1)$-Walk Index Sparsification (WIS), where the “$(L-1)$'' indicates that only walks of length $L-1$ take part in the walk indices.
Since $(L - 1)$-walk indices can be computed by taking the $(L - 1)$'th power of the graph's adjacency matrix, $(L - 1)$-WIS runs in $\OO ( N \abs{\edges} \abs{\vertices}^3 \log (L) )$ time and requires $\OO ( \abs{\edges} \abs{\vertices} + \abs{\vertices}^2 )$ memory, where $N$ is the number of edges to be removed.
For large graphs a runtime cubic in the number of vertices can be restrictive. 
Fortunately, $1$-WIS, which can be viewed as an approximation for $(L - 1)$-WIS with $L > 2$, facilitates a particularly simple and efficient implementation based solely on vertex degrees, requiring only linear time and memory~---~see~\cref{alg:walk_index_sparsification_vertex_one} (whose equivalence to $1$-WIS is explained in~\cref{gnn:app:one_wis_efficient}).
Specifically, $1$-WIS runs in $\OO (N \abs{\edges} + \abs{\vertices})$ time and requires $\OO (\abs{\edges} + \abs{\vertices})$ memory.

\begin{algorithm}[t!]
	\caption{$(L - 1)$-Walk Index Sparsification (WIS) ~ (instance of a general scheme described in~\cref{gnn:app:walk_index_sparsification_general})} 
	\label{alg:walk_index_sparsification_vertex}	
	\begin{algorithmic}
		\STATE \!\!\!\!\textbf{Input:} $\graph$~---~graph , $L \in \N$~---~GNN depth , $N \in \N$~---~number of edges to remove \\[0.2em]
		\STATE \!\!\!\!\textbf{Result:} Sparsified graph obtained by removing $N$ edges from $\graph$ \\[0.4em]
		\hrule
		\vspace{2mm}
		\FOR{$n = 1, \ldots, N$}
		\vspace{0.5mm}
		\STATE \darkgray{\# for every edge, compute walk indices of partitions induced by $\{t\}$, for $t \in \vertices$, after the edge's removal}
		\vspace{0.5mm}
		\FOR{$e \in \edges$ (excluding self-loops)}
		\vspace{0.5mm}
		\STATE initialize $\sbf^{(e)} = \brk{0, \ldots, 0} \in \R^{\abs{\vertices}}$
		\vspace{0.5mm}
		\STATE remove $e$ from $\graph$ (temporarily)
		\vspace{0.5mm}
		\STATE for every $t \in \vertices$, set $\sbf^{(e)}_{t} = \walkinvert{L - 1}{t}{ \{t\} }$
		\vspace{0.5mm}
		\STATE add $e$ back to $\graph$
		\vspace{0.5mm}
		\ENDFOR
		\vspace{0.5mm}
		\STATE \darkgray{\# prune edge whose removal harms walk indices the least according to an order over $\brk{ \sbf^{(e)} }_{e \in \edges}$}
		\vspace{0.5mm}
		\STATE for $e \in \edges$, sort the entries of $\sbf^{(e)}$ in ascending order
		\vspace{0.5mm}
		\STATE let $e' \in \argmax_{e \in \edges} \sbf^{(e)}$ according to lexicographic order over tuples
		\vspace{0.5mm}
		\STATE \textbf{remove} $e'$ from $\graph$ (permanently)
		\vspace{0.5mm}
		\ENDFOR
	\end{algorithmic}
\end{algorithm}

\begin{algorithm}[t!]
	\caption{$1$-Walk Index Sparsification (WIS) ~ (efficient implementation of \cref{alg:walk_index_sparsification_vertex} for $L = 2$)} 
	\label{alg:walk_index_sparsification_vertex_one}	
	\begin{algorithmic}
		\STATE \!\!\!\!\textbf{Input:} $\graph$~---~graph , $N \in \N$~---~number of edges to remove \\[0.2em]
		\STATE \!\!\!\!\textbf{Result:} Sparsified graph obtained by removing $N$ edges from $\graph$ \\[0.4em]
		\hrule
		\vspace{2mm}
		\STATE compute vertex degrees, \ie~$\text{deg} (i) = \abs{\neigh (i)}$ for $i \in \vertices$
		\vspace{0.5mm}
		\FOR{$n = 1, \ldots, N$}
		\vspace{0.5mm}
		\FOR{$\{i, j\} \in \edges$ (excluding self-loops)}
		\vspace{0.5mm}
		\STATE  let $\text{deg}_{min} (i, j) := \min \brk[c]{ \text{deg} (i) , \text{deg} (j) }$ 
		\vspace{0.5mm}
		\STATE let $\text{deg}_{max} (i, j) := \max \brk[c]{ \text{deg} (i) , \text{deg} (j) }$
		\vspace{0.5mm}		
		\ENDFOR
		\vspace{0.5mm}
		\STATE \darkgray{\# prune $\{i, j\} \in \edges$ with maximal $\text{deg}_{min} (i, j)$, breaking ties using $\text{deg}_{max} (i, j)$}
		\vspace{0.5mm}
		\STATE let $e' \in \argmax_{\brk[c]{i, j} \in \edges} \brk1{ \text{deg}_{min} (i, j) , \text{deg}_{max} (i, j) }$ according to lexicographic order over pairs
		\vspace{0.5mm}
		\STATE \textbf{remove} $e'$ from $\graph$ \\
		\vspace{0.5mm}
		\STATE decrease by one the degree of vertices connected by $e'$
		\vspace{0.5mm}		
		\ENDFOR
	\end{algorithmic}
\end{algorithm}

\subsection{Empirical Evaluation}
\label{gnn:sec:sparsification:eval}

\begin{figure*}[t!]
	\vspace{2mm}
	\begin{center}
		\hspace{-2mm}
		\includegraphics[width=1\textwidth]{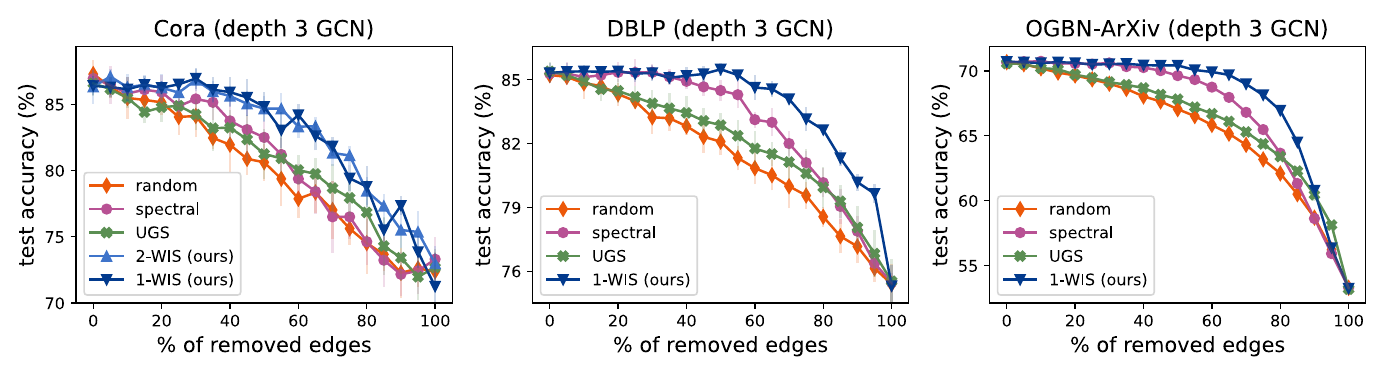}
	\end{center}
	\vspace{-3mm}
	\caption{
		Comparison of GNN accuracies following sparsification of input edges~---~WIS, the edge sparsification algorithm brought forth by our theory (\cref{alg:walk_index_sparsification_vertex}), markedly outperforms alternative methods.
		Plots present test accuracies achieved by a depth $L = 3$ GCN of width~$64$ over the Cora (left), DBLP (middle), and OGBN-ArXiv (right) vertex prediction datasets, with increasing percentage of removed edges (for each combination of dataset, edge sparsification algorithm, and percentage of removed edges, a separate GCN was trained and evaluated).
		WIS, designed to maintain the ability of a GNN to model interactions between input vertices, is compared against: \emph{(i)} removing edges uniformly at random; \emph{(ii)} a spectral sparsification method~\citep{spielman2011graph}; and \emph{(iii)}~an adaptation of UGS~\citep{chen2021unified}.
		For Cora, we run both $2$-WIS, which is compatible with the GNN's depth, and $1$-WIS, which can be viewed as an approximation that admits a particularly efficient implementation (\cref{alg:walk_index_sparsification_vertex_one}).
		For DBLP and OGBN-ArXiv, due to their larger scale only $1$-WIS is evaluated.
		Markers and error bars report means and standard deviations, respectively, taken over ten runs per configuration.
		Note that $1$-WIS achieves results similar to $2$-WIS, suggesting that the efficiency it brings does not come at a significant cost in performance.
		\cref{gnn:app:experiments} provides further implementation details and experiments with additional GNN architectures (GIN and ResGCN) and datasets (Chameleon, Squirrel, and Amazon Computers).		
		Code for reproducing the experiment is available at \url{https://github.com/noamrazin/gnn_interactions}.
	}
	\label{gnn:fig:sparse_gcn}
\end{figure*}

Below is an empirical evaluation of WIS.
For brevity, we defer to~\cref{gnn:app:experiments} some implementation details, as well as experiments with additional GNN architectures (GIN and ResGCN) and datasets (Chameleon~\citep{pei2020geom}, Squirrel~\citep{pei2020geom}, and Amazon Computers~\citep{shchur2018pitfalls}).
Overall, our evaluation includes six standard vertex prediction datasets in which we observed the graph structure to be crucial for accurate prediction, as measured by the difference between the test accuracy of a GCN trained and evaluated over the original graph and its test accuracy when trained and evaluated over the graph after all of the graph's edges were removed.
We also considered, but excluded, the following datasets in which the accuracy difference was insignificant (less than five percentage points): Citeseer~\citep{sen2008collective}, PubMed~\citep{namata2012query}, Coauthor CS and Physics~\citep{shchur2018pitfalls}, and Amazon Photo~\citep{shchur2018pitfalls}.

Using depth $L = 3$ GNNs (with ReLU non-linearity), we evaluate over the Cora dataset both $2$-WIS, which is compatible with the GNNs' depth, and $1$-WIS, which can be viewed as an efficient approximation.
Over the DBLP and OGBN-ArXiv datasets, due to their larger scale only $1$-WIS is evaluated.
\cref{gnn:fig:sparse_gcn} (and~\cref{gnn:fig:sparse_gin} in~\cref{gnn:app:experiments}) shows that WIS significantly outperforms the following alternative methods in terms of induced prediction accuracy: \emph{(i)} a baseline in which edges are removed uniformly at random; \emph{(ii)} a well-known spectral algorithm~\citep{spielman2011graph} designed to preserve the spectrum of the sparsified graph's Laplacian; and \emph{(iii)} an adaptation of UGS~\citep{chen2021unified}~---~a recent supervised approach for learning to prune edges.\footnote{
	UGS~\citep{chen2021unified} jointly prunes input graph edges and GNN weights.
	For fair comparison, we adapt it to only remove edges.
}
Both $2$-WIS and $1$-WIS lead to higher test accuracies, while (as opposed to UGS) avoiding the need for labels, and for training a GNN over the original (non-sparsified) graph~---~a procedure which in some settings is prohibitively expensive in terms of runtime and memory.
Interestingly, $1$-WIS performs similarly to $2$-WIS, indicating that the efficiency it brings does not come at a sizable cost in performance.

\chapter{Related Work}
\label{chap:gnn_related}

\textbf{Expressiveness of GNNs.}~~The expressiveness of GNNs has been predominantly evaluated through ability to distinguish non-isomorphic graphs, as measured by correspondence to Weisfeiler-Leman (WL) graph isomorphism tests (see~\cite{morris2021weisfeiler} for a recent survey).
\cite{xu2019powerful,morris2019weisfeiler} instigated this thread of research, establishing that message-passing GNNs are at most as powerful as the WL algorithm, and can match it under certain technical conditions.
Subsequently, architectures surpassing WL were proposed, with expressiveness measured via higher-order WL variants (see, \eg,~\cite{morris2019weisfeiler,maron2019provably,chen2019equivalence,geerts2020expressive,balcilar2021breaking,bodnar2021weisfeiler,barcelo2021graph,geerts2022expressiveness,bouritsas2022improving,papp2022theoretical}).
Another line of inquiry regards universality among continuous permutation invariant or equivariant functions~\cite{maron2019universality,keriven2019universal,loukas2020graph,azizian2021expressive,geerts2022expressiveness}.
\cite{chen2019equivalence} showed that distinguishing non-isomorphic graphs and universality are, in some sense, equivalent.
Lastly, there exist analyses of expressiveness focused on the frequency response of GNNs~\cite{nt2019revisiting,balcilar2021analyzing} and their capacity to compute specific graph functions, \eg~moments, shortest paths, and substructure counting~\cite{dehmamy2019understanding,barcelo2020logical,garg2020generalization,loukas2020graph,chen2020can,chen2021graph,bouritsas2022improving}. 

Although a primary purpose of GNNs is to model interactions between vertices, none of the past works formally characterize their ability to do so, as our theory does.
\cref{chap:gnn_interactions} thus provides a novel perspective on the expressive power of GNNs.
Furthermore, a major limitation of existing approaches~---~in particular, proofs of equivalence to WL tests and universality~---~is that they often operate in asymptotic regimes of unbounded network width or depth.
Consequently, they fall short of addressing which type of functions can be realized by GNNs of practical size.
In contrast, we characterize how the modeled interactions depend on both the input graph structure and the neural network architecture (width and depth).
As shown in~\cref{gnn:sec:sparsification}, this facilitates designing an efficient and effective edge sparsification algorithm.

\textbf{Measuring modeled interactions via separation rank.}~~Separation rank (\cref{gnn:sec:prelim:sep_rank}) has been paramount to the study of interactions modeled by certain convolutional, recurrent, and self-attention neural networks.
It enabled theoretically analyzing how different architectural parameters impact expressiveness~\cite{cohen2016expressive,cohen2016convolutional,cohen2017inductive,cohen2018boosting,balda2018tensor,sharir2018expressive,levine2018deep,levine2018benefits,khrulkov2018expressive,khrulkov2019generalized,levine2020limits,wies2021transformer,levine2022inductive} and implicit regularization (\cref{chap:imp_reg_htf}).\footnote{
	We note that, over a two-dimensional grid graph, a message-passing GNN can be viewed as a convolutional neural network with overlapping convolutional windows.
	Similarly, over a chain graph, it can be viewed as a bidirectional recurrent neural network.
	Thus, for these special cases, our separation rank bounds (delivered in~\cref{gnn:sec:analysis}) extend those of~\cite{cohen2017inductive,levine2018benefits,levine2018deep,khrulkov2018expressive}, which consider convolutional neural networks with non-overlapping convolutional windows and unidirectional recurrent neural networks.
}
On the practical side, insights brought forth by separation rank led to tools for improving performance, including: guidelines for architecture design~\cite{cohen2017inductive,levine2018deep,levine2020limits,wies2021transformer}, pretraining schemes~\cite{levine2022inductive}, and regularizers for countering locality in convolutional neural networks (\cref{chap:imp_reg_htf}).
We employ separation rank for studying the interactions GNNs model between vertices, and similarly provide both theoretical insights and a practical application~---~edge sparsification algorithm (\cref{gnn:sec:sparsification}).

\textbf{Edge sparsification.}~~Computations over large-scale graphs can be prohibitively expensive in terms of runtime and memory.
As a result, various methods were proposed for sparsifying graphs by removing edges while attempting to maintain structural properties, such as distances between vertices~\cite{baswana2007simple,hamann2016structure}, graph Laplacian spectrum~\cite{spielman2011graph,sadhanala2016graph}, and vertex degree distribution~\cite{voudigari2016rank}, or outcomes of graph analysis and clustering algorithms~\cite{chakeri2016spectral}.
Most relevant to our work are recent edge sparsification methods aiming to preserve the prediction accuracy of GNNs as the number of removed edges increases~\cite{li2020sgcn,chen2021unified}.
These methods require training a GNN over the original (non-sparsified) graph, hence only inference costs are reduced.
Guided by our theory, in~\cref{gnn:sec:sparsification} we propose \emph{Walk Index Sparsification} (\emph{WIS})~---~an edge sparsification algorithm that preserves expressive power in terms of ability to model interactions.
WIS improves efficiency for both training and inference.
Moreover, comparisons with the spectral algorithm of~\cite{spielman2011graph} and a recent method from~\cite{chen2021unified} demonstrate that WIS brings about higher prediction accuracies across edge sparsity levels.

\part{Conclusion}
\label{part:conclusion}

Two pillars on which the theory of deep learning rests are generalization and expressiveness.
Strengthening the formal understanding of these pillars can facilitate principled methods for improving the efficiency, reliability, and performance of neural networks.
A major challenge towards doing so is finding suitable complexity measures.
That is, measures with which it is possible to characterize the ability of neural networks to generalize over natural data (\eg, images, audio, and text) and express rich classes of functions.
This thesis puts forth notions of rank as promising measures of complexity for developing a theory of deep learning.

In \cref{part:gen}, we focused on the mystery of generalization in deep learning: why do neural networks generalize despite having far more learnable parameters than training examples?
Conventional wisdom suggests that this generalization stems from an implicit regularization induced by gradient-based training, \ie~its tendency to fit training examples with predictors of minimal complexity~\cite{neyshabur2017implicit}.
A widespread hope was that this tendency can be characterized as minimization of some norm~\cite{neyshabur2015search,gunasekar2017implicit}.
Contradicting prior belief~\cite{gunasekar2017implicit}, we proved that implicit regularization cannot be captured by norms in the context of matrix factorization~---~a model equivalent to linear neural networks.
Instead, we showed that it is more faithfully described as an implicit minimization of rank.
Then, capitalizing on this interpretation, we established that the tendency towards low rank extends from linear neural networks to more practical non-linear neural networks (with polynomial non-linearity), which are equivalent to tensor factorizations.

In \cref{part:gnns}, we employed the connection between neural networks and tensor factorizations to study the expressiveness of graph neural networks (GNNs).
Our analysis characterized the ability of certain GNNs to model interactions between vertices via an established measure known as separation rank~\cite{beylkin2009multivariate,cohen2017inductive}.
In particular, it formalized intuition by which GNNs can model stronger interactions between areas of the graph that are more interconnected.

In terms of practical impact, based on the presented theory we developed: \emph{(i)} a regularization scheme for improving the performance of convolutional neural networks over tasks involving non-local interactions; and \emph{(ii)} a state of the art edge sparsification algorithm, called Walk Index Sparsification (WIS), that preserves the ability of GNNs to model interactions.
Moreover, other research groups have also built upon our analyses of implicit rank minimization for designing practical deep learning systems~\cite{jing2020implicit,huh2021low}.

Overall, our work highlights that notions of rank may be key for explaining the remarkable performance of neural networks over natural data.

\subsection*{Future Work}

Our theoretical analysis considered neural networks with polynomial non-linearity, by employing their connection with tensor factorizations.
Such neural networks have demonstrated competitive performance in practice~\cite{cohen2014simnets,sharir2016tensorial,stoudenmire2018learning,chrysos2020p}, and we empirically demonstrated (in \cref{htf:sec:countering_locality,gnn:sec:analysis:experiments}) that conclusions from their analysis apply to neural networks with more popular non-linearities, such as ReLU.
Nonetheless, we view extending our theory to account for additional non-linearities as a promising direction for future research.
A possible approach is to build on the connection between \emph{generalized tensor factorizations}~\cite{cohen2016convolutional} and neural networks with non-polynomial non-linearities.

Our work also raises several interesting directions concerning WIS~---~the edge sparsification algorithm introduced in~\cref{gnn:sec:sparsification}.
A naive implementation of $(L - 1)$-WIS has runtime cubic in the number of vertices (\cf~\cref{gnn:sec:sparsification:wis}).
Since this can be restrictive for large-scale graphs, the evaluation in~\cref{gnn:sec:sparsification:eval} mostly focused on $1$-WIS, which can be viewed as an efficient approximation of $(L - 1)$-WIS (its runtime and memory requirements are linear~---~see~\cref{gnn:sec:sparsification:wis}).
Future work can develop efficient exact implementations of $(L - 1)$-WIS (\eg~using parallelization) and investigate regimes where it outperforms $1$-WIS in terms of induced prediction accuracy.
Additionally, $(L - 1)$-WIS is a specific instantiation of the general WIS scheme (given in~\cref{gnn:app:walk_index_sparsification_general}), tailored for preserving the ability to model interactions across certain partitions. 
Exploring other instantiations, as well as methods for automatically choosing the partitions across which the ability to model interactions is preserved, are valuable directions for further research.

}


\clearpage
{\small
	\bibliographystyle{plainnat}
	\bibliography{refs}

\begin{thebibliography}{229}
\providecommand{\natexlab}[1]{#1}
\providecommand{\url}[1]{\texttt{#1}}
\expandafter\ifx\csname urlstyle\endcsname\relax
  \providecommand{\doi}[1]{doi: #1}\else
  \providecommand{\doi}{doi: \begingroup \urlstyle{rm}\Url}\fi

\bibitem[Acar et~al.(2011)Acar, Dunlavy, Kolda, and M{\o}rup]{acar2011scalable}
Evrim Acar, Daniel~M Dunlavy, Tamara~G Kolda, and Morten M{\o}rup.
\newblock Scalable tensor factorizations for incomplete data.
\newblock \emph{Chemometrics and Intelligent Laboratory Systems}, 106\penalty0
  (1):\penalty0 41--56, 2011.

\bibitem[Achiam et~al.(2023)Achiam, Adler, Agarwal, Ahmad, Akkaya, Aleman,
  Almeida, Altenschmidt, Altman, Anadkat, et~al.]{achiam2023gpt}
Josh Achiam, Steven Adler, Sandhini Agarwal, Lama Ahmad, Ilge Akkaya,
  Florencia~Leoni Aleman, Diogo Almeida, Janko Altenschmidt, Sam Altman,
  Shyamal Anadkat, et~al.
\newblock Gpt-4 technical report.
\newblock \emph{arXiv preprint arXiv:2303.08774}, 2023.

\bibitem[Alexander et~al.(2023)Alexander, De~La~Vega, Razin, and
  Cohen]{alexander2023what}
Yotam Alexander, Nimrod De~La~Vega, Noam Razin, and Nadav Cohen.
\newblock What makes data suitable for a locally connected neural network? a
  necessary and sufficient condition based on quantum entanglement.
\newblock In \emph{Advances in Neural Information Processing Systems}, 2023.

\bibitem[Alon and Yahav(2021)]{alon2021bottleneck}
Uri Alon and Eran Yahav.
\newblock On the bottleneck of graph neural networks and its practical
  implications.
\newblock \emph{International Conference on Learning Representations (ICLR)},
  2021.

\bibitem[Amini et~al.(2011)Amini, Karbasi, and Marvasti]{amini2011low}
Arash Amini, Amin Karbasi, and Farokh Marvasti.
\newblock Low-rank matrix approximation using point-wise operators.
\newblock \emph{IEEE transactions on information theory}, 58\penalty0
  (1):\penalty0 302--310, 2011.

\bibitem[Anandkumar et~al.(2014)Anandkumar, Ge, Hsu, Kakade, and
  Telgarsky]{anandkumar2014tensor}
Animashree Anandkumar, Rong Ge, Daniel Hsu, Sham~M Kakade, and Matus Telgarsky.
\newblock Tensor decompositions for learning latent variable models.
\newblock \emph{Journal of Machine Learning Research}, 15:\penalty0 2773--2832,
  2014.

\bibitem[Arora et~al.(2018)Arora, Cohen, and Hazan]{arora2018optimization}
Sanjeev Arora, Nadav Cohen, and Elad Hazan.
\newblock On the optimization of deep networks: Implicit acceleration by
  overparameterization.
\newblock In \emph{International Conference on Machine Learning}, 2018.

\bibitem[Arora et~al.(2019{\natexlab{a}})Arora, Cohen, Golowich, and
  Hu]{arora2019convergence}
Sanjeev Arora, Nadav Cohen, Noah Golowich, and Wei Hu.
\newblock A convergence analysis of gradient descent for deep linear neural
  networks.
\newblock \emph{International Conference on Learning Representations},
  2019{\natexlab{a}}.

\bibitem[Arora et~al.(2019{\natexlab{b}})Arora, Cohen, Hu, and
  Luo]{arora2019implicit}
Sanjeev Arora, Nadav Cohen, Wei Hu, and Yuping Luo.
\newblock Implicit regularization in deep matrix factorization.
\newblock \emph{Advances in Neural Information Processing Systems},
  2019{\natexlab{b}}.

\bibitem[Azizian and Lelarge(2021)]{azizian2021expressive}
Waiss Azizian and Marc Lelarge.
\newblock Expressive power of invariant and equivariant graph neural networks.
\newblock \emph{International Conference on Learning Representations (ICLR)},
  2021.

\bibitem[Azulay et~al.(2021)Azulay, Moroshko, Nacson, Woodworth, Srebro,
  Globerson, and Soudry]{azulay2021implicit}
Shahar Azulay, Edward Moroshko, Mor~Shpigel Nacson, Blake Woodworth, Nathan
  Srebro, Amir Globerson, and Daniel Soudry.
\newblock On the implicit bias of initialization shape: Beyond infinitesimal
  mirror descent.
\newblock \emph{International Conference on Machine Learning (ICML)}, 2021.

\bibitem[Ba et~al.(2016)Ba, Kiros, and Hinton]{ba2016layer}
Jimmy~Lei Ba, Jamie~Ryan Kiros, and Geoffrey~E Hinton.
\newblock Layer normalization.
\newblock \emph{arXiv preprint arXiv:1607.06450}, 2016.

\bibitem[Bah et~al.(2022)Bah, Rauhut, Terstiege, and
  Westdickenberg]{bah2022learning}
Bubacarr Bah, Holger Rauhut, Ulrich Terstiege, and Michael Westdickenberg.
\newblock {Learning deep linear neural networks: Riemannian gradient flows and
  convergence to global minimizers}.
\newblock \emph{Information and Inference: A Journal of the IMA}, 11\penalty0
  (1):\penalty0 307--353, 2022.

\bibitem[Balcilar et~al.(2021{\natexlab{a}})Balcilar, Guillaume, H{\'e}roux,
  Ga{\"u}z{\`e}re, Adam, and Honeine]{balcilar2021analyzing}
Muhammet Balcilar, Renton Guillaume, Pierre H{\'e}roux, Benoit Ga{\"u}z{\`e}re,
  S{\'e}bastien Adam, and Paul Honeine.
\newblock Analyzing the expressive power of graph neural networks in a spectral
  perspective.
\newblock \emph{International Conference on Learning Representations (ICLR)},
  2021{\natexlab{a}}.

\bibitem[Balcilar et~al.(2021{\natexlab{b}})Balcilar, H{\'e}roux, Gauzere,
  Vasseur, Adam, and Honeine]{balcilar2021breaking}
Muhammet Balcilar, Pierre H{\'e}roux, Benoit Gauzere, Pascal Vasseur,
  S{\'e}bastien Adam, and Paul Honeine.
\newblock Breaking the limits of message passing graph neural networks.
\newblock In \emph{International Conference on Machine Learning (ICML)},
  2021{\natexlab{b}}.

\bibitem[Balda et~al.(2018)Balda, Behboodi, and Mathar]{balda2018tensor}
Emilio~Rafael Balda, Arash Behboodi, and Rudolf Mathar.
\newblock A tensor analysis on dense connectivity via convolutional arithmetic
  circuits.
\newblock 2018.

\bibitem[Banerjee et~al.(2022)Banerjee, Karhadkar, Wang, Alon, and
  Mont{\'u}far]{banerjee2022oversquashing}
Pradeep~Kr Banerjee, Kedar Karhadkar, Yu~Guang Wang, Uri Alon, and Guido
  Mont{\'u}far.
\newblock Oversquashing in gnns through the lens of information contraction and
  graph expansion.
\newblock In \emph{2022 58th Annual Allerton Conference on Communication,
  Control, and Computing (Allerton)}, pages 1--8. IEEE, 2022.

\bibitem[Barcel{\'o} et~al.(2020)Barcel{\'o}, Kostylev, Monet, P{\'e}rez,
  Reutter, and Silva]{barcelo2020logical}
Pablo Barcel{\'o}, Egor Kostylev, Mikael Monet, Jorge P{\'e}rez, Juan Reutter,
  and Juan-Pablo Silva.
\newblock The logical expressiveness of graph neural networks.
\newblock \emph{International Conference on Learning Representations (ICLR)},
  2020.

\bibitem[Barcel{\'o} et~al.(2021)Barcel{\'o}, Geerts, Reutter, and
  Ryschkov]{barcelo2021graph}
Pablo Barcel{\'o}, Floris Geerts, Juan Reutter, and Maksimilian Ryschkov.
\newblock Graph neural networks with local graph parameters.
\newblock In \emph{Advances in Neural Information Processing Systems
  (NeurIPS)}, 2021.

\bibitem[Bartlett et~al.(2018)Bartlett, Helmbold, and
  Long]{bartlett2018gradient}
Peter Bartlett, Dave Helmbold, and Phil Long.
\newblock Gradient descent with identity initialization efficiently learns
  positive definite linear transformations.
\newblock In \emph{International Conference on Machine Learning}, 2018.

\bibitem[Baswana and Sen(2007)]{baswana2007simple}
Surender Baswana and Sandeep Sen.
\newblock A simple and linear time randomized algorithm for computing sparse
  spanners in weighted graphs.
\newblock \emph{Random Structures \& Algorithms}, 30\penalty0 (4):\penalty0
  532--563, 2007.

\bibitem[Belabbas(2020)]{belabbas2020implicit}
Mohamed~Ali Belabbas.
\newblock On implicit regularization: Morse functions and applications to
  matrix factorization.
\newblock \emph{arXiv preprint arXiv:2001.04264}, 2020.

\bibitem[Beylkin and Mohlenkamp(2002)]{beylkin2002numerical}
Gregory Beylkin and Martin~J Mohlenkamp.
\newblock Numerical operator calculus in higher dimensions.
\newblock \emph{Proceedings of the National Academy of Sciences}, 99\penalty0
  (16):\penalty0 10246--10251, 2002.

\bibitem[Beylkin et~al.(2009)Beylkin, Garcke, and
  Mohlenkamp]{beylkin2009multivariate}
Gregory Beylkin, Jochen Garcke, and Martin~J Mohlenkamp.
\newblock Multivariate regression and machine learning with sums of separable
  functions.
\newblock \emph{SIAM Journal on Scientific Computing}, 31\penalty0
  (3):\penalty0 1840--1857, 2009.

\bibitem[Bhatia et~al.(2019)Bhatia, Saggi, Kumar, and Jain]{bhatia2019matrix}
Amandeep~Singh Bhatia, Mandeep~Kaur Saggi, Ajay Kumar, and Sushma Jain.
\newblock Matrix product state--based quantum classifier.
\newblock \emph{Neural computation}, 31\penalty0 (7):\penalty0 1499--1517,
  2019.

\bibitem[Blanc et~al.(2020)Blanc, Gupta, Valiant, and
  Valiant]{blanc2020implicit}
Guy Blanc, Neha Gupta, Gregory Valiant, and Paul Valiant.
\newblock Implicit regularization for deep neural networks driven by an
  ornstein-uhlenbeck like process.
\newblock In \emph{Conference on Learning Theory}, 2020.

\bibitem[Bodnar et~al.(2021)Bodnar, Frasca, Otter, Wang, Lio, Montufar, and
  Bronstein]{bodnar2021weisfeiler}
Cristian Bodnar, Fabrizio Frasca, Nina Otter, Yuguang Wang, Pietro Lio, Guido~F
  Montufar, and Michael Bronstein.
\newblock Weisfeiler and lehman go cellular: Cw networks.
\newblock In \emph{Advances in Neural Information Processing Systems
  (NeurIPS)}, 2021.

\bibitem[Bojchevski and G{\"u}nnemann(2018)]{bojchevski2018deep}
Aleksandar Bojchevski and Stephan G{\"u}nnemann.
\newblock Deep gaussian embedding of graphs: Unsupervised inductive learning
  via ranking.
\newblock \emph{International Conference on Learning Representations (ICLR)},
  2018.

\bibitem[Bouritsas et~al.(2022)Bouritsas, Frasca, Zafeiriou, and
  Bronstein]{bouritsas2022improving}
Giorgos Bouritsas, Fabrizio Frasca, Stefanos~P Zafeiriou, and Michael
  Bronstein.
\newblock Improving graph neural network expressivity via subgraph isomorphism
  counting.
\newblock \emph{IEEE Transactions on Pattern Analysis and Machine
  Intelligence}, 2022.

\bibitem[Bronstein et~al.(2021)Bronstein, Bruna, Cohen, and
  Veli{\v{c}}kovi{\'c}]{bronstein2021geometric}
Michael~M Bronstein, Joan Bruna, Taco Cohen, and Petar Veli{\v{c}}kovi{\'c}.
\newblock Geometric deep learning: Grids, groups, graphs, geodesics, and
  gauges.
\newblock \emph{arXiv preprint arXiv:2104.13478}, 2021.

\bibitem[Brown et~al.(2020)Brown, Mann, Ryder, Subbiah, Kaplan, Dhariwal,
  Neelakantan, Shyam, Sastry, Askell, et~al.]{brown2020language}
Tom Brown, Benjamin Mann, Nick Ryder, Melanie Subbiah, Jared~D Kaplan, Prafulla
  Dhariwal, Arvind Neelakantan, Pranav Shyam, Girish Sastry, Amanda Askell,
  et~al.
\newblock Language models are few-shot learners.
\newblock \emph{Advances in neural information processing systems}, 2020.

\bibitem[Burer and Monteiro(2003)]{burer2003nonlinear}
Samuel Burer and Renato~DC Monteiro.
\newblock A nonlinear programming algorithm for solving semidefinite programs
  via low-rank factorization.
\newblock \emph{Mathematical Programming}, 95\penalty0 (2):\penalty0 329--357,
  2003.

\bibitem[Cai et~al.(2019)Cai, Li, Poor, and Chen]{cai2019nonconvex}
Changxiao Cai, Gen Li, H~Vincent Poor, and Yuxin Chen.
\newblock Nonconvex low-rank tensor completion from noisy data.
\newblock In \emph{Advances in Neural Information Processing Systems
  (NeurIPS)}, pages 1863--1874, 2019.

\bibitem[Cand{\`e}s and Recht(2009)]{candes2009exact}
Emmanuel~J Cand{\`e}s and Benjamin Recht.
\newblock Exact matrix completion via convex optimization.
\newblock \emph{Foundations of Computational mathematics}, 9\penalty0
  (6):\penalty0 717, 2009.

\bibitem[Caron and Traynor(2005)]{caron2005zero}
Richard Caron and Tim Traynor.
\newblock The zero set of a polynomial.
\newblock \emph{WSMR Report}, pages 05--02, 2005.

\bibitem[Chakeri et~al.(2016)Chakeri, Farhidzadeh, and
  Hall]{chakeri2016spectral}
Alireza Chakeri, Hamidreza Farhidzadeh, and Lawrence~O Hall.
\newblock Spectral sparsification in spectral clustering.
\newblock In \emph{2016 23rd international conference on pattern recognition
  (icpr)}, pages 2301--2306. IEEE, 2016.

\bibitem[Chen et~al.(2020{\natexlab{a}})Chen, Lin, Li, Li, Zhou, and
  Sun]{chen2020measuring}
Deli Chen, Yankai Lin, Wei Li, Peng Li, Jie Zhou, and Xu~Sun.
\newblock Measuring and relieving the over-smoothing problem for graph neural
  networks from the topological view.
\newblock In \emph{Proceedings of the AAAI Conference on Artificial
  Intelligence (AAAI)}, 2020{\natexlab{a}}.

\bibitem[Chen et~al.(2021{\natexlab{a}})Chen, Chen, and Bruna]{chen2021graph}
Lei Chen, Zhengdao Chen, and Joan Bruna.
\newblock On graph neural networks versus graph-augmented mlps.
\newblock \emph{International Conference on Learning Representations (ICLR)},
  2021{\natexlab{a}}.

\bibitem[Chen et~al.(2021{\natexlab{b}})Chen, Sui, Chen, Zhang, and
  Wang]{chen2021unified}
Tianlong Chen, Yongduo Sui, Xuxi Chen, Aston Zhang, and Zhangyang Wang.
\newblock A unified lottery ticket hypothesis for graph neural networks.
\newblock In \emph{International Conference on Machine Learning (ICML)},
  2021{\natexlab{b}}.

\bibitem[Chen et~al.(2019)Chen, Villar, Chen, and Bruna]{chen2019equivalence}
Zhengdao Chen, Soledad Villar, Lei Chen, and Joan Bruna.
\newblock On the equivalence between graph isomorphism testing and function
  approximation with gnns.
\newblock In \emph{Advances in Neural Information Processing Systems
  (NeurIPS)}, 2019.

\bibitem[Chen et~al.(2020{\natexlab{b}})Chen, Chen, Villar, and
  Bruna]{chen2020can}
Zhengdao Chen, Lei Chen, Soledad Villar, and Joan Bruna.
\newblock Can graph neural networks count substructures?
\newblock In \emph{Advances in Neural Information Processing Systems
  (NeurIPS)}, 2020{\natexlab{b}}.

\bibitem[Chi et~al.(2019)Chi, Lu, and Chen]{chi2019nonconvex}
Yuejie Chi, Yue~M Lu, and Yuxin Chen.
\newblock Nonconvex optimization meets low-rank matrix factorization: An
  overview.
\newblock \emph{IEEE Transactions on Signal Processing}, 67\penalty0
  (20):\penalty0 5239--5269, 2019.

\bibitem[Chizat and Bach(2020)]{chizat2020implicit}
Lenaic Chizat and Francis Bach.
\newblock Implicit bias of gradient descent for wide two-layer neural networks
  trained with the logistic loss.
\newblock In \emph{Conference on Learning Theory (COLT)}, pages 1305--1338,
  2020.

\bibitem[Chou et~al.(2020)Chou, Gieshoff, Maly, and Rauhut]{chou2020gradient}
Hung-Hsu Chou, Carsten Gieshoff, Johannes Maly, and Holger Rauhut.
\newblock Gradient descent for deep matrix factorization: Dynamics and implicit
  bias towards low rank.
\newblock \emph{arXiv preprint arXiv:2011.13772}, 2020.

\bibitem[Chou et~al.(2021)Chou, Maly, and Rauhut]{chou2021more}
Hung-Hsu Chou, Johannes Maly, and Holger Rauhut.
\newblock More is less: Inducing sparsity via overparameterization.
\newblock \emph{arXiv preprint arXiv:2112.11027}, 2021.

\bibitem[Chrysos et~al.(2020)Chrysos, Moschoglou, Bouritsas, Panagakis, Deng,
  and Zafeiriou]{chrysos2020p}
Grigorios~G Chrysos, Stylianos Moschoglou, Giorgos Bouritsas, Yannis Panagakis,
  Jiankang Deng, and Stefanos Zafeiriou.
\newblock P-nets: Deep polynomial neural networks.
\newblock In \emph{Proceedings of the IEEE/CVF Conference on Computer Vision
  and Pattern Recognition (CVPR)}, pages 7325--7335, 2020.

\bibitem[Cohen and Shashua(2014)]{cohen2014simnets}
Nadav Cohen and Amnon Shashua.
\newblock Simnets: A generalization of convolutional networks.
\newblock \emph{Advances in Neural Information Processing Systems, Deep
  Learning Workshop}, 2014.

\bibitem[Cohen and Shashua(2016)]{cohen2016convolutional}
Nadav Cohen and Amnon Shashua.
\newblock Convolutional rectifier networks as generalized tensor
  decompositions.
\newblock \emph{International Conference on Machine Learning}, 2016.

\bibitem[Cohen and Shashua(2017)]{cohen2017inductive}
Nadav Cohen and Amnon Shashua.
\newblock Inductive bias of deep convolutional networks through pooling
  geometry.
\newblock \emph{International Conference on Learning Representations}, 2017.

\bibitem[Cohen et~al.(2016{\natexlab{a}})Cohen, Sharir, and
  Shashua]{cohen2016deep}
Nadav Cohen, Or~Sharir, and Amnon Shashua.
\newblock Deep simnets.
\newblock \emph{IEEE Conference on Computer Vision and Pattern Recognition},
  2016{\natexlab{a}}.

\bibitem[Cohen et~al.(2016{\natexlab{b}})Cohen, Sharir, and
  Shashua]{cohen2016expressive}
Nadav Cohen, Or~Sharir, and Amnon Shashua.
\newblock On the expressive power of deep learning: A tensor analysis.
\newblock \emph{Conference On Learning Theory}, 2016{\natexlab{b}}.

\bibitem[Cohen et~al.(2017)Cohen, Sharir, Levine, Tamari, Yakira, and
  Shashua]{cohen2017analysis}
Nadav Cohen, Or~Sharir, Yoav Levine, Ronen Tamari, David Yakira, and Amnon
  Shashua.
\newblock Analysis and design of convolutional networks via hierarchical tensor
  decompositions.
\newblock \emph{Intel Collaborative Research Institute for Computational
  Intelligence (ICRI-CI) Special Issue on Deep Learning Theory}, 2017.

\bibitem[Cohen et~al.(2018)Cohen, Tamari, and Shashua]{cohen2018boosting}
Nadav Cohen, Ronen Tamari, and Amnon Shashua.
\newblock Boosting dilated convolutional networks with mixed tensor
  decompositions.
\newblock \emph{International Conference on Learning Representations}, 2018.

\bibitem[Da~Silva and Herrmann(2015)]{da2015optimization}
Curt Da~Silva and Felix~J Herrmann.
\newblock Optimization on the hierarchical tucker manifold--applications to
  tensor completion.
\newblock \emph{Linear Algebra and its Applications}, 481:\penalty0 131--173,
  2015.

\bibitem[Davenport and Romberg(2016)]{davenport2016overview}
Mark~A Davenport and Justin Romberg.
\newblock An overview of low-rank matrix recovery from incomplete observations.
\newblock \emph{IEEE Journal of Selected Topics in Signal Processing},
  10\penalty0 (4):\penalty0 608--622, 2016.

\bibitem[Dehmamy et~al.(2019)Dehmamy, Barab{\'a}si, and
  Yu]{dehmamy2019understanding}
Nima Dehmamy, Albert-L{\'a}szl{\'o} Barab{\'a}si, and Rose Yu.
\newblock Understanding the representation power of graph neural networks in
  learning graph topology.
\newblock In \emph{Advances in Neural Information Processing Systems
  (NeurIPS)}, 2019.

\bibitem[Du et~al.(2018)Du, Hu, and Lee]{du2018algorithmic}
Simon~S Du, Wei Hu, and Jason~D Lee.
\newblock Algorithmic regularization in learning deep homogeneous models:
  Layers are automatically balanced.
\newblock In \emph{Advances in Neural Information Processing Systems}, 2018.

\bibitem[Duvenaud et~al.(2015)Duvenaud, Maclaurin, Iparraguirre, Bombarell,
  Hirzel, Aspuru-Guzik, and Adams]{duvenaud2015convolutional}
David~K Duvenaud, Dougal Maclaurin, Jorge Iparraguirre, Rafael Bombarell,
  Timothy Hirzel, Al{\'a}n Aspuru-Guzik, and Ryan~P Adams.
\newblock Convolutional networks on graphs for learning molecular fingerprints.
\newblock \emph{Advances in neural information processing systems}, 2015.

\bibitem[Eftekhari and Zygalakis(2020)]{eftekhari2020implicit}
Armin Eftekhari and Konstantinos Zygalakis.
\newblock Implicit regularization in matrix sensing: A geometric view leads to
  stronger results.
\newblock \emph{arXiv preprint arXiv:2008.12091}, 2020.

\bibitem[Eftekhari and Zygalakis(2021)]{eftekhari2021implicit}
Armin Eftekhari and Konstantinos Zygalakis.
\newblock Limitations of implicit bias in matrix sensing: Initialization rank
  matters.
\newblock \emph{arXiv preprint arXiv:2008.12091}, 2021.

\bibitem[Elkabetz and Cohen(2021)]{elkabetz2021continuous}
Omer Elkabetz and Nadav Cohen.
\newblock Continuous vs. discrete optimization of deep neural networks.
\newblock In \emph{Advances in Neural Information Processing Systems}, 2021.

\bibitem[Ergen and Pilanci(2021)]{ergen2021implicit}
Tolga Ergen and Mert Pilanci.
\newblock Implicit convex regularizers of cnn architectures: Convex
  optimization of two-and three-layer networks in polynomial time.
\newblock \emph{International Conference on Learning Representations (ICLR)},
  2021.

\bibitem[Felser et~al.(2021)Felser, Trenti, Sestini, Gianelle, Zuliani,
  Lucchesi, and Montangero]{felser2021quantum}
Timo Felser, Marco Trenti, Lorenzo Sestini, Alessio Gianelle, Davide Zuliani,
  Donatella Lucchesi, and Simone Montangero.
\newblock Quantum-inspired machine learning on high-energy physics data.
\newblock \emph{npj Quantum Information}, 7\penalty0 (1):\penalty0 1--8, 2021.

\bibitem[Fey and Lenssen(2019)]{fey2019fast}
Matthias Fey and Jan~Eric Lenssen.
\newblock Fast graph representation learning with pytorch geometric.
\newblock \emph{arXiv preprint arXiv:1903.02428}, 2019.

\bibitem[Frei et~al.(2023)Frei, Vardi, Bartlett, Srebro, and
  Hu]{frei2023implicit}
Spencer Frei, Gal Vardi, Peter~L Bartlett, Nathan Srebro, and Wei Hu.
\newblock Implicit bias in leaky relu networks trained on high-dimensional
  data.
\newblock \emph{International Conference on Learning Representations}, 2023.

\bibitem[Garg et~al.(2020)Garg, Jegelka, and Jaakkola]{garg2020generalization}
Vikas Garg, Stefanie Jegelka, and Tommi Jaakkola.
\newblock Generalization and representational limits of graph neural networks.
\newblock In \emph{International Conference on Machine Learning (ICML)}, 2020.

\bibitem[Gasteiger et~al.(2019)Gasteiger, Wei{\ss}enberger, and
  G{\"u}nnemann]{gasteiger2019diffusion}
Johannes Gasteiger, Stefan Wei{\ss}enberger, and Stephan G{\"u}nnemann.
\newblock Diffusion improves graph learning.
\newblock In \emph{Advances in Neural Information Processing Systems
  (NeurIPS)}, 2019.

\bibitem[Geerts(2020)]{geerts2020expressive}
Floris Geerts.
\newblock The expressive power of kth-order invariant graph networks.
\newblock \emph{arXiv preprint arXiv:2007.12035}, 2020.

\bibitem[Geerts and Reutter(2022)]{geerts2022expressiveness}
Floris Geerts and Juan~L Reutter.
\newblock Expressiveness and approximation properties of graph neural networks.
\newblock \emph{International Conference on Learning Representations (ICLR)},
  2022.

\bibitem[Geerts et~al.(2021)Geerts, Mazowiecki, and Perez]{geerts2021let}
Floris Geerts, Filip Mazowiecki, and Guillermo Perez.
\newblock Let’s agree to degree: Comparing graph convolutional networks in
  the message-passing framework.
\newblock In \emph{International Conference on Machine Learning (ICML)}, 2021.

\bibitem[Gidel et~al.(2019)Gidel, Bach, and Lacoste-Julien]{gidel2019implicit}
Gauthier Gidel, Francis Bach, and Simon Lacoste-Julien.
\newblock Implicit regularization of discrete gradient dynamics in linear
  neural networks.
\newblock In \emph{Advances in Neural Information Processing Systems}, 2019.

\bibitem[Gilad-Bachrach et~al.(2016)Gilad-Bachrach, Dowlin, Laine, Lauter,
  Naehrig, and Wernsing]{gilad2016cryptonets}
Ran Gilad-Bachrach, Nathan Dowlin, Kim Laine, Kristin Lauter, Michael Naehrig,
  and John Wernsing.
\newblock Cryptonets: Applying neural networks to encrypted data with high
  throughput and accuracy.
\newblock In \emph{International Conference on Machine Learning (ICML)}, 2016.

\bibitem[Gilmer et~al.(2017)Gilmer, Schoenholz, Riley, Vinyals, and
  Dahl]{gilmer2017neural}
Justin Gilmer, Samuel~S Schoenholz, Patrick~F Riley, Oriol Vinyals, and
  George~E Dahl.
\newblock Neural message passing for quantum chemistry.
\newblock In \emph{International Conference on Machine Learning (ICML)}, 2017.

\bibitem[Gissin et~al.(2020)Gissin, Shalev-Shwartz, and
  Daniely]{gissin2020implicit}
Daniel Gissin, Shai Shalev-Shwartz, and Amit Daniely.
\newblock The implicit bias of depth: How incremental learning drives
  generalization.
\newblock \emph{International Conference on Learning Representations}, 2020.

\bibitem[Glorot and Bengio(2010)]{glorot2010understanding}
Xavier Glorot and Yoshua Bengio.
\newblock Understanding the difficulty of training deep feedforward neural
  networks.
\newblock In \emph{Proceedings of the thirteenth international conference on
  artificial intelligence and statistics}, 2010.

\bibitem[Golub and Van~Loan(2012)]{golub2012matrix}
Gene~H Golub and Charles~F Van~Loan.
\newblock \emph{Matrix computations}, volume~3.
\newblock JHU press, 2012.

\bibitem[Goodfellow et~al.(2016)Goodfellow, Bengio, and
  Courville]{goodfellow2016deep}
Ian Goodfellow, Yoshua Bengio, and Aaron Courville.
\newblock \emph{Deep learning}.
\newblock MIT press, 2016.

\bibitem[Grant et~al.(2018)Grant, Benedetti, Cao, Hallam, Lockhart, Stojevic,
  Green, and Severini]{grant2018hierarchical}
Edward Grant, Marcello Benedetti, Shuxiang Cao, Andrew Hallam, Joshua Lockhart,
  Vid Stojevic, Andrew~G Green, and Simone Severini.
\newblock Hierarchical quantum classifiers.
\newblock \emph{npj Quantum Information}, 4\penalty0 (1):\penalty0 1--8, 2018.

\bibitem[Grasedyck(2010)]{grasedyck2010hierarchical}
Lars Grasedyck.
\newblock Hierarchical singular value decomposition of tensors.
\newblock \emph{SIAM Journal on Matrix Analysis and Applications}, 31\penalty0
  (4):\penalty0 2029--2054, 2010.

\bibitem[Grasedyck et~al.(2013)Grasedyck, Kressner, and
  Tobler]{grasedyck2013literature}
Lars Grasedyck, Daniel Kressner, and Christine Tobler.
\newblock A literature survey of low-rank tensor approximation techniques.
\newblock \emph{GAMM-Mitteilungen}, 36\penalty0 (1):\penalty0 53--78, 2013.

\bibitem[Gunasekar et~al.(2017)Gunasekar, Woodworth, Bhojanapalli, Neyshabur,
  and Srebro]{gunasekar2017implicit}
Suriya Gunasekar, Blake~E Woodworth, Srinadh Bhojanapalli, Behnam Neyshabur,
  and Nati Srebro.
\newblock Implicit regularization in matrix factorization.
\newblock In \emph{Advances in Neural Information Processing Systems}, 2017.

\bibitem[Gunasekar et~al.(2018{\natexlab{a}})Gunasekar, Lee, Soudry, and
  Srebro]{gunasekar2018characterizing}
Suriya Gunasekar, Jason Lee, Daniel Soudry, and Nathan Srebro.
\newblock Characterizing implicit bias in terms of optimization geometry.
\newblock In \emph{International Conference on Machine Learning},
  2018{\natexlab{a}}.

\bibitem[Gunasekar et~al.(2018{\natexlab{b}})Gunasekar, Lee, Soudry, and
  Srebro]{gunasekar2018implicit}
Suriya Gunasekar, Jason~D Lee, Daniel Soudry, and Nati Srebro.
\newblock Implicit bias of gradient descent on linear convolutional networks.
\newblock \emph{Advances in neural information processing systems},
  2018{\natexlab{b}}.

\bibitem[Hackbusch(2006)]{hackbusch2006efficient}
Wolfgang Hackbusch.
\newblock On the efficient evaluation of coalescence integrals in population
  balance models.
\newblock \emph{Computing}, 78\penalty0 (2):\penalty0 145--159, 2006.

\bibitem[Hackbusch(2012)]{hackbusch2012tensor}
Wolfgang Hackbusch.
\newblock \emph{Tensor spaces and numerical tensor calculus}, volume~42.
\newblock Springer, 2012.

\bibitem[Hackbusch and K{\"u}hn(2009)]{hackbusch2009new}
Wolfgang Hackbusch and Stefan K{\"u}hn.
\newblock A new scheme for the tensor representation.
\newblock \emph{Journal of Fourier analysis and applications}, 15\penalty0
  (5):\penalty0 706--722, 2009.

\bibitem[Hamann et~al.(2016)Hamann, Lindner, Meyerhenke, Staudt, and
  Wagner]{hamann2016structure}
Michael Hamann, Gerd Lindner, Henning Meyerhenke, Christian~L Staudt, and
  Dorothea Wagner.
\newblock Structure-preserving sparsification methods for social networks.
\newblock \emph{Social Network Analysis and Mining}, 6\penalty0 (1):\penalty0
  1--22, 2016.

\bibitem[Hamilton et~al.(2017)Hamilton, Ying, and
  Leskovec]{hamilton2017inductive}
Will Hamilton, Zhitao Ying, and Jure Leskovec.
\newblock Inductive representation learning on large graphs.
\newblock In \emph{Advances in Neural Information Processing Systems
  (NeurIPS)}, 2017.

\bibitem[Hamilton(2020)]{hamilton2020graph}
William~L Hamilton.
\newblock \emph{Graph representation learning}.
\newblock Morgan \& Claypool Publishers, 2020.

\bibitem[HaoChen et~al.(2021)HaoChen, Wei, Lee, and Ma]{haochen2021shape}
Jeff~Z HaoChen, Colin Wei, Jason Lee, and Tengyu Ma.
\newblock Shape matters: Understanding the implicit bias of the noise
  covariance.
\newblock In \emph{Conference on Learning Theory (COLT)}, 2021.

\bibitem[Harrison et~al.(2003)Harrison, Fann, Yanai, and
  Beylkin]{harrison2003multiresolution}
Robert~J Harrison, George~I Fann, Takeshi Yanai, and Gregory Beylkin.
\newblock Multiresolution quantum chemistry in multiwavelet bases.
\newblock In \emph{International Conference on Computational Science}, pages
  103--110. Springer, 2003.

\bibitem[H{\aa}stad(1990)]{haastad1990tensor}
Johan H{\aa}stad.
\newblock Tensor rank is np-complete.
\newblock \emph{Journal of algorithms (Print)}, 11\penalty0 (4):\penalty0
  644--654, 1990.

\bibitem[He et~al.(2015)He, Zhang, Ren, and Sun]{he2015delving}
Kaiming He, Xiangyu Zhang, Shaoqing Ren, and Jian Sun.
\newblock Delving deep into rectifiers: Surpassing human-level performance on
  imagenet classification.
\newblock In \emph{Proceedings of the IEEE international conference on computer
  vision}, 2015.

\bibitem[He et~al.(2016)He, Zhang, Ren, and Sun]{he2016deep}
Kaiming He, Xiangyu Zhang, Shaoqing Ren, and Jian Sun.
\newblock Deep residual learning for image recognition.
\newblock In \emph{Proceedings of the IEEE conference on computer vision and
  pattern recognition (CVPR)}, 2016.

\bibitem[Hillar and Lim(2013)]{hillar2013most}
Christopher~J Hillar and Lek-Heng Lim.
\newblock Most tensor problems are np-hard.
\newblock \emph{Journal of the ACM (JACM)}, 60\penalty0 (6):\penalty0 1--39,
  2013.

\bibitem[Hitchcock(1927)]{hitchcock1927expression}
Frank~L Hitchcock.
\newblock The expression of a tensor or a polyadic as a sum of products.
\newblock \emph{Journal of Mathematics and Physics}, 6\penalty0 (1-4):\penalty0
  164--189, 1927.

\bibitem[Hong et~al.(2020)Hong, Gao, Yao, Zhang, Plaza, and
  Chanussot]{hong2020graph}
Danfeng Hong, Lianru Gao, Jing Yao, Bing Zhang, Antonio Plaza, and Jocelyn
  Chanussot.
\newblock Graph convolutional networks for hyperspectral image classification.
\newblock \emph{IEEE Transactions on Geoscience and Remote Sensing}, 2020.

\bibitem[Hu et~al.(2020{\natexlab{a}})Hu, Xiao, Adlam, and
  Pennington]{hu2020surprising}
Wei Hu, Lechao Xiao, Ben Adlam, and Jeffrey Pennington.
\newblock The surprising simplicity of the early-time learning dynamics of
  neural networks.
\newblock In \emph{Advances in Neural Information Processing Systems
  (NeurIPS)}, 2020{\natexlab{a}}.

\bibitem[Hu et~al.(2020{\natexlab{b}})Hu, Fey, Zitnik, Dong, Ren, Liu, Catasta,
  and Leskovec]{hu2020open}
Weihua Hu, Matthias Fey, Marinka Zitnik, Yuxiao Dong, Hongyu Ren, Bowen Liu,
  Michele Catasta, and Jure Leskovec.
\newblock Open graph benchmark: Datasets for machine learning on graphs.
\newblock In \emph{Advances in Neural Information Processing Systems
  (NeurIPS)}, 2020{\natexlab{b}}.

\bibitem[Hua et~al.(2022)Hua, Rabusseau, and Tang]{hua2022high}
Chenqing Hua, Guillaume Rabusseau, and Jian Tang.
\newblock High-order pooling for graph neural networks with tensor
  decomposition.
\newblock In \emph{Advances in Neural Information Processing Systems
  (NeurIPS)}, 2022.

\bibitem[Huber(1964)]{huber1964robust}
Peter~J Huber.
\newblock Robust estimation of a location parameter.
\newblock \emph{Annals of Mathematical Statistics}, 35:\penalty0 73--101, 1964.

\bibitem[Huh et~al.(2021)Huh, Mobahi, Zhang, Cheung, Agrawal, and
  Isola]{huh2021low}
Minyoung Huh, Hossein Mobahi, Richard Zhang, Brian Cheung, Pulkit Agrawal, and
  Phillip Isola.
\newblock The low-rank simplicity bias in deep networks.
\newblock \emph{arXiv preprint arXiv:2103.10427}, 2021.

\bibitem[Ibrahim et~al.(2020)Ibrahim, Fu, and Li]{ibrahim2020recoverability}
Shahana Ibrahim, Xiao Fu, and Xingguo Li.
\newblock On recoverability of randomly compressed tensors with low cp rank.
\newblock \emph{IEEE Signal Processing Letters}, 27:\penalty0 1125--1129, 2020.

\bibitem[Ilyashenko and Yakovenko(2008)]{ilyashenko2008lectures}
Yulij Ilyashenko and Sergei Yakovenko.
\newblock \emph{Lectures on analytic differential equations}, volume~86.
\newblock American Mathematical Soc., 2008.

\bibitem[Ioffe and Szegedy(2015)]{ioffe2015batch}
Sergey Ioffe and Christian Szegedy.
\newblock Batch normalization: Accelerating deep network training by reducing
  internal covariate shift.
\newblock In \emph{International Conference on Machine Learning (ICML)}, 2015.

\bibitem[Jacot(2022)]{jacot2022implicit}
Arthur Jacot.
\newblock Implicit bias of large depth networks: a notion of rank for nonlinear
  functions.
\newblock In \emph{International Conference on Learning Representations}, 2022.

\bibitem[Jagadeesan et~al.(2021)Jagadeesan, Razenshteyn, and
  Gunasekar]{jagadeesan2021inductive}
Meena Jagadeesan, Ilya Razenshteyn, and Suriya Gunasekar.
\newblock Inductive bias of multi-channel linear convolutional networks with
  bounded weight norm.
\newblock \emph{arXiv preprint arXiv:2102.12238}, 2021.

\bibitem[Jain and Oh(2014)]{jain2014provable}
Prateek Jain and Sewoong Oh.
\newblock Provable tensor factorization with missing data.
\newblock In \emph{Advances in Neural Information Processing Systems
  (NeurIPS)}, pages 1431--1439, 2014.

\bibitem[Ji and Telgarsky(2019{\natexlab{a}})]{ji2019gradient}
Ziwei Ji and Matus Telgarsky.
\newblock Gradient descent aligns the layers of deep linear networks.
\newblock \emph{International Conference on Learning Representations},
  2019{\natexlab{a}}.

\bibitem[Ji and Telgarsky(2019{\natexlab{b}})]{ji2019implicit}
Ziwei Ji and Matus Telgarsky.
\newblock The implicit bias of gradient descent on nonseparable data.
\newblock In \emph{Conference on Learning Theory (COLT)}, pages 1772--1798,
  2019{\natexlab{b}}.

\bibitem[Ji and Telgarsky(2020)]{ji2020directional}
Ziwei Ji and Matus Telgarsky.
\newblock Directional convergence and alignment in deep learning.
\newblock In \emph{Advances in Neural Information Processing Systems
  (NeurIPS)}, 2020.

\bibitem[Jing et~al.(2020)Jing, Zbontar, et~al.]{jing2020implicit}
Li~Jing, Jure Zbontar, et~al.
\newblock Implicit rank-minimizing autoencoder.
\newblock \emph{Advances in Neural Information Processing Systems}, 2020.

\bibitem[Kargas and Sidiropoulos(2020)]{kargas2020nonlinear}
Nikos Kargas and Nicholas~D Sidiropoulos.
\newblock Nonlinear system identification via tensor completion.
\newblock In \emph{Proceedings of the AAAI Conference on Artificial
  Intelligence}, 2020.

\bibitem[Kargas and Sidiropoulos(2021)]{kargas2021supervised}
Nikos Kargas and Nicholas~D Sidiropoulos.
\newblock Supervised learning and canonical decomposition of multivariate
  functions.
\newblock \emph{IEEE Transactions on Signal Processing}, 69:\penalty0
  1097--1107, 2021.

\bibitem[Karlsson et~al.(2016)Karlsson, Kressner, and
  Uschmajew]{karlsson2016parallel}
Lars Karlsson, Daniel Kressner, and Andr{\'e} Uschmajew.
\newblock Parallel algorithms for tensor completion in the cp format.
\newblock \emph{Parallel Computing}, 57:\penalty0 222--234, 2016.

\bibitem[Kato(2013)]{kato2013perturbation}
Tosio Kato.
\newblock \emph{Perturbation theory for linear operators}, volume 132.
\newblock Springer Science \& Business Media, 2013.

\bibitem[Keriven and Peyr{\'e}(2019)]{keriven2019universal}
Nicolas Keriven and Gabriel Peyr{\'e}.
\newblock Universal invariant and equivariant graph neural networks.
\newblock In \emph{Advances in Neural Information Processing Systems
  (NeurIPS)}, 2019.

\bibitem[Khrulkov et~al.(2018)Khrulkov, Novikov, and
  Oseledets]{khrulkov2018expressive}
Valentin Khrulkov, Alexander Novikov, and Ivan Oseledets.
\newblock Expressive power of recurrent neural networks.
\newblock \emph{International Conference on Learning Representations}, 2018.

\bibitem[Khrulkov et~al.(2019)Khrulkov, Hrinchuk, and
  Oseledets]{khrulkov2019generalized}
Valentin Khrulkov, Oleksii Hrinchuk, and Ivan Oseledets.
\newblock Generalized tensor models for recurrent neural networks.
\newblock \emph{International Conference on Learning Representations}, 2019.

\bibitem[Kim et~al.(2020)Kim, Linsley, Thakkar, and
  Serre]{kim2020disentangling}
Junkyung Kim, Drew Linsley, Kalpit Thakkar, and Thomas Serre.
\newblock Disentangling neural mechanisms for perceptual grouping.
\newblock \emph{International Conference on Learning Representations}, 2020.

\bibitem[Kingma and Ba(2015)]{kingma2015adam}
Diederik~P Kingma and Jimmy Ba.
\newblock Adam: A method for stochastic optimization.
\newblock In \emph{International Conference on Learning Representations}, 2015.

\bibitem[Kipf and Welling(2017)]{kipf2017semi}
Thomas~N Kipf and Max Welling.
\newblock Semi-supervised classification with graph convolutional networks.
\newblock \emph{International Conference on Learning Representations}, 2017.

\bibitem[Kohn et~al.(2021)Kohn, Merkh, Mont{\'u}far, and
  Trager]{kohn2021geometry}
Kathl{\'e}n Kohn, Thomas Merkh, Guido Mont{\'u}far, and Matthew Trager.
\newblock Geometry of linear convolutional networks.
\newblock \emph{arXiv preprint arXiv:2108.01538}, 2021.

\bibitem[Kolda and Bader(2009)]{kolda2009tensor}
Tamara~G Kolda and Brett~W Bader.
\newblock Tensor decompositions and applications.
\newblock \emph{SIAM review}, 51\penalty0 (3):\penalty0 455--500, 2009.

\bibitem[Kolda(2006)]{kolda2006multilinear}
Tamara~Gibson Kolda.
\newblock Multilinear operators for higher-order decompositions.
\newblock Technical report, 2006.

\bibitem[Krantz and Parks(2002)]{krantz2002primer}
Steven~G Krantz and Harold~R Parks.
\newblock \emph{A primer of real analytic functions}.
\newblock Springer Science \& Business Media, 2002.

\bibitem[Krizhevsky(2009)]{krizhevsky2009learning}
Alex Krizhevsky.
\newblock Learning multiple layers of features from tiny images.
\newblock Technical report, 2009.

\bibitem[Krizhevsky et~al.(2012)Krizhevsky, Sutskever, and
  Hinton]{krizhevsky2012imagenet}
Alex Krizhevsky, Ilya Sutskever, and Geoffrey~E Hinton.
\newblock Imagenet classification with deep convolutional neural networks.
\newblock \emph{Advances in neural information processing systems}, 2012.

\bibitem[Lampinen and Ganguli(2019)]{lampinen2019analytic}
Andrew~K Lampinen and Surya Ganguli.
\newblock An analytic theory of generalization dynamics and transfer learning
  in deep linear networks.
\newblock \emph{International Conference on Learning Representations}, 2019.

\bibitem[LeCun(1998)]{lecun1998mnist}
Yann LeCun.
\newblock The mnist database of handwritten digits.
\newblock \emph{http://yann. lecun. com/exdb/mnist/}, 1998.

\bibitem[Leskovec and Faloutsos(2006)]{leskovec2006sampling}
Jure Leskovec and Christos Faloutsos.
\newblock Sampling from large graphs.
\newblock In \emph{Proceedings of the 12th ACM SIGKDD international conference
  on Knowledge discovery and data mining}, pages 631--636, 2006.

\bibitem[Levine et~al.(2018{\natexlab{a}})Levine, Sharir, and
  Shashua]{levine2018benefits}
Yoav Levine, Or~Sharir, and Amnon Shashua.
\newblock Benefits of depth for long-term memory of recurrent networks.
\newblock \emph{International Conference on Learning Representations Workshop},
  2018{\natexlab{a}}.

\bibitem[Levine et~al.(2018{\natexlab{b}})Levine, Yakira, Cohen, and
  Shashua]{levine2018deep}
Yoav Levine, David Yakira, Nadav Cohen, and Amnon Shashua.
\newblock Deep learning and quantum entanglement: Fundamental connections with
  implications to network design.
\newblock \emph{International Conference on Learning Representations},
  2018{\natexlab{b}}.

\bibitem[Levine et~al.(2019)Levine, Sharir, Cohen, and
  Shashua]{levine2019quantum}
Yoav Levine, Or~Sharir, Nadav Cohen, and Amnon Shashua.
\newblock Quantum entanglement in deep learning architectures.
\newblock \emph{Physical review letters}, 122\penalty0 (6):\penalty0 065301,
  2019.

\bibitem[Levine et~al.(2020)Levine, Wies, Sharir, Bata, and
  Shashua]{levine2020limits}
Yoav Levine, Noam Wies, Or~Sharir, Hofit Bata, and Amnon Shashua.
\newblock Limits to depth efficiencies of self-attention.
\newblock In \emph{Advances in Neural Information Processing Systems}, 2020.

\bibitem[Levine et~al.(2022)Levine, Wies, Jannai, Navon, Hoshen, and
  Shashua]{levine2022inductive}
Yoav Levine, Noam Wies, Daniel Jannai, Dan Navon, Yedid Hoshen, and Amnon
  Shashua.
\newblock The inductive bias of in-context learning: Rethinking pretraining
  example design.
\newblock \emph{International Conference on Learning Representations}, 2022.

\bibitem[Li et~al.(2020{\natexlab{a}})Li, Xiong, Thabet, and
  Ghanem]{li2020deepergcn}
Guohao Li, Chenxin Xiong, Ali Thabet, and Bernard Ghanem.
\newblock Deepergcn: All you need to train deeper gcns.
\newblock \emph{arXiv preprint arXiv:2006.07739}, 2020{\natexlab{a}}.

\bibitem[Li et~al.(2020{\natexlab{b}})Li, Zhang, Tian, Jin, Fardad, and
  Zafarani]{li2020sgcn}
Jiayu Li, Tianyun Zhang, Hao Tian, Shengmin Jin, Makan Fardad, and Reza
  Zafarani.
\newblock Sgcn: A graph sparsifier based on graph convolutional networks.
\newblock In \emph{Pacific-Asia Conference on Knowledge Discovery and Data
  Mining}, pages 275--287. Springer, 2020{\natexlab{b}}.

\bibitem[Li et~al.(2018{\natexlab{a}})Li, Han, and Wu]{li2018deeper}
Qimai Li, Zhichao Han, and Xiao-Ming Wu.
\newblock Deeper insights into graph convolutional networks for semi-supervised
  learning.
\newblock In \emph{Thirty-Second AAAI conference on artificial intelligence
  (AAAI)}, 2018{\natexlab{a}}.

\bibitem[Li et~al.(2018{\natexlab{b}})Li, Ma, and Zhang]{li2018algorithmic}
Yuanzhi Li, Tengyu Ma, and Hongyang Zhang.
\newblock Algorithmic regularization in over-parameterized matrix sensing and
  neural networks with quadratic activations.
\newblock In \emph{Conference On Learning Theory}, 2018{\natexlab{b}}.

\bibitem[Li et~al.(2021{\natexlab{a}})Li, Luo, and Lyu]{li2021towards}
Zhiyuan Li, Yuping Luo, and Kaifeng Lyu.
\newblock Towards resolving the implicit bias of gradient descent for matrix
  factorization: Greedy low-rank learning.
\newblock \emph{International Conference on Learning Representations},
  2021{\natexlab{a}}.

\bibitem[Li et~al.(2021{\natexlab{b}})Li, Wang, and Arora]{li2021happens}
Zhiyuan Li, Tianhao Wang, and Sanjeev Arora.
\newblock What happens after sgd reaches zero loss?--a mathematical framework.
\newblock \emph{arXiv preprint arXiv:2110.06914}, 2021{\natexlab{b}}.

\bibitem[Linsley et~al.(2018)Linsley, Kim, Veerabadran, Windolf, and
  Serre]{linsley2018learning}
Drew Linsley, Junkyung Kim, Vijay Veerabadran, Charlie Windolf, and Thomas
  Serre.
\newblock Learning long-range spatial dependencies with horizontal gated
  recurrent units.
\newblock In \emph{Advances in Neural Information Processing Systems}, 2018.

\bibitem[Liu et~al.(2020)Liu, Xie, Wang, Zou, Xiong, Ying, and
  Vasilakos]{liu2020privacy}
Ximeng Liu, Lehui Xie, Yaopeng Wang, Jian Zou, Jinbo Xiong, Zuobin Ying, and
  Athanasios~V Vasilakos.
\newblock Privacy and security issues in deep learning: A survey.
\newblock \emph{IEEE Access}, 9:\penalty0 4566--4593, 2020.

\bibitem[Loukas(2020{\natexlab{a}})]{loukas2020graph}
Andreas Loukas.
\newblock What graph neural networks cannot learn: depth vs width.
\newblock \emph{International Conference on Learning Representations (ICLR)},
  2020{\natexlab{a}}.

\bibitem[Loukas(2020{\natexlab{b}})]{loukas2020hard}
Andreas Loukas.
\newblock How hard is to distinguish graphs with graph neural networks?
\newblock In \emph{Advances in Neural Information Processing Systems
  (NeurIPS)}, 2020{\natexlab{b}}.

\bibitem[Luo et~al.(2021)Luo, Cheng, Yu, Zong, Ni, Chen, and
  Zhang]{luo2021learning}
Dongsheng Luo, Wei Cheng, Wenchao Yu, Bo~Zong, Jingchao Ni, Haifeng Chen, and
  Xiang Zhang.
\newblock Learning to drop: Robust graph neural network via topological
  denoising.
\newblock In \emph{Proceedings of the 14th ACM international conference on web
  search and data mining}, pages 779--787, 2021.

\bibitem[Lyu and Li(2020)]{lyu2020gradient}
Kaifeng Lyu and Jian Li.
\newblock Gradient descent maximizes the margin of homogeneous neural networks.
\newblock \emph{International Conference on Learning Representations (ICLR)},
  2020.

\bibitem[Lyu et~al.(2021)Lyu, Li, Wang, and Arora]{lyu2021gradient}
Kaifeng Lyu, Zhiyuan Li, Runzhe Wang, and Sanjeev Arora.
\newblock Gradient descent on two-layer nets: Margin maximization and
  simplicity bias.
\newblock In \emph{Advances in Neural Information Processing Systems}, 2021.

\bibitem[Ma et~al.(2018)Ma, Wang, Chi, and Chen]{ma2018implicit}
Cong Ma, Kaizheng Wang, Yuejie Chi, and Yuxin Chen.
\newblock Implicit regularization in nonconvex statistical estimation: Gradient
  descent converges linearly for phase retrieval and matrix completion.
\newblock In \emph{International Conference on Machine Learning (ICML)}, pages
  3351--3360, 2018.

\bibitem[Maron et~al.(2019{\natexlab{a}})Maron, Ben-Hamu, Serviansky, and
  Lipman]{maron2019provably}
Haggai Maron, Heli Ben-Hamu, Hadar Serviansky, and Yaron Lipman.
\newblock Provably powerful graph networks.
\newblock In \emph{Advances in Neural Information Processing Systems
  (NeurIPS)}, 2019{\natexlab{a}}.

\bibitem[Maron et~al.(2019{\natexlab{b}})Maron, Fetaya, Segol, and
  Lipman]{maron2019universality}
Haggai Maron, Ethan Fetaya, Nimrod Segol, and Yaron Lipman.
\newblock On the universality of invariant networks.
\newblock In \emph{International Conference on Machine Learning (ICML)},
  2019{\natexlab{b}}.

\bibitem[Mikolov et~al.(2013)Mikolov, Sutskever, Chen, Corrado, and
  Dean]{mikolov2013distributed}
Tomas Mikolov, Ilya Sutskever, Kai Chen, Greg~S Corrado, and Jeff Dean.
\newblock Distributed representations of words and phrases and their
  compositionality.
\newblock \emph{Advances in neural information processing systems}, 26, 2013.

\bibitem[Min et~al.(2021)Min, Tarmoun, Vidal, and Mallada]{min2021explicit}
Hancheng Min, Salma Tarmoun, Rene Vidal, and Enrique Mallada.
\newblock On the explicit role of initialization on the convergence and
  implicit bias of overparametrized linear networks.
\newblock \emph{International Conference on Machine Learning (ICML)}, 2021.

\bibitem[Mlynarski et~al.(2019)Mlynarski, Delingette, Criminisi, and
  Ayache]{mlynarski2019convolutional}
Pawel Mlynarski, Herv{\'e} Delingette, Antonio Criminisi, and Nicholas Ayache.
\newblock 3d convolutional neural networks for tumor segmentation using
  long-range 2d context.
\newblock \emph{Computerized Medical Imaging and Graphics}, 73:\penalty0
  60--72, 2019.

\bibitem[Moroshko et~al.(2020)Moroshko, Gunasekar, Woodworth, Lee, Srebro, and
  Soudry]{moroshko2020implicit}
Edward Moroshko, Suriya Gunasekar, Blake Woodworth, Jason~D Lee, Nathan Srebro,
  and Daniel Soudry.
\newblock Implicit bias in deep linear classification: Initialization scale vs
  training accuracy.
\newblock In \emph{Advances in Neural Information Processing Systems
  (NeurIPS)}, 2020.

\bibitem[Morris et~al.(2019)Morris, Ritzert, Fey, Hamilton, Lenssen, Rattan,
  and Grohe]{morris2019weisfeiler}
Christopher Morris, Martin Ritzert, Matthias Fey, William~L Hamilton, Jan~Eric
  Lenssen, Gaurav Rattan, and Martin Grohe.
\newblock Weisfeiler and leman go neural: Higher-order graph neural networks.
\newblock In \emph{Proceedings of the AAAI Conference on Artificial
  Intelligence}, volume~33, pages 4602--4609, 2019.

\bibitem[Morris et~al.(2021)Morris, Lipman, Maron, Rieck, Kriege, Grohe, Fey,
  and Borgwardt]{morris2021weisfeiler}
Christopher Morris, Yaron Lipman, Haggai Maron, Bastian Rieck, Nils~M Kriege,
  Martin Grohe, Matthias Fey, and Karsten Borgwardt.
\newblock Weisfeiler and leman go machine learning: The story so far.
\newblock \emph{arXiv preprint arXiv:2112.09992}, 2021.

\bibitem[Mulayoff and Michaeli(2020)]{mulayoff2020unique}
Rotem Mulayoff and Tomer Michaeli.
\newblock Unique properties of wide minima in deep networks.
\newblock In \emph{International Conference on Machine Learning}, 2020.

\bibitem[Mulayoff et~al.(2021)Mulayoff, Michaeli, and
  Soudry]{mulayoff2021implicit}
Rotem Mulayoff, Tomer Michaeli, and Daniel Soudry.
\newblock The implicit bias of minima stability: A view from function space.
\newblock In \emph{Advances in Neural Information Processing Systems
  (NeurIPS)}, 2021.

\bibitem[Nacson et~al.(2019{\natexlab{a}})Nacson, Gunasekar, Lee, Srebro, and
  Soudry]{nacson2019lexicographic}
Mor~Shpigel Nacson, Suriya Gunasekar, Jason Lee, Nathan Srebro, and Daniel
  Soudry.
\newblock Lexicographic and depth-sensitive margins in homogeneous and
  non-homogeneous deep models.
\newblock In \emph{International Conference on Machine Learning (ICML)},
  2019{\natexlab{a}}.

\bibitem[Nacson et~al.(2019{\natexlab{b}})Nacson, Lee, Gunasekar, Savarese,
  Srebro, and Soudry]{nacson2019convergence}
Mor~Shpigel Nacson, Jason Lee, Suriya Gunasekar, Pedro Henrique~Pamplona
  Savarese, Nathan Srebro, and Daniel Soudry.
\newblock Convergence of gradient descent on separable data.
\newblock In \emph{Proceedings of the Twenty-Second International Conference on
  Artificial Intelligence and Statistics}, 2019{\natexlab{b}}.

\bibitem[Namata et~al.(2012)Namata, London, Getoor, Huang, and
  Edu]{namata2012query}
Galileo Namata, Ben London, Lise Getoor, Bert Huang, and U~Edu.
\newblock Query-driven active surveying for collective classification.
\newblock In \emph{10th international workshop on mining and learning with
  graphs}, volume~8, page~1, 2012.

\bibitem[Narita et~al.(2012)Narita, Hayashi, Tomioka, and
  Kashima]{narita2012tensor}
Atsuhiro Narita, Kohei Hayashi, Ryota Tomioka, and Hisashi Kashima.
\newblock Tensor factorization using auxiliary information.
\newblock \emph{Data Mining and Knowledge Discovery}, 25\penalty0 (2):\penalty0
  298--324, 2012.

\bibitem[Neyshabur(2017)]{neyshabur2017implicit}
Behnam Neyshabur.
\newblock Implicit regularization in deep learning.
\newblock \emph{arXiv preprint arXiv:1709.01953}, 2017.

\bibitem[Neyshabur et~al.(2015)Neyshabur, Tomioka, and
  Srebro]{neyshabur2015search}
Behnam Neyshabur, Ryota Tomioka, and Nathan Srebro.
\newblock In search of the real inductive bias: On the role of implicit
  regularization in deep learning.
\newblock In \emph{International Conference on Learning Representations}, 2015.

\bibitem[NT and Maehara(2019)]{nt2019revisiting}
Hoang NT and Takanori Maehara.
\newblock Revisiting graph neural networks: All we have is low-pass filters.
\newblock \emph{arXiv preprint arXiv:1905.09550}, 2019.

\bibitem[Oono and Suzuki(2020)]{oono2019graph}
Kenta Oono and Taiji Suzuki.
\newblock Graph neural networks exponentially lose expressive power for node
  classification.
\newblock \emph{International Conference on Learning Representations (ICLR)},
  2020.

\bibitem[Or{\'u}s(2014)]{orus2014practical}
Rom{\'a}n Or{\'u}s.
\newblock A practical introduction to tensor networks: Matrix product states
  and projected entangled pair states.
\newblock \emph{Annals of physics}, 349:\penalty0 117--158, 2014.

\bibitem[Oymak and Soltanolkotabi(2019)]{oymak2019overparameterized}
Samet Oymak and Mahdi Soltanolkotabi.
\newblock Overparameterized nonlinear learning: Gradient descent takes the
  shortest path?
\newblock In \emph{International Conference on Machine Learning (ICML)}, pages
  4951--4960, 2019.

\bibitem[Papp and Wattenhofer(2022)]{papp2022theoretical}
P{\'a}l~Andr{\'a}s Papp and Roger Wattenhofer.
\newblock A theoretical comparison of graph neural network extensions.
\newblock In \emph{International Conference on Machine Learning (ICML)}, 2022.

\bibitem[Paszke et~al.(2019)Paszke, Gross, Massa, Lerer, Bradbury, Chanan,
  Killeen, Lin, Gimelshein, Antiga, et~al.]{paszke2019pytorch}
Adam Paszke, Sam Gross, Francisco Massa, Adam Lerer, James Bradbury, Gregory
  Chanan, Trevor Killeen, Zeming Lin, Natalia Gimelshein, Luca Antiga, et~al.
\newblock Pytorch: An imperative style, high-performance deep learning library.
\newblock \emph{Advances in neural information processing systems}, 2019.

\bibitem[Pedregosa et~al.(2011)Pedregosa, Varoquaux, Gramfort, Michel, Thirion,
  Grisel, Blondel, Prettenhofer, Weiss, Dubourg, Vanderplas, Passos,
  Cournapeau, Brucher, Perrot, and Duchesnay]{scikit-learn}
F.~Pedregosa, G.~Varoquaux, A.~Gramfort, V.~Michel, B.~Thirion, O.~Grisel,
  M.~Blondel, P.~Prettenhofer, R.~Weiss, V.~Dubourg, J.~Vanderplas, A.~Passos,
  D.~Cournapeau, M.~Brucher, M.~Perrot, and E.~Duchesnay.
\newblock Scikit-learn: Machine learning in {P}ython.
\newblock \emph{Journal of Machine Learning Research}, 12:\penalty0 2825--2830,
  2011.

\bibitem[Pei et~al.(2020)Pei, Wei, Chang, Lei, and Yang]{pei2020geom}
Hongbin Pei, Bingzhe Wei, Kevin Chen-Chuan Chang, Yu~Lei, and Bo~Yang.
\newblock Geom-gcn: Geometric graph convolutional networks.
\newblock \emph{International Conference on Learning Representations (ICLR)},
  2020.

\bibitem[Pesme et~al.(2021)Pesme, Pillaud-Vivien, and
  Flammarion]{pesme2021implicit}
Scott Pesme, Loucas Pillaud-Vivien, and Nicolas Flammarion.
\newblock Implicit bias of sgd for diagonal linear networks: a provable benefit
  of stochasticity.
\newblock In \emph{Advances in Neural Information Processing Systems
  (NeurIPS)}, 2021.

\bibitem[Rauhut et~al.(2017)Rauhut, Schneider, and Stojanac]{rauhut2017low}
Holger Rauhut, Reinhold Schneider, and {\v{Z}}eljka Stojanac.
\newblock Low rank tensor recovery via iterative hard thresholding.
\newblock \emph{Linear Algebra and its Applications}, 523:\penalty0 220--262,
  2017.

\bibitem[Razin and Cohen(2020)]{razin2020implicit}
Noam Razin and Nadav Cohen.
\newblock Implicit regularization in deep learning may not be explainable by
  norms.
\newblock In \emph{Advances in Neural Information Processing Systems}, 2020.

\bibitem[Razin et~al.(2021)Razin, Maman, and Cohen]{razin2021implicit}
Noam Razin, Asaf Maman, and Nadav Cohen.
\newblock Implicit regularization in tensor factorization.
\newblock In \emph{International Conference on Machine Learning}, 2021.

\bibitem[Razin et~al.(2022)Razin, Maman, and Cohen]{razin2022implicit}
Noam Razin, Asaf Maman, and Nadav Cohen.
\newblock Implicit regularization in hierarchical tensor factorization and deep
  convolutional neural networks.
\newblock In \emph{International Conference on Machine Learning}, 2022.

\bibitem[Razin et~al.(2023)Razin, Verbin, and Cohen]{razin2023ability}
Noam Razin, Tom Verbin, and Nadav Cohen.
\newblock On the ability of graph neural networks to model interactions between
  vertices.
\newblock In \emph{Advances in Neural Information Processing Systems}, 2023.

\bibitem[Razin et~al.(2024{\natexlab{a}})Razin, Alexander, Cohen-Karlik,
  Giryes, Globerson, and Cohen]{razin2024implicit}
Noam Razin, Yotam Alexander, Edo Cohen-Karlik, Raja Giryes, Amir Globerson, and
  Nadav Cohen.
\newblock Implicit bias of policy gradient in linear quadratic control:
  Extrapolation to unseen initial states.
\newblock \emph{arXiv preprint arXiv:2402.07875}, 2024{\natexlab{a}}.

\bibitem[Razin et~al.(2024{\natexlab{b}})Razin, Zhou, Saremi, Thilak, Bradley,
  Nakkiran, Susskind, and Littwin]{razin2024vanishing}
Noam Razin, Hattie Zhou, Omid Saremi, Vimal Thilak, Arwen Bradley, Preetum
  Nakkiran, Joshua Susskind, and Etai Littwin.
\newblock Vanishing gradients in reinforcement finetuning of language models.
\newblock In \emph{International Conference on Learning Representations},
  2024{\natexlab{b}}.

\bibitem[Recht et~al.(2010)Recht, Fazel, and Parrilo]{recht2010guaranteed}
Benjamin Recht, Maryam Fazel, and Pablo~A Parrilo.
\newblock Guaranteed minimum-rank solutions of linear matrix equations via
  nuclear norm minimization.
\newblock \emph{SIAM review}, 52\penalty0 (3):\penalty0 471--501, 2010.

\bibitem[Recht et~al.(2011)Recht, Xu, and Hassibi]{recht2011null}
Benjamin Recht, Weiyu Xu, and Babak Hassibi.
\newblock Null space conditions and thresholds for rank minimization.
\newblock \emph{Mathematical programming}, 127\penalty0 (1):\penalty0 175--202,
  2011.

\bibitem[Roy and Vetterli(2007)]{roy2007effective}
Olivier Roy and Martin Vetterli.
\newblock The effective rank: A measure of effective dimensionality.
\newblock In \emph{2007 15th European Signal Processing Conference}, pages
  606--610. IEEE, 2007.

\bibitem[Sadhanala et~al.(2016)Sadhanala, Wang, and
  Tibshirani]{sadhanala2016graph}
Veeru Sadhanala, Yu-Xiang Wang, and Ryan Tibshirani.
\newblock Graph sparsification approaches for laplacian smoothing.
\newblock In \emph{Artificial Intelligence and Statistics}, pages 1250--1259.
  PMLR, 2016.

\bibitem[Sarussi et~al.(2021)Sarussi, Brutzkus, and
  Globerson]{sarussi2021towards}
Roei Sarussi, Alon Brutzkus, and Amir Globerson.
\newblock Towards understanding learning in neural networks with linear
  teachers.
\newblock In \emph{Advances in Neural Information Processing Systems
  (NeurIPS)}, 2021.

\bibitem[Saxe et~al.(2014)Saxe, McClelland, and Ganguli]{saxe2014exact}
Andrew~M Saxe, James~L McClelland, and Surya Ganguli.
\newblock Exact solutions to the nonlinear dynamics of learning in deep linear
  neural networks.
\newblock \emph{International Conference on Learning Representations}, 2014.

\bibitem[Schlichtkrull et~al.(2018)Schlichtkrull, Kipf, Bloem, Berg, Titov, and
  Welling]{schlichtkrull2018modeling}
Michael Schlichtkrull, Thomas~N Kipf, Peter Bloem, Rianne van~den Berg, Ivan
  Titov, and Max Welling.
\newblock Modeling relational data with graph convolutional networks.
\newblock In \emph{European semantic web conference}, pages 593--607. Springer,
  2018.

\bibitem[Sen et~al.(2008)Sen, Namata, Bilgic, Getoor, Galligher, and
  Eliassi-Rad]{sen2008collective}
Prithviraj Sen, Galileo Namata, Mustafa Bilgic, Lise Getoor, Brian Galligher,
  and Tina Eliassi-Rad.
\newblock Collective classification in network data.
\newblock \emph{AI magazine}, 29\penalty0 (3):\penalty0 93--93, 2008.

\bibitem[Shachaf et~al.(2021)Shachaf, Brutzkus, and
  Globerson]{shachaf2021theoretical}
Gal Shachaf, Alon Brutzkus, and Amir Globerson.
\newblock A theoretical analysis of fine-tuning with linear teachers.
\newblock In \emph{Advances in Neural Information Processing Systems
  (NeurIPS)}, 2021.

\bibitem[Sharir and Shashua(2018)]{sharir2018expressive}
Or~Sharir and Amnon Shashua.
\newblock On the expressive power of overlapping architectures of deep
  learning.
\newblock \emph{International Conference on Learning Representations}, 2018.

\bibitem[Sharir et~al.(2016)Sharir, Tamari, Cohen, and
  Shashua]{sharir2016tensorial}
Or~Sharir, Ronen Tamari, Nadav Cohen, and Amnon Shashua.
\newblock Tensorial mixture models.
\newblock \emph{arXiv preprint}, 2016.

\bibitem[Shchur et~al.(2018)Shchur, Mumme, Bojchevski, and
  G{\"u}nnemann]{shchur2018pitfalls}
Oleksandr Shchur, Maximilian Mumme, Aleksandar Bojchevski, and Stephan
  G{\"u}nnemann.
\newblock Pitfalls of graph neural network evaluation.
\newblock \emph{arXiv preprint arXiv:1811.05868}, 2018.

\bibitem[Silver et~al.(2016)Silver, Huang, Maddison, Guez, Sifre, Van
  Den~Driessche, Schrittwieser, Antonoglou, Panneershelvam, Lanctot,
  et~al.]{silver2016mastering}
David Silver, Aja Huang, Chris~J Maddison, Arthur Guez, Laurent Sifre, George
  Van Den~Driessche, Julian Schrittwieser, Ioannis Antonoglou, Veda
  Panneershelvam, Marc Lanctot, et~al.
\newblock Mastering the game of go with deep neural networks and tree search.
\newblock \emph{nature}, 529\penalty0 (7587):\penalty0 484--489, 2016.

\bibitem[Song et~al.(2019)Song, Ge, Caverlee, and Hu]{song2019tensor}
Qingquan Song, Hancheng Ge, James Caverlee, and Xia Hu.
\newblock Tensor completion algorithms in big data analytics.
\newblock \emph{ACM Transactions on Knowledge Discovery from Data (TKDD)},
  13\penalty0 (1):\penalty0 1--48, 2019.

\bibitem[Soudry et~al.(2018)Soudry, Hoffer, Nacson, Gunasekar, and
  Srebro]{soudry2018implicit}
Daniel Soudry, Elad Hoffer, Mor~Shpigel Nacson, Suriya Gunasekar, and Nathan
  Srebro.
\newblock The implicit bias of gradient descent on separable data.
\newblock \emph{Journal of Machine Learning Research}, 19\penalty0
  (70):\penalty0 1--57, 2018.

\bibitem[Spielman and Srivastava(2011)]{spielman2011graph}
Daniel~A Spielman and Nikhil Srivastava.
\newblock Graph sparsification by effective resistances.
\newblock \emph{SIAM Journal on Computing}, 40\penalty0 (6):\penalty0
  1913--1926, 2011.

\bibitem[Steinlechner(2016)]{steinlechner2016riemannian}
Michael Steinlechner.
\newblock Riemannian optimization for high-dimensional tensor completion.
\newblock \emph{SIAM Journal on Scientific Computing}, 38\penalty0
  (5):\penalty0 S461--S484, 2016.

\bibitem[Stoudenmire(2018)]{stoudenmire2018learning}
E~Miles Stoudenmire.
\newblock Learning relevant features of data with multi-scale tensor networks.
\newblock \emph{Quantum Science and Technology}, 3\penalty0 (3):\penalty0
  034003, 2018.

\bibitem[Szegedy et~al.(2014)Szegedy, Zaremba, Sutskever, Bruna, Erhan,
  Goodfellow, and Fergus]{szegedy2014intriguing}
Christian Szegedy, Wojciech Zaremba, Ilya Sutskever, Joan Bruna, Dumitru Erhan,
  Ian Goodfellow, and Rob Fergus.
\newblock Intriguing properties of neural networks.
\newblock In \emph{International Conference on Learning Representations}, 2014.

\bibitem[Tay et~al.(2021)Tay, Dehghani, Abnar, Shen, Bahri, Pham, Rao, Yang,
  Ruder, and Metzler]{tay2021long}
Yi~Tay, Mostafa Dehghani, Samira Abnar, Yikang Shen, Dara Bahri, Philip Pham,
  Jinfeng Rao, Liu Yang, Sebastian Ruder, and Donald Metzler.
\newblock Long range arena: A benchmark for efficient transformers.
\newblock \emph{International Conference on Learning Representations}, 2021.

\bibitem[Teschl(2012)]{teschl2012ordinary}
Gerald Teschl.
\newblock \emph{Ordinary differential equations and dynamical systems}, volume
  140.
\newblock American Mathematical Soc., 2012.

\bibitem[Timor et~al.(2023)Timor, Vardi, and Shamir]{timor2023implicit}
Nadav Timor, Gal Vardi, and Ohad Shamir.
\newblock Implicit regularization towards rank minimization in relu networks.
\newblock In \emph{International Conference on Algorithmic Learning Theory},
  2023.

\bibitem[Topping et~al.(2022)Topping, Di~Giovanni, Chamberlain, Dong, and
  Bronstein]{topping2022understanding}
Jake Topping, Francesco Di~Giovanni, Benjamin~Paul Chamberlain, Xiaowen Dong,
  and Michael~M Bronstein.
\newblock Understanding over-squashing and bottlenecks on graphs via curvature.
\newblock \emph{International Conference on Learning Representations (ICLR)},
  2022.

\bibitem[Tu et~al.(2016)Tu, Boczar, Simchowitz, Soltanolkotabi, and
  Recht]{tu2016low}
Stephen Tu, Ross Boczar, Max Simchowitz, Mahdi Soltanolkotabi, and Ben Recht.
\newblock Low-rank solutions of linear matrix equations via procrustes flow.
\newblock In \emph{International Conference on Machine Learning (ICML)}, pages
  964--973, 2016.

\bibitem[Vardi and Shamir(2021)]{vardi2021implicit}
Gal Vardi and Ohad Shamir.
\newblock Implicit regularization in relu networks with the square loss.
\newblock In \emph{Conference on Learning Theory (COLT)}, 2021.

\bibitem[Vardi et~al.(2021)Vardi, Shamir, and Srebro]{vardi2021margin}
Gal Vardi, Ohad Shamir, and Nathan Srebro.
\newblock On margin maximization in linear and relu networks.
\newblock \emph{arXiv preprint arXiv:2110.02732}, 2021.

\bibitem[Veli{\v{c}}kovi{\'c} et~al.(2018)Veli{\v{c}}kovi{\'c}, Cucurull,
  Casanova, Romero, Lio, and Bengio]{velivckovic2018graph}
Petar Veli{\v{c}}kovi{\'c}, Guillem Cucurull, Arantxa Casanova, Adriana Romero,
  Pietro Lio, and Yoshua Bengio.
\newblock Graph attention networks.
\newblock \emph{International Conference on Learning Representations (ICLR)},
  2018.

\bibitem[Vidal(2008)]{vidal2008class}
Guifr{\'e} Vidal.
\newblock Class of quantum many-body states that can be efficiently simulated.
\newblock \emph{Physical review letters}, 101\penalty0 (11):\penalty0 110501,
  2008.

\bibitem[Voudigari et~al.(2016)Voudigari, Salamanos, Papageorgiou, and
  Yannakoudakis]{voudigari2016rank}
Elli Voudigari, Nikos Salamanos, Theodore Papageorgiou, and Emmanuel~J
  Yannakoudakis.
\newblock Rank degree: An efficient algorithm for graph sampling.
\newblock In \emph{2016 IEEE/ACM International Conference on Advances in Social
  Networks Analysis and Mining (ASONAM)}, pages 120--129. IEEE, 2016.

\bibitem[Wang et~al.(2016)Wang, Xiong, Wang, Qiao, Lin, Tang, and
  Van~Gool]{wang2016temporal}
Limin Wang, Yuanjun Xiong, Zhe Wang, Yu~Qiao, Dahua Lin, Xiaoou Tang, and Luc
  Van~Gool.
\newblock Temporal segment networks: Towards good practices for deep action
  recognition.
\newblock In \emph{European conference on computer vision}, pages 20--36.
  Springer, 2016.

\bibitem[Wang and Jacot(2023)]{wang2023implicit}
Zihan Wang and Arthur Jacot.
\newblock Implicit bias of sgd in $ l\_ $\{$2$\}$ $-regularized linear dnns:
  One-way jumps from high to low rank.
\newblock \emph{arXiv preprint arXiv:2305.16038}, 2023.

\bibitem[Weisfeiler and Leman(1968)]{weisfeiler1968reduction}
Boris Weisfeiler and Andrei Leman.
\newblock The reduction of a graph to canonical form and the algebra which
  appears therein.
\newblock \emph{NTI, Series}, 2\penalty0 (9):\penalty0 12--16, 1968.

\bibitem[Wies et~al.(2021)Wies, Levine, Jannai, and
  Shashua]{wies2021transformer}
Noam Wies, Yoav Levine, Daniel Jannai, and Amnon Shashua.
\newblock Which transformer architecture fits my data? a vocabulary bottleneck
  in self-attention.
\newblock \emph{International Conference on Machine Learning}, 2021.

\bibitem[Woodworth et~al.(2020)Woodworth, Gunasekar, Lee, Moroshko, Savarese,
  Golan, Soudry, and Srebro]{woodworth2020kernel}
Blake Woodworth, Suriya Gunasekar, Jason~D Lee, Edward Moroshko, Pedro
  Savarese, Itay Golan, Daniel Soudry, and Nathan Srebro.
\newblock Kernel and rich regimes in overparametrized models.
\newblock In \emph{Conference on Learning Theory}, 2020.

\bibitem[Wu et~al.(2020{\natexlab{a}})Wu, Dobriban, Ren, Wu, Li, Gunasekar,
  Ward, and Liu]{wu2020implicit}
Xiaoxia Wu, Edgar Dobriban, Tongzheng Ren, Shanshan Wu, Zhiyuan Li, Suriya
  Gunasekar, Rachel Ward, and Qiang Liu.
\newblock Implicit regularization and convergence for weight normalization.
\newblock \emph{Advances in Neural Information Processing Systems},
  2020{\natexlab{a}}.

\bibitem[Wu et~al.(2020{\natexlab{b}})Wu, Pan, Chen, Long, Zhang, and
  Philip]{wu2020comprehensive}
Zonghan Wu, Shirui Pan, Fengwen Chen, Guodong Long, Chengqi Zhang, and S~Yu
  Philip.
\newblock A comprehensive survey on graph neural networks.
\newblock \emph{IEEE transactions on neural networks and learning systems},
  32\penalty0 (1):\penalty0 4--24, 2020{\natexlab{b}}.

\bibitem[Xia and Yuan(2017)]{xia2017polynomial}
Dong Xia and Ming Yuan.
\newblock On polynomial time methods for exact low rank tensor completion.
\newblock \emph{arXiv preprint arXiv:1702.06980}, 2017.

\bibitem[Xiao et~al.(2017)Xiao, Rasul, and Vollgraf]{xiao2017fashion}
Han Xiao, Kashif Rasul, and Roland Vollgraf.
\newblock Fashion-mnist: a novel image dataset for benchmarking machine
  learning algorithms.
\newblock \emph{arXiv preprint arXiv:1708.07747}, 2017.

\bibitem[Xu et~al.(2019)Xu, Hu, Leskovec, and Jegelka]{xu2019powerful}
Keyulu Xu, Weihua Hu, Jure Leskovec, and Stefanie Jegelka.
\newblock How powerful are graph neural networks?
\newblock \emph{International Conference on Learning Representations (ICLR)},
  2019.

\bibitem[Ying et~al.(2018)Ying, He, Chen, Eksombatchai, Hamilton, and
  Leskovec]{ying2018graph}
Rex Ying, Ruining He, Kaifeng Chen, Pong Eksombatchai, William~L Hamilton, and
  Jure Leskovec.
\newblock Graph convolutional neural networks for web-scale recommender
  systems.
\newblock In \emph{Proceedings of the 24th ACM SIGKDD international conference
  on knowledge discovery \& data mining}, 2018.

\bibitem[Yokota et~al.(2016)Yokota, Zhao, and Cichocki]{yokota2016smooth}
Tatsuya Yokota, Qibin Zhao, and Andrzej Cichocki.
\newblock Smooth parafac decomposition for tensor completion.
\newblock \emph{IEEE Transactions on Signal Processing}, 64\penalty0
  (20):\penalty0 5423--5436, 2016.

\bibitem[Yun et~al.(2021)Yun, Krishnan, and Mobahi]{yun2021unifying}
Chulhee Yun, Shankar Krishnan, and Hossein Mobahi.
\newblock A unifying view on implicit bias in training linear neural networks.
\newblock \emph{International Conference on Learning Representations}, 2021.

\bibitem[Zhang et~al.(2023)Zhang, Luo, Wang, and He]{zhang2023rethinking}
Bohang Zhang, Shengjie Luo, Liwei Wang, and Di~He.
\newblock Rethinking the expressive power of gnns via graph biconnectivity.
\newblock \emph{International Conference on Learning Representations (ICLR)},
  2023.

\bibitem[Zhang et~al.(2021)Zhang, Bengio, Hardt, Recht, and
  Vinyals]{zhang2021understanding}
Chiyuan Zhang, Samy Bengio, Moritz Hardt, Benjamin Recht, and Oriol Vinyals.
\newblock Understanding deep learning (still) requires rethinking
  generalization.
\newblock \emph{Communications of the ACM}, 64\penalty0 (3):\penalty0 107--115,
  2021.

\bibitem[Zheng et~al.(2020)Zheng, Zong, Cheng, Song, Ni, Yu, Chen, and
  Wang]{zheng2020robust}
Cheng Zheng, Bo~Zong, Wei Cheng, Dongjin Song, Jingchao Ni, Wenchao Yu, Haifeng
  Chen, and Wei Wang.
\newblock Robust graph representation learning via neural sparsification.
\newblock In \emph{International Conference on Machine Learning (ICML)}, 2020.

\bibitem[Zhou et~al.(2024)Zhou, Bradley, , Littwin, Razin, Saremi, Susskind,
  Bengio, and Nakkiran]{zhou2024what}
Hattie Zhou, Arwen Bradley, , Etai Littwin, Noam Razin, Omid Saremi, Joshua
  Susskind, Samy Bengio, and Preetum Nakkiran.
\newblock What algorithms can transformers learn? a study in length
  generalization.
\newblock In \emph{International Conference on Learning Representations}, 2024.

\bibitem[Zhou et~al.(2017)Zhou, Lu, Lin, and Zhang]{zhou2017tensor}
Pan Zhou, Canyi Lu, Zhouchen Lin, and Chao Zhang.
\newblock Tensor factorization for low-rank tensor completion.
\newblock \emph{IEEE Transactions on Image Processing}, 27\penalty0
  (3):\penalty0 1152--1163, 2017.

\end{thebibliography}
}


\appendix 

\part{Appendix}

{
\parindent 0pt
\topsep 4pt plus 1pt minus 2pt
\partopsep 1pt plus 0.5pt minus 0.5pt
\itemsep 2pt plus 1pt minus 0.5pt
\parsep 2pt plus 1pt minus 0.5pt
\parskip .5pc

\chapter{Implicit Regularization in Deep Learning May Not Be Explainable \\ by Norms} 
\label{mf:app:imp_reg_not_norms}

\section{Extension to Different Matrix Dimensions} 
\label{mf:app:dimensions}

In this appendix we outline an extension of the construction and analysis given in \cref{mf:sec:analysis:setting,mf:sec:analysis:norms_up}, respectively, to completion of matrices with dimensions beyond~$2$-by-$2$.
The extension presented here is not unique, but rather one simple option out of many.
It is demonstrated empirically in Figure~\ref{mf:fig:experiment_dmf_diff_dimensions}.

Beginning with square matrices, for $2 \leq D \in \N$, consider completion of a $D$-by-$D$ matrix based on the following observations:
\bea
\Omega &=& \{1, \ldots, D\} \times \{1, \ldots , D\} \, \setminus \, \{ (1 , 1 ) \} 
\text{\,,} 
\nonumber
\\[1mm]
y_{i,j} &=& 
\begin{cases}
	1 & , \text{if } i = j \geq 3 \text{ or } (i,j) \in \{(1,2), (2,1)\}\\
	0 & , \text{otherwise} 
\end{cases} 
~ , ~ \text{for } ( i , j ) \in \Omega
\label{mf:eq:obs_dxd}
\text{\,,}
\eea
where, as in~\cref{mf:sec:model}, $\Omega$~represents the set of observed locations, and $\{ y_{i , j} \in \R \}_{( i , j ) \in \Omega}$ the~corresponding set of observed values.
The solution set for this problem (\ie~the set of matrices zeroing the loss in Equation~\eqref{mf:eq:loss}) is: 
\be
\S_D := \left \{
\begin{pmatrix}
	w_{1,1} \hspace{-2mm} & 1 & 0 & 0 & \hspace{-2mm} \cdots \hspace{-2mm} & 0 \\[0.5mm]
	1 \hspace{-2mm} & 0 & 0 & 0 & \hspace{-2mm} \cdots \hspace{-2mm} & 0 \\[0.5mm]
	0 \hspace{-2mm} & 0 & 1 & 0 & \hspace{-2mm} \cdots \hspace{-2mm} & 0 \\[0.5mm]
	0 \hspace{-2mm} & 0 & 0 & 1 &  & 0 \\[-1.5mm]
	\vdots \hspace{-2mm} & \vdots & \vdots &  & \hspace{-2mm} \ddots \hspace{-2mm} &  \\
	0 \hspace{-2mm} & 0 & 0 & 0 &  & 1
\end{pmatrix}
\in \R^{D \times D} 
: w_{1,1} \in \R \right \} 
\label{mf:eq:sol_set_dxd}
\text{\,.}
\ee
Observing~$\S_D$, while comparing to the solution set~$\S$ in our original construction (Equation~\eqref{mf:eq:sol_set}), we see that the former has a $2$-by-$2$ block diagonal structure, with the top-left block holding the latter, and the bottom-right block set to identity.
This implies that $D - 2$ of the singular values along~$\S_D$ are fixed to one, and the remaining two are identical to the singular values along~$\S$.
Results analogous to Propositions~\ref{mf:prop:sol_set_norms} and~\ref{mf:prop:sol_set_rank} can therefore easily be proven.
Since the determinant along~$\S_D$ is bounded below and away from zero (it is equal to~$-1$), approaching~$\S_D$ while having positive determinant necessarily means that absolute value of unobserved entry (\ie~of the entry in location~$( 1 , 1 )$) grows towards infinity.
Combining this with the fact that the end matrix (Equation~\eqref{mf:eq:prod_mat}) of a depth~$L \geq 2$ matrix factorization maintains the sign of its determinant (see Lemma~\ref{mf:lem:det_does_not_change_sign} in Appendix~\ref{mf:app:lemma:dmf}), results analogous to Theorem~\ref{mf:thm:norms_up_finite} and Corollary~\ref{mf:cor:norms_up_asymp} may readily be established.
That is, one may show that, with probability~$0.5$ or more over random near-zero initialization, gradient descent with small learning rate drives \emph{all} norms (and quasi-norms) \emph{towards infinity}, while essentially driving rank towards~its~minimum.

\begin{figure}[t]
	\begin{center}
	\includegraphics[width=0.31\textwidth]{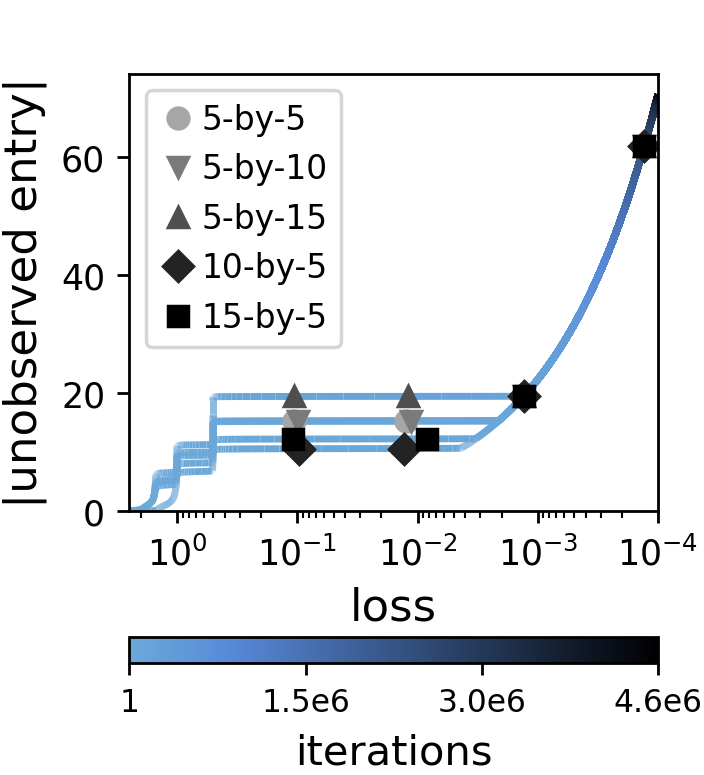}
	\end{center}
	\vspace{-2mm}
	\caption{
		Phenomenon of implicit regularization in matrix factorization driving \emph{all} norms (and quasi-norms) \emph{towards infinity} extends to arbitrary matrix dimensions.
		Appendix~\ref{mf:app:dimensions} outlines an extension of the construction and analysis given in \cref{mf:sec:analysis:setting,mf:sec:analysis:norms_up}, respectively, to completion of matrices with arbitrary dimensions.
		The extension implies that for any $2 \leq D, D' \in \N$, when applying matrix factorization to the specified $D$-by-$D'$ matrix completion problem, decreasing loss, \ie~fitting observations, can lead absolute value of unobserved entry to increase (which in turn means norms and quasi-norms increase).
		This is demonstrated in the plot above, which for representative runs corresponding to different choices of $D$ and~$D'$, shows absolute value of unobserved entry as a function of the loss (Equation~\ref{mf:eq:loss}), with iteration number encoded by color.
		Runs were obtained with a depth~$3$ matrix factorization initialized randomly by an unbalanced (layer-wise independent) distribution, with the latter's standard deviation and the learning rate for gradient descent set to the smallest values used for depth~$3$ in Figure~\ref{mf:fig:experiment_dmf} (other settings we evaluated produced similar results).
		For further implementation details see Appendix~\ref{mf:app:experiments:details:dmf}.
	}
	\label{mf:fig:experiment_dmf_diff_dimensions}
\end{figure}

Moving on to the rectangular case, for $2 \leq D , D' \in \N$, consider completion of a $D$-by-$D'$ matrix based on the same observations as in Equation~\eqref{mf:eq:obs_dxd}, but with additional zero observations such that only the entry in location~$( 1 , 1 )$ is unobserved.
The singular values along the solution set for this problem are the same as those along~$\S_D$ (Equation~\eqref{mf:eq:sol_set_dxd}).
Moreover, assuming without loss of generality that~$D \leq D'$, if a matrix factorization applied to this problem is initialized such that its end matrix holds zeros in columns $D + 1$ to~$D'$, then a dynamical characterization from~\cite{arora2018optimization} (restated as Lemma~\ref{mf:lem:prod_mat_dyn} in Appendix~\ref{mf:app:lemma:dmf}), along with the structure of the loss (Equation~\eqref{mf:eq:loss}), ensure the leftmost $D$-by-$D$~submatrix of the end matrix evolves precisely as in the square case discussed above, while the remaining columns ($D + 1$~to~$D'$) stay at zero.
Results thus carry over from the square to the rectangular case.

\section{Further Experiments and Implementation Details} \label{mf:app:experiments}

\subsection{Further Experiments} \label{mf:app:experiments:further}

Figures~\ref{mf:fig:experiment_dmf_diff_dimensions} and~\ref{mf:fig:experiment_dmf_perturb} supplement Figure~\ref{mf:fig:experiment_dmf} from \cref{mf:sec:experiments:analyzed}, by demonstrating empirically that the phenomenon of implicit regularization in matrix factorization driving all norms (and quasi-norms) towards infinity is, respectively:
\emph{(i)}~applicable to arbitrary matrix dimensions, as outlined in Appendix~\ref{mf:app:dimensions};
and
\emph{(ii)}~robust to perturbations, as proven in Section~\ref{mf:sec:analysis:robust}.
Figure~\ref{mf:fig:experiment_tf_r3} supplements Figure~\ref{mf:fig:experiment_tf} from \cref{mf:sec:experiments:tensor}, further demonstrating that gradient descent over tensor factorization exhibits an implicit regularization towards low tensor rank.

\subsection{Implementation Details} \label{mf:app:experiments:details}

\begin{figure}
	\begin{center}
		\hspace{-5mm}
		\subfloat{
			\includegraphics[width=0.31\textwidth]{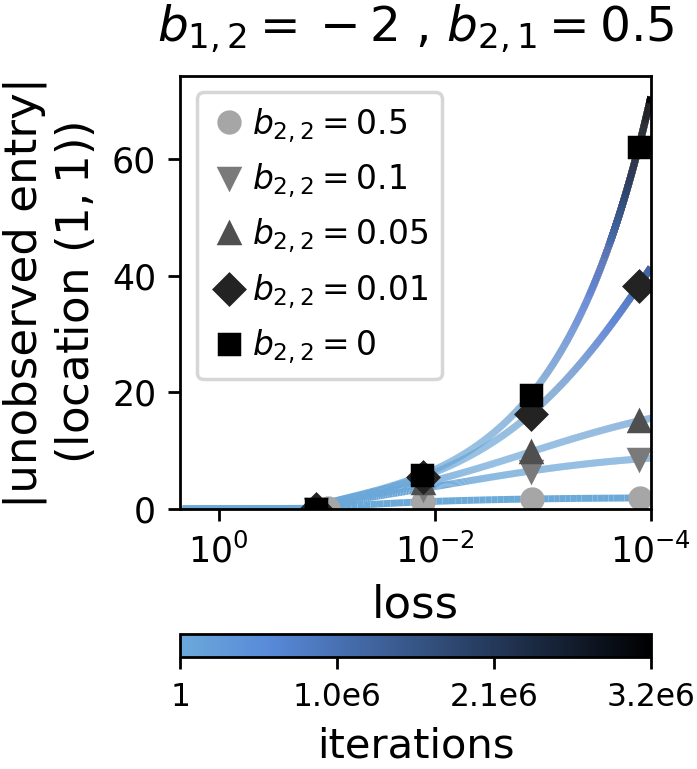}
		} ~~~~~~~~~~~~
		\subfloat{
			\includegraphics[width=0.31\textwidth]{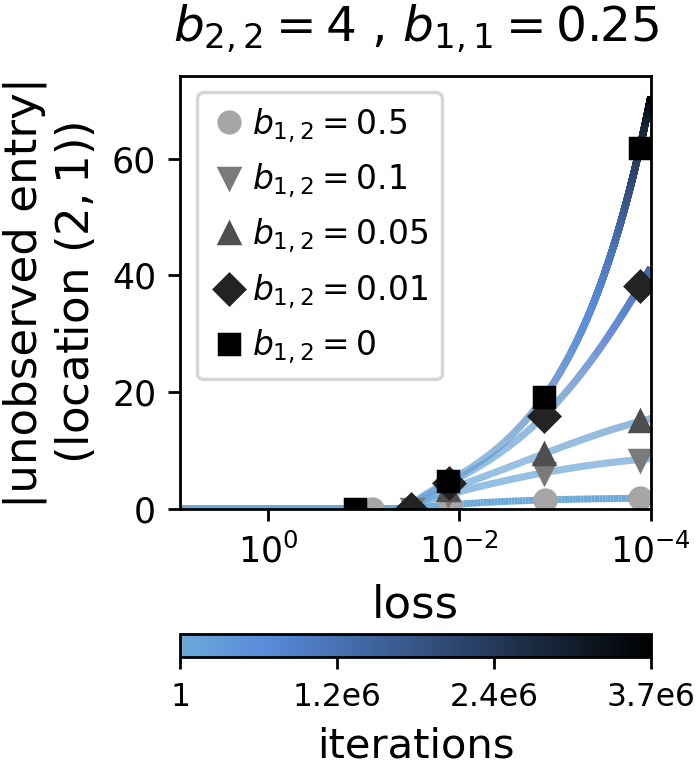}
		}
	\end{center}
	\vspace{-2mm}
	\caption{
		Phenomenon of implicit regularization in matrix factorization driving \emph{all} norms (and quasi-norms) \emph{towards infinity} is robust to perturbations.
		Our analysis (\cref{mf:sec:analysis:robust}) implies that, when applying matrix factorization to the matrix completion problem defined in \cref{mf:sec:analysis:setting}, even if observations are perturbed and repositioned, decreasing loss, \ie~fitting them, leads absolute value of unobserved entry to increase (which in turn means norms and quasi-norms increase).
		Specifically, with $( i , j ) \in \{ 1 , 2 \} \times \{ 1 , 2 \}$ representing the unobserved location and $\bar{i} := 3 - i$, $\bar{j} := 3 - j$, Theorem~\ref{mf:thm:norms_up_finite_perturb} implies that:
		\emph{(i)}~if the diagonally-opposite observation~$y_{\bar{i} , \bar{j}}$ is unperturbed (stays at zero), the adjacent ones $y_{i , \bar{j}} , y_{\bar{i} , j}$ can take on \emph{any} non-zero values, and as long as at initialization the sign of the end matrix's (Equation~\ref{mf:eq:prod_mat}) determinant accords with that of $y_{i , \bar{j}} \cdot y_{\bar{i} , j}$, the absolute value of unobserved entry will grow to infinity;
		and
		\emph{(ii)}~the extent to which absolute value of unobserved entry grows gracefully recedes as $y_{\bar{i} , \bar{j}}$ is perturbed away from zero.
		This is demonstrated in the plots above, which for representative runs, show absolute value of unobserved entry as a function of the loss (Equation~\ref{mf:eq:loss}), with iteration number encoded by color.
		Each plot corresponds to a different choice of~$( i , j )$ and a different assignment for~$y_{i , \bar{j}} , y_{\bar{i} , j}$, presenting runs with varying values for~$y_{\bar{i} , \bar{j}}$.
		Runs were obtained with a depth~$3$ matrix factorization initialized randomly by an unbalanced (layer-wise independent) distribution, with the latter's standard deviation and the learning rate for gradient descent set to the smallest values used for depth~$3$ in Figure~\ref{mf:fig:experiment_dmf} (other settings we evaluated produced similar results).
		For further implementation details see Appendix~\ref{mf:app:experiments:details:dmf}.
	}
	\label{mf:fig:experiment_dmf_perturb}
\end{figure}

\begin{figure}[t]
	\begin{center}
		\hspace*{-3.5mm}
		\subfloat{
			\includegraphics[width=0.48\textwidth]{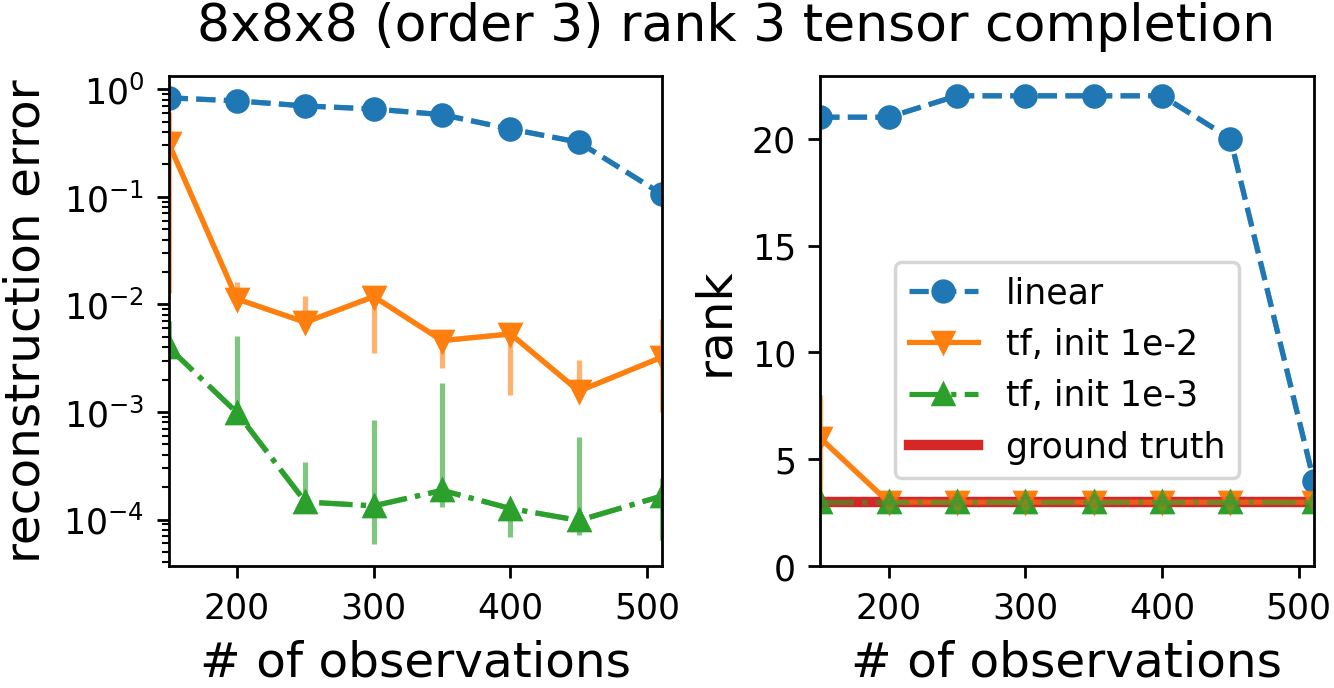}
		} ~~
		\subfloat{
			\includegraphics[width=0.48\textwidth]{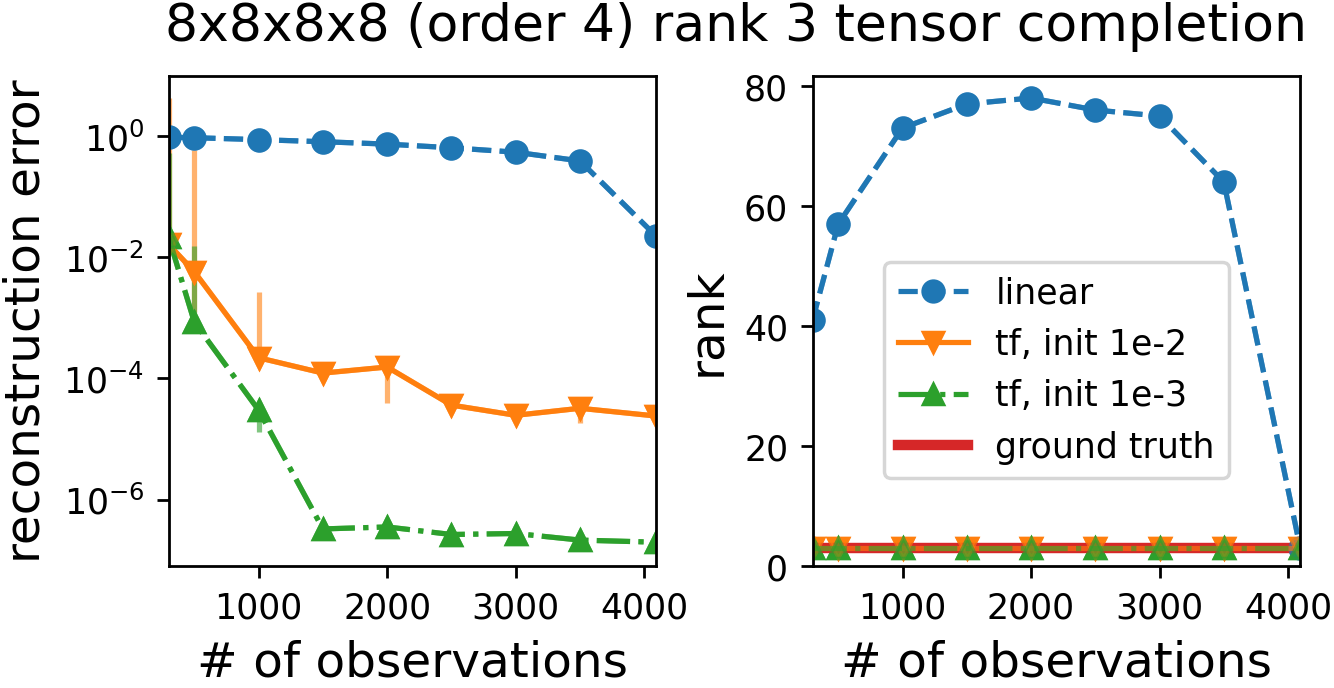}
		}
	\end{center}
	\vspace{-1mm}
	\caption{
		Gradient descent over tensor factorization exhibits an implicit regularization towards low tensor rank.
		This figure is identical to Figure~\ref{mf:fig:experiment_tf}, except that the experiments it portrays had ground truth tensors of rank~$3$ (instead of~$1$).
		For further details see caption of Figure~\ref{mf:fig:experiment_tf}, as well as Appendix~\ref{mf:app:experiments:details:tf}.
	}
	\label{mf:fig:experiment_tf_r3}
\end{figure}

Below we provide implementation details omitted from the experimental reports in Section~\ref{mf:sec:experiments} and \cref{mf:app:experiments:further}.
Source code for reproducing our results and figures, based on the PyTorch framework~\cite{paszke2019pytorch}, can be found at \url{https://github.com/noamrazin/imp_reg_dl_not_norms}.

\subsubsection{Deep Matrix Factorization (Figures~\ref{mf:fig:experiment_dmf},~\ref{mf:fig:experiment_dmf_diff_dimensions}, and~\ref{mf:fig:experiment_dmf_perturb})} \label{mf:app:experiments:details:dmf}

In all experiments with deep matrix factorization, hidden dimensions were set to the minimal value ensuring unconstrained search space, \ie~to the minimum between the number of rows and the number of columns in the matrix to complete.
Gradient descent was run with fixed learning rate until loss (Equation~\eqref{mf:eq:loss}) reached a value lower than~$10^{-4}$ or $5 \cdot 10^6$~iterations elapsed.
Both balanced (Equation~\eqref{mf:eq:balance}) and unbalanced (layer-wise independent) random initializations were calibrated according to a desired standard deviation~$\alpha > 0$ for the entries of the initial end matrix (Equation~\eqref{mf:eq:prod_mat}).
Namely:
\emph{(i)}~under unbalanced initialization, entries of all weight matrices were sampled independently from a Gaussian distribution with zero mean and standard deviation $(\alpha^2 / \bar{D}^{L - 1})^{1 / 2L}$, where~$L$ stands for the depth of the factorization, and~$\bar{D}$ for the size of its hidden dimensions;
and
\emph{(ii)}~under balanced initialization, we used Procedure~1 from~\cite{arora2019convergence}, based on a Gaussian distribution with independent entries, zero mean and standard deviation~$\alpha$.

In accordance with the description in Appendix~\ref{mf:app:dimensions}, if the matrix to complete was rectangular, we ensured that excess rows or columns of the initial end matrix held zeros, by clearing (setting to zero) corresponding rows or columns of the initial leftmost or rightmost (respectively) matrix in the factorization.\footnote{
	That is, if the matrix to complete had size $D$-by-$D'$ with $D \neq D'$, we cleared rows $D' + 1$ to~$D$ of~$\Wbf_L ( 0 )$ if~$D > D'$, and columns $D + 1$ to~$D'$ of~$\Wbf_1 ( 0 )$ if~$D' > D$.
}
Random initializations were repeated until the determinant of the initial end matrix (or of its top-left $\min \{ D, D' \}$-by-$\min \{ D, D' \}$ submatrix if its size was $D$-by-$D'$ with $D \neq D'$) was of the necessary sign,\footnote{
	Positive for the experiments reported by Figures~\ref{mf:fig:experiment_dmf} and~\ref{mf:fig:experiment_dmf_diff_dimensions}, and negative for those reported by Figure~\ref{mf:fig:experiment_dmf_perturb}.
}
taking two attempts on average.
In the experiment reported by Figure~\ref{mf:fig:experiment_dmf}, runs with matrix factorization depths~$2$ and~$3$ were carried out with learning rates $\{ 6 \cdot 10^{-2} , 3 \cdot 10^{-2}, 9 \cdot 10^{-3}, 6 \cdot 10^{-3}, 3 \cdot 10^{-3}, 9 \cdot 10^{-4} \}$ and corresponding standard deviations for initialization $\{ 10^{-2}, 10^{-3}, 10^{-4}, 10^{-5}, 10^{-6}, 10^{-7} \}$.
Factorizations of depth~$4$ were slightly more sensitive to changes in learning rate, thus we refined attempted values to $\{ 6 \cdot 10^{-3}, 4.5 \cdot 10^{-3}, 3 \cdot 10^{-3}, 1.5 \cdot 10^{-3}, 10^{-3} \}$, with corresponding standard deviations for initialization $\{ 10^{-1}, 10^{-2}, 10^{-3}, 10^{-4}, 10^{-5} \}$.

\subsubsection{Tensor Factorization (Figures~\ref{mf:fig:experiment_tf} and~\ref{mf:fig:experiment_tf_r3})} \label{mf:app:experiments:details:tf}

In all experiments with tensor factorization (Equation~\eqref{mf:eq:tf}), the number of terms~$R$ was set to ensure an unconstrained search space, \ie~it was set to $8^2$ and~$8^3$ for tensor sizes $8$-by-$8$-by-$8$ and $8$-by-$8$-by-$8$-by-$8$ respectively.\footnote{
	As shown in~\cite{hackbusch2012tensor}, for any $D_1 , \ldots , D_N \in \N$, using $R = ( \Pi_{n = 1}^N D_n ) / \max \{ D_n \}_{n = 1}^N$ suffices for~expressing all tensors in $\R^{D_1 \times \cdots \times D_N}$.
}
Horizontal axes in all plots begin from the smallest number of observations producing stable results, and end when all entries but one are observed.
Specifically:
\emph{(i)}~in the experiments with rank~$1$ ground truth tensors (Figure~\ref{mf:fig:experiment_tf}), the number of observations ranged over $\{ 50, 100, 150, \ldots , 400, 450, 511 \}$ and $\{ 100, 500, 1000, 1500, \ldots, 3000, 3500, 4095\}$ for orders~$3$ and~$4$ respectively;
and
\emph{(ii)}~for experiments with rank~$3$ ground truth tensors (Figure~\ref{mf:fig:experiment_tf_r3}), the minimal number of observations was increased threefold (\ie~ranges of $\{ 150, 200, 250, \ldots , 400, 450, 511 \}$ and $\{ 300, 500, 1000, 1500, \ldots, 3000, 3500, 4095\}$ were used for orders~$3$ and~$4$ respectively).

Gradient descent was run until the mean squared error over observations reached a value lower than~$10^{-6}$ or~$10^6$ iterations elapsed.
For initialization, weights were sampled independently from a Gaussian distribution with zero mean and varying standard deviation.
In particular, five trials (differing in random seed) were conducted for each standard deviation in the range $\{10^{-1}, 10^{-2}, 10^{-3}, 10^{-4} \}$.
To facilitate more efficient experimentation, we employed an adaptive learning rate scheme, where at each iteration a base learning rate of~$10^{-2}$ was divided by the square root of an exponential moving average of squared gradient norms.
That is, with base learning rate $\eta = 10^{-2}$ and weighted average coefficient $\beta = 0.99$, at iteration~$t$ the learning rate was set to $\eta_t = \eta / (\sqrt{\gamma_t / (1 - \beta^t)} + 10^{-6})$, where $\gamma_t = \beta \cdot \gamma_{t-1} + (1 - \beta) \cdot \sum \hspace{0mm}_{r = 1}^{R} \hspace{0mm}_{n = 1}^{N} \| \nicefrac{\partial}{\partial \wbf_r^{n}} \ell ( \{ \wbf^{n}_r ( t ) \}_{r , n} ) \|^2_{Fro}$, with $\gamma_0 = 0$ and $\ell ( \cdot )$ standing for the mean squared error over observations.
We emphasize that only the learning rate (step size) is affected by this scheme, not the direction of movement.
Comparisons between the scheme and a fixed (small) learning rate schedule have shown no noticeable impact on the end result, with significant difference in terms of run time.

While exact inference of tensor rank is in the worst case computationally hard (\cf~\cite{haastad1990tensor}), in practice, a standard way to estimate it is by the minimal number of terms ($R$~in Equation~\eqref{mf:eq:tf}) for which the Alternating Least Squares (ALS) algorithm achieves reconstruction (mean squared) error below a certain threshold (see~\cite{kolda2009tensor} for further details).
We follow this method with a threshold of~$10^{-6}$.
Generating a ground truth rank~$R^*$ tensor $\W^* \in \R^{D_1 \times \cdots \times D_N}$ was done by computing:
\[
\W^* = \sum\nolimits_{r = 1}^{R^*} \vbf^{1 }_r \otimes \cdots \otimes \vbf^{N }_r
\quad , ~
\vbf^{ n }_r \in \R^{D_n}
~ , ~
r = 1 ,  \ldots , R^*
~ , ~
n = 1 , \ldots , N
\text{\,,}
\]
with $\{ \vbf^{n }_r \}_{r = 1}^{R^*} \hspace{0mm}_{n = 1}^{N}$ drawn independently from the standard normal distribution.
After every such generation, we estimated the rank of the obtained tensor (its construction only ensures a rank of \emph{at most}~$R^*$), and repeated the process if it was smaller than~$R^*$.
For convenience, we subsequently normalized the ground truth tensor to be of unit Frobenius norm.

\section{Deferred Proofs} \label{mf:app:proofs}

\subsection{Notation} \label{mf:app:proofs:notation}

We define a few notational conventions that will be used throughout our proofs.
For $N \in \N$, let $[N]$ denote the set $\{ 1 , \ldots , N \}$.
Let $\{ \ebf_i \}_{i = 1}^D \subset \R^D$ be the standard basis vectors, \ie~$\ebf_i$ holds~$1$ in its $i$'th coordinate and $0$ elsewhere.
The singular values of a matrix $\Wbf \in \R^{D \times D'}$ are denoted by $\sigma_1 (\Wbf) \geq \ldots \geq \sigma_{\min \{D, D'\}}(\Wbf) \geq 0$, where by convention $\sigma_i (\Wbf) := 0$ for $i > \min \{D, D' \}$.
Similarly, the eigenvalues of a symmetric matrix $\Wbf \in \R^{D \times D}$ are denoted by $\lambda_1(\Wbf) \geq \ldots \geq \lambda_D (\Wbf)$. 
We let $\norm{\Wbf}_{S_p}$, with $p \in (0, \infty]$, stand for the Schatten-$p$ (quasi-)norm of a matrix $\Wbf \in \R^{D \times D'}$, and denote by $\norm{\Wbf}_{Fro}$ the special case $p = 2$, \ie~the Frobenius norm.
The Euclidean norm of a vector $\wbf \in \R^D$ is denoted by $\norm{\wbf}_2$.
Since norms are a special case of quasi-norms, when providing results applicable to both, only the latter is explicitly treated.
To admit a compact representation of matrix products, given $1\leq a \leq b \leq L$ and matrices $\Wbf_1, \ldots ,\Wbf_L$ for which the product $\Wbf_L \cdots \Wbf_1$ is defined, we denote:
\[
\begin{split}
	& \prod\nolimits_{a}^{r=b} \Wbf_r := \Wbf_b \cdots \Wbf_a ~, \\
	& \prod\nolimits_{r=a}^{b} \Wbf_r^\top  := \Wbf_a^\top  \cdots \Wbf_b^\top \text{\,.}
\end{split}
\]
By definition, if $a > b$, then both $\prod_{a}^{r=b} \Wbf_r$ and $\prod_{r=a}^{b} \Wbf_r^\top $ are identity matrices, with size to be inferred by context.
The $k$'th derivative of a function (from~$\R$ to~$\R$) $f(t)$ is denoted by $f^{(k)}(t)$, with $f^{(0)}(t) := f(t)$ by convention.
For consistency with differential equations literature, when the variable~$t$ is regarded as a time index, we also denote the first order derivative by~$\dot{f}(t)$.
Lastly, when clear from context, a time index~$t$ will often be omitted.

\subsection{Useful Lemmas} \label{mf:app:lemmas}

\subsubsection{Deep Matrix Factorization} \label{mf:app:lemma:dmf}

For completeness, we include the following result from~\cite{arora2018optimization}, which characterizes the evolution of the end matrix under gradient flow on a deep matrix factorization:
\begin{lemma}[adaptation of Theorem~1 in~\cite{arora2018optimization}] \label{mf:lem:prod_mat_dyn}
	Let $\mfendloss : \R^{D \times D'} \to \R_{\geq 0}$ be an analytic\footnote{
		An infinitely differentiable function $f: \D \rightarrow \R$ is \textit{analytic} if at every $\xbf \in \D$ its Taylor series converges to it on some neighborhood of $\xbf$ (see \cite{krantz2002primer} for further details).
		Specifically, the matrix completion loss considered (Equation~\eqref{mf:eq:loss}) is analytic.
		\label{note:analytic_func}
	}
	loss, overparameterized by a depth~$L$ matrix factorization:
	\[
	\mfobj (\Wbf_1, \ldots, \Wbf_L) = \mfendloss (\Wbf_L \cdots \Wbf_1)
	\text{\,.}
	\]
	Suppose we run gradient flow over the factorization:
	\[
	\dot{\Wbf}_l (t) := \tfrac{d}{dt} \Wbf_l(t) = -\tfrac{\partial}{\partial \Wbf_l} \mfobj(\Wbf_1(t), \ldots, \Wbf_L(t))
	\quad,~ t \geq 0 ~,~ l = 1, \ldots, L
	\text{\,,}
	\]
	with a balanced initialization,~\ie:
	\[
	\Wbf_{l + 1} (0)^\top \Wbf_{l + 1}(0) = \Wbf_l(0) \Wbf_l (0)^\top
	\quad,~ l = 1, \ldots, L - 1
	\text{\,.}
	\]
	Then, the end matrix $\matrixend (t) = \Wbf_L(t) \cdots \Wbf_1(t)$ obeys the following dynamics:
	\[
	\matrixenddot (t) = -\sum\nolimits_{l = 1}^L \left[ \matrixend(t) \matrixend (t)^\top \right]^\frac{l - 1}{L} \cdot \nabla \mfendloss\big(\matrixend(t)\big) \cdot \left[ \matrixend (t)^\top \matrixend(t) \right]^\frac{L - l}{L}
	\text{\,,}
	\]
	where~$[\,\cdot\,]^\beta$, $\beta \in \R_{\geq 0}$, stands for a power operator defined over positive semidefinite matrices (with $\beta = 0$ yielding identity by definition).
\end{lemma}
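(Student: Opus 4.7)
The plan is to follow the strategy of \cite{arora2018optimization}, which rests on two pillars: conservation of the balancedness condition under gradient flow, and a matrix-power identity that this conservation yields. First I would differentiate $\Wbf_{l+1}^\top \Wbf_{l+1} - \Wbf_l \Wbf_l^\top$ with respect to time. Substituting the gradient flow equation
\[
\dot{\Wbf}_l = - \Bigl( \prod\nolimits_{r=l+1}^L \Wbf_r \Bigr)^\top \nabla \mfendloss(\matrixend) \Bigl( \prod\nolimits_1^{r=l-1} \Wbf_r \Bigr)^\top
\]
(obtained by the chain rule applied to $\mfobj$) and expanding the product rule, most of the terms cancel, yielding $\tfrac{d}{dt}(\Wbf_{l+1}^\top \Wbf_{l+1}) = \tfrac{d}{dt}(\Wbf_l \Wbf_l^\top)$. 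Since the two sides agree at initialization by assumption, the balancedness relation $\Wbf_{l+1}^\top \Wbf_{l+1} = \Wbf_l \Wbf_l^\top$ is preserved for all $t \geq 0$.

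Next I would exploit balancedness to relate partial products to fractional powers of $\matrixend \matrixend^\top$ and $\matrixend^\top \matrixend$. Specifically, I would establish the key identities
\[
(\Wbf_L \cdots \Wbf_{l+1})(\Wbf_L \cdots \Wbf_{l+1})^\top = [\matrixend \matrixend^\top]^{(L-l)/L}, \qquad (\Wbf_l \cdots \Wbf_1)^\top(\Wbf_l \cdots \Wbf_1) = [\matrixend^\top \matrixend]^{l/L}.
\]
The cleanest route is via simultaneous singular value decompositions: balancedness forces all $\Wbf_l$ to share a common singular value spectrum and compatible singular vectors, so every partial product $\Wbf_{l_2} \cdots \Wbf_{l_1}$ has an SVD whose singular values are the $(l_2-l_1+1)$-th powers of those of any single factor. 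Reading off these SVDs then gives the fractional-power identities above (interpreting the power operator on positive semidefinite matrices via its spectral decomposition, as stated in the lemma).

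Finally I would differentiate $\matrixend = \Wbf_L \cdots \Wbf_1$ by the product rule, obtaining $\dot{\matrixend} = \sum_{l=1}^L (\Wbf_L \cdots \Wbf_{l+1}) \dot{\Wbf}_l (\Wbf_{l-1} \cdots \Wbf_1)$, and substitute the expression for $\dot{\Wbf}_l$ together with the two matrix-power identities. The $l$-th summand collapses to $-[\matrixend \matrixend^\top]^{(l-1)/L} \nabla \mfendloss(\matrixend) [\matrixend^\top \matrixend]^{(L-l)/L}$, yielding the claimed dynamics.

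The main obstacle is the middle step: rigorously passing from the algebraic balancedness relation $\Wbf_{l+1}^\top \Wbf_{l+1} = \Wbf_l \Wbf_l^\top$ to the fractional-power identities. Care is needed because the factor matrices are rectangular and $\matrixend \matrixend^\top$ may be rank-deficient, so the $\beta$-th power must be defined consistently on the kernel (taking $0^\beta := 0$ for $\beta > 0$); one must verify that the required SVDs can be chosen compatibly across all layers and that analyticity of $\mfendloss$ rules out pathological vanishing of the matrix flow that would otherwise complicate the power operator near singular configurations (this is where the analyticity hypothesis of the lemma enters, essentially via the argument of \cite{arora2018optimization}). Once these identities are secured, the remaining algebra is a direct verification.
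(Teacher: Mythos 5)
Your proposal is correct and follows essentially the same route as the source: the paper states this lemma without proof, importing it from~\cite{arora2018optimization}, and your three steps (conservation of balancedness under gradient flow, the fractional-power identities for partial products obtained from compatible SVDs, and substitution into the product-rule expansion of $\dot{\Wbf}_{\text{M}}$) are precisely the argument of Theorem~1 there. The only nit is a harmless indexing swap in your final summand (direct substitution gives exponents $(L-l)/L$ and $(l-1)/L$, which match the stated dynamics after reindexing $l \mapsto L+1-l$), and the analyticity hypothesis is not actually needed for the power identities themselves~---~it is carried in the statement for use in subsequent lemmas on singular-value dynamics.
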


Additionally, recall from~\cite{arora2019implicit} the following characterization for the singular values of $\matrixend(t)$:
\begin{lemma}[adaptation of Lemma~1 and Theorem~3 in~\cite{arora2019implicit}] \label{mf:lem:prod_mat_sing_dyn}
	Consider the setting of Lemma~\ref{mf:lem:prod_mat_dyn} for depth $2 \leq L \in \N$.
	Then, there exist analytical functions $\{ \sigma_r : [0, \infty) \rightarrow \R_{\geq 0} \}_{r=1}^{\min \{D, D'\}}, \{ \ubf_r : [0,\infty) \rightarrow \R^D \}_{r=1}^{\min \{D, D'\}}$ and $\{ \vbf_r : [0, \infty) \rightarrow \R^{D'} \}_{r=1}^{\min \{D, D'\}}$ such that:
	\[
	\begin{split}
		&\ubf_r ( t )^\top \ubf_{r'} ( t ) = \vbf_r ( t )^\top \vbf_{r'} ( t ) = 
		\begin{cases}
			1 & , r = r' \\
			0 & , r \neq r' 
		\end{cases}
		\quad , ~
		t \geq 0 
		~ , ~ 
		r , r' \in [\min \{ D, D' \}] \text{\,,} \\[1mm]
		& \matrixend(t) = \sum\nolimits_{r = 1}^{\min \{D, D'\}} \sigma_r(t) \ubf_r (t) \vbf_r (t)^\top \text{\,,}
	\end{split}
	\]
	\ie~$\sigma_r (t) \geq 0$ are the singular values of $\matrixend (t)$, and $\ubf_r (t), \vbf_r (t)$ are corresponding left and right (respectively) singular vectors.
	Furthermore, the singular values~$\sigma_r (t)$ evolve by:
	\be
	\dot{\sigma}_r (t) = -L \cdot \left ( \sigma_r^2 (t) \right )^{1 - 1 / L} \cdot \left \langle \nabla \mfendloss \left ( \matrixend (t) \right ) , \ubf_r (t) \vbf_r (t)^\top \right \rangle \quad ,~r = 1, \ldots, \min \{ D, D' \}
	\text{\,.}
	\label{mf:eq:prod_mat_sing_value_ode}
	\ee
\end{lemma}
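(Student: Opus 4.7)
The plan is to first establish the existence of an analytic singular value decomposition of $\matrixend(t)$, and then to read off the evolution of each $\sigma_r$ by differentiating $\sigma_r = \ubf_r^\top \matrixend \vbf_r$ against the ODE supplied by \cref{mf:lem:prod_mat_dyn}. Since $\mfendloss$ is analytic and the right-hand side in \cref{mf:lem:prod_mat_dyn} is an analytic function of the entries of $\matrixend$ (the fractional matrix powers of the positive semidefinite matrices $\matrixend \matrixend^\top$ and $\matrixend^\top \matrixend$ depend analytically on $\matrixend$ via the spectral calculus, at least on the open dense subset where they are invertible, and the full statement extends by analytic continuation), the curve $t \mapsto \matrixend(t)$ is real-analytic. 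I would then invoke the classical result that any real-analytic matrix curve admits a real-analytic SVD (see, e.g., Bunse-Gerstner, Byers, Mehrmann, Nichols 1991), with non-negativity of the $\sigma_r$ preserved by absorbing sign choices into the corresponding singular vectors.

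The heart of the derivation is then algebraic. From $\ubf_r^\top \ubf_r = \vbf_r^\top \vbf_r = 1$ I get $\ubf_r^\top \dot{\ubf}_r = \vbf_r^\top \dot{\vbf}_r = 0$, and combined with $\matrixend \vbf_r = \sigma_r \ubf_r$ and $\ubf_r^\top \matrixend = \sigma_r \vbf_r^\top$, differentiating $\sigma_r = \ubf_r^\top \matrixend \vbf_r$ via the product rule collapses to
\[
\dot{\sigma}_r = \ubf_r^\top \matrixenddot \vbf_r
\text{\,.}
\]
Substituting the ODE from \cref{mf:lem:prod_mat_dyn}, and using the diagonal action of the fractional powers in the respective singular bases, $[\matrixend \matrixend^\top]^{(l-1)/L} \ubf_r = \sigma_r^{2(l-1)/L} \ubf_r$ and $[\matrixend^\top \matrixend]^{(L-l)/L} \vbf_r = \sigma_r^{2(L-l)/L} \vbf_r$, yields
\[
\dot{\sigma}_r = -\sum\nolimits_{l = 1}^L \sigma_r^{2(l-1)/L} \sigma_r^{2(L-l)/L} \inprodnoflex{\nabla \mfendloss(\matrixend)}{\ubf_r \vbf_r^\top}
\text{\,,}
\]
and since $\tfrac{2(l-1)}{L} + \tfrac{2(L-l)}{L} = 2 - \tfrac{2}{L}$ for every $l$, summing gives the factor of $L$ and the exponent $(\sigma_r^2)^{1 - 1/L}$ as claimed in \cref{mf:eq:prod_mat_sing_value_ode}.

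The main obstacle is Step~1, the existence of an analytic SVD: singular values can cross and can hit zero, at which points the unordered singular vectors are not uniquely determined, so some care is required to pick an analytic selection. I would rely on the theorem of Bunse-Gerstner et al.\ to obtain such a selection, and then verify that non-negativity of $\sigma_r$ can be imposed without sacrificing analyticity by appropriately flipping signs of $\ubf_r$ or $\vbf_r$. An equivalent self-contained route is to first derive \cref{mf:eq:prod_mat_sing_value_ode} on the open dense set of times at which the singular values are simple (where the Rellich-type analytic perturbation argument is elementary), and then extend to all $t \geq 0$ by continuity, using that both sides of \cref{mf:eq:prod_mat_sing_value_ode} are continuous in $t$.
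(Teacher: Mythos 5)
The paper never proves this lemma: it is imported, with only cosmetic changes, from Lemma~1 and Theorem~3 of~\cite{arora2019implicit}, so there is no in-paper proof to compare against. Your reconstruction is essentially the argument of that reference: analyticity of $t \mapsto \matrixend(t)$, existence of an analytic SVD for analytic matrix curves (Bunse-Gerstner et al.), cancellation of the singular-vector derivatives via $\ubf_r^\top \dot{\ubf}_r = \vbf_r^\top \dot{\vbf}_r = 0$ together with $\matrixend \vbf_r = \sigma_r \ubf_r$ and $\ubf_r^\top \matrixend = \sigma_r \vbf_r^\top$, and the diagonal action of the fractional powers whose exponents sum to $2 - 2/L$ in each of the $L$ terms. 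The central algebra is correct.

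Two points need repair. First, your justification of analyticity of $t \mapsto \matrixend(t)$ via the end-matrix ODE does not work: for non-integer $\beta$ the map $\Abf \mapsto [\Abf \Abf^\top]^{\beta}$ is \emph{not} analytic in $\Abf$ near rank-deficient points (already $a \mapsto (a^2)^\beta = \abs{a}^{2\beta}$ fails at $a=0$), and "analytic continuation from the open dense invertible set" is not available. The correct route, which the paper records as Lemma~\ref{mf:lem:analytic_factors_and_prod_mat}, is that the factors $\Wbf_1(t), \ldots, \Wbf_L(t)$ solve an analytic ODE and are hence analytic, and $\matrixend(t)$ is their product. Second, "absorbing sign choices into the singular vectors" cannot in general reconcile non-negativity with analyticity: if a signed analytic singular value crossed zero transversally, $\abs{\sigma_r}$ would have a kink and flipping the sign of $\ubf_r$ at the crossing would make it discontinuous. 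What saves the statement here is the dynamics themselves: deriving Equation~\eqref{mf:eq:prod_mat_sing_value_ode} first for the \emph{signed} analytic singular values, one sees that the right-hand side vanishes whenever $\sigma_r = 0$ (as $L \geq 2$) and is locally Lipschitz in $\sigma_r$, so a signed singular value that is non-negative at $t=0$ can never become negative; a sign flip at initialization alone then yields the non-negative analytic family claimed. Your fallback of working on the set of simple singular values and extending by continuity faces the same global selection problem for $\ubf_r, \vbf_r$, so the analytic-SVD route with this observation is the cleaner fix.
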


We rely on this result to establish that for square end matrices the sign of $\det (\matrixend(t))$ does not change throughout time. 
\begin{lemma} \label{mf:lem:det_does_not_change_sign}
	Consider the setting of Lemma~\ref{mf:lem:prod_mat_dyn} with depth $2 \leq L \in \N$ and $D = D'$.
	Then, the determinant of $\matrixend(t)$ has the same sign as its initial value $\det (\matrixend(0))$.
	That is, $\det (\matrixend(t))$ is identically zero if $\det (\matrixend(0)) = 0$, is positive if $\det (\matrixend(0)) > 0$, and is negative if $\det (\matrixend(0)) < 0$.
\end{lemma}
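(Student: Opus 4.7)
The plan is to reduce the determinant's sign dynamics to a statement about each singular value of $\matrixend(t)$ maintaining its ``zero or strictly positive'' status throughout time, by leveraging Lemma~\ref{mf:lem:prod_mat_sing_dyn}. Concretely, I would begin by invoking Lemma~\ref{mf:lem:prod_mat_sing_dyn} to write $\matrixend(t) = U(t) \Sigma(t) V(t)^\top$, where $U(t), V(t) \in \R^{D \times D}$ are orthogonal matrices whose columns are the analytic singular vectors $\ubf_r(t), \vbf_r(t)$ and $\Sigma(t) = \mathrm{diag}(\sigma_1(t), \ldots, \sigma_D(t))$ is diagonal with nonnegative analytic entries. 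Then $\det(\matrixend(t)) = \det(U(t)) \cdot \det(V(t)) \cdot \prod_{r=1}^{D} \sigma_r(t)$.

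Next, I would observe that $t \mapsto \det(U(t)) \det(V(t))$ is continuous (in fact analytic) and takes values only in $\{-1, +1\}$ (each orthogonal determinant is $\pm 1$), hence is constant. So the sign of $\det(\matrixend(t))$ is governed entirely by the product $\prod_{r} \sigma_r(t)$. Since each $\sigma_r(t) \geq 0$, it suffices to show that \emph{each} $\sigma_r$ is either identically zero on $[0,\infty)$ or strictly positive on $[0,\infty)$.

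For this last step, I would use the dynamical equation~\eqref{mf:eq:prod_mat_sing_value_ode}, namely $\dot{\sigma}_r(t) = -L \cdot \sigma_r(t)^{2 - 2/L} \cdot \inprodnoflex{\nabla \mfendloss(\matrixend(t))}{\ubf_r(t)\vbf_r(t)^\top}$. Because $L \geq 2$, the exponent $2 - 2/L \geq 1$, so $\sigma_r = 0$ is a fixed point at which the right-hand side vanishes. Combined with the analyticity of $\sigma_r$ guaranteed by Lemma~\ref{mf:lem:prod_mat_sing_dyn}, I would argue: if $\sigma_r(t_0) = 0$ at some $t_0 \geq 0$, then by repeatedly differentiating~\eqref{mf:eq:prod_mat_sing_value_ode} and substituting $\sigma_r(t_0) = 0$, all derivatives $\sigma_r^{(k)}(t_0)$ must vanish; analyticity then forces $\sigma_r \equiv 0$. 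Hence $\sigma_r(0) = 0$ implies $\sigma_r(t) = 0$ for all $t$, while $\sigma_r(0) > 0$ implies $\sigma_r(t) > 0$ for all $t$ (it cannot touch zero without being identically zero). Putting the three pieces together yields the three cases of the lemma.

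\textbf{Main obstacle.} The technical heart is the ``trapping at zero'' argument in the final step: one must carefully justify, from analyticity and the vanishing of the right-hand side of~\eqref{mf:eq:prod_mat_sing_value_ode} at $\sigma_r = 0$, that all higher-order derivatives of $\sigma_r$ vanish at a zero. Writing $\sigma_r^{2 - 2/L}$ for $L = 2$ gives exponent exactly $1$ (Lipschitz in $\sigma_r$), while for $L \geq 3$ the exponent is $> 1$ and demands an inductive derivative computation using the product and chain rules together with the analyticity of $\ubf_r(t), \vbf_r(t)$ and of $\nabla \mfendloss \circ \matrixend$. I would carry out this induction in detail, which is the only non-routine part of the argument.
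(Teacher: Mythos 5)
Your reduction is the same as the paper's: both proofs invoke Lemma~\ref{mf:lem:prod_mat_sing_dyn} and boil the claim down to showing that each singular value $\sigma_r(t)$ is identically zero if $\sigma_r(0)=0$ and strictly positive for all $t$ if $\sigma_r(0)>0$; your extra SVD bookkeeping ($\det \matrixend = \det U \cdot \det V \cdot \prod_r \sigma_r$ with $\det U(t)\det V(t)$ constant) is a harmless variant of the paper's shorter route, which just notes that $\matrixend(t)$ stays full rank and appeals to continuity of $t \mapsto \det(\matrixend(t))$. Where you diverge is the trapping-at-zero step. The paper simply solves Equation~\eqref{mf:eq:prod_mat_sing_value_ode} in closed form ($\sigma_r(t) = \sigma_r(0)\exp(\int_0^t g)$ for $L=2$, and a power-law expression for $L>2$), from which both the persistence of zero and the persistence of positivity are immediate. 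Your proposed alternative — repeatedly differentiating the ODE at a zero of $\sigma_r$ and concluding all derivatives vanish — runs into exactly the difficulty you flag: for $L\geq 3$ the right-hand side involves $\sigma_r^{2-2/L}$ with a non-integer exponent in $(1,2)$, so if $\sigma_r$ had a simple zero at $t_0$ the map $t\mapsto \sigma_r(t)^{2-2/L}$ would behave like $|t-t_0|^{2-2/L}$ and could not be differentiated more than once at $t_0$; the literal induction therefore does not go through. Two clean repairs are available: (i) since $2-2/L \geq 1$, the map $\sigma \mapsto \sigma^{2-2/L}$ is locally Lipschitz near $0$, so Picard–Lindelöf uniqueness (forward and backward) immediately gives that $\sigma_r \equiv 0$ is the only solution touching zero; or (ii) compare orders of vanishing on both sides of the ODE — an analytic $\sigma_r$ with a zero of finite order $k$ at $t_0$ would have $\dot\sigma_r$ vanishing to order $k-1$ while the right-hand side vanishes to order at least $k(2-2/L) \geq k$, a contradiction. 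Either fix (or the paper's explicit solution) completes your argument; as written, the induction itself is the one step that would fail.
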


\begin{proof}
	We prove an analogous claim for the singular values of $\matrixend(t)$, from which the lemma readily follows.
	That is, for $r \in [D]$, the singular value $\sigma_r (t)$ is identically zero if $\sigma_r (0) = 0$, and is positive if $\sigma_r (0) > 0$.
	
	For conciseness, define $g(t) := -L \cdot \left \langle \nabla \mfendloss \left ( \matrixend (t) \right ) , \ubf_r (t) \vbf_r (t)^\top \right \rangle$.
	Invoking Lemma~\ref{mf:lem:prod_mat_sing_dyn}, let us solve the differential equation for  $\sigma_r (t)$.
	If $L = 2$, the solution to Equation~\eqref{mf:eq:prod_mat_sing_value_ode} is $\sigma_r (t) = \sigma_r (0) \cdot \exp \left ( \int\nolimits_{t' = 0}^t g(t') dt' \right )$.
	Clearly, $\sigma_r (t)$ is either identically zero or positive according to its initial value.
	If $L > 2$, Equation~\eqref{mf:eq:prod_mat_sing_value_ode} is solved by:
	\[
	\sigma_r (t) = 
	\begin{cases}
		~~ \left ( \sigma_r (0)^{\frac{2}{L} - 1} + \left ( \frac{2}{L} - 1 \right ) \int\nolimits_{t'=0}^t g(t') dt' \right )^{\frac{1}{\frac{2}{L} - 1}} & , \sigma_r (0) > 0 \\
		\qquad\qquad\qquad\qquad 0 & , \sigma_r (0) = 0
	\end{cases}
	\text{\,.}
	\]
	As before, if $\sigma_r (0) = 0$, then $\sigma_r (t) = 0$ for all $t \geq 0$.
	If $\sigma_r (0) > 0$, divergence in finite time of $\sigma_r(t)$ is possible, however, its positivity is preserved until that occurs nonetheless.
	
	Turning our attention to the determinant of $\matrixend (t)$, suppose $\det (\matrixend(0)) = 0$. Then, $\matrixend(0)$ has a singular value which is $0$, and for all $t$ that singular value and the determinant remain $0$.
	If $\det (\matrixend(0)) \neq 0$, the end matrix remains full rank for all $t$.
	The proof then immediately follows from the continuity of $\det (\matrixend(t))$.
\end{proof}

We will also make use of the following lemmas:
\begin{lemma}[adapted from~\cite{arora2019implicit}] \label{mf:lem:analytic_factors_and_prod_mat}
	Under the setting of Lemma~\ref{mf:lem:prod_mat_dyn}, $\Wbf_1 (t), \ldots, \Wbf_L (t)$, $\matrixend (t)$ ,and $\nabla \mfendloss (\matrixend (t))$ are analytic functions of $t$.
\end{lemma}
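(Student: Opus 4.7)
The plan is to reduce the lemma to the classical fact that solutions of ordinary differential equations with analytic right-hand sides are themselves analytic functions of time (see, \eg, \cite{ilyashenko2008lectures}). First, I would observe that the overparameterized objective $\mfobj(\Wbf_1, \ldots, \Wbf_L) = \mfendloss(\Wbf_L \cdots \Wbf_1)$ is a composition of the analytic loss $\mfendloss$ with the polynomial map $(\Wbf_1, \ldots, \Wbf_L) \mapsto \Wbf_L \cdots \Wbf_1$, hence analytic in the tuple of weight matrices. Consequently, each partial derivative $\tfrac{\partial}{\partial \Wbf_l} \mfobj$ is analytic in $(\Wbf_1, \ldots, \Wbf_L)$ as well.

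Next, I would view the gradient flow in Equation~\eqref{mf:eq:gf} as an autonomous ODE on the Euclidean space $\R^{D_L \times D_{L-1}} \times \cdots \times \R^{D_1 \times D_0}$ whose vector field is precisely the collection of these partial derivatives (negated). Since this vector field is analytic, the Cauchy--Kovalevskaya-type theorem for analytic ODEs (equivalently, Picard iteration applied to a convergent power series) yields a unique solution that is analytic in $t$ on its interval of existence. This establishes analyticity of $\Wbf_1(t), \ldots, \Wbf_L(t)$.

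Finally, analyticity of $\matrixend(t) = \Wbf_L(t) \cdots \Wbf_1(t)$ follows because products and sums of analytic functions are analytic. Similarly, $\nabla \mfendloss(\matrixend(t))$ is the composition of the analytic map $\nabla \mfendloss$ (its analyticity follows from that of $\mfendloss$) with the analytic map $t \mapsto \matrixend(t)$, hence is itself analytic in $t$. No step here is expected to be an obstacle; the only mild subtlety is verifying that we are in the scope of a classical analytic-ODE existence theorem, which holds since the vector field is defined and analytic on all of Euclidean space.
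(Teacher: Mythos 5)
Your proposal is correct and follows essentially the same route as the paper: analyticity of $\mfobj$ via closure of analytic functions under composition with the polynomial product map, analyticity of the gradient-flow trajectories via the classical theorem on ODEs with analytic right-hand sides (Theorem~1.1 in~\cite{ilyashenko2008lectures}), and then closure under products and composition for $\matrixend(t)$ and $\nabla\mfendloss(\matrixend(t))$. No gaps.
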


\begin{proof}
	Analytic functions are closed under summation, multiplication, and composition.
	The analyticity of $\mfendloss(\cdot)$ therefore implies that $\mfobj(\cdot)$ (Equation~\eqref{mf:eq:oprm_obj}) is analytic as well.
	From Theorem 1.1 in~\cite{ilyashenko2008lectures}, it then follows that under gradient flow (Equation~\eqref{mf:eq:gf}) $\Wbf_1(t), \ldots , \Wbf_L(t)$ are analytic functions of~$t$. 
	Lastly, the aforementioned closure properties imply that $\matrixend (t)$ and $\nabla \mfendloss ( \matrixend (t) )$ are also analytic in~$t$.
\end{proof}

\subsubsection{Technical} \label{mf:app:lemma:technical}

Included below are a few technical lemmas used in our analyses.
\begin{lemma} \label{mf:lem:entropy_sqrt_bound}
	Let $h: [0, 1] \rightarrow \R$ be the binary entropy function $h(p) := -p \cdot \ln (p) - (1-p) \ln (1-p)$, where by convention $0 \cdot \ln (0) = 0$.
	Then, for all $p \in [0,1]$:
	\[
	h(p) \leq 2 \sqrt{p} \text{\,.}
	\]
\end{lemma}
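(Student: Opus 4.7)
The plan is to decompose $h(p) = -p \ln p - (1-p)\ln(1-p)$ into its two constituent terms and to show that each is separately bounded by $\sqrt{p}$ on $[0,1]$; summing then yields $h(p) \le 2\sqrt{p}$. The two bounds have different flavors. The term $-p\ln p$ already scales like $\sqrt{p}$ near $p = 0$ (its global maximum is $2/e$, attained at $p = e^{-2}$), so a direct calculus argument controls it. The term $-(1-p)\ln(1-p)$ is of order $p$ near $p = 0$ and vanishes at $p = 1$, so it can be handled via a crude linear bound followed by the elementary inequality $p \le \sqrt{p}$ on $[0,1]$.

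First I would establish $-p \ln p \le \sqrt{p}$ for $p \in (0,1]$. Substituting $u := 1/\sqrt{p} \ge 1$, this is equivalent to $2 \ln u \le u$ for $u \ge 1$. The function $\varphi(u) := u - 2 \ln u$ satisfies $\varphi'(u) = 1 - 2/u$, so it has a unique interior minimum at $u = 2$ with value $\varphi(2) = 2(1 - \ln 2) > 0$; together with $\varphi(1) = 1 > 0$ and $\varphi(u) \to \infty$ as $u \to \infty$, this yields $\varphi(u) > 0$ for all $u \ge 1$. The boundary case $p = 0$ holds trivially under the convention $0 \cdot \ln 0 = 0$.

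Next I would show $-(1-p)\ln(1-p) \le \sqrt{p}$ for $p \in [0,1]$. Applying the standard inequality $\ln(1+x) \le x$ to $x = p/(1-p) > -1$ gives $-\ln(1-p) = \ln\!\bigl(1 + p/(1-p)\bigr) \le p/(1-p)$, hence $-(1-p)\ln(1-p) \le p$ on $[0,1)$; the case $p = 1$ holds trivially (both sides are $\le 1$, with the left side equal to $0$). Combined with $p \le \sqrt{p}$ on $[0,1]$, this gives the second inequality. Adding the two term-wise bounds completes the proof. The argument is entirely elementary; the only point worth flagging is that bounding each half by the same $\sqrt{p}$ is what forces the factor of $2$, and any sharper constant would require a joint analysis of the two terms.
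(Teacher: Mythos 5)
Your proof is correct, but it takes a genuinely different route from the paper's. You bound the two terms of $h$ separately: $-p\ln p \le \sqrt{p}$ via the substitution $u = 1/\sqrt{p}$ (reducing to $2\ln u \le u$ for $u \ge 1$, which your analysis of $\varphi(u) = u - 2\ln u$ correctly establishes), and $-(1-p)\ln(1-p) \le p \le \sqrt{p}$ via $\ln(1+x)\le x$. The paper instead proves the stronger symmetric bound $h(p) \le 2\ln(2)\sqrt{p(1-p)} \le 2\sqrt{p(1-p)}$ by showing that the ratio $f(p) = h(p)^2/(p(1-p))$ is maximized at $p = 1/2$, which requires a monotonicity argument for $f$ on $(0,1/2)$ resting on a concavity claim for $g(p)-g(1-p)$ with $g(p)=-p\ln p$. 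Your argument is more elementary and shorter; the paper's buys a tighter intermediate inequality that vanishes at both endpoints $p=0$ and $p=1$, though only the behavior near $p=0$ is used downstream (in the effective-rank bounds), so nothing is lost by your route. One cosmetic remark: your parenthetical that the "global maximum is $2/e$, attained at $p=e^{-2}$" is true of the ratio $-\sqrt{p}\ln p$, not of $-p\ln p$ itself (whose maximum is $1/e$ at $p=1/e$); since your actual proof does not rely on this remark, it is not a gap, but the referent should be made explicit.
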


\begin{proof}
	We present a tighter inequality, $h(p) \leq 2 \sqrt{p (1-p)}$, from which the proof immediately follows since $2 \sqrt{p(1-p)} \leq 2 \sqrt{p}$ for $p \in [0,1]$.
	
	Define the function $f(p) := \frac{h(p)^2}{p(1-p)}$ over the open interval $(0, 1)$.
	Differentiating it with respect to $p$ we have:
	\[
	\frac{d}{dp} f(p) = \frac{ \left (-p \cdot \ln (p) \right )^2 - \left ( -(1-p) \cdot \ln (1-p) \right )^2}{p^2 (1-p)^2}.
	\]
	Introducing $g(p) := -p \cdot \ln (p)$, we show that $g(p)^2 > g(1-p)^2$ for all $p \in (0, \frac{1}{2})$.
	It is easily verified that $g(p) - g(1 - p)$ is concave on the interval $(0, \frac{1}{2})$ (second derivative is negative).
	Since for $p = 0$ and $p = 1 / 2$ we have exactly $g(p) - g(1-p) = 0$, it holds that $g(p) - g(1-p) \geq 0$ and $g(p)^2 \geq g(1-p)^2$ for all $p \in (0, \frac{1}{2})$.
	Noticing $\frac{d}{dp} f(p) = \left ( g(p)^2 - g(1-p)^2 \right ) / p^2 (1-p)^2$, it follows that $f (\cdot)$ is monotonically non-decreasing on $(0, \frac{1}{2})$.
	Due to the fact that $f(p) = f(1-p)$, it is non-increasing on $(\frac{1}{2}, 1)$, and attains its maximal value over $(0, 1)$ at $p = \frac{1}{2}$.
	Putting it all together, for $p \in (0, 1)$ we have:
	\[
	h(p) \leq \sqrt{p (1 - p)} \cdot \sqrt{f(1 / 2)} = 2 \ln (2) \cdot \sqrt{p (1 - p)} \leq 2 \sqrt{p (1 - p)} \text{\,,}
	\]
	and for $p = 0,1$ there is exact equality, completing the proof.
\end{proof}

\begin{lemma} \label{mf:lem:analytic_func_all_deriv_eq}
	Let $f,g: [0, \infty) \rightarrow \R$ be real analytic functions (see Footnote~\ref{note:analytic_func}) such that $f^{(k)}(0) = g^{(k)}(0)$ for all $k \in \N \cup \{ 0 \}$.
	Then, $f(t) = g(t)$ for all $t \geq 0$.
\end{lemma}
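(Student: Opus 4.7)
The plan is to reduce the statement to a standard identity-theorem argument for real analytic functions. First I would define $h := f - g$, which is analytic on $[0,\infty)$ because analytic functions are closed under subtraction, and observe that the hypothesis gives $h^{(k)}(0) = 0$ for every $k \in \N \cup \{0\}$. Since $h$ is analytic at $0$, there exists $\rho > 0$ such that $h$ equals its Taylor series about $0$ on $[0,\rho)$; with all Taylor coefficients vanishing, this immediately yields $h \equiv 0$ on $[0,\rho)$.

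Next I would propagate this vanishing to the entire ray $[0,\infty)$ by a connectedness argument. Define
\[
S := \{ t \geq 0 : h \equiv 0 \text{ on some neighborhood of } t \text{ in } [0,\infty) \}.
\]
By the previous paragraph $S \neq \emptyset$, and $S$ is open in $[0,\infty)$ by construction. I would then show $S$ is also closed in $[0,\infty)$: if $t_\star$ is a limit point of $S$, continuity of each derivative of $h$ forces $h^{(k)}(t_\star) = 0$ for all $k \geq 0$ (since each $h^{(k)}$ is identically zero on a sequence accumulating at $t_\star$), and analyticity of $h$ at $t_\star$ then gives a neighborhood on which $h$ coincides with its (identically zero) Taylor series about $t_\star$. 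Since $[0,\infty)$ is connected, $S = [0,\infty)$, so $h \equiv 0$ and $f = g$ on $[0,\infty)$.

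The only mild subtlety, and the step I would be most careful about, is the boundary behavior at $t = 0$: real analyticity on $[0,\infty)$ should be interpreted as the existence of an analytic extension to an open neighborhood of every point of $[0,\infty)$ in $\R$, so that convergence of the Taylor series about $0$ on a full interval $(-\rho,\rho)$ (and in particular on $[0,\rho)$) is automatic. Apart from this mild bookkeeping, the argument is routine and does not require any machinery beyond the identity theorem for one-variable real analytic functions.
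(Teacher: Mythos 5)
Your proposal is correct and follows essentially the same route as the paper: both define $h := f - g$, note that all derivatives of $h$ vanish at $0$, and conclude $h \equiv 0$ by the identity theorem for real analytic functions on a connected domain. The only difference is that the paper invokes this last fact as known, whereas you spell out its standard open-closed proof; your remark on interpreting analyticity at the boundary point $t=0$ is a sensible clarification but does not change the argument.
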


\begin{proof}
	Define the function $h(t) := f(t) - g(t)$.
	Since analytic functions are closed under subtraction, $h (\cdot)$ is analytic as well.
	An analytic function with all zero derivatives at a point is constant on the corresponding connected component.
	Noticing that $h^{(k)}(0) = 0$ for all $k \in \N \cup \{ 0 \}$, we may conclude that $h(t) = 0$ and $f(t) = g(t)$ for all $t \geq 0$.
\end{proof}

\begin{lemma} \label{mf:lem:A_TBA_trace_bound}
	Let $\Abf, \Bbf \in \R^{D \times D}$, and suppose $\Bbf$ is positive semidefinite.
	Then,
	\[
	\Tr (\Abf^\top \Bbf\Abf) \geq \lambda_1 (\Bbf) \cdot \sigma_D (\Abf)^2 \text{\,.}	
	\]
\end{lemma}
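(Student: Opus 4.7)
My plan is to use the spectral decomposition of $\Bbf$ together with a cyclic rearrangement of the trace, and then a standard singular-value lower bound on the remaining term.

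First I would exploit the fact that $\Bbf$ is symmetric positive semidefinite to write its spectral decomposition as $\Bbf = \sum_{i=1}^{D} \lambda_i(\Bbf)\, \ubf_i \ubf_i^\top$, where $\{\ubf_i\}_{i=1}^{D}$ is an orthonormal basis of eigenvectors with corresponding eigenvalues $\lambda_1(\Bbf) \geq \cdots \geq \lambda_D(\Bbf) \geq 0$. Using the cyclic property of the trace, I would then rewrite
\[
\Tr(\Abf^\top \Bbf \Abf) \;=\; \Tr(\Bbf\, \Abf \Abf^\top) \;=\; \sum_{i=1}^{D} \lambda_i(\Bbf)\, \ubf_i^\top \Abf \Abf^\top \ubf_i \;=\; \sum_{i=1}^{D} \lambda_i(\Bbf)\, \bigl\|\Abf^\top \ubf_i\bigr\|_2^2.
\]

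Next, since every $\lambda_i(\Bbf) \geq 0$ and every $\|\Abf^\top \ubf_i\|_2^2 \geq 0$, I can drop all summands except the first to obtain
\[
\Tr(\Abf^\top \Bbf \Abf) \;\geq\; \lambda_1(\Bbf) \cdot \bigl\|\Abf^\top \ubf_1\bigr\|_2^2.
\]
Finally I would lower bound $\|\Abf^\top \ubf_1\|_2^2$ by the smallest singular value of $\Abf$ squared. This follows from the variational characterization $\min_{\|\vbf\|_2 = 1} \|\Abf^\top \vbf\|_2^2 = \lambda_D(\Abf \Abf^\top) = \sigma_D(\Abf)^2$, applied to the unit vector $\ubf_1$. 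Combining the two inequalities yields the desired bound $\Tr(\Abf^\top \Bbf \Abf) \geq \lambda_1(\Bbf) \cdot \sigma_D(\Abf)^2$.

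There is no real obstacle here — the statement is a short linear-algebraic fact and the only subtlety is remembering that nonnegativity of all $\lambda_i(\Bbf)$ is what allows us to discard the lower-order eigenvalue terms while still producing a valid lower bound, and that $\sigma_D(\Abf)^2$ is indeed the minimum eigenvalue of $\Abf\Abf^\top$ viewed as a $D \times D$ matrix (so the Rayleigh-quotient style bound applies to the unit eigenvector $\ubf_1$).
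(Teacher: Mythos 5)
Your proof is correct, and it takes a slightly different (and arguably cleaner) route than the paper's. The paper first lower-bounds the trace of the positive semidefinite matrix $\Abf^\top \Bbf \Abf$ by its largest eigenvalue, then evaluates the Rayleigh quotient at the carefully chosen unit vector $\ybf = \Vbf\Ubf^\top\obf_1$ built from the SVD $\Abf = \Ubf\Sigma\Vbf^\top$ and the top eigenvector $\obf_1$ of $\Bbf$, and finally bounds the resulting quadratic form $\obf_1^\top\Ubf\Sigma\Ubf^\top\obf_1$ from below by $\sigma_D(\Abf)$. You instead use the cyclic property to write $\Tr(\Abf^\top\Bbf\Abf) = \sum_i \lambda_i(\Bbf)\,\|\Abf^\top\ubf_i\|_2^2$, discard the nonnegative lower-order terms, and apply the variational characterization $\min_{\|\vbf\|=1}\|\Abf^\top\vbf\|_2^2 = \sigma_D(\Abf)^2$ to the unit vector $\ubf_1$. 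This sidesteps both the SVD of $\Abf$ and the ad hoc test-vector construction, making the argument more transparent while yielding exactly the same bound; the paper's approach offers no additional generality here.
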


\begin{proof}
	The matrix $\Abf^\top \Bbf\Abf$ is positive semidefinite since for all $\ybf \in \R^D$ we have:
	\[
	\ybf^\top  \Abf^\top \Bbf\Abf \ybf = (\Abf \ybf )^\top \Bbf(\Abf \ybf ) \geq 0 \text{\,.}
	\]
	Therefore, $\Tr (\Abf^\top \Bbf\Abf) \geq \lambda_{1} (\Abf^\top \Bbf\Abf)$. 
	Let $\Bbf = \Obf \Dbf \Obf^\top$ be an orthogonal eigenvalue decomposition of $\Bbf$, \ie~$\Obf \in \R^{D \times D}$ is an orthogonal matrix with columns $\{\obf_i\}_{i=1}^D$ and $\Dbf \in \R^{D \times D}$ is diagonal holding the non-negative eigenvalues of $\Bbf$.
	Additionally, let $\Abf = \Ubf \Sigma \Vbf^\top $ be a singular value decomposition of $\Abf$, where $\Ubf, \Vbf \in \R^{D \times D}$ are orthogonal matrices, and $\Sigma \in \R^{D \times D}_{\geq 0}$ is diagonal holding the singular values of $\Abf$.
	For any unit vector (with respect to the Euclidean norm) $\ybf \in \R^D$ it holds that:
	\[
	\ybf^\top  \Abf^\top \Bbf\Abf \ybf = \sum\nolimits_{i=1}^D \lambda_i (\Bbf) (\obf_i^\top  \Abf \ybf )^2 \geq \lambda_{1} (\Bbf) (\obf_{1}^\top  \Abf \ybf )^2 \text{\,.}
	\]
	Replacing $\Abf$ with its singular value decomposition and choosing $\ybf = \Vbf \Ubf^\top  \obf_1$:
	\[
	\lambda_{1} (\Bbf) (\obf_{1}^\top  \Abf \ybf )^2  = \lambda_{1} (\Bbf) (\obf_{1}^\top  \Ubf \Sigma \Ubf^\top  \obf_1)^2 \text{\,.}
	\]
	Recalling that for any unit vector the quadratic form of a symmetric matrix is bounded by the maximal and minimal eigenvalues completes the proof:
	\[
	\Tr (\Abf^\top \Bbf \Abf) \geq \lambda_{1} (\Abf^\top \Bbf \Abf) \geq \lambda_{1} (\Bbf) (\obf_{1}^\top  \Ubf \Sigma \Ubf^\top  \obf_1)^2 \geq \lambda_{1} (\Bbf) \cdot \sigma_{d} (\Abf)^2 \text{\,.}
	\]
\end{proof}

\begin{lemma} \label{mf:lem:ode_bounded_in_interval}
	Let $g: [0, \infty) \rightarrow \R$ be a continuously differentiable function, and fix some $t > 0$.
	If $g(t) < g(0)$, then for any $a \in (g(t), g(0)]$ there exists $t_a \in [0, t)$ such that $g ( t_a ) = a$ and $\dot{g} ( t_a ) \leq 0$.
	Similarly, if $g(t) > g(0)$, then for any $a \in [g(0), g(t))$ there exists $t_a \in [0, t)$ such that $g ( t_a ) = a$ and $\dot{g} ( t_a ) \geq 0$.
\end{lemma}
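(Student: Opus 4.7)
The statement is essentially a refinement of the intermediate value theorem: not only must $g$ attain every intermediate value, but there is always a crossing where the derivative already has the ``correct'' sign. I will give the argument for the first case, where $g(t)<g(0)$ and $a\in(g(t),g(0)]$; the second case (with $g(t)>g(0)$) follows by the identical argument applied to $-g$.

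\textbf{Step 1: Select $t_a$ as the last time $g$ is at least $a$ on $[0,t]$.} Define
\[
t_a := \sup\{\,s\in[0,t] : g(s)\ge a\,\}.
\]
Since $g(0)\ge a$, the set is nonempty, so $t_a$ is well defined. Because $g$ is continuous and $g(t)<a$, there exists $\epsilon>0$ with $g(s)<a$ for all $s\in(t-\epsilon,t]$, hence $t_a\le t-\epsilon<t$, giving $t_a\in[0,t)$ as required.

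\textbf{Step 2: Verify $g(t_a)=a$.} By the definition of supremum, for every $s\in(t_a,t]$ we have $g(s)<a$, so continuity yields $\lim_{s\to t_a^+}g(s)\le a$, meaning $g(t_a)\le a$. If $t_a>0$, there is a sequence $s_n\uparrow t_a$ with $g(s_n)\ge a$, and continuity gives $g(t_a)\ge a$. If $t_a=0$, then $g(t_a)=g(0)\ge a$ by hypothesis. In both cases $g(t_a)=a$.

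\textbf{Step 3: Deduce $\dot g(t_a)\le 0$ from one-sided behaviour.} For $s\in(t_a,t]$ we have $g(s)<a=g(t_a)$, hence each difference quotient $(g(s)-g(t_a))/(s-t_a)$ is strictly negative. Passing to the limit $s\to t_a^+$, which exists because $g$ is continuously differentiable, yields
\[
\dot g(t_a)=\lim_{s\to t_a^+}\frac{g(s)-g(t_a)}{s-t_a}\le 0.
\]
This concludes the first case. For the case $g(t)>g(0)$ and $a\in[g(0),g(t))$, the same argument applied to $-g$ (or, equivalently, taking $t_a:=\sup\{s\in[0,t]:g(s)\le a\}$) produces a $t_a\in[0,t)$ with $g(t_a)=a$ and $\dot g(t_a)\ge 0$.

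\textbf{Expected obstacles.} There is no substantive obstacle; the only care required is in the edge case $t_a=0$, where Step~2 must rely on the hypothesis $a\le g(0)$ rather than on a left-limit, and in Step~3 where one must ensure the one-sided derivative from the right coincides with $\dot g(t_a)$ (guaranteed by continuous differentiability on $[0,\infty)$ interpreted via one-sided derivatives at the boundary).
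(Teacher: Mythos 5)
Your proof is correct and takes essentially the same approach as the paper: both select the last time before $t$ at which $g$ attains the value $a$ and deduce the sign of $\dot g$ there from the fact that $g$ stays on one side of $a$ afterwards. A minor difference is that you argue directly via one-sided difference quotients (using only differentiability at $t_a$), whereas the paper argues by contradiction using continuity of $\dot g$ to get local monotonicity; your variant is marginally more elementary but the underlying idea is identical.
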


\begin{proof}
	Let $t > 0$ be such that $g(t) < g(0)$, and fix some $a \in (g(t), g(0)]$.
	Define $t_a := \max \{t' : t'~\leq~t \text{ and } g(t') = a \}$.
	Continuity of~$g (\cdot)$, along with the intermediate value theorem, imply that $t_a$ is well defined (maximum of a closed non-empty set bounded from above).
	Assume by contradiction that $\dot{g}(t_a) > 0$.
	Then, $g (\cdot)$~is monotonically increasing on some neighborhood of $t_a$.
	Thus, by the intermediate value theorem, there exists $t' \in (t_a , t)$ such that $g ( t' ) = a$, in contradiction to the definition of $t_a$.
	An identical argument establishes the analogous result for the case $g(t) > g(0)$.
\end{proof}

\begin{lemma} \label{mf:lem:bounded_integral_and_der_convergence}
	Let $g: [0, \infty) \rightarrow \R$ be a non-negative differentiable function. 
	Assume there exist constants $a,b > 0$ such that $\int\nolimits_{t'=0}^{t} g(t') dt' \leq a$ and $\dot{g}(t) \leq b$ for all $t \geq 0$.
	Then, $\lim_{t \rightarrow \infty} g(t) = 0$.
\end{lemma}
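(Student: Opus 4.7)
The plan is to argue by contradiction: if $g(t)$ does not tend to $0$, then the Lipschitz-like control $\dot g \leq b$ forces $g$ to stay above a positive level on intervals of definite length, producing arbitrarily large contributions to the integral and violating the uniform bound by $a$.

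First I would negate the conclusion. Since $g \geq 0$, failure of $\lim_{t\to\infty} g(t) = 0$ means $\limsup_{t\to\infty} g(t) > 0$, so there exist $\epsilon > 0$ and a sequence $t_n \to \infty$ with $g(t_n) \geq \epsilon$ for all $n$. By passing to a subsequence (while relabeling) I may assume the $t_n$ are well separated, say $t_{n+1} - t_n > \epsilon/b$ for every $n$; this will guarantee the intervals constructed below are pairwise disjoint.

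Next I would use the upper bound on $\dot g$ to propagate $g(t_n) \geq \epsilon$ to a neighborhood to the left of $t_n$. For any $t \in [t_n - \epsilon/(2b),\, t_n]$, the fundamental theorem of calculus gives
\[
g(t_n) - g(t) \;=\; \int_{t}^{t_n} \dot g(s)\, ds \;\leq\; b\,(t_n - t) \;\leq\; \frac{\epsilon}{2},
\]
so $g(t) \geq g(t_n) - \epsilon/2 \geq \epsilon/2$. Consequently
\[
\int_{t_n - \epsilon/(2b)}^{t_n} g(t)\, dt \;\geq\; \frac{\epsilon}{2}\cdot\frac{\epsilon}{2b} \;=\; \frac{\epsilon^{2}}{4b}.
\]

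Finally I would aggregate: since the intervals $I_n := [t_n - \epsilon/(2b), t_n]$ are disjoint (by the spacing $t_{n+1}-t_n > \epsilon/b$) and all contained in $[0,t_N]$ for $n \leq N$, non-negativity of $g$ gives
\[
\int_{0}^{t_N} g(t)\, dt \;\geq\; \sum_{n=1}^{N} \int_{I_n} g(t)\, dt \;\geq\; N\cdot\frac{\epsilon^{2}}{4b},
\]
which tends to $\infty$ as $N \to \infty$, contradicting the hypothesis $\int_0^t g \leq a$. Hence $\lim_{t\to\infty} g(t) = 0$. The only minor subtlety, and the only place I would be careful, is ensuring the left endpoints $t_n - \epsilon/(2b)$ are non-negative and that the intervals are disjoint; both are handled by discarding finitely many initial terms of the sequence and thinning it as above.
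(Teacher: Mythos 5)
Your proof is correct and follows essentially the same route as the paper's: argue by contradiction, use the bound $\dot g \leq b$ to show $g$ stays above $\epsilon/2$ on disjoint intervals of length at least $\epsilon/(2b)$, and sum their contributions to contradict the uniform bound on the integral. The only cosmetic difference is that you propagate the lower bound backward from points where $g \geq \epsilon$ (and you could replace the appeal to the fundamental theorem of calculus with the mean value theorem, since $\dot g$ need not be assumed integrable), whereas the paper locates transition intervals from $\epsilon/2$ to $\epsilon$ and lower-bounds their length via the mean value theorem.
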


\begin{proof}
	By way of contradiction let us assume that $g(t)$ does not converge to $0$.
	Let $\epsilon > 0$ be such that for all $M > 0$ there exists $t > M$ with $g(t) > \epsilon$.
	
	We claim that for all $M, \epsilon' > 0$ there exists $t > M$ such that $g(t) < \epsilon'$. 
	Otherwise, we have a contradiction to the bound on the integral of $g( \cdot )$. Combined with our assumption, this means that for all $M > 0$ we can find an interval $[t_1, t_2]$, with $t_1 > M$, where $g(t)$ transitions from $\frac{\epsilon}{2}$ to $\epsilon$.
	We now examine one such interval. Formally, for $t_0$ with $g(t_0) < \frac{\epsilon}{2}$, we define:
	\[
	t_2 := \min \left \{t | t \geq t_0 \text{ and } g(t) = \epsilon \right \} \quad , \quad t_1 := \max \left \{t | t \leq t_2 \text{ and } g(t) = \epsilon / 2 \right \} \text{\,.}
	\]
	Due to the fact that $g ( \cdot )$ is continuous, $t_2$ and $t_1$ are well defined as they are the minimum and maximum, respectively, of closed non-empty sets bounded from below and above, respectively.
	Furthermore, notice that $t_0 < t_1 < t_2$.
	From the mean value theorem and the bound on the derivative of $g ( \cdot )$ we have $t_2 - t_1 \geq \epsilon / 2b$.
	Since $g(t) \geq \epsilon / 2$ over the interval $[t_1, t_2]$, this gives us $\int\nolimits_{t'=t_1}^{t_2} g(t') dt' \geq \epsilon^2 / 4b$.
	Recall there are infinitely many such occurrences, implying that $\int\nolimits_{t'=0}^{\infty} g(t') dt' = \infty$, in contradiction to the bound on the integral.
\end{proof}

\subsection{Proof of Proposition~\ref{mf:prop:sol_set_norms}}
\label{mf:app:proofs:sol_set_norms}

For a quasi-norm $\norm{\cdot}$, the weakened triangle inequality (see Footnote~\ref{note:qnorm}) implies that there exists a constant $c_{\norm{\cdot}} \geq 1$ for which
\be
\begin{split}
	\norm{W} & \geq \frac{1}{c_{\norm{\cdot}}} \norm{(\Wbf)_{1,1} \ebf_1 \ebf_1^\top} - \norm{W - (\Wbf)_{1,1} \ebf_1 \ebf_1^\top} \\
	& = \abs{(\Wbf)_{1,1}} \frac{ \norm{\ebf_1 \ebf_1^\top}}{c_{\norm{\cdot}}} - \norm{\ebf_2 \ebf_1^\top + \ebf_1 \ebf_3^\top} 
	\text{\,,}
\end{split}	
\label{mf:eqn:triangle_eq_W}
\ee
for any $\Wbf \in \S$.
Fix some $\epsilon > 0$ and define $M_{\norm{\cdot} , \epsilon} := \{(\Wbf)_{1,1} \in \R : \norm{\Wbf} \leq \inf_{\Wbf' \in \S} \norm{\Wbf'} + \epsilon , ~~ \Wbf \in \S \}$, the set of $(\Wbf)_{1,1}$ values corresponding to $\epsilon$-minimizers of $\norm{\cdot}$.
The first part of the proposition thus boils down to showing $M_{\norm{\cdot} , \epsilon}$ is bounded.
By Equation~\eqref{mf:eqn:triangle_eq_W}, there exist a $C > 0$ such that $\abs{(\Wbf)_{1,1}} > C$ means $\norm{W} > \inf_{W' \in \S} \norm{W'} + \epsilon$.
Hence, $M_{\norm{\cdot} , \epsilon} \subset I_{\norm{\cdot}, \epsilon} := [-C, C]$.

If in addition $\norm{\cdot}$ is a Schatten-$p$ quasi-norm for $p \in (0,\infty]$, we now show that $W$ is its minimizer over $\S$ if and only if $(\Wbf)_{1,1} = 0$.
Let $\Wbf_x \in \S$ denote the solution matrix with $(\Wbf_x)_{1,1} = x$ for $x \in \R$.
The singular values of an arbitrary such $\Wbf_x$ are:
\be
\left \{ \sigma_1 (\Wbf_x), \sigma_2 (\Wbf_x) \right \} = \left \{ \abs*{ \big (x + \sqrt{x^2 + 4} \big ) / 2 } , \abs*{ \big ( x - \sqrt{x^2 + 4} \big ) / 2 } \right \}
\text{\,.}
\label{mf:eqn:sol_Wx_sing_values}
\ee
Starting with $p = \infty$, the corresponding norm is the spectral norm $\norm{\Wbf_x}_{S_\infty} := \sigma_1(\Wbf_x)$.
When $x = 0$, we have that $\sigma_1(\Wbf_0) = 1$. 
If $x > 0$, then $\sigma_1(\Wbf_x) = ( x + \sqrt{x^2 + 4} ) ~ / ~ 2 > 1$.
Similarly, if $x < 0$, then $\sigma_1(\Wbf_x) = ( -x + \sqrt{x^2 + 4} ) ~ / ~ 2 > 1$. Therefore, $\norm{\Wbf_x}_{S_\infty}$ attains its minimal value of $1$ if and only if $x = 0$.

Moving to the case of $p \in (0, \infty)$, the corresponding quasi-norm is $\norm{\Wbf_x}_{S_p} := (\sigma_1(\Wbf_x)^p + \sigma_2(\Wbf_x)^p)^{\frac{1}{p}}$.
We now examine $\norm{\Wbf_x}_{S_p}^p$ for $x > 0$:
\[
\norm{\Wbf_x}_{S_p}^p = \left (\frac{x + \sqrt{x^2 + 4}}{2} \right)^p + \left (\frac{-x + \sqrt{x^2 + 4}}{2} \right )^p
\text{\,.}
\]
Differentiating with respect to $x$, we arrive at:
\[
\begin{split}
	& \frac{p}{2^p} \left ( \left (x + \sqrt{x^2 + 4} \right )^{p-1} \left (1 + \frac{x}{\sqrt{x^2 + 4}} \right ) + \left (-x + \sqrt{x^2 + 4} \right )^{p-1} \left (-1 + \frac{x}{\sqrt{x^2 + 4}} \right ) \right ) \\
	& > \frac{p}{2^p} \left ( \left ( x + \sqrt{x^2 + 4} \right )^{p-1} - \left( -x + \sqrt{x^2 + 4} \right )^{p-1} \right ) \\
	& > 0
	\text{\,,}
\end{split}
\]
where in the first transition we used the fact that both $\big ( x + \sqrt{x^2 + 4} \big )^{p-1} > 0$ and $\big ( -x + \sqrt{x^2 + 4} \big )^{p-1} > 0$ (as well as $x > 0$). It then directly follows that $\norm{\Wbf_x}_{S_p}^p$ and thus $\norm{\Wbf_x}_{S_p}$ are monotonically increasing with respect to $x$ on $(0, \infty)$.

Similar arguments show that when $x < 0$ the Schatten-$p$ quasi-norm of $\Wbf_x$ is monotonically decreasing with respect to $x$, implying that $\norm{\Wbf_x}_{S_p}$ is minimized if and only if $x = 0$.\footnote{
	The claim relies on the fact that the Schatten-$p$ quasi-norm of $\Wbf_x$ is continuous with respect to $x$ for all $p \in (0, \infty)$.
	We note, however, that quasi-norms in general may be discontinuous.
}
\qed

\subsection{Proof of Proposition~\ref{mf:prop:sol_set_rank}}
\label{mf:app:proofs:sol_set_rank}

As in the proof of Proposition~\ref{mf:prop:sol_set_norms} (Appendix~\ref{mf:app:proofs:sol_set_norms}), we denote by $\Wbf_x \in \S$ the solution matrix with $(\Wbf_x)_{1,1} = x$.
We begin by analyzing the behavior of $\sigma_1(\Wbf_x)$ and $\sigma_2(\Wbf_x)$ with respect to $x$.
When $x = 0$ the singular values are simply $\sigma_1 (\Wbf_0) = \sigma_2 (\Wbf_0) = 1$.
When $x$ is positive, the singular values may be written as:
\[
\sigma_1(\Wbf_x) = \frac{x + \sqrt{x^2 + 4}}{2} \quad , \quad \sigma_2(\Wbf_x) = \frac{-x + \sqrt{x^2 + 4}}{2}
\text{\,.}
\]
Taking the derivative with respect to $x$, we arrive at:
\[
\frac{d}{dx} \sigma_1(\Wbf_x) = \frac{1}{2} +\frac{x}{2 \sqrt{x^2 + 4}}   \quad , \quad \frac{d}{dx} \sigma_2 (\Wbf_x) = -\frac{1}{2} +\frac{x}{2 \sqrt{x^2 + 4}}
\text{\,.}
\]
Since $x > 0$, we have that $d/dx~\sigma_1 (\Wbf_x) > 0$ and $d/dx~\sigma_2 (\Wbf_x) < 0$.
In other words, $\sigma_1(\Wbf_x)$ is monotonically increasing, while $\sigma_2(\Wbf_x)$ is monotonically decreasing, when $x > 0$.
It can easily be verified that $\sigma_1(\Wbf_x)$ and $\sigma_2(\Wbf_x)$ are even functions of $x$, \ie~$\sigma_1(\Wbf_x) = \sigma_1(\Wbf_{-x})$ and $\sigma_2(\Wbf_x) = \sigma_2(\Wbf_{-x})$.
It then follows that $\sigma_1(\Wbf_x)$ is monotonically decreasing (conversely $\sigma_2(\Wbf_x)$ is monotonically increasing) when $x < 0$. Noticing that $\lim_{x \rightarrow \infty} \sigma_1(\Wbf_x) = \infty$ and $\lim_{x \rightarrow \infty} \sigma_2(\Wbf_x) = 0$ (accordingly $\lim_{x \rightarrow -\infty} \sigma_1(\Wbf_x) = \infty$ and $\lim_{x \rightarrow -\infty} \sigma_2(\Wbf_x) = 0$ ), we have a characterization of the behavior of $\sigma_1(\Wbf_x)$ and $\sigma_2(\Wbf_x)$.

We are now in a position to obtain the desired results for effective and infimal ranks.
The effective rank (Definition~\ref{def:erank}) of $\Wbf_x$ can be written as
\[
\erank ( \Wbf_x ) = \exp \left ( H \left ( \frac{\sigma_1 (\Wbf_x)}{\sigma_1 (\Wbf_x) + \sigma_2 (\Wbf_x)}, \frac{\sigma_2 (\Wbf_x)}{\sigma_1 (\Wbf_x) + \sigma_2 (\Wbf_x)} \right ) \right )
\text{\,.}
\]
The binary entropy function is bounded by $\ln (2)$, hence, the effective rank over $\S$ is bounded by $2$.
This upper bound is attained at $x = 0$.
According to the singular values analysis, when $\abs{x} \rightarrow \infty$ we have that $\rho_1 (\Wbf_x)$ monotonically increases towards $1$, starting from the value $\rho_1 (\Wbf_0) = \frac{1}{2}$.
Noticing that this implies the entropy function and effective rank monotonically decrease towards $0$ and $1$, respectively, completes the effective rank analysis.

Next, we analyze the infimal rank of $\S$ and the distance of $\Wbf_x$ from that infimal rank.
The distance of $\Wbf_x$ from $\M_1$ is $\frodist ( \Wbf_x , \M_1 ) = \sigma_2(\Wbf_x)$.
Since $\lim_{x \rightarrow \infty} \sigma_2(\Wbf_x) = 0$, we have $\frodist (\S , \M_1) = 0$.
Clearly $\frodist (\S, \M_0) > 0$, leading to the conclusion that the infimal rank of $\S$ is $1$.
Finally, the analysis of $\sigma_2(\Wbf_x)$ directly implies that the distance of $\Wbf_x$ from the infimal rank of $\S$ is maximized when $x = 0$, monotonically tending to $0$ as $\abs{x} \rightarrow \infty$.
\qed

\subsection{Proof of Theorem~\ref{mf:thm:norms_up_finite}} \label{mf:app:proofs:norms_up_finite}

In the following, as stated in Appendix~\ref{mf:app:proofs:notation}, for results that hold for all $t \geq 0$ or when clear from the context, we omit the time index $t$.
Furthermore, we denote the entries of the end matrix $\matrixend$ by $\{w_{i , j}\}_{i,j \in [2]}$.

We begin by deriving loss-dependent bounds for $\abs{w_{1,1}}, \sigma_1 (\matrixend)$, and $\sigma_2 (\matrixend)$.
Writing the loss explicitly:
\[
\mfendloss (\matrixend) = \frac{1}{2} \left [ (w_{1,2} - 1)^2 + (w_{2,1} - 1)^2 + w_{2,2}^2 \right ]
\text{\,,}
\]
we can upper bound each of the non-negative terms separately. Multiplying by $2$ and taking the square root of both sides yields:
\be
\abs{w_{2,2}} \leq \sqrt{2 \mfendloss (\matrixend)}  \quad,\quad \abs{w_{1,2} - 1} \leq \sqrt{2 \mfendloss (\matrixend)} \quad,\quad \abs{w_{2,1} - 1} \leq \sqrt{2 \mfendloss (\matrixend)}
\text{\,.}
\label{mf:eqn:entries_bound}
\ee
The following lemma characterizes the relation between $|w_{1,1}|$ and the loss.
\begin{lemma} \label{mf:lem:w_11_lower_bound}
	Suppose $\mfendloss (\matrixend) < \frac{1}{2}$.
	Then:
	\[
	\abs{w_{1,1}} > \frac{(1 - \sqrt{2 \mfendloss (\matrixend)})^2}{\sqrt{2 \mfendloss (\matrixend)}} = \frac{1}{\sqrt{2 \mfendloss (\matrixend)}} - 2 + \sqrt{2 \mfendloss (\matrixend)}
	\text{\,.}
	\]
\end{lemma}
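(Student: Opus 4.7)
\textbf{Proof plan for Lemma~\ref{mf:lem:w_11_lower_bound}.}

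The plan is to combine the sign-preservation of $\det(\matrixend)$ with the entrywise bounds already derived in Equation~\eqref{mf:eqn:entries_bound}, and then read off a lower bound on $|w_{1,1}|$ by isolating it in the determinant expression.

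First, I would invoke Lemma~\ref{mf:lem:det_does_not_change_sign} together with the theorem's hypothesis $\det(\matrixend(0)) > 0$ to conclude that $\det(\matrixend(t)) > 0$ for all $t \geq 0$. For a $2 \times 2$ matrix this gives the strict inequality
\[
w_{1,1} w_{2,2} > w_{1,2} w_{2,1}
\text{\,.}
\]
Next, I would use Equation~\eqref{mf:eqn:entries_bound} together with the assumption $\mfendloss(\matrixend) < 1/2$ (which ensures $\sqrt{2\mfendloss(\matrixend)} < 1$) to get that $w_{1,2}, w_{2,1} \geq 1 - \sqrt{2\mfendloss(\matrixend)} > 0$, so that
\[
w_{1,2} w_{2,1} \geq \bigl(1 - \sqrt{2\mfendloss(\matrixend)}\bigr)^2 > 0
\text{\,.}
\]
Chaining the two displays, $w_{1,1} w_{2,2} > (1 - \sqrt{2\mfendloss(\matrixend)})^2 > 0$. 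In particular $w_{2,2} \neq 0$, and $w_{1,1} w_{2,2}$ is positive, so it equals $|w_{1,1}| \cdot |w_{2,2}|$.

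Finally, I would divide by $|w_{2,2}|$ and apply the bound $|w_{2,2}| \leq \sqrt{2\mfendloss(\matrixend)}$ from Equation~\eqref{mf:eqn:entries_bound} to obtain
\[
|w_{1,1}| > \frac{(1 - \sqrt{2\mfendloss(\matrixend)})^2}{|w_{2,2}|} \geq \frac{(1 - \sqrt{2\mfendloss(\matrixend)})^2}{\sqrt{2\mfendloss(\matrixend)}}
\text{\,,}
\]
which is exactly the claimed inequality (the right-hand side expansion $\frac{1}{\sqrt{2\mfendloss(\matrixend)}} - 2 + \sqrt{2\mfendloss(\matrixend)}$ follows by elementary algebra). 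There is no real obstacle here; the only subtle point is verifying that the inequality $w_{1,2} w_{2,1} \geq (1 - \sqrt{2\mfendloss(\matrixend)})^2$ is valid as a \emph{lower} bound on the product (requiring both factors to be positive), which the assumption $\mfendloss(\matrixend) < 1/2$ guarantees.
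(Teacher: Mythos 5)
Your proposal is correct and follows essentially the same route as the paper's proof: both use the sign-preservation of $\det(\matrixend)$ (Lemma~\ref{mf:lem:det_does_not_change_sign}) to get $w_{1,1}w_{2,2} > w_{1,2}w_{2,1}$, lower bound $w_{1,2}w_{2,1}$ via Equation~\eqref{mf:eqn:entries_bound} under the assumption $\mfendloss(\matrixend) < 1/2$, and divide by $|w_{2,2}| \leq \sqrt{2\mfendloss(\matrixend)}$. Your write-up is, if anything, slightly more explicit than the paper's about why $w_{1,1}w_{2,2} > 0$ and hence equals $|w_{1,1}|\cdot|w_{2,2}|$.
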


\begin{proof}
	From Lemma~\ref{mf:lem:det_does_not_change_sign}, the determinant of $\matrixend$ does not change signs and remains positive, \ie:
	\be
	\det (\matrixend) = w_{1,1}w_{2,2} - w_{1,2}w_{2,1} > 0
	\text{\,.}
	\label{mf:eqn:pos_det_inequality}
	\ee
	Under the assumption that $\mfendloss (\matrixend) < \frac{1}{2}$, both $w_{1,2}$ and $w_{2,1}$ are positive and lie inside the open interval $(0,2)$.
	Since the determinant is positive, $w_{2,2} \neq 0$ and $w_{1,1}w_{2,2} > 0$ must hold.
	Rearranging Equation~\eqref{mf:eqn:pos_det_inequality}, we may therefore write $\abs{w_{1,1} w_{2,2}} > w_{1,2}w_{2,1}$.
	Dividing both sides by $|w_{2,2}|$ and applying the bounds from Equation~\eqref{mf:eqn:entries_bound} completes the proof:
	\[
	\abs{w_{1,1}} > \frac{(1 - \sqrt{2 \mfendloss (\matrixend)})^2}{\sqrt{2 \mfendloss (\matrixend)}} = \frac{1}{\sqrt{2 \mfendloss (\matrixend)}} - 2 + \sqrt{2 \mfendloss (\matrixend)}
	\text{\,.}
	\]
\end{proof}
An immediate consequence of the lemma above is that decreasing the loss towards zero drives $|w_{1,1}|$ towards infinity.

With this bound in hand, Lemma~\ref{mf:lem:W_singular_values_bound} below establishes bounds on the singular values of $\matrixend$.
In turn, they will allow us to obtain the necessary results for effective rank (Definition~\ref{def:erank}) and distance from infimal rank of $\S$ (Definition~\ref{def:irank_dist}).
\begin{lemma} \label{mf:lem:W_singular_values_bound}
	The singular values of $\matrixend$ fulfill:
	\be
	\sigma_1(\matrixend) \geq \abs{w_{1,1}} - \sqrt{2 \mfendloss (\matrixend)} \quad , \quad \sigma_2(\matrixend) \leq 3\sqrt{2 \mfendloss (\matrixend)} \text{\,.}
	\label{mf:eqn:W_sing_val_bounds_no_assump}
	\ee
	Furthermore, if $\mfendloss (\matrixend) < \frac{1}{2}$, then:
	\be
	\sigma_1 (\matrixend) \geq \frac{1}{\sqrt{2 \mfendloss (\matrixend)}} - 2  \text{\,.}
	\label{mf:eqn:W_sing_val_bounds_assump}
	\ee
\end{lemma}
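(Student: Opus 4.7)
The plan is to compare $\matrixend$ to the reference matrix $\Bbf := \begin{pmatrix} w_{1,1} & 1 \\ 1 & 0 \end{pmatrix}$, which lies in the solution set $\S$. This particular choice is the key: since $\matrixend - \Bbf = \begin{pmatrix} 0 & w_{1,2} - 1 \\ w_{2,1} - 1 & w_{2,2} \end{pmatrix}$, its Frobenius norm is \emph{exactly} $\sqrt{2\mfendloss(\matrixend)}$ by the definition of the loss (Equation~\eqref{mf:eq:loss}), so in particular $\norm{\matrixend - \Bbf}_{S_\infty} \leq \sqrt{2\mfendloss(\matrixend)}$. Moreover, the singular values of $\Bbf$ have already been computed in Equation~\eqref{mf:eqn:sol_Wx_sing_values} with $x = w_{1,1}$:
\[
\sigma_1(\Bbf) = \tfrac{1}{2}\big(|w_{1,1}| + \sqrt{w_{1,1}^2 + 4}\big), \qquad \sigma_2(\Bbf) = \tfrac{1}{2}\big(-|w_{1,1}| + \sqrt{w_{1,1}^2 + 4}\big) = \tfrac{2}{|w_{1,1}| + \sqrt{w_{1,1}^2+4}}.
\]
With these two ingredients, Weyl's perturbation inequality for singular values, $|\sigma_i(\matrixend) - \sigma_i(\Bbf)| \leq \norm{\matrixend - \Bbf}_{S_\infty}$ for $i = 1, 2$, will do the heavy lifting.

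For the lower bound on $\sigma_1(\matrixend)$ in Equation~\eqref{mf:eqn:W_sing_val_bounds_no_assump}, I would simply observe that $\sigma_1(\Bbf) \geq |w_{1,1}|$ (evident from the formula above), and apply Weyl to conclude $\sigma_1(\matrixend) \geq \sigma_1(\Bbf) - \sqrt{2\mfendloss} \geq |w_{1,1}| - \sqrt{2\mfendloss}$. The second bound in Equation~\eqref{mf:eqn:W_sing_val_bounds_assump} then follows by plugging in the lower bound $|w_{1,1}| > 1/\sqrt{2\mfendloss} - 2 + \sqrt{2\mfendloss}$ provided by Lemma~\ref{mf:lem:w_11_lower_bound} under the assumption $\mfendloss < 1/2$, after which the $\sqrt{2\mfendloss}$ terms cancel.

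The upper bound on $\sigma_2(\matrixend)$ is the only step requiring a case split, since the bound $3\sqrt{2\mfendloss}$ is stated without the assumption $\mfendloss < 1/2$. When $\mfendloss \geq 1/2$ the estimate is essentially trivial: using the crude bound $\sigma_2(\Bbf) \leq 1$ (which holds since $|w_{1,1}| + \sqrt{w_{1,1}^2+4} \geq 2$) and Weyl gives $\sigma_2(\matrixend) \leq 1 + \sqrt{2\mfendloss} \leq 3\sqrt{2\mfendloss}$, as $\sqrt{2\mfendloss} \geq 1$ in this regime. When $\mfendloss < 1/2$ I would again invoke Lemma~\ref{mf:lem:w_11_lower_bound}, which combined with the elementary inequality $\sqrt{w_{1,1}^2 + 4} \geq 2$ gives $|w_{1,1}| + \sqrt{w_{1,1}^2 + 4} \geq 1/\sqrt{2\mfendloss} + \sqrt{2\mfendloss}$, and hence $\sigma_2(\Bbf) \leq 2\sqrt{2\mfendloss}/(1 + 2\mfendloss) \leq 2\sqrt{2\mfendloss}$. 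Weyl then yields $\sigma_2(\matrixend) \leq 3\sqrt{2\mfendloss}$.

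There is no real conceptual obstacle here; the only mildly subtle point is the upfront choice of the reference matrix $\Bbf$, which is engineered so that $\norm{\matrixend - \Bbf}_{\mathrm{Fro}}^2$ coincides with $2\mfendloss(\matrixend)$ on the nose, allowing Weyl's inequality to translate a bound on the loss into bounds on both singular values via the known singular values of $\Bbf$. Everything else is bookkeeping and the case split for $\sigma_2$ in the high-loss regime.
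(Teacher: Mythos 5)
Your proof is correct and follows essentially the same route as the paper: both compare $\matrixend$ to the reference matrix $\begin{pmatrix} w_{1,1} & 1 \\ 1 & 0 \end{pmatrix}$ in $\S$, note that the Frobenius distance equals $\sqrt{2\mfendloss(\matrixend)}$, and invoke the Weyl-type perturbation bound for singular values together with \lem~\ref{mf:lem:w_11_lower_bound} and the same case split on whether $\mfendloss(\matrixend) < \tfrac{1}{2}$ for the $\sigma_2$ bound. The only cosmetic difference is that the paper works through the eigenvalues of the (symmetric) reference matrix rather than citing \eq~\eqref{mf:eqn:sol_Wx_sing_values} directly.
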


\begin{proof}
	Define $\Wbf_\S := \begin{pmatrix} w_{1,1} & 1 \\ 1 & 0 \end{pmatrix}$, the orthogonal projection of $\matrixend$ onto the solution set $\S$.
	By Corollary 8.6.2 in~\cite{golub2012matrix} we have that:
	\be
	|\sigma_i (\matrixend) - \sigma_i (\Wbf_\S)| \leq \norm{\matrixend - \Wbf_\S}_{Fro} = \sqrt{2 \mfendloss (\matrixend)} \quad ,~ i = 1,2 \text{\,.}
	\label{mf:eqn:W_singular_value_perturbation_bound}
	\ee
	One can easily verify that $\Wbf_\S$ is a symmetric indefinite matrix with eigenvalues
	\[
	\left \{\lambda_{1}(\Wbf_\S) , \lambda_{2}(\Wbf_\S) \right \} = \left \{\left ( w_{1,1} + \sqrt{w_{1,1}^2 + 4} \right ) / ~ 2 ~ , ~ \left ( w_{1,1} - \sqrt{w_{1,1}^2 + 4}\right ) / ~ 2 \right \}
	\text{\,.}
	\]
	Suppose that $w_{1,1} \geq 0$.
	We thus have:
	\[
	\sigma_1(\Wbf_\S) = \max_{i = 1,2} \abs{\lambda_i (\Wbf_\S)} = \frac{w_{1,1} + \sqrt{w_{1,1}^2 + 4}}{2} \geq |w_{1,1}|
	\text{\,,}
	\]
	and
	\[
	\begin{split}
		\sigma_2(\Wbf_\S) &  = \min_{i = 1,2} \abs{\lambda_i (\Wbf_\S)}  \\
		& = \frac{\sqrt{w_{1,1}^2 + 4} - w_{1,1}}{2} \\[1mm]
		& = \frac{2}{\sqrt{w_{1,1}^2 + 4} + w_{1,1}} \\
		& \leq \frac{2}{2 + w_{1,1}}
		\text{\,,}
	\end{split}
	\]
	where in the third transition we made use of the identity $a - b = \frac{a^2 - b^2}{a + b}$ for $a, b \in \R$ such that $a + b \neq 0$.
	If $\mfendloss (\matrixend) \geq \frac{1}{2}$, it holds that $\sigma_2 (\Wbf_\S) \leq 2 / (2 + w_{1,1}) \leq 1 \leq 2\sqrt{2 \mfendloss (\matrixend)}$.
	Otherwise, we may apply the lower bound on $w_{1,1}$ (Lemma~\ref{mf:lem:w_11_lower_bound}) and conclude that $\sigma_2(\Wbf_\S) \leq 2\sqrt{2 \mfendloss (\matrixend)}$ for any loss value.
	Having established that $\sigma_1(\Wbf_\S) \geq |w_{1,1}|$ and $\sigma_2 (\Wbf_\S) \leq 2\sqrt{2 \mfendloss (\matrixend)}$, Equation~\eqref{mf:eqn:W_singular_value_perturbation_bound} completes the proof of Equation~\eqref{mf:eqn:W_sing_val_bounds_no_assump}.
	It remains to see that if $\mfendloss (\matrixend) < \frac{1}{2}$, from the lower bound on $w_{1,1}$ (Lemma~\ref{mf:lem:w_11_lower_bound}), Equation~\eqref{mf:eqn:W_sing_val_bounds_assump} immediately follows.
	
	By similar arguments, Equations~\eqref{mf:eqn:W_sing_val_bounds_no_assump} and~\eqref{mf:eqn:W_sing_val_bounds_assump} hold for $w_{1,1} < 0$ as well.
\end{proof}

\subsubsection{Proof of Equation~\eqref{mf:eq:norms_lb} (Lower Bound for Quasi-Norm)} \label{mf:sec:norm_bound}

We turn to lower bound the quasi-norm of the end matrix.
It holds that:
\be
\begin{split}
	\norm{\matrixend} &\geq \frac{1}{c_{\norm{\cdot}}} \norm{ w_{1,1} \ebf_1 \ebf_1^\top } - \norm{ \matrixend - w_{1,1} \ebf_1 \ebf_1^\top }
	\text{\,,}
	\label{mf:eqn:triangle_inequality_norm_lower_bound}
\end{split}
\ee
where $c_{\norm{\cdot}} \geq 1$ is a constant for which $\norm{\cdot}$ satisfies the weakened triangle inequality (see Footnote~\ref{note:qnorm}). 
We now assume that $\mfendloss (\matrixend) < \frac{1}{2}$. 
Later this assumption will be lifted, providing a bound that holds for all loss values.
Subsequent applications of the weakened triangle inequality, together with homogeneity of $\norm{\cdot}$ and the bounds on the entries of $\matrixend$ (Equation~\eqref{mf:eqn:entries_bound}), give:
\[
\begin{split}
	\norm{\matrixend - w_{1,1} \ebf_1 \ebf_1^\top} & \leq c_{\norm{\cdot}} |w_{2,2}| \norm{\ebf_2 \ebf_2^\top} + c_{\norm{\cdot}}^2 \left ( |w_{2,1}| \norm{\ebf_2 \ebf_1^\top} + |w_{1,2}| \norm{\ebf_1 \ebf_2^\top} \right ) \\
	& \leq c_{\norm{\cdot}} \sqrt{2 \mfendloss( \matrixend)} \norm{\ebf_2 \ebf_2^\top} \\
	& \hspace{5mm} + c_{\norm{\cdot}}^2 \Big ( 1 + \sqrt{2 \mfendloss( \matrixend)} \Big ) \left (\norm{\ebf_2 \ebf_1^\top} + \norm{\ebf_1 \ebf_2^\top} \right ) \\
	& \leq c_{\norm{\cdot}} \norm{\ebf_2 \ebf_2^\top} + 2c_{\norm{\cdot}}^2 \left (\norm{\ebf_2 \ebf_1^\top} + \norm{\ebf_1 \ebf_2^\top} \right ) \\
	& \leq 2c_{\norm{\cdot}}^2 \left (\norm{\ebf_2 \ebf_2^\top} + \norm{\ebf_2 \ebf_1^\top} + \norm{\ebf_1 \ebf_2^\top} \right )
	\text{\,.}
\end{split}
\]
Plugging the inequality above and the lower bound on $|w_{1,1}|$ (Lemma~\ref{mf:lem:w_11_lower_bound}) into Equation~\eqref{mf:eqn:triangle_inequality_norm_lower_bound}, we have:
\[
\begin{split}
	\norm{\matrixend} & \geq \frac{\norm{ \ebf_1 \ebf_1^\top }}{c_{\norm{\cdot}}} \frac{1}{\sqrt{2 \mfendloss (\matrixend)}} - 2\frac{\norm{ \ebf_1 \ebf_1^\top }}{c_{\norm{\cdot}}} - 2c_{\norm{\cdot}}^2 \left (\norm{\ebf_2 \ebf_2^\top} + \norm{\ebf_2 \ebf_1^\top} + \norm{\ebf_1 \ebf_2^\top} \right ) \\
	& \geq \frac{\norm{ \ebf_1 \ebf_1^\top }}{c_{\norm{\cdot}}} \frac{1}{\sqrt{2 \mfendloss (\matrixend)}} - 2c_{\norm{\cdot}}^2 \left (\norm{\ebf_1 \ebf_1^\top} + \norm{\ebf_2 \ebf_2^\top} + \norm{\ebf_2 \ebf_1^\top} + \norm{\ebf_1 \ebf_2^\top} \right )
	\text{\,.}
\end{split}
\]
Since $\norm{\matrixend}$ is trivially lower bounded by zero, defining the constants
\[
a_{\norm{\cdot}} := \frac{ \norm{\ebf_1 \ebf_1^\top} }{ \sqrt{2} c_{\norm{\cdot}} }
~ , ~
b_{\norm{\cdot}} := \max \left \{ \sqrt{2} a_{\norm{\cdot}} , 8c_{\norm{\cdot}}^2 \max_{i,j \in \{1,2\}} \norm{\ebf_i \ebf_j^\top} \right \}
,
\]
allows us, on the one hand, to arrive at a bound of the form:
\[
\norm{\matrixend} \geq a_{\norm{\cdot}} \cdot \frac{1}{\sqrt{\mfendloss (\matrixend)}}  -  b_{\norm{ \cdot }}
\text{\,,}
\]
and on the other hand, to lift our previous assumption on the loss: when $\mfendloss (\matrixend) \geq \frac{1}{2}$ the bound is vacuous, \ie~non-positive and trivially holds.
Noticing this is exactly Equation~\eqref{mf:eq:norms_lb} (recall we omitted the time index $t$), concludes the first part of the proof.

\subsubsection{Proof of Equation~\eqref{mf:eq:erank_ub} (Upper Bound for Effective Rank)} \label{mf:sec:effective_rank_bound}

During the following effective rank (Definition~\ref{def:erank}) analysis we operate under the assumption of $\mfendloss (\matrixend) < \frac{1}{32}$.
We later remove this assumption, delivering a bound that holds for all loss values.
Making use of the obtained bounds on $\sigma_1(\matrixend)$ and $\sigma_2(\matrixend)$ (Lemma~\ref{mf:lem:W_singular_values_bound}) we arrive at:
\[
\begin{split}
	\rho_1 (\matrixend) & = \frac{\sigma_1 (\matrixend)}{\sigma_1 (\matrixend) + \sigma_2 (\matrixend)} \\
	& \geq \frac{\sigma_1 (\matrixend)}{\sigma_1 (\matrixend) + 3\sqrt{2 \mfendloss (\matrixend)}} \\
	& = 1 - \frac{3\sqrt{2 \mfendloss (\matrixend)}}{\sigma_1 (\matrixend) + 3\sqrt{2 \mfendloss (\matrixend)}} \\
	& \geq 1 - \frac{3\sqrt{2 \mfendloss (\matrixend)}}{ \frac{1}{ \sqrt{2 \mfendloss (\matrixend)}} - 2 + 3\sqrt{2 \mfendloss (\matrixend)}} \\
	& = 1 - \frac{6 \mfendloss (\matrixend)}{ 6 \mfendloss (\matrixend) - 2\sqrt{2 \mfendloss (\matrixend)} + 1 }
	\text{\,.}
\end{split}
\]
Given our assumption on the loss, we have $1 - 2\sqrt{2 \mfendloss ((\matrixend))} \geq \frac{1}{2}$ and thus
\be
\rho_2 (\matrixend) = 1 - \rho_1 (\matrixend) \leq \frac{6 \mfendloss (\matrixend)}{6 \mfendloss (\matrixend) + \frac{1}{2}} \leq 12 \mfendloss (\matrixend)
\text{\,.}
\label{mf:eqn:rho_2_upper_bound}
\ee
Let $h \left (\rho_2 (\matrixend) \right ) := - \rho_2 (\matrixend) \cdot \ln \left (\rho_2 (\matrixend) \right ) - (1 - \rho_2 (\matrixend)) \cdot \ln \left (1 - \rho_2 (\matrixend) \right )$ denote the binary entropy function, and recall that the effective rank of $\matrixend$ is defined to be $\erank (\matrixend) := \exp \{ h \left (\rho_2 (\matrixend) \right ) \}$.
The exponent function is convex and therefore upper bounded on the interval $[0, \ln (2)]$ by the linear function that intersects it at these points.
Formally, for $x \in [0, \ln (2)]$ it holds that $\exp (x) \leq 1 + \frac{1}{\ln (2)} x$, yielding the following bound:
\[
\erank (\matrixend) \leq 1 + \frac{1}{\ln (2)} \cdot  h \left (\rho_2 (\matrixend) \right )
\text{\,.}
\]
By Lemma~\ref{mf:lem:entropy_sqrt_bound} we have that $h \left (\rho_2 (\matrixend) \right ) \leq 2 \sqrt{\rho_2 (\matrixend)}$.
Combined with Equation~\eqref{mf:eqn:rho_2_upper_bound}, since $\inf_{\Wbf' \in \S} \erank (\Wbf') = 1$ (Proposition~\ref{mf:prop:sol_set_rank}), this leads to:
\[
\erank (\matrixend) \leq \inf_{\Wbf' \in \S} \erank (\Wbf') + \frac{2\sqrt{12}}{\ln (2)} \cdot \sqrt{\mfendloss (\matrixend)}
\text{\,.}
\]
Recall that the time index $t$ is omitted, and the result holds for all $t \geq 0$, \ie~this is exactly Equation~\eqref{mf:eq:erank_ub}.
To remove our assumption on the loss, notice that when $\mfendloss (\matrixend) \geq \frac{1}{32}$ the bound is trivial, as the right-hand side is greater than $2$, which is the maximal effective rank (for a $2$-by-$2$ matrix).

\subsubsection{Proof of Equation~\eqref{mf:eq:irank_dist_ub} (Upper Bound for Distance from Infimal Rank)} \label{mf:sec:distance_from_infimal_rank_upper_bound}

According to Proposition~\ref{mf:prop:sol_set_rank}, the infimal rank of $\S$ is $1$.
The quantity we seek to upper bound is therefore $\frodist ( \matrixend ( t ) , \M_1 ) = \sigma_2 (\matrixend (t) )$. By Equation~\eqref{mf:eqn:W_sing_val_bounds_no_assump} in Lemma~\ref{mf:lem:W_singular_values_bound}, for all $t \geq 0$ we have
\[
\frodist (\matrixend (t), \M_{1}) \leq 3\sqrt{2} \cdot \sqrt{\mfendloss (t)}
\text{\,,}
\]
completing the proof.
\qed

\subsection{Proof of Proposition~\ref{mf:prop:det_pos}}
\label{mf:app:proofs:det_pos}

Define $\Wbf_{-1}$ to be the matrix obtained from $W$ by multiplying its first row by $-1$.
On the one hand, symmetry around the origin implies that $\Wbf_{-1}$ and $\Wbf$ follow the same distribution.
On the other hand, $\det ( \Wbf_{-1} ) = - \det ( \Wbf ) $.
Due to the fact that the set of matrices with zero determinant has probability $0$ under continuous distributions (see, \eg,~Remark 2.5 in~\cite{hackbusch2012tensor}), we may conclude $\Pr ( \det (\Wbf) > 0 ) = \Pr ( \det (\Wbf) < 0 ) = 0.5$.

For $\Wbf_1, \ldots , \Wbf_L$ random matrices drawn independently, let $l \in [L]$ be the index such that $\Pr ( \det (\Wbf_l) > 0) = 0.5$.
Since $Pr ( \det (\Wbf_{l'}) = 0) = 0$ for any $l' \in [L]$, the proof readily follows from determinant multiplicativity and the law of total probability:
\[
\begin{split}
	\Pr \left ( \det ( \Wbf_L \cdots \Wbf_1 ) > 0 \right ) & = \Pr ( \det (\Wbf_l) > 0) \cdot \Pr \left ( \Pi_{i \neq l} \det (\Wbf_i) > 0 \right ) \\
	& + \Pr ( \det (\Wbf_l) < 0) \cdot \Pr \left ( \Pi_{i \neq l} \det (\Wbf_i) < 0 \right ) \\
	& = \frac{1}{2} \left [ \Pr \left ( \Pi_{i \neq l} \det (\Wbf_i) > 0 \right ) + \Pr \left ( \Pi_{i \neq l} \det (\Wbf_i) < 0 \right ) \right ] \\
	& = 0.5
	\text{\,.}
\end{split}
\]
An identical computation yields $\Pr \left ( \det ( \Wbf_L \cdots \Wbf_1 ) < 0 \right ) = 0.5$.
\qed

\subsection{Proof of Proposition~\ref{mf:prop:loss_down}} \label{mf:app:proofs:loss_down}

The proof makes use of the following lemma, proven in Appendix~\ref{mf:app:proofs:prod_mat_remains_pos_def}.

\begin{lemma} \label{mf:lem:balanced_scaled_id_remains_pos_def}
	Consider the setting of Theorem~\ref{mf:thm:norms_up_finite} (arbitrary depth $L \in \N$) in the special case of an initial end matrix $\matrixend (0) = \alpha \cdot \Ibf$, where $\Ibf$ stands for identity matrix and $\alpha \in (0, 1]$. Then, $\matrixend (t)$ is positive definite for all $t \geq 0$ .
\end{lemma}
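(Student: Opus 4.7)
My approach has two steps: first establish that $\matrixend(t)$ remains symmetric for all $t\ge 0$, and then combine symmetry with the already-known sign preservation of $\det(\matrixend(t))$ to conclude positive definiteness. Note that the hypothesis $\matrixend(0)=\alpha\cdot\Ibf$ is compatible with balancedness (for instance by taking $\Wbf_l(0)=\alpha^{1/L}\Ibf$ for all $l$), so Lemma~\ref{mf:lem:prod_mat_dyn} applies and gives an autonomous ODE $\matrixenddot(t)=F(\matrixend(t))$ in which
\[
F(\Wbf) \ = \ -\sum\nolimits_{l=1}^{L}\brk1{\Wbf\Wbf^\top}^{\frac{l-1}{L}}\nabla\mfendloss(\Wbf)\brk1{\Wbf^\top\Wbf}^{\frac{L-l}{L}}.
\]

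\textbf{Step 1 (symmetry is preserved).} For the observations of \cref{mf:sec:analysis:setting} the gradient of the loss is
\[
\nabla\mfendloss(\Wbf) \ = \ \begin{pmatrix} 0 & w_{1,2}-1 \\ w_{2,1}-1 & w_{2,2}\end{pmatrix},
\]
which is symmetric whenever $\Wbf$ is symmetric (this is where symmetry of the observed values $y_{1,2}=y_{2,1}$ enters). When $\Wbf$ is symmetric, $\Wbf\Wbf^\top=\Wbf^\top\Wbf=\Wbf^2$, so fractional powers of these positive semidefinite matrices are symmetric, and the transpose of the $l$-th summand in $F(\Wbf)$ equals the $(L{+}1{-}l)$-th summand. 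Hence the sum is invariant under transposition, \ie~$F(\Wbf)^\top=F(\Wbf)$, so $F$ maps symmetric matrices into symmetric matrices. The symmetric $2\times 2$ matrices form an invariant manifold of the ODE $\matrixenddot=F(\matrixend)$, and since $\matrixend(0)=\alpha\cdot\Ibf$ lies in this manifold, uniqueness of solutions to the ODE forces $\matrixend(t)$ to be symmetric for all $t\ge 0$.

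\textbf{Step 2 (symmetry + positive determinant $\Rightarrow$ positive definite).} Because $\det(\matrixend(0))=\alpha^{2}>0$, Lemma~\ref{mf:lem:det_does_not_change_sign} gives $\det(\matrixend(t))>0$ for every $t\ge 0$. Combined with symmetry from Step~1, this means $\matrixend(t)$ has two real nonzero eigenvalues whose product is positive, so they share a common sign; at $t=0$ both equal $\alpha>0$. The eigenvalues of a symmetric matrix depend continuously on its entries, and by Lemma~\ref{mf:lem:analytic_factors_and_prod_mat} the entries are even analytic in $t$. If either eigenvalue ever turned non-positive, the intermediate value theorem would yield a time at which it equals zero, contradicting positivity of $\det(\matrixend(t))$. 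Thus both eigenvalues remain strictly positive, \ie~$\matrixend(t)$ is positive definite for all $t\ge 0$.

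\textbf{Main obstacle.} The only non-routine point is Step~1: making precise that the symmetric subspace is invariant under the flow. The key enabler is that symmetry of the observation pattern makes $\nabla\mfendloss$ map symmetric matrices to symmetric matrices, after which the symmetry-preserving structure of the $\matrixend$-dynamics in Lemma~\ref{mf:lem:prod_mat_dyn} can be read off by pairing the $l$-th and $(L{+}1{-}l)$-th summands. Once that is done, ODE uniqueness (applied to $\matrixenddot=F(\matrixend)$ on the invariant symmetric manifold) closes the argument, and Step~2 is a short topological consequence of Lemma~\ref{mf:lem:det_does_not_change_sign}.
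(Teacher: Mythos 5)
Your proof is correct in substance but takes a genuinely different route from the paper's. For Step~1 the paper never argues at the level of the end-matrix ODE: it defines a notion of \emph{symmetric factorization} of the factors ($\Wbf_l = \Wbf_{L-l+1}^\top$), proves by an induction over derivative orders at $t=0$ that this relation is preserved (using analyticity of the factor-level gradient flow to pass from "all derivatives agree at $0$" to "agree for all $t$"), and then handles the fact that the given balanced initialization need not be a symmetric factorization by constructing an auxiliary initialization $\widebar{\Wbf}_l(0)=\alpha^{1/L}\Ibf$ that is both balanced and symmetric, invoking the fact that under balancedness the end-matrix trajectory depends only on $\matrixend(0)$. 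Your invariant-subspace argument on the autonomous ODE $\matrixenddot = F(\matrixend)$ is cleaner and skips the auxiliary construction entirely, and your pairing of the $l$-th and $(L+1-l)$-th summands is exactly right. Step~2 (determinant sign preservation plus eigenvalue continuity) coincides with the paper's.

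The one point you should make explicit is why "uniqueness of solutions" applies: $F$ contains the fractional powers $(\Wbf\Wbf^\top)^{(l-1)/L}$, which are \emph{not} locally Lipschitz near singular matrices, so Picard--Lindel\"of uniqueness is not automatic on all of $\R^{2\times 2}$. It does hold along the trajectory because Lemma~\ref{mf:lem:det_does_not_change_sign} guarantees $\det(\matrixend(t))>0$ independently (it is proven via the singular-value dynamics, not via this ODE), so $\matrixend(t)\matrixend(t)^\top$ stays positive definite and $F$ is real-analytic, hence locally Lipschitz, in a neighborhood of the entire trajectory; a standard continuation argument then extends the local coincidence of your symmetric-restricted solution with $\matrixend(t)$ to all $t\ge 0$. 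This is very likely why the paper works with the factor-level flow (polynomial right-hand side, so uniqueness and analyticity are free) rather than with $F$ directly. With that sentence added, your argument is complete.
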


With Lemma~\ref{mf:lem:balanced_scaled_id_remains_pos_def} in place, we may derive the exact differential equations governing the end matrix in our setting of depth $L = 2$.
Then, a detailed analysis of the dynamics will yield convergence of the loss to global minimum, \ie~ $\lim_{t \to \infty} \mfendloss ( t ) = 0$.
As usual, we omit the time index $t$ when stating results for all $t$ or when clear from the context.

According to Lemma~\ref{mf:lem:balanced_scaled_id_remains_pos_def}, the end matrix $\matrixend$ is symmetric and positive definite. 
Thus, we may write the loss and its gradient with respect to $\matrixend$ as:
\be
\mfendloss (\matrixend) = \frac{1}{2} \left [ w_{2,2}^2 + 2(w_{1,2} - 1)^2 \right ] \quad , \quad \nabla \mfendloss (\matrixend) = \begin{pmatrix}
	0 & w_{1,2} - 1 \\
	w_{1,2} - 1 & w_{2,2}
\end{pmatrix} \text{\,,}
\label{mf:eqn:loss_and_grad_W}
\ee
where $\{w_{i,j}\}_{i , j \in [2]}$ are the entries of $\matrixend$.
Since the factors $\Wbf_1$ and $\Wbf_2$ are balanced at initialization (Equation~\eqref{mf:eq:balance}), the differential equation governing the end matrix (Lemma~\ref{mf:lem:prod_mat_dyn}) for depth $L = 2$ gives:
\be
\begin{split}
	\matrixenddot & = - \left[ \matrixend \matrixend^\top \right]^\frac{1}{2} \cdot \nabla \mfendloss (\matrixend) -\nabla \mfendloss (\matrixend) \cdot \left[ \matrixend^\top \matrixend \right]^\frac{1}{2} \\
	& = -\matrixend \nabla \mfendloss (\matrixend) -\nabla \mfendloss (\matrixend) \matrixend
	\text{\,,}
\end{split}
\label{mf:eqn:W_derivative_depth_2}
\ee
where the transition is by positive definiteness of $\matrixend$.
Writing the differential equation of each entry separately, we have:
\be
\begin{split}
	\dot{w}_{1,1} & = 2w_{1,2} (1 - w_{1,2}) \text{\,,}\\
	\dot{w}_{2,2} & = 2w_{1,2} (1 - w_{1,2}) -2w_{2,2}^2 \text{\,,}\\
	\dot{w}_{1,2} & = w_{2,2} (1 - 2w_{1,2}) + w_{1,1} (1 - w_{1,2})
	\text{\,.}
	\label{mf:eqn:W_entries_diff_eqns}
\end{split}
\ee
Let us characterize the behavior of these entries throughout time.
\begin{lemma} \label{mf:lem:W_entries_opt_bounds}
	The following holds for all $t \geq 0$:
	\begin{enumerate} 
		\item $w_{1,1} > 0$ and is monotonically non-decreasing.
		\item $0 \leq w_{1,2} \leq 1$.
		\item $0 < w_{2,2} \leq 1$.
	\end{enumerate}
\end{lemma}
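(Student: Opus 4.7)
The plan is to derive the three claims together via a bootstrap (continuity) argument, exploiting Lemma~\ref{mf:lem:balanced_scaled_id_remains_pos_def} and the explicit ODEs of Equation~\eqref{mf:eqn:W_entries_diff_eqns}. First, I will invoke Lemma~\ref{mf:lem:balanced_scaled_id_remains_pos_def} to conclude that $\matrixend(t)$ is (symmetric) positive definite for all $t \geq 0$, which immediately gives $w_{1,1}(t) > 0$ and $w_{2,2}(t) > 0$. This takes care of the strict positivity parts of claims~1 and~3, and reduces the lemma to proving the three bound statements: $w_{1,2} \geq 0$, $w_{1,2} \leq 1$, and $w_{2,2} \leq 1$.

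Next, observe that at $t=0$ all three bounds hold strictly except $w_{1,2}(0)=0$, and that $\dot{w}_{1,2}(0) = w_{2,2}(0) + w_{1,1}(0) = 2\alpha > 0$, so $w_{1,2}$ enters $(0,1)$ immediately. I would then define $t^{\star} := \inf\{t \geq 0 : w_{1,2}(t) \notin [0,1] \text{ or } w_{2,2}(t) > 1\}$, assume for contradiction that $t^{\star} < \infty$, and rule out each of the three possible boundary crossings separately using Lemma~\ref{mf:lem:ode_bounded_in_interval} (which provides, at such a crossing time, a derivative of the correct sign):
\begin{itemize}
    \item If $w_{1,2}(t^{\star}) = 0$, then Equation~\eqref{mf:eqn:W_entries_diff_eqns} gives $\dot{w}_{1,2}(t^{\star}) = w_{2,2}(t^{\star}) + w_{1,1}(t^{\star}) > 0$, contradicting $\dot{w}_{1,2}(t^{\star}) \leq 0$.
    \item If $w_{1,2}(t^{\star}) = 1$, then $\dot{w}_{1,2}(t^{\star}) = -w_{2,2}(t^{\star}) < 0$, contradicting $\dot{w}_{1,2}(t^{\star}) \geq 0$.
    \item If $w_{2,2}(t^{\star}) = 1$, then since $w_{1,2}(t^{\star}) \in [0,1]$ and $2x(1-x) \leq \tfrac{1}{2}$ on $[0,1]$, we get $\dot{w}_{2,2}(t^{\star}) \leq \tfrac{1}{2} - 2 < 0$, contradicting $\dot{w}_{2,2}(t^{\star}) \geq 0$.
\end{itemize}
Hence $t^{\star} = \infty$, establishing claims~2 and~3.

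Finally, claim~1's monotonicity part follows immediately: on $[0,\infty)$ we have $w_{1,2} \in [0,1]$, so $\dot{w}_{1,1} = 2 w_{1,2}(1-w_{1,2}) \geq 0$. The main subtlety is the circularity between the three inequalities~---~e.g.~showing $w_{2,2} \leq 1$ seems to need $w_{1,2} \in [0,1]$, and vice versa~---~which is exactly what the single joint stopping time $t^{\star}$ is designed to handle, reducing everything to a case analysis of the first violation. I do not anticipate any further obstacle beyond organizing this bootstrap cleanly.
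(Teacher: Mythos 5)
Your proposal is correct and follows essentially the same route as the paper: positive definiteness of $\matrixend(t)$ (Lemma~\ref{mf:lem:balanced_scaled_id_remains_pos_def}) gives $w_{1,1}, w_{2,2} > 0$, boundary derivative-sign arguments via Lemma~\ref{mf:lem:ode_bounded_in_interval} confine $w_{1,2}$ to $[0,1]$ and keep $w_{2,2} \leq 1$, and monotonicity of $w_{1,1}$ then follows from $\dot{w}_{1,1} = 2w_{1,2}(1-w_{1,2}) \geq 0$. The only (harmless) difference is that your joint stopping time is unnecessary: there is no actual circularity, since confining $w_{1,2}$ to $[0,1]$ uses only $w_{1,1}, w_{2,2} > 0$ from positive definiteness, so the paper simply establishes $w_{1,2} \in [0,1]$ first and then uses it to bound $w_{2,2}$.
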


\begin{proof}
	Since $\matrixend$ is positive definite, it follows that $w_{1,1}$ and $w_{2,2}$ are positive.
	Examining the behavior of $w_{1,2}$ (Equation~\eqref{mf:eqn:W_entries_diff_eqns}): on the one hand, when $w_{1,2} = 0$ then $\dot{w}_{1,2} = w_{2,2} + w_{1,1} > 0$, and on the other hand, when $w_{1,2} = 1$ then $\dot{w}_{1,2} = -w_{2,2} < 0$.
	Because $w_{1,2}$ is initialized at $0$, it stays in the interval $[0, 1]$.
	Otherwise, by Lemma~\ref{mf:lem:ode_bounded_in_interval}, we have a contradiction to the positivity of $\dot{w}_{1,2}$ when $w_{1,2} = 0$ or its negativity when $w_{1,2} = 1$.
	Similarly, if $w_{2,2} > \frac{1}{2}$ we have $\dot{w}_{2,2} < 2w_{1,2} (1 - w_{1,2}) -\frac{1}{2} \leq 0$.
	Since at initialization $w_{2,2} (0) = \alpha \leq 1$, by Lemma~\ref{mf:lem:ode_bounded_in_interval}, it will not go above $1$.
	Lastly, since $w_{1,2}$ is in the interval $[0, 1]$, it holds that $\dot{w}_{1,1} \geq 0$, \ie~$w_{1,1}$ is monotonically non-decreasing.
\end{proof}

We turn our focus to the derivative of the loss with respect to $t$:
\[
\frac{d}{dt} \mfendloss (\matrixend) = \langle \nabla \mfendloss (\matrixend) , \matrixenddot \rangle
\text{\,.}
\]
Plugging in Equation~\eqref{mf:eqn:W_derivative_depth_2} and recalling the fact that $\langle \Abf, \Bbf \rangle = \Tr (\Abf^\top \Bbf)$ for matrices $\Abf, \Bbf$ of the same size:
\[
\frac{d}{dt} \mfendloss (\matrixend) = - \Tr (\nabla \mfendloss (\matrixend)^\top  \matrixend \nabla \mfendloss (\matrixend)) - \Tr (\nabla \mfendloss(\matrixend)^\top  \nabla \mfendloss(\matrixend) \matrixend)
\text{\,.}
\]
From the cyclic property of the trace operator and symmetry of $\nabla \mfendloss (\matrixend)$ (Equation~\eqref{mf:eqn:loss_and_grad_W}), we arrive at the following expression:
\[
\frac{d}{dt} \mfendloss (\matrixend) = -2\Tr (\nabla \mfendloss (\matrixend) \matrixend \nabla \mfendloss(\matrixend))
\text{\,.}
\]
Notice that since $\nabla\mfendloss (\matrixend) \matrixend \nabla \mfendloss (\matrixend)$ is positive semidefinite the trace is non-negative and $\frac{d}{dt} \mfendloss (\matrixend) \leq 0$.
That is, the loss is monotonically non-increasing throughout time.
Invoking Lemma~\ref{mf:lem:A_TBA_trace_bound}, we can upper bound the derivative by:
\be
\frac{d}{dt} \mfendloss (\matrixend) \leq -2 \lambda_{1}(\matrixend) \cdot \sigma_{2} (\nabla \mfendloss (\matrixend))^2 \text{\,,}
\label{mf:eqn:L_deriv_spectral_bound}
\ee
where $\lambda_{1} (\matrixend)$ is the maximal eigenvalue of $\matrixend$ and $\sigma_{2} (\nabla \mfendloss (\matrixend))$ is the minimal singular value of $\nabla \mfendloss (\matrixend)$.
The maximal eigenvalue of a symmetric matrix is greater than its diagonal entries. 
Therefore, $\lambda_{1}(\matrixend) \geq w_{1,1}$. 
Since $w_{1,1}$ is initialized at $\alpha > 0$, and by Lemma~\ref{mf:lem:W_entries_opt_bounds} is monotonically non-decreasing, we have $\lambda_{1}(\matrixend) \geq \alpha$.
Writing the eigenvalues of $\nabla \mfendloss(\matrixend)$ explicitly:
\[
\begin{split}
	\lambda_{1} (\nabla \mfendloss(\matrixend)) & = \frac{ w_{2,2} + \sqrt{w_{2,2}^2 + 4(1 - w_{1,2})^2} }{2} \text{\,,} \\
	\lambda_{2} (\nabla \mfendloss(\matrixend)) & = \frac{ w_{2,2} - \sqrt{w_{2,2}^2 + 4(1 - w_{1,2})^2} }{2}
	\text{\,,}
\end{split}
\]
we can see that, since $w_{2,2}$ is positive (Lemma~\ref{mf:lem:W_entries_opt_bounds}):
\[
\sigma_{2} (\nabla \mfendloss (\matrixend)) = \min\nolimits_{i=1,2} \abs{\lambda_i (\nabla \mfendloss(\matrixend))} = \frac{ \sqrt{w_{2,2}^2 + 4(1 - w_{1,2})^2} - w_{2,2} }{ 2 }
\text{\,.}
\]
Applying the identity $a - b = \frac{a^2 - b^2}{a + b}$ for $a, b \in \R$ such that $a + b \neq 0$, and the bounds on $w_{2,2}$ and $w_{1,2}$ (Lemma~\ref{mf:lem:W_entries_opt_bounds}):
\[
\begin{split}
	\sigma_2 (\nabla \mfendloss (\matrixend)) & = \frac{ 2(1 - w_{1,2})^2 }{  \sqrt{w_{2,2}^2 + 4(1 - w_{1,2})^2} + w_{2,2} } \\
	& \geq \frac{ 2(1 - w_{1,2})^2 }{ \sqrt{1 + 4(1 - w_{1,2})^2} + 1 } \\
	& \geq \frac{ 2(1 - w_{1,2})^2 }{ 2 \abs{(1 - w_{1,2})} + 2 } \\
	& \geq \frac{1}{2} (1 - w_{1,2})^2 \text{\,,}
\end{split}
\]
where in the penultimate transition we bounded the square root of a sum by the sum of square roots.
Returning to Equation~\eqref{mf:eqn:L_deriv_spectral_bound} we have:
\[
\frac{d}{dt} \mfendloss (\matrixend) \leq - b (1 - w_{1,2})^4 \text{\,,}
\]
for $b = \frac{1}{2} \alpha$. 
We are now in a position to prove that $w_{1,2} \rightarrow 1$ as $t$ tends to infinity.
Integrating both sides with respect to time:
\[
\mfendloss (\matrixend(t)) - \mfendloss (\matrixend(0)) \leq -b \int\nolimits_{t'=0}^{t} (1 - w_{1,2}(t'))^4 dt' \text{\,.}
\]
Since $\mfendloss (\matrixend(t)) \geq 0$, by rearranging the inequality we may write:
\[
\int\nolimits_{t'=0}^{t} (1 - w_{1,2}(t'))^4 dt' \leq \frac{\mfendloss (\matrixend(0))}{b} \text{\,.}
\]
Going back to the differential equation of $\dot{w}_{1,2}$ (Equation~\eqref{mf:eqn:W_entries_diff_eqns}), by applying the bounds on $w_{1,2}$ and $w_{2,2}$ (Lemma~\ref{mf:lem:W_entries_opt_bounds}) we have that $\dot{w}_{1,2} \geq -1$.
Defining $g(t) := (1 - w_{1,2}(t))^4$, it then holds that $\dot{g}(t) \leq 4$. Since $g(\cdot)$ is non-negative and has an upper bounded integral and derivative, from Lemma~\ref{mf:lem:bounded_integral_and_der_convergence}, we can conclude that $\lim_{t \rightarrow \infty} g(t) = 0$ and $\lim_{t \rightarrow \infty} w_{1,2} (t) = 1$.

Because $\mfendloss (\matrixend(t))$ is monotonically non-increasing, we need only show that for each $\epsilon > 0$ there exists a $t_{\epsilon} > 0$ such that $\mfendloss (\matrixend (t_{\epsilon})) < \epsilon$. 
Having already established that $w_{1,2} (t)$ converges to $1$, this amounts to finding a large enough $t_{\epsilon}$ for which $w_{2,2} (t_{\epsilon})$ is sufficiently close to $0$.
Fix some $\epsilon > 0$ and let $\hat{t} > 0$ be such that for all $t \geq \hat{t}$ the following holds:
\be
2(1 - w_{1,2}(t))^2 < \epsilon \quad , \quad 2w_{1,2}(t)(1 - w_{1,2}(t)) < \epsilon
\text{\,.}
\label{mf:eqn:w_12_conv_eps_bound}
\ee
Such $\hat{t}$ exists since all terms above converge to $0$.
Returning to the differential equation of $\dot{w}_{2,2}$ (Equation~\eqref{mf:eqn:W_entries_diff_eqns}):
\be
\dot{w}_{2,2}(t) < \epsilon -2w_{2,2}(t)^2
\text{\,.}
\label{mf:eqn:w_22_dot_upper_bound}
\ee
Recalling that $w_{2,2}(t) > 0$ (Lemma~\ref{mf:lem:W_entries_opt_bounds}), it follows that there exists $t_{\epsilon} \geq \hat{t}$ with $\dot{w}_{2,2}(t_{\epsilon}) > -\epsilon$ (otherwise $w_{2,2} (t)$ goes to $-\infty$ as $t \rightarrow \infty$, in contradiction to the positivity of $w_{2,2} (t)$).
For the above $t_{\epsilon}$, by rearranging the terms in Equation~\eqref{mf:eqn:w_22_dot_upper_bound} we achieve $w_{2,2}(t_{\epsilon}) < \sqrt{\epsilon}$.
Finally, combined with Equation~\eqref{mf:eqn:w_12_conv_eps_bound}, the result readily follows:
\[
\mfendloss (\matrixend(t_{\epsilon})) = \frac{1}{2} \left [ w_{2,2}(t_{\epsilon})^2 + 2(w_{1,2}(t_{\epsilon}) - 1)^2 \right ] < \epsilon
\text{\,,}
\]
concluding the proof.
\qed

\subsubsection{Proof of Lemma~\ref{mf:lem:balanced_scaled_id_remains_pos_def}} \label{mf:app:proofs:prod_mat_remains_pos_def}

The proof proceeds as follows. 
We initially consider initializations where the matrices $\Wbf_1 (0), \ldots , \Wbf_L (0)$ form a \emph{symmetric factorization} of $\matrixend (0)$ (Definition~\ref{def:sym_factorization}), and show that this ensures the end matrix stays symmetric.
Then, we establish that for every balanced initial factors (Equation~\eqref{mf:eq:balance}) with a positive definite end matrix there exist alternative balanced factors such that: \emph{(i)}~the initial end matrix is the same; and \emph{(ii)}~the factors form a symmetric factorization of the end matrix. 
Since the end matrices for the original and the constructed initializations obey the exact same dynamics (Lemma~\ref{mf:lem:prod_mat_dyn}), the proof concludes.

\begin{definition} \label{def:sym_factorization}
	We say that the matrices $\Wbf_1, \ldots ,\Wbf_L \in \R^{D \times D}$ form a \textit{symmetric factorization} of $\Wbf \in \R^{D \times D}$ if $\Wbf = \Wbf_L \cdots \Wbf_1$ and
	\[
	\Wbf_l = \Wbf_{L-l+1}^\top  \quad , l\in \{1, \ldots ,\lfloor L/2 \rfloor + 1\}
	\text{\,.}
	\]
\end{definition}
A straightforward result is that matrices with a symmetric factorization are symmetric themselves.
\begin{lemma} \label{mf:lem:symm_factorization_is_sym}
	If a matrix $\Wbf \in \R^{D \times D}$ has a symmetric factorization, then it is symmetric.
\end{lemma}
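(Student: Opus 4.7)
The statement is essentially the observation that a product with a palindromic transpose structure must coincide with its own transpose, so the proof reduces to a single chain of equalities. The plan is to compute $\Wbf^\top$ directly, reverse the order of the factors using $(AB)^\top = B^\top A^\top$, and then reindex via the hypothesis $\Wbf_l = \Wbf_{L-l+1}^\top$.

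\begin{proof}
First, I would verify that the identity $\Wbf_l^\top = \Wbf_{L-l+1}$ holds for every $l \in \{1, \ldots, L\}$, not just for $l \in \{1, \ldots, \lfloor L/2 \rfloor + 1\}$. By Definition~\ref{def:sym_factorization}, for $l$ in the latter range we have $\Wbf_l = \Wbf_{L-l+1}^\top$, equivalently $\Wbf_l^\top = \Wbf_{L-l+1}$. Substituting $l' := L-l+1$, this reads $\Wbf_{L-l'+1}^\top = \Wbf_{l'}$ for $l'$ in the complementary range $\{L - \lfloor L/2 \rfloor, \ldots, L\}$, which combined with the original range covers all of $\{1, \ldots, L\}$. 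Hence $\Wbf_l^\top = \Wbf_{L-l+1}$ for all $l$.

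Now I would compute:
\[
\Wbf^\top = \brk1{\Wbf_L \Wbf_{L-1} \cdots \Wbf_1}^\top = \Wbf_1^\top \Wbf_2^\top \cdots \Wbf_L^\top = \Wbf_L \Wbf_{L-1} \cdots \Wbf_1 = \Wbf
\text{\,,}
\]
where the second equality uses the standard rule for the transpose of a product, the third applies the identity $\Wbf_l^\top = \Wbf_{L-l+1}$ to each factor (which simply reverses the indexing), and the last equality is the definition of $\Wbf$. Thus $\Wbf$ is symmetric.
\end{proof}

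There is no real obstacle here; the only subtlety worth flagging is the observation that the hypothesis on half of the indices automatically extends to all indices by taking transposes, after which the proof is a routine manipulation.
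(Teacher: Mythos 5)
Your proof is correct and takes essentially the same route as the paper's, which is the one-line computation $\Wbf = \Wbf_L \cdots \Wbf_1 = \Wbf_1^\top \cdots \Wbf_L^\top = \Wbf^\top$. The only difference is that you spell out the (correct) observation that the defining condition on indices $l \leq \lfloor L/2 \rfloor + 1$ extends to all $l \in \{1,\ldots,L\}$ by transposing, a step the paper leaves implicit.
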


\begin{proof}
	Let $\Wbf_1, \ldots ,\Wbf_L \in \R^{D \times D}$ form a symmetric factorization of $\Wbf$. It directly follows that
	\[
	\Wbf = \Wbf_L \cdots \Wbf_1 = \Wbf_1^\top  \cdots \Wbf_L^\top  = \Wbf^\top \text{\,.}
	\]
\end{proof}

By Lemma~\ref{mf:lem:analytic_factors_and_prod_mat}, $\Wbf_1(t), \ldots , \Wbf_L(t), \matrixend(t)$ and $\nabla \mfendloss ( \matrixend (t) )$ are analytic, and hence infinitely differentiable, with respect to $t$.
Lemmas~\ref{mf:lem:induction_on_order_k} and~\ref{mf:lem:sym_factorization_stays_sym} below thus establish that if $\Wbf_1 (0), \ldots , \Wbf_L (0)$ form a symmetric factorization of $\matrixend (0)$, then the end matrix stays symmetric for all $t$.

\begin{lemma} \label{mf:lem:induction_on_order_k}
	Under the setting of Lemma~\ref{mf:lem:balanced_scaled_id_remains_pos_def}, assume that the matrices $\Wbf_1(0), \ldots ,\Wbf_L(0)$ form a symmetric factorization of $\matrixend (0)$ (Definition~\ref{def:sym_factorization}).
	Then, for all $k\in \N\cup \{0\}$:
	\be
	\matrixend^{(k)}(0)=\matrixend^{(k)}(0)^\top
	\text{\,,}
	\label{mf:eqn:induction_W_sym_k_deriv}
	\ee
	and
	\be 
	\Wbf_{l}^{(k)}(0) = \Wbf_{L-l+1}^{(k)}(0)^\top  \quad , l \in \{1, \ldots,\lfloor L/2 \rfloor + 1\}
	\text{\,.}
	\label{mf:eqn:induction_sym_fac_k_deriv}
	\ee
\end{lemma}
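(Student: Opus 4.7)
The plan is to prove Equations~\eqref{mf:eqn:induction_W_sym_k_deriv} and~\eqref{mf:eqn:induction_sym_fac_k_deriv} jointly by induction on $k$. The base case $k=0$ is immediate from the assumed symmetric factorization together with Lemma~\ref{mf:lem:symm_factorization_is_sym}; analyticity of $\Wbf_l(t)$ and $\matrixend(t)$ from Lemma~\ref{mf:lem:analytic_factors_and_prod_mat} guarantees that all higher derivatives at $t=0$ are well defined.

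For the inductive step at order $k+1$, I would first establish~\eqref{mf:eqn:induction_sym_fac_k_deriv}. Gradient flow gives
\[
\dot\Wbf_l \;=\; -\Wbf_{l+1}^\top\cdots\Wbf_L^\top\,\nabla\mfendloss(\matrixend)\,\Wbf_1^\top\cdots\Wbf_{l-1}^\top,
\]
so applying the Leibniz product rule to $\Wbf_l^{(k+1)}(0)=(d/dt)^k\dot\Wbf_l\big|_{t=0}$ expresses it as a multinomial sum indexed by tuples $(j_0,\{j_i\}_{i\neq l})$ with $\sum j_i = k$, whose generic summand is a product of $\Wbf_i^{(j_i)}(0)^\top$ (for $i\neq l$) surrounding a central factor $(\nabla\mfendloss\circ\matrixend)^{(j_0)}(0)$. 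A key auxiliary fact is that, for the loss~\eqref{mf:eq:loss} with the observations of~\eqref{mf:eq:obs}, $\nabla\mfendloss(\Wbf)=P_\Omega(\Wbf)-Y$ is affine with $Y$ symmetric and with $P_\Omega$ commuting with transposition (because $\Omega$ is closed under $(i,j)\mapsto(j,i)$); combined with the inductive hypothesis~\eqref{mf:eqn:induction_W_sym_k_deriv} at orders $\le k$, this forces every $(\nabla\mfendloss\circ\matrixend)^{(j_0)}(0)$ to be symmetric. Transposing the generic summand, substituting $\Wbf_i^{(j_i)}(0)^\top=\Wbf_{L-i+1}^{(j_i)}(0)$ (available at orders $\le k$ from the inductive hypothesis on~\eqref{mf:eqn:induction_sym_fac_k_deriv}, and extended to all $i\in\{1,\ldots,L\}$ by taking transpose of both sides), and applying the order-reversing bijection $i\mapsto L-i+1$ on positional indices should identify the result term-by-term with the analogous Leibniz expansion for $\Wbf_{L-l+1}^{(k+1)}(0)$, proving~\eqref{mf:eqn:induction_sym_fac_k_deriv} at order $k+1$.

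With this in hand, I would then deduce~\eqref{mf:eqn:induction_W_sym_k_deriv} at order $k+1$ by Leibniz-expanding $\matrixend^{(k+1)}(0)=(\Wbf_L\cdots\Wbf_1)^{(k+1)}(0)$ as a multinomial sum of $\Wbf_L^{(j_L)}(0)\cdots\Wbf_1^{(j_1)}(0)$, transposing each summand, applying $\Wbf_l^{(j_l)}(0)^\top=\Wbf_{L-l+1}^{(j_l)}(0)$, and invoking the bijection $(j_1,\ldots,j_L)\mapsto(j_L,\ldots,j_1)$, which leaves the multinomial coefficients invariant, to recover the original sum.

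The main obstacle is purely combinatorial bookkeeping: verifying that the composition of transposition with the two substitutions produces exactly a weight-preserving permutation of the summation, so that the transposed sum coincides with the original. Everything else~---~affinity of $\nabla\mfendloss$, symmetry of $Y$, symmetry-preservation by $P_\Omega$, analyticity, and the Leibniz rule itself~---~is routine and follows directly from the specific loss in~\eqref{mf:eq:obs} and the inductive hypothesis at lower orders.
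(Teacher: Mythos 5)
Your proposal is correct and follows essentially the same route as the paper's proof: induction on the derivative order, first establishing the symmetric-factorization identity for the factors' derivatives via Leibniz expansion of the gradient-flow ODE (using that the loss gradient is symmetric whenever the end matrix is, which the paper checks from the explicit $2\times2$ form of $\nabla\mfendloss$ and you justify via symmetry of $\Omega$ and the observed values), and then deducing symmetry of $\matrixend^{(k)}(0)$ by transposing the Leibniz expansion of the product and applying the index-reversal bijection. The combinatorial bookkeeping you flag is exactly what the paper carries out, and it goes through.
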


\begin{proof}
	The proof is by induction over $k$.
	For $k=0$, the claim holds directly from the initialization assumption and Lemma~\ref{mf:lem:symm_factorization_is_sym}.	
	For $k\in \N$, suppose the claim is true for all $m\in \N \cup \{0\}$ with $m < k$. 
	We begin by showing Equation~\eqref{mf:eqn:induction_sym_fac_k_deriv} holds for $k$. 
	In turn, this will lead to Equation~\eqref{mf:eqn:induction_W_sym_k_deriv} holding as well. 
	For $l \in [L]$, the dynamics of $\Wbf_l(t)$ under gradient flow are
	\[
	\Wbf_{l}^{(1)}(t) = - \tfrac{\partial}{\partial \Wbf_l} \mfobj ( \Wbf_1 ( t ) , \ldots , \Wbf_L ( t ) ) = -\prod\nolimits_{r=l+1}^{L} \Wbf_{r}(t)^\top  \cdot \Gbf (t) \cdot \prod\nolimits_{r=1}^{l-1} \Wbf_{r}(t)^\top \text{\,,}
	\]
	where $\Gbf (t) := \nabla \mfendloss (\matrixend (t))$ denotes the loss gradient with respect to $\matrixend$ at time $t$.
	We can explicitly write the $k$'th ($k \geq 1$) derivative with respect to $t$ of each $\Wbf_l(t)$ using the product rule for higher order derivatives:
	\[
	\Wbf_{l}^{(k)}(t) = -\sum_{i_1, \ldots ,i_L} \binom{k-1}{i_1, \ldots ,i_L} \prod\nolimits_{r=l+1}^{L} \Wbf_{r}^{(i_r)}(t)^\top  \cdot \Gbf^{(i_l)}(t) \cdot \prod\nolimits_{r=1}^{l-1} \Wbf_{r}^{(i_r)}(t)^\top \text{\,,}
	\]
	where $\sum_{l=1}^L i_l = k-1$ and $\binom{k-1}{i_1, \ldots ,i_L} = (k-1)! / \left ( i_1! \cdots i_L! \right )$ for $i_1, \ldots, i_L \in \{0, \ldots, k -1 \}$. 
	Taking the transpose of both sides we have:
	\be
	\Wbf_{l}^{(k)}(t)^\top  = -\sum_{i_1, \ldots ,i_L} \binom{k-1}{i_1, \ldots ,i_L} \prod\nolimits_{1}^{r=l-1} \Wbf_{r}^{(i_r)}(t) \cdot \Gbf^{(i_l)}(t)^\top  \cdot \prod\nolimits_{l+1}^{r=L} \Wbf_{r}^{(i_r)}(t) \text{\,.}
	\label{mf:eqn:W_j_transpose_k_deriv}
	\ee
	Turning our attention to $\Gbf (t)$, we may write it explicitly as:
	\[
	\Gbf (t) = \nabla \mfendloss (\matrixend(t)) = \begin{pmatrix}
		0 & w_{1,2} (t) - 1\\
		w_{2, 1} (t) - 1 & w_{2,2} (t)
	\end{pmatrix} \text{\,,}
	\]
	where $\{ w_{i , j} (t) \}_{i,j \in [2]}$ are the entries of $\matrixend (t)$.
	For $m < k$, note that when $\matrixend^{(m)} (t)$ is symmetric so is $\Gbf^{(m)} (t)$.
	With this in hand, the inductive assumption (Equation~\eqref{mf:eqn:induction_W_sym_k_deriv}) implies that $\Gbf^{(m)} (0)$ is symmetric (for all $m < k$).
	Combined with Equation~\eqref{mf:eqn:induction_sym_fac_k_deriv} (for $m < k$, from the inductive assumption), we may write Equation~\eqref{mf:eqn:W_j_transpose_k_deriv} for $t=0$ as:
	\[
	\Wbf_{l}^{(k)}(0)^\top  = -\sum_{i_1, \ldots ,i_L} \binom{k-1}{i_1, \ldots ,i_L} \prod_{r=L-l+2}^{L} \Wbf_{r}^{(i_{L-r+1})}(0)^\top  \cdot \Gbf^{(i_l)}(0) \cdot \prod_{r=1}^{L - l} \Wbf_{r}^{(i_{L-r+1})}(0)^\top \text{\,.}
	\]
	Reordering the sum according to $h_r:=i_{L-r+1}$ and noticing that $\binom{k-1}{h_1, \ldots ,h_L} = \binom{k-1}{i_1, \ldots ,i_L}$, we conclude:
	\[
	\Wbf_{l}^{(k)}(0)^\top  = -\sum_{h_1, \ldots ,h_L} \binom{k-1}{h_1, \ldots ,h_L} \prod_{r=L-l+2}^{L} \Wbf_{r}^{(h_r)}(0)^\top  \cdot \Gbf^{(h_{L-l+1})}(0) \cdot \prod_{r=1}^{L-l} \Wbf_{r}^{(h_r)}(0)^\top \text{\,.}
	\]
	That is,
	\[
	\Wbf_{l}^{(k)}(0)^\top  =  \Wbf_{L-l+1}^{(k)}(0) \text{\,,}
	\] 
	proving Equation~\eqref{mf:eqn:induction_sym_fac_k_deriv}.
	
	It remains to show that $\matrixend^{(k)}(0)$ is symmetric. Similarly to before, we take the $k$'th derivative of $\matrixend(t) := \Wbf_L(t) \cdots \Wbf_1(t)$ using the product rule:
	\[
	\matrixend^{(k)}(t) = \sum_{i_1, \ldots ,i_L} \binom{k}{i_1, \ldots ,i_L} \prod\nolimits_{1}^{l=L} \Wbf_{l}^{(i_l)}(t) \text{\,,}
	\]
	where $\sum_{l=1}^L i_l = k$ and $\binom{k}{i_1, \ldots ,i_L} = k! / (i_1! \cdots i_L!)$ for $i_1, \ldots, i_L \in \{0, \ldots, k \}$.
	For convenience, we denote $\Bbf_{i_1, \ldots ,i_L}(t) := \binom{k}{i_1, \ldots ,i_L} \prod\nolimits_{1}^{l = L} \Wbf_{l}^{(i_l)}(t)$.
	Pairing up elements in the sum with indices $(i_1, \ldots ,i_L)$ that are a reverse order of each other, \ie~$(i_1, \ldots ,i_L)$ is paired with $(i_L, \ldots ,i_1)$:
	\be
	\matrixend^{(k)}(t) = \sum_{i_1, \ldots ,i_L}  \frac{1}{2} \left [\Bbf_{i_1, \ldots ,i_L}(t)) + \Bbf_{i_L, \ldots ,i_1}(t) \right ]
	\text{\,.}
	\label{mf:eqn:W_k_deriv_paired}
	\ee
	With Equation~\eqref{mf:eqn:W_k_deriv_paired} in place, we can conclude the proof by showing $\matrixend^{(k)}(0)$ is a sum of symmetric matrices. 
	By the inductive assumption for Equation~\eqref{mf:eqn:induction_sym_fac_k_deriv}, which was established in the first part of the proof for $k$ as well, we have:
	\be
	\Bbf_{i_1, \ldots ,i_L}(0) =  \Bbf_{i_L, \ldots ,i_1}(0)^\top \text{\,,}
	\label{mf:eqn:B_0_eq_transpose}
	\ee
	for each $(i_1, \ldots ,i_L)$. 
	Therefore, the matrix $\Bbf_{i_1, \ldots ,i_L}(0) +  \Bbf_{i_L, \ldots ,i_1}(0)$ is symmetric.
	Plugging Equation~\eqref{mf:eqn:B_0_eq_transpose} into Equation~\eqref{mf:eqn:W_k_deriv_paired} with $t=0$, we arrive at a representation of $\matrixend^{(k)}(0)$ as a sum of symmetric matrices.
	Thus, $\matrixend^{(k)}(0)$ is symmetric, completing the proof.
\end{proof}	

\begin{lemma} \label{mf:lem:sym_factorization_stays_sym}
	Under the setting of Lemma~\ref{mf:lem:balanced_scaled_id_remains_pos_def}, assume that the matrices $\Wbf_1(0), \ldots ,\Wbf_L(0)$ form a symmetric factorization of $\matrixend (0)$ (Definition~\ref{def:sym_factorization}). Then, $\matrixend (t)$ is symmetric for all $t \geq 0$ .
\end{lemma}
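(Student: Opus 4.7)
The plan is to combine the derivative identities from Lemma~\ref{mf:lem:induction_on_order_k} with the analyticity of the end matrix from Lemma~\ref{mf:lem:analytic_factors_and_prod_mat}, and then invoke the uniqueness principle for analytic functions encapsulated in Lemma~\ref{mf:lem:analytic_func_all_deriv_eq}. The strategy is to argue entry-wise: for each pair of indices $(i,j)$, both $t \mapsto (\matrixend(t))_{i,j}$ and $t \mapsto (\matrixend(t))_{j,i}$ are real analytic functions of $t \geq 0$, and I will show that they have matching derivatives of all orders at $t = 0$.

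First, I would observe that since transposition is a linear operation, differentiation commutes with it: $\tfrac{d^k}{dt^k}[\matrixend(t)^\top] = [\matrixend^{(k)}(t)]^\top$ for all $k \in \N \cup \{0\}$. Then, Lemma~\ref{mf:lem:induction_on_order_k} (Equation~\eqref{mf:eqn:induction_W_sym_k_deriv}) precisely delivers the key identity $\matrixend^{(k)}(0) = \matrixend^{(k)}(0)^\top$ for every $k \in \N \cup \{0\}$. Combined, this gives $\tfrac{d^k}{dt^k}[\matrixend(t)^\top]\big|_{t=0} = \matrixend^{(k)}(0)$ for every $k$, \ie~the analytic functions $\matrixend(t)$ and $\matrixend(t)^\top$ agree on all derivatives of all orders at $t = 0$.

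Finally, I would apply Lemma~\ref{mf:lem:analytic_func_all_deriv_eq} entry-wise to conclude $(\matrixend(t))_{i,j} = (\matrixend(t)^\top)_{i,j} = (\matrixend(t))_{j,i}$ for every $i,j$ and every $t \geq 0$, which is exactly the symmetry claim. The only subtle ingredient is ensuring analyticity of each entry of $\matrixend(t)$ on $[0,\infty)$, but this is immediate from Lemma~\ref{mf:lem:analytic_factors_and_prod_mat} since analyticity of a matrix-valued function means analyticity of each of its entries. I do not anticipate any genuine obstacle here: all the heavy lifting is done in Lemma~\ref{mf:lem:induction_on_order_k}, and the present lemma is essentially a clean application of the analytic continuation principle. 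The main thing to be careful about is simply writing down the transposition-differentiation interchange precisely so that the hypothesis of Lemma~\ref{mf:lem:analytic_func_all_deriv_eq} is verified cleanly.
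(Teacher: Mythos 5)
Your proof is correct and follows essentially the same route as the paper: both rest on Lemma~\ref{mf:lem:induction_on_order_k} together with analyticity (Lemma~\ref{mf:lem:analytic_factors_and_prod_mat}) and the uniqueness principle of Lemma~\ref{mf:lem:analytic_func_all_deriv_eq}. The only (immaterial) difference is that you apply the analytic-continuation argument directly to the end matrix via Equation~\eqref{mf:eqn:induction_W_sym_k_deriv}, whereas the paper applies it to the factors via Equation~\eqref{mf:eqn:induction_sym_fac_k_deriv} to conclude that the factorization stays symmetric for all $t$ and then invokes Lemma~\ref{mf:lem:symm_factorization_is_sym}; your shortcut is perfectly valid.
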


\begin{proof}
	By Lemmas~\ref{mf:lem:induction_on_order_k} and~\ref{mf:lem:analytic_func_all_deriv_eq}, we may conclude that for all $t \geq 0$:
	\[
	\Wbf_{l}(t) = \Wbf_{L-l+1}(t)^\top  \quad , l \in \{1, \ldots,\lfloor L/2 \rfloor + 1\}
	\text{\,.}
	\]
	In words, $\Wbf_1(t), \ldots ,\Wbf_L(t)$ form a symmetric factorization of $\matrixend (t)$, and therefore $\matrixend (t)$ is symmetric (Lemma~\ref{mf:lem:symm_factorization_is_sym}).
\end{proof}

Going back to the setting of Lemma~\ref{mf:lem:balanced_scaled_id_remains_pos_def} --- initialization is balanced (Equation~\eqref{mf:eq:balance}), but does not necessarily comprise a symmetric factorization --- we show that here too the end matrix remains symmetric throughout optimization.
To do so, we first construct a factorization of $\matrixend (0)$ that is both balanced and symmetric, for which Lemma~\ref{mf:lem:sym_factorization_stays_sym} ensures the end matrix stays symmetric throughout optimization.
We then prove that the trajectories of the end matrix for the original and the modified initializations coincide.

Recall that $\matrixend(0) = \alpha \cdot \Ibf$ and define $\widebar{\Wbf}_l(0) : = \alpha^{\frac{1}{L}} \cdot  \Ibf$ for $l \in [L]$.
It is easily verified that:
\begin{itemize}
	\item $\matrixend (0) = \widebar{\Wbf}_L(0) \cdots \widebar{\Wbf}_1(0)$.
	\item $\widebar{\Wbf}_l(0) = \widebar{\Wbf}_{L-l+1}(0)^\top$ for $l \in [L]$.
	\item $\widebar{\Wbf}_{l+1}(0)^\top  \widebar{\Wbf}_{l+1}(0) = \widebar{\Wbf}_{l}(0) \widebar{\Wbf}_{l}(0)^\top $ for $l \in [L - 1]$.
\end{itemize}
Meaning, $\widebar{\Wbf}_1(0), \ldots , \widebar{\Wbf}_L(0)$ are balanced, and form a symmetric factorization of $\matrixend (0)$. 
Suppose the factors $\widebar{\Wbf}_1(t), \ldots , \widebar{\Wbf}_L(t)$ follow the gradient flow dynamics, with initial values $\widebar{\Wbf}_1(0), \ldots , \widebar{\Wbf}_L(0)$, and let $\matrixendbar(t) := \widebar{\Wbf}_L(t) \cdots \widebar{\Wbf}_1(t) $ be the induced end matrix. 
From Lemma~\ref{mf:lem:sym_factorization_stays_sym}, it follows that $\matrixendbar (t)$ is symmetric for all $t \geq 0$.

As characterized in \cite{arora2018optimization} (restated as Lemma~\ref{mf:lem:prod_mat_dyn}), if the initial factors are balanced, the end matrix trajectory depends only on its initial value $\matrixend(0)$.
Since both the original and modified initializations are balanced and have the same end matrix, they lead to the exact same trajectory.
Thus, $\matrixend(t) = \matrixendbar (t)$ for all $t \geq 0$, and specifically, $\matrixend (t)$ is symmetric.

The last step is to see that $\matrixend (t)$ is not only symmetric, but positive definite as well.
Since its initial value $\matrixend (0)$ is positive definite, it suffices to show that its eigenvalues do not change sign. 
By Lemma~\ref{mf:lem:det_does_not_change_sign}, the determinant of $\matrixend (t)$ is positive for all $t$.
Specifically, the end matrix does not have zero eigenvalues.
Recalling that $\matrixend (t)$ is an analytic function of~$t$ (Lemma~\ref{mf:lem:analytic_factors_and_prod_mat}), Theorem~6.1 in~\cite{kato2013perturbation} implies that its eigenvalues are continuous in~$t$.
Therefore, they can not change sign, as that would require them to pass through zero, concluding the proof.
\qed

\subsection{Proof of Theorem~\ref{mf:thm:norms_up_finite_perturb}} \label{mf:app:proofs:norms_up_finite_perturb}

The proof follows a similar line to that of Theorem~\ref{mf:thm:norms_up_finite} (Appendix~\ref{mf:app:proofs:norms_up_finite}), where the differences mostly stem from the fact that the solution set $\widetilde{\S}$ (Equation~\eqref{mf:eq:sol_set_perturb}) is not confined to symmetric matrices, as opposed to the original $\S$ (Equation~\eqref{mf:eq:sol_set}), slightly complicating the computation of singular values.
For the sake of the proof, as mentioned in Appendix~\ref{mf:app:proofs:notation}, we omit the time index $t$ when stating results for all $t \geq 0$ or when clear from context. 
We also let $\{w_{i , j}\}_{i,j \in [2]}$ denote the entries of the end matrix~$\matrixend$.

We begin by deriving loss-dependent bounds for $\abs{w_{1,1}}, \sigma_1 (\matrixend)$ and $\sigma_2 (\matrixend)$.
The entries of $\matrixend$ can be trivially bounded by the loss as follows:
\be
\abs{w_{2,2} - \epsilon}  \leq \sqrt{2 \mfendloss (\matrixend)} ~~~ , ~~~ |w_{1,2} - z| \leq \sqrt{2 \mfendloss (\matrixend)} ~~~,~~~ \abs{w_{2,1} - z'} \leq \sqrt{2 \mfendloss (\matrixend)}
\text{\,.}
\label{mf:eqn:perturbed_entries_loss_bound}
\ee
Lemma~\ref{mf:lem:perturbed_w11_bound} below, analogous to Lemma~\ref{mf:lem:w_11_lower_bound} from the proof of Theorem~\ref{mf:thm:norms_up_finite}, characterizes the relation between $\abs{w_{1,1}}$ and the loss.
\begin{lemma} \label{mf:lem:perturbed_w11_bound}
	Suppose $\mfendloss (\matrixend) < \min \{ z^2 / 2, z'^2 / 2 \}$. 
	Then:
	\[
	\abs{w_{1,1}} > \frac{ ( \abs{z} - \sqrt{2 \mfendloss (\matrixend) } ) ( |z'| - \sqrt{2 \mfendloss (\matrixend) } ) }{ \abs{\epsilon} + \sqrt{2 \mfendloss (\matrixend) } } \geq \frac{\abs{z} \cdot |z'|}{\abs{ \epsilon } + \sqrt{2 \mfendloss (\matrixend) } } - (\abs{z} + |z'|)
	\text{\,.}
	\]
\end{lemma}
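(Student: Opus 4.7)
The plan is to mimic the strategy used in the proof of Lemma~\ref{mf:lem:w_11_lower_bound} (on which the unperturbed Theorem~\ref{mf:thm:norms_up_finite} relied), while carefully tracking signs when the observations $z, z', \epsilon$ and the determinant at initialization are arbitrary. The three ingredients will again be: (i)~the sign of $\det(\matrixend(t))$ being conserved throughout training (Lemma~\ref{mf:lem:det_does_not_change_sign}); (ii)~the entrywise loss bounds in Equation~\eqref{mf:eqn:perturbed_entries_loss_bound}; and (iii)~the generalized assumption of Theorem~\ref{mf:thm:norms_up_finite_perturb} that $\sign(\det(\matrixend(0))) = \sign(z \cdot z')$.

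First, I would use the loss bound $\mfendloss(\matrixend) < \min\{z^2/2, (z')^2/2\}$ together with Equation~\eqref{mf:eqn:perturbed_entries_loss_bound} to deduce that $|w_{1,2} - z| < |z|$ and $|w_{2,1} - z'| < |z'|$. This forces $w_{1,2}$ to share its sign with $z$ and $w_{2,1}$ to share its sign with $z'$, and moreover $|w_{1,2}| \geq |z| - \sqrt{2\mfendloss(\matrixend)} > 0$ and analogously for $|w_{2,1}|$. In particular $w_{1,2} w_{2,1}$ is nonzero and has the same sign as $z \cdot z'$.

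Next, Lemma~\ref{mf:lem:det_does_not_change_sign} together with the hypothesis $\sign(\det(\matrixend(0))) = \sign(z \cdot z')$ guarantees that $\det(\matrixend) = w_{1,1} w_{2,2} - w_{1,2} w_{2,1}$ likewise has sign $\sign(z \cdot z')$ for all $t$. Rewriting this as $w_{1,1} w_{2,2} = \det(\matrixend) + w_{1,2} w_{2,1}$ expresses $w_{1,1} w_{2,2}$ as the sum of two quantities with identical nonzero sign, from which I conclude both that $w_{2,2} \neq 0$ (otherwise the left-hand side would vanish while the right-hand side would not) and that $|w_{1,1} w_{2,2}| \geq |w_{1,2} w_{2,1}|$. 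Dividing by $|w_{2,2}|$ and applying Equation~\eqref{mf:eqn:perturbed_entries_loss_bound} once more --- using $|w_{2,2}| \leq |\epsilon| + \sqrt{2\mfendloss(\matrixend)}$ in the denominator and the lower bounds on $|w_{1,2}|, |w_{2,1}|$ in the numerator --- yields the first inequality
\[
\abs{w_{1,1}} \geq \frac{(|z| - \sqrt{2\mfendloss(\matrixend)})(|z'| - \sqrt{2\mfendloss(\matrixend)})}{|\epsilon| + \sqrt{2\mfendloss(\matrixend)}}
\text{\,.}
\]
The strict inequality will follow because $|\det(\matrixend)| > 0$ contributes an additional positive term.

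Finally, the second inequality in the statement is a purely algebraic simplification: expanding the numerator as $|z||z'| - (|z|+|z'|)\sqrt{2\mfendloss(\matrixend)} + 2\mfendloss(\matrixend)$ and separating the $|z||z'|$ term, the remaining residual is bounded below by $-(|z|+|z'|)$ after multiplication by the (positive) denominator $|\epsilon| + \sqrt{2\mfendloss(\matrixend)}$, since $2\mfendloss(\matrixend) \geq -(|z|+|z'|)|\epsilon|$ trivially. The main subtlety of the proof is thus not any hard estimate but the sign bookkeeping in step two: one has to carefully use the generalized hypothesis on $\sign(\det(\matrixend(0)))$ to arrange both terms in the decomposition of $w_{1,1} w_{2,2}$ to have matching signs, so that their magnitudes add rather than cancel. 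Once that is done, the remainder is routine.
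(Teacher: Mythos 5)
Your proposal is correct and follows essentially the same route as the paper's proof: sign conservation of the determinant (Lemma~\ref{mf:lem:det_does_not_change_sign}) combined with the sign agreement of $w_{1,2},w_{2,1}$ with $z,z'$ forces $\abs{w_{1,1}w_{2,2}} > \abs{w_{1,2}w_{2,1}}$ and $w_{2,2}\neq 0$, after which dividing by $\abs{w_{2,2}}$ and applying Equation~\eqref{mf:eqn:perturbed_entries_loss_bound} gives the bound. Your phrasing of the sign bookkeeping as a same-sign decomposition $w_{1,1}w_{2,2}=\det(\matrixend)+w_{1,2}w_{2,1}$ is just a cleaner packaging of the paper's two-case argument, and your algebraic verification of the second inequality is also sound.
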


\begin{proof}
	According to Lemma~\ref{mf:lem:det_does_not_change_sign}, the determinant of $\matrixend$ does not change sign, \ie~it remains equal to $\sign ( z \cdot z' )$ (the initial sign assumed).
	Under the assumption that $\mfendloss (\matrixend) < \min \{ z^2 / 2, z'^2 / 2 \}$, both $w_{1,2}$ and $w_{2,1}$ have the same signs as $z$ and $z'$, respectively, implying that $w_{2,2} \neq 0$ (otherwise we have a contradiction to the sign of the end matrix determinant).
	If $ z \cdot z' > 0$, the determinant is positive as well, and it holds that $w_{1,1} w_{2,2} > w_{1,2} w_{2,1} > 0$.
	Otherwise, if $ z \cdot z' < 0$ we have $w_{1,1} w_{2,2} < w_{1,2} w_{2,1} < 0$.
	Putting it together we may write $\abs{w_{1,1} w_{2,2}} > \abs{w_{1,2} w_{2,1}}$.
	Dividing by $\abs{w_{2,2}}$ and applying the bounds from Equation~\eqref{mf:eqn:perturbed_entries_loss_bound} then completes the proof:
	\[
	\abs{w_{1,1}} > \frac{ ( \abs{z} - \sqrt{2 \mfendloss (\matrixend) } ) ( |z'| - \sqrt{2 \mfendloss (\matrixend) } ) }{ \abs{\epsilon} + \sqrt{2 \mfendloss (\matrixend) } } \geq \frac{\abs{z} \cdot |z'|}{\abs{ \epsilon } + \sqrt{2 \mfendloss (\matrixend) } } - (\abs{z} + |z'|)
	\text{\,.}
	\]
\end{proof}
We are now able to see that, indeed, the smaller $\abs{\epsilon}$ is compared to $\abs{z \cdot z'}$, the higher $\abs{w_{1,1}}$ will be driven when the loss is minimized.
With Lemma~\ref{mf:lem:perturbed_w11_bound} in place, we are now able to bound the singular values of $\matrixend$.
\begin{lemma} \label{mf:lem:perturbed_W_singular_values_bound}
	The singular values of $\matrixend$ fulfill:
	\be
	\begin{split}
		& \sigma_1(\matrixend) \geq \frac{1}{\sqrt{2}} \cdot |w_{1,1}|  - \sqrt{2\mfendloss (\matrixend)} ~, \\ & \sigma_2(\matrixend) \leq 4 \abs{\epsilon} + \left (4 + \frac{ \sqrt{ \abs{z} \cdot |z'| } }{\min \{\abs{z} , |z'| \}} \right ) \sqrt{2 \mfendloss (\matrixend)}
		\text{\,.}
	\end{split}
	\label{mf:eqn:perturbed_sing_values_no_assump_bounds}
	\ee
	Furthermore, if $\mfendloss (\matrixend) < \min \left \{ z^2 / 2 ~ , ~ z'^2 / 2 \right \}$, the bound on $\sigma_2 (\matrixend)$ may be simplified:
	\be
	\sigma_2(\matrixend) \leq 4 \abs{\epsilon} + 4 \sqrt{2 \mfendloss (\matrixend)}
	\text{\,.}
	\label{mf:eqn:perturbed_sing_values_with_assump_bounds}
	\ee
\end{lemma}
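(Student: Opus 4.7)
The strategy mirrors that of Lemma~\ref{mf:lem:W_singular_values_bound} in the unperturbed case, the key difference being that now $\widetilde{\S}$ is not contained in the symmetric matrices, so the singular values of the projected matrix cannot be read off from a simple eigenvalue formula. The first step is to define $\Wbf_{\widetilde{\S}} := \begin{pmatrix} w_{1,1} & z \\ z' & \epsilon \end{pmatrix}$, the orthogonal projection of $\matrixend$ onto the affine solution set $\widetilde{\S}$. By construction $\|\matrixend - \Wbf_{\widetilde{\S}}\|_{Fro} = \sqrt{2\mfendloss(\matrixend)}$, so Corollary~8.6.2 in~\cite{golub2012matrix} (Weyl/Mirsky) yields
\[
|\sigma_i(\matrixend) - \sigma_i(\Wbf_{\widetilde{\S}})| \leq \sqrt{2\mfendloss(\matrixend)} \quad , ~ i = 1, 2
\text{\,.}
\]
Once this perturbation bound is in place, the proof reduces to bounding $\sigma_1$ from below and $\sigma_2$ from above for the explicit $2 \times 2$ matrix $\Wbf_{\widetilde{\S}}$.

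The lower bound on $\sigma_1$ is immediate: since $\sigma_1(\Wbf_{\widetilde{\S}}) \geq \|\Wbf_{\widetilde{\S}} \ebf_1\|_2 = \sqrt{w_{1,1}^2 + z'^2} \geq |w_{1,1}|$ (any column norm, indeed any entry, lower bounds the spectral norm), combining with Weyl gives $\sigma_1(\matrixend) \geq |w_{1,1}| - \sqrt{2\mfendloss(\matrixend)}$, which in particular implies the stated (weaker) bound with the $1/\sqrt{2}$ factor. For $\sigma_2$, I would exploit the determinant identity for $2 \times 2$ matrices, $\sigma_1(\Wbf_{\widetilde{\S}}) \cdot \sigma_2(\Wbf_{\widetilde{\S}}) = |\det(\Wbf_{\widetilde{\S}})| = |w_{1,1}\epsilon - zz'|$, together with two alternative lower bounds on $\sigma_1(\Wbf_{\widetilde{\S}})$: namely $\sigma_1 \geq |w_{1,1}|$ and $\sigma_1 \geq \|\Wbf_{\widetilde{\S}}\|_{Fro}/\sqrt{2} \geq \sqrt{(z^2 + z'^2)/2}$. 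A case split based on whether $|w_{1,1}|$ exceeds $\sqrt{(z^2+z'^2)/2}$ then yields (after the triangle inequality $|w_{1,1}\epsilon - zz'| \leq |w_{1,1}||\epsilon| + |zz'|$ and using $z^2 + z'^2 \geq 2|zz'|$) a bound of the form $\sigma_2(\Wbf_{\widetilde{\S}}) \leq c_1 |\epsilon| + c_2 \sqrt{|zz'|}$ with explicit constants, which combined with Weyl delivers the general bound~\eqref{mf:eqn:perturbed_sing_values_no_assump_bounds} (after absorbing the loss-independent $\sqrt{|zz'|}$ into the loss-dependent term using $\sqrt{|zz'|}/\min\{|z|,|z'|\} \cdot \sqrt{2\mfendloss(\matrixend)}$ whenever $\sqrt{2\mfendloss(\matrixend)} \geq \min\{|z|, |z'|\}$).

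For the refined bound~\eqref{mf:eqn:perturbed_sing_values_with_assump_bounds} under the assumption $\mfendloss(\matrixend) < \min\{z^2/2, z'^2/2\}$, the idea is to use a tailored rank-one approximation: since $|w_{1,1}|$ is large under this regime (by Lemma~\ref{mf:lem:perturbed_w11_bound}), the matrix $R := \begin{pmatrix} w_{1,1} & z \\ z' & zz'/w_{1,1} \end{pmatrix}$ is rank one and at distance $\|\Wbf_{\widetilde{\S}} - R\|_{Fro} = |\epsilon - zz'/w_{1,1}|$ from $\Wbf_{\widetilde{\S}}$. Thus $\sigma_2(\Wbf_{\widetilde{\S}}) \leq |\epsilon| + |zz'|/|w_{1,1}|$, and substituting the lower bound $|w_{1,1}| > |zz'|/(|\epsilon| + \sqrt{2\mfendloss}) - (|z| + |z'|)$ from Lemma~\ref{mf:lem:perturbed_w11_bound}, together with Weyl, will (after algebraic simplification exploiting $\sqrt{2\mfendloss} < \min\{|z|, |z'|\}$) produce~\eqref{mf:eqn:perturbed_sing_values_with_assump_bounds}.

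\textbf{Main obstacle.} The principal difficulty is the bookkeeping required in the general ($\sigma_2$) case: the constants $4$ and $\sqrt{|zz'|}/\min\{|z|, |z'|\}$ in the target bound do not arise from a single clean estimate but rather from piecing together several sub-cases (large versus small $|w_{1,1}|$, small versus large loss) and absorbing loss-independent residuals such as $\sqrt{|zz'|}$ into loss-dependent expressions via the inequality $\sqrt{2\mfendloss(\matrixend)} \geq \min\{|z|, |z'|\}$ when the loss is large enough that the refined bound does not apply. Making these constants match the statement is tedious but should go through via careful triangle-inequality manipulations analogous to those in Lemma~\ref{mf:lem:W_singular_values_bound}, only with the asymmetry of $\Wbf_{\widetilde{\S}}$ replacing the explicit eigenvalue formula that was available in the symmetric setting.
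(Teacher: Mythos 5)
Your overall skeleton matches the paper's: project $\matrixend$ onto $\widetilde{\S}$, apply the Weyl/Mirsky perturbation bound, and estimate the singular values of the explicit $2\times 2$ projected matrix. Your $\sigma_1$ bound is correct (and slightly stronger than required), and your determinant-based, case-split route to the general $\sigma_2$ bound is a legitimate alternative to the paper's explicit singular-value formulas; note, however, that in the small-loss regime the general bound must be inherited from the refined bound \eqref{mf:eqn:perturbed_sing_values_with_assump_bounds}, so everything hinges on that refined bound being established.

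That is where there is a genuine gap. The estimate $\sigma_2(\Wbf_{\widetilde{\S}}) \leq \abs{\epsilon} + \abs{zz'}/\abs{w_{1,1}}$ cannot be converted into $O(\abs{\epsilon} + \sqrt{2\mfendloss(\matrixend)})$ with an absolute constant via Lemma~\ref{mf:lem:perturbed_w11_bound}. The first form of that lemma gives $\abs{zz'}/\abs{w_{1,1}} < \abs{zz'}\brk{\abs{\epsilon}+\sqrt{2\mfendloss}}/\brk{(\abs{z}-\sqrt{2\mfendloss})(\abs{z'}-\sqrt{2\mfendloss})}$, and the prefactor $\abs{zz'}/\brk{(\abs{z}-\sqrt{2\mfendloss})(\abs{z'}-\sqrt{2\mfendloss})}$ is unbounded as $\sqrt{2\mfendloss} \uparrow \min\{\abs{z},\abs{z'}\}$~---~a regime permitted by the hypothesis $\mfendloss < \min\{z^2/2, z'^2/2\}$~---~so no constant such as $4$ emerges. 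The second form, $\abs{w_{1,1}} > \abs{zz'}/(\abs{\epsilon}+\sqrt{2\mfendloss}) - (\abs{z}+\abs{z'})$, is vacuous whenever $\abs{\epsilon}+\sqrt{2\mfendloss} \geq \abs{zz'}/(\abs{z}+\abs{z'})$, so it cannot rescue the argument either. The structural issue is that a denominator of $\abs{w_{1,1}}$ alone is too weak; $\abs{z}$ and $\abs{z'}$ must appear additively in the denominator. The fix~---~which is what the paper does, and which your own general-bound argument already contains in embryo~---~is to bound $\sigma_1(\Wbf_{\widetilde{\S}}) \geq \norm{\Wbf_{\widetilde{\S}}}_{Fro}/\sqrt{2} \geq (\abs{w_{1,1}}+\abs{z}+\abs{z'})/\sqrt{6}$ and then use the second form of Lemma~\ref{mf:lem:perturbed_w11_bound} rearranged as $\abs{w_{1,1}}+\abs{z}+\abs{z'} > \abs{zz'}/(\abs{\epsilon}+\sqrt{2\mfendloss})$, which is valid unconditionally under the small-loss hypothesis. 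Together with $\abs{w_{1,1}}\abs{\epsilon}/\sigma_1 \leq \abs{\epsilon}$ and one application of Weyl, this yields $\sigma_2(\matrixend) \leq (1+\sqrt{6})(\abs{\epsilon}+\sqrt{2\mfendloss}) + \sqrt{2\mfendloss} \le 4\abs{\epsilon} + 4\sqrt{2\mfendloss}$, recovering the stated constants.
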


\begin{proof}
	Define $\Wbf_{\widetilde{\S}} := \begin{pmatrix} w_{1,1} & z' \\ z & \epsilon \end{pmatrix}$, the orthogonal projection of $\matrixend$ onto the solution set $\widetilde{\S}$.
	From Corollary 8.6.2 in~\cite{golub2012matrix} we know that:
	\be
	|\sigma_i (\matrixend) - \sigma_i (\Wbf_{\widetilde{\S}})| \leq \norm{\matrixend - \Wbf_{\widetilde{\S}}}_{Fro} = \sqrt{2 \mfendloss (\matrixend)}  \quad ,~ i = 1,2 \text{\,.}
	\label{mf:eqn:perturbed_W_singular_value_perturbation_bound}
	\ee
	This means that any bound on the singular values of $\Wbf_{\widetilde{\S}}$ can be transferred to those of $\matrixend$ (up to an additive loss-dependent term).
	It is straightforwardly verified that the squared singular values of $\Wbf_{\widetilde{\S}}$ are
	\be
	\begin{split}
		\sigma_{1}^2 (\Wbf_{\widetilde{\S}}) & = \frac{1}{2} \left ( w_{1,1}^2 + z^2 + z'^2 + \epsilon^2 + \sqrt{\left ( w_{1,1}^2 + z^2 + z'^2 + \epsilon^2 \right )^2 -4 \left ( w_{1,1} \epsilon - z z' \right )^2} \right ) \text{\,,} \\
		\sigma_{2}^2 (\Wbf_{\widetilde{\S}}) & = \frac{1}{2} \left ( w_{1,1}^2 + z^2 + z'^2 + \epsilon^2 - \sqrt{\left ( w_{1,1}^2 + z^2 + z'^2 + \epsilon^2 \right )^2 -4 \left ( w_{1,1} \epsilon - z z' \right )^2} \right ) \text{\,.}
	\end{split}
	\label{mf:eqn:perturbed_W_s_singular_values}
	\ee
	Note that the term inside the square roots is non-negative for all $w_{1,1}, z, z', \epsilon$.
	Since all elements in the expression for $\sigma_1 ^2 (\Wbf_{\widetilde{\S}})$ are non-negative, we have $\sigma_{1} (\Wbf_{\widetilde{\S}}) \geq (1 / \sqrt{2}) \cdot \abs{w_{1,1}}$.
	Combining this with Equation~\eqref{mf:eqn:perturbed_W_singular_value_perturbation_bound} completes the lower bound for $\sigma_1(\matrixend)$.
	
	Next, let $\Wbf_{\widetilde{\S}_0} := \begin{pmatrix} w_{1,1} & z' \\ z & 0 \end{pmatrix}$ be the matrix obtained by replacing the bottom-right entry of $\Wbf_{\widetilde{\S}}$ by~$0$.
	Replacing $\epsilon$ with $0$ in Equation~\eqref{mf:eqn:perturbed_W_s_singular_values}, and applying the identity $a - b = \frac{a^2 - b^2}{a + b}$ for $a, b \in \R$ such that $a + b \neq 0$, we get:
	\be
	\begin{split}
		\sigma_{2}^2 (\Wbf_{\widetilde{\S}_0}) & = \frac{2 z^2 z'^2}{ w_{1,1}^2 + z^2 + z'^2 + \sqrt{\left ( w_{1,1}^2 + z^2 + z'^2 \right )^2 -4 z^2 z'^2 } } \\ 
		& \leq \frac{2 z^2 z'^2}{ w_{1,1}^2 + z^2 + z'^2 }
		\text{\,.}
	\end{split}
	\label{mf:eqn:sq_sing_2_W_s_0_upper_bound}
	\ee
	We initially prove Equation~\eqref{mf:eqn:perturbed_sing_values_with_assump_bounds} in the case where $\mfendloss (\matrixend) < \min \left \{ z^2 / 2 ~ , ~ z'^2 / 2 \right \}$.
	By lifting said assumption we then show that the bound on $\sigma_2 (\matrixend)$ in Equation~\eqref{mf:eqn:perturbed_sing_values_no_assump_bounds} holds for any loss value.
	Under the assumption that $\mfendloss (\matrixend) < \min \left \{ z^2 / 2 ~ , ~ z'^2 / 2 \right \}$, taking the square root of both sides in Equation~\eqref{mf:eqn:sq_sing_2_W_s_0_upper_bound}, we arrive at the following bound:
	\[
	\begin{split}
		\sigma_{2} (\Wbf_{\widetilde{\S}_0}) & \leq \sqrt{2} \cdot \frac{\abs{z} \cdot |z'|}{ \sqrt{w_{1,1}^2 + z^2 + z'^2} } \\
		& \leq \sqrt{6} \cdot \frac{\abs{z} \cdot |z'|}{ \abs{w_{1,1}} + \abs{z} + |z'| } \\
		& \leq \sqrt{6} \cdot \frac{\abs{z} \cdot |z'|}{ \frac{\abs{z} \cdot |z'|}{\abs{ \epsilon } + \sqrt{2 \mfendloss (\matrixend) } } } \\
		& \leq 3 \left ( \abs{ \epsilon } + \sqrt{2 \mfendloss (\matrixend) } \right ) 
		\text{\,,}
	\end{split}
	\]
	where in the second transition we applied the inequality $\sqrt{w_{1,1}^2 + z^2 + z'^2} \geq (\abs{w_{1,1}} + \abs{z} + |z'|) / \sqrt{3}$, and in the third made use of the bound on $\abs{w_{1,1}}$ (Lemma~\ref{mf:lem:perturbed_w11_bound}).
	Applying Corollary 8.6.2 from~\cite{golub2012matrix} twice, once for the matrices $\matrixend$ and $\Wbf_{\widetilde{\S}}$, and another for $\Wbf_{\widetilde{\S}}$ and $\Wbf_{\widetilde{\S}_0}$, we have:
	\[
	\sigma_2 (\matrixend) \leq 3 \left ( \abs{ \epsilon } + \sqrt{2 \mfendloss (\matrixend) } \right ) + \abs{\epsilon} + \sqrt{2 \mfendloss (\matrixend) } =  4 \left ( \abs{ \epsilon } + \sqrt{2 \mfendloss (\matrixend) } \right )
	\text{\,,}
	\]
	achieving the desired result from Equation~\eqref{mf:eqn:perturbed_sing_values_with_assump_bounds}.
	It remains to see that the bound on $\sigma_2(\matrixend)$ in Equation~\eqref{mf:eqn:perturbed_sing_values_no_assump_bounds} holds regardless of the loss value.
	When $\mfendloss (\matrixend) < \min \left \{ z^2 / 2 ~ , ~ z'^2 / 2 \right \}$ it obviously holds since it is only looser than the bound already obtained under this assumption.
	Otherwise, going back to Equation~\eqref{mf:eqn:sq_sing_2_W_s_0_upper_bound}, it can be seen that
	\[
	\sigma_{2}^2 (\Wbf_{\widetilde{\S}_0}) \leq \frac{2 z^2 z'^2}{ (z - z')^2 + 2 \abs{z} \cdot |z'| } \leq  \abs{z} \cdot |z'|
	\text{\,.}
	\]
	Thus, $\sigma_{2} (\Wbf_{\widetilde{\S}_0}) \leq \sqrt{ \abs{z} \cdot |z'| }$.
	Following the same procedure as before (applying Corollary 8.6.2 from~\cite{golub2012matrix}), combined with the fact that $\mfendloss (\matrixend) \geq \min \left \{ z^2 / 2 ~ , ~ z'^2 / 2 \right \}$ concludes the proof:
	\[
	\begin{split}
		\sigma_2 (\matrixend) & \leq  \sqrt{ \abs{z} \cdot |z'| } + \abs{\epsilon} + \sqrt{2 \mfendloss (\matrixend) } \\
		& \leq \frac{ \sqrt{ \abs{z} \cdot |z'| } }{ \min \{\abs{z} , |z'| \} } \cdot \sqrt{2 \mfendloss (\matrixend) } + \abs{\epsilon} + \sqrt{2 \mfendloss (\matrixend) } \\
		& \leq 4 \abs{\epsilon} + \left (4 + \frac{ \sqrt{ \abs{z} \cdot |z'| } }{ \min \{\abs{z} , |z'| \} } \right ) \sqrt{2 \mfendloss (\matrixend)}
		\text{\,.}
	\end{split}
	\]
	
\end{proof}

\subsubsection{Proof of Equation~\eqref{mf:eq:norms_lb_perturb} (Lower Bound for Quasi-Norm)} \label{mf:sec:norm_bound_perturb}

Turning our attention to $\norm{\matrixend}$, following the same steps as in the proof of Theorem~\ref{mf:thm:norms_up_finite} (Appendix~\ref{mf:sec:norm_bound}) will lead to a generalized bound.
By the triangle inequality:
\be
\norm{ \matrixend } \geq \frac{1}{c_{\norm{\cdot}}} \norm{w_{1,1} \ebf_1 \ebf_1^\top} - \norm{\matrixend - w_{1,1} \ebf_1 \ebf_1^\top}
\text{\,,}
\label{mf:eqn:perturb_norm_triangle_ineq}
\ee
where $c_{\norm{\cdot}} \geq 1$ is a constant with which $\norm{\cdot}$ satisfies the weakened triangle inequality (see Footnote~\ref{note:qnorm}).
Let us initially assume that $\mfendloss (\matrixend) < \min \{ z^2 / 2, z'^2 / 2 \}$.
We later lift this assumption, delivering a bound that holds for all loss values.
Invoking Equation~\eqref{mf:eqn:perturbed_entries_loss_bound} we may bound the negative term in Equation~\eqref{mf:eqn:perturb_norm_triangle_ineq} as follows:
\[
\begin{split}
	\norm{\matrixend - w_{1,1} \ebf_1 \ebf_1^\top} & \leq c_{\norm{\cdot}} |w_{2,2}| \norm{\ebf_2 \ebf_2^\top} + c_{\norm{\cdot}}^2 \left ( |w_{2,1}| \norm{\ebf_2 \ebf_1^\top} + |w_{1,2}| \norm{\ebf_1 \ebf_2^\top} \right ) \\
	& \leq 3 c_{\norm{\cdot}}^2 \left ( \max \{ \abs{z}, |z'|, \abs{\epsilon} \} + \sqrt{2 \mfendloss (\matrixend)} \right ) \max_{ \substack{ i,j \in \{1,2\} \\ (i,j) \neq (1,1) } } \norm{ \ebf_i \ebf_j^\top } \\
	& \leq 6 c_{\norm{\cdot}}^2 \max \{ \abs{z}, |z'|, \abs{\epsilon} \} \cdot \max_{ \substack{ i,j \in \{1,2\} \\ (i,j) \neq (1,1) } } \norm{ \ebf_i \ebf_j^\top } 
	\text{\,,}
\end{split}
\]
Returning to Equation~\eqref{mf:eqn:perturb_norm_triangle_ineq}, applying the inequality above and the bound on $\abs{w_{1,1}}$ (Lemma~\ref{mf:lem:perturbed_w11_bound}) we have:
\[
\begin{split}
	\norm{\matrixend} & \geq \frac{\norm{\ebf_1 \ebf_1^\top}}{c_{\norm{\cdot}}} \Big ( \frac{\abs{z} \cdot |z'|}{\abs{ \epsilon } + \sqrt{2 \mfendloss (\matrixend) } } - \abs{z} - |z'| \Big )  \\[1mm]
	& \hspace{5mm} - 6 c_{\norm{\cdot}}^2 \max \{ \abs{z}, |z'|, \abs{\epsilon} \} \max_{ \substack{ i,j \in \{1,2\} \\ (i,j) \neq (1,1) } } \norm{ \ebf_i \ebf_j^\top } \\
	& \geq \frac{\norm{\ebf_1 \ebf_1^\top}}{c_{\norm{\cdot}}} \cdot \frac{\abs{z} \cdot |z'|}{\abs{ \epsilon } + \sqrt{2 \mfendloss (\matrixend) } } - 8 c_{\norm{\cdot}}^2 \max \{ \abs{z}, |z'|, \abs{\epsilon} \} \cdot \max_{ i,j \in \{1,2\}  } \norm{ \ebf_i \ebf_j^\top }
	\text{\,.}
\end{split}
\]
Since $\norm{\matrixend}$ is trivially lower bounded by zero, defining the constants
\[
\begin{split}
a_{\norm{\cdot}} & := \frac{\norm{\ebf_1 \ebf_1^\top}}{c_{\norm{\cdot}}} \text{\,,} \\[1mm]
b_{\norm{\cdot}} & := \max \left \{  \frac{ a_{\norm{\cdot}} \cdot \abs{z} \cdot |z'|}{\abs{ \epsilon } + \min \{ \abs{z}, |z'| \}} , 8 c_{\norm{\cdot}}^2 \max \{ \abs{z}, |z'|, \abs{\epsilon} \} \max_{ i,j \in \{1,2\}  } \norm{ \ebf_i \ebf_j^\top }  \right \}
\text{\,,}
\end{split}
\]
allows us, on the one hand, to arrive at a bound of the form:
\[
\norm{\matrixend} \geq a_{\norm{\cdot}} \cdot \frac{\abs{z} \cdot |z'|}{\abs{ \epsilon } + \sqrt{2 \mfendloss (\matrixend) } } - b_{\norm{\cdot}}
\text{\,,}
\]
and on the other hand, to remove the previous assumption on the loss: in the case where $\mfendloss (\matrixend) \geq \min \{ z^2 / 2, z'^2 / 2 \}$, the bound is non-positive and trivially holds.
Noticing this is exactly Equation~\eqref{mf:eq:norms_lb_perturb} (recall we omitted the time index $t$), concludes this part of the proof.

\subsubsection{Proof of Equation~\eqref{mf:eq:erank_ub_perturb} (Upper Bound for Effective Rank)} \label{mf:sec:effective_rank_bound_perturb}

Derivation of the upper bound for effective rank (Definition~\ref{def:erank}) is initially done under the assumption that $\mfendloss (\matrixend) < \min \{ z^2 / 8, z'^2 / 8 \}$.
We then remove this assumption, establishing a bound that holds for all loss values.

The bounds on $\sigma_1 (\matrixend)$ and $\sigma_2 (\matrixend)$ in Lemma~\ref{mf:lem:perturbed_W_singular_values_bound} give:
\[
\begin{split}
	\rho_1 (\matrixend) & =  \frac{\sigma_1 (\matrixend)}{\sigma_1 (\matrixend) + \sigma_2(\matrixend)} \\
	& \geq \frac{\sigma_1 (\matrixend)}{\sigma_1 (\matrixend) + 4 \left ( \abs{\epsilon} + \sqrt{2\mfendloss (\matrixend)} \right )} \\
	& = 1 - \frac{4 \left ( \abs{\epsilon} + \sqrt{2\mfendloss (\matrixend)} \right )}{\sigma_1 (\matrixend) + 4 \left ( \abs{\epsilon} + \sqrt{2\mfendloss (\matrixend)} \right )} \\
	& \geq  1 - \frac{4 \left ( \abs{\epsilon} + \sqrt{2\mfendloss (\matrixend)} \right )}{ \frac{1}{\sqrt{2}} \cdot \abs{w_{1,1}} + 4 \abs{\epsilon} + 3 \sqrt{2\mfendloss (\matrixend)} } \\
	& \geq 1 - \frac{4 \sqrt{2} \left ( \abs{\epsilon} + \sqrt{2\mfendloss (\matrixend)} \right )}{\abs{w_{1,1}}}
	\text{\,.}
\end{split}
\]

Additionally, under our assumption that $\mfendloss (\matrixend) < \min \{ z^2 / 8, z'^2 / 8 \}$, the bound on $\abs{w_{1,1}}$ in Lemma~\ref{mf:lem:perturbed_w11_bound} can be simplified to:
\[
\abs{w_{1,1}} \geq \frac{ ( \abs{z} - \sqrt{2 \mfendloss (\matrixend) } ) ( |z'| - \sqrt{2 \mfendloss (\matrixend) } ) }{ \abs{\epsilon} + \sqrt{2 \mfendloss (\matrixend) } } \geq \frac{ \min \{ \abs{z} , |z'| \}^2 }{4 \left ( \abs{\epsilon} + \sqrt{2 \mfendloss (\matrixend)} \right )}
\text{\,.}
\]
Combining the last two inequalities we have:
\[
\rho_2 (\matrixend) = 1 - \rho_1 (\matrixend) \leq  \frac{16 \sqrt{2} \left ( \abs{\epsilon} + \sqrt{2\mfendloss (\matrixend)} \right )^2 }{ \min \{ \abs{z} , |z'| \}^2 }
\text{\,.}
\]
It is now possible to see that, in accordance with Section~\ref{mf:sec:analysis:robust}, the smaller $\abs{\epsilon}$ is compared to $\min \{ \abs{z} , |z'| \}$, the closer to zero $\rho_2 (\matrixend)$ becomes as the loss is minimized.
Let $h \left (\rho_2 (\matrixend) \right ) := - \rho_2 (\matrixend) \cdot \ln \left (\rho_2 (\matrixend) \right ) - (1 - \rho_2 (\matrixend)) \cdot \ln \left (1 - \rho_2 (\matrixend) \right )$ denote the binary entropy function, and recall that the effective rank of the end matrix defined to be $\erank (\matrixend) := \exp \{ h \left (\rho_2 (\matrixend) \right ) \}$.
As in the proof of Theorem~\ref{mf:thm:norms_up_finite} (Appendix~\ref{mf:sec:effective_rank_bound}), we may bound the exponent on the interval $[0, \ln (2)]$ by the linear function intersecting it at these points.
That is,
\[
\erank (\matrixend) \leq 1 + \frac{1}{\ln (2)} \cdot h \left (\rho_2 (\matrixend) \right )
\text{\,.}
\]
From Lemma~\ref{mf:lem:entropy_sqrt_bound} it holds that $h \left (\rho_2 (\matrixend) \right ) \leq 2 \sqrt{\rho_2 (\matrixend)}$.
Plugging this into the inequality above leads to:
\[
\begin{split}
	\erank (\matrixend) & \leq 1 + \frac{8 \cdot 2^{\frac{1}{4}}}{\ln (2) \cdot \min \{ \abs{z} , |z'| \} } \cdot \left ( \abs{\epsilon} + \sqrt{2\mfendloss (\matrixend)} \right ) \\
	& \leq 1 + \frac{16}{\min \{ \abs{z} , |z'| \} } \cdot \left ( \abs{\epsilon} + \sqrt{2\mfendloss (\matrixend)} \right )
	\text{\,,}
\end{split}
\]
where the second transition is a slight simplification of the constants ($2^{1 /4} / \ln (2) < 2$).
As will be shown below, $\inf_{\Wbf' \in \widetilde{S}} \erank(\Wbf') = 1$.
We may thus conclude:
\[
\erank (\matrixend) \leq \inf_{\Wbf' \in \widetilde{S}} \erank(\Wbf') + \frac{16}{\min \{ \abs{z} , |z'| \} } \cdot \left ( \abs{\epsilon} + \sqrt{2\mfendloss (\matrixend)} \right )
\text{\,.}
\]
Notice that when $\mfendloss (\matrixend) \geq \min \{ z^2 / 8, z'^2 / 8 \}$ the inequality trivially holds since the right-hand side is greater than $2$ (the maximal effective rank for a $2 \times 2$ matrix).
This establishes Equation~\eqref{mf:eq:erank_ub_perturb} (time index is omitted).

It remains to prove that $\inf_{\Wbf' \in \widetilde{S}} \erank(\Wbf') = 1$.
If $\epsilon \neq 0$, it is trivial since there exists $\Wbf' \in \widetilde{\S}$ with $\rank(\Wbf') = 1$, meaning $\sigma_2 (\Wbf') = 0$ and $\erank(\Wbf') = 1$.
If $\epsilon = 0$, examining the squared singular values of $\Wbf' \in \widetilde{\S}$ (Equation~\eqref{mf:eqn:perturbed_W_s_singular_values} with $(\Wbf')_{1,1}$ in place of $w_{1,1}$) reveals that $\lim_{(\Wbf')_{1,1} \rightarrow \infty} \sigma_2 (\Wbf') = 0$, while $\lim_{(\Wbf')_{1,1} \rightarrow \infty} \sigma_1 (\Wbf') = \infty$.
Thus, there exists a matrix in $\widetilde{\S}$ with effective rank arbitrarily close to $1$.
Since the effective rank of any matrix is at least $1$, this implies that $\inf_{\Wbf' \in \widetilde{S}} \erank(\Wbf') = 1$.

\subsubsection{Proof of Equation~\eqref{mf:eq:irank_dist_ub_perturb} (Upper Bound for Distance From Infimal Rank)} \label{mf:sec:distance_from_infimal_rank_upper_bound_perturb}

We claim that the infimal rank (Definition~\ref{def:irank_dist}) of $\widetilde{\S}$ is $1$.
Since $z, z' \neq 0$, it cannot be $0$.
If $\epsilon \neq 0$, our claim is trivial since there exists $\Wbf' \in \widetilde{\S}$ with $\rank(\Wbf') = 1$.
Otherwise, inspecting the squared singular values of a matrix $\Wbf' \in \widetilde{\S}$ (Equation~\eqref{mf:eqn:perturbed_W_s_singular_values} with $(\Wbf')_{1,1}$ in place of $w_{1,1}$), we can see that, when $\epsilon = 0$, taking $(\Wbf')_{1,1}$ to infinity drives the minimal singular value towards zero ($\lim_{(\Wbf')_{1,1} \rightarrow \infty} \sigma_2 (\Wbf') = 0$).
Hence, the distance of $\widetilde{\S}$ from the set of matrices with rank $1$ or less is $0$ in this case as well.

The distance of the end matrix from the infimal rank of $\widetilde{\S}$ is therefore given by $\frodist ( \matrixend ( t ) , \M_1 ) = \sigma_2 (\matrixend (t) )$. 
From Lemma~\ref{mf:lem:perturbed_W_singular_values_bound} we have
\[
\frodist(\matrixend (t), \M_{1}) \leq 4 \abs{\epsilon} + \left (4 + \frac{ \sqrt{ \abs{z} \cdot |z'| } }{\min \{\abs{z} , |z'| \}} \right ) \sqrt{2 \mfendloss (t)}
\text{\,,}
\]
for all $t \geq 0$.

\subsubsection{Robustness to Change in Observed Locations} \label{mf:sec:robustness_observed_locations}

Lastly, we prove that the established bounds (Equations~\eqref{mf:eq:norms_lb_perturb},~\eqref{mf:eq:erank_ub_perturb} and~\eqref{mf:eq:irank_dist_ub_perturb}) are robust to a change in observed locations.
Let $(i, j) \in [2] \times [2]$ be the unobserved entry's location.
Following proof steps analogous to those in Lemmas~\ref{mf:lem:perturbed_w11_bound} and~\ref{mf:lem:perturbed_W_singular_values_bound}~---~while recalling our assumption of $\det ( \matrixend ( 0 ) )$ having same sign as $z \cdot z'$ if $i = j$ and opposite sign otherwise~---~yields identical bounds on the unobserved entry and singular values of $\matrixend$. 
Since the derivations of Equations~\eqref{mf:eq:norms_lb_perturb}, \eqref{mf:eq:erank_ub_perturb}, and~\eqref{mf:eq:irank_dist_ub_perturb} in Appendices~\ref{mf:sec:norm_bound_perturb}, \ref{mf:sec:effective_rank_bound_perturb}, and~\ref{mf:sec:distance_from_infimal_rank_upper_bound_perturb}, respectively, rely solely on the aforementioned bounds, the proof concludes.
\qed

\chapter{Implicit Regularization in Tensor Factorization}
\label{tf:app:imp_reg_tf}

\section{Extension to Tensor Sensing} \label{tf:app:sensing}

Our theoretical analyses (Sections \ref{tf:sec:dynamic} and~\ref{tf:sec:rank}) are presented in the context of tensor completion, but readily extend to the more general task of \emph{tensor sensing}~---~reconstruction of an unknown tensor from linear measurements (projections).
In this appendix we outline the extension.
Empirical demonstrations for tensor sensing are given in Appendix~\ref{tf:app:experiments:further} (Figure~\ref{tf:fig:ts_mse_ord4}).

For measurement tensors $\{ \A_i \in \R^{D_1 \times \cdots \times D_N } \}_{i = 1}^M$ and a ground truth tensor $\W^* \in \R^{D_1 \times \cdots \times D_N}$, the goal in tensor sensing is to reconstruct $\W^*$ based on $\{ \inprodnoflex{ \A_i }{ \W^* } \in \R \}_{i = 1}^{M}$, where $\inprod{\, \cdot \,}{\, \cdot \,}$ represents the standard inner product.
Similarly to tensor completion (\cf~Equation~\eqref{tf:eq:tc_loss}), a standard loss function for the task is:
\[
\L_{s} ( \W ) = \frac{1}{M} \sum\nolimits_{i = 1}^M \ell \left ( \inprod{ \A_i }{ \W }  - \inprod{ \A_i }{ \W^* }  \right )
\text{\,,}
\]
where $\L_{s} : \R^{D_1 \times \cdots \times D_N} \to \R_{\geq 0}$, and $\ell : \R \to \R_{\geq 0}$ is differentiable and locally smooth.
Note that tensor completion is a special case, in which the measurement tensors hold $1$ at a single entry and $0$ elsewhere.

Beginning with Section~\ref{tf:sec:dynamic}, its results (in particular Lemma~\ref{tf:lem:balancedness_conservation_body}, Theorem~\ref{tf:thm:dyn_fac_comp_norm_unbal} and Corollary~\ref{tf:cor:dyn_fac_comp_norm_balanced}) hold (and are proven in Appendix~\ref{tf:app:proofs}) for any differentiable and locally smooth~$\tfendloss (\cdot)$, thus they apply as is to tensor sensing.
Turning to Section~\ref{tf:sec:rank}, the extension of Theorem~\ref{tf:thm:approx_rank_1} and Corollary~\ref{corollary:converge_rank_1} to tensor sensing (with Huber loss) is straightforward.
Proofs rely on the specifics of tensor completion only in the preliminary Lemmas~\ref{tf:lem:huber_loss_const_grad_near_zero},~\ref{tf:lem:huber_loss_smooth} and~\ref{tf:lem:huber_cp_objective_is_smooth_over_bounded_domain} (Appendix~\ref{tf:app:proofs:approx_rank_1:prelim_lemmas}), for which analogous lemmas may readily be established.
Thus, up to slight changes in constants if $\max_{i = 1, \ldots, M} \normnoflex{ \A_i} > 1$, the results carry over.

\subsection{Stronger Results Under Restricted Isometry Property}
\label{tf:app:sensing:rip}

In the classic setting of \emph{matrix sensing} (tensor sensing with order~$N = 2$), a commonly studied condition on the measurement matrices is the \emph{restricted isometry property}.
This condition allows for efficient recovery when the ground truth matrix has low rank, and holds with high probability when the entries of the measurement matrices are drawn independently from a zero-mean sub-Gaussian distribution (\cf~\cite{recht2010guaranteed}).
The notion of restricted isometry property extends from matrix to tensor sensing (\ie~from order $N = 2$ to arbitrary $N \in \N_{\geq 2}$)~---~see~\cite{rauhut2017low,ibrahim2020recoverability}.
When it applies, the tensor sensing analogues of Theorem~\ref{tf:thm:approx_rank_1} and Corollary~\ref{corollary:converge_rank_1} can be strengthened as described below.

\medskip

In the context of tensor sensing, the restricted isometry property is defined as follows.
\begin{definition}
	\label{def:rip}
	We say that the measurement tensors $\{ \A_i \in \R^{D_1 \times \cdots \times D_N} \}_{i = 1}^M$ satisfy \emph{$r$-restricted isometry property} (\emph{$r$-RIP}) with parameter $\delta \in [0, 1)$ if:
	\[
	(1 - \delta) \norm{ \W }^2 \leq \sum\nolimits_{i = 1}^M \inprod{ \A_i }{ \W }^2 \leq (1 + \delta) \norm{ \W }^2
	\text{\,,}
	\]
	for all $\W \in \R^{D_1 \times \cdots \times D_N}$ of tensor rank $r$ or less.
\end{definition}
By~\cite{ibrahim2020recoverability}, given $m \in \OO ( \log (N) \cdot \sum_{n = 1}^N D_n )$ measurement tensors with entries drawn independently from a zero-mean sub-Gaussian distribution, $1$-RIP holds with high probability.
In this case, we may strengthen the tensor sensing analogue of Theorem~\ref{tf:thm:approx_rank_1}, such that it ensures that arbitrarily small initialization leads tensor factorization to follow a rank one trajectory for an arbitrary amount of time, regardless of the distance traveled.
That is, with the notations of Theorem~\ref{tf:thm:approx_rank_1}, for any time duration $T > 0$ and degree of approximation $\epsilon \in ( 0 , 1 )$, if initialization is sufficiently small, $\tftensorendbar (t)$~is within $\epsilon$~distance from a balanced rank one trajectory emanating from~$\S$ at least until time $t \geq T$.
To see it is so, notice that since the loss function during gradient flow is monotonically non-increasing, $\sum_{i = 1}^M \inprodnoflex{ \A_i }{ \W_1 (t) }^2$ is bounded through time for any rank one trajectory~$\W_1 (t)$.
In turn, since the measurement tensors satisfy $1$-RIP, all such trajectories emanating from $\S$ are confined to a ball of radius $B > 0$ about the origin, for some $B > 0$.
By the tensor sensing analogue of Theorem~\ref{tf:thm:approx_rank_1}, sufficiently small initialization ensures that there exists $\W_1 (t)$~---~a balanced rank one trajectory emanating from $\S$~---~such that $\tftensorendbar (t)$ is within $\epsilon$~distance from it at least until $t \geq T$ or $\normnoflex{\tftensorendbar (t)} \geq B + 1$.
However, we know that $\normnoflex{\W_1 (t)} \leq B$, and so $\tftensorendbar (t)$ cannot reach norm of $B + 1$ before time~$T$, as that would entail a contradiction~---~$\normnoflex{\W_1 (t)} > B$.
As a consequence of the above, in the tensor sensing analogue of Corollary~\ref{corollary:converge_rank_1}, when $1$-RIP is satisfied we need not assume all balanced rank one trajectories emanating from~$\S$ are jointly bounded.

\section{Further Experiments and Implementation Details} \label{tf:app:experiments}

\subsection{Further Experiments} \label{tf:app:experiments:further}

Figures~\ref{tf:fig:tc_huber_ord4},~\ref{tf:fig:tc_mse_ord3}, and~\ref{tf:fig:ts_mse_ord4} supplement Figure~\ref{tf:fig:tc_mse_ord4} from Section~\ref{tf:sec:experiments:dyn} by including, respectively: \emph{(i)} Huber loss (Equation~\eqref{tf:eq:huber_loss}) instead of $\ell_2$~loss; \emph{(ii)} ground truth tensors of different orders and (tensor) ranks; and \emph{(iii)} tensor sensing (see Appendix~\ref{tf:app:sensing}).
Table~\ref{table:mnist_fmnist_linear_errors} supplements Figure~\ref{tf:fig:mnist_fmnist_rank}, reporting mean squared errors of linear predictors fitted to the different datasets.

\subsection{Implementation Details} \label{tf:app:experiments:details}

Below are implementation details omitted from our experimental reports (Section~\ref{tf:sec:experiments} and Appendix~\ref{tf:app:experiments:further}).
Source code for reproducing our results and figures can be found at \url{https://github.com/noamrazin/imp_reg_in_tf} (based on the PyTorch framework~\cite{paszke2019pytorch}).

\begin{figure*}[h!]
	\begin{center}
		\subfloat{
			\includegraphics[width=0.24\textwidth]{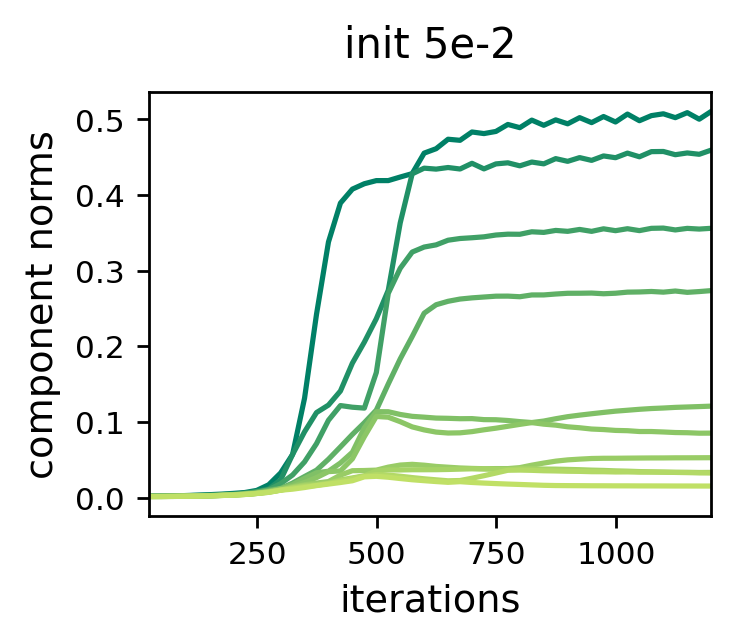}
		}
		\subfloat{
			\includegraphics[width=0.24\textwidth]{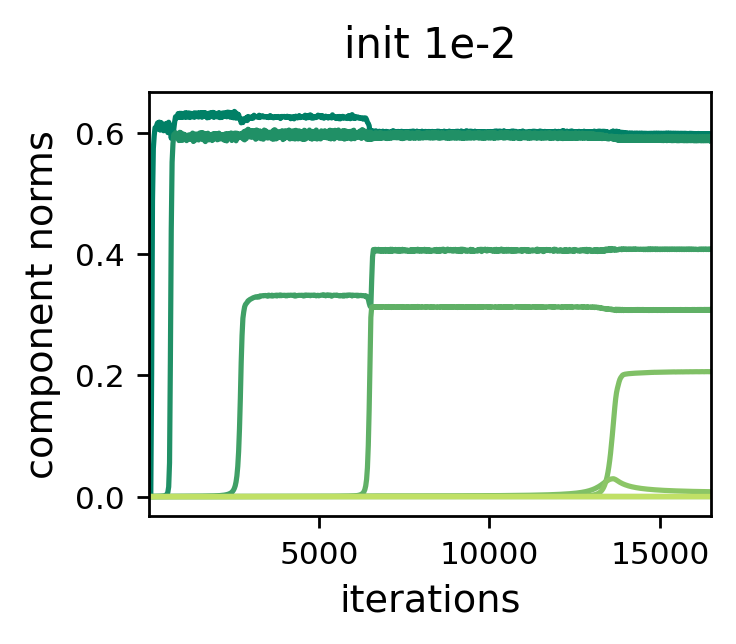}
		}
		\subfloat{
			\includegraphics[width=0.24\textwidth]{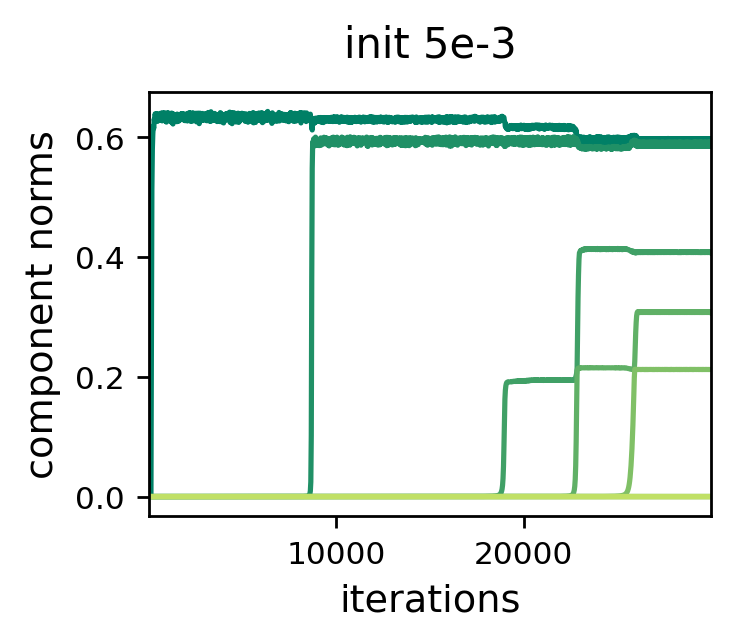}
		}
		\subfloat{
			\includegraphics[width=0.24\textwidth]{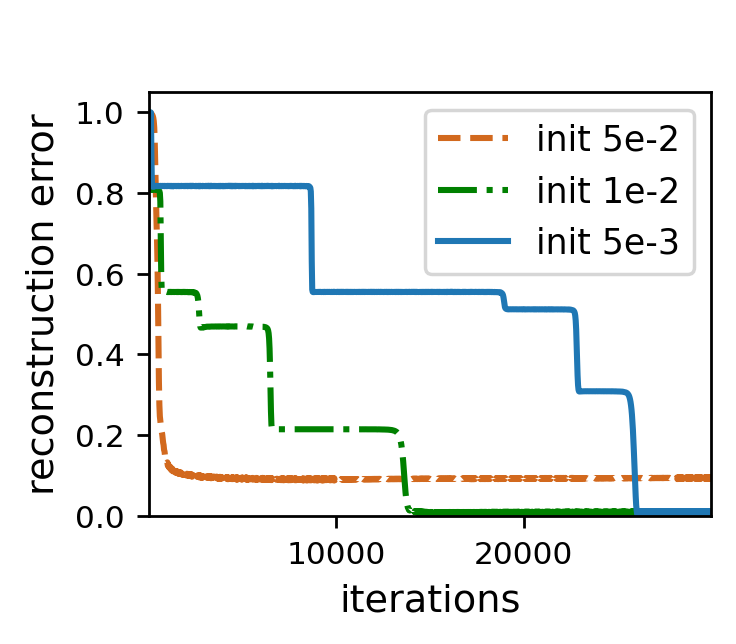}
		}
	\end{center}
	\vspace{-2mm}
	\caption{
		Dynamics of gradient descent over tensor factorization (with Huber loss) --- incremental learning of components yields low tensor rank solutions.
		This figure is identical to Figure~\ref{tf:fig:tc_mse_ord4}, except that the minimized objective (Equation~\eqref{tf:eq:tc_loss}) is based on Huber loss~($\ell_h ( \cdot )$ from Equation~\eqref{tf:eq:huber_loss}) instead of $\ell_2$~loss.
		In accordance with Assumption~\ref{tf:assump:delta_h}, the transition point~$\delta_h$ was set to $5 \cdot 10^{-7}$~---~smaller than the absolute value of observed entries (though larger $\delta_h$ led to similar results).
		For further details see caption of Figure~\ref{tf:fig:tc_mse_ord4}, as well as Appendix~\ref{tf:app:experiments:details:synth_ts}.
	}
	\vspace{-1mm}
	\label{tf:fig:tc_huber_ord4}
\end{figure*}

\begin{figure*}[h!]
	\begin{center}
		\subfloat{
			\includegraphics[width=0.24\textwidth]{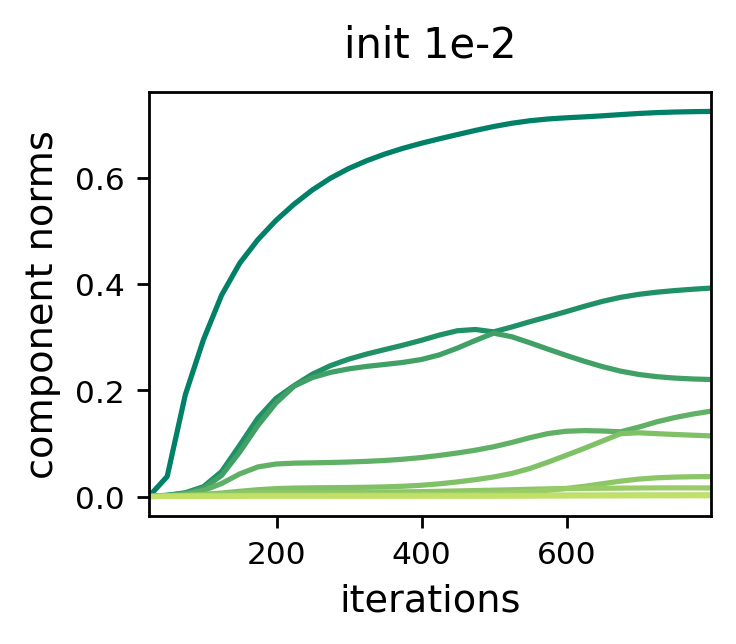}
		}
		\subfloat{
			\includegraphics[width=0.24\textwidth]{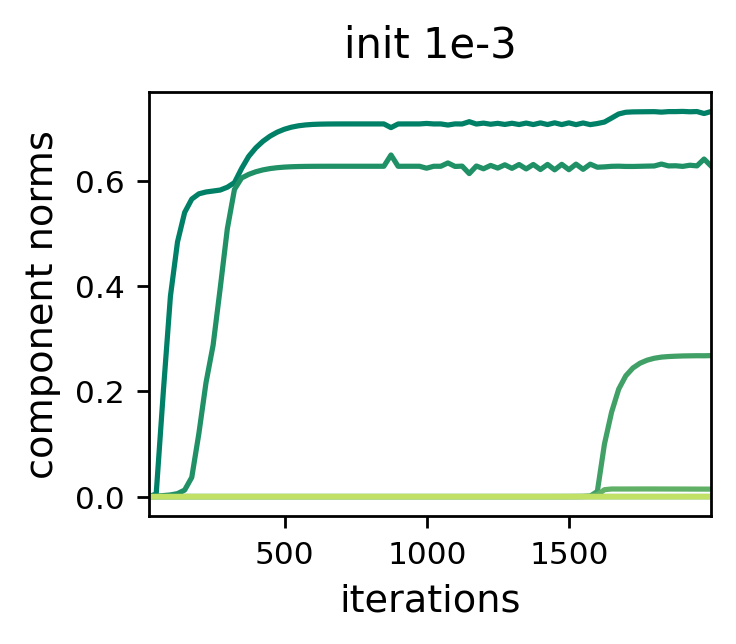}
		}
		\subfloat{
			\includegraphics[width=0.24\textwidth]{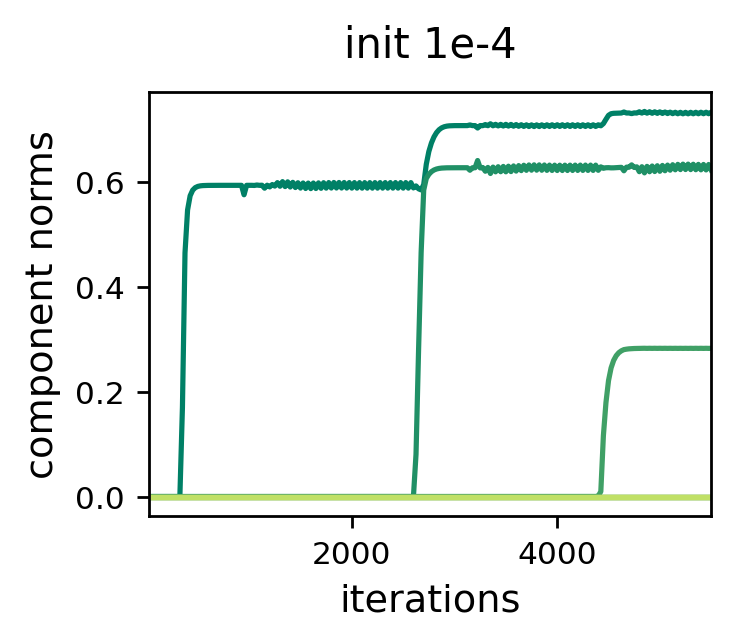}
		}
		\subfloat{
			\includegraphics[width=0.24\textwidth]{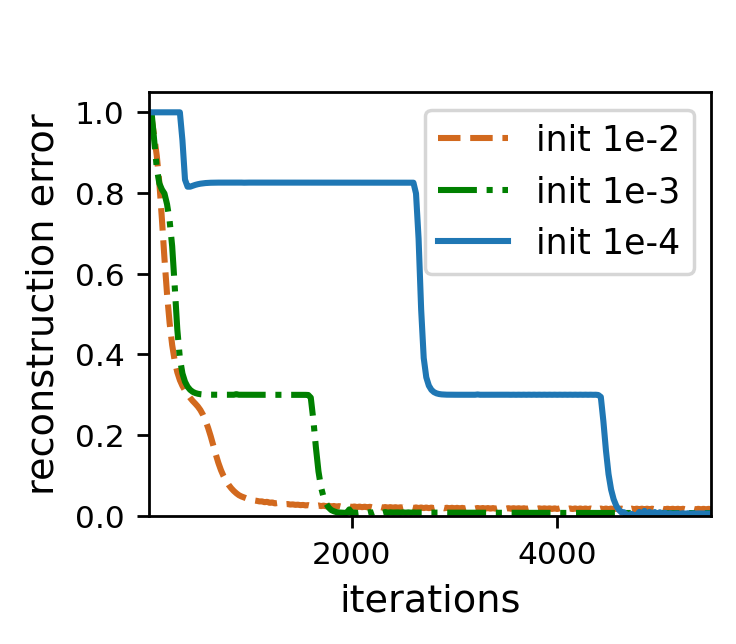}
		}
	\end{center}
	\vspace{-3mm}
	\caption{
		Dynamics of gradient descent over (order~$3$) tensor factorization --- incremental learning of components yields low tensor rank solutions.
		This figure is identical to Figure~\ref{tf:fig:tc_mse_ord4}, except that: \emph{(i)} the ground truth tensor is of (tensor) rank $3$ with size $10$-by-$10$-by-$10$ (order $3$), completed based on $300$ observed entries (smaller sample sizes led to solutions with tensor rank lower than that of the ground truth tensor); and \emph{(ii)} the employed tensor factorization consists of $R = 100$ components (large enough to express any tensor).
		For further details see caption of Figure~\ref{tf:fig:tc_mse_ord4}, as well as Appendix~\ref{tf:app:experiments:details:synth_ts}.
	}
	\vspace{-1mm}
	\label{tf:fig:tc_mse_ord3}
\end{figure*}

\begin{figure*}[h!]
	\begin{center}
		\subfloat{
			\includegraphics[width=0.24\textwidth]{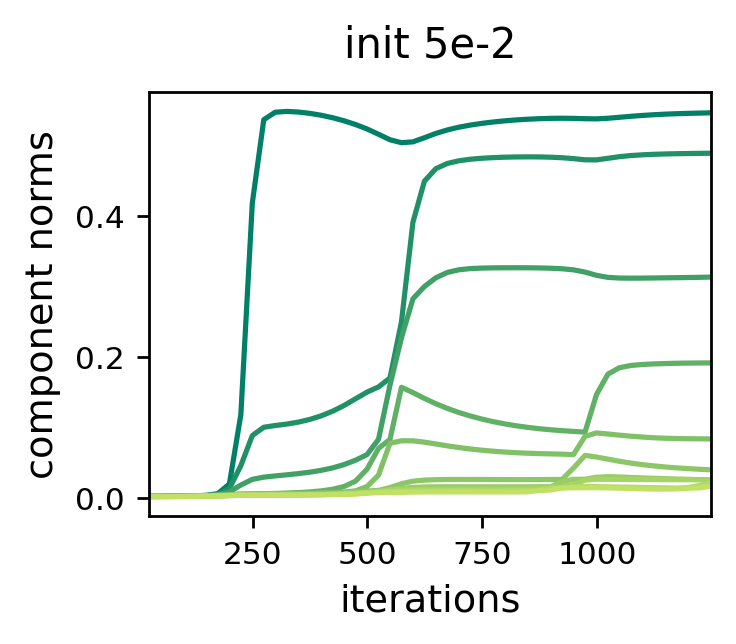}
		}
		\subfloat{
			\includegraphics[width=0.24\textwidth]{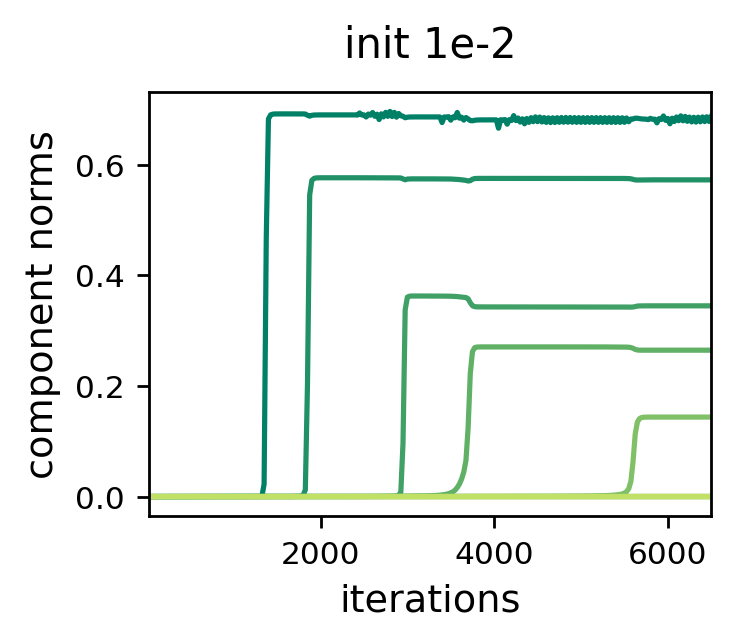}
		}
		\subfloat{
			\includegraphics[width=0.24\textwidth]{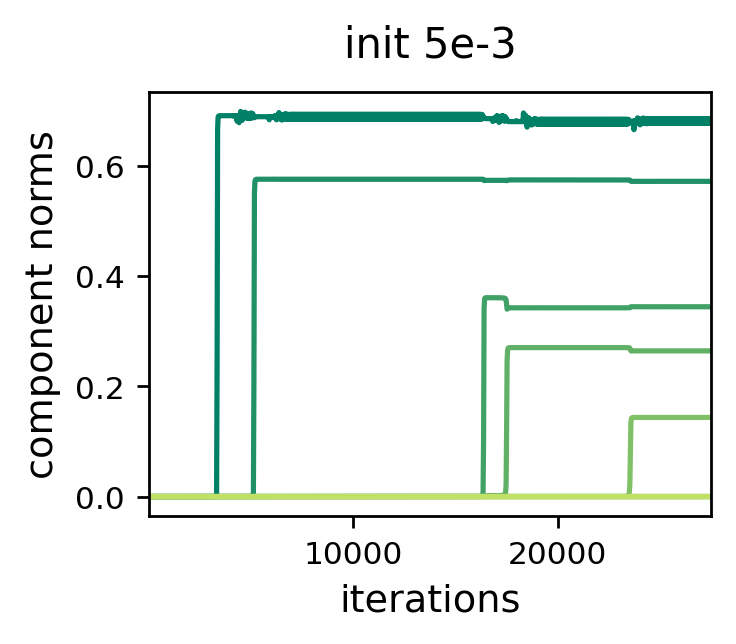}
		}
		\subfloat{
			\includegraphics[width=0.24\textwidth]{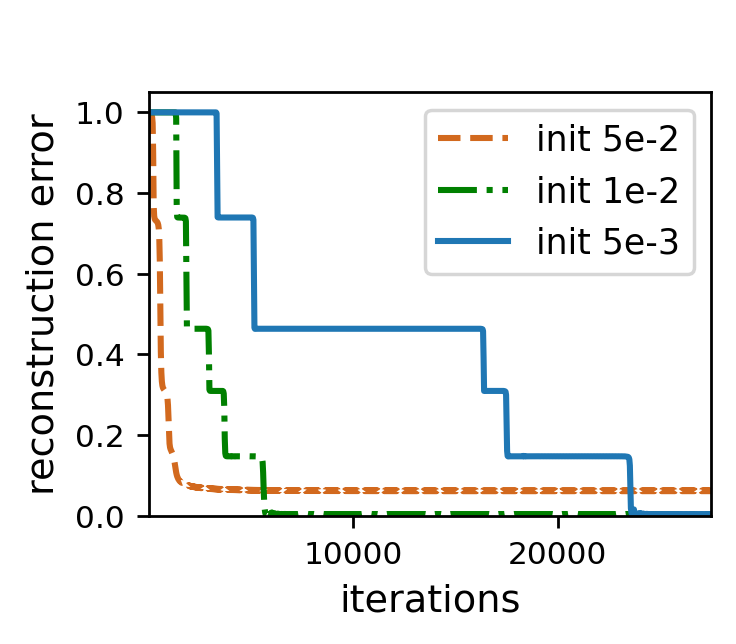}
		}
	\end{center}
	\vspace{-3mm}
	\caption{
		Dynamics of gradient descent over tensor factorization (on tensor sensing task) --- incremental learning of components yields low tensor rank solutions.
		This figure is identical to Figure~\ref{tf:fig:tc_mse_ord4}, except that reconstruction of the ground truth tensor is based on $2000$ linear measurements (instead of $2000$ randomly chosen entries), \ie~on $\{ \inprodnoflex{ \A_i }{ \W^* } \}_{i = 1}^{2000}$, where $\W^* \in \R^{D_1 \times \cdots \times D_N}$ is the ground truth tensor and \smash{$\A_1, \ldots, \A_{2000} \in \R^{D_1 \times \cdots \times D_N}$} are measurement tensors sampled independently from a zero-mean Gaussian distribution (see Appendix~\ref{tf:app:sensing} for a description of the tensor sensing task).
		For further details see caption of Figure~\ref{tf:fig:tc_mse_ord4}, as well as Appendix~\ref{tf:app:experiments:details:synth_ts}.
	}
	\label{tf:fig:ts_mse_ord4}
\end{figure*}

\begin{table}[t]
	\caption{
		Linear predictors are incapable of accurately fitting the datasets in the experiment reported by Figure~\ref{tf:fig:mnist_fmnist_rank}.
		Table presents mean squared errors (over train and test sets) attained by fitting linear predictors to the one-vs-all prediction tasks induced by MNIST and Fashion-MNIST datasets, as well as their random variants (in compliance with Figure~\ref{tf:fig:mnist_fmnist_rank}, to mitigate impact of outliers, large squared errors over test samples were clipped~---~see Appendix~\ref{tf:app:experiments:details:natural_data} for details).
		For each dataset, mean and standard deviation of train and test errors, taken over the different one-vs-all prediction tasks, are reported.
		Notice that all errors are not far from~$0.09$~---~the variance of the label~---~which is trivial to achieve.		
		For further details see caption of Figure~\ref{tf:fig:mnist_fmnist_rank}, as well as Appendix~\ref{tf:app:experiments:details:natural_data}.
	}
	\label{table:mnist_fmnist_linear_errors}
	\begin{center}
				\fontsize{7}{10}\selectfont
				\begin{tabular}{lcccc}
					\toprule
					& \multicolumn{2}{c}{MNIST} & \multicolumn{2}{c}{Fashion-MNIST}\\
					& Train& Test & Train& Test \\
					\midrule
					Original & $3.90\cdot{}10^{-2}$ $\pm$ $8.37\cdot{}10^{-3}$ & $3.92\cdot{}10^{-2}$ $\pm$ $8.04\cdot{}10^{-2}$ & $4.09\cdot{}10^{-2}$ $\pm$ $1.50\cdot{}10^{-2}$ & $4.24\cdot{}10^{-2}$ $\pm$ $1.58\cdot{}10^{-2}$ \\
					Rand Image & $8.88\cdot{}10^{-2}$ $\pm$ $4.24\cdot{}10^{-3}$ & $9.11\cdot{}10^{-2}$ $\pm$ $4.80\cdot{}10^{-3}$ & $8.88\cdot{}10^{-2}$ $\pm$ $3.11\cdot{}10^{-5}$ & $9.12\cdot{}10^{-2}$ $\pm$ $2.07\cdot{}10^{-4}$ \\
					Rand Label & $8.89\cdot{}10^{-2}$ $\pm$ $4.22\cdot{}10^{-3}$ & $9.09\cdot{}10^{-2}$ $\pm$ $4.77\cdot{}10^{-3}$ & $8.88\cdot{}10^{-2}$ $\pm$ $7.46\cdot{}10^{-5}$ & $9.11\cdot{}10^{-2}$ $\pm$ $2.23\cdot{}10^{-4}$ \\
					\bottomrule
				\end{tabular}
	\end{center}
\end{table}

\subsubsection{Dynamics of Learning (Figures~\ref{tf:fig:tc_mse_ord4},~\ref{tf:fig:tc_huber_ord4}, ~\ref{tf:fig:tc_mse_ord3}, and~\ref{tf:fig:ts_mse_ord4})} \label{tf:app:experiments:details:synth_ts}

The number of components $R$ was set to ensure an unconstrained search space, \ie~to $10^2$ and $10^3$ for tensor sizes $10$-by-$10$-by-$10$ and $10$-by-$10$-by-$10$-by-$10$ respectively.\footnote{
	For any $D_1 , \ldots , D_N \in \N$, setting $R = ( \Pi_{n = 1}^N D_n ) / \max \{ D_n \}_{n = 1}^N$ suffices for expressing all tensors in $\R^{D_1 \times \cdots \times D_N}$ (\cf~\cite{hackbusch2012tensor}).
}
Gradient descent was initialized randomly by sampling each weight independently from a zero-mean Gaussian distribution, and was run until the loss reached a value lower than $10^{-8}$ or~$10^6$ iterations elapsed.
For each figure, experiments were carried out with standard deviation of initialization varying over $\{ 0.05, 0.01, 0.005, 0.001, 0.0005, 0.0001, 0.00005 \}$.
Reported are representative runs illustrating the different types of dynamics encountered.
To facilitate more efficient experimentation, we employed the adaptive learning scheme described in Appendix~\ref{mf:app:experiments:details:tf}.

Generating a ground truth rank~$R^*$ tensor $\W^* \in \R^{D_1 \times \cdots \times D_N}$ was done as in the experiments of \cref{mf:sec:experiments:tensor}, \ie~by computing $\W^* = \sum\nolimits_{r = 1}^{R^*} \vbf^{1 }_r \tenp \cdots \tenp \vbf^{ N }_r$, with $\{ \vbf^{n }_r \in \R^{D_n} \}_{r = 1}^{R^*} \hspace{0mm}_{n = 1}^{N}$ drawn independently from the standard normal distribution.
For convenience, the ground truth tensor was normalized to be of unit Frobenius norm.
In tensor completion experiments (Figures~\ref{tf:fig:tc_mse_ord4},~\ref{tf:fig:tc_huber_ord4}, and~\ref{tf:fig:tc_mse_ord3}), the subset of observed entries was chosen uniformly at random.
For tensor sensing (Figure~\ref{tf:fig:ts_mse_ord4}), we sampled the entries of all measurement tensors independently from a zero-mean Gaussian distribution with standard deviation $10^{-2}$ (ensures measurement tensors have expected square Frobenius norm of $1$).

\subsubsection{Tensor Rank as Measure of Complexity (Figure~\ref{tf:fig:mnist_fmnist_rank} and Table~\ref{table:mnist_fmnist_linear_errors})} \label{tf:app:experiments:details:natural_data}

For both MNIST and Fashion-MNIST datasets, we quantized pixels to hold either $0$ or~$1$ by rounding grayscale values to the nearest integer.
Random input datasets were created by replacing all pixels in all images with random values ($0$ or~$1$) drawn independently from the uniform distribution.
Random label datasets were generated by shuffling labels according to a random permutation, separately for train and test sets.

Given a prediction task, fitting the corresponding tensor completion problem with a predictor of tensor rank~$k$ (or less) was done by minimizing the mean squared error over a $k$-component tensor factorization.
Stochastic gradient descent, using the Adam optimizer~\cite{kingma2015adam} with learning rate $5 \cdot 10^{-4}$, default $\beta_1, \beta_2$ coefficients, and a batch size of $5000$, was run until the loss reached a value lower than $10^{-8}$ or $10^4$ iterations elapsed.
For numerical stability, factorization weights were initialized near one.
Namely, their initial values were sampled independently from a Gaussian distribution with mean one and standard deviation~$10^{-3}$.
To accelerate convergence, label values ($0$ or~$1$) were scaled up by two during optimization (thereby ensuring symmetry about initialization), with predictions of resulting models scaled down by the same factor during evaluation.
Results reported in Table~\ref{table:mnist_fmnist_linear_errors} were obtained using the ridge regression implementation of scikit-learn~\cite{scikit-learn} with $\alpha = 0.5$ (setting $\alpha = 0$, \ie~using unregularized linear regression, led to numerical issues due to bad conditioning of the data).
Lastly, to mitigate impact of outliers, in both Figure~\ref{tf:fig:mnist_fmnist_rank} and Table~\ref{table:mnist_fmnist_linear_errors} squared errors over test samples were clipped at one, \ie~taken to be the minimum between one and the calculated error.

\section{Deferred Proofs}
\label{tf:app:proofs}

\subsection{Notation}
\label{tf:app:proofs:notations}

For $N \in \N$, let $[N] := \{ 1, \ldots, N \}$.
We use $\inprod{ \cdot }{ \cdot}$ to denote the standard Euclidean (Frobenius) inner product between two vectors, matrices, or tensors, and $\norm{ \cdot }$ to denote the norm induced by it.
Furthermore, we denote the outer and Kronecker products by $\tenp$ and $\kronp$, respectively.
For a tensor $\W \in \R^{D_1 \times \cdots \times D_N}$ and $n \in [N]$, we let $\tfmat{ \W }_{n}$ be the mode-$n$ matricization of $\W$, \ie~its arrangement as a matrix where the rows correspond to the $n$'th mode and the columns correspond to all other modes (see Section~2.4 in~\cite{kolda2009tensor}).

\subsection{Useful Lemmas}
\label{tf:app:proofs:useful_lemmas}

\subsubsection{Technical}
\label{tf:app:proofs:useful_lemmas:technical}

Following are several technical lemmas, which are used throughout the proofs.

\begin{lemma}
	\label{tf:lem:inp_with_tenp_to_mat_kronp}
	For any $\W \in \R^{D_1 \times \cdots \times D_N}$ and $\{ \w^{n} \in \R^{D_n} \}_{n = 1}^N$, where $D_1, \ldots, D_N \in \N$, it holds that:
	\[
	\inprod{ \W }{ \tenp_{n' = 1}^N \w^{n'} } = \inprod{ \tfmat{ \W }_n \cdot \kronp_{n' \neq n} \w^{n'} }{ \w^{n} } \quad , ~n = 1, \ldots, N
	\text{\,.}
	\]
\end{lemma}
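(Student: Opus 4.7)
The plan is to reduce the claimed identity to two standard facts about matricizations: (i) the Frobenius inner product of two tensors equals the Frobenius inner product of their mode-$n$ matricizations, and (ii) the mode-$n$ matricization of a pure outer product factorizes as a rank-one matrix whose left factor is $\w^{n}$ and whose right factor is the Kronecker product of the remaining vectors, in the ordering induced by the matricization convention. Once both facts are in place, the identity collapses to the elementary rank-one manipulation $\inprodnoflex{\Mbf}{\w \vbf^\top} = \w^\top \Mbf \vbf = \inprodnoflex{\Mbf \vbf}{\w}$, applied to $\Mbf = \tfmat{\W}_n$, $\w = \w^{n}$, and $\vbf = \kronp_{n' \neq n} \w^{n'}$.

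Concretely, I would proceed as follows. First I would invoke (or briefly verify from the definition of matricization) the identity $\inprodnoflex{\W}{\Utensor} = \inprodnoflex{\tfmat{\W}_n}{\tfmat{\Utensor}_n}$, which holds because matricization is just a reshaping of entries and so preserves the entrywise sum that defines the Euclidean inner product. Second, I would compute the mode-$n$ matricization of $\Utensor := \tenp_{n' = 1}^{N} \w^{n'}$. Writing the $(d_1,\ldots,d_N)$-entry of $\Utensor$ as $\prod_{n'=1}^{N} (\w^{n'})_{d_{n'}}$, and recalling that in the mode-$n$ matricization the row index is $d_n$ while the column index is a multi-index over $(d_{n'})_{n' \neq n}$ ordered consistently with the Kronecker product convention, one sees that $\tfmat{\Utensor}_n = \w^{n} \brk1{\kronp_{n' \neq n} \w^{n'}}^{\top}$.

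Third, substituting this back yields
\[
\inprodbig{\W}{\tenp_{n' = 1}^{N} \w^{n'}} = \inprodBig{\tfmat{\W}_n}{\w^{n} \brk1{\kronp_{n' \neq n} \w^{n'}}^{\top}},
\]
and the rank-one Frobenius identity $\inprodnoflex{\Mbf}{\w \vbf^\top} = \inprodnoflex{\Mbf \vbf}{\w}$ gives exactly $\inprodnoflex{\tfmat{\W}_n \cdot \kronp_{n' \neq n} \w^{n'}}{\w^{n}}$, which is the claim.

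The only subtle point, and the one I would take most care with, is the ordering convention: the identity $\tfmat{\Utensor}_n = \w^{n} (\kronp_{n' \neq n} \w^{n'})^{\top}$ is only literally true if the multi-index ordering used to flatten the non-$n$ modes in the matricization agrees with the ordering used inside the Kronecker product. The standard conventions (see \cite{kolda2009tensor}) are set up to be consistent, but I would state this alignment explicitly to avoid a hidden permutation, since otherwise one would need to insert a permutation matrix that would not appear in the final statement. No other step should present a real difficulty; the computation is essentially a one-line consequence of the two matricization facts.
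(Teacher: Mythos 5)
Your proposal is correct and amounts to the same argument as the paper's proof, which simply expands the inner product entrywise and regroups the sums over the non-$n$ modes into the matricized form; your version packages that single computation into the two standard matricization facts plus the rank-one Frobenius identity. Your explicit caveat about aligning the column-ordering convention of $\tfmat{\cdot}_n$ with the ordering inside $\kronp_{n' \neq n} \w^{n'}$ is exactly the point the paper glosses over with ``straightforward computations,'' so nothing is missing.
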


\begin{proof}
	To simplify presentation, we prove the equality for $n = 1$.
	For $n = 2, \ldots, N$, an analogous computation yields the desired result.
	By opening up the inner product and applying straightforward computations, we conclude:
	\[
	\begin{split}
		\inprod{ \W }{ \tenp_{n' = 1}^N \w^{n'} } & = \sum_{i_1 = 1}^{D_1} \ldots \sum_{i_N = 1}^{D_N} \brk1{ \W }_{i_1, \ldots, i_N}  \cdot \prod_{n' = 1}^N \brk1{ \w^{n'} }_{i_{n'}} \\
		& = \sum_{i_1 = 1}^{D_1} \brk1{ \w^{1} }_{i_1} \sum_{i_2 = 1}^{D_2} \ldots \sum_{i_N = 1}^{D_N} \brk1{ \W }_{i_1, \ldots, i_N}  \cdot \prod_{n' = 2}^N \brk1{ \w^{n'} }_{i_{n'}} \\
		& = \inprod{ \tfmat{ \W }_1 \cdot \kronp_{n' = 2}^N \w^{n'} }{ \w^{1} }
		\text{\,.}
	\end{split}
	\]
\end{proof}

\begin{lemma}
	\label{tf:lem:outer_prod_distance_bound}
	For any $\{ \abf^{n} \in \R^{D_n} \}_{n = 1}^N, \{ \bbf^{n} \in \R^{D_n} \}_{n = 1}^N$, where $D_1, \ldots, D_N \in \N$, it holds that:
	\[
	\norm*{ \tenp_{n = 1}^N \abf^{n} - \tenp_{n = 1}^N \bbf^{n} } \leq \sum_{n = 1}^N \norm*{ \abf^{n} - \bbf^{n} } \cdot \prod_{n' \neq n} \max \left \{ \normnoflex{ \abf^{n'} }, \normnoflex{ \bbf^{n'} } \right \}
	\text{\,.}
	\]
\end{lemma}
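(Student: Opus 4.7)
The plan is to use a telescoping decomposition of the difference of outer products. Specifically, for $n = 0, 1, \ldots, N$ I will define the hybrid tensors
\[
\cbf_n := \abf^{1} \tenp \cdots \tenp \abf^{n} \tenp \bbf^{n+1} \tenp \cdots \tenp \bbf^{N}
\text{\,,}
\]
with the convention that $\cbf_0 = \tenp_{n = 1}^N \bbf^{n}$ and $\cbf_N = \tenp_{n = 1}^N \abf^{n}$. Then the difference that I want to bound is the telescoping sum $\cbf_N - \cbf_0 = \sum_{n = 1}^N (\cbf_n - \cbf_{n - 1})$.

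The key observation is that each telescoping term factors cleanly: the only place where $\cbf_n$ and $\cbf_{n-1}$ differ is in the $n$'th slot, so by multilinearity of the tensor product,
\[
\cbf_n - \cbf_{n - 1} = \abf^{1} \tenp \cdots \tenp \abf^{n - 1} \tenp (\abf^{n} - \bbf^{n}) \tenp \bbf^{n + 1} \tenp \cdots \tenp \bbf^{N}
\text{\,.}
\]
Using the identity $\norm{ \xbf \tenp \ybf } = \norm{ \xbf } \cdot \norm{ \ybf }$ (which follows from the fact that the Frobenius norm of an outer product is the product of the norms) applied iteratively, I get
\[
\norm*{ \cbf_n - \cbf_{n - 1} } = \norm*{ \abf^{n} - \bbf^{n} } \cdot \prod_{n' < n} \normnoflex{ \abf^{n'} } \cdot \prod_{n' > n} \normnoflex{ \bbf^{n'} }
\text{\,.}
\]

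To finish, I apply the triangle inequality to the telescoping sum and upper bound each factor $\normnoflex{ \abf^{n'} }$ or $\normnoflex{ \bbf^{n'} }$ by $\max \{ \normnoflex{ \abf^{n'} }, \normnoflex{ \bbf^{n'} } \}$, which yields exactly the claimed bound. There is no real obstacle here~---~the proof is essentially a bookkeeping exercise. The only thing that deserves a moment of care is verifying the norm-multiplicativity identity $\norm{ \xbf_1 \tenp \cdots \tenp \xbf_N } = \prod_{n} \norm{ \xbf_n }$, but this is immediate from expanding the squared Frobenius norm as a product of sums over indices.
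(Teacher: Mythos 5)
Your proof is correct, and it is essentially the paper's argument: the paper proceeds by induction on $N$, inserting one hybrid term $\brk{\tenp_{n=1}^{N-1}\abf^n}\tenp\bbf^N$ per step, which when unrolled is exactly your telescoping sum. Both rely on the same two ingredients~---~the triangle inequality and multiplicativity of the Frobenius norm over tensor products~---~so no further changes are needed.
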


\begin{proof}
	The proof is by induction over $N \in \N$.
	For $N = 1$, the claim is trivial.
	Assuming it holds for $N - 1 \geq 1$, we show that it holds for $N$ as well:
	\[
	\begin{split}
		\norm*{ \tenp_{n = 1}^N \abf^{n} - \tenp_{n = 1}^N \bbf^{n} } & =  \norm*{ \tenp_{n = 1}^N \abf^{n} - \left ( \tenp_{n = 1}^{N - 1} \abf^{n} \right ) \tenp \bbf^{N} +  \left ( \tenp_{n = 1}^{N - 1} \abf^{n} \right ) \tenp \bbf^{N} - \tenp_{n = 1}^N \bbf^{n} } \\
		& \leq \norm*{ \abf^{N} - \bbf^{N} } \cdot \norm*{ \tenp_{n = 1}^{N - 1} \abf^{n} } + \norm*{ \tenp_{n = 1}^{N - 1} \abf^{n} - \tenp_{n = 1}^{N - 1} \bbf^{n} } \cdot \norm*{ \bbf^{N} } \\
		& \leq  \norm*{ \abf^{N} - \bbf^{N} } \cdot \prod_{n = 1}^{N - 1} \max \left \{ \norm*{ \abf^{n} }, \norm*{ \bbf^{n} } \right \} \\
		& \hspace{5mm} + \norm*{ \tenp_{n = 1}^{N - 1} \abf^{n} - \tenp_{n = 1}^{N - 1} \bbf^{n} } \cdot \max \left \{ \norm1{ \abf^{N} }, \norm1{ \bbf^{N} } \right \}
		\text{\,.}
	\end{split}
	\]
	The proof concludes by the inductive assumption for $N - 1$.
\end{proof}

\begin{lemma}
	\label{tf:lem:param_dist_to_end_to_end_dist}
	Let $B_{\norm{\cdot}}, B_{dist} > 0$ and $\{ \abf_{r}^{n} \in \R^{D_n} \}_{r = 1}^R\hspace{0mm}_{n = 1}^N, \{ \bbf_{r}^{n} \in \R^{D_n} \}_{r = 1}^R\hspace{0mm}_{n = 1}^N$, where $D_1, \ldots, D_N \in \N$, such that $\max \{ \normnoflex{ \abf_r^{n} }, \normnoflex{ \bbf_r^{n} } \}_{r = 1}^R\hspace{0mm}_{n = 1}^N \leq B_{\norm{\cdot}}$ and $( \sum_{r = 1}^R \sum_{n = 1}^N \normnoflex{ \abf_r^{n} - \bbf_r^{n} }^2 )^{1 / 2} \leq B_{dist}$.
	Then:
	\[
	\norm*{ \sum_{r = 1}^R \tenp_{n = 1}^N \abf_r^{n} - \sum_{r = 1}^R \tenp_{n = 1}^N \bbf_r^{n} } \leq \sqrt{R N} B_{\norm{\cdot}}^{N - 1} B_{dist}
	\text{\,.}
	\]
\end{lemma}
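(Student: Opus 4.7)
The plan is to combine the triangle inequality with the single-component bound from Lemma~\ref{tf:lem:outer_prod_distance_bound} and then apply Cauchy--Schwarz to convert the resulting $\ell_1$ sum into the $\ell_2$-type quantity $B_{dist}$. Since the statement is essentially a ``sum-of-components'' extension of Lemma~\ref{tf:lem:outer_prod_distance_bound}, I do not expect any serious obstacle; the only thing to watch is that the $\sqrt{RN}$ factor matches exactly, which forces the use of Cauchy--Schwarz at the right step rather than a cruder bound.

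First, I would write
\[
\norm*{ \sum_{r = 1}^R \tenp_{n = 1}^N \abf_r^{n} - \sum_{r = 1}^R \tenp_{n = 1}^N \bbf_r^{n} } \leq \sum_{r = 1}^R \norm*{ \tenp_{n = 1}^N \abf_r^{n} - \tenp_{n = 1}^N \bbf_r^{n} }
\]
by the triangle inequality. Next, for each fixed $r \in \{1,\ldots,R\}$, I would invoke Lemma~\ref{tf:lem:outer_prod_distance_bound} on the two outer products $\tenp_n \abf_r^n$ and $\tenp_n \bbf_r^n$. Since the assumption $\max\{\normnoflex{\abf_r^n}, \normnoflex{\bbf_r^n}\} \leq B_{\norm{\cdot}}$ holds for every $r$ and $n$, each of the $N-1$ max-factors appearing in that lemma is bounded by $B_{\norm{\cdot}}$, yielding
\[
\norm*{ \tenp_{n = 1}^N \abf_r^{n} - \tenp_{n = 1}^N \bbf_r^{n} } \leq B_{\norm{\cdot}}^{N-1} \sum_{n = 1}^N \norm*{ \abf_r^{n} - \bbf_r^{n} }.
\]

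Summing over $r$ gives an overall bound of $B_{\norm{\cdot}}^{N-1} \sum_{r=1}^R \sum_{n=1}^N \normnoflex{\abf_r^n - \bbf_r^n}$. The final step is to convert this $\ell_1$ sum into the hypothesized $\ell_2$ quantity: by Cauchy--Schwarz applied to the $RN$ scalars $\{\normnoflex{\abf_r^n - \bbf_r^n}\}_{r,n}$ (paired against the all-ones vector of the same length),
\[
\sum_{r=1}^R \sum_{n=1}^N \norm*{ \abf_r^n - \bbf_r^n } \leq \sqrt{RN} \cdot \Bigl( \sum_{r=1}^R \sum_{n=1}^N \norm*{ \abf_r^n - \bbf_r^n }^2 \Bigr)^{1/2} \leq \sqrt{RN}\, B_{dist}.
\]
Combining the three displayed inequalities yields the claimed bound $\sqrt{RN}\, B_{\norm{\cdot}}^{N-1}\, B_{dist}$, completing the proof. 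No step requires anything beyond the already-stated Lemma~\ref{tf:lem:outer_prod_distance_bound} and standard inequalities, so the argument should occupy only a few lines.
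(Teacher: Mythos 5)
Your proposal is correct and follows exactly the paper's own argument: triangle inequality over components, the per-component bound of Lemma~\ref{tf:lem:outer_prod_distance_bound} with each max-factor replaced by $B_{\norm{\cdot}}$, and finally the $\ell_1$-to-$\ell_2$ conversion (the paper states it as $\normnoflex{\x}_1 \leq \sqrt{D}\,\normnoflex{\x}$, which is the same Cauchy--Schwarz step). No gaps.
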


\begin{proof}
	Applying the triangle inequality and Lemma~\ref{tf:lem:outer_prod_distance_bound}, we have that:
	\[
	\begin{split}
		\norm*{ \sum_{r = 1}^R \tenp_{n = 1}^N \abf_r^{n} - \sum_{r = 1}^R \tenp_{n = 1}^N \bbf_r^{n} } & \leq \sum_{r = 1}^R \norm*{ \tenp_{n = 1}^N \abf_r^{n} - \tenp_{n = 1}^N \bbf_r^{n} } \\
		& \leq \sum_{r = 1}^R \sum_{n = 1}^N \norm*{ \abf_r^{n} - \bbf_r^{n} } \cdot \prod_{n' \neq n} \max \left \{ \normnoflex{ \abf_r^{n'} }, \normnoflex{ \bbf_r^{n'} } \right \} \\
		& \leq B_{\norm{\cdot}}^{N - 1} \sum_{r = 1}^R \sum_{n = 1}^N \norm*{ \abf_r^{n} - \bbf_r^{n} }
		\text{\,.}
	\end{split}
	\]
	The desired result readily follows from the fact that $\normnoflex{ \x }_1 \leq \sqrt{D} \cdot \normnoflex{ \x }$ for any $\x \in \R^D$:
	\[
	\begin{split}
		\norm*{ \sum_{r = 1}^R \tenp_{n = 1}^N \abf_r^{n} - \sum_{r = 1}^R \tenp_{n = 1}^N \bbf_r^{n} } 
		& \leq B_{\norm{\cdot}}^{N - 1} \sum_{r = 1}^R \sum_{n = 1}^N \norm*{ \abf_r^{n} - \bbf_r^{n} } \\
		& \leq B_{\norm{\cdot}}^{N - 1} \sqrt{R N} \left ( \sum_{r = 1}^R \sum_{n = 1}^N \norm*{ \abf_r^{n} - \bbf_r^{n} }^2 \right )^{1 / 2} \\
		& \leq \sqrt{R N} B_{\norm{\cdot}}^{N - 1} B_{dist}
		\text{\,.}
	\end{split}
	\]
\end{proof}

\begin{lemma}
	\label{tf:lem:ivp_no_sign_change}
	Let $f: [0, T_2) \to \R$ and $g: [0, T_1) \to \R$ be continuous functions, where $T_1 < T_2$.
	Suppose that $g(t)$ is bounded, $f(0) > 0$, and:
	\be
	\frac{d}{dt} f(t) = f(t)^p \cdot g(t) \quad, ~t \in [0, T_1)
	\text{\,,}
	\label{tf:eq:ivp_no_sign_change_diff_eq}
	\ee
	for $1 < p \in \R$.
	Then, $f(t) > 0$ for all $t \in [0, T_1]$.
\end{lemma}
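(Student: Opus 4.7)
The plan is to argue by contradiction, assuming that $f$ vanishes somewhere on $[0, T_1]$, and to derive a contradiction by separating variables in the ODE and exploiting the singularity of $f \mapsto f^{1-p}$ at the origin (valid since $p > 1$ implies $1 - p < 0$).

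First I would define $t_0 := \inf \{ t \in [0, T_1] : f(t) \leq 0 \}$, assuming this set is non-empty (otherwise $f(t) > 0$ everywhere on $[0, T_1]$ and there is nothing to prove). By continuity of $f$ and the hypothesis $f(0) > 0$, one has $t_0 > 0$, $f(t_0) = 0$, and $f(t) > 0$ for all $t \in [0, t_0)$. Note that $t_0 \leq T_1 < \infty$ (recall $T_1 < T_2$, and if $T_1$ were infinite we could replace it by any finite time exceeding $t_0$; this is the only place finiteness is used, so some care is needed if $T_1 = \infty$, but then $t_0 < \infty$ still suffices).

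Next, since $f(t) > 0$ on $[0, t_0) \subseteq [0, T_1)$, the equation~\eqref{tf:eq:ivp_no_sign_change_diff_eq} may be rewritten as
\[
\frac{d}{dt} \brk*{ \tfrac{1}{1 - p} f(t)^{1 - p} } = \frac{\dot f(t)}{f(t)^p} = g(t)
\text{\,,}
\]
which integrates, for any $t \in [0, t_0)$, to
\[
f(t)^{1 - p} = f(0)^{1 - p} + (1 - p) \int_0^t g(s)\, ds
\text{\,.}
\]
Because $g$ is bounded, say $|g(s)| \leq M$, the right-hand side is bounded in absolute value by $f(0)^{1 - p} + (p - 1) M t_0$ uniformly in $t \in [0, t_0)$. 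On the other hand, since $p > 1$ we have $1 - p < 0$, and continuity of $f$ together with $f(t_0) = 0$ force $f(t)^{1 - p} \to +\infty$ as $t \to t_0^-$. This contradicts the uniform bound, so no such $t_0$ exists and $f(t) > 0$ on all of $[0, T_1]$.

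I do not expect any serious obstacle here: the argument is a standard separation-of-variables blow-up argument, and the only subtlety is making sure the integral $\int_0^{t_0} g(s)\, ds$ is finite, which follows immediately from the boundedness of $g$ together with the finiteness of $t_0$ (guaranteed because $T_1$ is finite, or because we only need the analysis up to the putative first zero). The main thing to be careful about when writing the full proof is the endpoint case $t_0 = T_1$: the ODE is only assumed on the open interval $[0, T_1)$, but this is harmless since the contradiction is obtained from the behavior strictly inside $[0, t_0)$, combined with continuity of $f$ at $t_0$ on the closed interval $[0, T_2)$.
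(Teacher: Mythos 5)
Your proof is correct and rests on the same computation as the paper's: integrating $f(t)^{-p}\dot f(t) = g(t)$ to obtain $f(t)^{1-p} = f(0)^{1-p} + (1-p)\int_0^t g(s)\,ds$, and using boundedness of $g$ together with the negative exponent $1-p$. The only cosmetic difference is packaging~---~the paper writes down the explicit solution, invokes uniqueness of the initial value problem, and derives a uniform positive lower bound, whereas you argue by contradiction at the putative first zero via direct integration (which has the small advantage of not needing a uniqueness theorem).
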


\begin{proof}
	Consider the initial value problem induced by Equation~\eqref{tf:eq:ivp_no_sign_change_diff_eq} over the interval $[0, T_1)$, with an initial value of $f(0)$.
	One can verify by differentiation that it is solved by:
	\[
	h(t) = \left ( f (0)^{1 - p} - (p - 1) \int_{t' = 0}^t g(t') dt' \right )^{- \frac{1}{p - 1}}
	\text{\,.}
	\]
	Since the problem has a unique solution (see, \eg, Theorem~2.2 in~\cite{teschl2012ordinary}), it follows that for any $t \in [0, T_1)$:\footnote{
		A technical subtlety is that, in principle, $h( \cdot )$ may asymptote at some $\widebar{T}_1 \in [0, T_1)$.
		However, since the initial value problem has a unique solution, $f(t) = h(t)$ until that time.
		This means $h( \cdot )$ cannot asymptote before $T_1$ as that would contradict continuity of $f (\cdot)$ over $[0, T_2)$.
	}
	\[
	\begin{split}
	f(t) & = h(t) \\
	& = \left ( f (0)^{1 - p} - (p - 1) \int_{t' = 0}^t g(t') dt' \right )^{- \frac{1}{p - 1}} \\
	& \geq \left ( f (0)^{1 - p} + (p - 1) \int_{t' = 0}^{t} \abs{ g(t') } dt'  \right )^{- \frac{1}{p - 1}} 
	\text{\,.}
	\end{split}
	\]
	Recall that $g(t)$ is bounded.
	Hence, from the inequality above and continuity of $f(\cdot)$ we conclude:
	\[
	f(t) \geq \left ( f (0)^{1 - p} + (p - 1) \cdot \sup\nolimits_{t' \in [0, T_1)} \abs{g(t')} \cdot T_1 \right )^{- \frac{1}{p - 1}} > 0 ~~,~t \in [0, T_1]
	\text{\,.}
	\] 
\end{proof}

\begin{lemma}
	\label{tf:lem:gf_smooth_dist_bound}
	Let $\theta, \theta' : [0, T] \to \R^D$, where $T > 0$, be two curves born from gradient flow over a continuously differentiable function $f: \R^{D} \to \R$:
	\beas
	\theta ( 0 ) = \theta_0 \in \R^D ~ &,& ~ \tfrac{d}{dt} \theta ( t ) = - \nabla f ( \theta ( t ) ) ~~ , ~ t \in [ 0 , T ] 
	\text{\,,}
	\\ [1mm]
	\theta' ( 0 ) = \theta' _0 \in \R^D ~ &,& ~ \tfrac{d}{dt} \theta' ( t ) = - \nabla f ( \theta' ( t ) ) ~~ , ~ t \in [ 0 , T ]
	\text{\,.}
	\eeas
	Let $B > 0$, and suppose that $f( \cdot )$ is $\beta$-smooth over $\D_{B+ 1}$ for some $\beta \geq 0$,\footnote{
		That is, for any $\theta_1, \theta_2 \in \D_{B + 1}$ it holds that $\norm{ \nabla f (\theta_1) - \nabla f (\theta_2) } \leq \beta \cdot \norm{ \theta_1 - \theta_2}$.
	} where $\D_{B + 1} := \{ \theta \in \R^d : \normnoflex{ \theta } \leq B + 1 \}$.
	Then, if $\normnoflex{ \theta (0) - \theta' (0) } < \exp ( -\beta \cdot T )$, it holds that:
	\be
	\norm{ \theta (t) - \theta' (t) } \leq \norm{ \theta (0) - \theta' (0) } \cdot \exp \left ( \beta \cdot t \right )
	\text{\,}
	\label{tf:eq:gf_smooth_dist_bound}
	\ee
	at least until $t \geq T$ or $\normnoflex{ \theta' (t) } \geq B$.
	That is, Equation~\eqref{tf:eq:gf_smooth_dist_bound} holds for all $t \in [0, \min \{ T, T_B \}]$, where $T_B := \inf \{ t \geq 0 : \normnoflex{ \theta' (t) } \geq B \}$.
	
\end{lemma}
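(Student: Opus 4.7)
The plan is a standard Grönwall-type argument coupled with a continuity/bootstrap step ensuring that both trajectories remain inside the smoothness domain $\D_{B+1}$. The core computation is: whenever $\theta(t),\theta'(t) \in \D_{B+1}$, differentiating $d(t)^2 := \|\theta(t)-\theta'(t)\|^2$ along the gradient flow yields
\begin{equation*}
\tfrac{d}{dt} d(t)^2 = -2\bigl\langle \theta(t)-\theta'(t),\, \nabla f(\theta(t))-\nabla f(\theta'(t)) \bigr\rangle \le 2\beta\, d(t)^2,
\end{equation*}
by Cauchy--Schwarz and $\beta$-smoothness. This gives $\tfrac{d}{dt} d(t) \le \beta d(t)$ pointwise (at any $t$ where $d(t) > 0$; the set $\{d = 0\}$ is handled separately, e.g.\ by noting uniqueness of solutions means if $d$ vanishes at some $t$ it vanishes everywhere), and Grönwall's inequality then delivers $d(t) \le d(0) \exp(\beta t)$.

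The main obstacle, and the reason the hypothesis $\|\theta(0)-\theta'(0)\| < \exp(-\beta T)$ appears, is that the above calculation is only valid while $\theta(t)$ lies in the smoothness domain $\D_{B+1}$. We only know a priori that $\theta'(t) \in \D_B$ (for $t \le T_B$), so we need a bootstrap to guarantee $\theta(t)$ stays within distance $1$ of $\theta'(t)$. I would set
\begin{equation*}
T^* := \sup\Bigl\{ t \in [0,\min\{T,T_B\}] \;:\; d(s) \le d(0)\exp(\beta s) \text{ for all } s \in [0,t] \Bigr\},
\end{equation*}
and argue $T^* = \min\{T,T_B\}$ by contradiction. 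Continuity of $d$ and the strict inequality $d(0) < \exp(-\beta T)$ give $T^* > 0$. On $[0,T^*]$ we have $d(t) \le d(0)\exp(\beta T^*) \le d(0)\exp(\beta T) < 1$, so $\theta(t) \in \D_{B+1}$ throughout (since $\theta'(t) \in \D_B$). Hence the Grönwall computation is valid on $[0,T^*]$, and by continuity the bound $d(t) \le d(0)\exp(\beta t)$ persists on a slightly larger interval, contradicting the maximality of $T^*$ unless $T^* = \min\{T,T_B\}$.

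The routine parts are the inner-product differentiation and the Grönwall integration; the only genuinely subtle point is closing the bootstrap, for which the factor $\exp(-\beta T)$ in the hypothesis is precisely tuned to guarantee $d(t) < 1$ throughout $[0,T]$ and thereby keep $\theta(t)$ inside the domain $\D_{B+1}$ where $f$ is assumed smooth.
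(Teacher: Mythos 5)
Your proposal is correct and follows essentially the same route as the paper's proof: the identical Grönwall computation on $\|\theta(t)-\theta'(t)\|^2$ via Cauchy--Schwarz and $\beta$-smoothness, combined with a continuity/bootstrap step that uses $\|\theta(0)-\theta'(0)\|<\exp(-\beta T)$ to keep $\theta(t)$ within distance $1$ of $\theta'(t)$ and hence inside $\D_{B+1}$. The only (immaterial) difference is bookkeeping: the paper works with the first exit time of $(\theta',\theta)$ from $\D_B\times\D_{B+1}$ and shows it dominates $\min\{T,T_B\}$, whereas you take the supremum of times on which the bound holds and close by contradiction.
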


\begin{proof}
	If $\normnoflex{  \theta' (0) } \geq B$, the claim trivially holds.
	Suppose $\normnoflex{ \theta' (0) } < B$, and notice that in this case $\normnoflex{ \theta (0) } < \normnoflex{ \theta' (0) } + \exp ( - \beta \cdot T) < B + 1$.
	We examine the initial time at which $\normnoflex{  \theta' (t) } \geq B$ or $\normnoflex{ \theta (t) } \geq B + 1$.
	That is, let:
	\[
	\widebar{T}_{B } := \inf \left \{ t \in [0, T ] :  \normnoflex{  \theta' (t) } \geq B \text{ or }  \normnoflex{ \theta (t) } \geq B + 1 \right \}
	\text{\,,}
	\]
	where we take $\widebar{T}_{B} := T$ if the set is empty.
	Since both $\norm{  \theta' (t) }$ and $\norm{ \theta (t) }$ are continuous in $t$, it must be that $\widebar{T} > 0$.
	Furthermore, $\norm{  \theta' (t) } \leq B$ and $\norm{ \theta (t) } \leq B + 1$ for all $t \in [0, \widebar{T}_{B}]$.
	
	Now, define the function $g: [0, T] \to \R_{\geq 0}$ by $g(t) := \norm{ \theta(t) -  \theta' (t) }^2$.
	For any $t \in [0, \widebar{T}_{B}]$ it holds that:
	\[
	\begin{split}
		\frac{d}{dt} g(t) & = 2 \inprod{ \theta (t) -  \theta' (t) }{ \tfrac{d}{dt} \theta (t) - \tfrac{d}{dt}  \theta' (t) } \\
		& = -2 \inprod{ \theta (t) -  \theta' (t) }{ \nabla f ( \theta (t) ) - \nabla f ( \theta' (t) )  }
		\text{\,.}
	\end{split}
	\]
	By the Cauchy-Schwartz inequality and $\beta$-smoothness of $f (\cdot)$ over $\D_{B + 1}$ we have:
	\be
	\begin{split}
		\frac{d}{dt} g(t) & \leq 2 \beta \cdot \norm{ \theta (t) -  \theta' (t) }^2 = 2 \beta \cdot g(t) 
		\text{\,.}
	\end{split}
	\label{tf:eq:g_time_deriv_smoothness_bound}
	\ee
	Thus, Gronwall's inequality leads to $g(t) \leq g(0) \cdot \exp ( 2 \beta \cdot t )$.
	Taking the square root of both sides then establishes Equation~\eqref{tf:eq:gf_smooth_dist_bound} for all $t \in [0 , \widebar{T}_{B}]$.
	
	If $\widebar{T}_B = T$, the proof concludes since Equation~\eqref{tf:eq:gf_smooth_dist_bound} holds over $[0, T]$.
	Otherwise, if $\widebar{T}_B < T$, then either $\normnoflex{ \theta' (\widebar{T}_B ) } = B$ or $\normnoflex{ \theta ( \widebar{T}_B ) } = B + 1$.
	It suffices to show that in both cases $T_B \leq \widebar{T}_B$.
	In case $\normnoflex{  \theta' ( \widebar{T}_B ) } = B$, the definition of $T_B$ implies $T_B = \widebar{T}_B$.
	On the other hand, suppose $\normnoflex{ \theta ( \widebar{T}_B) } = B+ 1$.
	Since $\norm{ \theta (0) -  \theta' (0) } < \exp ( -\beta \cdot T )$, the fact that Equation~\eqref{tf:eq:gf_smooth_dist_bound} holds for $\widebar{T}_B$ gives $\norm{ \theta (\widebar{T}_B) -  \theta' (\widebar{T}_B) } \leq 1$.
	Therefore, it must be that $\norm{  \theta' ( \widebar{T}_B ) } \geq B$, and so $T_B \leq \widebar{T}_B$, completing the proof.
\end{proof}

\subsubsection{Tensor Factorization}
\label{tf:app:proofs:useful_lemmas:tf}

Suppose that we minimize the objective $\tfobj(\cdot)$ (Equations~\eqref{tf:eq:cp_objective} and~\eqref{tf:eq:end_tensor}) via gradient flow over an $R$-component tensor factorization (Equation~\eqref{tf:eq:cp_gf}), where we allow the loss $\tfendloss (\cdot)$ in Equation~\eqref{tf:eq:cp_objective} to be any differentiable and locally smooth function.
Under this setting, the following lemmas establish several results which will be of use when proving the main theorems.

\begin{lemma}
	\label{tf:lem:cp_gradient}
	For any $\{ \w_{r}^{n}  \in \R^{D_n} \}_{r = 1}^R\hspace{0mm}_{n = 1}^N$:
	\[
	\frac{\partial}{\partial \w_r^{n}} \tfobj \left (\{ \w_{r'}^{n'} \}_{r' = 1}^R\hspace{0mm}_{n' = 1}^N \right ) = \tfmat{ \nabla \tfendloss \left ( \tftensorend \right ) }_{n} \cdot \kronp_{n' \neq n} \w_r^{n'} \quad , ~r = 1, \ldots, R ~ , ~ n = 1, \ldots, N
	\text{\,,}
	\]
	where $\tftensorend$ denotes the end tensor (Equation~\eqref{tf:eq:end_tensor}) induced by $\{ \w_{r}^{n} \}_{r = 1}^R\hspace{0mm}_{n = 1}^N$.
\end{lemma}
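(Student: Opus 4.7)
The plan is to compute the gradient directly via the chain rule and then use Lemma~\ref{tf:lem:inp_with_tenp_to_mat_kronp} to rewrite the result in the required matricization/Kronecker-product form. The key observation is that the end tensor $\tftensorend = \sum_{r'=1}^R \tenp_{n'=1}^N \w_{r'}^{n'}$ is \emph{linear} in each individual weight vector $\w_r^n$ (with all other weight vectors held fixed), so differentiating $\tfobj = \tfendloss(\tftensorend)$ with respect to $\w_r^n$ reduces to a first-order expansion of a single rank-one summand.

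First I would pick arbitrary $r \in [R]$ and $n \in [N]$ and consider a perturbation $\delta \in \R^{D_n}$ of $\w_r^n$. Only the $r$'th summand of $\tftensorend$ depends on $\w_r^n$, and its dependence is multilinear: replacing $\w_r^n$ by $\w_r^n + \epsilon \delta$ changes the end tensor by $\epsilon \cdot \w_r^1 \tenp \cdots \tenp \w_r^{n-1} \tenp \delta \tenp \w_r^{n+1} \tenp \cdots \tenp \w_r^N$, plus an $O(\epsilon^2)$ term that vanishes (in fact, is zero here since the map is linear in $\w_r^n$). Applying the chain rule to $\tfobj$ then yields the directional derivative
\[
\brk1{ \nabla_{\w_r^n} \tfobj , \delta } = \inprodbig{ \nabla \tfendloss ( \tftensorend ) }{ \w_r^1 \tenp \cdots \tenp \w_r^{n-1} \tenp \delta \tenp \w_r^{n+1} \tenp \cdots \tenp \w_r^N } .
\]

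Next I would invoke Lemma~\ref{tf:lem:inp_with_tenp_to_mat_kronp} with $\W = \nabla \tfendloss(\tftensorend)$ and with the outer-product factors $(\w_r^1,\ldots,\w_r^{n-1},\delta,\w_r^{n+1},\ldots,\w_r^N)$. This rewrites the right-hand side as $\inprodnoflex{ \tfmat{ \nabla \tfendloss ( \tftensorend ) }_n \cdot \kronp_{n' \neq n} \w_r^{n'} }{ \delta }$. Since this identity holds for every $\delta \in \R^{D_n}$, the gradient with respect to $\w_r^n$ must equal $\tfmat{ \nabla \tfendloss ( \tftensorend ) }_n \cdot \kronp_{n' \neq n} \w_r^{n'}$, which is exactly the claimed formula.

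No real obstacle is expected: the argument is an essentially mechanical chain-rule computation, and the only nontrivial ingredient (the matricization identity) has already been established in Lemma~\ref{tf:lem:inp_with_tenp_to_mat_kronp}. The only bookkeeping point to watch is the placement of $\delta$ in the $n$'th slot of the outer product so that the modes align correctly with the mode-$n$ matricization on the right-hand side; this is a notational check rather than a mathematical difficulty.
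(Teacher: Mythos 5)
Your proposal is correct and follows essentially the same route as the paper's proof: both exploit the linearity of $\tftensorend$ in $\w_r^n$ to get a first-order expansion whose linear term is $\inprodnoflex{ \nabla \tfendloss ( \tftensorend ) }{ \w_r^1 \tenp \cdots \tenp \delta \tenp \cdots \tenp \w_r^N }$, then apply Lemma~\ref{tf:lem:inp_with_tenp_to_mat_kronp} and identify the gradient from the resulting linear form in $\delta$. No gaps.
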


\begin{proof}
	For $r \in [R], n \in [N]$, we treat $\{ \w_{r'}^{n'} \}_{(r', n') \neq (r, n)}$ as fixed, and with slight abuse of notation consider:
	\[
	\phi_{r, n} \left ( \w_r^{n} \right ) := \tfobj \left (\{ \w_{r'}^{n'} \}_{r' = 1}^R\hspace{0mm}_{n' = 1}^N \right )
	\text{\,.}
	\]
	For $\Delta \in \R^{D_n}$, from the first order Taylor approximation of $\tfendloss (\cdot)$ we have that:
	\[
	\begin{split}
		\phi_{r, n} \left ( \w_r^{n} + \Delta \right ) & = \tfendloss \left ( \tftensorend + \left ( \tenp_{n' = 1}^{n - 1} \w_r^{n'} \right ) \tenp \Delta \tenp \left ( \tenp_{n' = n + 1}^N \w_r^{n'} \right ) \right ) \\
		& = \tfendloss \left (  \tftensorend \right ) + \inprod{ \nabla \tfendloss \left ( \tftensorend \right ) }{ \left ( \tenp_{n' = 1}^{n - 1} \w_r^{n'} \right ) \tenp \Delta \tenp \left ( \tenp_{n' = n + 1}^N \w_r^{n'} \right )  } \\
		& \hspace{5mm} + o \left ( \norm{ \Delta } \right )
		\text{\,.}
	\end{split}
	\]
	Since $\tfendloss \left ( \tftensorend \right ) = \phi_{r, n} ( \w_r^{n} )$, by applying Lemma~\ref{tf:lem:inp_with_tenp_to_mat_kronp} we arrive at:
	\[
	\phi_{r, n} \left ( \w_r^{n} + \Delta \right ) = \phi_{r, n} \left ( \w_r^{n} \right ) + \inprod{ \tfmat{ \nabla \tfendloss ( \tftensorend ) }_{n} \cdot \kronp_{n' \neq n} \w_r^{n'} }{ \Delta } + o(\norm{\Delta})
	\text{\,.}
	\]
	Uniqueness of the linear approximation of $\phi_{r, n} ( \cdot )$ at $\w_r^{n}$ then implies:
	\[
	\frac{\partial}{\partial \w_r^{n}} \tfobj \left ( \{ \w_{r'}^{n'} \}_{r' = 1}^R\hspace{0mm}_{n' = 1}^N \right ) = \frac{d}{d \w_r^{n}} \phi_{r, n} \left ( \w_r^{n} \right ) = \tfmat{ \nabla \tfendloss \left ( \tftensorend \right ) }_{n} \cdot \kronp_{n' \neq n} \w_r^{n'}
	\text{\,.}
	\]
\end{proof}

\begin{lemma}
	\label{tf:lem:dyn_parameter_vector_sq_norm}
	For any $r \in [R]$ and $n \in [N]$:
	\[
	\frac{d}{dt}  \normnoflex{ \w_r^{n} (t) }^2 = - 2 \inprod{ \nabla \tfendloss \left ( \tftensorend (t) \right ) }{ \tenp_{n' = 1}^N \w_r^{n'} (t) } 
	\text{\,.}
	\]
\end{lemma}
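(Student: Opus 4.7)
The plan is to compute the time derivative of $\|\w_r^n(t)\|^2$ directly from the gradient flow equation and then rewrite the resulting expression in tensor (rather than matricized) form using the two preceding lemmas in this subsection.

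First, by the chain rule,
\[
\tfrac{d}{dt}\normnoflex{\w_r^n(t)}^2 \;=\; 2\,\inprod{\w_r^n(t)}{\tfrac{d}{dt}\w_r^n(t)}.
\]
Substituting the gradient flow dynamics of Equation~\eqref{tf:eq:cp_gf} turns the right-hand side into $-2\,\inprod{\w_r^n(t)}{\tfrac{\partial}{\partial\w_r^n}\tfobj(\{\w_{r'}^{n'}(t)\}_{r',n'})}$. At this point I would invoke Lemma~\ref{tf:lem:cp_gradient} to replace the partial derivative with the explicit expression $\tfmat{\nabla\tfendloss(\tftensorend(t))}_n\cdot\kronp_{n'\neq n}\w_r^{n'}(t)$, obtaining
\[
\tfrac{d}{dt}\normnoflex{\w_r^n(t)}^2 \;=\; -2\,\inprodbig{\tfmat{\nabla\tfendloss(\tftensorend(t))}_n\cdot\kronp_{n'\neq n}\w_r^{n'}(t)}{\w_r^n(t)}.
\]

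Second, I would apply Lemma~\ref{tf:lem:inp_with_tenp_to_mat_kronp} in the reverse direction, identifying $\W=\nabla\tfendloss(\tftensorend(t))$ and $\w^{n'}=\w_r^{n'}(t)$ for all $n'\in[N]$, to rewrite the matrix-Kronecker inner product above as the tensor inner product $\inprodnoflex{\nabla\tfendloss(\tftensorend(t))}{\tenp_{n'=1}^{N}\w_r^{n'}(t)}$. This yields exactly the claimed identity.

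There is no substantive obstacle here — the argument is essentially a two-line computation that just packages together Lemmas~\ref{tf:lem:inp_with_tenp_to_mat_kronp} and~\ref{tf:lem:cp_gradient} via the chain rule. The only point warranting mild care is making sure Lemma~\ref{tf:lem:inp_with_tenp_to_mat_kronp} is applied for the correct mode $n$ (so that the free vector in the inner product is $\w_r^n(t)$ and the contracted Kronecker product runs over $n'\neq n$), but this is purely bookkeeping.
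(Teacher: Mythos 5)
Your proposal is correct and matches the paper's proof essentially verbatim: both differentiate $\normnoflex{\w_r^n(t)}^2$ via the chain rule, substitute the gradient flow dynamics, and then combine Lemma~\ref{tf:lem:cp_gradient} with Lemma~\ref{tf:lem:inp_with_tenp_to_mat_kronp} to pass from the matricized gradient expression to the tensor inner product.
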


\begin{proof}
	Fix $r \in [R]$ and $n \in [N]$.
	Differentiating $\normnoflex{ \w_r^{n} (t) }^2$ with respect to time, we have:
	\[
	\frac{d}{dt} \normnoflex{ \w_r^{n} (t) }^2 = 2 \inprod{  \w_r^{n} (t) }{ \tfrac{d}{dt} \w_r^{n} (t) } = - 2 \inprod{  \w_r^{n} (t) }{ \frac{\partial}{\partial \w_r^{n}} \tfobj \left (\{ \w_{r'}^{n'} (t) \}_{r' = 1}^R\hspace{0mm}_{n' = 1}^N \right ) }
	\text{\,.}
	\]
	Applying Lemmas~\ref{tf:lem:cp_gradient} and~\ref{tf:lem:inp_with_tenp_to_mat_kronp} completes the proof.
\end{proof}

\begin{lemma}[Lemma~\ref{tf:lem:balancedness_conservation_body} restated]
	\label{tf:lem:balancedness_conservation}
	For all $r \in [R]$ and $n, \bar{n} \in [N]$:
	\[
	\norm{ \w_r^{n} (t) }^2 - \norm{ \w_r^{\bar{n}} (t) }^2 = \norm{ \w_r^{n} (0) }^2 - \norm{ \w_r^{\bar{n}} (0) }^2 \quad , ~t \geq 0
	\text{\,.}
	\]
\end{lemma}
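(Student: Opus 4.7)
The plan is to invoke Lemma~\ref{tf:lem:dyn_parameter_vector_sq_norm} and observe that the expression it provides for the time derivative of $\norm{ \w_r^{n} (t) }^2$ is independent of $n$. Concretely, Lemma~\ref{tf:lem:dyn_parameter_vector_sq_norm} gives
\[
\frac{d}{dt}\norm{\w_r^{n}(t)}^2 \;=\; -2\,\inprod{ \nabla \tfendloss(\tftensorend(t)) }{ \tenp_{n'=1}^N \w_r^{n'}(t) } ,
\]
and the right-hand side depends only on the index $r$ and on the full outer product $\tenp_{n'=1}^N \w_r^{n'}(t)$, not on which particular factor $n$ was differentiated. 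Hence, for any two factor indices $n,\bar n \in [N]$ (and the same component index $r$), the two time derivatives coincide.

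Given that, the proof reduces to a one-line integration argument. First I would fix $r\in[R]$ and $n,\bar n\in[N]$, and define $h(t) := \norm{\w_r^{n}(t)}^2 - \norm{\w_r^{\bar n}(t)}^2$. By the observation above, $\tfrac{d}{dt}h(t) = 0$ for all $t\ge 0$. Then I would conclude by the fundamental theorem of calculus (or equivalently, by the uniqueness of a constant solution to the ODE $\dot h = 0$ with the prescribed initial condition) that $h(t) = h(0)$ for all $t \ge 0$, which is exactly the claimed identity.

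There is essentially no obstacle here: the content of the lemma is entirely packaged in the symmetry of the gradient flow expression supplied by Lemma~\ref{tf:lem:dyn_parameter_vector_sq_norm}, which in turn rests on Lemma~\ref{tf:lem:cp_gradient} (computing $\partial\tfobj/\partial \w_r^{n}$ as a contraction against $\kronp_{n'\neq n}\w_r^{n'}$) together with Lemma~\ref{tf:lem:inp_with_tenp_to_mat_kronp} (which identifies inner products against tensor products with inner products against Kronecker products, cancelling the dependence on $n$ after pairing with $\w_r^{n}$). So the only thing to verify carefully is that Lemma~\ref{tf:lem:dyn_parameter_vector_sq_norm} indeed applies under the assumptions of the present lemma, namely that $\tfendloss$ need only be differentiable and locally smooth (\cf~Footnote~\ref{note:dyn_holds_for_diff_local_smooth_loss}), which is already the generality in which those prior lemmas were established.
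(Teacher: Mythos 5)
Your proposal is correct and follows exactly the paper's own argument: invoke Lemma~\ref{tf:lem:dyn_parameter_vector_sq_norm} to see that $\frac{d}{dt}\norm{\w_r^{n}(t)}^2$ is independent of the factor index $n$, then integrate. No gaps.
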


\begin{proof}[Proof of Lemma~\ref{tf:lem:balancedness_conservation}]
	For any $r \in [R]$ and $n, \bar{n} \in [N]$, by Lemma~\ref{tf:lem:dyn_parameter_vector_sq_norm} it holds that:
	\[
	\frac{d}{dt}  \normnoflex{ \w_r^{n} (t) }^2 = - 2 \inprod{ \nabla \tfendloss \left ( \tftensorend (t) \right ) }{ \tenp_{n' = 1}^N \w_r^{n'} (t) }  = \frac{d}{dt}  \normnoflex{ \w_r^{\bar{n}} (t) }^2
	\text{\,.}
	\]
	Integrating both sides with respect to time gives:
	\[
	\norm{ \w_r^{n} (t) }^2 - \norm{ \w_r^{n} (0) }^2 = \norm{ \w_r^{\bar{n}} (t) }^2 - \norm{ \w_r^{\bar{n}} (0) }^2
	\text{\,.}
	\]
	Rearranging the equality above establishes the desired result.
\end{proof}

\begin{lemma}
	\label{tf:lem:width_R_equivalent_to_larger_width_with_zero_init}
	Let $\widetilde{R} > R$, and define:
	\be
	\widetilde{ \w }_r^n (t) := \begin{cases}
		\w_r^n (t)	& , r \in \{ 1, \ldots, R \} \\
		0 \in \R^{D_n}	& , r \in \{ R + 1, \ldots, \widetilde{R} \}
	\end{cases}
	\quad ,~t \geq 0 ~,~n = 1, \ldots, N
	\text{\,.}
	\label{tf:eq:width_R_equivalent_to_larger_width_with_zero_init}
	\ee
	Then, $\{ \widetilde{ \w }_r^n (t) \}_{r = 1}^{\widetilde{R}}\hspace{0mm}_{n = 1}^N$ follow a gradient flow path of an $\widetilde{R}$-component factorization.
\end{lemma}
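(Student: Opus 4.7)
[Proof plan for Lemma~\ref{tf:lem:width_R_equivalent_to_larger_width_with_zero_init}]
The plan is to verify directly that the extended curves $\{\widetilde{\w}_r^n(t)\}_{r=1}^{\widetilde{R}}{}_{n=1}^N$ defined in Equation~\eqref{tf:eq:width_R_equivalent_to_larger_width_with_zero_init} satisfy the gradient flow equation (analog of Equation~\eqref{tf:eq:cp_gf}) for an $\widetilde{R}$-component factorization. Since gradient flow induces a unique trajectory from a given initialization (the objective $\phi_{\mathrm{T}}$ is locally smooth, as $\L_{\mathrm{T}}$ is differentiable and locally smooth), verifying the differential equation pointwise is sufficient.

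First, I would observe that the end tensor is preserved under the extension. Indeed, by the definition in Equation~\eqref{tf:eq:end_tensor},
\[
\sum_{r = 1}^{\widetilde{R}} \tenp_{n = 1}^N \widetilde{\w}_r^{n}(t) = \sum_{r = 1}^{R} \tenp_{n = 1}^N \w_r^{n}(t) + \sum_{r = R + 1}^{\widetilde{R}} \tenp_{n = 1}^N 0 = \tftensorend(t),
\]
so the loss-gradient tensor $\nabla \tfendloss(\tftensorend(t))$ is identical whether computed for the $R$-component or the $\widetilde{R}$-component factorization.

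Next, I would invoke Lemma~\ref{tf:lem:cp_gradient} to compute partial derivatives of the $\widetilde{R}$-component objective at $\{\widetilde{\w}_r^n(t)\}_{r,n}$. For $r \in \{1,\ldots,R\}$ and any $n$, the Kronecker product $\kronp_{n' \neq n} \widetilde{\w}_r^{n'}(t) = \kronp_{n' \neq n} \w_r^{n'}(t)$, so the partial derivative matches exactly the one in the $R$-component factorization, and hence $\tfrac{d}{dt}\widetilde{\w}_r^n(t) = \tfrac{d}{dt}\w_r^n(t)$ satisfies the required equation. For $r \in \{R+1,\ldots,\widetilde{R}\}$ and any $n$, the Kronecker product $\kronp_{n' \neq n}\widetilde{\w}_r^{n'}(t)$ contains at least one zero factor (in fact all factors are zero), hence it vanishes, so the corresponding partial derivative is zero. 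Thus the constant zero curve $\widetilde{\w}_r^n(t) \equiv 0$ also satisfies the gradient flow equation.

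The main (and only real) subtlety will be making sure that the argument is phrased so it applies at every $t \geq 0$: once the differential equation is verified pointwise at every time for the candidate extended curves, uniqueness of the gradient flow solution starting from $\{\widetilde{\w}_r^n(0)\}_{r,n}$ closes the argument. There is no substantive obstacle here; the lemma is essentially a bookkeeping statement saying that zero-padding components is consistent with gradient flow, and the key input is the explicit Kronecker-product form of the gradient given by Lemma~\ref{tf:lem:cp_gradient}.
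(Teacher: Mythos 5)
Your proposal is correct and follows essentially the same route as the paper: both verify pointwise, via Lemma~\ref{tf:lem:cp_gradient}, that the end tensor (and hence the loss gradient) is unchanged by the zero-padding, that the partial derivatives for $r \le R$ coincide with those of the $R$-component objective, and that the gradient with respect to the padded components vanishes because the Kronecker product $\kronp_{n' \neq n} \widetilde{\w}_r^{n'}(t)$ is zero. The only difference is your additional appeal to uniqueness of the gradient flow trajectory, which the paper does not need since the lemma only asserts that the curves satisfy the $\widetilde{R}$-component gradient flow equations.
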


\begin{proof}
	We verify that $\{ \widetilde{ \w }_r^n (t) \}_{r = 1}^{\widetilde{R}}\hspace{0mm}_{n = 1}^N$ satisfy the differential equations governing gradient flow.
	Fix $n \in [N]$.
	For any $r \in [R]$ and $t \geq 0$ we have:
	\[
	\begin{split}
		\frac{d}{dt} \widetilde{\w}_{r}^n (t) & = \frac{d}{dt} \w_{r}^n (t) = - \frac{\partial}{ \partial \w_{r}^{n} } \tfobj \left (\{ \w_{r'}^{n'} (t) \}_{r' = 1}^{ R }\hspace{0mm}_{n' = 1}^N \right )
		\text{\,.}
	\end{split}
	\]
	Noticing that $\tftensorend (t) =\sum_{r' = 1}^{R} \tenp_{n' = 1}^N \w_{r'}^{n'} (t) = \sum_{r' = 1}^{\widetilde{R}} \tenp_{n' = 1}^N \widetilde{\w}_{r'}^{n'} (t) := \widetilde{ \W }_{\mathrm{T}} (t)$, and invoking Lemma~\ref{tf:lem:cp_gradient}, we may write:
	\[
	\begin{split}
		\frac{d}{dt} \widetilde{\w}_{r}^n (t) & = - \tfmatflex{ \nabla \tfendloss \left ( \tftensorend (t) \right ) }_{n} \cdot \kronp_{n' \neq n} \w_{r}^{n'} (t) \\
		& = - \tfmatflex{ \nabla \tfendloss \left ( \widetilde{\W}_\mathrm{T} (t) \right ) }_{n} \cdot \kronp_{n' \neq n} \widetilde{\w}_{r}^{n'} (t)  \\
		& = - \frac{\partial}{ \partial \widetilde{\w}_{r}^{n} } \tfobj \left (\{ \widetilde{\w}_{r'}^{n'} (t) \}_{r' = 1}^{ \widetilde{R} }\hspace{0mm}_{n' = 1}^N \right )
		\text{\,.}
	\end{split}
	\]
	On the other hand, for any $r \in \{ R + 1, \ldots, \widetilde{R} \}$, recalling that $\widetilde{\w}_r^n (t)$ is identically zero:
	\[
	\begin{split}
		\frac{d}{dt} \widetilde{\w}_{r}^n (t) = 0 = - \tfmatflex{ \nabla \tfendloss \left ( \widetilde{\W}_\mathrm{T} (t) \right ) }_{n} \cdot \kronp_{n' \neq n} \widetilde{\w}_{r}^{n'} (t) = - \frac{\partial}{ \partial \widetilde{\w}_{r}^{n} } \tfobj \left (\{ \widetilde{\w}_{r'}^{n'} (t) \}_{r' = 1}^{ \widetilde{R} }\hspace{0mm}_{n' = 1}^N \right )
		\text{\,,}
	\end{split}
	\]
	for all $t \geq 0$, completing the proof.
\end{proof}

\begin{lemma}
	\label{tf:lem:balanced_param_vector_norm_no_sign_change}
	For any $r \in [R]$:
	\begin{itemize}
		\item If $\normnoflex{ \w_r^{1} (0) } = \cdots = \normnoflex{ \w_r^{N} (0) } = 0$, then:
		\be
		\norm{ \w_r^{1} (t) } = \cdots = \norm{ \w_r^{N} (t) } = 0 \quad ,~t \geq 0
		\text{\,.}
		\label{tf:eq:balanced_param_vector_norm_no_sign_change_stay_zero}
		\ee
		\item On the other hand, if $\normnoflex{ \w_r^{1} (0) } = \cdots = \normnoflex{ \w_r^{N} (0) } > 0$, then:
		\be
		\norm{ \w_r^{1} (t) } = \cdots = \norm{ \w_r^{N} (t) } > 0 \quad,~t \geq 0
		\text{\,.}
		\label{tf:eq:balanced_param_vector_norm_no_sign_change_stay_nonzero}
		\ee
	\end{itemize}
\end{lemma}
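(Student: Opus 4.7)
The plan for Equation~\eqref{tf:eq:balanced_param_vector_norm_no_sign_change_stay_zero} is an invariance argument. If every weight vector in component $r$ is zero at $t = 0$, then for each $n \in [N]$ the Kronecker product $\kronp_{n' \neq n} \w_r^{n'}(0)$ contains a zero factor, so Lemma~\ref{tf:lem:cp_gradient} forces $\partial \tfobj / \partial \w_r^n$ to vanish on the zero subspace for component $r$. Hence the trajectory that keeps component $r$ identically zero, while letting the remaining $R - 1$ components evolve by gradient flow on a reduced factorization, satisfies the full $R$-component gradient flow ODE~---~this is essentially Lemma~\ref{tf:lem:width_R_equivalent_to_larger_width_with_zero_init} after relabeling components. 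Since the gradient field is locally Lipschitz, Picard--Lindel\"of uniqueness will pin the actual trajectory to this one, yielding~\eqref{tf:eq:balanced_param_vector_norm_no_sign_change_stay_zero}.

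For Equation~\eqref{tf:eq:balanced_param_vector_norm_no_sign_change_stay_nonzero}, my first step is to use Lemma~\ref{tf:lem:balancedness_conservation_body} to reduce everything to a single scalar: the common value $u(t) := \normnoflex{\w_r^1(t)}^2 = \cdots = \normnoflex{\w_r^N(t)}^2$ is preserved across modes for all $t \geq 0$, so it suffices to show $u(t) > 0$ throughout. Combining Lemma~\ref{tf:lem:dyn_parameter_vector_sq_norm} with the factorization $\tenp_{n = 1}^N \w_r^{n}(t) = u(t)^{N/2} \tenp_{n = 1}^N \widehat{\w}_r^{n}(t)$ (valid wherever $u > 0$, with $\widehat{\w}_r^{n}(t)$ the unit vector along $\w_r^n(t)$) will yield the scalar ODE
\[
\tfrac{d}{dt} u(t) = u(t)^{N/2} g(t) \quad , \quad g(t) := -2 \inprod{ \nabla \tfendloss ( \tftensorend (t) ) }{ \tenp_{n = 1}^N \widehat{\w}_r^{n}(t) }
\text{\,,}
\]
with $N/2 > 1$ since $N \geq 3$. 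Assuming for contradiction that $u$ first vanishes at some finite time $T^* > 0$, I will apply Lemma~\ref{tf:lem:ivp_no_sign_change} with $p = N/2$, $f = u$, and $T_1 = T^*$ to conclude $u(T^*) > 0$, contradicting the definition of $T^*$.

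The main obstacle I anticipate is the technical requirement that $g(t)$ be bounded on $[0, T^*)$, as demanded by Lemma~\ref{tf:lem:ivp_no_sign_change}. This will reduce to checking that the gradient flow trajectory~---~and therefore $\tftensorend(t)$ and $\nabla \tfendloss ( \tftensorend (t) )$~---~remains well-defined on a slightly larger interval $[0, T^* + \varepsilon)$; once this is secured, continuity of $\nabla \tfendloss$ together with $\normnoflex{ \tenp_{n = 1}^N \widehat{\w}_r^{n}(t) } = 1$ will yield a uniform bound $|g(t)| \leq 2 \sup_{t \in [0, T^*]} \normnoflex{ \nabla \tfendloss ( \tftensorend (t) ) }$. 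Since the putative $u \to 0$ at $T^*$ corresponds to a single component shrinking rather than any blow-up of the overall factorization, local existence of gradient flow past $T^*$ should follow from standard ODE arguments, closing the contradiction.
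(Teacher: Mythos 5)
Your proposal is correct and follows essentially the same route as the paper: the first bullet is proved by constructing a solution with component $r$ identically zero (via the gradient vanishing there and Lemma~\ref{tf:lem:width_R_equivalent_to_larger_width_with_zero_init}) and invoking ODE uniqueness, and the second by a first-vanishing-time contradiction through Lemma~\ref{tf:lem:ivp_no_sign_change}. The only cosmetic difference is that you track the squared norm with exponent $p = N/2$ whereas the paper tracks the norm itself with $p = N - 1$; both satisfy $p > 1$ since $N \geq 3$, and your handling of the boundedness of $g$ matches the paper's (continuity of $\nabla \tfendloss ( \tftensorend (t) )$ on $[0, t_0]$).
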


\begin{proof}
	The proof is divided into two separate parts, establishing Equations~\eqref{tf:eq:balanced_param_vector_norm_no_sign_change_stay_zero} and~\eqref{tf:eq:balanced_param_vector_norm_no_sign_change_stay_nonzero} under their respective conditions.
	
	\paragraph*{Proof of Equation~\eqref{tf:eq:balanced_param_vector_norm_no_sign_change_stay_zero}  (if $\normnoflex{ \w_r^{1} (0) } = \cdots = \normnoflex{ \w_r^{N} (0) } = 0$):}
	
	To simplify presentation, we assume without loss of generality that $r = R$.
	Consider the following initial value problem induced by gradient flow over $\tfobj (\cdot)$:
	\be
	\begin{split}
		& \widetilde{\w}_{\bar{r}}^{n} ( 0 ) = \w_{\bar{r}}^{n} ( 0 ) \quad ,  ~ \bar{r} = 1, \ldots, R ~,~ n = 1, \ldots, N \text{\,,}  \\
		& \frac{d}{dt} \widetilde{\w}_{\bar{r}}^{n} (t) = - \frac{\partial}{\partial \widetilde{\w}_{\bar{r}}^{n}} \tfobj \left (\{ \widetilde{\w}_{r'}^{n'} (t) \}_{r' = 1}^{R}\hspace{0mm}_{n' = 1}^N \right ) 	\quad , ~t \geq 0 ~,~ \bar{r} = 1, \ldots, R ~,~ n = 1, \ldots, N  \text{\,.}
	\end{split}
	\label{tf:eq:gf_w_tilde_ivp}
	\ee
	By definition, $\{\w_{\bar{r}}^{n} (t) \}_{\bar{r} = 1}^{R}\hspace{0mm}_{n = 1}^N$ is a solution to the initial value problem above.
	Since it has a unique solution (see, \eg, Theorem~2.2 in~\cite{teschl2012ordinary}), we need only show that there exist $\{ \widetilde{\w}_{\bar{r}}^{n} (t) \}_{\bar{r} = 1}^{R}\hspace{0mm}_{n = 1}^N$ satisfying Equation~\eqref{tf:eq:gf_w_tilde_ivp} such that $\widetilde{\w}_R^{1} (t) = \cdots = \widetilde{\w}_R^{N} (t) = 0$ for all $t \geq 0$.
	
	If $R = 1$, \ie~the factorization consists of a single component, by Lemma~\ref{tf:lem:cp_gradient}:
	\[
	- \frac{\partial}{\partial \widetilde{\w}_1^{n}} \tfobj \left (\{ \widetilde{\w}_{1}^{n'} \}_{n' = 1}^N \right ) = - \tfmatflex{ \nabla \tfendloss \left ( \tenp_{n' = 1}^{N} \widetilde{\w}_1^{n'} \right ) }_{n} \cdot \kronp_{n' \neq n} \widetilde{\w}_1^{n'} \quad , ~n = 1 ,\ldots, N
	\text{\,,}
	\]
	for any $\widetilde{\w}_1^{1} \in \R^{D_1}, \ldots, \widetilde{\w}_1^{N} \in \R^{D_N}$.
	Hence, $\widetilde{\w}_1^{1} (t) = \cdots = \widetilde{\w}_1^{N} (t) = 0$ for all $t \geq 0$ form a solution to the initial value problem in Equation~\eqref{tf:eq:gf_w_tilde_ivp}.
	To see it is so, notice that the initial conditions are met, and:
	\[
	\frac{d}{dt} \widetilde{\w}_1^{n} (t) = 0 = - \frac{\partial}{\partial \widetilde{\w}_1^{n}} \tfobj \left (\{ \widetilde{\w}_{1}^{n'} (t) \}_{n' = 1}^N \right ) \quad , ~t \geq 0 ~,~ n = 1 ,\ldots, N
	\text{\,.}
	\]
	
	If $R > 1$, with slight abuse of notation we let $\tfobj ( \{ \widetilde{\w}_{\bar{r}}^{n} \}_{\bar{r} = 1}^{R - 1}\hspace{0mm}_{n = 1}^N ) := \tfendloss ( \sum_{\bar{r} = 1}^{R - 1} \tenp_{n = 1}^N \widetilde{\w}_{\bar{r}}^{n} )$ be the objective over an $(R - 1)$-component tensor factorization.
	Let $\{ \widetilde{\w}_{\bar{r}}^{n} (t) \}_{\bar{r} = 1}^{R - 1}\hspace{0mm}_{n = 1}^N$ be curves obtained by running gradient flow on this objective, initialized such that:
	\[
	\widetilde{\w}_{\bar{r}}^{n} (0) := \w_{\bar{r}}^{n} (0) \quad , ~\bar{r} = 1, \ldots, R - 1 ~,~ n = 1, \ldots, N
	\text{\,.}
	\]
	Additionally, define $\widetilde{\w}_{R}^{1} (t) = \cdots = \widetilde{\w}_{R}^{N} (t) = 0$ for all $t \geq 0$. 
	According to Lemma~\ref{tf:lem:width_R_equivalent_to_larger_width_with_zero_init},  $\{ \widetilde{\w}_{\bar{r}}^{n} (t) \}_{\bar{r} = 1}^{R}\hspace{0mm}_{n = 1}^N$ form a valid solution to the original gradient flow over an $R$-component factorization, \ie~satisfy Equation~\eqref{tf:eq:gf_w_tilde_ivp}.
	Thus, uniqueness of the solution implies $\w_R^{1} (t) = \cdots = \w_R^{N} (t) = 0$ for all $t \geq 0$, completing the proof for Equation~\eqref{tf:eq:balanced_param_vector_norm_no_sign_change_stay_zero}.

	\paragraph*{Proof of Equation~\eqref{tf:eq:balanced_param_vector_norm_no_sign_change_stay_nonzero}  (if $\normnoflex{ \w_r^{1} (0) } = \cdots = \normnoflex{ \w_r^{N} (0) } > 0$):}
	
	From Lemma~\ref{tf:lem:balancedness_conservation_body} it follows that $\normnoflex{ \w_r^{1} ( t ) } = \cdots = \normnoflex{ \w_r^{N} ( t ) }$ for any $t \geq 0$.
	Hence, it suffices to show that $\normnoflex{ \w_r^{1} (t) }$ stays positive.
	Assume by way of contradiction that there exists $\bar{t} > 0$ for which $\normnoflex{\w_r^{1} (\bar{t}\,)} = 0$.
	Define:
	\[
	t_0 := \inf \left \{ t \geq 0: \normnoflex{ \w_r^{1} ( t ) } = 0 \right \}
	\text{\,,}
	\]
	the initial time at which $\normnoflex{  \w_r^{1} (t) }$ meets zero.
	Due to the fact that $\normnoflex{ \w_r^{1} (t) }$ is continuous in $t$, $\normnoflex{ \w_r^{1} (t_0) } = 0$ and $t_0 > 0$.
	Furthermore, $\normnoflex{ \w_r^{1} (t) } > 0$ for all $t \in [0, t_0)$.
	We may therefore differentiate $\normnoflex{\w_r^{1} (t) }$ with respect to time over the interval $[0, t_0)$ as follows:
	\[
	\begin{split}
		\frac{d}{dt} \norm{ \w_r^{1} (t) } & =  \left ( \tfrac{d}{dt} \norm{ \w_r^{1} (t) }^2 \right ) \cdot 2^{-1}  \norm{ \w_r^{1} (t) }^{-1} \\
		& = \norm{ \w_r^{1} (t) }^{-1}  \inprod{ - \nabla \tfendloss \left ( \tftensorend (t) \right ) }{ \tenp_{n = 1}^N \w_r^{n} (t) } \\
		& = \norm{ \w_r^{1} (t) }^{N - 1}  \inprod{ - \nabla \tfendloss \left ( \tftensorend (t) \right ) }{ \tenp_{n = 1}^N \widehat{\w}_r^{n} (t) }
		\text{\,,}
	\end{split}
	\]
	where the second transition is due to Lemma~\ref{tf:lem:dyn_parameter_vector_sq_norm}, and $\widehat{\w}_r^{n} (t) := \w_r^{n} (t) / \normnoflex{ \w_r^{n} (t) }$ for $n = 1, \ldots, N$.
	Define $g(t) := \inprodnoflex{ - \nabla \tfendloss \left ( \tftensorend (t) \right ) }{ \tenp_{n = 1}^N \widehat{\w}_r^{n} (t) }$.
	Since $\nabla \tfendloss ( \tftensorend (t) )$ is continuous with respect to time, $g(t)$ is bounded over $[0, t_0]$ and continuous over $[0, t_0)$.
	Thus, invoking Lemma~\ref{tf:lem:ivp_no_sign_change} with $g(t)$, $T_1 := t_0$ and $f(t) := \normnoflex{ \w_r^1 (t) }$, we get that $\normnoflex{ \w_r^{1} (t) } > 0$ for all $t \in [0, t_0]$, in contradiction to $\normnoflex{ \w_r^{1} (t_0) } = 0$.
	This means that $\normnoflex{ \w_r^{1} (t) } > 0$ for all $t \geq 0$, concluding the proof for Equation~\eqref{tf:eq:balanced_param_vector_norm_no_sign_change_stay_nonzero}.
\end{proof}

\subsection{Proof of Theorem~\ref{tf:thm:dyn_fac_comp_norm_unbal}}
\label{tf:app:proofs:dyn_fac_comp_norm_unbal}

Fix $r \in [R]$ and $t \geq 0$.
Since $\normnoflex{ \tenp_{n = 1}^N \w_r^{n} (t) } = \prod_{n = 1}^N \normnoflex { \w_r^{n} (t) }$, the product rule gives:
\[
\frac{d}{dt} \norm{ \tenp_{n = 1}^N \w_r^{n} (t) } = \sum_{n = 1}^N \frac{d}{dt} \norm{ \w_r^{n} (t) } \cdot \prod_{n' \neq n} \normnoflex{ \w_r^{n'} (t) }
\text{\,.}
\]
Notice that for any $n \in [N]$ we have $\normnoflex{ \w_r^{n} (t) } > 0$, as otherwise $\normnoflex{ \tenp_{n' = 1}^N \w_r^{n'} (t) }$ must be zero.
Thus, applying Lemma~\ref{tf:lem:dyn_parameter_vector_sq_norm} we get $\frac{d}{dt} \normnoflex{ \w_r^{n} (t) } = \frac{1}{2} \normnoflex{ \w_r^{n} (t) }^{-1} \frac{d}{dt} \normnoflex{ \w_r^n (t) }^2 = \normnoflex{ \w_r^{n} (t) }^{-1} \inprodnoflex{ - \nabla \tfendloss \left ( \tftensorend (t) \right ) }{ \tenp_{n' = 1}^N \w_r^{n'} (t) }$.
Combined with the equation above, we arrive at:
\be
\begin{split}
	\frac{d}{dt} \norm{ \tenp_{n = 1}^N \w_r^{n} (t) }  & = \sum_{n = 1}^N \norm{ \w_r^{n} (t) }^{-1} \inprod{ - \nabla \tfendloss \left ( \tftensorend (t) \right ) }{ \tenp_{n' = 1}^N \w_r^{n'} (t) }   \cdot \prod_{n' \neq n} \normnoflex{ \w_r^{n'} (t) } \\
	& =  \inprod{ - \nabla \tfendloss \left ( \tftensorend (t) \right ) }{ \tenp_{n' = 1}^N \widehat{\w}_r^{n'} (t) }  \cdot \sum_{n = 1}^N \prod_{n' \neq n} \normnoflex{ \w_r^{n'} (t) }^2 
	\text{\,.}
\end{split}
\label{tf:eq:unbal_comp_time_derive_intermid}
\ee
By Lemma~\ref{tf:lem:balancedness_conservation_body}, the differences between squared norms of vectors in the same component are constant through time.
In particular, the unbalancedness magnitude (Definition~\ref{def:unbalancedness_magnitude}) is conserved during gradient flow, implying that for any $n \in [N]$:
\be
\norm{ \w_r^n (t) }^2 \leq \min_{n' \in [N]} \normnoflex{ \w_r^{n'} (t) }^2 + \epsilon \leq \norm{ \tenp_{n' = 1}^N \w_r^{n'} (t) }^{\frac{2}{N}} + \epsilon
\text{\,.}
\label{tf:eq:sq_norm_unbal_upper_bound}
\ee
Now, suppose that $\gamma_r (t) := \inprodnoflex{ - \nabla \tfendloss ( \tftensorend (t) ) }{ \tenp_{n = 1}^N \widehat{\w}_r^{n} (t) } \geq 0$. 
Going back to Equation~\eqref{tf:eq:unbal_comp_time_derive_intermid}, applying the inequality in Equation~\eqref{tf:eq:sq_norm_unbal_upper_bound} for each $\normnoflex{ \w_r^{n'} (t) }^2$ yields the desired upper bound from Equation~\eqref{tf:eq:dyn_fac_comp_norm_unbal_pos}.
On the other hand, multiplying and dividing each summand in Equation~\eqref{tf:eq:unbal_comp_time_derive_intermid} by the corresponding $\normnoflex{ \w_r^n (t) }^2$, we may equivalently write:
\[
\begin{split}
	\frac{d}{dt} \norm{ \tenp_{n = 1}^N \w_r^{n} (t) } & =  \inprod{ - \nabla \tfendloss \left ( \tftensorend (t) \right ) }{ \tenp_{n' = 1}^N \widehat{\w}_r^{n'} (t) }  \cdot \sum_{n = 1}^N \normnoflex{ \w_r^n (t) }^{-2} \prod_{n' = 1}^N \normnoflex{ \w_r^{n'} (t) }^2 \\
	& = \inprod{ - \nabla \tfendloss \left ( \tftensorend (t) \right ) }{ \tenp_{n' = 1}^N \widehat{\w}_r^{n'} (t) } \norm{ \tenp_{n = 1}^N \w_r^n (t) }^2 \cdot \sum_{n = 1}^N \normnoflex{ \w_r^n (t) }^{-2}
	\text{\,.}
\end{split}
\]
Noticing that Equation~\eqref{tf:eq:sq_norm_unbal_upper_bound} implies $\normnoflex{ \w_r^{n} (t) }^{-2} \geq ( \normnoflex{ \tenp_{n' = 1}^N \w_r^{n'} (t) }^{\frac{2}{N}} + \epsilon )^{-1}$, the lower bound from Equation~\eqref{tf:eq:dyn_fac_comp_norm_unbal_pos} readily follows.

If $\gamma_r (t) < 0$, Equation~\eqref{tf:eq:dyn_fac_comp_norm_unbal_neg} is established by following the same computations, up to differences in the direction of inequalities due to the negativity of $\gamma_r (t)$.
\qed

\subsection{Proof of Corollary~\ref{tf:cor:dyn_fac_comp_norm_balanced}}
\label{tf:app:proofs:dyn_fac_comp_norm_balanced}

Fix $r \in [R]$ and $t \geq 0$.
The lower and upper bounds in Theorem~\ref{tf:thm:dyn_fac_comp_norm_unbal} are equal to $\smash{ N \gamma_r (t) \cdot \norm{ \tenp_{n = 1}^N \w_r^{n} (t) }^{2 - 2 / N} }$ for unbalancedness magnitude $\epsilon = 0$.
Therefore, if $\normnoflex{ \tenp_{n = 1}^N \w_r^{n} (t) } > 0$, Equation~\eqref{tf:eq:dyn_fac_comp_norm} immediately follows from Theorem~\ref{tf:thm:dyn_fac_comp_norm_unbal}.

If $\normnoflex{ \tenp_{n = 1}^N \w_r^{n} (t) } = 0$, we claim that necessarily $\normnoflex{ \tenp_{n = 1}^N \w_r^{n} (t') }  = 0$ for all $t' \geq 0$, in which case both sides of Equation~\eqref{tf:eq:dyn_fac_comp_norm} are zero.
Indeed, since the unbalancedness magnitude is zero at initialization and $\normnoflex{ \tenp_{n = 1}^N \w_r^{n} (t) } = \prod_{n = 1}^N \normnoflex { \w_r^{n} (t) }$, by Lemma~\ref{tf:lem:balanced_param_vector_norm_no_sign_change} we know that either $\normnoflex{ \tenp_{n = 1}^N \w_r^{n} (t') } = 0$  for all $t' \geq 0$, or $\normnoflex{ \tenp_{n = 1}^N \w_r^{n} (t') } > 0$ for all $t' \geq 0$.
Hence, given that $\normnoflex{ \tenp_{n = 1}^N \w_r^{n} (t) } = 0$, the norm of the component must be identically zero through time.
\qed

\subsection{Proof of Theorem~\ref{tf:thm:approx_rank_1}}
\label{tf:app:proofs:approx_rank_1}

For conciseness, we consider the case where the number of components $R \geq 2$.
For $R = 1$, existence of a time $T_0 > 0$ at which $\tftensorend (T_0) \in \S$ follows by analogous steps, disregarding parts pertaining to factorization components $2, \ldots, R$.
Furthermore, proximity to a balanced rank one trajectory becomes trivial as, by Assumption~\ref{tf:assump:a_balance} and Lemma~\ref{tf:lem:balancedness_conservation_body}, $\tftensorend (t)$ is in itself such a trajectory.

Assume without loss of generality that Assumption~\ref{tf:assump:a_lead_comp} holds for $\bar{r} = 1$.

Before delving into the proof details, let us introduce some notation and specify the exact requirement on the initialization scale $\alpha$.
We let $\L_h : \R^{D_1 \times \cdots \times D_N} \to \R_{\geq 0}$ be the tensor completion objective induced by the Huber loss (Equation~\eqref{tf:eq:tc_loss} with $\ell_h ( \cdot )$ in place of $\ell (\cdot)$), and $\phi_h ( \cdot )$ be the corresponding tensor factorization objective (Equation~\eqref{tf:eq:cp_objective} with $\L_h (\cdot)$ in place of $\tfendloss (\cdot)$). 
For reference sphere radius $\rho \in ( 0 , \min_{(i_1, \ldots, i_N) \in \Omega} \abs{ y_{i_1, \ldots, i_N} } - \delta_h )$, distance from origin $B > 0$, time duration $T > 0$, and degree of approximation $\epsilon \in (0, 1)$, let:
\be
\begin{split}
	& \normnoflex{ \abf_r } := \normnoflex{ \abf_r^1} = \cdots = \normnoflex{ \abf_r^N } \quad , ~r = 1, \ldots, R
	\text{\,,} \\
	&  A := \max\nolimits_{r \in [R]} \norm{ \abf_r } \text{\,,} \\ 
	& A_{-1} := \max\nolimits_{r \in \{ 2, \ldots, R\}} \norm{ \abf_r } \text{\,,} \\
	& \widetilde{B} := \sqrt{N} \left ( \max \{ B, \rho \} + 1 \right )^{\frac{1}{N} } \text{\,,} \\
	& \beta := R N \left ( (\widetilde{B} + 1)^{2(N - 1)} + \delta_h (\widetilde{B} + 1)^{N - 2} \right ) \text{\,,} \\
	& \hat{\epsilon} <  \min \left \{ 2^{- \frac{N}{2}} R^{-N} N^{-N} (\widetilde{B} + 1)^{N - N^2} \cdot \exp ( -N \beta T) \cdot \epsilon^N ~,~ \rho (R - 1)^{-1} \right \} \text{\,,} \\
	& \tilde{\epsilon} := \min \left \{ \hat{\epsilon} ~,~ ( R - 1 )^{-1} \left ( \rho - \left [ \rho^{\frac{1}{N}} - (R - 1)^{\frac{1}{N}} \cdot \hat{\epsilon}^{\frac{1}{N}} \right ]^{N} \right ) \right \}
	\text{\,.}
\end{split}
\label{tf:eq:approx_rank_1_consts}
\ee
With the constants above in place, for the results of the theorem to hold it suffices to require that:
\be
\alpha < \min \left \{  R^{- \frac{1}{N}} A^{-1} \rho^{\frac{1}{N}} ~,~ \left ( A_{-1}^{2 - N} - \norm{ \abf_1 }^{2 - N}  \tfrac{ \norm{ \nabla \tfendloss (0) } }{ \inprod{ -\nabla \tfendloss (0) }{ \tenp_{n = 1}^N \widehat{\abf}_{1}^n } } \right )^{\frac{1}{N - 2}} \cdot \tilde{\epsilon}^{\frac{1}{N}} \right \}
\text{\,.}
\label{tf:eq:alpha_size_requirement}
\ee

\medskip

The proof is sectioned into three parts.
We begin with several preliminary lemmas in Appendix~\ref{tf:app:proofs:approx_rank_1:prelim_lemmas}.
Then, Appendix~\ref{tf:app:proofs:approx_rank_1:const_grad} establishes the existence of a time $T_0 > 0$ at which $ \tftensorend ( t )$ initially reaches the reference sphere $\S$, \ie~$\normnoflex{ \tftensorend (T_0) } = \rho$, while $ \normnoflex{ \tenp_{n = 1}^N \w_2^n (T_0) }, \ldots, \normnoflex{ \tenp_{n = 1}^N \w_R^n (T_0) }$ are still $\OO ( \alpha^N )$.
Consequently, as shown in Appendix~\ref{tf:app:proofs:approx_rank_1:trajectory}, at that time the weight vectors of the $R$-component tensor factorization are close to weight vectors corresponding to a balanced rank one trajectory emanating from $\S$, denoted $\W_1 (t)$.
The proof concludes by showing that this implies the time-shifted trajectory $\tftensorendbar ( t )$ is within $\epsilon$ distance from $\W_1 (t)$ at least until $t \geq T$ or $\normnoflex{ \tftensorendbar ( t ) } \geq B$.

\subsubsection{Preliminary Lemmas}
\label{tf:app:proofs:approx_rank_1:prelim_lemmas}

\begin{lemma}
	\label{tf:lem:huber_loss_const_grad_near_zero}
	Let $\W \in \R^{D_1 \times \cdots \times D_N}$ be such that $\norm{ \W } \leq \rho$, where $\rho \in (0, \min_{(i_1, \ldots, i_N) \in \Omega} \abs{ y_{i_1, \ldots, i_N} } - \delta_h )$.
	Then:
	\[
	\nabla \L_h ( \W ) = \frac{\delta_h}{ \abs{\Omega} } \sum\nolimits_{(i_1, \ldots, i_N) \in \Omega} \sign( - y_{i_1, \ldots, i_N} ) \cdot \mathcal{E}_{i_1,...,i_N}
	\text{\,,}
	\]
	where $\mathcal{E}_{i_1,...,i_N} \in \R^{D_1 \times \cdots \times D_N}$ holds $1$ in its $(i_1, \ldots, i_N)$'th entry and $0$ elsewhere.
\end{lemma}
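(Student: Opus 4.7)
The plan is to compute the gradient entry-wise and observe that, under the assumed bound on $\norm{\W}$, each observed residual $(\W)_{i_1,\ldots,i_N} - y_{i_1,\ldots,i_N}$ falls inside the linear regime of the Huber loss, so its derivative collapses to a constant times a sign.

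First, I would write out $\L_h(\W)$ using the definition of the tensor completion loss and differentiate with respect to an individual entry. The result is that $(\nabla \L_h(\W))_{i_1,\ldots,i_N}$ equals $\frac{1}{\abs{\Omega}} \ell_h'\bigl((\W)_{i_1,\ldots,i_N} - y_{i_1,\ldots,i_N}\bigr)$ for $(i_1,\ldots,i_N) \in \Omega$ and vanishes otherwise, where $\ell_h'(z) = z$ for $\abs{z} < \delta_h$ and $\ell_h'(z) = \delta_h \cdot \sign(z)$ for $\abs{z} \geq \delta_h$. This reduces the lemma to controlling the residual at every observed location.

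Next I would use the fact that the Frobenius norm dominates every individual entry, \ie~$|(\W)_{i_1,\ldots,i_N}| \leq \norm{\W} \leq \rho$. Combined with the hypothesis $\rho < \min_{(i_1,\ldots,i_N) \in \Omega} |y_{i_1,\ldots,i_N}| - \delta_h$, the reverse triangle inequality yields
\[
\bigl|(\W)_{i_1,\ldots,i_N} - y_{i_1,\ldots,i_N}\bigr| \geq |y_{i_1,\ldots,i_N}| - |(\W)_{i_1,\ldots,i_N}| > \delta_h
\]
for every $(i_1,\ldots,i_N) \in \Omega$, putting each residual into the linear regime. Moreover, since $|(\W)_{i_1,\ldots,i_N}| \leq \rho < |y_{i_1,\ldots,i_N}|$, the sign of the residual equals $\sign(-y_{i_1,\ldots,i_N})$.

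Plugging these two observations into the entry-wise derivative gives $(\nabla \L_h(\W))_{i_1,\ldots,i_N} = \frac{\delta_h}{\abs{\Omega}} \sign(-y_{i_1,\ldots,i_N})$ on $\Omega$ and zero elsewhere, which is exactly the claimed expression once assembled as a sum of the standard basis tensors $\mathcal{E}_{i_1,\ldots,i_N}$. There is no real obstacle here; the statement is essentially a one-line verification, and the only subtlety worth flagging is the use of $\norm{\W}_{\max} \leq \norm{\W}$ to translate a Frobenius-norm bound into an entry-wise bound.
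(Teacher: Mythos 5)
Your proof is correct and follows essentially the same route as the paper's: both differentiate the loss entry-wise and show each observed residual lies in the linear (saturated) regime of the Huber loss with sign $\sign(-y_{i_1,\ldots,i_N})$. The only cosmetic difference is that you obtain this via the reverse triangle inequality plus a separate sign determination, whereas the paper argues directly by cases on the sign of $y_{i_1,\ldots,i_N}$; both hinge on the same entry-wise bound $|(\W)_{i_1,\ldots,i_N}| \leq \norm{\W} \leq \rho$.
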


\begin{proof}
	Fix $I := (i_1, \ldots, i_N) \in \Omega$, and let $\ell_h' (\cdot)$ denote the derivative of $\ell_h ( \cdot )$.
	If $y_I > 0$, we have that $( \W )_I - y_I \leq \norm{ \W } - y_I \leq \min_{(i_1, \ldots, i_N) \in \Omega} \abs{ y_{i_1, \ldots, i_N} } - \delta_h - y_I \leq - \delta_h$.
	Therefore, $\ell_h' ( ( \W )_I - y_I ) = - \delta_h = \sign( - y_I ) \delta_h$.
	Similarly, if $y_I < 0$, we have that $( \W )_I - y_I \geq \delta_h$ and $\ell_h' ( ( \W )_I - y_I ) = \delta_h = \sign( - y_I ) \delta_h$.
	Note that $y_I$ cannot be exactly zero as, by Assumption~\ref{tf:assump:delta_h}, $\min_{(i_1, \ldots, i_N) \in \Omega} \abs{ y_{i_1, \ldots, i_N} } > \delta_h > 0$.
	The proof concludes by the chain rule:
	\[
	\begin{split}
		\nabla \L_h ( \W ) & = \frac{1}{ \abs{\Omega} } \sum\nolimits_{I \in \Omega} \ell_h' ( ( \W )_{I} - y_{I}) \cdot \mathcal{E}_{I} \\
		& = \frac{\delta_h}{ \abs{\Omega} } \sum\nolimits_{I \in \Omega} \sign( - y_{I} ) \cdot \mathcal{E}_{I}
		\text{\,.}
	\end{split}
	\]
\end{proof}

\begin{lemma}
	\label{tf:lem:huber_loss_smooth}
	The function $\L_h ( 
	\cdot )$ is $1$-smooth, \ie~for any $\W_1, \W_2 \in \R^{D_1 \times \cdots \times D_N}$:
	\[
	\norm{ \nabla \L_h ( \W_1 ) - \nabla \L_h ( \W_2 ) } \leq \norm{ \W_1 - \W_2 }
	\text{\,.}
	\]
\end{lemma}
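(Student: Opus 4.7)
The plan is to reduce the claim to the fact that the scalar derivative $\ell_h'$ is $1$-Lipschitz, and then exploit the entry-wise structure of $\L_h$.

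\textbf{Step 1: Verify that $\ell_h'$ is $1$-Lipschitz.} From the definition of $\ell_h$ in Equation~\eqref{tf:eq:huber_loss}, its derivative is
\[
\ell_h'(z) = \begin{cases} z & , \abs{z} < \delta_h \\ \delta_h \cdot \sign (z) & , \abs{z} \geq \delta_h \end{cases}
\text{\,,}
\]
which is continuous at $\abs{z} = \delta_h$. I would verify $\abs{\ell_h'(z_1) - \ell_h'(z_2)} \leq \abs{z_1 - z_2}$ by a short case analysis: if both $z_1, z_2$ lie in the quadratic region the bound is an equality; if both lie in the linear region the left-hand side is either $0$ or $2\delta_h \leq \abs{z_1 - z_2}$; and if exactly one lies in each region (say $\abs{z_1} < \delta_h \leq \abs{z_2}$, with signs equal after possibly flipping both), then $\abs{\ell_h'(z_1) - \ell_h'(z_2)} = \delta_h - z_1 \leq z_2 - z_1 = \abs{z_2 - z_1}$, with the remaining sign configuration handled analogously.

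\textbf{Step 2: Differentiate $\L_h$ entry-wise.} Using the definition of $\L_h$ from Equation~\eqref{tf:eq:tc_loss}, the chain rule yields, for any $\W \in \R^{D_1 \times \cdots \times D_N}$,
\[
\nabla \L_h ( \W ) = \frac{1}{\abs{\Omega}} \sum\nolimits_{I \in \Omega} \ell_h'\brk{ (\W)_I - y_I } \cdot \mathcal{E}_I
\text{\,,}
\]
in the same spirit as in the proof of Lemma~\ref{tf:lem:huber_loss_const_grad_near_zero}. Since the $\mathcal{E}_I$ are orthonormal,
\[
\norm{ \nabla \L_h ( \W_1 ) - \nabla \L_h ( \W_2 ) }^2 = \frac{1}{\abs{\Omega}^2} \sum\nolimits_{I \in \Omega} \abs2{ \ell_h'\brk{ (\W_1)_I - y_I } - \ell_h'\brk{ (\W_2)_I - y_I } }^2
\text{\,.}
\]

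\textbf{Step 3: Combine.} Applying the Lipschitz estimate from Step~1 to each summand gives
\[
\norm{ \nabla \L_h ( \W_1 ) - \nabla \L_h ( \W_2 ) }^2 \leq \frac{1}{\abs{\Omega}^2} \sum\nolimits_{I \in \Omega} \abs{ (\W_1)_I - (\W_2)_I }^2 \leq \frac{1}{\abs{\Omega}^2} \norm{ \W_1 - \W_2 }^2
\text{\,,}
\]
where the last step uses that $\Omega$ is a subset of the index set of the tensor. Taking square roots and noting that $\abs{\Omega} \geq 1$ yields the desired $1$-smoothness bound.

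There is no real obstacle here; the only mild care needed is in the case split for Step~1 to ensure the mixed regime (one argument quadratic, one linear) is handled correctly, but it reduces to a one-line comparison once one normalizes the signs.
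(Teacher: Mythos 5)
Your proposal is correct and follows essentially the same route as the paper's proof: both reduce the claim to the $1$-Lipschitzness of the scalar derivative $\ell_h'$ and the entry-wise form of $\nabla \L_h$. The only cosmetic difference is that you exploit orthonormality of the $\mathcal{E}_I$ to get an exact $\ell_2$ identity (yielding the slightly sharper constant $1/\abs{\Omega}$), whereas the paper simply applies the triangle inequality.
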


\begin{proof}
	Let $\W_1, \W_2 \in \R^{D_1 \times \cdots \times D_N}$.
	Denote by $\ell_h' (\cdot)$ the derivative of $\ell_h (\cdot)$, \ie:
	\[
	\ell_h' (z) = \begin{cases}
		- \delta_h	& , z < - \delta_h \\
		z	& , \abs{ z } \leq \delta_h \\
		\delta_h	& , z > \delta_h , 
	\end{cases}
	\text{\,.}
	\]
	The result readily follows from the triangle inequality and the fact that $\ell_h' (\cdot)$ is $1$-Lipschitz:
	\[
	\begin{split}
		\norm{ \nabla \L_h ( \W_1 ) - \nabla \L_h ( \W_2 ) } & = \norm{ \frac{1}{ \abs{\Omega}} \sum_{I \in \Omega} \left [ \ell_h' ( ( \W_1 )_I - y_I ) \cdot \mathcal{E}_{I} - \ell_h' (( \W_2 )_I - y_I ) \cdot \mathcal{E}_{I} \right ] } \\
		& \leq \frac{1}{ \abs{\Omega}} \sum_{I \in \Omega} \abs{ \ell_h' ( ( \W_1 )_I - y_I ) - \ell_h' (( \W_2 )_I - y_I ) } \\
		& \leq \frac{1}{ \abs{\Omega}} \sum_{I \in \Omega} \abs{ ( \W_1 )_I - ( \W_2 )_I }  \\
		& \leq \norm{ \W_1 - \W_2 }
		\text{\,,}
	\end{split}
	\]	
	where $\mathcal{E}_{I} \in \R^{D_1 \times \cdots \times D_N}$ holds $1$ in its $I$'th entry and $0$ elsewhere, for $I = (i_1, \ldots, i_N) \in \Omega$.
\end{proof}

\begin{lemma}
	\label{tf:lem:huber_cp_objective_is_smooth_over_bounded_domain}
	Let $G \geq 0$, and denote $\D_{G} :=  \{  \{ \w_{r}^{n} \in \R^{D_n} \}_{r = 1}^R\hspace{0mm}_{n = 1}^N : ( \sum_{r = 1}^R \sum_{n = 1}^N \normnoflex{ \w_r^n }^2 )^{1/2} \leq G  \}$.
	Then, the objective $\phi_h (\cdot)$ is $R N (G^{2(N - 1)} + \delta_h G^{N - 2})$-smooth over $\D_G$, \ie:
	\[
	\begin{split}
	& \norm*{ \nabla \phi_h \big ( \{ \w_{r}^{n} \}_{r = 1}^R\hspace{0mm}_{n = 1}^N \big ) - \nabla \phi_h \big ( \{ \widetilde{\w}_{r}^{n} \}_{r = 1}^R\hspace{0mm}_{n = 1}^N \big ) } \\
	& \hspace{5mm} \leq R N ( G^{2(N - 1)} + \delta_h G^{N - 2}) \cdot \sqrt{ \sum\nolimits_{r = 1}^R \sum\nolimits_{n = 1}^N \norm{ \w_{r}^{n} -  \widetilde{\w}_{r}^{n} }^2 }
	\text{\,,}
	\end{split}
	\]
	for any $ \{ \w_{r}^{n} \}_{r = 1}^R\hspace{0mm}_{n = 1}^N,  \{ \widetilde{\w}_{r}^{n} \}_{r = 1}^R\hspace{0mm}_{n = 1}^N \in \D_G$.
\end{lemma}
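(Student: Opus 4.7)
The plan is to control the gradient of $\phi_h$ coordinate by coordinate using the formula from Lemma~\ref{tf:lem:cp_gradient}, namely $\partial \phi_h / \partial \w_r^n = \tfmatflex{\nabla \L_h(\tftensorend)}_n \cdot \kronp_{n' \neq n} \w_r^{n'}$, and then aggregate the per-coordinate bounds into an $\ell^2$ bound on the full gradient. Two ingredients will do the analytic work: (i) the $1$-smoothness of $\L_h$ established in Lemma~\ref{tf:lem:huber_loss_smooth}, and (ii) the uniform bound $\norm{\nabla \L_h(\W)} \leq \delta_h$, which follows immediately from $|\ell_h'(z)| \leq \delta_h$ together with the fact that $\nabla \L_h(\W)$ is supported on $\Omega$.

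For $\{\w_r^n\}, \{\widetilde{\w}_r^n\} \in \D_G$ with end tensors $\tftensorend$ and $\widetilde{\W}_\mathrm{T}$, I would first add and subtract the mixed term $\tfmatflex{\nabla \L_h(\widetilde{\W}_\mathrm{T})}_n \kronp_{n' \neq n} \w_r^{n'}$ in the $(r,n)$'th partial difference, splitting it into a piece that captures the change in $\nabla \L_h$ against a fixed Kronecker product and a piece that captures the change in the Kronecker product against a fixed $\nabla \L_h$. The first piece is bounded, via the inequalities $\|Mv\| \leq \|M\|_{op} \|v\|$ and $\|M\|_{op} \leq \|M\|_F$, by $\normnoflex{\nabla \L_h(\tftensorend) - \nabla \L_h(\widetilde{\W}_\mathrm{T})} \cdot \normnoflex{\kronp_{n' \neq n}\w_r^{n'}}$. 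Lemma~\ref{tf:lem:huber_loss_smooth} controls the first factor by $\normnoflex{\tftensorend - \widetilde{\W}_\mathrm{T}}$, Lemma~\ref{tf:lem:param_dist_to_end_to_end_dist} translates that into $\sqrt{RN}\,G^{N-1}\,d$ where $d := (\sum_{r,n}\|\w_r^n - \widetilde{\w}_r^n\|^2)^{1/2}$, and the Kronecker product factor is at most $G^{N-1}$, producing an overall per-coordinate bound of $\sqrt{RN}\,G^{2(N-1)}\,d$. The second piece is bounded by $\delta_h \cdot \normnoflex{\kronp_{n' \neq n}\w_r^{n'} - \kronp_{n' \neq n}\widetilde{\w}_r^{n'}}$, and Lemma~\ref{tf:lem:outer_prod_distance_bound} (applied to vectors, noting that outer and Kronecker products of vectors have equal norms) bounds this by $\delta_h\,G^{N-2}\sum_{n' \neq n}\normnoflex{\w_r^{n'} - \widetilde{\w}_r^{n'}}$.

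Finally, I would aggregate over the $RN$ coordinates using the triangle inequality on the Euclidean norm of the stacked partial differences. The first-piece contributions aggregate cleanly to $\sqrt{RN \cdot RN\,G^{4(N-1)}\,d^2} = RN\,G^{2(N-1)}\,d$. For the second-piece contributions, Cauchy-Schwarz converts $\sum_{n' \neq n}\normnoflex{\cdot}$ into $\sqrt{N-1}\bigl(\sum_{n' \neq n}\normnoflex{\cdot}^2\bigr)^{1/2}$, and when summing over $(r,n)$ each pair $(r,n')$ is counted $N-1$ times, giving at most $\delta_h\,G^{N-2}(N-1)\,d \leq \delta_h\,G^{N-2}\,RN\,d$. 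Combining yields exactly the claimed bound $RN(G^{2(N-1)} + \delta_h\,G^{N-2})\,d$.

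The substantive work is already packaged in Lemmas~\ref{tf:lem:huber_loss_smooth},~\ref{tf:lem:outer_prod_distance_bound}, and~\ref{tf:lem:param_dist_to_end_to_end_dist}, so the main obstacle is purely book-keeping: ensuring that the per-coordinate Cauchy-Schwarz step and the counting argument combine to exactly the factor $RN$ promised by the statement, rather than a strictly larger constant such as $(RN)^{3/2}$ or $R N^2$. Getting this right just requires tracking which sum is taken in the $\ell^1$ sense (where Cauchy-Schwarz is invoked) versus the $\ell^2$ sense (where the final aggregation into $d$ occurs).
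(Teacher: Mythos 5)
Your proposal is correct and follows essentially the same route as the paper's proof: the same per-coordinate decomposition via Lemma~\ref{tf:lem:cp_gradient}, the same add-and-subtract of the mixed term, and the same ingredients (Lemma~\ref{tf:lem:huber_loss_smooth}, the $\delta_h$ bound on $\nabla \L_h$, and Lemma~\ref{tf:lem:outer_prod_distance_bound}). The only difference is bookkeeping~---~you invoke Lemma~\ref{tf:lem:param_dist_to_end_to_end_dist} for the end-tensor distance and aggregate the two pieces separately in $\ell^2$, whereas the paper bounds each coordinate by an $\ell^1$ sum over all $(r', n')$ and converts to $\ell^2$ once at the end; both yield the stated constant $RN$.
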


\begin{proof}
	Let $ \{ \w_{r}^{n} \}_{r = 1}^R\hspace{0mm}_{n = 1}^N,  \{ \widetilde{\w}_{r}^{n} \}_{r = 1}^R\hspace{0mm}_{n = 1}^N \in \D_G$.
	By Lemma~\ref{tf:lem:cp_gradient} we may write:
	\be
	\begin{split}
		& \norm*{ \nabla \phi_h \big ( \{ \w_{r}^{n} \}_{r = 1}^R\hspace{0mm}_{n = 1}^N \big ) - \nabla \phi_h \big ( \{ \widetilde{\w}_{r}^{n} \}_{r = 1}^R\hspace{0mm}_{n = 1}^N \big ) }^2  \\
		& = \sum_{r = 1}^R \sum_{n = 1}^N \norm*{ \tfmat{ \nabla \L_h \left ( \tftensorend \right ) }_{n} \cdot \kronp_{n' \neq n} \w_r^{n'} - \tfmat{ \nabla \L_h \big ( \widetilde{\W}_{\mathrm{T}} \big ) }_{n} \cdot \kronp_{n' \neq n} \widetilde{\w}_r^{n'}  }^2
		\text{\,,}
	\end{split}
	\label{tf:eq:cp_huber_loss_sq_grad_dist}
	\ee
	where $\tftensorend$ and $\widetilde{\W}_{\mathrm{T}}$ are the end tensors (Equation~\eqref{tf:eq:end_tensor}) of $\{ \w_r^n \}_{r = 1}^R\hspace{0mm}_{n = 1}^N$ and $\{ \widetilde{\w}_r^n \}_{r = 1}^R\hspace{0mm}_{n = 1}^N$, respectively.
	We turn to bound the square root of each term in the sum.
	Fix $r \in [R], n \in [N]$.
	By the triangle inequality and sub-multiplicativity of the Frobenius norm, we have that:
	\[
	\begin{split}
		& \norm*{ \tfmat{ \nabla \L_h \left ( \tftensorend \right ) }_{n} \cdot \kronp_{n' \neq n} \w_r^{n'} - \tfmat{ \nabla \L_h  \big ( \widetilde{\W}_{\mathrm{T}} \big ) }_{n} \cdot \kronp_{n' \neq n} \widetilde{\w}_r^{n'}  } \\[1mm]
		& \hspace{5mm} \leq \underbrace{ \norm*{ \tfmat{ \nabla \L_h \left ( \tftensorend \right ) }_{n} - \tfmat{ \nabla \L_h  \big ( \widetilde{\W}_{\mathrm{T}} \big ) }_{n} } }_{(I)} \cdot \underbrace{ \norm*{ \kronp_{n' \neq n} \w_r^{n'} } }_{ (II) } \\
		& \hspace{6mm} + \underbrace{ \norm*{ \tfmat{ \nabla \L_h \big ( \widetilde{\W}_{\mathrm{T}} \big ) }_{n} } }_{(III)} \cdot \underbrace{ \norm*{ \kronp_{n' \neq n} \w_r^{n'} - \kronp_{n' \neq n} \widetilde{\w}_r^{n'} } }_{(IV)}
		\text{\,.}
	\end{split}
	\]
	Below, we derive upper bounds for $(I), (II), (III)$, and $(IV)$ separately.
	Starting with $(I)$, by Lemma~\ref{tf:lem:huber_loss_smooth}, the triangle inequality and Lemma~\ref{tf:lem:outer_prod_distance_bound}, it follows that:
	\[
	\begin{split}
		(I) & = \norm*{ \nabla \L_h \left ( \tftensorend \right ) - \nabla \L_h  \big ( \widetilde{\W}_{\mathrm{T}} \big ) } \\
		& \le \norm*{ \tftensorend - \widetilde{\W}_{\mathrm{T}} } \\
		& \leq \sum_{r' = 1}^R \norm*{ \tenp_{n' = 1}^N \w_{r'}^{n'} - \tenp_{n' = 1}^N \widetilde{\w}_{r'}^{n'} } \\
		& \leq G^{N - 1} \sum_{r' = 1}^R \sum_{n' = 1}^N  \norm*{ \w_{r'}^{n'} - \widetilde{\w}_{r'}^{n'} }
		\text{\,.}
	\end{split}
	\]
	Moving on to $(II)$, we have that $\normnoflex{ \kronp_{n' \neq n} \w_r^{n'} } = \prod_{n' \neq n} \normnoflex{ \w_r^{n'} } \leq G^{N - 1}$.
	For $(III)$, the triangle inequality and the fact that $\ell_h' ( \cdot )$, the derivative of $\ell_h (\cdot)$, is bounded (in absolute value) by $\delta_h$ yield:
	\[
	\begin{split}
		(III) & = \norm*{ \frac{1}{ \abs{\Omega} } \sum\nolimits_{I \in \Omega} \ell_h' \left ( ( \widetilde{\W}_{\mathrm{T}} )_I - y_I \right ) \cdot \mathcal{E}_{I} } \leq \delta_h
		\text{\,,}
	\end{split}
	\]
	where $\mathcal{E}_{I} \in \R^{D_1 \times \cdots \times D_N}$ holds $1$ in its $I$'th entry and $0$ elsewhere, for $I = (i_1, \ldots, i_N) \in \Omega$.
	Lastly, since $\normnoflex{ \kronp_{n' \neq n} \w_r^{n'} - \kronp_{n' \neq n} \widetilde{\w}_r^{n'} } = \normnoflex{ \tenp_{n' \neq n} \w_r^{n'} - \tenp_{n' \neq n} \widetilde{\w}_r^{n'} }$, by Lemma~\ref{tf:lem:outer_prod_distance_bound} we have that:
	\[
	(IV) \leq G^{N - 2} \sum_{n' \neq n} \norm*{ \w_r^{n'} - \widetilde{\w}_r^{n'} } \leq G^{N - 2} \sum_{n' =1}^N \norm*{ \w_r^{n'} - \widetilde{\w}_r^{n'} }
	\text{\,.}
	\]
	Putting it all together, we arrive at the following bound:
	\[
	\begin{split}
		& \norm*{ \tfmat{ \nabla \L_h \left ( \tftensorend \right ) }_{n} \cdot \kronp_{n' \neq n} \w_r^{n'} - \tfmat{ \nabla \L_h  \big ( \widetilde{\W}_{\mathrm{T}} \big ) }_{n} \cdot \kronp_{n' \neq n} \widetilde{\w}_r^{n'}  } \\
		& \leq G^{2(N - 1)} \sum_{r' = 1}^R \sum_{n' = 1}^N  \norm*{ \w_{r'}^{n'} - \widetilde{\w}_{r'}^{n'} } + \delta_h G^{N - 2} \sum_{n' =1}^N \norm*{ \w_r^{n'} - \widetilde{\w}_r^{n'} } \\
		& \leq ( G^{2(N - 1)} + \delta_h G^{N - 2}) \sum_{r' = 1}^R \sum_{n' = 1}^N  \norm*{ \w_{r'}^{n'} - \widetilde{\w}_{r'}^{n'} } 
		\text{\,.}
	\end{split}
	\]
	Applying the bound above to Equation~\eqref{tf:eq:cp_huber_loss_sq_grad_dist}, for all $r \in [R] , n \in [N]$, leads to:
	\[
	\begin{split}
		& \norm*{ \nabla \phi_h \left ( \{ \w_{r}^{n} \}_{r = 1}^R\hspace{0mm}_{n = 1}^N \right ) - \nabla \phi_h \left( \{ \widetilde{\w}_{r}^{n} \}_{r = 1}^R\hspace{0mm}_{n = 1}^N \right ) }^2  \\
		& \leq R N ( G^{2(N - 1)} + \delta_h G^{N - 2})^2 \left ( \sum\nolimits_{r = 1}^R \sum\nolimits_{n = 1}^N  \norm*{ \w_{r}^{n} - \widetilde{\w}_{r}^{n} } \right )^2 \\
		& \leq R^2 N^2 ( G^{2(N - 1)} + \delta_h G^{N - 2})^2 \sum\nolimits_{r = 1}^R \sum\nolimits_{n = 1}^N  \norm*{ \w_{r}^{n} - \widetilde{\w}_{r}^{n} }^2
		\text{\,,}
	\end{split}
	\]
	where the last transition is by the fact that $\normnoflex{ \x }_1 \leq \sqrt{D} \cdot \normnoflex{ \x }$ for any $\x \in \R^D$.
	Taking the square root of both sides concludes the proof.
\end{proof}

\begin{lemma}
	\label{tf:lem:const_grad_inner_prod_monotonicity_helper_lemma}
	Let $t' > 0$ and $r \in [R]$.
	Denote $\gamma_r (t) := \inprodnoflex{ - \nabla \L_h ( \tftensorend (t) ) }{ \tenp_{n = 1}^N \widehat{\w}_r^{n} (t) }$, where $\widehat{\w}_r^{n} (t) := \w_r^{n} (t) / \normnoflex{ \w_r^{n} (t) }$ if $\w_r^{n} (t) \neq 0$, and $\widehat{\w}_r^{n} (t) := 0$ otherwise, for $n = 1, \ldots, N$. Suppose that $\nabla \L_h ( \tftensorend (t)) = \nabla \L_h (0)$ for all $t \in [0, t')$.
	Then, $\gamma_r (t)$ is monotonically non-decreasing over the interval $[0, t')$.
\end{lemma}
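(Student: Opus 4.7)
The plan is to reduce monotonicity of $\gamma_r(t)$ to non-negativity of a sum of squared norms of orthogonal projections, exploiting the fact that on $[0,t')$ the only driver of the dynamics is the fixed tensor $G := -\nabla\L_h(0)$. Since $\gamma_r(t) = \inprod{G}{\tenp_{n=1}^N \widehat{\w}_r^{n}(t)}$ with $G$ constant on $[0,t')$, it suffices to differentiate the normalized outer product with respect to time and show the derivative is non-negative.

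I will first dispose of the degenerate case. If $\w_r^{n}(0)=0$ for some $n$, then by the balancedness assumption implicit in the broader proof (Assumption~\ref{tf:assump:a_balance}) together with Lemma~\ref{tf:lem:balanced_param_vector_norm_no_sign_change}, all $\w_r^{n'}(t)$ stay identically zero, so $\gamma_r(t)\equiv 0$ and the claim is trivial. Otherwise the same lemma guarantees $\w_r^{n}(t)\neq 0$ for every $n$ and every $t\geq 0$, so $\widehat{\w}_r^{n}(t)=\w_r^{n}(t)/\norm{\w_r^{n}(t)}$ is smooth in $t$ on $[0,t')$ and we may apply the product rule:
\[
\frac{d}{dt}\gamma_r(t) \;=\; \sum_{n=1}^N \inprodbig{G}{\,\widehat{\w}_r^{1}(t)\tenp\cdots\tenp\tfrac{d}{dt}\widehat{\w}_r^{n}(t)\tenp\cdots\tenp\widehat{\w}_r^{N}(t)\,}.
\]

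Next, I will rewrite each summand in a form that makes non-negativity manifest. Using the mode-$n$ contraction identity from Lemma~\ref{tf:lem:inp_with_tenp_to_mat_kronp}, the $n$'th summand equals $\inprod{\gbf_n(t)}{\tfrac{d}{dt}\widehat{\w}_r^{n}(t)}$, where $\gbf_n(t) := \tfmat{G}_n \cdot \kronp_{n'\neq n}\widehat{\w}_r^{n'}(t)$. By Lemma~\ref{tf:lem:cp_gradient} and the hypothesis $\nabla\L_h(\tftensorend(t))=-G$, the gradient flow equation reads $\tfrac{d}{dt}\w_r^{n}(t) = \prod_{n'\neq n}\norm{\w_r^{n'}(t)}\cdot \gbf_n(t)$. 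Applying the quotient rule to $\widehat{\w}_r^{n}(t)=\w_r^{n}(t)/\norm{\w_r^{n}(t)}$ gives $\tfrac{d}{dt}\widehat{\w}_r^{n}(t) = \norm{\w_r^{n}(t)}^{-1}\,P_n(t)\,\tfrac{d}{dt}\w_r^{n}(t)$, where $P_n(t) := I - \widehat{\w}_r^{n}(t)\widehat{\w}_r^{n}(t)^\top$ is the orthogonal projection onto the tangent space of the unit sphere at $\widehat{\w}_r^{n}(t)$. Combining these two identities and using that $P_n(t)$ is symmetric and idempotent yields
\[
\inprod{\gbf_n(t)}{\tfrac{d}{dt}\widehat{\w}_r^{n}(t)} \;=\; \frac{\prod_{n'\neq n}\norm{\w_r^{n'}(t)}}{\norm{\w_r^{n}(t)}}\, \norm{P_n(t)\gbf_n(t)}^2 \;\geq\; 0.
\]
Summing over $n$ gives $\tfrac{d}{dt}\gamma_r(t)\geq 0$ on $[0,t')$, which is the desired monotonicity.

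The main (minor) technical point will be justifying the step $\tfrac{d}{dt}\widehat{\w}_r^{n}(t) = \norm{\w_r^{n}(t)}^{-1}P_n(t)\,\tfrac{d}{dt}\w_r^{n}(t)$ together with the tangent-projection identity $\inprod{\gbf_n(t)}{P_n(t)\gbf_n(t)} = \norm{P_n(t)\gbf_n(t)}^2$; everything else is direct bookkeeping. No serious obstacle is anticipated: the whole argument is driven by the observation that under a frozen forcing $G$, the normalized gradient-flow update on each factor is the tangential component of $G$'s ``pull'', and the alignment of each $\widehat{\w}_r^{n}(t)$ with that pull can only improve.
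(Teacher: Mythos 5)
Your proof is correct and essentially the same as the paper's: both dispose of the zero-component case via Lemma~\ref{tf:lem:balanced_param_vector_norm_no_sign_change} and then establish $\frac{d}{dt}\gamma_r(t) \geq 0$ by an orthogonal-projection argument. Indeed, your final summands coincide term-by-term with the paper's $\normnoflex{\tenp_{n=1}^N \w_r^n(t)}^{-1}\sum_{n=1}^N \brk[s]1{ \normnoflex{\tfrac{d}{dt}\w_r^n(t)}^2 - (\tfrac{d}{dt}\normnoflex{\w_r^n(t)})^2 }$; the only difference is cosmetic~---~you normalize each factor before differentiating, whereas the paper differentiates $\normnoflex{\tenp_{n=1}^N \w_r^n(t)}^{-1}\inprodnoflex{-\nabla\L_h(0)}{\tenp_{n=1}^N \w_r^n(t)}$ and routes the norm derivative through Corollary~\ref{tf:cor:dyn_fac_comp_norm_balanced}.
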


\begin{proof}
	In the following, unless explicitly stated otherwise, $t$ is to be considered in the time interval $[0, t')$.
	
	Recall that by Assumption~\ref{tf:assump:a_balance} we have that $\normnoflex{ \w_r^{1} (0) } = \cdots = \normnoflex{ \w_r^{N} (0) }$.
	If $\normnoflex{ \w_r^{1} (0) } = \cdots = \normnoflex{ \w_r^{N} (0) } = 0$, then according to Lemma~\ref{tf:lem:balanced_param_vector_norm_no_sign_change} $\normnoflex{ \w_r^{1} (t) } = \cdots = \normnoflex{ \w_r^{N} (t) } = 0$ for all $t \geq 0$.
	In this case $\gamma_r (t) = 0$ over $[0, t')$, and is therefore non-decreasing.
	
	Otherwise, if $\normnoflex{ \w_r^{1} (0) } = \cdots = \normnoflex{ \w_r^{N} (0) } > 0$, from Lemma~\ref{tf:lem:balanced_param_vector_norm_no_sign_change}  we get that $\normnoflex{ \w_r^{1} (t) } = \cdots = \normnoflex{ \w_r^{N} (t) }> 0$ for all $t \geq 0$.
	Thus:
	\[
	\begin{split}
		\gamma_r (t) & = \norm{ \tenp_{n = 1}^N \w_r^{n} (t) }^{-1} \inprodnoflex{ - \nabla \L_h ( \tftensorend (t) ) }{ \tenp_{n = 1}^N \w_r^{n} (t) } \\
		& = \norm{ \tenp_{n = 1}^N \w_r^{n} (t) }^{-1} \inprodnoflex{ - \nabla \L_h ( 0 ) }{ \tenp_{n = 1}^N \w_r^{n} (t) }
		\text{\,,}
	\end{split}
	\] 
	where the second transition is due to $\nabla \L_h ( \tftensorend (t)) = \nabla \L_h (0)$.
	Differentiating with respect to time, we have that:
	\be
	\begin{split}
		\frac{d}{dt} \gamma_r (t) & = - \underbrace{\frac{d}{dt} \left [ \norm{ \tenp_{n = 1}^N \w_r^{n} (t) } \right ] \cdot \norm{ \tenp_{n = 1}^N \w_r^{n} (t) }^{-2} \inprod{ - \nabla \L_h ( 0 ) }{ \tenp_{n = 1}^N \w_r^{n} (t) }}_{(I)} \\
		& \hspace{5mm} + \norm{ \tenp_{n = 1}^N \w_r^{n} (t) }^{-1} \underbrace{ \inprod{ - \nabla \L_h ( 0 ) }{ \tfrac{d}{dt} \tenp_{n = 1}^N \w_r^{n} (t) } }_{ (II) }
		\text{\,.}
	\end{split}
	\label{tf:eq:inner_prod_time_deriv_two_parts}
	\ee
	We now treat $(I)$ and $(II)$ separately.
	Plugging the expression for $\frac{d}{dt} \normnoflex{ \tenp_{n = 1}^N \w_r^{n} (t) }$ from Corollary~\ref{tf:cor:dyn_fac_comp_norm_balanced} into $(I)$, and recalling that $\nabla \L_h ( \tftensorend (t) ) = \nabla \L_h (0)$, leads to:
	\[
	\begin{split}
		(I) = N \norm{ \tenp_{n = 1}^N \w_r^{n} (t) }^{-1 - 2 / N} \inprod{ - \nabla \L_h ( 0 ) }{ \tenp_{n = 1}^N \w_r^{n} (t) }^2
		\text{\,.}
	\end{split}
	\]
	Due to the fact that $\normnoflex{ \tenp_{n = 1}^N \w_r^{n} (t) }^{-2/N} = \normnoflex{ \w_r^{1} (t) }^{-2} = \cdots = \normnoflex{ \w_r^{N} (t) }^{-2}$, we may equivalently write:
	\be
	\begin{split}
		(I) = \norm{ \tenp_{n = 1}^N \w_r^{n} (t) }^{-1} \sum_{n = 1}^N \norm{ \w_r^{n} (t) }^{-2} \inprod{ - \nabla \L_h ( 0 ) }{ \tenp_{n' = 1}^N \w_r^{n'} (t) }^2
		\text{\,.}
	\end{split}
	\label{tf:eq:I_phrase_in_mon_inner_prod_lemma}
	\ee
	For any $n \in [N]$, by Lemma~\ref{tf:lem:dyn_parameter_vector_sq_norm} we know that $\frac{d}{dt} \normnoflex{ \w_r^{n} (t) }^2 = - 2 \inprodnoflex{ \nabla \L_h \left ( 0 \right ) }{ \tenp_{n' = 1}^N \w_r^{n'} (t) }$, which implies $\frac{d}{dt} \normnoflex{ \w_r^{n} (t) } = \normnoflex{ \w_r^{n} (t) }^{-1} \inprodnoflex{ - \nabla \L_h \left ( 0 \right ) }{ \tenp_{n' = 1}^N \w_r^{n'} (t) }$.
	Going back to Equation~\eqref{tf:eq:I_phrase_in_mon_inner_prod_lemma}, we can see that:
	\[
	(I) = \norm{ \tenp_{n = 1}^N \w_r^{n} (t) }^{-1} \sum_{n = 1}^N \left ( \tfrac{d}{dt} \norm{ \w_r^{n} (t) } \right )^2
	\text{\,.}
	\]
	Turning our attention to $(II)$, by Lemmas~\ref{tf:lem:inp_with_tenp_to_mat_kronp} and~\ref{tf:lem:cp_gradient} it follows that:
	\[
	\begin{split}
		(II) & =  \sum_{n = 1}^N \inprod{ - \nabla \L_h ( 0 ) }{ \left ( \tenp_{n' = 1}^{n - 1} \w_r^{n'} (t) \right ) \tenp \tfrac{d}{dt} \w_r^{n} (t) \tenp \left ( \tenp_{n' = n + 1}^N \w_r^{n'} (t) \right )  } \\
		& = \sum_{n = 1}^N \inprod{ \tfmat{ - \nabla \L_h \left ( 0 \right ) }_{n} \cdot \kronp_{n' \neq n} \w_r^{n'} (t)}{ \tfrac{d}{dt} \w_r^{n} (t) } \\
		& = \sum_{n = 1}^N \norm{  \tfrac{d}{dt} \w_r^{n} (t) }^2
		\text{\,.}
	\end{split}
	\]
	Plugging the expressions we derived for $(I)$ and $(II)$ into Equation~\eqref{tf:eq:inner_prod_time_deriv_two_parts} yields:
	\be
	\begin{split}
		& \frac{d}{dt} \gamma_r (t) = \norm{ \tenp_{n = 1}^N \w_r^{n} (t) }^{-1} \cdot \sum_{n = 1}^N  \left [ \norm{  \tfrac{d}{dt} \w_r^{n} (t) }^2 -  \left ( \tfrac{d}{dt} \norm{ \w_r^{n} (t) } \right )^2 \right ]
		\text{\,.}
	\end{split}
	\label{tf:eq:inner_prod_time_deriv_final}
	\ee
	Notice that for any $n \in [N]$:
	\[
	\begin{split}
		\norm{ \tfrac{d}{dt} \w_r^{n} (t) }^2 & \geq \norm{ \Pi_{\w_r^{n} (t)} \left ( \tfrac{d}{dt} \w_r^{n} (t) \right ) }^2 \\
		& = \norm{ \inprodnoflex{ \tfrac{d}{dt} \w_r^{n} (t) }{ \w_r^{n} (t) } \frac{ \w_r^{n} (t) }{  \normnoflex{ \w_r^{n} (t) }^2 } }^2 \\
		& = \left ( \norm{ \w_r^{n} (t) }^{-1} \inprod{ \tfrac{d}{dt} \w_r^{n} (t) }{ \w_r^{n} (t) } \right )^2 \\
		& =  \left ( \tfrac{d}{dt} \norm{ \w_r^{n} (t) } \right )^2
		\text{\,,}
	\end{split}
	\]
	where $\Pi_{\w_r^{n} (t)} (\cdot)$ denotes the orthogonal projection onto the subspace spanned by $\w_r^{n} (t)$.
	The right hand side in Equation~\eqref{tf:eq:inner_prod_time_deriv_final} is therefore non-negative, \ie~$\frac{d}{dt} \gamma_r (t) \geq 0$,
	concluding the proof.
\end{proof}

\subsubsection{Stage I: End Tensor Reaches Reference Sphere}
\label{tf:app:proofs:approx_rank_1:const_grad}

\begin{proposition}
	\label{tf:prop:end_tensor_reaches_ref_sphere}
	The end tensor initially reaches reference sphere $\S$ (Equation~\eqref{tf:eq:ref_sphere}) at some time $T_0 >0$, and:
	\begin{align}
		& \norm{ \tenp_{n = 1}^N \w_r^{n} (t) } \leq \tilde{\epsilon} \quad, ~t \in [0, T_0] ~,~ r = 2, \ldots, R 
		\text{\,,}	
		\label{tf:eq:approx_rank_1_stage_I_t_0_comp_bounds_upper_bound} \\[0.5ex]
		& \abs*{\norm{ \tenp_{n = 1}^N \w_1^{n} (T_0) } - \rho} \leq (R - 1) \cdot \tilde{\epsilon} \text{\,,} 
		\label{tf:eq:approx_rank_1_stage_I_t_0_first_comp_bound} 
	\end{align}
	where $\tilde{\epsilon}$ is as defined in Equation~\eqref{tf:eq:approx_rank_1_consts}.
\end{proposition}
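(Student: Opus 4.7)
The plan is to exploit the fact that, while the end tensor remains inside the reference sphere $\S$, the Huber loss gradient is frozen at $\nabla \L_h(0)$ by Lemma \ref{tf:lem:huber_loss_const_grad_near_zero}, so the component-norm ODE from Corollary \ref{tf:cor:dyn_fac_comp_norm_balanced} becomes a separable scalar equation that can be solved in closed form. Concretely, I would set
\[
T_0 := \inf \{ t \geq 0 : \normnoflex{ \tftensorend (t) } = \rho \},
\]
which is nonzero because the first clause of Equation~\eqref{tf:eq:alpha_size_requirement} gives $\normnoflex{ \tftensorend (0) } \leq R \alpha^N A^N < \rho$. On $[0, T_0)$ we have $\normnoflex{\tftensorend(t)} < \rho$, Lemma~\ref{tf:lem:huber_loss_const_grad_near_zero} yields $\nabla \L_h(\tftensorend(t)) = \nabla \L_h(0)$, Lemma~\ref{tf:lem:const_grad_inner_prod_monotonicity_helper_lemma} makes each $\gamma_r(t) := \inprodnoflex{- \nabla \L_h(0)}{ \tenp_{n=1}^N \widehat{\w}_r^{n}(t) }$ non-decreasing, and Cauchy-Schwarz gives the uniform upper bound $\abs{\gamma_r(t)} \leq \normnoflex{\nabla \L_h(0)}$.

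Writing $y_r(t) := \normnoflex{ \tenp_{n=1}^N \w_r^{n}(t) }$ and $w_r(t) := y_r(t)^{-(N-2)/N}$, the ODE from Corollary~\ref{tf:cor:dyn_fac_comp_norm_balanced} integrates to
\[
w_r(t) = w_r(0) - (N-2) \int_0^t \gamma_r(s)\, ds, \qquad t \in [0, T_0),
\]
with $w_r(0) = \alpha^{-(N-2)} \normnoflex{\abf_r}^{-(N-2)}$ by Assumption~\ref{tf:assump:a_balance}. For the leading component, $\gamma_1(0) > 0$ combined with monotonicity gives $w_1(t) \leq w_1(0) - (N-2) \gamma_1(0) t$, hence $y_1(t)$ is bounded below by a function diverging at the finite time
\[
t_1^* := \frac{ \alpha^{-(N-2)} \normnoflex{\abf_1}^{-(N-2)} }{ (N-2) \gamma_1(0) }.
\]
For every $r \neq 1$ the bound $\abs{\gamma_r(s)} \leq \normnoflex{\nabla \L_h(0)}$ yields $w_r(t) \geq w_r(0) - (N-2) \normnoflex{\nabla \L_h(0)} t$, and Assumption~\ref{tf:assump:a_lead_comp}, rewritten as $\normnoflex{\abf_1}^{N-2} \gamma_1(0) > \normnoflex{\abf_r}^{N-2} \normnoflex{\nabla \L_h(0)}$, guarantees that this right-hand side stays strictly positive over the entire interval $[0, t_1^*]$, so the induced upper bound $y_r(t) \leq (w_r(0) - (N-2)\normnoflex{\nabla \L_h(0)} t)^{-N/(N-2)}$ is valid there.

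Next I would show $T_0 < \infty$ and $T_0 \leq t_1^*$ by contradiction: if not, the ODE holds on all of $[0, t_1^*]$ so $y_1(t) \to \infty$ while every $y_r(t)$ with $r \neq 1$ stays finite, forcing $\normnoflex{\tftensorend(t)} \geq y_1(t) - \sum_{r \neq 1} y_r(t) \to \infty$ and contradicting $\normnoflex{\tftensorend(t)} < \rho$. Substituting $T_0 \leq t_1^*$ into the upper bound for $y_r$ telescopes to
\[
y_r(T_0) \leq \alpha^N \bigl( \normnoflex{\abf_r}^{-(N-2)} - \normnoflex{\abf_1}^{-(N-2)} \normnoflex{\nabla \L_h(0)} / \gamma_1(0) \bigr)^{-N/(N-2)},
\]
and bounding the parenthesized quantity from below by replacing $\normnoflex{\abf_r}$ with $A_{-1}$ together with the second clause of Equation~\eqref{tf:eq:alpha_size_requirement}~---~which was calibrated precisely so that $\alpha^N$ times this constant is at most $\tilde{\epsilon}$~---~gives Equation~\eqref{tf:eq:approx_rank_1_stage_I_t_0_comp_bounds_upper_bound}. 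Equation~\eqref{tf:eq:approx_rank_1_stage_I_t_0_first_comp_bound} then follows by the reverse triangle inequality, using $\normnoflex{\tftensorend(T_0)} = \rho$ (from continuity of the flow and finiteness of $T_0$) and $\abs{ y_1(T_0) - \rho } \leq \normnoflex{ \sum_{r \neq 1} \tenp_{n=1}^N \w_r^n(T_0) } \leq \sum_{r \neq 1} y_r(T_0) \leq (R-1) \tilde{\epsilon}$.

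The main obstacle I foresee is the bookkeeping around Assumption~\ref{tf:assump:a_lead_comp}: it must be invoked first to keep the closed-form upper bound on $y_r(t)$ defined up to $t_1^*$ (ensuring the comparison that gives $T_0 < \infty$ and $T_0 \leq t_1^*$), and again to translate the resulting $\alpha^N$-scale bound into the explicit $\tilde{\epsilon}$ bound demanded by the proposition via the exact inequality baked into Equation~\eqref{tf:eq:alpha_size_requirement}. A secondary subtlety is that the component-norm ODE is stated under the assumption that $\normnoflex{\tenp_n \w_r^n(t)} > 0$, so for components whose initial norm is zero one should first invoke Lemma~\ref{tf:lem:balanced_param_vector_norm_no_sign_change} to confirm they remain identically zero (trivially satisfying the bound), and for the remaining components use the same lemma to keep $y_r(t) > 0$ throughout, so that the closed-form solution is valid on the full interval.
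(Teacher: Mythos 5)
Your proposal is correct and follows essentially the same route as the paper's proof: the paper simply packages your inline ODE integration (the closed-form bounds on $\normnoflex{\tenp_{n=1}^N \w_1^{n}(t)}$ and $\normnoflex{\tenp_{n=1}^N \w_r^{n}(t)}$ under the frozen gradient, using the monotonicity of $\gamma_1$ and the Cauchy--Schwarz bound on $\gamma_r$) into Lemma~\ref{tf:lem:const_grad_comp_norm_bounds_helper_lemma}, and then runs the identical contradiction argument to get $T_0 < t_1^*$, followed by the same triangle-inequality step. The subtleties you flag (Assumption~\ref{tf:assump:a_lead_comp} keeping the upper bound well-defined up to $t_1^*$, and Lemma~\ref{tf:lem:balanced_param_vector_norm_no_sign_change} for components initialized at zero) are handled in exactly the same way in the paper.
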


\medskip

Towards proving Proposition~\ref{tf:prop:end_tensor_reaches_ref_sphere}, we establish the following key lemma.
\begin{lemma}
	\label{tf:lem:const_grad_comp_norm_bounds_helper_lemma}
	Let $t' \leq  \frac{\alpha^{2 - N} \normnoflex{ \abf_1 }^{2 - N} (N - 2)^{-1}}{ \inprodnoflex{ - \nabla \L_h (0) }{ \tenp_{n = 1}^N \widehat{\abf}_{ 1 }^{n} } }$, and suppose that $\nabla \L_h ( \tftensorend (t)) = \nabla \L_h (0)$ for all $t \in [0, t')$.
	Then:
	\begin{align}
		& \norm{ \tenp_{n = 1}^N \w_1^{n} (t) } \geq \left ( \alpha^{2-N} \normnoflex{ \abf_1 }^{2 - N}  - (N - 2) \inprod{ - \nabla \L_h (0) }{ \tenp_{n = 1}^N \widehat{\abf}_{ 1 }^{n} } \cdot t \right )^{- \frac{N}{N-2}} , t \in [0, t') \text{\,,}
		\label{tf:eq:comp_norm_lower_bound_with_time_helper_lemma} \\[0.5ex]
		& \norm{ \tenp_{n = 1}^N \w_r^{n} (t) } \leq \left ( \alpha^{2-N} \normnoflex{ \abf_r }^{2 - N}  - (N - 2) \norm{ \nabla \L_h ( 0 ) } \cdot t \right )^{- \frac{N}{N-2}} , t \in [0, t') , r = 2, \ldots, R \text{\,.}
		\label{tf:eq:comp_norms_upper_bound_with_time_helper_lemma}
	\end{align}
	In particular:
	\be
	\norm{ \tenp_{n = 1}^N \w_r^{n} (t) } \leq \alpha^N \Big ( \normnoflex{ \abf_r }^{2 - N}  - \normnoflex{ \abf_1 }^{2 - N} \tfrac{ \norm{ \nabla \L_h ( 0 ) } }{  \inprod{ - \nabla \L_h (0) }{ \tenp_{n = 1}^N \widehat{\abf}_{ 1 }^{n} } } \Big )^{- \frac{N}{N-2}} \! , t \in [0, t') , r = 2, \ldots, R \text{\,.}
	\label{tf:eq:comp_norms_upper_bound_helper_lemma}
	\ee
\end{lemma}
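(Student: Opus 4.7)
The plan is to convert the dynamical characterization from Corollary~\ref{tf:cor:dyn_fac_comp_norm_balanced} into a scalar ODE inequality for each component norm. Setting $f_r(t) := \normnoflex{ \tenp_{n=1}^N \w_r^n(t)}$, the corollary gives $f_r'(t) = N \gamma_r(t) f_r(t)^{2 - 2/N}$ with $\gamma_r(t) = \inprodnoflex{-\nabla \L_h(\tftensorend(t))}{\tenp_n \widehat{\w}_r^n(t)}$, and by hypothesis $\nabla \L_h(\tftensorend(t)) = \nabla \L_h(0)$ throughout $[0,t')$. The identity $\tfrac{d}{dt} f_r^{-(N-2)/N} = -\tfrac{N-2}{N} f_r^{-(2 - 2/N)} f_r'$ will then convert pointwise bounds on $\gamma_r$ into one-sided bounds on $\tfrac{d}{dt} f_r^{-(N-2)/N}$, which can be integrated and inverted. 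Note that under Assumption~\ref{tf:assump:a_balance} and Lemma~\ref{tf:lem:balanced_param_vector_norm_no_sign_change}, either $\abf_r = 0$ (in which case $f_r \equiv 0$ and both bounds are trivial) or $f_r(t) > 0$ on $[0, t')$, so the manipulations below are legitimate; also $f_r(0) = \alpha^N \normnoflex{\abf_r}^N$ by balancedness, hence $f_r(0)^{-(N-2)/N} = \alpha^{2-N} \normnoflex{\abf_r}^{2-N}$.

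For $r = 1$, invoking Lemma~\ref{tf:lem:const_grad_inner_prod_monotonicity_helper_lemma} gives $\gamma_1(t) \geq \gamma_1(0) = \inprodnoflex{-\nabla \L_h(0)}{\tenp_n \widehat{\abf}_1^n}$, which is strictly positive by Assumption~\ref{tf:assump:a_lead_comp}. Substituting yields $\tfrac{d}{dt} f_1^{-(N-2)/N} \leq -(N-2)\gamma_1(0)$; integrating over $[0,t]$ and exponentiating gives $f_1(t) \geq (\alpha^{2-N}\normnoflex{\abf_1}^{2-N} - (N-2)\gamma_1(0) t)^{-N/(N-2)}$, where the base remains positive on $[0, t')$ precisely because of the hypothesis $t' \leq \alpha^{2-N}\normnoflex{\abf_1}^{2-N}(N-2)^{-1}/\gamma_1(0)$. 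This is Equation~\eqref{tf:eq:comp_norm_lower_bound_with_time_helper_lemma}.

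For $r \in \{2,\ldots,R\}$, the Cauchy--Schwarz inequality together with $\normnoflex{\tenp_n \widehat{\w}_r^n(t)} \leq 1$ gives the worst-case bound $\gamma_r(t) \leq \normnoflex{\nabla \L_h(0)}$ (valid regardless of the sign of $\gamma_r$). The same integration then yields $\tfrac{d}{dt} f_r^{-(N-2)/N} \geq -(N-2)\normnoflex{\nabla \L_h(0)}$, and inverting produces Equation~\eqref{tf:eq:comp_norms_upper_bound_with_time_helper_lemma}, provided the base $\alpha^{2-N}\normnoflex{\abf_r}^{2-N} - (N-2)\normnoflex{\nabla \L_h(0)} t$ remains positive on $[0,t')$. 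This last point is the crux: it holds because Assumption~\ref{tf:assump:a_lead_comp}, rearranged as $\normnoflex{\abf_1}^{N-2} \gamma_1(0) > \normnoflex{\abf_r}^{N-2} \normnoflex{\nabla \L_h(0)}$ for $r \neq 1$, shows that the cutoff time for component $r$ strictly exceeds the cutoff time for component $1$, and hence exceeds $t'$.

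Finally, Equation~\eqref{tf:eq:comp_norms_upper_bound_helper_lemma} follows by monotonicity: the right-hand side of \eqref{tf:eq:comp_norms_upper_bound_with_time_helper_lemma} is increasing in $t$, so substituting the upper bound on $t'$ from the lemma's hypothesis into the bound for time $t$ gives, after collecting the $\alpha^{2-N}$ factors,
\[
f_r(t) \leq \alpha^N \Big( \normnoflex{\abf_r}^{2-N} - \normnoflex{\abf_1}^{2-N}\tfrac{\normnoflex{\nabla \L_h(0)}}{\gamma_1(0)} \Big)^{-N/(N-2)},
\]
which is exactly \eqref{tf:eq:comp_norms_upper_bound_helper_lemma}. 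The main subtlety in executing this plan is verifying the positivity of the base of the inverted power in the upper-bound step on $[0, t')$; this is precisely where Assumption~\ref{tf:assump:a_lead_comp}, which encodes the ``separation'' between the lead component and the others at initialization, enters in an essential way.
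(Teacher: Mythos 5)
Your proposal is correct and follows essentially the same route as the paper's proof: the same integration of the balanced dynamics from Corollary~\ref{tf:cor:dyn_fac_comp_norm_balanced}, the same use of Lemma~\ref{tf:lem:const_grad_inner_prod_monotonicity_helper_lemma} to lower bound $\gamma_1(t)$ by $\gamma_1(0)$, Cauchy--Schwarz for the upper bound on $\gamma_r(t)$, and the same rearrangement of Assumption~\ref{tf:assump:a_lead_comp} to keep the base of the inverted power positive on $[0,t')$. No gaps.
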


\begin{proof}
	For simplicity of notation we denote $\gamma_r (t) := \inprodnoflex{ - \nabla \L_h ( \tftensorend (t) ) }{ \tenp_{n = 1}^N \widehat{\w}_r^{n} (t) }$, where $\widehat{\w}_r^{n} (t) := \w_r^{n} (t) / \normnoflex{ \w_r^{n} (t) }$ if $\w_r^{n} (t) \neq 0$, and $\widehat{\w}_r^{n} (t) := 0$ otherwise, for $r = 1, \ldots, R, ~n = 1, \ldots, N$.
	In the following, unless explicitly stated otherwise, $t$ is to be considered in the time interval $[0, t')$.
	
	By Assumption~\ref{tf:assump:a_balance}, $\{ \abf_r^n \}_{r = 1}^R\hspace{0mm}_{n = 1}^N$ have unbalancedness magnitude zero, thus, so do $\{ \w_r^n (0) \}_{r = 1}^R\hspace{0mm}_{n = 1}^N$ (recall $\w_r^n (0) = \alpha \cdot \abf_r^n$ for $r = 1, \ldots, R, ~n = 1, \ldots, N$).
	According to Corollary~\ref{tf:cor:dyn_fac_comp_norm_balanced} the evolution of a component's norm is given by:
	\be
	\frac{d}{dt} \norm{ \tenp_{n = 1}^N \w_r^{n} (t) } = N \gamma_r (t) \cdot \norm{ \tenp_{n = 1}^N \w_r^{n} (t) }^{2 - \frac{2}{N}} \quad , ~r = 1, \ldots, R
	\text{\,.}
	\label{tf:eq:dyn_fac_comp_norm_const_grad}
	\ee
	
	\paragraph*{Proof of Equation~\eqref{tf:eq:comp_norm_lower_bound_with_time_helper_lemma}  (lower bound for $\normnoflex{ \tenp_{n = 1}^N \w_1^{n} (t) }$):}
	
	By Lemma~\ref{tf:lem:const_grad_inner_prod_monotonicity_helper_lemma}, $\gamma_1 (t)$ is monotonically non-decreasing.
	Thus, from Equation~\eqref{tf:eq:dyn_fac_comp_norm_const_grad} we have:
	\be
	\frac{d}{dt} \norm{ \tenp_{n = 1}^N \w_1^{n} (t) } \geq N \gamma_1 (0) \cdot \norm{ \tenp_{n = 1}^N \w_1^{n} (t) }^{2 -\frac{2}{N}}
	\text{\,.}
	\label{tf:eq:comp_1_norm_deriv_lower_bound}
	\ee
	Assumption~\ref{tf:assump:a_lead_comp} (second line in Equation~\eqref{tf:eq:assump_components_sep_at_init}) necessarily means that $\w_1^{n} (0) = \alpha \cdot \abf_1^{n} \neq 0$ for all $n \in [N]$.
	Recalling that the unbalancedness magnitude is zero at initialization, from Lemma~\ref{tf:lem:balanced_param_vector_norm_no_sign_change} we get that $\normnoflex{ \w_1^{1} (t) } = \cdots = \normnoflex{ \w_1^{N} (t) } > 0$, and so $\normnoflex{ \tenp_{n = 1}^N \w_1^{n} (t) }^{2 - 2 / N} > 0$, for all $t \in [0, t')$.
	Therefore, we may divide both sides of Equation~\eqref{tf:eq:comp_1_norm_deriv_lower_bound} by $\normnoflex{ \tenp_{n = 1}^N \w_1^{n} (t) }^{2 - 2 / N}$.
	Doing so, and integrating with respect to time, leads to:
	\be
	\begin{split}
		& \int_{\hat{t} = 0}^t \left [ \norm{ \tenp_{n = 1}^N \w_1^{n} (\hat{t}) }^{2 / N - 2} \frac{d}{d \hat{t}} \norm{ \tenp_{n = 1}^N \w_1^{n} (\hat{t}) } \right ] d \hat{t} \geq N \gamma_1 (0) \cdot t \\
		\implies & \frac{N}{2 - N} \left ( \norm{ \tenp_{n = 1}^N \w_1^{n} (t) }^{2 / N - 1} - \norm{ \tenp_{n = 1}^N \w_1^{n} (0) }^{2 / N - 1} \right ) \geq N \gamma_1 (0) \cdot t \\
		\implies & \norm{ \tenp_{n = 1}^N \w_1^{n} (t) }^{2 / N - 1} \leq  \norm{ \tenp_{n = 1}^N \w_1^{n} (0) }^{2 / N - 1} - (N - 2) \gamma_1 (0) \cdot t
		\text{\,.}
	\end{split}
	\label{tf:eq:comp_1_integration_lower_bound}
	\ee
	Notice that $\gamma_1 (0) =  \inprodnoflex{ - \nabla \L_h ( \tftensorend (0) ) }{ \tenp_{n = 1}^N \widehat{\w}_1^{n} (0) } = \inprodnoflex{ - \nabla \L_h ( 0 ) }{ \tenp_{n = 1}^N \widehat{\abf}_1^{n} }$. 
	Since $\normnoflex{ \tenp_{n = 1}^N \w_1^{n} (0) } = \prod_{n = 1}^N \normnoflex{ \w_1^{n} (0) } = \alpha^N \normnoflex{ \abf_1 }^N$ and $t < t' \leq \alpha^{2 - N} \normnoflex{ \abf_1 }^{2 - N} (N - 2)^{-1} \gamma_1 (0)^{-1}$, we can see that:
	\[
	\norm{ \tenp_{n = 1}^N \w_1^{n} (0) }^{2 / N - 1} - (N - 2) \gamma_1 (0) \cdot t = \alpha^{2 - N} \norm{ \abf_1 }^{2 - N} - (N - 2) \gamma_1 (0) \cdot t > 0
	\text{\,.}
	\]
	Therefore, Equation~\eqref{tf:eq:comp_norm_lower_bound_with_time_helper_lemma} readily follows by rearranging the last inequality in Equation~\eqref{tf:eq:comp_1_integration_lower_bound}:
	\[
	\begin{split}
		\norm{ \tenp_{n = 1}^N \w_1^{n} (t) } \geq \left (\alpha^{2 - N} \norm{ \abf_1}^{2 - N} - (N - 2) \gamma_1 (0) \cdot t  \right )^{ - \frac{N}{N - 2} }
		\text{\,.}
	\end{split}
	\]

	\paragraph*{Proof of Equations~\eqref{tf:eq:comp_norms_upper_bound_with_time_helper_lemma} and~\eqref{tf:eq:comp_norms_upper_bound_helper_lemma}  (upper bounds for $\normnoflex{ \tenp_{n = 1}^N \w_r^{n} (t) }$):}
	
	Fix some $r \in \{ 2, \ldots, R \}$.
	First, we deal with the case where $\normnoflex{ \w_r^{1} (0) } = \cdots = \normnoflex{ \w_r^{N} (0) } = 0$.
	If it is so, by Lemma~\ref{tf:lem:balanced_param_vector_norm_no_sign_change} we have that $\normnoflex{ \w_r^{1} (t) } = \cdots = \normnoflex{ \w_r^{N} (t) } = 0$ for all $t \in [0, t')$.
	Hence, $\normnoflex{ \tenp_{n = 1}^N \w_r^{n} (t) } = 0$ for all $t \in [0, t')$, \ie~Equations~\eqref{tf:eq:comp_norms_upper_bound_with_time_helper_lemma} and~\eqref{tf:eq:comp_norms_upper_bound_helper_lemma} trivially hold.
	
	Now we move to the case where $\normnoflex{ \w_r^{1} (0) } = \cdots = \normnoflex{ \w_r^{N} (0) } > 0$.
	From Lemma~\ref{tf:lem:balanced_param_vector_norm_no_sign_change} we know that $\normnoflex{ \w_r^{1} (t) } = \cdots = \normnoflex{ \w_r^{N} (t) } > 0$ for all $t \in [0, t')$.
	Since $\nabla \L_h ( \tftensorend (t)) = \nabla \L_h (0)$, by the Cauchy-Schwartz inequality we then have:
	\[
	\gamma_r (t) = \inprod{ - \nabla \L_h ( 0 ) }{ \tenp_{n = 1}^N \widehat{\w}_r^{n} (t) } \leq \norm{ \nabla \L_h (0) } \norm{ \tenp_{n = 1}^N \widehat{\w}_r^{n} (t) } = \norm{ \nabla \L_h (0) }
	\text{\,.}
	\]
	Combined with Equation~\eqref{tf:eq:dyn_fac_comp_norm_const_grad}, we arrive at the following upper bound:
	\[
	\frac{d}{dt} \norm{ \tenp_{n = 1}^N \w_r^{n} (t) } \leq N \norm{ \nabla \L_h (0) } \cdot \norm{ \tenp_{n = 1}^N \w_r^{n} (t) }^{2 - \frac{2}{N}}
	\text{\,.}
	\]
	Dividing both sides of the inequality by $\normnoflex{ \tenp_{n = 1}^N \w_r^{n} (t) }^{2 - 2 / N}$ (is positive since $\normnoflex{ \w_r^{1} (t) } = \cdots = \normnoflex{ \w_r^{N} (t) } > 0$), and integrating with respect to time, yields:
	\[
	\begin{split}
		& \int_{\hat{t} = 0}^t \left [ \norm{ \tenp_{n = 1}^N \w_r^{n} (\hat{t}) }^{2 / N - 2} \frac{d}{d \hat{t}} \norm{ \tenp_{n = 1}^N \w_r^{n} (\hat{t}) } \right ] d \hat{t} \leq N \norm{ \nabla \L_h (0) } \cdot t \\
		\implies & \frac{N}{2 - N} \left ( \norm{ \tenp_{n = 1}^N \w_r^{n} (t) }^{2 / N - 1} - \norm{ \tenp_{n = 1}^N \w_r^{n} (0) }^{2 / N - 1} \right ) \leq N \norm{ \nabla \L_h (0) } \cdot t
		\text{\,.}
	\end{split}
	\]
	Rearranging the inequality above, and making use of the fact that $\normnoflex{ \tenp_{n = 1}^N \w_r^{n} (0) } = \prod_{n = 1}^N \normnoflex{ \w_r^{n} (0) } = \alpha^N \normnoflex{ \abf_r }^N$, we arrive at:
	\be
	\begin{split}
		\norm{ \tenp_{n = 1}^N \w_r^{n} (t) }^{2 / N - 1} & \geq \norm{ \tenp_{n = 1}^N \w_r^{n} (0) }^{2 / N - 1} - (N - 2) \norm{ \nabla \L_h (0) } \cdot t \\
		& = \alpha^{2 - N} \norm{ \abf_r}^{2 - N} - (N - 2) \norm{ \nabla \L_h (0) } \cdot t 
		\text{\,.}
	\end{split}
	\label{tf:eq:comp_norm_derivation_intermediate_upper_bound}
	\ee
	Noticing $\gamma_1 (0) =  \inprodnoflex{ - \nabla \L_h ( \tftensorend (0) ) }{ \tenp_{n = 1}^N \widehat{\w}_1^{n} (0) } = \inprodnoflex{ - \nabla \L_h ( 0 ) }{ \tenp_{n = 1}^N \widehat{\abf}_1^{n} }$, by Assumption~\ref{tf:assump:a_lead_comp} we have that $\normnoflex{ \abf_{1} } > \normnoflex{ \abf_{r} } \norm{ \nabla \L_h ( 0 ) }^{1 / (N - 2)} \cdot \gamma_1 (0)^{- 1 / (N - 2)}$.
	Therefore:
	\[
	t' \leq \alpha^{2 - N} \normnoflex{ \abf_1 }^{2 - N} (N - 2)^{-1} \gamma_1 (0)^{-1} < \alpha^{2 - N} \normnoflex{ \abf_r }^{2 - N} (N - 2)^{-1} \normnoflex{ \nabla \L_h (0)}^{-1}
	\text{\,.}
	\]
	This implies that the right hand side in Equation~\eqref{tf:eq:comp_norm_derivation_intermediate_upper_bound} is positive for all $t \in [0, t')$.
	Thus, rearranging Equation~\eqref{tf:eq:comp_norm_derivation_intermediate_upper_bound} establishes
	Equation~\eqref{tf:eq:comp_norms_upper_bound_with_time_helper_lemma}:
	\[
	\norm{ \tenp_{n = 1}^N \w_r^{n} (t) } \leq \left ( \alpha^{2-N} \normnoflex{ \abf_r }^{2 - N}  - (N - 2) \norm{ \nabla \L_h ( 0 ) } \cdot t \right )^{- \frac{N}{N-2}} 
	\text{\,.}
	\]
	Equation~\eqref{tf:eq:comp_norms_upper_bound_helper_lemma} then directly follows:
	\[
	\begin{split}
		\norm{ \tenp_{n = 1}^N \w_r^{n} (t) } & \leq \left ( \alpha^{2-N} \normnoflex{ \abf_r }^{2 - N}  - (N - 2) \norm{ \nabla \L_h ( 0 ) } \cdot t' \right )^{- \frac{N}{N-2}} \\
		& \leq \left ( \alpha^{2-N} \normnoflex{ \abf_r }^{2 - N}  - \alpha^{2 - N} \normnoflex{ \abf_1 }^{2 - N} \norm{ \nabla \L_h ( 0 ) } \gamma_1 (0)^{-1}  \right )^{- \frac{N}{N-2}} \\
		& = \alpha^N \left (  \normnoflex{ \abf_r }^{2 - N} - \normnoflex{ \abf_1 }^{2 - N} \norm{ \nabla \L_h ( 0 ) } \gamma_1 (0)^{-1} \right )^{- \frac{N}{N-2}} 
		\text{\,.}
	\end{split}
	\]
\end{proof}

\medskip

\begin{proof}[Proof of Proposition~\ref{tf:prop:end_tensor_reaches_ref_sphere}]
	Notice that at initialization $\norm{ \tftensorend (0) } \leq  \sum_{r = 1}^R \normnoflex{ \tenp_{n = 1}^N \w_r^{n} (0) } \leq R \alpha^N A^N < \rho$.
	We can therefore examine the trajectory up until the time at which $\norm{ \tftensorend (t) } = \rho$, \ie~until it reaches the reference sphere $\S$.
	Formally, define:
	\[
	T_0 := \inf \left \{ t \geq 0 : \tftensorend (t) \in \S \right \}
	\text{\,,}
	\]
	where by convention $T_0 := \infty$ if the set on the right hand side is empty.
	For all $t \in [0, T_0)$, clearly, $\norm{ \tftensorend (t) } < \rho$, and so by Lemma~\ref{tf:lem:huber_loss_const_grad_near_zero} $\nabla \L_h ( \tftensorend (t) ) = \nabla \L_h (0)$.
	We claim that $T_0$ is finite.
	Assume by way of contradiction that $T_0 = \infty$.
	For $t' :=  \alpha^{2 - N} \normnoflex{ \abf_1 }^{2 - N} (N - 2)^{-1} \inprodnoflex{ - \nabla \L_h (0) }{ \tenp_{n = 1}^N \widehat{\abf}_{1}^{n} }^{-1}$, by Equation~\eqref{tf:eq:comp_norm_lower_bound_with_time_helper_lemma} from Lemma~\ref{tf:lem:const_grad_comp_norm_bounds_helper_lemma} we have that $ \normnoflex{ \tenp_{n = 1}^N \w_1^{n} (t) } $ is lower bounded by a quantity that goes to $\infty$ as $t \to t^{\prime -}$.
	On the other hand, by Equation~\eqref{tf:eq:comp_norms_upper_bound_helper_lemma} from Lemma~\ref{tf:lem:const_grad_comp_norm_bounds_helper_lemma}, $\normnoflex{ \tenp_{n = 1}^N \w_2^{n} (t) }, \ldots, \normnoflex{ \tenp_{n = 1}^N \w_R^{n} (t) }$ are bounded over $[0, t')$.
	Taken together, there must exist $\hat{t} \in [0, t')$ at which:
	\[
	\norm{ \tftensorend  ( \hat{t})} \geq \normnoflex{ \tenp_{n = 1}^N \w_1^{n} ( \hat{t} ) } - \sum_{r = 2}^R \normnoflex{ \tenp_{n = 1}^N \w_r^{n} ( \hat{t} ) } \geq \rho
	\text{\,.}
	\]
	Since $\norm{ \tftensorend (t) }$ is continuous in $t$, and $\norm{ \tftensorend (0) } < \rho$, this contradicts our assumption that $T_0 = \infty$.
	Hence, $T_0 < \infty$, and in particular $T_0 < t'$.
	Notice that continuity of $\norm{ \tftensorend (t) }$ further implies that $\norm{ \tftensorend (T_0) } = \rho$, \ie~ $T_0$ is the initial time at which $\tftensorend (t)$ reaches the reference sphere $\S$.
	Applying our assumption on the size of $\alpha$ (Equation~\eqref{tf:eq:alpha_size_requirement}) to Equation~\eqref{tf:eq:comp_norms_upper_bound_helper_lemma} from Lemma~\ref{tf:lem:const_grad_comp_norm_bounds_helper_lemma} establishes Equation~\eqref{tf:eq:approx_rank_1_stage_I_t_0_comp_bounds_upper_bound}.
	Equation~\eqref{tf:eq:approx_rank_1_stage_I_t_0_first_comp_bound} then readily follows by the triangle inequality:
	\[
	\begin{split}
		\abs*{ \norm{ \tenp_{n = 1}^N \w_1^{n} (T_0) } - \rho } & =  \abs*{ \norm{ \tenp_{n = 1}^N \w_1^{n} (T_0) } - \norm{ \tftensorend (T_0) } } \\
		& \leq \norm{ \tenp_{n = 1}^N \w_1^{n} (T_0) - \tftensorend (T_0) } \\
		& = \norm*{ \sum\nolimits_{r = 2}^R \tenp_{n = 1}^N \w_r^n (T_0) }  \\
		& \leq (R - 1) \cdot \tilde{\epsilon}
		\text{\,.}
	\end{split}
	\]
\end{proof}

\subsubsection{Stage II: End Tensor Follows Rank One Trajectory}
\label{tf:app:proofs:approx_rank_1:trajectory}

As shown in Proposition~\ref{tf:prop:end_tensor_reaches_ref_sphere} (Appendix~\ref{tf:app:proofs:approx_rank_1:const_grad}), the end tensor initially reaches reference sphere $\S$ at some time $T_0 > 0$, for which Equations~\eqref{tf:eq:approx_rank_1_stage_I_t_0_comp_bounds_upper_bound} and~\eqref{tf:eq:approx_rank_1_stage_I_t_0_first_comp_bound} hold.
Therefore, the time-shifted trajectory is given by $\tftensorendbar (t) = \tftensorend (t + T_0)$ for all $t \geq 0$.
Denote the corresponding time-shifted factorization weight vectors by:
\[
\widebar{\w}_r^n (t) := \w_r^n (t + T_0) \quad , ~ t \geq 0 ~,~ r = 1, \ldots, R ~,~ n = 1, \ldots, N
\text{\,.}
\]
We are now at a position to define the approximating rank one trajectory $\W_1 ( t )$ emanating from~$\S$.
Let $\{ \widetilde{\w}^n (t) \}_{n = 1}^N$ be a curve born from gradient flow when minimizing $\phi_h (\cdot)$ with a one-component tensor factorization, initialized at:
\[
\widetilde{\w}^n (0) := \frac{ \rho^{1 / N} }{\normnoflex{ \widebar{\w}_1^n (0) }} \cdot \widebar{\w}_{1}^n (0) \quad , ~n = 1, \ldots, N
\text{\,.}
\]
Notice that by definition $\normnoflex{ \widetilde{\w}^1 (0) } = \cdots = \normnoflex{ \widetilde{\w}^N (0) } = \rho^{1 / N}$.
Therefore, $\{ \widetilde{\w}^n (0) \}_{n = 1}^N$ have unbalancedness magnitude zero (Definition~\ref{def:unbalancedness_magnitude}).
Denoting $\W_1 (t) := \tenp_{n = 1}^N \widetilde{\w}^n (t)$, for $t \geq 0$, we can see that $\W_1 (t)$ is a balanced rank one trajectory.
Furthermore, $\normnoflex{ \W_1 (0) } = \normnoflex{ \tenp_{n = 1}^N \widetilde{\w}^n (0) } = \prod_{n = 1}^N \normnoflex{ \widetilde{\w}^n (0) } = \rho$, meaning $\W_1 (0) \in \S$.
It will be convenient to treat $\{ \widetilde{\w}^n (t) \}_{n = 1}^N$ as an $R$-component factorization with components $2, \ldots, R$ being zero.
To this end, denote $\widetilde{\w}_1^n (t) := \widetilde{\w}^n (t)$, and define $\widetilde{\w}_r^n (t) := 0$ for all $t \geq 0$, $r \in \{ 2, \ldots, R \}$ and $n \in [N]$.
Notice that, according to Lemma~\ref{tf:lem:width_R_equivalent_to_larger_width_with_zero_init}, $\{ \widetilde{\w}_r^n (t) \}_{r = 1}^R\hspace{0mm}_{n = 1}^N$ indeed follow a gradient flow path of an $R$-component factorization.

Next, we turn to bound the distance between $\{ \widebar{\w}_r^n (0) \}_{r = 1}^R\hspace{0mm}_{n = 1}^N$ and $\{ \widetilde{\w}_r^n (0) \}_{r = 1}^R\hspace{0mm}_{n = 1}^N$.
From Equation~\eqref{tf:eq:approx_rank_1_stage_I_t_0_comp_bounds_upper_bound} in Proposition~\ref{tf:prop:end_tensor_reaches_ref_sphere}, recalling $\tilde{\epsilon} \leq \hat{\epsilon}$ (by their definition in Equation~\eqref{tf:eq:approx_rank_1_consts}), we obtain:
\be
\normnoflex{ \widebar{\w}_r^n (0)} = \normnoflex{ \w_r^n (T_0)} = \normnoflex{ \tenp_{n' = 1}^N \w_r^{n'} (T_0) }^{ \frac{1}{N} } \leq \tilde{\epsilon}^{ \frac{1}{N} } \leq \hat{\epsilon}^{ \frac{1}{N} } ~~ , ~r = 2, \ldots, R ~,~ n = 1, \ldots, N
\text{\,.}
\label{tf:eq:approx_rank_1_t_0_param_vec_bounds_other_compcs}
\ee
As for the first component, for any $n \in [N]$, the fact that $\normnoflex{\widebar{\w}_1^n (0)} = \normnoflex{ \w_1^n (T_0) } =  \normnoflex{ \tenp_{n' = 1}^N \w_1^{n'} (T_0 ) }^{1 / N}$ and Equation~\eqref{tf:eq:approx_rank_1_stage_I_t_0_first_comp_bound} from Proposition~\ref{tf:prop:end_tensor_reaches_ref_sphere} yield the following bound:
\[
\left ( \rho - (R - 1) \cdot \tilde{\epsilon} \right )^{\frac{1}{N}} \leq \norm{\widebar{\w}_1^n (0)} \leq \left ( \rho + (R - 1) \cdot \tilde{\epsilon} \right )^{\frac{1}{N}}
\text{\,.}
\]
On the one hand, since the $\ell_1$ norm is no greater than the $\ell_p$ norm for $p < 1$, we have that $( \rho + (R - 1) \cdot \tilde{\epsilon} )^{1 / N} \leq \rho^{1 / N} + (R - 1)^{1 / N} \cdot \tilde{\epsilon}^{1 / N} \leq \rho^{1 / N} + (R - 1)^{1 / N} \cdot \hat{\epsilon}^{1 / N}$.
On the other hand, since by definition $\tilde{\epsilon} \leq ( R - 1 )^{-1} ( \rho - [ \rho^{1 / N} - (R - 1)^{1 / N} \cdot \hat{\epsilon}^{1 / N} ]^{N} )$, it is straightforward to verify that $( \rho - (R - 1) \cdot \tilde{\epsilon}  )^{1 / N} \geq \rho^{1 / N} - (R - 1)^{1 / N} \cdot \hat{\epsilon}^{1 / N}$.
Put together, while noticing that $\normnoflex{ \widebar{\w}_1^n (0) - \widetilde{\w}_1^n (0) } = \abs{ \normnoflex{ \widebar{\w}_1^n (0) } - \rho^{1 / N} }$, we arrive at:
\be
\norm{ \widebar{\w}_1^n (0) - \widetilde{\w}_1^n (0) } = \abs{ \norm{\widebar{\w}_1^n (0)} - \rho^{ \frac{1}{N} } } \leq (R - 1)^{\frac{1}{N}} \cdot \hat{\epsilon}^{\frac{1}{N}} \quad , ~ n \in [N]
\text{\,.}
\label{tf:eq:approx_rank_1_t_0_param_vec_bounds_first_comp}
\ee
Equations~\eqref{tf:eq:approx_rank_1_t_0_param_vec_bounds_other_compcs} and~\eqref{tf:eq:approx_rank_1_t_0_param_vec_bounds_first_comp} lead to the following bound on the distance between $\{ \widebar{\w}_r^n (0) \}_{r = 1}^R\hspace{0mm}_{n = 1}^N$ and $\{ \widetilde{\w}_r^n (0) \}_{r = 1}^R\hspace{0mm}_{n = 1}^N$:
\[
\begin{split}
	\sum_{r = 1}^R \sum_{n = 1}^N \norm{ \widebar{\w}_r^n (0) - \widetilde{\w}_r^n (0) }^2 & = \sum_{n = 1}^N \norm{ \widebar{\w}_1^n (0) - \widetilde{\w}_1^n (0) }^2  + \sum_{r = 2}^R \sum_{n = 1}^N \norm{ \widebar{\w}_r^n (0)}^2 \\
	& \leq (R - 1)^{\frac{2}{N}} N \cdot \hat{\epsilon}^{\frac{2}{N}} + (R - 1) N \cdot \hat{\epsilon}^{ \frac{2}{N} } \\
	& \leq 2 (R - 1) N \cdot \hat{\epsilon}^{ \frac{2}{N} }
	\text{\,,}
\end{split}
\]
where the last transition is by $(R - 1)^{2 / N} \leq (R - 1)$.
Let $\widetilde{B} := \sqrt{N} \left ( \max \{ B, \rho \} + 1 \right )^{\frac{1}{N}}$ and $\beta := R N ( (\widetilde{B} + 1)^{2(N - 1)} + \delta_h (\widetilde{B} + 1)^{N - 2} )$ (as defined in Equation~\eqref{tf:eq:approx_rank_1_consts}).
According to Lemma~\ref{tf:lem:huber_cp_objective_is_smooth_over_bounded_domain}, the objective $\phi_h (\cdot)$ is $\beta$-smooth over the closed ball of radius $\widetilde{B} + 1$ around the origin.
Furthermore, seeing that $2 (R - 1) N \cdot \hat{\epsilon}^{2 / N} < \exp ( - 2 \beta \cdot T )$ (by the definition of $\hat{\epsilon}$ in Equation~\eqref{tf:eq:approx_rank_1_consts}), we obtain:
\[
\begin{split}
	\sum_{r = 1}^R \sum_{n = 1}^N \norm{ \widebar{\w}_r^n (0) - \widetilde{\w}_r^n (0) }^2 \leq 2 (R - 1) N \cdot \hat{\epsilon}^{ \frac{2}{N} } < \exp ( - 2 \beta \cdot T )
	\text{\,.}
\end{split}
\]
Thus, Lemma~\ref{tf:lem:gf_smooth_dist_bound} implies that at least until $t \geq T$ or $( \sum_{r = 1}^R \sum_{n = 1}^N \normnoflex{ \widetilde{\w} _r^n ( t)}^2 )^{1/2} \geq \widetilde{B}$ the following holds:
\be
\begin{split}
	\sum_{r = 1}^R \sum_{n = 1}^N \norm{ \widebar{\w}_r^n (t) - \widetilde{\w}_r^n (t) }^2 & \leq \sum_{r = 1}^R \sum_{n = 1}^N \norm{ \widebar{\w}_r^n (0) - \widetilde{\w}_r^n (0) }^2 \cdot \exp \left ( 2 \beta \cdot t \right ) \\
	& \leq 2 (R - 1) N \cdot \hat{\epsilon}^{\frac{2}{N}} \cdot \exp \left ( 2 \beta \cdot t \right )
	\text{\,.}
\end{split}
\label{tf:eq:approx_rank_1_t_0_sq_param_dist_at_end}
\ee

Suppose that $( \sum_{r = 1}^R  \sum_{n = 1}^N \normnoflex{ \widetilde{\w} _r^n ( t )}^2 )^{1/2} < \widetilde{B}$ for all $t \in [0, T]$.
In this case, Equation~\eqref{tf:eq:approx_rank_1_t_0_sq_param_dist_at_end} holds for all $t \in [0, T]$.
Seeing that $2 (R - 1) N \cdot \hat{\epsilon}^{2 / N} \cdot \exp \left ( 2 \beta \cdot T \right ) < 1$, Equation~\eqref{tf:eq:approx_rank_1_t_0_sq_param_dist_at_end} gives $( \sum_{r = 1}^R \sum_{n = 1}^N \normnoflex{ \widebar{\w}_r^n ( t )}^2 )^{1/2} < \widetilde{B} + 1$.
Then, Equation~\eqref{tf:eq:approx_rank_1_t_0_sq_param_dist_at_end}, the fact that $\W_1 (t) = \tenp_{n = 1}^N \widetilde{\w}_1^n (t) = \sum_{r = 1}^R \tenp_{n = 1}^N \widetilde{\w}_r^n (t)$, and Lemma~\ref{tf:lem:param_dist_to_end_to_end_dist} yield:
\[
\norm{ \tftensorendbar ( t ) - \W_1 (t) } \leq \sqrt{2} R N (\widetilde{B} + 1)^{N - 1} \cdot \exp \left ( \beta \cdot T \right ) \cdot \hat{\epsilon}^{ \frac{1}{N} } \quad , ~ t \in [0, T]
\text{\,.}
\]
Recalling that $\hat{\epsilon} \leq 2^{- \frac{N}{2}} R^{-N} N^{-N} (\widetilde{B} + 1)^{N - N^2} \cdot \exp ( -N \beta T) \cdot \epsilon^N$, we conclude:
\be
\norm{ \tftensorendbar ( t ) - \W_1 (t) } \leq \epsilon
\text{\,,}
\label{tf:eq:end_tensor_rank_1_epsilon_dist}
\ee
for all $t \in [0, T]$.

It remains to treat the case where $( \sum_{r = 1}^R  \sum_{n = 1}^N \normnoflex{ \widetilde{\w} _r^n ( t )}^2 )^{1/2} \geq \widetilde{B}$ for some $t \in [0, T]$.
Let $t' \in [ 0 , T ]$ be the initial such time.
The desired result readily follows by showing that: \emph{(i)} Equation~\eqref{tf:eq:end_tensor_rank_1_epsilon_dist} holds for $t \in [0, t']$; and \emph{(ii)} $\normnoflex{ \tftensorendbar ( t' )} \geq B$.

We start by proving that $\norm{ \W_1 ( t' ) } \geq \max \{ B, \rho \} + 1$ and $t' > 0$.
Recalling that $\widetilde{\w}_r^1 (t), \ldots, \widetilde{\w}_r^N (t)$ are identically zero for all $r \in \{ 2, \ldots, R \}$, we have that:
\[
\sum_{n = 1}^N \normnoflex{ \widetilde{\w} _1^n ( t' )}^2  =  \sum_{r = 1}^R  \sum_{n = 1}^N \normnoflex{ \widetilde{\w} _r^n ( t' )}^2 \geq \widetilde{B}^2
\text{\,.}
\]
Since $\normnoflex{ \widetilde{\w}_1^1 ( 0 ) } = \cdots = \normnoflex{ \widetilde{\w}_1^N ( 0 ) }$, Lemma~\ref{tf:lem:balancedness_conservation_body} implies $\normnoflex{ \widetilde{\w}_1^1 ( t' ) } = \cdots = \normnoflex{ \widetilde{\w}_1^N ( t' ) }$.
Thus, for any $n \in [N]$:
\[
N \normnoflex{ \widetilde{\w}_{1}^{n} (t') }^2 = \sum_{n' = 1}^N \normnoflex{ \widetilde{\w}_{1}^{n'} ( t' ) }^2 \geq \widetilde{B}^2
\text{\,,}
\]
which leads to $\normnoflex{ \widetilde{\w}_{1}^{n} ( t' ) } \geq \widetilde{B} N^{-1 / 2}$.
In turn this yields $\normnoflex{ \W_1 (t') } = \norm{ \tenp_{n = 1}^N \widetilde{\w}_{1}^{n} ( t' ) } = \prod_{n = 1}^N \norm{ \widetilde{\w}_{1}^{n} ( t' ) } \geq \widetilde{B}^N N^{- \frac{N}{2}}$.
Plugging in $\widetilde{B} := \sqrt{N} (\max \{ B, \rho \}  + 1)^{\frac{1}{N}}$, we conclude: 
\be
\normnoflex{ \W_1 ( t' ) } \geq \max \{ B, \rho \} + 1
\text{\,.}
\label{tf:eq:rank_1_traj_norm_lower_bound}
\ee
Note that this necessarily means $t' > 0$ as $\W_1 (0) \in \S$, \ie~$\normnoflex{ \W_1 ( 0 ) } = \rho < \max \{ B, \rho \} + 1$.

Now, we focus on the time interval $[0, t')$, over which Equation~\eqref{tf:eq:approx_rank_1_t_0_sq_param_dist_at_end} holds and $( \sum_{r = 1}^R  \sum_{n = 1}^N \normnoflex{ \widetilde{\w} _r^n ( t )}^2 )^{1/2} < \widetilde{B}$.
By arguments analogous to those used in the case where $( \sum_{r = 1}^R  \sum_{n = 1}^N \normnoflex{ \widetilde{\w} _r^n ( t )}^2 )^{1/2} < \widetilde{B}$ for all $t \in [0, T]$, we obtain that Equation~\eqref{tf:eq:end_tensor_rank_1_epsilon_dist} holds for all $t \in [0, t')$.
Continuity with respect to time then implies $\normnoflex{ \tftensorendbar ( t' ) - \W_1 (t') } \leq \epsilon < 1$.
Lastly, together with Equation~\eqref{tf:eq:rank_1_traj_norm_lower_bound} this leads to $\normnoflex{ \tftensorendbar ( t' )} \geq \normnoflex{ \W_1 (t') } - 1 \geq B$.

Overall, we have shown that $\normnoflex{\tftensorendbar ( t ) - \W_1 (t) } \leq \epsilon$ at least until time $T$ or time $t'$ at which $\normnoflex{\tftensorendbar ( t' )} \geq B$, establishing the desired result.
\qed

\subsection{Proof of Corollary~\ref{corollary:converge_rank_1}}
\label{tf:app:proofs:converge_rank_1}

For $\epsilon > 0$, there exists a time $T' > 0$ at which all balanced rank one trajectories emanating from $\S$ are within distance $\epsilon / 2$ from $\W^*$.
Moreover, these trajectories are confined to a ball of radius $B$ around the origin, for some $B > 0$.
According to Theorem~\ref{tf:thm:approx_rank_1}, if initialization scale $\alpha$ is sufficiently small, $\normnoflex{ \tftensorendbar (t) - \W_1 (t) } \leq \min \{ \epsilon / 2, 1 / 2 \}$ at least until $t \geq T'$ or $\normnoflex{ \tftensorendbar (t) } \geq B + 1$, where $\tftensorendbar (t)$ is the time-shifted trajectory of $\tftensorend (t)$, and $ \W_1 (t)$ is a balanced rank one trajectory emanating from $\S$.
We claim that the latter cannot hold, \ie~$\normnoflex{ \tftensorendbar (t) } < B + 1$ for all $t \in [0, T']$.
To see it is so, assume by way of contradiction otherwise, and let $t' \in [0, T']$ be the initial time at which $\normnoflex{ \tftensorendbar (t') } \geq B + 1$.
Since $\normnoflex{ \tftensorendbar (t') - \W_1 (t') } < 1$, we have that $\normnoflex{  \W_1 (t') } > B$, in contradiction to $\W_1 (t)$ being confined to a ball of radius $B$ around the origin.
Thus, $\normnoflex{ \tftensorendbar ( T' ) - \W_1 (T') } \leq \epsilon / 2$.
The proof concludes by the triangle inequality:
\[
\norm{ \tftensorendbar (T') - \W^* } \leq \norm{ \tftensorendbar (T') - \W_1 (T') } + \norm{ \W_1 (T') - \W^* } \leq \epsilon
\text{\,.}
\]
\qed

\chapter{Implicit Regularization in Hierarchical Tensor Factorization \\ and Deep Convolutional Neural Networks} 
\label{htf:app:imp_reg_htf}

\section{Hierarchical Tensor Factorization as Deep Non-Linear Convolutional Network}
\label{htf:app:htf_cnn}

\begingroup
\setlength{\columnsep}{17pt}
\begin{wrapfigure}{r}{0.36\textwidth}
	\vspace{-4.6mm}
	\hspace*{-0.5mm}
	\includegraphics[width=0.31\textwidth]{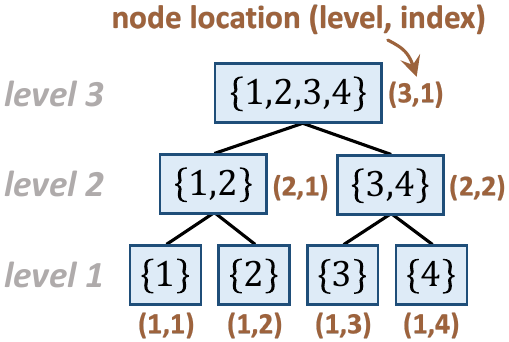}
	\vspace{-1mm}
	\caption{
		Perfect $P$-ary mode tree that combines adjacent indices, for order $N = 4$ and $P = 2$.
	}
	\vspace{-2mm}
	\label{htf:fig:pary_mode_tree_with_locs}
	\vspace{-3mm}
\end{wrapfigure}
In this appendix, we formally state and prove a known correspondence between hierarchical tensor factorization and certain deep non-linear convolutional networks (\cf~\cite{cohen2016expressive}).
For conciseness, we assume the tensor order $N$ is a power of $P \in \N_{\geq 2}$ and the mode dimensions $D_1, \ldots, D_N$ are equal, and focus on the factorization induced by a perfect $P$-ary mode tree (\defin~\ref{htf:def:mode_tree}) that combines nodes with adjacent indices.

Let $L := \log_P N$ denote the height of the mode tree, and associate each of its nodes with a respective location $(l, n)$, where $l \in [L + 1]$ is the level in the tree (numbered from leaves to root in ascending order), and $n \in [ N / P^{l - 1} ]$ is the index inside the level (see \fig~\ref{htf:fig:pary_mode_tree_with_locs} for an illustration).
Adapting \eq~\eqref{htf:eq:ht_end_tensor} to the current setting, the end tensor is computed as follows:
\be
\begin{split}
	&\text{for all $n \in [N]$ and $r \in [ R_{ 1 } ]$:} \\
	& \quad\underbrace{ \htftensorpart{1, n}{r} }_{\text{order $1$}} := \weightmat{1, n}_{:, r} \text{\,,} \\[1mm]
	&\text{for all $l \in \{ 2, \ldots, L \}, n \in [ N / P^{l - 1}]$, and $r \in [ R_{ l } ]$ (traverse interior nodes of $\htmodetree$ from} \\[-1.2mm]
	& \text{leaves to root, non-inclusive):} \\
	&\quad \underbrace{\htftensorpart{l, n}{r}}_{\text{order $P^{l - 1}$}} := \sum\nolimits_{r' = 1}^{R_{l - 1}} \weightmat{l, n}_{r', r} \left [ \tenp_{p = (n - 1) \cdot P + 1}^{n \cdot P} \htftensorpart{l - 1, p }{r'} \right ] \text{\,,} \\[0mm]
	&\underbrace{\tensorend}_{\text{order $N$}} := \sum\nolimits_{r' = 1}^{R_{L}} \weightmat{L + 1, 1}_{r', 1} \left [ \tenp_{p = 1}^{P} \htftensorpart{L, p }{r'} \right ]
	\text{\,,}
\end{split}
\label{htf:eq:ht_pary_end_tensor}
\ee
where $\big ( \weightmat{l, n} \in \R^{R_{l - 1} \times R_{l}} \big )_{l \in [L + 1], n \in [N / P^{l - 1}]}$ are the factorization's weight matrices, $R_{L + 1} = 1$, and $R_{0} := D_1 = \cdots =~D_N$.

The deep non-linear convolutional network corresponding to the above factorization (illustrated in \fig~\ref{htf:fig:tf_htf_as_convnet} (bottom)) has $L$ hidden layers, the $l$'th one comprising a locally connected linear operator with $R_{l}$ channels followed by channel-wise product pooling with window size $P$ (multiplicative non-linearity).
Denoting by $\bigl ( \hbf^{(l - 1, 1)}, \ldots,  \hbf^{(l - 1, N / P^{l - 1})} \bigr ) \in \R^{R_{l - 1}} \times \cdots \times \R^{R_{l - 1}}$ the output of the $l - 1$'th hidden layer, where $\bigl ( \hbf^{ (0, 1) }, \ldots,  \hbf^{ (0, N) } \bigr ) := \bigl ( \xbf^{ (1) }, \ldots, \xbf^{ (N) } \bigr )$ is the network's input, the locally connected operator of the $l$'th layer computes $\big ( \weightmat{l, n} \big )^\top \hbf^{(l - 1, n)}$ for each index $n \in [N / P^{l - 1}]$.
We refer to this operator as “$1 \times 1$ conv'' in appeal to the case of weight sharing, where $\weightmat{l, 1} = \cdots = \weightmat{l, N / P^{l - 1}}$.
Following the locally connected operator, for each $n \in [N / P^l]$ and $r \in [R_{l}]$, the pooling operator computes $\prod_{p = (n - 1) \cdot P + 1}^{n \cdot P}  \big [ \big ( \weightmat{l, p} \big )^\top \hbf^{(l - 1, p)} \big ]_r$, thereby producing $\big ( \hbf^{(l, 1)}, \ldots, \hbf^{(l, N / P^l)} \big )$.
After passing the input through all hidden layers, a final linear layer, whose weights are $\weightmat{L + 1, 1}$, yields the scalar output of the network $\bigl ( \weightmat{L + 1, 1} \bigr )^\top \hbf^{(L, 1)}$.
Notice that the weight matrices of the hierarchical tensor factorization are exactly the learnable weights of the network, and $R_{l - 1}$~---~the number of local components (\defin~\ref{htf:def:local_comp}) at nodes in level $l$ of the factorization~---~is the width of the network's $l - 1$'th hidden layer.

The above formulation of the network supports not only sequential inputs (\eg~audio and text), but also inputs arranged as multi-dimensional arrays (\eg~two-dimensional images).  
The choice of how to assign the indices $1, \ldots, N$ to input elements determines the geometry of pooling windows throughout the network~\cite{cohen2017inductive}.

\prop~\ref{htf:prop:htf_cnn} below implies that we may view solution of a prediction task using the deep convolutional network described above as a hierarchical tensor factorization problem, and vice versa.
For example, solving tensor completion and certain sensing problems using hierarchical tensor factorization amounts to applying the corresponding network to a regression task.
\endgroup
\begin{proposition}[adapted from~\cite{cohen2016expressive}]
	\label{htf:prop:htf_cnn}
	Let $f_\Theta : \times_{n = 1} \R^{D_n} \to \R$ be the function realized by the deep non-linear convolutional network described above, where $\Theta$ stands for the network's weights, \ie~$\Theta := \big ( \weightmat{l, n} \big )_{l \in [L + 1], n \in [N / P^{l - 1}]}$.
	Denote by $\tensorend$ the end tensor of the hierarchical tensor factorization specified in \eq~\eqref{htf:eq:ht_pary_end_tensor}.
	Then, for all $\xbf^{(1)} \in \R^{D_1}, \ldots, \xbf^{(N)} \in \R^{D_N}$:
	\[
	f_\Theta \big ( \xbf^{(1)}, \ldots, \xbf^{(N)} \big ) = \inprodbig{ \tenp_{n = 1}^N \xbf^{(n)}  }{ \tensorend }
	\text{\,.}
	\]
\end{proposition}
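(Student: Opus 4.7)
The plan is to prove the proposition by induction on the level $l \in \{1,\dots,L\}$ of the mode tree, tracking the activations of the convolutional network and matching them against the intermediate tensors $\htftensorpart{l,n}{r}$ produced along the way by the hierarchical tensor factorization in \eq~\eqref{htf:eq:ht_pary_end_tensor}. The single algebraic identity doing all the work is the factorization of inner products over tensor products, namely $\prod_{i} \inprodnoflex{u_i}{v_i} = \inprodbig{\tenp_i u_i}{\tenp_i v_i}$, combined with bilinearity of the inner product.

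Concretely, let $\gbf^{(l,n)} := \big(\weightmat{l,n}\big)^\top \hbf^{(l-1,n)}$ denote the post-convolution, pre-pooling activation at level $l$ and index $n \in [N/P^{l-1}]$. I will prove by induction on $l$ that for every such $l,n$ and every channel $r \in [R_l]$,
\[
\big[\gbf^{(l,n)}\big]_r \;=\; \inprodBig{\, \tenp_{p=(n-1)P^{l-1}+1}^{n P^{l-1}} \xbf^{(p)}\,}{\,\htftensorpart{l,n}{r}\,}.
\]
The base case $l=1$ is immediate: since $\hbf^{(0,n)} = \xbf^{(n)}$ and $\htftensorpart{1,n}{r} = \weightmat{1,n}_{:,r}$, the definition of the $1{\times}1$ convolution gives $[\gbf^{(1,n)}]_r = \inprodnoflex{\xbf^{(n)}}{\weightmat{1,n}_{:,r}}$, as claimed. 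For the inductive step, I will first apply the pooling operator to pass from $\gbf^{(l-1,\cdot)}$ to $\hbf^{(l-1,\cdot)}$: the identity $[\hbf^{(l-1,n)}]_{r'} = \prod_{p=(n-1)P+1}^{nP} [\gbf^{(l-1,p)}]_{r'}$, the inductive hypothesis, and the inner-product/tensor-product identity above jointly yield
\[
\big[\hbf^{(l-1,n)}\big]_{r'} = \inprodBig{\,\tenp_{q=(n-1)P^{l-1}+1}^{n P^{l-1}} \xbf^{(q)}\,}{\,\tenp_{p=(n-1)P+1}^{nP} \htftensorpart{l-1,p}{r'}\,}.
\]
Then applying the $1{\times}1$ convolution of level $l$ and using bilinearity to move the sum over $r'$ inside the inner product reproduces exactly the defining recursion of $\htftensorpart{l,n}{r}$ in \eq~\eqref{htf:eq:ht_pary_end_tensor}, completing the induction.

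Taking $l = L$, $n = 1, \ldots, P$ and pooling once more gives $[\hbf^{(L,1)}]_{r'} = \inprodnoflex{\tenp_{q=1}^{N} \xbf^{(q)}}{\tenp_{p=1}^{P} \htftensorpart{L,p}{r'}}$. The final linear layer $\big(\weightmat{L+1,1}\big)^\top \hbf^{(L,1)}$ then produces the scalar output, and a last application of bilinearity pulls the weighted sum $\sum_{r'} \weightmat{L+1,1}_{r',1}(\cdot)$ into the second slot of the inner product, yielding exactly $\tensorend$ as defined in \eq~\eqref{htf:eq:ht_pary_end_tensor}.

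There is no serious mathematical obstacle here; the result is essentially a translation between two notations for the same sequence of arithmetic operations. The only real care is in the index bookkeeping: making sure that the pooling window of size $P$ at level $l$ correctly aggregates the $P^{l-1}$-sized sub-products of the input coordinates into $P^{l}$-sized ones, and that the mode permutations $\pi_\nu$ appearing in \eq~\eqref{htf:eq:ht_end_tensor} act trivially in this setting (which they do, because the perfect $P$-ary mode tree combines adjacent indices, so the children's index sets are already in ascending order).
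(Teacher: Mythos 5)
Your proposal is correct and follows essentially the same route as the paper's proof: an induction over layers showing that the post-convolution activation at level $l$, index $n$, channel $r$ equals $\inprodbig{ \tenp_{p = (n -1 ) P^{l - 1} + 1}^{n P^{l - 1}} \xbf^{(p)} }{ \htftensorpart{l, n}{r} }$, using the identity $\prod_i \inprodnoflex{u_i}{v_i} = \inprodnoflex{\tenp_i u_i}{\tenp_i v_i}$ for the pooling step and bilinearity for the convolution and output layers. Your remark that the mode permutations act trivially for the adjacent-index perfect $P$-ary tree is also the correct justification for why they can be dropped in \eq~\eqref{htf:eq:ht_pary_end_tensor}.
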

\begin{proof}[Proof sketch (proof in \subapp~\ref{htf:app:proofs:htf_cnn})]
	By induction over the layers of the network, we show that the output of the $l$'th convolutional layer (linear output layer for $l = L + 1$) at index $n$ and channel $r$ is $\inprodbig{ \tenp_{p = (n -1 ) \cdot P^{l - 1} + 1}^{n \cdot P^{l - 1}} \xbf^{(p)} }{ \htftensorpart{l, n}{r} }$, where \smash{$\htftensorpart{L + 1, 1}{1} := \tensorend$}, and all other $\htftensorpart{l,n}{r}$ are the intermediate tensors formed when computing $\tensorend$ according to \eq~\eqref{htf:eq:ht_pary_end_tensor}.
	Since $f_\Theta \big ( \xbf^{(1)}, \ldots, \xbf^{(N)} \big )$ is the output of the $L + 1$'th layer at index $1$ and channel $1$, applying the inductive claim for $l = L + 1, n = 1$, and $r = 1$ concludes the proof.
\end{proof}

We conclude this appendix by noting that in the special case where $P = N$, if the weight matrix of the root node holds ones, the hierarchical tensor factorization reduces to a tensor factorization, and the corresponding convolutional network has a single hidden layer (with global product pooling) followed by a final summation layer. 
We thus obtain the equivalence between tensor factorization and a shallow non-linear convolutional network as a corollary of \prop~\ref{htf:prop:htf_cnn}.


\section{Evolution of Local Component Norms Under Arbitrary Initialization}
\label{htf:app:dyn_arbitrary}

\thm~\ref{htf:thm:loc_comp_norm_bal_dyn} in \subsect~\ref{htf:sec:inc_rank_lrn:evolution} characterizes the evolution of local component norms in a hierarchical tensor factorization, under the assumption of unbalancedness magnitude zero at initialization.
\thm~\ref{htf:thm:loc_comp_norm_unbal_dyn} below extends the characterization to account for arbitrary initialization. 
It establishes that if the unbalancedness magnitude at initialization is small --- as is the case under any near-zero initialization~---~local component norms approximately evolve per \thm~\ref{htf:thm:loc_comp_norm_bal_dyn}.

\begin{theorem}
	\label{htf:thm:loc_comp_norm_unbal_dyn}
	With the context and notations of \thm~\ref{htf:thm:loc_comp_norm_bal_dyn}, assume unbalancedness magnitude $\epsilon \geq 0$ at initialization.
	Then, for any $\nu \in \interior (\htmodetree)$, $r \in [R_\nu]$, and time $t \geq 0$ at which $\htfcompnorm{\nu}{r} (t) > 0$:\footnote{
		Since norms are not differentiable at the origin, when $\htfcompnorm{\nu}{r} (t)$ is equal to zero it may not be differentiable with respect to time.
	}
	\begin{itemize}
		\item If $\inprodbig{- \nabla \htfendloss ( \tensorend (t) ) }{ \htfcomp{\nu}{r} (t) } \geq 0$, then:
		\be
		\begin{split}
			&\frac{d}{dt} \htfcompnorm{\nu}{r} (t) \leq \left ( \htfcompnorm{\nu}{r} (t)^{\frac{2}{  L_{\nu} } } + \epsilon \right )^{L_{\nu} - 1} \cdot L_{\nu} \inprodbig{- \nabla \htfendloss ( \tensorend (t) ) }{ \htfcomp{\nu}{r} (t) }
			\text{\,,} \\[1mm]
			&\frac{d}{dt} \htfcompnorm{\nu}{r} (t) \geq \frac{ \htfcompnorm{\nu}{r} (t)^2 }{ \htfcompnorm{\nu}{r} (t)^{ \frac{2}{  L_{\nu} } }  + \epsilon } \cdot  L_{\nu} \inprodbig{- \nabla \htfendloss ( \tensorend (t) ) }{ \htfcomp{\nu}{r} (t) }
			\text{\,;}
		\end{split}
		\label{htf:eq:loc_comp_norm_unbal_pos_bound}
		\ee
		\item otherwise, if $\inprodbig{- \nabla \htfendloss ( \tensorend (t) ) }{ \htfcomp{\nu}{r} (t) } < 0$, then:
		\be
		\begin{split}
			&\frac{d}{dt} \htfcompnorm{\nu}{r} (t) \geq \left ( \htfcompnorm{\nu}{r} (t)^{\frac{2}{ L_{\nu} } } + \epsilon \right )^{  L_{\nu} - 1 } \cdot  L_{\nu} \inprodbig{- \nabla \htfendloss ( \tensorend (t) ) }{ \htfcomp{\nu}{r} (t) }
			\text{\,,} \\[1mm]
			&\frac{d}{dt} \htfcompnorm{\nu}{r} (t) \leq \frac{ \htfcompnorm{\nu}{r} (t)^2 }{ \htfcompnorm{\nu}{r} (t)^{ \frac{2}{ L_{\nu} } }  + \epsilon } \cdot  L_{\nu} \inprodbig{- \nabla \htfendloss ( \tensorend (t) ) }{ \htfcomp{\nu}{r} (t) }
			\text{\,.}
		\end{split}
		\label{htf:eq:loc_comp_norm_unbal_neg_bound}
		\ee
	\end{itemize}
\end{theorem}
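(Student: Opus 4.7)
\textbf{Proof proposal for Theorem~\ref{htf:thm:loc_comp_norm_unbal_dyn}.}

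The plan is to follow the same differentiation step used in the proof of \thm~\ref{htf:thm:loc_comp_norm_bal_dyn}, but to replace its balanced-case collapsing of weight-vector norms with two-sided bounds derived from Lemma~\ref{htf:lem:loc_comp_sq_norm_diff_invariant}. Fix $\nu \in \interior(\htmodetree)$ and $r \in [R_\nu]$, and consider a time $t \geq 0$ at which $\htfcompnorm{\nu}{r}(t) > 0$. Writing $\htfcompnorm{\nu}{r}(t) = \prod_{\wbf \in \localcomp(\nu,r)} \norm{\wbf(t)}$, differentiating, and substituting the gradient-flow equation (exactly as in the proof sketch of \thm~\ref{htf:thm:loc_comp_norm_bal_dyn}) will yield
\[
\tfrac{d}{dt}\htfcompnorm{\nu}{r}(t) \;=\; \inprodbig{-\nabla\htfendloss(\tensorend(t))}{\htfcomp{\nu}{r}(t)} \,\cdot\, \sum_{\wbf \in \localcomp(\nu,r)} \prod_{\wbf' \in \localcomp(\nu,r)\setminus\{\wbf\}} \norm{\wbf'(t)}^2.
\]
So the whole task reduces to two-sided bounds on the sum $S(t) := \sum_{\wbf} \prod_{\wbf' \neq \wbf} \norm{\wbf'(t)}^2$ in terms of $\htfcompnorm{\nu}{r}(t)$ and $\epsilon$.

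Next I would invoke Lemma~\ref{htf:lem:loc_comp_sq_norm_diff_invariant} together with the definition of unbalancedness magnitude (\defin~\ref{htf:def:unbal_mag}) to conclude that $\big|\norm{\wbf(t)}^2 - \norm{\wbf'(t)}^2\big| \leq \epsilon$ for every $\wbf,\wbf' \in \localcomp(\nu,r)$, throughout time. Since $\htfcompnorm{\nu}{r}(t)^{2/L_\nu}$ is the geometric mean of $\{\norm{\wbf(t)}^2\}_{\wbf \in \localcomp(\nu,r)}$, it lies between the minimum and maximum of these $L_\nu$ numbers, whose range is at most $\epsilon$. Hence for every $\wbf \in \localcomp(\nu,r)$,
\[
\max\!\big\{0,\,\htfcompnorm{\nu}{r}(t)^{2/L_\nu} - \epsilon\big\} \;\leq\; \norm{\wbf(t)}^2 \;\leq\; \htfcompnorm{\nu}{r}(t)^{2/L_\nu} + \epsilon.
\]

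The upper bound on $S(t)$ then follows immediately by bounding each factor in each product by $\htfcompnorm{\nu}{r}(t)^{2/L_\nu} + \epsilon$, giving $S(t) \leq L_\nu\big(\htfcompnorm{\nu}{r}(t)^{2/L_\nu} + \epsilon\big)^{L_\nu-1}$. For the lower bound I would use the algebraic identity $\prod_{\wbf' \neq \wbf}\norm{\wbf'(t)}^2 = \htfcompnorm{\nu}{r}(t)^2 / \norm{\wbf(t)}^2$, yielding $S(t) = \htfcompnorm{\nu}{r}(t)^2 \sum_{\wbf} \norm{\wbf(t)}^{-2} \geq L_\nu \htfcompnorm{\nu}{r}(t)^2 / \big(\htfcompnorm{\nu}{r}(t)^{2/L_\nu} + \epsilon\big)$ from the upper bound on $\norm{\wbf(t)}^2$. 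Multiplying these two bounds on $S(t)$ by the inner product $\inprodbig{-\nabla\htfendloss(\tensorend(t))}{\htfcomp{\nu}{r}(t)}$ and taking cases on its sign (inequalities flip when it is negative) yields both Equations~\eqref{htf:eq:loc_comp_norm_unbal_pos_bound} and~\eqref{htf:eq:loc_comp_norm_unbal_neg_bound}.

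There is no real obstacle here: the analysis is essentially the same as for the balanced case, and the invariant of Lemma~\ref{htf:lem:loc_comp_sq_norm_diff_invariant} already does the heavy lifting. The only subtlety worth verifying carefully is that $\htfcompnorm{\nu}{r}(t) > 0$ implies $\norm{\wbf(t)} > 0$ for every $\wbf \in \localcomp(\nu,r)$, so that the identity $\prod_{\wbf'\neq\wbf}\norm{\wbf'(t)}^2 = \htfcompnorm{\nu}{r}(t)^2/\norm{\wbf(t)}^2$ used in the lower bound is well-defined; this is immediate from $\htfcompnorm{\nu}{r}(t) = \prod_{\wbf}\norm{\wbf(t)}$.
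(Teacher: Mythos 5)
Your proposal is correct and follows essentially the same route as the paper's proof: the same differentiation of $\htfcompnorm{\nu}{r}(t)$ via the product rule, the same use of the conserved unbalancedness invariant to obtain $\normnoflex{\wbf(t)}^2 \leq \htfcompnorm{\nu}{r}(t)^{2/L_\nu} + \epsilon$, the same direct bounding for the upper estimate, and the same multiply-and-divide identity $S(t) = \htfcompnorm{\nu}{r}(t)^2 \sum_{\wbf} \normnoflex{\wbf(t)}^{-2}$ for the lower one, followed by a sign case split. The only (immaterial) difference is that you justify the key per-vector bound via the geometric mean lying between the min and max, whereas the paper bounds by the minimum and uses $(\min_{\wbf'}\normnoflex{\wbf'(t)})^{2} \leq \htfcompnorm{\nu}{r}(t)^{2/L_\nu}$.
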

\begin{proof}[Proof sketch (proof in \subapp~\ref{htf:app:proofs:loc_comp_norm_unbal_dyn})]
	The proof follows a line similar to that of \thm~\ref{htf:thm:loc_comp_norm_bal_dyn}, except that here conservation of unbalancedness magnitude leads to $\normnoflex{ \wbf (t) }^2 \leq \htfcompnorm{\nu}{r} (t)^{ \frac{2}{ L_\nu } } + \epsilon$ for all $\wbf \in \localcomp (\nu, r)$.
	Applying this inequality to:
	\[
	\frac{d}{dt} \htfcompnorm{\nu}{r} (t) = \inprodbig{ - \nabla \htfendloss ( \tensorend (t)) }{ \htfcomp{\nu}{r} (t) } \sum\nolimits_{ \wbf \in \localcomp (\nu, r) } \prod\nolimits_{ \wbf' \in \localcomp (\nu, r) \setminus \{ \wbf \} } \norm{ \wbf' (t) }^2
	\text{\,,}
	\]
	yields \eqs~\eqref{htf:eq:loc_comp_norm_unbal_pos_bound} and~\eqref{htf:eq:loc_comp_norm_unbal_neg_bound}.
\end{proof}

\section{Hierarchical Tensor Rank as Measure of Long-Range Dependencies}
\label{htf:app:ht_sep_rank}

\subsect~\ref{htf:sec:low_htr_implies_locality} discusses the known fact by which the hierarchical tensor rank (\defin~\ref{htf:def:ht_rank}) of a hierarchical tensor factorization measures the strength of long-range dependencies modeled by the equivalent convolutional network (see~\cite{cohen2017inductive,levine2018benefits,levine2018deep}).
For the convenience of the reader, the current appendix formally explains this fact.

Consider a hierarchical tensor factorization with mode tree $\htmodetree$ (\defin~\ref{htf:def:mode_tree}), weight matrices \smash{$\Theta := \big ( \weightmat{\nu} \big )_{\nu \in \htmodetree}$}, and an equivalent convolutional network realizing a parametric input-output function $f_\Theta$.
As claimed in \sect~\ref{htf:sec:htf} (and formally justified in \app~\ref{htf:app:htf_cnn}), the function realized by the convolutional network takes the form $f_\Theta \big ( \xbf^{(1)}, \ldots, \xbf^{(N)} \big ) = \inprodbig{ \tenp_{n = 1}^N \xbf^{(n)}  }{ \tensorend }$, where $\tensorend$ stands for the end tensor of the factorization (\eq~\eqref{htf:eq:ht_end_tensor}).
\prop~\ref{htf:prop:matrank_eq_seprank} below establishes that for any subset of indices $I \subset [N]$, the matrix rank of~$\tensorend$'s matricization according to $I$ is equal to the separation rank (\defin~\ref{htf:def:sep_rank}) of $f_\Theta$ with respect to $I$, \ie~$\rank \mat{\tensorend}{I} = \htfseprank ( f_\Theta ; I)$.
In particular, the hierarchical tensor rank of $\tensorend$ with respect to $\htmodetree$~---~$\big ( \rank \mat{\tensorend}{\nu} \big )_{\nu \in \htmodetree  \setminus \{ [N] \} }$~---~amounts to $\big ( \htfseprank ( f_\Theta ; \nu )\big )_{\nu \in \htmodetree \setminus \{ [N] \} }$.
In the canonical case where nodes in $\htmodetree$ hold adjacent indices, the separation ranks of  $f_\Theta$ with respect to them measure the dependencies modeled between distinct areas of the input, \ie~the non-local (long-range) dependencies.

\begin{proposition}[adaptation of Claim 1 in~\cite{cohen2017inductive}]
	\label{htf:prop:matrank_eq_seprank}
	Consider a hierarchical tensor factorization with mode tree $\htmodetree$ (\defin~\ref{htf:def:mode_tree}) and weight matrices $\Theta := \big ( \weightmat{\nu} \big )_{\nu \in \htmodetree}$, and denote its end tensor by $\tensorend$ (\eq~\eqref{htf:eq:ht_end_tensor}). Let $f_\Theta : \times_{n = 1} \R^{D_n} \to \R$ be defined by $f_\Theta \big ( \xbf^{(1)}, \ldots, \xbf^{(N)} \big ) := \inprodbig{ \tenp_{n = 1}^N \xbf^{(n)}  }{ \tensorend }$.
	Then, for all $I \subset [ N ]$:
	\[
	\rank \mat{\tensorend}{ I} = \htfseprank (f_\Theta ; I)
	\text{\,.}
	\]
\end{proposition}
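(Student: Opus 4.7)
The plan is to prove the two inequalities $\rank \mat{\tensorend}{I} \leq \htfseprank (f_\Theta ; I)$ and $\htfseprank (f_\Theta ; I) \leq \rank \mat{\tensorend}{I}$ separately, each time exploiting the bilinearity of the inner product $\langle \tenp_{n=1}^N \xbf^{(n)}, \tensorend \rangle$ in the end tensor $\tensorend$. Throughout, I would use the identity $f_\Theta ( \ebf_{d_1}, \ldots, \ebf_{d_N} ) = \tensorend_{d_1, \ldots, d_N}$, where $\ebf_{d_n} \in \R^{D_n}$ is the $d_n$'th standard basis vector, to go back and forth between a function on $\times_{n = 1}^N \R^{D_n}$ and the tensor representing it.

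For the direction $\rank \mat{\tensorend}{I} \leq \htfseprank (f_\Theta ; I)$, I would start with a minimal separation rank decomposition $f_\Theta (\xbf^{(1)}, \ldots, \xbf^{(N)}) = \sum_{r = 1}^R g^{(r)} ((\xbf^{(i)})_{i \in I}) \cdot \bar{g}^{(r)} ((\xbf^{(j)})_{j \in I^c})$ with $R = \htfseprank (f_\Theta ; I)$ (the degenerate cases $R = 0$ and $R = \infty$ are immediate). Substituting standard basis vectors and defining $\ubf^{(r)} \in \R^{\prod_{i \in I} D_i}$ with entries $g^{(r)} ((\ebf_{d_i})_{i \in I})$ and $\vbf^{(r)} \in \R^{\prod_{j \in I^c} D_j}$ with entries $\bar{g}^{(r)} ((\ebf_{d_j})_{j \in I^c})$, I would obtain $\mat{\tensorend}{I} = \sum_{r = 1}^R \ubf^{(r)} (\vbf^{(r)})^\top$, from which $\rank \mat{\tensorend}{I} \leq R$ follows.

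For the reverse direction, I would take a rank decomposition $\mat{\tensorend}{I} = \sum_{r = 1}^R \ubf^{(r)} (\vbf^{(r)})^\top$ with $R = \rank \mat{\tensorend}{I}$, and rearrange each $\ubf^{(r)}$ and $\vbf^{(r)}$ into tensors $\U^{(r)} \in \R^{\times_{i \in I} D_i}$ and $\V^{(r)} \in \R^{\times_{j \in I^c} D_j}$ so that $\tensorend = \sum_{r = 1}^R \U^{(r)} \tenp \V^{(r)}$ up to the permutation of modes induced by the matricization convention (\defin~\ref{htf:def:matricization}). Using the factorization $\langle \tenp_{n = 1}^N \xbf^{(n)}, \U^{(r)} \tenp \V^{(r)} \rangle = \langle \tenp_{i \in I} \xbf^{(i)}, \U^{(r)} \rangle \cdot \langle \tenp_{j \in I^c} \xbf^{(j)}, \V^{(r)} \rangle$, I would then read off a separation rank decomposition of $f_\Theta$ with $R$ summands, yielding $\htfseprank (f_\Theta ; I) \leq R$.

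The two directions are essentially mirror images, and the main technical obstacle is purely bookkeeping: carefully tracking the bijection between indices of $\mat{\tensorend}{I}$ and pairs of multi-indices $(d_I, d_{I^c})$ so that the rearrangements in both directions are consistent with \defin~\ref{htf:def:matricization}, and ensuring that edge cases (zero tensor, empty $I$, infinite separation rank) do not require separate handling beyond a short remark.
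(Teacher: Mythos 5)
Your proposal is correct and follows essentially the same route as the paper's proof: the direction $\htfseprank (f_\Theta ; I) \leq \rank \mat{\tensorend}{I}$ is obtained identically by rearranging a rank decomposition of the matricization into a sum of tensor products and factoring the inner product, while the converse direction is the paper's grid-tensor argument (Lemma~\ref{htf:lem:grid_tensor_mat_rank_ub_by_sep_rank}) specialized directly to the standard basis, using the same key identity $f_\Theta ( \ebf_{d_1}, \ldots, \ebf_{d_N} ) = \tensorend_{d_1, \ldots, d_N}$. No gaps.
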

\vspace{-3mm}
\begin{proof}[Proof sketch (proof in \subapp~\ref{htf:app:proofs:matrank_eq_seprank})]
	To prove that $\rank \mat{\tensorend }{ I } \geq \htfseprank (f_\Theta ; I)$, we derive a representation of $f_\Theta$ as a sum of $\rank \mat{\tensorend}{I}$ terms, each being a product between a function that operates over $( \xbf^{(i)} )_{i \in I}$ and another that operates over the remaining input variables.
	For the converse, $\rank \mat{\tensorend}{I} \leq \htfseprank (f_\Theta ; I)$, we prove that for any grid tensor $\W$ of a function $f$, \ie~tensor holding the outputs of $f$ over a grid of inputs, it holds that $\rank \mat{\W}{I} \leq \htfseprank (f; I)$.
	We conclude by showing that $\tensorend$ is a grid tensor of~$f_\Theta$.
\end{proof}

\section{Further Experiments and Implementation Details}
\label{htf:app:experiments}

\subsection{Further Experiments} \label{htf:app:experiments:further}

\figs~\ref{htf:fig:tc_o4_p2},~\ref{htf:fig:ts_o4_p2}, and~\ref{htf:fig:tc_o9_p3} supplement \fig~\ref{htf:fig:mf_tf_htf_dynamics} by including, respectively: \emph{(i)} plots of additional local component norms and singular values during optimization in the experiment presented by \fig~\ref{htf:fig:mf_tf_htf_dynamics} (right); \emph{(ii)} experiments with tensor sensing loss; and \emph{(iii)} experiments with different hierarchical tensor factorization orders and mode trees, as well as different ground truth hierarchical tensor ranks.
\fig~\ref{htf:fig:long_range_and_reg_results_resnet34} portrays an experiment identical to that of \fig~\ref{htf:fig:long_range_and_reg_results}, but with ResNet34 in place of ResNet18.  
\figs~\ref{htf:fig:long_range_other_reg_results_resnet18} and~\ref{htf:fig:long_range_other_reg_results_resnet34} extend \figs~\ref{htf:fig:long_range_and_reg_results} and~\ref{htf:fig:long_range_and_reg_results_resnet34}, respectively, by presenting results obtained with baseline networks that are already regularized using standard techniques (weight decay and dropout).

\subsection{Implementation Details} \label{htf:app:experiments:details}

In this subappendix we provide implementation details omitted from our experimental reports (\fig~\ref{htf:fig:mf_tf_htf_dynamics}, \sect~\ref{htf:sec:countering_locality}, and \subapp~\ref{htf:app:experiments:further}).
Source code for reproducing our results and figures, based on the PyTorch framework~\cite{paszke2019pytorch}, can be found at \url{https://github.com/asafmaman101/imp_reg_htf}.
All experiments were run on a single Nvidia RTX 2080 Ti GPU.

\subsubsection{Incremental Hierarchical Tensor Rank Learning (\figs~\ref{htf:fig:mf_tf_htf_dynamics},~\ref{htf:fig:tc_o4_p2},~\ref{htf:fig:ts_o4_p2}, and \ref{htf:fig:tc_o9_p3})}
\label{htf:app:experiments:details:inc_rank_lrn}

\textbf{\fig~\ref{htf:fig:mf_tf_htf_dynamics} (left):} the minimized matrix completion loss was
\[
\mfendloss ( \matrixend ) = \frac{1}{\abs{\Omega}} \sum\nolimits_{(i, j) \in \Omega} ( (\matrixend)_{i, j} - \Wbf^*_{i, j} )^2
\text{\,,}
\]
where $\Omega$ denotes a set of $2048$ observed entries chosen uniformly at random (without repetition) from a matrix rank $5$ ground truth $\Wbf^* \in \R^{64 \times 64}$.
We generated $\Wbf^*$ by computing $\Wbf^{* (1) } \Wbf^{* (2) }$, with each entry of $\Wbf^{* (1) } \in \R^{64 \times 5}$ and $\Wbf^{* (2)} \in \R^{5 \times 64}$ drawn independently from the standard normal distribution, and subsequently normalizing the result to be of Frobenius norm $64$ (square root of its number of entries).
Reconstruction error with respect to $\Wbf^*$ is based on normalized Frobenius distance, \ie~for a solution $\matrixend$ it is $\norm{ \matrixend - \Wbf^* } / \norm{ \Wbf^* }$.
The matrix factorization applied to the task was of depth $3$ and had hidden dimensions $64$ between its layers so that its rank was unconstrained.
Standard deviation for initialization was set to $0.001$.

\textbf{\fig~\ref{htf:fig:mf_tf_htf_dynamics} (middle):} the minimized tensor completion loss was:
\[
\tfendloss ( \tftensorend ) = \frac{1}{\abs{\Omega}} \sum\nolimits_{(d_1, d_2, d_3) \in \Omega} ( (\tftensorend)_{d_1, d_2, d_3} - \W^*_{d_1,  d_2, d_3} )^2
\text{\,,}
\]
where $\Omega$ denotes a set of $2048$ observed entries chosen uniformly at random (without repetition) from a tensor rank $5$ ground truth $\W^* \in \R^{16, 16, 16}$.
We generated $\W^*$ by computing $\sum_{r = 1}^5 \Wbf^{* (1)}_{:, r} \tenp \Wbf^{* (2)}_{:, r} \tenp \Wbf^{* (3)}_{:, r}$, with each entry of $\Wbf^{* (1)}, \Wbf^{* (2)},$ and $\Wbf^{* (3)} \in \R^{16 \times5}$ drawn independently from the standard normal distribution, and subsequently normalizing the result to be of Frobenius norm $64$ (square root of its number of entries).
Reconstruction error with respect to $\W^*$ is based on normalized Frobenius distance, \ie~for a solution $\tftensorend$ it is $\norm{ \tftensorend - \W^* } / \norm{ \W^* }$.
The tensor factorization applied to the task had $R = 256$ components so that its tensor rank was unconstrained.\footnote{
	For any $D_1, \ldots, D_N \in \N$, setting $R = ( \prod_{n = 1}^N D_n) / \max \{ D_n \}_{n = 1}^N$ suffices for expressing all tensors in $\R^{D_1 \times \cdots \times D_N}$ (Lemma~3.41 in~\cite{hackbusch2012tensor}).
} Standard deviation for initialization was set to~$0.001$.

\begin{figure*}[t!]
	\begin{center}
		\hspace{-3.8mm}
		\includegraphics[width=1\textwidth]{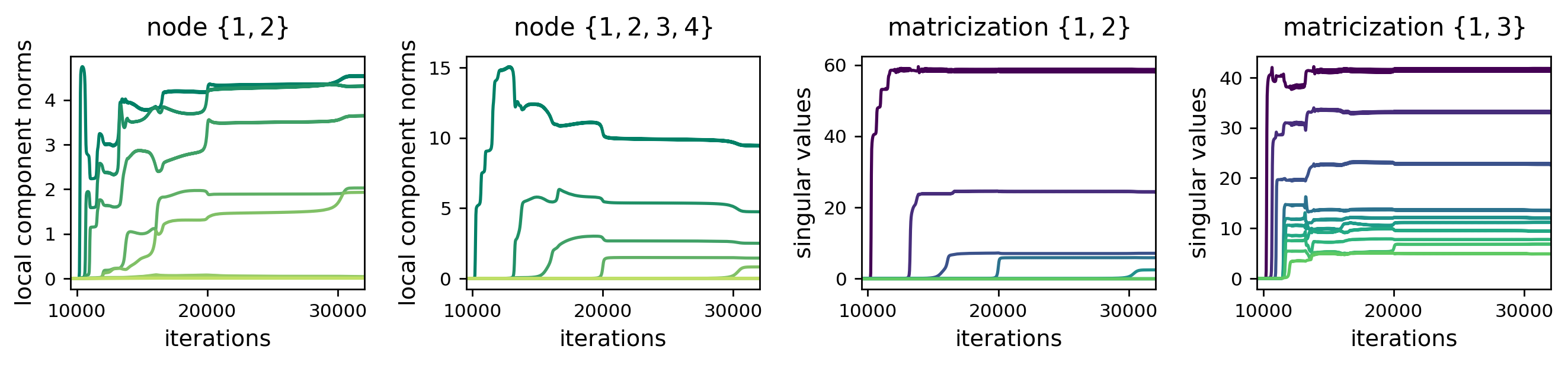}
	\end{center}
	\vspace{-2mm}
	\caption{
		Dynamics of gradient descent over order four hierarchical tensor factorization with a perfect binary mode tree (on tensor completion task)~---~incremental learning leads to low hierarchical tensor rank. 
		For the hierarchical tensor factorization experiment in \fig~\ref{htf:fig:mf_tf_htf_dynamics} (right), plots present the evolution of additional quantities during optimization.
		\textbf{Left and second to left:} top $10$ local component norms at nodes $\{ 1, 2 \}$ and $\{ 1, 2, 3, 4\}$ (respectively) in the mode tree (the latter also appears in \fig~\ref{htf:fig:mf_tf_htf_dynamics} (right)).
		\textbf{Second to right and right:} top $10$ singular values of the end tensor's matricizations according to $\{ 1, 2\}$ and $\{ 1, 3 \}$ (respectively).
		The former corresponds to a node in the mode tree, meaning its rank is part of the end tensor's hierarchical tensor rank, whereas the latter does not.
		\textbf{All:} notice that, in line with our analysis (\sect~\ref{htf:sec:inc_rank_lrn}), local component norms move slower when small and faster when large, creating an incremental process that leads to low hierarchical tensor rank solutions.
		Moreover, the singular values of the end tensor's matricizations according to nodes in the mode tree exhibit a similar behavior, whereas those of matricizations according to index sets outside the mode tree do not.
		The rank of a matricization lower bounds the (non-hierarchical) tensor rank (Remark~6.21 in~\cite{hackbusch2012tensor}). Thus, while the hierarchical tensor rank of the obtained solution is low, its tensor rank is high.
		For further implementation details, such as loss definition and factorization size, see \subapp~\ref{htf:app:experiments:details}.
	}
	\label{htf:fig:tc_o4_p2}
\end{figure*}

\begin{figure*}[t!]
	\vspace{3mm}
	\begin{center}
		\hspace{-3.8mm}
		\includegraphics[width=1\textwidth]{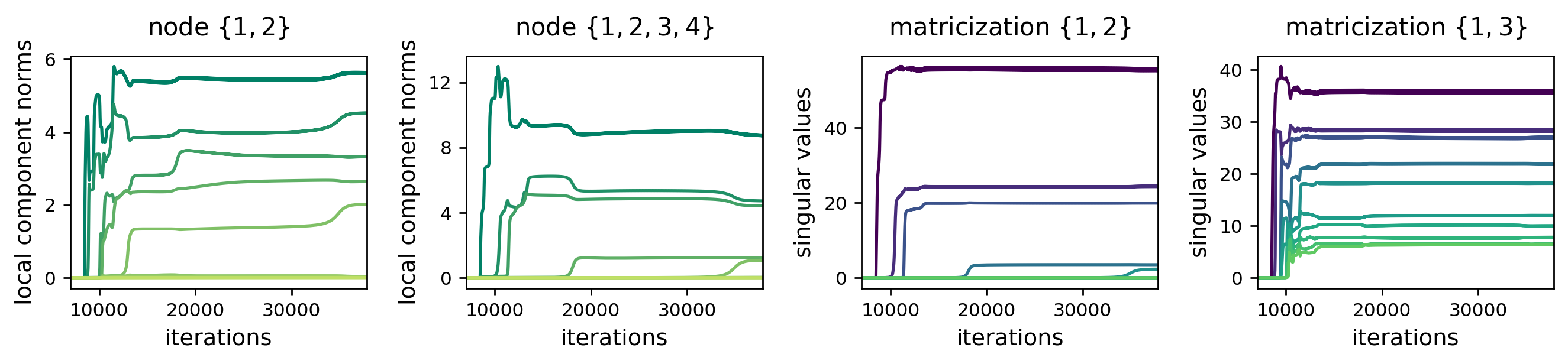}
	\end{center}
	\vspace{-2mm}
	\caption{
		Dynamics of gradient descent over order four hierarchical tensor factorization with a perfect binary mode tree (on tensor sensing task)~---~incremental learning leads to low hierarchical tensor rank.
		This figure is identical to \fig~\ref{htf:fig:tc_o4_p2}, except that the minimized mean squared error was based on random linear measurements (instead of randomly chosen entries).
		For further implementation details, such as loss definition and factorization size, see \subapp~\ref{htf:app:experiments:details}.
	}
	\label{htf:fig:ts_o4_p2}
\end{figure*}

\begin{figure*}[t!]
	\vspace{3mm}
	\begin{center}
		\hspace{-3.8mm}
		\includegraphics[width=1\textwidth]{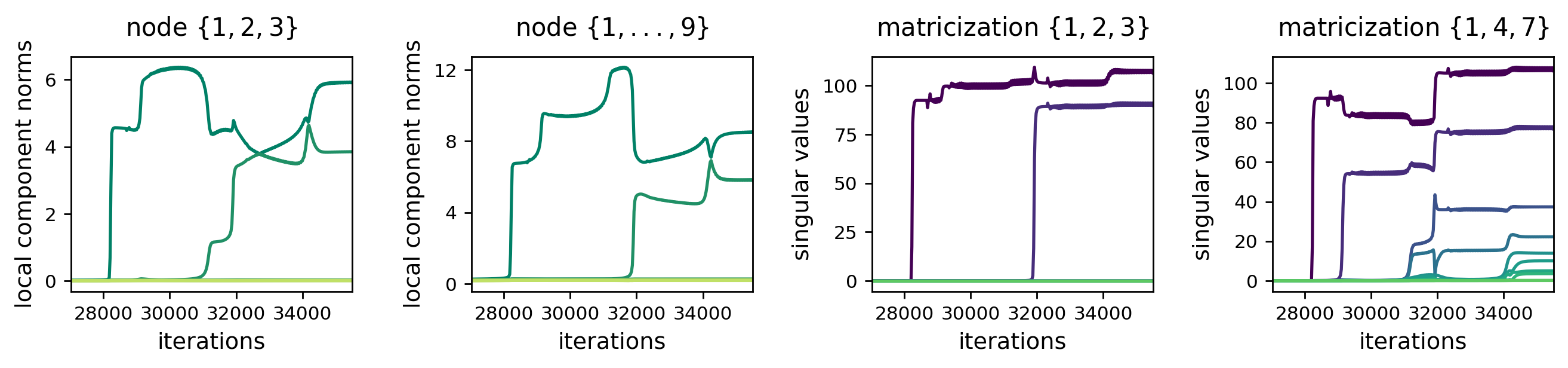}
	\end{center}
	\vspace{-2mm}
	\caption{
		Dynamics of gradient descent over order nine hierarchical tensor factorization with a perfect ternary mode tree~---~incremental learning leads to low hierarchical tensor rank.
		This figure is identical to \fig~\ref{htf:fig:tc_o4_p2}, except that: \emph{(i)} the hierarchical tensor factorization employed had order nine and complied with a perfect ternary mode tree; and \emph{(ii)} the ground truth tensor was of hierarchical tensor rank $(2, \ldots, 2)$ (\defin~\ref{htf:def:ht_rank}).
		For further implementation details, such as loss definition and factorization size, see \subapp~\ref{htf:app:experiments:details}.
	}
	\label{htf:fig:tc_o9_p3}
\end{figure*}

\textbf{\fig~\ref{htf:fig:mf_tf_htf_dynamics} (right):} the minimized tensor completion loss was:
\[
\htfendloss ( \tensorend ) = \frac{1}{\abs{\Omega}} \sum\nolimits_{(d_1, \ldots, d_4) \in \Omega} ( (\tensorend)_{d_1, \ldots, d_4} - \W^*_{d_1,  \ldots, d_4} )^2
\text{\,,}
\]
where $\Omega$ denotes a set of $2048$ observed entries chosen uniformly at random (without repetition) from a hierarchical tensor rank $(5, 5, 5, 5, 5, 5)$ ground truth $\W^* \in \R^{8 \times 8 \times 8 \times 8}$.
We generated $\W^*$ according to \eq~\eqref{htf:eq:ht_end_tensor} using a perfect binary mode tree $\htmodetree$ over $[4]$ and weight matrices $\big ( \Wbf^{*(\nu)} \big )_{\nu \in \htmodetree}$, where $\Wbf^{* (\nu) } \in \R^{8 \times 5}$ for $\nu \in \{ \{1\}, \ldots, \{4\} \}$, $\Wbf^{* (\nu) } \in \R^{5 \times 5}$ for $\nu \in \interior (\htmodetree) \setminus \{ [4] \}$, and $\Wbf^{* ( [4] ) }  \in \R^{5 \times 1}$.
We sampled the entries of $\big ( \Wbf^{* (\nu) } \big )_{\nu \in \htmodetree}$ independently from the standard normal distribution, and subsequently normalized the ground truth to be of Frobenius norm $64$ (square root of its number of entries).
Reconstruction error with respect to $\W^*$ is based on normalized Frobenius distance, \ie~for a solution $\tensorend$ it is $\norm{ \tensorend - \W^* } / \norm{ \W^* }$.
The hierarchical tensor factorization applied to the task had $512$ local components at all interior nodes due to computational and memory considerations (increasing the number of local components had no substantial impact on the dynamics).
Standard deviation for initialization was set to~$0.01$.

\textbf{\fig~\ref{htf:fig:tc_o4_p2}:} plots correspond to the same experiment presented in \fig~\ref{htf:fig:mf_tf_htf_dynamics} (right).

\textbf{\fig~\ref{htf:fig:ts_o4_p2}:} implementation details are identical to those of \fig~\ref{htf:fig:mf_tf_htf_dynamics} (right), except that the following tensor sensing loss was minimized:
\[
\htfendloss ( \tensorend ) = \sum\nolimits_{i = 1}^{2048} \big ( \inprodbig{ \tenp_{n = 1}^4 \xbf^{(i, n)} }{ \tensorend} - \inprodbig{ \tenp_{n = 1}^4 \xbf^{(i, n)} }{\W^*} \big )^2
\text{\,,}
\]
where the entries of $\big ( ( \xbf^{(i, 1)}, \ldots, \xbf^{(i, 4)} ) \in \R^8 \times \cdots \times \R^8 \big )_{i = 1}^{2048}$ were sampled independently from a zero-mean Gaussian distribution with standard deviation $4096^{- 1 / 8}$ (ensures each measurement tensor $\tenp_{n = 1}^4 \xbf^{(i, n)}$ has expected square Frobenius norm $1$).

\textbf{\fig~\ref{htf:fig:tc_o9_p3}:} implementation details are identical to those of \fig~\ref{htf:fig:mf_tf_htf_dynamics} (right), except that: \emph{(i)} the ground truth tensor was of order $9$ with modes of dimension $3$, Frobenius norm $\sqrt{19683}$ (square root of its number of entries), hierarchical tensor rank $(2, \ldots, 2)$, and was generated according to a perfect ternary mode tree; \emph{(ii)} reconstruction was based on $9840$ entries chosen uniformly at random; \emph{(iii)} the hierarchical tensor factorization applied to the task had $100$ local components at all  interior nodes; and \emph{(iv)} standard deviation for initialization was set to $0.1$.

\textbf{All:} using sample sizes smaller than those specified above led to similar results, up until a point where solutions found had fewer non-zero singular values, components, or local components (at all nodes) than the ground truths.
Gradient descent was initialized randomly by sampling each weight in the factorization independently from a zero-mean Gaussian distribution, and was run until the loss remained under $5 \cdot 10^{-5}$ for $100$ iterations in a row.
For each figure, experiments were carried out with initialization standard deviations $0.1, 0.05, 0.01, 0.005, 0.001,$ and $0.0005$.
Reported are representative runs striking a balance between the potency of the incremental learning effect and run time.
Reducing standard deviations further did not yield a significant change in the dynamics, yet resulted in longer optimization times due to vanishing gradients around the origin.
To facilitate more efficient experimentation, we employed the adaptive learning scheme described in Appendix~\ref{mf:app:experiments:details:tf}.

\subsubsection{Countering Locality of Convolutional Networks via Regularization (\figs~\ref{htf:fig:long_range_and_reg_results},~\ref{htf:fig:long_range_and_reg_results_resnet34},~\ref{htf:fig:long_range_other_reg_results_resnet18}, and~\ref{htf:fig:long_range_other_reg_results_resnet34})} 
\label{htf:app:experiments:details:conv}

In all experiments, we randomly initialized the ResNet18 and ResNet34 networks according to the default PyTorch~\cite{paszke2019pytorch} implementation.
The (regularized) binary cross-entropy loss was minimized via stochastic gradient descent with learning rate $0.01$, momentum coefficient $0.9$, and batch size $64$ (for ResNet34 we used a batch size of $32$ and accumulated gradients over two batches due to GPU memory considerations).
Optimization proceeded until perfect training accuracy was attained for $20$ consecutive epochs or $150$ epochs elapsed (runs without regularization always reached perfect training accuracy).
For each dataset and model combination, runs were carried out using the regularization described in \subsect~\ref{htf:sec:countering_locality:reg} with coefficients $0, 0.1, 0.5, 1, 3, 6, 9,$ and $10$.
Values lower than those reported in \figs~\ref{htf:fig:long_range_and_reg_results} and~\ref{htf:fig:long_range_and_reg_results_resnet34} had no noticeable impact, whereas higher values typically did not allow fitting the training data.
\tab~\ref{tab:other_reg_hyperparams} specifies the hyperparameters used for the different regularizations in the experiments of~\figs~\ref{htf:fig:long_range_other_reg_results_resnet18} and~\ref{htf:fig:long_range_other_reg_results_resnet34}.
Dropout layers shared the same probability hyperparameter, and were inserted before blocks expanding the number of channels, \ie~before the first convolutional layers with $128$, $256$, and $512$ output channels (the default ResNet18 and ResNet34 implementations do not include dropout).

\begin{figure*}[t]
	\begin{center}
		\hspace{-3mm}
		\includegraphics[width=1\textwidth]{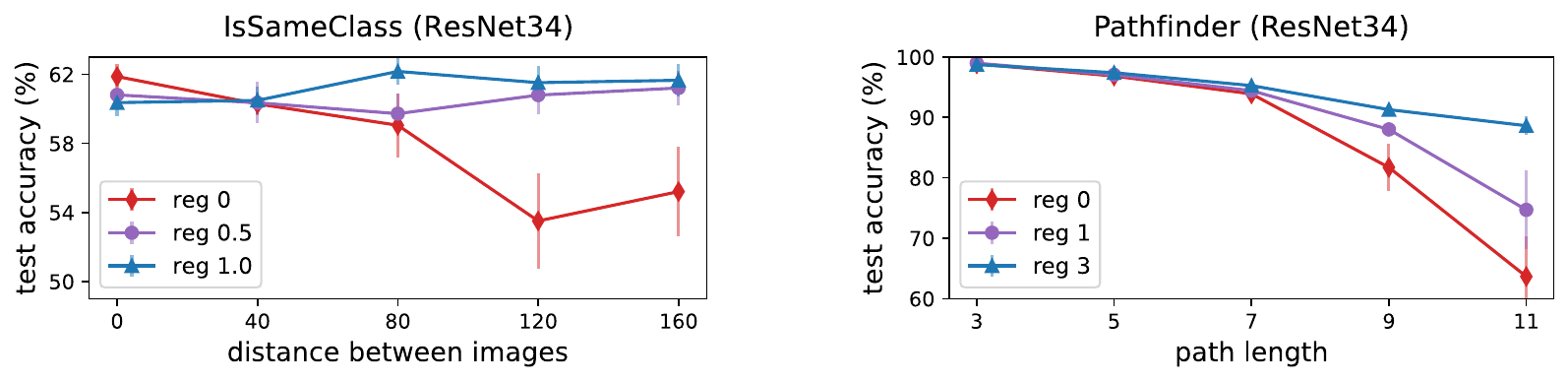}
	\end{center}
	\vspace{-3mm}
	\caption{	
		Dedicated explicit regularization can counter the locality of convolutional networks, significantly improving performance on tasks with long-range dependencies.
		This figure is identical to \fig~\ref{htf:fig:long_range_and_reg_results}, except that: \emph{(i)} experiments were carried out using a randomly initialized ResNet34 (as opposed to ResNet18); and \emph{(ii)} it includes evaluation over a Pathfinder dataset with path length $11$, since up until path length $9$ an unregularized network still obtained non-trivial performance.
		For further details see \subapp~\ref{htf:app:experiments:details:conv}.
	}
	\label{htf:fig:long_range_and_reg_results_resnet34}
\end{figure*}

\begin{figure*}[t!]
	\vspace{1mm}
	\begin{center}
		\hspace{-3mm}
		\includegraphics[width=1\textwidth]{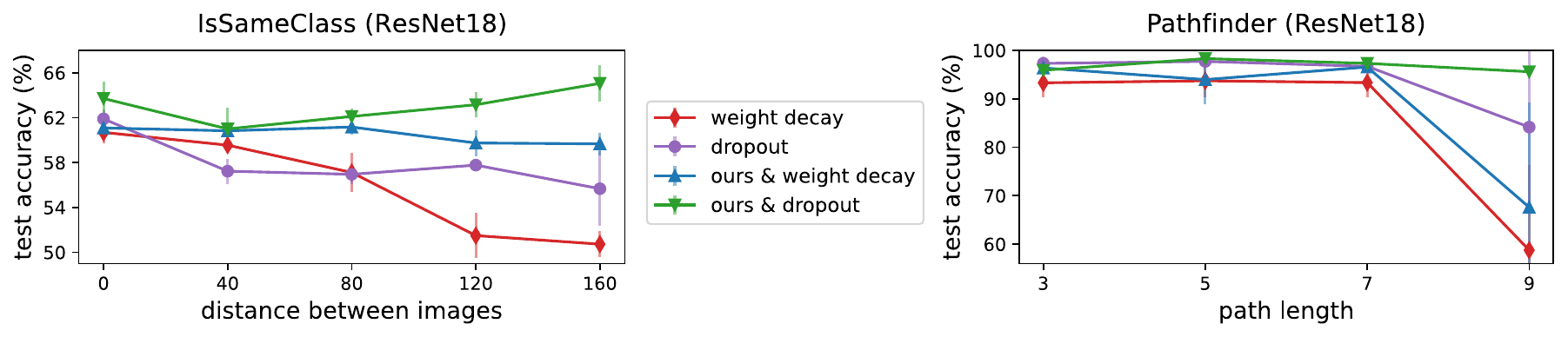}
	\end{center}
	\vspace{-3mm}
	\caption{	
		Dedicated explicit regularization can counter the locality of convolutional networks (regularized via standard techniques), significantly improving performance on tasks with long-range dependencies.
		This figure is identical to \fig~\ref{htf:fig:long_range_and_reg_results}, except that instead of applying our regularizer (\subsect~\ref{htf:sec:countering_locality:reg}) to a baseline unregularized network, the baseline networks here were regularized using either weight decay or dropout, and are compared to the results obtained when applying our regularization in addition to them.
		\fig~\ref{htf:fig:long_range_and_reg_results} shows that the test accuracy obtained by an unregularized network substantially deteriorates when increasing the (spatial) range of dependencies required to be modeled.
		From the plots above it is evident that, even when employing standard regularization techniques such as weight decay or dropout, a similar degradation in performance occurs.
		As was the case for unregularized networks, applying our dedicated regularization, in addition to these techniques, significantly improved performance.
		In particular, for the combination of our regularization and dropout, the test accuracy was high across all datasets.
		For further details such as regularization hyperparameters, see \subapp~\ref{htf:app:experiments:details:conv}.
	}
	\label{htf:fig:long_range_other_reg_results_resnet18}
\end{figure*}

\begin{figure*}[t!]
	\vspace{0mm}
	\begin{center}
		\hspace{-3mm}
		\includegraphics[width=1\textwidth]{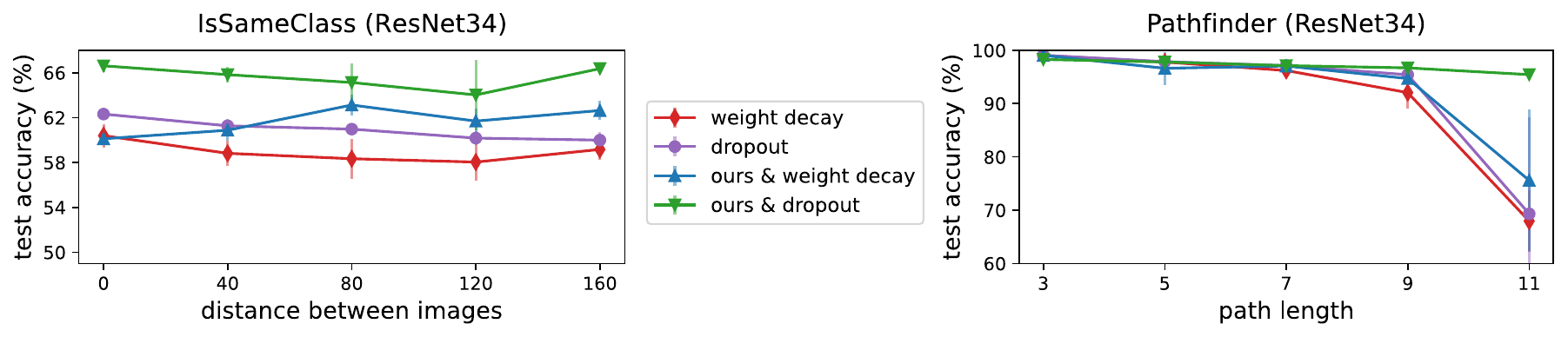}
	\end{center}
	\vspace{-3mm}
	\caption{	
		Dedicated explicit regularization can counter the locality of convolutional networks (regularized via standard techniques), significantly improving performance on tasks with long-range dependencies.
		This figure is identical to \fig~\ref{htf:fig:long_range_other_reg_results_resnet18},  except that: \emph{(i)} experiments were carried out using a randomly initialized ResNet34 (as opposed to ResNet18); and \emph{(ii)} it includes evaluation over a Pathfinder dataset with path length $11$, since up until path length $9$ networks regularized using weight decay or dropout still obtained non-trivial performance.
		For further details such as regularization hyperparameters, see \subapp~\ref{htf:app:experiments:details:conv}.	
	}
	\label{htf:fig:long_range_other_reg_results_resnet34}
\end{figure*}

At each stochastic gradient descent iteration, the subset of indices $I$ and $J$ used for computing the regularized objective were sampled as follows.
For IsSameClass datasets, we set $I$ to be the indices marking either the left or right CIFAR10 image uniformly at random, and then let $J$ be the indices corresponding to the remaining CIFAR10 image.
For Pathfinder datasets, $I$ and $J$ were set to non-overlapping $2 \times 2$ patches chosen uniformly across the input.
In order to prevent additional computational overhead, alternative values for pixels indexed by $J$ were taken from other images in the batch (as opposed to from the whole training set).
Specifically, we used a permutation without fixed points to shuffle the pixel patches indexed by~$J$ across the batch.

IsSameClass datasets consisted of $5000$ training and $10000$ test samples.
Each sample was generated by first drawing uniformly at random a label from $\{0, 1\}$ and an image from CIFAR10.
Then, depending on the chosen label, another image was sampled either from the same class (for label $1$) or from all other classes (for label $0$).
Lastly, the CIFAR10 images were placed at a predetermined horizontal distance from each other around the center of a $224 \times 224$ image filled with zeros.
For example, when the horizontal distance is $0$, the CIFAR10 images are adjacent, and when it is $160$, they reside in opposite borders of the $224 \times 224$ input.
Pathfinder datasets consisted of $10000$ training and $10000$ test samples.
Given a path length, the corresponding dataset was generated according to the protocol of~\cite{linsley2018learning}, with hyperparameters: circle radius $3$, paddle length $5$, paddle thickness $2$, inner paddle margin $3$, and continuity $1.8$.
See~\cite{linsley2018learning} for additional information regarding the data generation process.
As to be expected, when running a subset of all experiments using larger training set sizes (for both IsSameClass and Pathfinder datasets), we observed improved generalization across the board.
Nevertheless, the addition of training samples did not alleviate the degradation in test accuracy observed for larger horizontal distances and path lengths, nor did it affect the beneficial impact of our regularization.
That is, the trends observed in \figs~\ref{htf:fig:long_range_and_reg_results},~\ref{htf:fig:long_range_and_reg_results_resnet34},~\ref{htf:fig:long_range_other_reg_results_resnet18}, and~\ref{htf:fig:long_range_other_reg_results_resnet34} remained intact up to a certain shift upwards.

\begin{table*}[t]
	\caption{
		Hyperparameters for the regularizations employed in the experiments of~\figs~\ref{htf:fig:long_range_other_reg_results_resnet18} and~\ref{htf:fig:long_range_other_reg_results_resnet34}.
		For every model and dataset type combination, table reports the weight decay coefficient and dropout probability used when applied individually, as well as when combined with our regularization (described in \subsect~\ref{htf:sec:countering_locality:reg}), whose coefficients are also specified.
		These hyperparameters were tuned on the datasets with largest spatial range between salient regions of the input.
		That is, for each model separately, their values on IsSameClass datasets were set to those achieving the best test accuracy over a dataset with $160$ pixels between CIFAR10 images.
		Similarly, their values on Pathfinder datasets were set to those achieving the best test accuracy over a dataset with connecting path length $9$ for ResNet18 and path length $11$ for ResNet34.
		For further details see the captions of \figs~\ref{htf:fig:long_range_other_reg_results_resnet18} and~\ref{htf:fig:long_range_other_reg_results_resnet34}, as well as \subapp~\ref{htf:app:experiments:details:conv}.
	}
	\label{tab:other_reg_hyperparams}
	\vspace{-2mm}
	\begin{center}
		\begin{small}
			\begin{tabular}{lcccc}
				\toprule
				& \multicolumn{2}{c}{ResNet18} & \multicolumn{2}{c}{ResNet34} \\
				\cmidrule(lr){2-3}\cmidrule(lr){4-5}
				& IsSameClass & Pathfinder & IsSameClass & Pathfinder \\
				\midrule
				Weight Decay & $0.001$ & $0.01$ & $0.01$ & $0.001$ \\
				Dropout & $0.6$ & $0.5$ & $0.3$ & $0.2$ \\
				Ours \& Weight Decay & $1$ ~\&~ $0.001$ & $0.1$ ~\&~ $0.01$ & $1$ ~\&~ $0.0001$ & $0.1$ ~\&~ $0.001$ \\
				Ours \& Dropout & $1$ ~\&~ $0.5$ & $0.1$ ~\&~ $0.4$ & $1$ ~\&~ $0.5$ & $0.5$ ~\&~ $0.3$ \\
				\bottomrule
			\end{tabular}
		\end{small}
	\end{center}
\end{table*}

\section{Deferred Proofs}
\label{htf:app:proofs}

\subsection{Additional Notation}
\label{htf:app:proofs:notation}

Before delving into the proofs, we introduce the following notation.

\vspace{-2mm}

\textbf{General.}~~A colon is used to indicate a range of entries in a mode, \eg~$\Wbf_{i, :} \in \R^{D'}$ and $\Wbf_{:, j} \in \R^D$ are the $i$'th row and $j$'th column of $\Wbf \in \R^{D \times D'}$, respectively, and $\Wbf_{:i, :j} \in \R^{i, j}$ is the sub-matrix of $\Wbf$ consisting of its first $i$ rows and $j$ columns.
For $\W \in \R^{D_1 \times \cdots \times D_N}$, we let $\vectnoflex{\W} \in \R^{\prod_{n = 1}^N D_n}$ be its arrangement as a vector.
The tensor and Kronecker products are denoted by $\tenp$ and $\kronp$, respectively.

\textbf{Hierarchical tensor factorization.}~~For a mode tree $\htmodetree$ over $[N]$ (\defin~\ref{htf:def:mode_tree}), we denote the set of nodes in the sub-tree of~$\htmodetree$ whose root is $\nu \in \htmodetree$ by $\subtree (\nu) \subset \htmodetree$.
The sets of left and right siblings of $\nu \in \htmodetree$ are denoted by $\lsib (\nu)$ and $\rsib (\nu)$, respectively.
For $\nu \in \htmodetree$, we let $\htftensorpart{\nu}{:}$ be the tensor obtained by stacking $\big ( \htftensorpart{\nu}{r} \big )_{r = 1}^{R_{ \parent (\nu) } }$ into a single tensor, \ie~$\htftensorpart{\nu}{:}_{:, \ldots, :, r} = \htftensorpart{\nu}{r}$ for all $r \in [ R_{\parent (\nu)} ]$.
Given weight matrices $\big ( \weightmat{\nu} \in \R^{R_\nu, R_{ \parent (\nu) } } \big )_{\nu \in \htmodetree}$, the function mapping them to the end tensor they produce according to \eq~\eqref{htf:eq:ht_end_tensor} is denoted by $\tensorendmap \big ( \big ( \weightmat{ \nu } \big )_{ \nu \in \htmodetree } \big )$.
For $\nu \in \htmodetree$, with slight abuse of notation we let $\tensorendmap \big ( \big ( \weightmat{ \nu' } \big )_{ \nu' \in \htmodetree \setminus \subtree (\nu)}, \htftensorpart{\nu}{:} \big )$ be the function mapping $\big ( \htftensorpart{\nu}{r} \big )_{r = 1}^{ R_{ \parent(\nu) } }$ and weight matrices outside of $\subtree (\nu)$ to the end tensor they produce.

\subsection{Useful Lemmas}
\label{htf:app:proofs:useful_lemmas}

\subsubsection{Technical}
\label{htf:app:proofs:useful_lemmas:technical}

\begin{lemma}
	\label{htf:lem:tenp_eq_kronp}
	For any $\U \in \R^{D_1 \times \cdots \times D_N}, \V \in \R^{ H_1 \times \cdots \times H_K }$, and $I \subset [N + K]$:
	\[
	\mat{ \U \tenp \V }{ I}= \mat{ \U }{ I \cap [N] } \kronp \mat{ \V }{ I - N \cap [K]}
	\text{\,,}
	\]
	where $I - N := \{ i - N : i \in I \}$.
\end{lemma}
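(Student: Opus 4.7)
The plan is to prove the identity entrywise. I will decompose $I$ as $I_1 := I \cap [N]$ and $I_2 := (I - N) \cap [K]$, and exploit the structural observation that, because $I_1 \subseteq [N]$ while $I_2 + N \subseteq \{N+1, \ldots, N+K\}$, listing the elements of $I$ in increasing order places all of $I_1$ strictly before all of $I_2 + N$. Writing this sorted list as $i_1 < \cdots < i_{|I_1|} < N + j_1 < \cdots < N + j_{|I_2|}$ with $j_1 < \cdots < j_{|I_2|}$ enumerating $I_2$, I will substitute into the row-index formula of Definition~\ref{htf:def:matricization} for $\mat{\U \tenp \V}{I}$ applied to the entry $(d_1, \ldots, d_N, h_1, \ldots, h_K)$. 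The sum separates cleanly into two parts: the first $|I_1|$ summands reproduce the row index $r_U$ of $\U_{d_1, \ldots, d_N}$ in $\mat{\U}{I_1}$, while the remaining $|I_2|$ summands equal $(r_V - 1) \cdot P_U$, where $r_V$ is the row index of $\V_{h_1, \ldots, h_K}$ in $\mat{\V}{I_2}$ and $P_U := \prod_{i \in I_1} D_i$ is the row dimension of $\mat{\U}{I_1}$. An analogous decomposition gives the column index as $c_U + (c_V - 1) Q_U$, where $Q_U := \prod_{i \in [N] \setminus I_1} D_i$.

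Next I will invoke the paper's Kronecker product convention to conclude that $(\mat{\U}{I_1} \kronp \mat{\V}{I_2})_{r_U + (r_V - 1) P_U,\, c_U + (c_V - 1) Q_U}$ equals $(\mat{\U}{I_1})_{r_U, c_U} \cdot (\mat{\V}{I_2})_{r_V, c_V}$. Since $(\U \tenp \V)_{d_1, \ldots, d_N, h_1, \ldots, h_K} = \U_{d_1, \ldots, d_N} \cdot \V_{h_1, \ldots, h_K} = (\mat{\U}{I_1})_{r_U, c_U} \cdot (\mat{\V}{I_2})_{r_V, c_V}$, the two matrices agree entrywise and hence coincide. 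The edge cases $I_1 = \emptyset$ or $I_2 = \emptyset$ degenerate to matricizations of $\V$ or $\U$ with a single row or column and pose no issue.

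The substance of the argument is purely combinatorial bookkeeping, and the one load-bearing fact is the separation $\max I_1 \leq N < \min(I_2 + N)$: it is precisely this that prevents the sorted enumeration of $I$ from interleaving indices from $\U$'s modes with those from $\V$'s modes, and hence allows the index contributions to factor cleanly into an $r_U$-part and a $(r_V - 1) P_U$-part. I do not anticipate any real obstacle~--- the lemma is the familiar statement that the flattening convention used in the paper sends a tensor product of tensors to a Kronecker product of their flattenings, and all that changes relative to the textbook $\vect$ identity is that we are routing only the modes indexed by $I$ into rows, rather than all modes into a single column.
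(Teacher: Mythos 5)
Your proposal is correct and matches the paper's approach: the paper's proof is simply the assertion that the identity "follows directly from the definitions of the tensor and Kronecker products," and your entrywise index bookkeeping is exactly the verification that assertion gestures at, with the separation $\max(I\cap[N])\leq N<\min(I\setminus[N])$ being the right load-bearing fact. The only point to be explicit about in a write-up is that the Kronecker indexing convention must be paired consistently with the matricization's ordering of mode products (in Definition~\ref{htf:def:matricization} earlier modes vary fastest, so the first Kronecker factor must supply the fast-varying index, as you assume).
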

\begin{proof}
	The identity follows directly from the definitions of the tensor and Kronecker products.
\end{proof}

\begin{lemma}
	\label{htf:lem:kronp_mixed_prod_row_vec}
	For any $\Ubf \in \R^{D_1 \times D_2}, \Vbf \in \R^{D_2 \times  D_3}$, and $\wbf \in \R^{D_4}$, the following holds:
	\[
	\left ( \Ubf \Vbf \right ) \kronp \wbf^\top = \Ubf \left ( \Vbf \kronp \wbf^\top \right ) \quad , \quad \wbf^\top \kronp \left ( \Ubf \Vbf \right ) = \Ubf \left ( \wbf^\top \kronp \Vbf \right )
	\text{\,.}
	\]
\end{lemma}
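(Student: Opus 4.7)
The plan is to recognize both identities as immediate consequences of the \emph{mixed-product property} of the Kronecker product: for matrices of compatible sizes, $(\Abf \kronp \Bbf)(\Cbf \kronp \Dbf) = (\Abf \Cbf) \kronp (\Bbf \Dbf)$. This is a standard fact that can be verified entry-wise from the definition of $\kronp$, and I would either cite it or prove it in one line of index-chasing.

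With this tool in hand, the first identity follows by viewing $\wbf^\top \in \R^{1 \times D_4}$ and using the $1 \times 1$ identity scalar $1$ as a ``buffer factor''. First I would write
\[
(\Ubf \Vbf) \kronp \wbf^\top \;=\; (\Ubf \Vbf) \kronp \bigl( 1 \cdot \wbf^\top \bigr) \;=\; (\Ubf \kronp 1)\bigl( \Vbf \kronp \wbf^\top \bigr) \;=\; \Ubf \bigl( \Vbf \kronp \wbf^\top \bigr),
\]
where the middle equality is the mixed-product property applied with $\Abf = \Ubf \in \R^{D_1 \times D_2}$, $\Bbf = 1 \in \R^{1 \times 1}$, $\Cbf = \Vbf \in \R^{D_2 \times D_3}$, $\Dbf = \wbf^\top \in \R^{1 \times D_4}$, and the last equality uses $\Ubf \kronp 1 = \Ubf$. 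The second identity is proved by the symmetric computation
\[
\wbf^\top \kronp (\Ubf \Vbf) \;=\; (1 \kronp \Ubf)\bigl( \wbf^\top \kronp \Vbf \bigr) \;=\; \Ubf \bigl( \wbf^\top \kronp \Vbf \bigr).
\]

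No step should present a real obstacle; the only thing to be careful about is bookkeeping the dimensions to make sure the mixed-product property applies (specifically, that the $1 \times 1$ buffer is placed on the correct side so that the inner products $\Ubf \cdot 1$ and $1 \cdot \wbf^\top$ are well defined). If a self-contained derivation is preferred over citing the mixed-product property, one may alternatively prove the lemma directly by writing $(\Ubf \Vbf)_{i,k} = \sum_{j} \Ubf_{i,j} \Vbf_{j,k}$ and matching indices on both sides of each identity using the definition $(\Abf \kronp \Bbf)_{(i_1, i_2), (j_1, j_2)} = \Abf_{i_1, j_1} \Bbf_{i_2, j_2}$; this is a routine calculation and I would relegate it to a single displayed line.
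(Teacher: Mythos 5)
Your proposal is correct and matches the paper's proof essentially verbatim: both invoke the mixed-product property of the Kronecker product, insert the $1 \times 1$ identity as a buffer factor (writing $\wbf^\top = 1 \cdot \wbf^\top$), and conclude via $\Ubf \kronp 1 = 1 \kronp \Ubf = \Ubf$. No differences worth noting.
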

\begin{proof}
	According to the mixed-product property of the Kronecker product, for any matrices $\Abf, \Abf', \Bbf, \Bbf'$ for which $\Abf \Abf'$ and $\Bbf \Bbf'$ are defined, it holds that $( \Abf \Abf') \kronp ( \Bbf \Bbf') = ( \Abf \kronp \Bbf)(\Abf' \kronp \Bbf')$.
	Thus:
	\[
	\big ( \Ubf \Vbf \big ) \kronp \wbf^\top = \big ( \Ubf \Vbf \big ) \kronp \big ( 1 \cdot \wbf^\top \big) = \big ( \Ubf \kronp 1)(\Vbf \kronp \wbf^\top \big ) = \Ubf \big ( \Vbf \kronp \wbf^\top \big )
	\text{\,,}
	\]
	where $1$ is treated as the $1$-by-$1$ identity matrix.
	Similarly:
	\[
	\wbf^\top \kronp \big ( \Ubf \Vbf \big ) = \big (1 \cdot \wbf^\top \big ) \kronp \big ( \Ubf \Vbf \big ) = \big (1 \kronp \Ubf \big ) \big ( \wbf^\top \kronp \Vbf  \big ) = \Ubf \big ( \wbf^\top \kronp \Vbf \big)
	\text{\,.}
	\]
\end{proof}

\subsubsection{Hierarchical Tensor Factorization}
\label{htf:app:proofs:useful_lemmas:htf}

Suppose that use a hierarchical tensor factorization with mode tree $\htmodetree$, weight matrices $\big ( \weightmat{\nu} \in \R^{R_\nu \times R_{ \parent (\nu) } } \big )_{\nu \in \htmodetree}$, and end tensor $\tensorend \in \R^{D_1 \times \cdots \times D_N}$ (\eqs~\eqref{htf:eq:ht_end_tensor}) to minimize $\htfobj$ (\eq~\eqref{htf:eq:htf_obj}) via gradient flow (\eq~\eqref{htf:eq:gf_htf}).
Under this setting, we prove the following technical lemmas.

\begin{lemma}
	\label{htf:lem:ht_multilinear}
	The functions $\tensorendmap \big ( \big ( \weightmat{ \nu } \big )_{ \nu \in \htmodetree } \big )$ and $\tensorendmap \big ( \big ( \weightmat{ \nu' } \big )_{ \nu' \in \htmodetree \setminus \subtree (\nu)}, \htftensorpart{\nu}{:} \big )$, for $\nu \in \htmodetree$, defined in \subapp~\ref{htf:app:proofs:notation}, are multilinear.
\end{lemma}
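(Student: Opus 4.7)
The plan is to prove multilinearity by structural induction on the mode tree $\htmodetree$, showing that each intermediate tensor $\htftensorpart{\nu}{r}$ constructed in \eq~\eqref{htf:eq:ht_end_tensor} depends linearly (separately) on each weight matrix appearing in its computation.

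First I would fix an arbitrary node $\mu \in \htmodetree$ and hold all weight matrices $\big ( \weightmat{\nu'} \big )_{\nu' \neq \mu}$ fixed. I would then prove, by induction on $\nu \in \htmodetree$ traversed from leaves to root, the following claim: $\htftensorpart{\nu}{r}$ is linear in $\weightmat{\mu}$ if $\mu \in \subtree(\nu)$, and constant in $\weightmat{\mu}$ otherwise. The base case (leaves) is immediate since $\htftensorpart{\nu}{r} = \weightmat{\nu}_{:,r}$ when $\nu$ is a leaf. For the inductive step at an interior node $\nu$, there are three cases: (i) $\mu \notin \subtree(\nu)$, in which case all children's intermediate tensors are constant in $\weightmat{\mu}$ by the inductive hypothesis, so the sum of tensor products weighted by fixed scalars of $\weightmat{\nu}$ is constant; (ii) $\mu = \nu$, so $\weightmat{\nu}_{r',r}$ appears linearly as the only $\weightmat{\mu}$-dependent factor and the children's tensors are fixed; (iii) $\mu \in \subtree(\nu_c)$ for exactly one child $\nu_c$ of $\nu$, in which case each $\htftensorpart{\nu_c}{r'}$ is linear in $\weightmat{\mu}$ while all other $\htftensorpart{\nu'_c}{r'}$ (for $\nu'_c \neq \nu_c$) are fixed, so the tensor product $\tenp_{\nu'_c \in \children(\nu)} \htftensorpart{\nu'_c}{r'}$ is linear in $\weightmat{\mu}$, and the weighted sum preserves linearity. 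Applying this claim with $\nu = [N]$, and noting that $\mu \in \subtree([N]) = \htmodetree$ always, yields linearity of $\tensorend$ in $\weightmat{\mu}$. Since $\mu$ was arbitrary, $\tensorendmap \big ( \big ( \weightmat{\nu} \big )_{\nu \in \htmodetree} \big )$ is multilinear.

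For the second function $\tensorendmap \big ( \big ( \weightmat{\nu'} \big )_{\nu' \in \htmodetree \setminus \subtree(\nu)}, \htftensorpart{\nu}{:} \big )$, the same inductive argument applies, restricted to the portion of the tree strictly above $\nu$: the tensor $\htftensorpart{\nu}{:}$ is treated as an input and contributes linearly at its parent (since $\htftensorpart{\nu}{r}$ appears as one factor in the tensor product at $\parent(\nu)$, with all sibling tensors built from fixed weights), and the induction from $\parent(\nu)$ up to the root proceeds identically. Linearity in each of the remaining $\weightmat{\nu'}$ for $\nu' \in \htmodetree \setminus \subtree(\nu)$ follows by the same case analysis as above, with $\htftensorpart{\nu}{:}$ playing the role of a fixed quantity whenever $\mu \neq \nu'$, and with $\subtree(\nu)$ effectively contracted into a single ``leaf'' carrying $\htftensorpart{\nu}{:}$.

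No step here poses a real obstacle: the whole argument is a routine bookkeeping exercise in that each operation in \eq~\eqref{htf:eq:ht_end_tensor}, namely selecting a column of a weight matrix, taking a tensor product of tensors one of which varies, and taking a linear combination with scalar coefficients drawn from a single weight matrix, is visibly linear in the one varying argument when everything else is fixed. The mildest care is in case (iii), where one must observe that exactly one child sub-tree contains $\mu$ (since $\subtree(\nu_c) \cap \subtree(\nu'_c) = \emptyset$ for distinct children $\nu_c, \nu'_c$ of $\nu$), so the tensor product has exactly one varying factor and therefore depends linearly on it. Finally, the mode permutation $\pi_\nu$ is a linear operator on tensors and commutes with all of these linearity arguments, so it does not affect the conclusion.
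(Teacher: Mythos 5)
Your proof is correct and follows essentially the same route as the paper's: an induction up the tree showing that each intermediate tensor depends linearly on the one varying weight matrix, using multilinearity of the tensor product and linearity of $\pi_\nu$. The only cosmetic difference is that you assert linearity directly via a three-case structural induction over the whole tree, whereas the paper verifies homogeneity and additivity separately by inducting only along the path from the varying node to the root; the substance is identical, and your treatment of the second function (contracting $\subtree(\nu)$ into a leaf carrying $\htftensorpart{\nu}{:}$) likewise matches the paper's ``analogous derivations.''
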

\begin{proof}
	We begin by proving that $\tensorendmap \big ( \big ( \weightmat{ \nu } \big )_{ \nu \in \htmodetree } \big )$ is multilinear. 
	Fix $\nu \in \htmodetree$, and let $\weightmat{\nu}, \Ubf^{(\nu)} \in \R^{ R_\nu, R_{\parent (\nu)}}$, and $\alpha > 0$.
	
	\textbf{Homogeneity.}~~Denote by $\big ( \U^{ (\nu', r) }_\alpha \big )_{\nu' \in \htmodetree, r \in [ R_{ \parent (\nu') } ] }$ the intermediate tensors produced when computing the end tensor $\tensorendmap \big ( \big ( \weightmat{ \nu' } \big )_{ \nu' \in \htmodetree \setminus \{ \nu \} }, \alpha \cdot \weightmat{\nu} \big )$ according to \eq~\eqref{htf:eq:ht_end_tensor} (there denoted $\big (\htftensorpart{\nu'}{r} \big )_{\nu', r }$).
	If $\nu$ is a leaf node, then $\U^{(\nu, r)}_\alpha = \alpha \cdot \weightmat{\nu}_{:, r} = \alpha \cdot \htftensorpart{\nu}{r}$ for all $r \in [ R_{ \parent(\nu) } ]$.
	Otherwise, if $\nu$ is an interior node, a straightforward computation leads to the same conclusion, \ie~for all $r \in [ R_{\parent (\nu) } ]$:
	\[
	\begin{split}
		\U^{(\nu, r)}_\alpha & = \pi_\nu \left ( \sum\nolimits_{r' = 1}^{R_\nu} \alpha \cdot \weightmat{\nu}_{r', r} \left [ \tenp_{\nu_c \in \children ( \nu )} \htftensorpart{ \nu_c }{ r' } \right ] \right ) \\
		& =  \alpha \cdot \pi_\nu \left ( \sum\nolimits_{r' = 1}^{R_\nu} \weightmat{\nu}_{r', r} \left [ \tenp_{\nu_c \in \children ( \nu )} \htftensorpart{ \nu_c }{ r' } \right ] \right )  \\
		& = \alpha \cdot \htftensorpart{\nu}{r}
		\text{\,,}
	\end{split}
	\]
	where the second equality is by the linearity of $\pi_v$ (recall it is merely a reordering of the tensor entries).
	Moving on to the parent of $\nu$, multilinearity of the tensor product implies that for all $r \in [ R_{ \parent (\parent ( \nu ))} ]$:
	{\fontsize{9}{9}\selectfont
	\[
	\begin{split}	
		\U^{( \parent ( \nu ), r )}_\alpha & =  \pi_{ \parent (\nu) } \left ( \sum\nolimits_{r' = 1}^{R_{ \parent (\nu) } } \weightmat{ \parent (\nu)  }_{r', r} \left [ \left ( \tenp_{\nu_c \in \lsib (\nu)} \htftensorpart{ \nu_c }{ r' } \right ) \tenp \U^{(\nu, r')}_{\alpha} \tenp \left ( \tenp_{\nu_c \in \rsib (\nu)} \htftensorpart{\nu_c}{r'} \right ) \right ] \right ) \\
		& =  \pi_{ \parent (\nu) } \left ( \sum\nolimits_{r' = 1}^{R_{ \parent (\nu) } } \weightmat{ \parent (\nu)  }_{r', r} \left [ \left ( \tenp_{\nu_c \in \lsib (\nu)} \htftensorpart{ \nu_c }{ r' } \right ) \tenp \big ( \alpha \cdot \htftensorpart{\nu}{r'} \big ) \tenp \left ( \tenp_{\nu_c \in \rsib (\nu)} \htftensorpart{\nu_c}{r'} \right ) \right ] \right ) \\
		& = \alpha \cdot \pi_{ \parent (\nu) } \left ( \sum\nolimits_{r' = 1}^{R_{ \parent (\nu) } } \weightmat{ \parent (\nu)  }_{r', r} \left [ \tenp_{\nu_c \in \children (\parent (\nu))} \htftensorpart{ \nu_c }{ r' } \right ] \right ) \\
		& = \alpha \cdot \htftensorpart{\parent ( \nu )}{r}
		\text{\,.}
	\end{split}
	\]
	}
	An inductive claim over the path from $\nu$ to the root $[N]$ therefore yields:
	\be
	\tensorendmap \big  ( \big ( \weightmat{ \nu' } \big )_{ \nu' \in \htmodetree \setminus \{ \nu \} }, \alpha \cdot \weightmat{\nu} \big ) = \alpha \cdot \tensorendmap \big  ( \big ( \weightmat{ \nu' } \big )_{ \nu' \in \htmodetree } \big )
	\text{\,.}
	\label{htf:eq:H_multi_hom}
	\ee
	
	\textbf{Additivity.}~~We let $\big ( \U^{ (\nu', r) } \big )_{\nu' \in \htmodetree, r \in [ R_{ \parent (\nu') } ] }$ and $\big ( \U^{ (\nu', r) }_+ \big )_{\nu' \in \htmodetree, r \in [ R_{ \parent (\nu') } ] }$ denote the intermediate tensors produced when computing
	\[
	\tensorendmap \big  ( \big ( \weightmat{ \nu' } \big )_{ \nu' \in \htmodetree \setminus \{ \nu \} }, \Ubf^{ (\nu) } \big )
	\]
	and 
	\[
	\tensorendmap \big  ( \big ( \weightmat{ \nu' } \big )_{ \nu' \in \htmodetree \setminus \{ \nu \} }, \weightmat{\nu} + \Ubf^{ (\nu) } \big )
	\]
	according to \eq~\eqref{htf:eq:ht_end_tensor}, respectively.
	If $\nu$ is a leaf node, we have that $\U^{(\nu, r)}_+ = \weightmat{\nu}_{:, r} + \Ubf^{(\nu)}_{:, r} =  \htftensorpart{\nu}{r} + \U^{ ( \nu, r ) }$ for all $r \in [ R_{\parent (\nu) } ]$.
	Otherwise, if $\nu$ is an interior node, we arrive at the same conclusion, \ie~for all $ r \in [ R_{\parent (\nu) } ]$:
	\[
	\begin{split}
		\U^{(\nu, r)}_+ & =  \pi_\nu \left ( \sum\nolimits_{r' = 1}^{R_\nu} \left ( \weightmat{\nu}_{r', r} + \Ubf^{(\nu)}_{r', r} \right ) \left [ \tenp_{\nu_c \in \children ( \nu )} \htftensorpart{ \nu_c }{ r' } \right ] \right ) \\
		& = \pi_\nu \left ( \sum\nolimits_{r' = 1}^{R_\nu} \weightmat{\nu}_{r', r} \left [ \tenp_{\nu_c \in \children ( \nu )} \htftensorpart{ \nu_c }{ r' } \right ] \right )  + \pi_\nu \left ( \sum\nolimits_{r' = 1}^{R_\nu} \Ubf^{(\nu)}_{r', r} \left [ \tenp_{\nu_c \in \children ( \nu )} \htftensorpart{ \nu_c }{ r' } \right ] \right ) \\
		& = \htftensorpart{\nu}{r} + \U^{ ( \nu, r ) }
		\text{\,,}
	\end{split}
	\]
	where the second equality is by the linearity of $\pi_\nu$.
	Then, for any $r \in [ R_{ \parent ( \parent ( \nu ) ) } ]$:
	{\fontsize{9}{9}\selectfont
	\[
	\begin{split}
		\U^{ ( \parent (\nu) , r)}_+ & = \pi_{ \parent (\nu) } \left ( \sum\nolimits_{r' = 1}^{R_{ \parent (\nu) } } \weightmat{ \parent (\nu)  }_{r', r} \left [ \left ( \tenp_{\nu_c \in \lsib (\nu)} \htftensorpart{ \nu_c }{ r' } \right ) \tenp \U^{(\nu, r')}_{+} \tenp \left ( \tenp_{\nu_c \in \rsib (\nu)} \htftensorpart{\nu_c}{r'} \right ) \right ] \right ) \\
		& =  \pi_{ \parent (\nu) } \left ( \sum\nolimits_{r' = 1}^{R_{ \parent (\nu) } } \weightmat{ \parent (\nu)  }_{r', r} \left [ \left ( \tenp_{\nu_c \in \lsib (\nu)} \htftensorpart{ \nu_c }{ r' } \right ) \tenp \big ( \htftensorpart{\nu}{r'} + \U^{(\nu, r')} \big ) \tenp \left ( \tenp_{\nu_c \in \rsib (\nu)} \htftensorpart{\nu_c}{r'} \right ) \right ] \right ) \\
		& = \pi_{ \parent (\nu) } \left ( \sum\nolimits_{r' = 1}^{R_{ \parent (\nu) } } \weightmat{ \parent (\nu)  }_{r', r} \left [ \tenp_{\nu_c \in \children (\parent (\nu))} \htftensorpart{ \nu_c }{ r' } \right ] \right ) + \\
		& \hspace{4.5mm} \pi_{ \parent (\nu) } \left ( \sum\nolimits_{r' = 1}^{R_{ \parent (\nu) } } \weightmat{ \parent (\nu)  }_{r', r} \left [ \left ( \tenp_{\nu_c \in \lsib (\nu)} \htftensorpart{ \nu_c }{ r' } \right ) \tenp \U^{(\nu, r')} \tenp \left ( \tenp_{\nu_c \in \rsib (\nu)} \htftensorpart{\nu_c}{r'} \right ) \right ] \right ) \\
		& = \htftensorpart{ \parent (\nu) }{ r } + \U^{ ( \parent (\nu) , r)}	
		\text{\,,}
	\end{split}
	\]
	}
	where the penultimate equality is by multilinearity of the tensor product as well as linearity of $\pi_{\parent (\nu)}$.
	An induction over the path from $\nu$ to the root thus leads to:
	\be
	\tensorendmap \big  ( \big ( \weightmat{ \nu' } \big )_{ \nu' \in \htmodetree \setminus \{ \nu \} }, \weightmat{\nu} + \Ubf^{ (\nu) } \big ) = \tensorendmap \big  ( \big ( \weightmat{ \nu' } \big )_{ \nu' \in \htmodetree } \big ) + \tensorendmap \big  ( \big ( \weightmat{ \nu' } \big )_{ \nu' \in \htmodetree \setminus \{ \nu \} }, \Ubf^{ (\nu) } \big )
	\text{\,.}
	\label{htf:eq:H_multi_add}
	\ee
	
	\medskip
	
	\eqs~\eqref{htf:eq:H_multi_hom} and~\eqref{htf:eq:H_multi_add} establish that $\tensorendmap \big ( \big ( \weightmat{ \nu } \big )_{ \nu \in \htmodetree } \big )$ is multilinear.
	The proof for $\tensorendmap \big ( \big ( \weightmat{ \nu' } \big )_{ \nu' \in \htmodetree \setminus \subtree (\nu)}, \htftensorpart{\nu}{:} \big )$ follows by analogous derivations.
	
\end{proof}

\begin{lemma}
	\label{htf:lem:zero_inter_tensor}
	Suppose there exists $\nu \in \htmodetree$ such that $\htftensorpart{\nu}{r} = 0$ for all $r \in [ R_{ \parent (\nu) } ]$, where $\htftensorpart{\nu}{r}$ is as defined in \eq~\eqref{htf:eq:ht_end_tensor}.
	Then, $\tensorend = 0$.
\end{lemma}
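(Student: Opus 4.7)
The plan is to leverage either the multilinearity established in \lem~\ref{htf:lem:ht_multilinear} or a direct induction propagating the zero condition upward from $\nu$ to the root $[N]$. Both approaches are short, but the multilinearity route is the cleanest and avoids any need to reason case-by-case about whether $\nu$ is a leaf or an interior node.

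First, I would dispense with the trivial case $\nu = [N]$: here $\htftensorpart{[N]}{1} = \tensorend$ by the final line of \eq~\eqref{htf:eq:ht_end_tensor}, so the hypothesis immediately gives $\tensorend = 0$. For $\nu \in \htmodetree \setminus \{[N]\}$, I would write $\tensorend = \tensorendmap\big( \big( \weightmat{\nu'} \big)_{\nu' \in \htmodetree \setminus \subtree(\nu)}, \htftensorpart{\nu}{:} \big)$, using the notation introduced in \subapp~\ref{htf:app:proofs:notation}. By \lem~\ref{htf:lem:ht_multilinear}, this expression is multilinear in the arguments, and in particular is linear (and hence homogeneous of degree one) in each slice $\htftensorpart{\nu}{r}$ for $r \in [R_{\parent(\nu)}]$. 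Since the hypothesis states every such slice vanishes, applying homogeneity with scalar $0$ slice-by-slice yields $\tensorend = 0$.

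As a sanity check (and as an alternative proof should the multilinearity route turn out to have some hidden technicality when $\nu$ is a leaf, where $\htftensorpart{\nu}{r} = \weightmat{\nu}_{:,r}$ and the weight matrix is the primitive quantity rather than the intermediate tensor), I would also sketch the direct inductive argument along the path from $\nu$ to $[N]$. The base case is the hypothesis itself. For the inductive step, if $\htftensorpart{\tilde{\nu}}{r'} = 0$ for all $r'$, then the defining recursion
\[
\htftensorpart{\parent(\tilde{\nu})}{r} = \pi_{\parent(\tilde{\nu})}\!\brk4{ \sum\nolimits_{r' = 1}^{R_{\parent(\tilde{\nu})}} \weightmat{\parent(\tilde{\nu})}_{r',r} \brk3{ \tenp_{\nu_c \in \children(\parent(\tilde{\nu}))} \htftensorpart{\nu_c}{r'} } }
\]
has every summand containing the zero factor $\htftensorpart{\tilde{\nu}}{r'}$, and the tensor product of anything with the zero tensor is zero, so every summand vanishes and $\htftensorpart{\parent(\tilde{\nu})}{r} = 0$ for all $r$. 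Iterating along the unique path from $\nu$ up to $[N]$ terminates with $\tensorend = \htftensorpart{[N]}{1} = 0$.

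There is no real obstacle here; the only point requiring a small amount of care is that the statement quantifies over all slice indices $r \in [R_{\parent(\nu)}]$, not a single slice, and the inductive step must preserve this "all slices zero" property as it moves upward. Both the multilinearity argument and the direct induction handle this transparently, since the vanishing of the full stack $\htftensorpart{\tilde{\nu}}{:}$ immediately forces the vanishing of $\htftensorpart{\parent(\tilde{\nu})}{:}$.
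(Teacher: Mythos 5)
Your proposal is correct. Your ``backup'' argument~---~propagating the all-slices-zero property up the unique path from $\nu$ to the root via the defining recursion in \eq~\eqref{htf:eq:ht_end_tensor}~---~is precisely the proof given in the paper, so that route needs no further comment. Your primary route via \lem~\ref{htf:lem:ht_multilinear} is a valid alternative packaging: since $\tensorend = \tensorendmap\big( ( \weightmat{\nu'} )_{\nu' \in \htmodetree \setminus \subtree(\nu)}, \htftensorpart{\nu}{:} \big)$ and the hypothesis forces the entire stacked argument $\htftensorpart{\nu}{:}$ to vanish, linearity in that argument gives $\tensorend = 0$. Two small points of precision: the lemma establishes linearity in the stacked tensor $\htftensorpart{\nu}{:}$ as a single argument rather than slice-by-slice (immaterial here, since all slices vanish simultaneously), and its proof only verifies homogeneity for $\alpha > 0$, so rather than ``homogeneity with scalar $0$'' you should invoke additivity, which gives $f(\ldots, 0) = f(\ldots, 0) + f(\ldots, 0)$ and hence $f(\ldots, 0) = 0$. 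What the multilinearity route buys is brevity and uniform treatment of leaf versus interior $\nu$; what it costs is that the heavy lifting (the upward propagation) is merely relocated into the proof of \lem~\ref{htf:lem:ht_multilinear}, so the two arguments are ultimately the same computation. Your handling of the degenerate case $\nu = [N]$ is also fine.
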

\begin{proof}
	Since $\htftensorpart{\nu}{r} = 0$ for all $r \in [R_{ \parent (\nu ) }]$, for any $r' \in [R_{ \parent ( \parent (\nu ) )} ]$ we have that:
	{\fontsize{9}{9}\selectfont
	\[
	\begin{split}
		\htftensorpart{\parent (\nu )}{r'} & =  \pi_{ \parent (\nu) } \left ( \sum\nolimits_{r = 1}^{R_{ \parent (\nu) } } \weightmat{ \parent (\nu)  }_{r, r'} \left [ \left ( \tenp_{\nu_c \in \lsib (\nu)} \htftensorpart{ \nu_c }{ r } \right ) \tenp \htftensorpart{\nu}{r} \tenp \left ( \tenp_{\nu_c \in \rsib (\nu)} \htftensorpart{\nu_c}{r} \right ) \right ] \right ) \\
		& =  \pi_{ \parent (\nu) } \left ( \sum\nolimits_{r = 1}^{R_{ \parent (\nu) } } \weightmat{ \parent (\nu)  }_{r, r'} \left [ \left ( \tenp_{\nu_c \in \lsib (\nu)} \htftensorpart{ \nu_c }{ r } \right ) \tenp 0 \tenp \left ( \tenp_{\nu_c \in \rsib (\nu)} \htftensorpart{\nu_c}{r} \right ) \right ] \right ) \\
		&  = 0
		\text{\,.}
	\end{split}
	\]
	}
	Thus, the claim readily follows by an induction up the path from $\nu$ to the root $[N]$.
\end{proof}

\begin{lemma}
	\label{htf:lem:tensorpart_norm_bound}
	For any $\nu \in \interior (\htmodetree)$ and $r \in [ R_{ \parent (\nu) } ]$:
	\[
	\norm1{ \htftensorpart{\nu}{r} } \leq \norm1{ \weightmat{\nu}_{:, r} } \cdot \prod\nolimits_{\nu_c \in \children (\nu) } \norm1{ \htftensorpart{\nu_c}{:} }
	\text{\,,}
	\]
	where $\htftensorpart{\nu_c}{:}$, for $\nu_c \in \children (\nu)$, is the tensor obtained by stacking $\big ( \htftensorpart{\nu_c}{r'} \big )_{r' = 1}^{R_{\nu} }$ into a single tensor, \ie~$\htftensorpart{\nu_c}{:}_{:, \ldots, :, r'} = \htftensorpart{\nu_c}{r'}$ for all $r' \in [ R_{\nu} ]$.
\end{lemma}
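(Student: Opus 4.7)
The goal is to bound $\norm{\htftensorpart{\nu}{r}}$, where by definition
\[
\htftensorpart{\nu}{r} = \pi_\nu \left( \sum\nolimits_{r' = 1}^{R_\nu} \weightmat{\nu}_{r', r} \left[ \tenp_{\nu_c \in \children(\nu)} \htftensorpart{\nu_c}{r'} \right] \right).
\]
First I would observe that the mode permutation $\pi_\nu$ is simply a reordering of tensor entries and hence an isometry in Frobenius norm, so it can be discarded. It then suffices to bound the norm of the inner sum.

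Next, I would vectorize. Writing $\kronp$ for the Kronecker product and using the standard identity $\vect{\tenp_c \mathbf{A}_c} = \kronp_c \vect{\mathbf{A}_c}$ (up to a fixed reordering of indices that is again norm preserving), the inner sum becomes
\[
\sum\nolimits_{r'=1}^{R_\nu} \weightmat{\nu}_{r',r} \cdot \Big( \kronp_{\nu_c \in \children(\nu)} \vect{\htftensorpart{\nu_c}{r'}} \Big) \;=\; \Mbf \, \weightmat{\nu}_{:,r},
\]
where $\Mbf$ is the matrix whose $r'$th column is $\kronp_{\nu_c} \vect{\htftensorpart{\nu_c}{r'}}$. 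The Cauchy–Schwarz inequality (equivalently, sub-multiplicativity of the operator norm, bounded by the Frobenius norm) then yields
\[
\norm{\htftensorpart{\nu}{r}} = \norm{\Mbf \, \weightmat{\nu}_{:,r}} \leq \norm{\Mbf}_{F} \cdot \norm{\weightmat{\nu}_{:,r}}.
\]

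It remains to identify $\norm{\Mbf}_F$. Since the Kronecker product multiplies Frobenius norms, the $r'$th column of $\Mbf$ has squared norm $\prod_{\nu_c \in \children(\nu)} \norm{\htftensorpart{\nu_c}{r'}}^2$, so
\[
\norm{\Mbf}_F^2 \;=\; \sum\nolimits_{r' = 1}^{R_\nu} \prod\nolimits_{\nu_c \in \children(\nu)} \norm1{\htftensorpart{\nu_c}{r'}}^2.
\]
The final step is the elementary inequality $\sum_{r'} \prod_{c} a_{c,r'} \leq \prod_{c} \sum_{r'} a_{c,r'}$ for non-negative reals (the left side is a subset of the non-negative terms appearing when one expands the right side). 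Applying it with $a_{c,r'} = \norm1{\htftensorpart{\nu_c}{r'}}^2$ and noting that $\sum_{r'} \norm1{\htftensorpart{\nu_c}{r'}}^2 = \norm1{\htftensorpart{\nu_c}{:}}^2$ (by definition of the stacking), I obtain $\norm{\Mbf}_F^2 \leq \prod_{\nu_c} \norm1{\htftensorpart{\nu_c}{:}}^2$. Combining with the previous bound and taking square roots gives the claim.

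There is no real obstacle; the only mild subtlety is getting the bookkeeping right when vectorizing the tensor product into a Kronecker product (the ordering of indices is immaterial for the Frobenius norm). Everything else is Cauchy–Schwarz plus non-negativity.
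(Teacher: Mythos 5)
Your proof is correct and follows essentially the same route as the paper's: drop the norm-preserving permutation $\pi_\nu$, vectorize the inner sum as a matrix--vector product with $\weightmat{\nu}_{:,r}$, apply sub-multiplicativity of the Frobenius norm, and bound the resulting sum of products by the product of sums $\prod_{\nu_c} \norm1{\htftensorpart{\nu_c}{:}}^2$. No gaps.
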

\begin{proof}
	By the definition of $\htftensorpart{\nu}{r}$ (\eq~\eqref{htf:eq:ht_end_tensor}) we have that:
	\[
	\begin{split}
		\norm1{ \htftensorpart{\nu}{r} } &= \norm*{ \pi_\nu \left ( \sum\nolimits_{r' = 1}^{R_\nu} \weightmat{\nu}_{r', r} \left [ \tenp_{\nu_c \in \children ( \nu )} \htftensorpart{ \nu_c }{ r' } \right ] \right ) } \\[1mm]
		& =  \norm*{ \sum\nolimits_{r' = 1}^{R_\nu} \weightmat{\nu}_{r', r} \left [ \tenp_{\nu_c \in \children ( \nu )} \htftensorpart{ \nu_c }{ r' } \right ] }
		\text{\,,}
	\end{split}
	\]
	where the second equality is due to the fact that $\pi_\nu$ merely reorders entries of a tensor, and therefore does not alter its Frobenius norm.
	Vectorizing each $\tenp_{\nu_c \in \children (\nu)} \htftensorpart{\nu_c}{r'}$, we may write $\norm{ \htftensorpart{\nu}{r} }$ as the Frobenius norm of a matrix-vector product:
	\[
	\norm1{ \htftensorpart{\nu}{r} } = \norm*{ \left ( \vectbig{ \tenp_{\nu_c \in \children (\nu)} \htftensorpart{\nu_c}{1} } , \ldots,\vectbig{ \tenp_{\nu_c \in \children (\nu)} \htftensorpart{\nu_c}{R_\nu} }  \right ) \weightmat{\nu}_{:, r} }
	\text{\,.}
	\]
	Hence, sub-multiplicativity of the Frobenius norm gives:
	\be
	\norm1{ \htftensorpart{\nu}{r} } \leq \norm1{ \weightmat{\nu}_{:, r} } \cdot \norm*{  \left ( \vectbig{ \tenp_{\nu_c \in \children (\nu)} \htftensorpart{\nu_c}{1} } , \ldots,\vectbig{ \tenp_{\nu_c \in \children (\nu)} \htftensorpart{\nu_c}{R_\nu} }  \right ) }
	\text{\,.}
	\label{htf:eq:tensorpart_norm_interm_bound}
	\ee
	Notice that:
	\[
	\begin{split}
		\norm*{ \! \left ( \vectbig{ \tenp_{\nu_c \in \children (\nu)} \htftensorpart{\nu_c}{1} } , \ldots,\vectbig{ \tenp_{\nu_c \in \children (\nu)} \htftensorpart{\nu_c}{R_\nu} }  \right ) \! }^2 & \! = \sum\nolimits_{r' = 1}^{R_\nu} \norm{ \tenp_{\nu_c \in \children (\nu)} \htftensorpart{\nu_c}{r'} }^2 \\
		& = \sum\nolimits_{r' = 1}^{R_\nu} \prod\nolimits_{\nu_c \in \children (\nu)} \norm{ \htftensorpart{\nu_c}{r'} }^2 \\
		& \leq \prod\nolimits_{\nu_c \in \children (\nu)} \left ( \sum\nolimits_{r' = 1}^{R_\nu} \norm{ \htftensorpart{\nu_c}{r'} }^2 \right ) \\
		& =  \prod\nolimits_{\nu_c \in \children (\nu)} \norm{ \htftensorpart{\nu_c}{:} }^2
		\text{\,,}
	\end{split}
	\]
	where the second transition is by the fact that the norm of a tensor product is equal to the product of the norms,
	and the inequality is due to $\prod\nolimits_{\nu_c \in \children (\nu)} \big ( \sum\nolimits_{r' = 1}^{R_\nu} \normbig{ \htftensorpart{\nu_c}{r'} }^2 \big )$ being a sum of non-negative elements which includes $\sum\nolimits_{r' = 1}^{R_\nu} \prod\nolimits_{\nu_c \in \children (\nu)} \normbig{ \htftensorpart{\nu_c}{r'} }^2$.
	Taking the square root of both sides in the equation above and plugging it into \eq~\eqref{htf:eq:tensorpart_norm_interm_bound} completes the proof.
\end{proof}

\begin{lemma}
	\label{htf:lem:stacked_tensorpart_norm_bound}
	For any $\nu \in \interior (\htmodetree)$:
	\[
	\norm1{ \htftensorpart{\nu}{:} } \leq \norm1{ \weightmat{\nu} } \cdot  \prod\nolimits_{\nu_c \in \children (\nu) } \norm1{ \htftensorpart{\nu_c}{:} }
	\text{\,,}
	\]
	where $\htftensorpart{\nu_c}{:}$, for $\nu_c \in \children (\nu)$, is the tensor obtained by stacking $\big ( \htftensorpart{\nu_c}{r} \big )_{r = 1}^{R_{ \nu } }$ into a single tensor, \ie~$\htftensorpart{\nu_c}{:}_{:, \ldots, :, r} = \htftensorpart{\nu_c}{r}$ for all $r \in [ R_{ \nu } ]$.
\end{lemma}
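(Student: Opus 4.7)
The plan is to reduce this directly to Lemma~\ref{htf:lem:tensorpart_norm_bound}, which already bounds the Frobenius norm of each individual slice $\htftensorpart{\nu}{r}$. Since $\htftensorpart{\nu}{:}$ is obtained by stacking $\bigl( \htftensorpart{\nu}{r} \bigr)_{r = 1}^{R_{\parent(\nu)}}$ along an additional axis, its squared Frobenius norm decomposes slice-wise as
\[
\norm1{ \htftensorpart{\nu}{:} }^2 \;=\; \sum\nolimits_{r = 1}^{R_{\parent(\nu)}} \norm1{ \htftensorpart{\nu}{r} }^2.
\]

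First I would apply Lemma~\ref{htf:lem:tensorpart_norm_bound} to each term, yielding
\[
\norm1{ \htftensorpart{\nu}{r} }^2 \;\leq\; \norm1{ \weightmat{\nu}_{:, r} }^2 \cdot \prod\nolimits_{\nu_c \in \children (\nu)} \norm1{ \htftensorpart{\nu_c}{:} }^2.
\]
Note that the product on the right hand side is independent of $r$, so summing the inequality over $r \in [R_{\parent(\nu)}]$ factors it out, producing
\[
\norm1{ \htftensorpart{\nu}{:} }^2 \;\leq\; \Bigl( \sum\nolimits_{r = 1}^{R_{\parent(\nu)}} \norm1{ \weightmat{\nu}_{:, r} }^2 \Bigr) \cdot \prod\nolimits_{\nu_c \in \children (\nu)} \norm1{ \htftensorpart{\nu_c}{:} }^2.
\]
The sum of squared column norms of $\weightmat{\nu}$ equals $\norm{ \weightmat{\nu} }^2$, so taking square roots of both sides finishes the argument.

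There is no real obstacle here; the lemma is essentially a bookkeeping upgrade of Lemma~\ref{htf:lem:tensorpart_norm_bound} from individual slices to their stacking, and the only subtlety to state carefully is that the index of stacking for $\htftensorpart{\nu}{:}$ ranges over $[R_{\parent(\nu)}]$ while the children are stacked over $[R_\nu]$, both being consistent with the weight matrix $\weightmat{\nu} \in \R^{R_\nu \times R_{\parent(\nu)}}$.
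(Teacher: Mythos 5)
Your proof is correct and follows exactly the same route as the paper's: decompose $\norm1{ \htftensorpart{\nu}{:} }^2$ slice-wise, apply Lemma~\ref{htf:lem:tensorpart_norm_bound} to each slice, factor out the ($r$-independent) product over children, identify the sum of squared column norms with $\norm1{ \weightmat{\nu} }^2$, and take square roots. Nothing is missing.
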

\begin{proof}
	We may explicitly write $\norm{ \htftensorpart{\nu}{:} }^2$ as follows:
	\be
	\norm1{ \htftensorpart{\nu}{:} }^2 = \sum\nolimits_{r = 1}^{ R_{\parent (\nu) } } \norm1{ \htftensorpart{\nu}{r} }^2
	\text{\,.}
	\label{htf:eq:stacked_tensorpart_norm}
	\ee
	For each $r \in [ R_{ \parent (\nu) } ]$, by \lem~\ref{htf:lem:tensorpart_norm_bound} we know that:
	\[
	\norm1{ \htftensorpart{\nu}{r} }^2 \leq \norm1{ \weightmat{\nu}_{:, r} }^2 \cdot \prod\nolimits_{\nu_c \in \children (\nu)} \norm1{ \htftensorpart{\nu_c}{:} }^2
	\text{\,.}
	\]
	Thus, going back to \eq~\eqref{htf:eq:stacked_tensorpart_norm} we arrive at:
	\[
	\norm1{ \htftensorpart{\nu}{:} }^2 \leq  \sum\nolimits_{ r = 1 }^{R_{ \parent (\nu) } } \norm1{ \weightmat{\nu}_{:, r} }^2 \cdot \prod\nolimits_{\nu_c \in \children (\nu)} \norm1{ \htftensorpart{\nu_c}{:} }^2 = \norm1{ \weightmat{\nu} }^2 \cdot \prod\nolimits_{\nu_c \in \children (\nu)} \norm1{ \htftensorpart{\nu_c}{:} }^2
	\text{\,.}
	\]
	Taking the square root of both sides concludes the proof.
\end{proof}

\begin{lemma}
	\label{htf:lem:ht_weightmat_grad}
	For any  $\nu \in \htmodetree$ and $\Delta \in \R^{R_\nu \times R_{\parent (\nu)}}$:
	\[
	\inprod{ \frac{ \partial }{ \partial \weightmat{\nu} } \htfobj \big ( \big ( \weightmat{\nu'} \big )_{\nu' \in \htmodetree} \big ) }{ \Delta } = \inprod{ \nabla \htfendloss ( \tensorend ) }{ \tensorendmap \Big  ( \big ( \weightmat{ \nu' } \big )_{ \nu' \in \htmodetree \setminus \{ \nu \} }, \Delta \Big ) }
	\text{\,.}
	\]
\end{lemma}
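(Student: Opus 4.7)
The approach is to combine the chain rule with the multilinearity of $\tensorendmap$ established in \lem~\ref{htf:lem:ht_multilinear}. Recall that $\htfobj\big(\big(\weightmat{\nu'}\big)_{\nu'\in\htmodetree}\big) = \htfendloss\big(\tensorendmap\big(\big(\weightmat{\nu'}\big)_{\nu'\in\htmodetree}\big)\big)$ and that $\tensorend = \tensorendmap\big(\big(\weightmat{\nu'}\big)_{\nu'\in\htmodetree}\big)$. Fix $\nu \in \htmodetree$ and $\Delta \in \R^{R_\nu \times R_{\parent(\nu)}}$, and consider the scalar curve $t \mapsto \htfobj\big(\big(\weightmat{\nu'}\big)_{\nu' \neq \nu}, \weightmat{\nu} + t\Delta\big)$. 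Its derivative at $t=0$ equals the left-hand side of the claimed identity by definition of the gradient.

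Next, I would apply the chain rule to write that derivative as $\inprod{\nabla\htfendloss(\tensorend)}{D}$, where $D$ is the derivative at $t=0$ of the tensor-valued curve $t \mapsto \tensorendmap\big(\big(\weightmat{\nu'}\big)_{\nu' \neq \nu}, \weightmat{\nu} + t\Delta\big)$. Here is where \lem~\ref{htf:lem:ht_multilinear} does the essential work: multilinearity of $\tensorendmap$ implies that with all other weight matrices held fixed, the map $\weightmat{\nu} \mapsto \tensorendmap\big(\big(\weightmat{\nu'}\big)_{\nu' \neq \nu}, \weightmat{\nu}\big)$ is \emph{linear}. Therefore this curve is affine in $t$, namely equal to $\tensorend + t \cdot \tensorendmap\big(\big(\weightmat{\nu'}\big)_{\nu' \neq \nu}, \Delta\big)$, so $D = \tensorendmap\big(\big(\weightmat{\nu'}\big)_{\nu' \neq \nu}, \Delta\big)$.

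Substituting this identification of $D$ into $\inprod{\nabla\htfendloss(\tensorend)}{D}$ yields the right-hand side of the claim, completing the proof. There is no real obstacle: the only subtle step is invoking linearity in the single slot $\weightmat{\nu}$ (as opposed to the weaker property of being linear in each slot separately), but this is precisely what \lem~\ref{htf:lem:ht_multilinear} provides, since fixing all other arguments of a multilinear map yields a linear function of the remaining argument. The assumption that $\htfendloss$ is differentiable (implicit throughout the chapter) is what makes the chain rule applicable.
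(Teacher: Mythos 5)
Your proposal is correct and follows essentially the same route as the paper: both arguments hinge on \lem~\ref{htf:lem:ht_multilinear} implying that $\weightmat{\nu} \mapsto \tensorendmap\big(\big(\weightmat{\nu'}\big)_{\nu' \neq \nu}, \weightmat{\nu}\big)$ is linear in that single slot, so the perturbation by $\Delta$ shifts the end tensor by exactly $\tensorendmap\big(\big(\weightmat{\nu'}\big)_{\nu' \neq \nu}, \Delta\big)$. The paper phrases the final step via the first-order Taylor expansion of $\htfendloss$ and uniqueness of the linear approximation rather than a chain rule along the curve $t \mapsto \weightmat{\nu} + t\Delta$, but these are equivalent formalizations of the same computation.
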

\begin{proof}
	We treat $\big ( \weightmat{\nu'} \big )_{\nu '\in \htmodetree \setminus{ \{ \nu \}} }$ as fixed, and with slight abuse of notation consider:
	\[
	\htfobj^{ ( \nu ) } \big ( \weightmat{\nu}  \big ) := \htfobj \big ( \big ( \weightmat{\nu'} \big )_{\nu' \in \htmodetree} \big )
	\text{\,.}
	\]
	For $\Delta \in \R^{R_\nu \times R_{ \parent (\nu) } }$, by multilinearity of $\tensorendmap$ (\lem~\ref{htf:lem:ht_multilinear}) we have that:
	\[
	\begin{split}
		\htfobj^{ ( \nu ) } \big ( \weightmat{\nu}  + \Delta \big ) & = \htfendloss \left ( \tensorendmap \Big  ( \big ( \weightmat{ \nu' } \big )_{ \nu' \in \htmodetree \setminus \{ \nu \} },  \weightmat{\nu}  + \Delta \Big ) \right ) \\
		& =  \htfendloss \left ( \tensorend + \tensorendmap \Big  ( \big ( \weightmat{ \nu' } \big )_{ \nu' \in \htmodetree \setminus \{ \nu \} }, \Delta \Big ) \right ) 
		\text{\,.}
	\end{split}
	\]
	According to the  first order Taylor approximation of $\htfendloss$ we may write:
	\[
	\begin{split}
		\htfobj^{ ( \nu ) } \big ( \weightmat{\nu}  + \Delta \big ) & =  \htfendloss \left ( \tensorend  \right ) + \inprod{ \nabla \htfendloss \left ( \tensorend \right )  }{ \tensorendmap \Big  ( \big ( \weightmat{ \nu' } \big )_{ \nu' \in \htmodetree \setminus \{ \nu \} }, \Delta \Big ) } + o \left ( \norm{ \Delta } \right ) \\
		& = \htfobj^{ ( \nu ) } \big ( \weightmat{\nu}  \big ) + \inprod{ \nabla \htfendloss \left ( \tensorend \right )  }{ \tensorendmap \Big  ( \big ( \weightmat{ \nu' } \big )_{ \nu' \in \htmodetree \setminus \{ \nu \} }, \Delta \Big ) } + o \left ( \norm{ \Delta } \right ) 
		\text{\,.}
	\end{split}
	\]
	The term $\inprodbig{ \nabla \htfendloss \left ( \tensorend \right )  }{ \tensorendmap \big  ( ( \weightmat{ \nu' } )_{ \nu' \in \htmodetree \setminus \{ \nu \} }, \Delta \big ) }$ is a linear function of $\Delta$.
	Therefore, uniqueness of the linear approximation of $\htfobj^{ (\nu) }$ at $\weightmat{\nu}$ implies:
	\[
	\inprod{ \frac{ d }{ d \weightmat{\nu} } \htfobj^{(\nu)} \big ( \weightmat{\nu} \big ) }{ \Delta } = \inprod{ \nabla \htfendloss ( \tensorend ) }{ \tensorendmap \Big  ( \big ( \weightmat{ \nu' } \big )_{ \nu' \in \htmodetree \setminus \{ \nu \} }, \Delta \Big ) }
	\text{\,.}
	\]	
	Noticing that $\frac{ \partial }{ \partial \weightmat{\nu} } \htfobj \big ( \big ( \weightmat{\nu'} \big )_{\nu' \in \htmodetree} \big ) =  \frac{ d }{ d \weightmat{\nu} } \htfobj^{(\nu)} \big ( \weightmat{\nu} \big )$ completes the proof.
\end{proof}

\begin{lemma}
	\label{htf:lem:ht_weightvec_grad}
	For any $\nu \in \interior(\htmodetree), r \in [R_{ \nu }]$, and $ \Delta \in \R^{ R_{ \parent ( \nu )} }$:
	\be
	\inprodbigg{ \frac{ \partial }{ \partial \weightmat{\nu}_{r, :} } \htfobj \big ( \big ( \weightmat{\nu'} \big )_{\nu' \in \htmodetree} \big ) }{ \Delta^\top } = \inprod{ \nabla \htfendloss ( \tensorend ) }{ \tensorendmap \Big  ( \big ( \weightmat{ \nu' } \big )_{ \nu' \in \htmodetree \setminus \{ \nu  \} }, \padrow_r \left ( \Delta^\top \right ) \Big ) }
	\text{\,,}
	\label{htf:eq:ht_row_weightvec_grad}
	\ee
	where $\padrow_r \left ( \Delta^\top \right ) \in \R^{ R_{ \nu  } \times R_{ \parent ( \nu ) }}$ is the matrix whose $r$'th row is $\Delta^\top$, and all the rest are zero.
	Furthermore, for any $\nu_c \in \children (\nu)$ and $\Delta \in \R^{R_{\nu_c}}$:
	\be
	\inprodbigg{ \frac{ \partial }{ \partial \weightmat{\nu_c}_{:, r} } \htfobj \big ( \big ( \weightmat{\nu'} \big )_{\nu' \in \htmodetree} \big ) }{ \Delta } = \inprod{ \nabla \htfendloss ( \tensorend ) }{ \tensorendmap \Big ( \big ( \weightmat{ \nu' } \big )_{ \nu' \in \htmodetree \setminus \{ \nu_c \} }, \padcol_r ( \Delta ) \Big ) }
	\text{\,,}
	\label{htf:eq:ht_col_weightvec_grad}
	\ee
	where $\padcol_r ( \Delta ) \in \R^{ R_{\nu_c} \times R_{ \nu }}$ is the matrix whose $r$'th column is $\Delta$, and all the rest are zero.
\end{lemma}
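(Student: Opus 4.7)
\textbf{Proof proposal for Lemma~\ref{htf:lem:ht_weightvec_grad}.} The plan is to reduce both identities to Lemma~\ref{htf:lem:ht_weightmat_grad} by viewing a row (resp.\ column) derivative as the matrix derivative restricted to a one-row (resp.\ one-column) subspace of perturbations. Concretely, for any smooth function $f$ of a matrix $\Mbf \in \R^{m \times k}$ and any $\Delta \in \R^{k}$, a standard computation using the definition of the partial derivative gives
\[
\inprodbigg{ \frac{\partial f}{\partial \Mbf_{r,:}} }{ \Delta^\top } = \inprod{ \nabla_{\Mbf} f }{ \padrow_r ( \Delta^\top ) }
\text{\,,}
\]
because $\padrow_r ( \Delta^\top )$ is precisely the matrix perturbation obtained by varying only the $r$'th row of $\Mbf$ in direction $\Delta^\top$ and leaving all other rows unchanged. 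An analogous identity holds for columns with $\padcol_r ( \Delta )$ in place of $\padrow_r ( \Delta^\top )$.

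For Equation~\eqref{htf:eq:ht_row_weightvec_grad}, I would apply this elementary identity with $f = \htfobj$ and $\Mbf = \weightmat{\nu}$, and then invoke Lemma~\ref{htf:lem:ht_weightmat_grad} with the specific choice $\Delta' := \padrow_r ( \Delta^\top ) \in \R^{R_\nu \times R_{\parent (\nu)}}$. This directly yields
\[
\inprodbigg{ \frac{\partial}{\partial \weightmat{\nu}_{r,:}} \htfobj \big ( \big ( \weightmat{\nu'} \big )_{\nu' \in \htmodetree} \big ) }{ \Delta^\top }
= \inprodbig{ \nabla \htfendloss ( \tensorend ) }{ \tensorendmap \big ( ( \weightmat{\nu'} )_{ \nu' \in \htmodetree \setminus \{ \nu \} } , \padrow_r ( \Delta^\top ) \big ) }
\text{\,,}
\]
which is precisely the claim. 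For Equation~\eqref{htf:eq:ht_col_weightvec_grad}, the same argument applies verbatim with $\Mbf = \weightmat{\nu_c}$, $\padrow_r$ replaced by $\padcol_r$, and $\nu$ replaced by $\nu_c$ in the invocation of Lemma~\ref{htf:lem:ht_weightmat_grad}.

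There is essentially no obstacle here; the lemma is a pure bookkeeping statement that lifts Lemma~\ref{htf:lem:ht_weightmat_grad} from full-matrix perturbations to row/column perturbations via the padding operators. The only point warranting care is to write out the first-order Taylor expansion of $\htfobj$ in the single row (or column) variable and check that the linear coefficient picked out by the partial derivative coincides with the inner product against $\padrow_r ( \Delta^\top )$ (resp.\ $\padcol_r ( \Delta )$); uniqueness of the linear approximation, exactly as used in the proof of Lemma~\ref{htf:lem:ht_weightmat_grad}, then seals both identities.
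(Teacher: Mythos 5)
Your proposal is correct and matches the paper's proof: both reduce the row/column identities to \lem~\ref{htf:lem:ht_weightmat_grad} via the elementary observation that the inner product of the row (resp.\ column) partial derivative with $\Delta^\top$ (resp.\ $\Delta$) equals the inner product of the full matrix gradient with $\padrow_r ( \Delta^\top )$ (resp.\ $\padcol_r ( \Delta )$). No further comment is needed.
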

\begin{proof}
	\eqs~\eqref{htf:eq:ht_row_weightvec_grad} and~\eqref{htf:eq:ht_col_weightvec_grad} are direct implications of Lemma~\ref{htf:lem:ht_weightmat_grad} since:
	\[
	\begin{split}
		\inprodbigg{ \frac{ \partial }{ \partial \weightmat{\nu}_{r, :} } \htfobj \big ( \big ( \weightmat{\nu'} \big )_{\nu' \in \htmodetree} \big ) }{ \Delta^\top } & = \inprodbigg{ \frac{ \partial }{ \partial \weightmat{\nu} } \htfobj \Big ( \big ( \weightmat{\nu'} \big )_{\nu' \in \htmodetree} \Big ) }{  \padrow_r \left (\Delta^\top \right )}
	\end{split}
	\text{\,,}
	\]
	and
	\[
	\begin{split}
		\inprodbigg{ \frac{ \partial }{ \partial \weightmat{\nu_c}_{:, r} } \htfobj \big ( \big ( \weightmat{\nu'} \big )_{\nu' \in \htmodetree} \big ) }{ \Delta } & = \inprodbigg{ \frac{ \partial }{ \partial \weightmat{\nu_c} } \htfobj \Big ( \big ( \weightmat{\nu'} \big )_{\nu' \in \htmodetree} \Big ) }{  \padcol_r ( \Delta ) }
		\text{\,.}
	\end{split}
	\]
\end{proof}

\begin{lemma}
	\label{htf:lem:ht_weightvec_grad_zero_comp}
	Let $\nu \in \interior(\htmodetree)$, $\nu_c \in \children (\nu)$, and $r \in [R_{ \nu }]$.
	If both $\weightmat{\nu}_{r, :} = 0$ and $\weightmat{\nu_c}_{:, r} = 0$, then:
	\be
	\frac{ \partial }{ \partial \weightmat{\nu}_{r, :} } \htfobj \big ( \big ( \weightmat{\nu'} \big )_{\nu' \in \htmodetree} \big ) = 0
	\text{\,,}
	\label{htf:eq:ht_row_weightvec_grad_zero}
	\ee
	and
	\be
	\frac{ \partial }{ \partial \weightmat{\nu_c}_{:, r} } \htfobj \big ( \big ( \weightmat{\nu'} \big )_{\nu' \in \htmodetree} \big ) = 0
	\text{\,.}
	\label{htf:eq:ht_col_weightvec_grad_zero}
	\ee
\end{lemma}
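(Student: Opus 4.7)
The plan is to invoke Lemma~\ref{htf:lem:ht_weightvec_grad} to reduce both equalities to statements that certain evaluations of the multilinear map $\tensorendmap$ vanish, and then propagate the assumed zero entries through the factorization formula \eqref{htf:eq:ht_end_tensor} to apply Lemma~\ref{htf:lem:zero_inter_tensor}. The key observation driving both cases is that in \eqref{htf:eq:ht_end_tensor} the tensor $\htftensorpart{\nu_c}{r}$ depends only on the $r$-th column of $\weightmat{\nu_c}$ (so $\weightmat{\nu_c}_{:,r}=0$ forces $\htftensorpart{\nu_c}{r}=0$, whether $\nu_c$ is a leaf or interior), while the $r$-th row of $\weightmat{\nu}$ is exactly the vector of coefficients with which $\htftensorpart{\nu_c}{r}$ (and its siblings) enters each $\htftensorpart{\nu}{r'}$.

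For \eqref{htf:eq:ht_row_weightvec_grad_zero}, I would fix an arbitrary $\Delta \in \R^{R_{\parent(\nu)}}$ and replace $\weightmat{\nu}$ by $\padrow_r(\Delta^\top)$, keeping all other weight matrices fixed. Since every row of $\padrow_r(\Delta^\top)$ other than the $r$-th is zero, the sum over the inner index in \eqref{htf:eq:ht_end_tensor} collapses to its $r$-th summand, yielding
\[
\htftensorpart{\nu}{r'} \,=\, \Delta_{r'} \cdot \pi_\nu\!\Bigl( \tenp_{\nu_c' \in \children(\nu)} \htftensorpart{\nu_c'}{r} \Bigr) \quad, ~r' \in [R_{\parent(\nu)}].
\]
One factor of this tensor product is $\htftensorpart{\nu_c}{r}$, which by the observation above equals zero (since $\weightmat{\nu_c}_{:,r}=0$). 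Hence $\htftensorpart{\nu}{r'}=0$ for every $r'$, so Lemma~\ref{htf:lem:zero_inter_tensor} gives $\tensorendmap\bigl((\weightmat{\nu'})_{\nu' \in \htmodetree \setminus \{\nu\}}, \padrow_r(\Delta^\top)\bigr) = 0$. Lemma~\ref{htf:lem:ht_weightvec_grad} then yields $\inprod{\partial \htfobj / \partial \weightmat{\nu}_{r,:}}{\Delta^\top} = 0$ for every $\Delta$, which establishes \eqref{htf:eq:ht_row_weightvec_grad_zero}.

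For \eqref{htf:eq:ht_col_weightvec_grad_zero}, I would apply a dual substitution, replacing $\weightmat{\nu_c}$ by $\padcol_r(\Delta)$ for arbitrary $\Delta \in \R^{R_{\nu_c}}$ and keeping the rest intact. Only the $r$-th column of $\padcol_r(\Delta)$ is non-zero, so by the same observation the recomputed $\htftensorpart{\nu_c}{r'}$ vanishes for every $r' \neq r$, while $\htftensorpart{\nu_c}{r}$ takes some value whose exact form is irrelevant. Plugging these into the formula for $\htftensorpart{\nu}{r'}$ (with the original $\weightmat{\nu}$), every summand with $r'' \neq r$ is zero because its factor $\htftensorpart{\nu_c}{r''}$ is zero, and the single surviving $r''=r$ summand carries the coefficient $\weightmat{\nu}_{r,r'} = 0$ by hypothesis. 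Thus $\htftensorpart{\nu}{r'}=0$ for every $r'$, and Lemmas~\ref{htf:lem:zero_inter_tensor} and~\ref{htf:lem:ht_weightvec_grad} conclude the argument exactly as before. I do not anticipate a serious obstacle: the proof is essentially index bookkeeping, the only subtleties being to handle the leaf and interior cases of $\nu_c$ uniformly and to keep the row/column conventions of $\padrow_r$ and $\padcol_r$ straight.
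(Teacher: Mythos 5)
Your proposal is correct and follows essentially the same route as the paper's proof: both parts reduce to showing that $\tensorendmap$ evaluated at the padded matrix $\padrow_r(\Delta^\top)$ or $\padcol_r(\Delta)$ vanishes (via Lemma~\ref{htf:lem:ht_weightvec_grad}), and both propagate the assumed zero row/column through \eqref{htf:eq:ht_end_tensor} to invoke Lemma~\ref{htf:lem:zero_inter_tensor}. The index bookkeeping you outline, including the collapse to the $r$'th summand and the handling of leaf versus interior $\nu_c$, matches the paper's argument.
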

\begin{proof}
	We show that $\tensorendmap \big  ( \big ( \weightmat{ \nu' } \big )_{ \nu' \in \htmodetree \setminus \{ \nu  \} }, \padrow_r \left ( \Delta^\top \right ) \big ) = 0$ for all $\Delta \in \R^{\parent(\nu)}$.
	\eq~\eqref{htf:eq:ht_row_weightvec_grad_zero} then follows from \eq~\eqref{htf:eq:ht_row_weightvec_grad} in \lem~\ref{htf:lem:ht_weightvec_grad}.
	Fix some $\Delta \in \R^{\parent(\nu)}$ and let $\big ( \U^{ (\nu', r' ) } \big )_{\nu' \in \htmodetree, r' \in [ R_{ \parent (\nu') } ] }$ be the intermediate tensors produced when computing 
	\[
	\tensorendmap \big  ( \big ( \weightmat{ \nu' } \big )_{ \nu' \in \htmodetree \setminus \{ \nu  \} }, \padrow_r \left ( \Delta^\top \right ) \big )
	\]
	according to \eq~\eqref{htf:eq:ht_end_tensor} (there denoted $\big ( \htftensorpart{\nu'}{ r' } \big )_{\nu', r'}$).
	For any $\bar{r} \in [ R_{\parent (\nu)} ]$ we have that:
	\[
	\U^{(\nu, \bar{r})} = \pi_{\nu} \left ( \sum\nolimits_{r' = 1}^{R_\nu} \padrow_r \big ( \Delta^\top \big )_{r', \bar{r}} \left [  \tenp_{\nu' \in \children (\nu)} \htftensorpart{\nu'}{ r' }  \right ] \right ) =  \pi_\nu \left ( \Delta_{ \bar{r} } \left [  \tenp_{\nu' \in \children (\nu)} \htftensorpart{\nu'}{r}  \right ] \right )
	\text{\,.}
	\]
	The fact that $\weightmat{\nu_c}_{:, r} = 0$ implies that $\htftensorpart{\nu_c}{r} := \pi_{\nu_c} \big ( \sum\nolimits_{r' = 1}^{R_{\nu_c}} \weightmat{\nu_c}_{r', r} \big [ \tenp_{\nu' \in \children ( \nu )} \htftensorpart{ \nu' }{ r' } \big ] \big ) = 0$, and so for every $r' \in [ R_{\parent(\nu)} ]$:
	\[
	\U^{(\nu, \bar{r})} = \pi_\nu \left ( \Delta_{ \bar{r} } \left [ \left ( \tenp_{\nu' \in \lsib (\nu_c)} \htftensorpart{\nu'}{r} \right ) \tenp 0 \tenp \left ( \tenp_{\nu' \in \rsib (\nu_c)} \htftensorpart{\nu'}{r}  \right ) \right ] \right ) = 0
	\text{\,.}
	\]
	\lem~\ref{htf:lem:zero_inter_tensor} then gives $\tensorendmap \big  ( ( \weightmat{ \nu' } )_{ \nu' \in \htmodetree \setminus \{ \nu  \} }, \padrow_r \left ( \Delta^\top \right ) \big ) = 0$, completing this part of the proof.
	
	\medskip
	
	Next, we show that $\tensorendmap \big ( \big ( \weightmat{ \nu' } \big )_{ \nu' \in \htmodetree \setminus \{ \nu_c \} }, \padcol_r ( \Delta ) \big ) = 0$ for all $\Delta \in \R^{\nu_c}$.
	\eq~\eqref{htf:eq:ht_col_weightvec_grad} in \lem~\ref{htf:lem:ht_weightvec_grad} then yields \eq~\eqref{htf:eq:ht_col_weightvec_grad_zero}.
	Fix some $\Delta \in \R^{\nu_c}$ and let $\big ( \V^{ (\nu', r') } \big )_{\nu' \in \htmodetree, r' \in [ R_{ \parent (\nu') } ] }$ be the intermediate tensors produced when computing 
	\[
	\tensorendmap \big ( \big ( \weightmat{ \nu' } \big )_{ \nu' \in \htmodetree \setminus \{ \nu_c \} }, \padcol_r ( \Delta ) \big )
	\]
	according to \eq~\eqref{htf:eq:ht_end_tensor}.
	For any $\bar{r} \in [ R_{\nu} ] \setminus \{ r\}$:
	\[
	\begin{split}
	\U^{(\nu_c, \bar{r})} & = \pi_{\nu_c} \left ( \sum\nolimits_{ r' = 1}^{R_{\nu_c}}  \padcol_r \big ( \Delta \big )_{r' , \bar{r}}  \left [  \tenp_{\nu' \in \children (\nu_c)} \htftensorpart{\nu'}{ r' }  \right ] \right ) \\
	& = \pi_{\nu_c} \left ( \sum\nolimits_{ r' = 1}^{R_{\nu_c}} 0 \cdot  \left [  \tenp_{\nu' \in \children (\nu_c)} \htftensorpart{\nu'}{ r' }  \right ] \right ) \\
	& = 0
	\text{\,.}
	\end{split}
	\]
	Thus, for any $\hat{r} \in [ R_{ \parent( \nu) } ]$ we may write $\U^{(\nu, \hat{r})} = \pi_{\nu} \big ( \weightmat{\nu}_{r, \hat{r}} \big [  \tenp_{\nu' \in \children (\nu)} \U^{(\nu', r)}  \big ] \big )$.
	Since $\weightmat{\nu}_{r, :} = 0$, we get that $\U^{(\nu, \hat{r})} = 0$ for all $\hat{r} \in [ R_{ \parent( \nu) } ]$, which by \lem~\ref{htf:lem:zero_inter_tensor} leads to $\tensorendmap \big ( \big ( \weightmat{ \nu' } \big )_{ \nu' \in \htmodetree \setminus \{ \nu_c \} }, \padcol_r ( \Delta ) \big ) = 0$.
\end{proof}

\begin{lemma}
	\label{htf:lem:tensorend_comp_eq_zeroing_cols_or_rows}
	For any $\nu \in \interior (\htmodetree)$, $\nu_c \in \children (\nu)$, and $r \in [ R_{\nu}]$, the following hold:
	\be
	\tensorendmap \big  ( \big ( \weightmat{ \nu' } \big )_{ \nu' \in \htmodetree \setminus \{ \nu  \} }, \padrow_r \big ( \weightmat{\nu}_{r, :} \big ) \big ) = \htfcompnorm{\nu}{r} \cdot \htfcomp{ \nu }{ r }
	\text{\,,}
	\label{htf:eq:tensorend_zero_rows_loc_comp}
	\ee
	and
	\be
	\tensorendmap \big ( \big ( \weightmat{ \nu' } \big )_{ \nu' \in \htmodetree \setminus \{ \nu_c \} }, \padcol_r \big (  \weightmat{ \nu_c }_{:, r} \big ) \big ) = \htfcompnorm{\nu}{r} \cdot \htfcomp{ \nu }{ r }
	\text{\,,}
	\label{htf:eq:tensorend_zero_cols_loc_comp}
	\ee
	where $\padrow_r \left ( \Delta^\top \right ) \in \R^{ R_{ \nu  } \times R_{ \parent ( \nu ) }}$ is the matrix whose $r$'th row is $\Delta^\top$, and all the rest are zero,
	$\padcol_r ( \Delta ) \in \R^{ R_{\nu_c} \times R_{ \nu }}$ is the matrix whose $r$'th column is $\Delta$, and all the rest are zero, and $\htfcomp{\nu}{r}$ is as defined in \thm~\ref{htf:thm:loc_comp_norm_bal_dyn}.
\end{lemma}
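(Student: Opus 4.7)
The plan is to prove both identities by unwinding Equation~\eqref{htf:eq:ht_end_tensor} at nodes $\nu$ and $\nu_c$ and invoking multilinearity of $\tensorendmap$ (Lemma~\ref{htf:lem:ht_multilinear}). The pivotal observation, valid when $\htfcompnorm{\nu}{r} > 0$, is that by the definition of $\htfcomp{\nu}{r}$ in Theorem~\ref{htf:thm:loc_comp_norm_bal_dyn} combined with multilinearity, $\htfcompnorm{\nu}{r} \cdot \htfcomp{\nu}{r}$ is precisely the end tensor obtained when, in Equation~\eqref{htf:eq:ht_end_tensor}, $\htftensorpart{\nu}{r'}$ is replaced by $\pi_{\nu}\big(\weightmat{\nu}_{r, r'} \big[ \tenp_{\nu_c \in \children(\nu)} \htftensorpart{\nu_c}{r} \big]\big)$ for every $r' \in [R_{\parent(\nu)}]$. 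Reducing both equations to this canonical form is therefore the goal.

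For Equation~\eqref{htf:eq:tensorend_zero_rows_loc_comp}, I would substitute $\padrow_r(\weightmat{\nu}_{r,:})$ for $\weightmat{\nu}$ in the factorization and compute the modified intermediate tensor at node $\nu$. Since all rows of $\padrow_r(\weightmat{\nu}_{r,:})$ except the $r$-th are zero, the sum $\sum_{r''=1}^{R_\nu} \padrow_r(\weightmat{\nu}_{r,:})_{r'', r'} [\tenp_{\nu_c \in \children(\nu)} \htftensorpart{\nu_c}{r''}]$ collapses to its $r''=r$ summand, yielding exactly $\weightmat{\nu}_{r, r'}[\tenp_{\nu_c \in \children(\nu)} \htftensorpart{\nu_c}{r}]$; note that the children's intermediate tensors $\htftensorpart{\nu_c}{r''}$ are untouched since we modify only $\weightmat{\nu}$. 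From this point the computation up to the root $[N]$ proceeds identically to the one producing $\htfcompnorm{\nu}{r} \cdot \htfcomp{\nu}{r}$, so the two end tensors coincide.

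For Equation~\eqref{htf:eq:tensorend_zero_cols_loc_comp}, I would substitute $\padcol_r(\weightmat{\nu_c}_{:,r})$ for $\weightmat{\nu_c}$ and first analyze the effect at node $\nu_c$. A one-line calculation (valid whether $\nu_c$ is a leaf or interior, using Equation~\eqref{htf:eq:ht_end_tensor} for both cases) shows that the modified $\widetilde{\htftensorpart{\nu_c}{r''}}$ equals $0$ for $r'' \neq r$ and equals the original $\htftensorpart{\nu_c}{r}$ for $r''=r$, since only the $r$-th column of $\weightmat{\nu_c}$ survives. Plugging these into the computation at the parent $\nu$, the sum over $r''$ again collapses to the $r''=r$ term, producing the same intermediate tensor $\pi_\nu(\weightmat{\nu}_{r, r'}[\tenp_{\nu_c' \in \children(\nu)} \htftensorpart{\nu_c'}{r}])$ as before, and the identity follows by propagating the computation up to the root.

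Finally, the case $\htfcompnorm{\nu}{r} = 0$ needs separate treatment because $\htfcomp{\nu}{r} := 0$ by convention. Since $\htfcompnorm{\nu}{r} = \normnoflex{\weightmat{\nu}_{r,:}} \cdot \prod_{\nu_c \in \children(\nu)} \normnoflex{\weightmat{\nu_c}_{:,r}}$, at least one factor vanishes: if $\weightmat{\nu}_{r,:}=0$ the $\padrow$ matrix is identically zero and multilinearity forces the LHS of \eqref{htf:eq:tensorend_zero_rows_loc_comp} to vanish, while for \eqref{htf:eq:tensorend_zero_cols_loc_comp} the collapsed intermediate tensor at $\nu$ carries a factor $\weightmat{\nu}_{r, r'} = 0$; if some $\weightmat{\nu_c}_{:,r}=0$, the argument of paragraph three shows the intermediate tensor at $\nu$ is zero, and Lemma~\ref{htf:lem:zero_inter_tensor} then forces the end tensor to vanish. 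The main (rather mild) obstacle is not conceptual but notational: carefully tracking the mode-permutations $\pi_{\nu'}$ and the tensor-product structure as one ascends from the modified node to the root, which is straightforward once one adopts the intermediate-tensor bookkeeping of Equation~\eqref{htf:eq:ht_end_tensor}.
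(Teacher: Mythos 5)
Your proposal is correct and follows essentially the same route as the paper's proof: unwind \eq~\eqref{htf:eq:ht_end_tensor} at the modified node so the sum collapses to the $r$'th summand, invoke multilinearity of $\tensorendmap$ (\lem~\ref{htf:lem:ht_multilinear}) to factor out $\htfcompnorm{\nu}{r}$ and match the definition of $\htfcomp{\nu}{r}$, and handle $\htfcompnorm{\nu}{r} = 0$ separately via \lem~\ref{htf:lem:zero_inter_tensor}. No gaps.
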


\begin{proof}
	Starting with \eq~\eqref{htf:eq:tensorend_zero_rows_loc_comp}, let $\big ( \U^{ (\nu', r') } \big )_{\nu' \in \htmodetree, r' \in [ R_{ \parent (\nu') } ] }$ be the intermediate tensors formed when computing $\tensorendmap \big ( \big ( \weightmat{ \nu' } \big )_{ \nu' \in \htmodetree \setminus \{ \nu  \} }, \padrow_r \big ( \weightmat{\nu}_{r, :} \big ) \big )$ according to \eq~\eqref{htf:eq:ht_end_tensor} (there denoted $\big ( \htftensorpart{\nu'}{ r' } \big )_{\nu', r' }$).
	Clearly, for any $\nu' \in \subtree (\nu) \setminus \{ \nu \}$~---~a node in the subtree of $\nu$ which is not $\nu$~---~it holds that $\U^{(\nu', r')} = \htftensorpart{\nu'}{r'}$ for all $r' \in [ R_{\parent (\nu')} ]$.
	Thus, for all $r' \in [ R_{ \parent(\nu) } ]$ we have that:
	\be
	\U^{(\nu, r')} = \pi_{\nu} \left ( \sum\nolimits_{\bar{r} = 1}^{R_\nu} \padrow_r \big ( \weightmat{\nu}_{r, :} \big )_{\bar{r}, r'} \left [  \tenp_{\nu' \in \children (\nu)} \htftensorpart{\nu'}{\bar{r}}  \right ] \right ) =  \pi_{\nu} \left ( \weightmat{\nu}_{r, r'} \left [  \tenp_{\nu' \in \children (\nu)} \htftensorpart{\nu'}{r}  \right ] \right )
	\text{\,.}
	\label{htf:eq:tensorpart_U}
	\ee
	If $\htfcompnorm{\nu}{r} = \normbig{ \weightmat{\nu}_{r, :} \tenp \big ( \tenp_{\nu' \in \children (\nu)} \weightmat{\nu'}_{:, r} \big ) } = \normbig{\weightmat{\nu}_{r, :}} \prod_{\nu' \in \children (\nu)} \normbig{\weightmat{\nu'}_{:, r}} = 0$, then either $\weightmat{\nu}_{r, :} = 0$ or $\weightmat{\nu'}_{:, r} = 0$ for some $\nu' \in \children (\nu)$.
	We claim that in both cases $\U^{(\nu, r')} = 0$ for all $r' \in [R_{\parent(\nu)}]$.
	Indeed, if $\weightmat{\nu}_{r, :} = 0$ this immediately follows from \eq~\eqref{htf:eq:tensorpart_U}.
	On the other hand, if $\weightmat{\nu'}_{:, r} = 0$ for some $\nu' \in \children (\nu)$, then $\htftensorpart{\nu'}{r} = 0$, which combined with \eq~\eqref{htf:eq:tensorpart_U} also implies that $\U^{(\nu, r')} = 0$ for all $r' \in [R_{\parent(\nu)}]$.
	Hence, \lem~\ref{htf:lem:zero_inter_tensor} establishes \eq~\eqref{htf:eq:tensorend_zero_cols_loc_comp} for the case of $\htfcompnorm{\nu}{r} = 0$:
	\[
	\tensorendmap \Big  ( \big ( \weightmat{ \nu' } \big )_{ \nu' \in \htmodetree \setminus \{ \nu  \} }, \padrow_r \big ( \weightmat{\nu}_{r, :} \big ) \Big ) = 0 = \htfcompnorm{\nu}{r} \cdot \htfcomp{\nu}{r}
	\text{\,.}
	\]
	Now, suppose that $\htfcompnorm{\nu}{r} \neq 0$ and let $\U^{(\nu, :)}$ be the tensor obtained by stacking $\big ( \U^{(\nu, r') } \big )_{r' = 1}^{R_{ \parent (\nu) }}$ into a single tensor, \ie~$\U^{(\nu, :)}_{:, \ldots, :, r'} = \U^{(\nu, r')}$ for all $r' \in [R_{\parent(\nu)}]$.
	Multilinearity of $\tensorendmap \big ( \big ( \weightmat{ \nu' } \big )_{ \nu' \in \htmodetree \setminus \subtree (\nu) }, \U^{(\nu, :)} \big )$ (\lem~\ref{htf:lem:ht_multilinear}) leads to:
	\be
	\begin{split}
		\tensorendmap \Big  ( \big ( \weightmat{ \nu' } \big )_{ \nu' \in \htmodetree \setminus \{ \nu  \} }, \padrow_r \big ( \weightmat{\nu}_{r, :} \big ) \Big ) & = \tensorendmap \Big ( \big ( \weightmat{ \nu' } \big )_{ \nu' \in \htmodetree \setminus \subtree (\nu) }, \U^{(\nu, :)} \Big ) \\
		& = \htfcompnorm{\nu}{r} \cdot \tensorendmap \Big ( \big ( \weightmat{ \nu' } \big )_{ \nu' \in \htmodetree \setminus \subtree (\nu) }, \big ( \htfcompnorm{\nu}{r} \big )^{-1} \U^{(\nu, :)} \Big ) 
		\text{\,.}
	\end{split}
	\label{htf:eq:tensorend_zero_cols_loc_comp_intermid_I}
	\ee
	From \eq~\eqref{htf:eq:tensorpart_U} we know that $\big ( \htfcompnorm{\nu}{r} \big )^{-1} \U^{(\nu, :)}_{:,\ldots, :, r'} = \pi_\nu \big ( \big ( \htfcompnorm{\nu}{r} \big )^{-1} \weightmat{\nu}_{r, r'} \big [  \tenp_{\nu' \in \children (\nu)} \htftensorpart{\nu'}{r}  \big ] \big )$ for all $r' \in [R_{\parent(\nu)}]$.
	Thus, by the definition of $\htfcomp{\nu}{r}$ we may conclude that:
	\be
	\tensorendmap \Big ( \big ( \weightmat{ \nu' } \big )_{ \nu' \in \htmodetree \setminus  \subtree (\nu) }, \big ( \htfcompnorm{\nu}{r} \big )^{-1} \U^{(\nu, :)} \Big ) = \htfcomp{\nu}{r}
	\text{\,.}
	\label{htf:eq:tensorend_zero_cols_loc_comp_intermid_II}
	\ee
	Combining \eqs~\eqref{htf:eq:tensorend_zero_cols_loc_comp_intermid_I} and~\eqref{htf:eq:tensorend_zero_cols_loc_comp_intermid_II} yields \eq~\eqref{htf:eq:tensorend_zero_cols_loc_comp}, completing this part of the proof.
	
	\medskip
	
	Turning our attention to \eq~\eqref{htf:eq:tensorend_zero_cols_loc_comp}, let $\big ( \V^{ (\nu', r') } \big )_{\nu' \in \htmodetree, r' \in [ R_{ \parent (\nu') } ] }$ be the intermediate tensors produced when computing $\tensorendmap \big ( \big ( \weightmat{ \nu' } \big )_{ \nu' \in \htmodetree \setminus \{ \nu_c \} }, \padcol_r \big (  \weightmat{ \nu_c }_{:, r} \big ) \big )$ according to \eq~\eqref{htf:eq:ht_end_tensor} (there denoted $\big ( \htftensorpart{\nu'}{ r' } \big )_{\nu', r' }$).
	Clearly, for any $\nu' \in \subtree ( \nu ) \setminus \{ \nu, \nu_c \}$~---~a node in the subtree of $\nu$ which is not $\nu$ nor $\nu_c$~---~it holds that $\V^{(\nu', r')} = \htftensorpart{\nu'}{r'}$ for all $r' \in [R_{\parent (\nu')}]$.
	Thus, $\V^{ (\nu_c, r) } = \pi_{\nu_c} \big ( \sum\nolimits_{\bar{r} = 1}^{R_{\nu_c}} \padcol_r \big ( \weightmat{\nu_c}_{:, r}\big )_{\bar{r}, r} \big [  \tenp_{\nu' \in \children (\nu_c)} \htftensorpart{\nu'}{\bar{r}}  \big ] \big ) = \htftensorpart{\nu_c}{r}$, whereas for any $r' \in [ R_{\nu} ] \setminus \{r\}$:
	\[
	\begin{split}
		\V^{ (\nu_c, r') } & = 
		\pi_{\nu_c} \left ( \sum\nolimits_{\bar{r} = 1}^{R_{\nu_c}} \padcol_r \big ( \weightmat{\nu_c}_{:, r}\big )_{\bar{r}, r'} \left [  \tenp_{\nu' \in \children (\nu_c)} \htftensorpart{\nu'}{\bar{r}}  \right ] \right ) \\
		& = \pi_{\nu} \left ( \sum\nolimits_{\bar{r} = 1}^{R_{\nu_c}} 0 \cdot \left [  \tenp_{\nu' \in \children (\nu_c)} \htftensorpart{\nu'}{\bar{r}}  \right ] \right ) \\
		& = 0
		\text{\,.}
	\end{split}
	\]
	Putting it all together, for any $r' \in [R_{\parent(\nu)}]$ we may write:
	\[
	\V^{(\nu, r')} = \pi_{\nu} \left ( \sum\nolimits_{\bar{r} = 1}^{R_\nu} \weightmat{\nu}_{\bar{r}, r'} \left [  \tenp_{\nu' \in \children (\nu)} \U^{(\nu', \bar{r})}  \big ] \right ) = \pi_{\nu} \left ( \weightmat{\nu}_{r, r'} \right [  \tenp_{\nu' \in \children (\nu)} \htftensorpart{\nu'}{r}  \big ] \right )
	\text{\,.}
	\]
	From this point, following steps analogous to those used for proving \eq~\eqref{htf:eq:tensorend_zero_rows_loc_comp} based on \eq~\eqref{htf:eq:tensorpart_U} yields \eq~\eqref{htf:eq:tensorend_zero_cols_loc_comp}.
\end{proof}

\begin{lemma}
	\label{htf:lem:weightvec_sq_norm_time_deriv}
	For any $\nu \in \interior (\htmodetree)$, $\nu_c \in \children (\nu)$, and $r \in [ R_{\nu}]$:
	\[
	\frac{d}{dt} \norm1{ \weightmat{\nu}_{r, :} (t) }^2 = 2 \htfcompnorm{\nu}{r} (t) \inprod{ - \nabla \htfendloss ( \tensorend (t)) }{ \htfcomp{\nu}{r} (t) } = \frac{d}{dt} \norm{ \weightmat{\nu_c}_{:,r} (t) }^2 
	\text{\,,}
	\]
	where $\htfcomp{\nu}{r} (t)$ is as defined in \thm~\ref{htf:thm:loc_comp_norm_bal_dyn}.
\end{lemma}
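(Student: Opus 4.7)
The plan is to compute both time derivatives directly from the gradient flow dynamics and identify them with the common quantity $2\htfcompnorm{\nu}{r}(t)\inprod{-\nabla\htfendloss(\tensorend(t))}{\htfcomp{\nu}{r}(t)}$ using the machinery already assembled in \subapp~\ref{htf:app:proofs:useful_lemmas:htf}.

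First, I would handle $\|\weightmat{\nu}_{r,:}(t)\|^2$. By the chain rule,
\[
\tfrac{d}{dt}\|\weightmat{\nu}_{r,:}(t)\|^2 = 2\inprod{\weightmat{\nu}_{r,:}(t)}{\tfrac{d}{dt}\weightmat{\nu}_{r,:}(t)}.
\]
The gradient flow equation (\eq~\eqref{htf:eq:gf_htf}) applied row-wise gives $\tfrac{d}{dt}\weightmat{\nu}_{r,:}(t) = -\tfrac{\partial}{\partial \weightmat{\nu}_{r,:}}\htfobj$, so the above becomes $-2\inprodbig{\tfrac{\partial}{\partial \weightmat{\nu}_{r,:}}\htfobj}{\weightmat{\nu}_{r,:}(t)}$. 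Applying \lem~\ref{htf:lem:ht_weightvec_grad} (specifically \eq~\eqref{htf:eq:ht_row_weightvec_grad}) with $\Delta^\top = \weightmat{\nu}_{r,:}(t)$ rewrites this as $-2\inprodbig{\nabla\htfendloss(\tensorend(t))}{\tensorendmap(\cdots,\padrow_r(\weightmat{\nu}_{r,:}(t)))}$. Then \lem~\ref{htf:lem:tensorend_comp_eq_zeroing_cols_or_rows} (\eq~\eqref{htf:eq:tensorend_zero_rows_loc_comp}) identifies the argument of the inner product as $\htfcompnorm{\nu}{r}(t)\cdot\htfcomp{\nu}{r}(t)$, yielding the desired expression.

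For $\|\weightmat{\nu_c}_{:,r}(t)\|^2$, I would proceed symmetrically: use the chain rule and gradient flow to get $-2\inprodbig{\tfrac{\partial}{\partial \weightmat{\nu_c}_{:,r}}\htfobj}{\weightmat{\nu_c}_{:,r}(t)}$, then invoke \eq~\eqref{htf:eq:ht_col_weightvec_grad} of \lem~\ref{htf:lem:ht_weightvec_grad} with $\Delta = \weightmat{\nu_c}_{:,r}(t)$, and finally apply \eq~\eqref{htf:eq:tensorend_zero_cols_loc_comp} of \lem~\ref{htf:lem:tensorend_comp_eq_zeroing_cols_or_rows} to recognize the resulting tensor as $\htfcompnorm{\nu}{r}(t)\cdot\htfcomp{\nu}{r}(t)$, matching the first expression exactly.

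This is essentially a bookkeeping argument: the real content is carried by the two preceding lemmas, and the only subtlety is making sure the padding operators $\padrow_r$ and $\padcol_r$ are applied to the correct row or column so that the tensor network representation of the derivative collapses to the single local component indexed by $(\nu,r)$. I do not anticipate a hard step; verifying that the two tensor network identities from \lem~\ref{htf:lem:tensorend_comp_eq_zeroing_cols_or_rows} genuinely produce the same tensor $\htfcompnorm{\nu}{r}(t)\cdot\htfcomp{\nu}{r}(t)$ (one obtained by zeroing all rows of $\weightmat{\nu}$ except the $r$'th, the other by zeroing all columns of $\weightmat{\nu_c}$ except the $r$'th) is the main conceptual point, but this has already been done inside \lem~\ref{htf:lem:tensorend_comp_eq_zeroing_cols_or_rows}.
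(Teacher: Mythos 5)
Your proposal is correct and follows the same route as the paper's proof: chain rule plus gradient flow, then \lem~\ref{htf:lem:ht_weightvec_grad} to express each derivative as an inner product with $\tensorendmap$ applied to the padded row/column, and finally \lem~\ref{htf:lem:tensorend_comp_eq_zeroing_cols_or_rows} to identify both resulting tensors with $\htfcompnorm{\nu}{r}(t)\cdot\htfcomp{\nu}{r}(t)$. The only difference is the order in which the two quantities are treated, which is immaterial.
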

\begin{proof}
	Differentiating $\normbig{ \weightmat{\nu_c}_{:, r} (t) }^2$ with respect to time we get:
	\[
	\begin{split}
	\frac{ d }{ dt } \norm1{\weightmat{ \nu_c}_{:, r} (t) }^2 & = 2 \inprod{ \weightmat{ \nu_c }_{:, r} (t) }{ \frac{d}{dt} \weightmat{\nu_c}_{:, r} (t) } \\
	& = - 2 \inprod{  \weightmat{ \nu_c }_{:, r} (t) }{ \frac{ \partial }{ \partial \weightmat{\nu_c}_{:, r} } \htfobj \big ( \big ( \weightmat{\nu'} (t) \big )_{\nu' \in \htmodetree} \big ) }
	\text{\,.}
	\end{split}
	\]
	By \eq~\eqref{htf:eq:ht_col_weightvec_grad} from \lem~\ref{htf:lem:ht_weightvec_grad} we have that:
	\[
	\frac{ d }{ dt } \norm1{\weightmat{ \nu_c}_{:, r} (t) }^2 = - 2 \inprod{ \nabla \htfendloss ( \tensorend (t) ) }{ \tensorendmap \Big ( \big ( \weightmat{ \nu' } (t) \big )_{ \nu' \in \htmodetree \setminus \{ \nu_c \} }, \padcol_r \big (  \weightmat{ \nu_c }_{:, r} (t) \big ) \Big ) }
	\text{\,.}
	\]
	Then, applying \eq~\eqref{htf:eq:tensorend_zero_cols_loc_comp} from \lem~\ref{htf:lem:tensorend_comp_eq_zeroing_cols_or_rows} concludes:
	\[
	\frac{d}{dt} \norm1{ \weightmat{\nu_c}_{:,r} (t) }^2 = 2 \htfcompnorm{\nu}{r} (t) \inprod{ - \nabla \htfendloss ( \tensorend (t)) }{ \htfcomp{\nu}{r} (t) }
	\text{\,.}
	\]
	
	A similar argument yields the desired result for $\normbig{ \weightmat{\nu}_{r, :} (t) }^2$.
	Differentiating with respect to time we obtain:
	\[
	\frac{d}{dt} \norm1{ \weightmat{\nu}_{r, :} (t) }^2 = 2 \inprod{ \weightmat{ \nu }_{r, :} (t) }{ \frac{d}{dt} \weightmat{\nu}_{r, :} (t) } = - 2 \inprod{  \weightmat{ \nu }_{r, :} (t) }{ \frac{ \partial }{ \partial \weightmat{\nu}_{r, :} } \htfobj \big ( \big ( \weightmat{\nu'} (t) \big )_{\nu' \in \htmodetree} \big ) }
	\text{\,.}
	\]
	By \eq~\eqref{htf:eq:ht_row_weightvec_grad} from \lem~\ref{htf:lem:ht_weightvec_grad} we may write:
	\[
	\frac{d}{dt} \norm1{ \weightmat{\nu}_{r, :} (t) }^2 = - 2 \inprod{ \nabla \htfendloss ( \tensorend (t) ) }{ \tensorendmap \Big ( \big ( \weightmat{ \nu' } (t) \big )_{ \nu' \in \htmodetree \setminus \{ \nu \} }, \padrow_r \big (  \weightmat{ \nu }_{r, :} (t) \big ) \Big ) }
	\text{\,.}
	\]
	Lastly, applying \eq~\eqref{htf:eq:tensorend_zero_rows_loc_comp} from \lem~\ref{htf:lem:tensorend_comp_eq_zeroing_cols_or_rows} completes the proof:
	\[
	\frac{d}{dt} \norm1{ \weightmat{\nu}_{r, :} (t) }^2 = 2 \htfcompnorm{\nu}{r} (t) \inprod{ - \nabla \htfendloss ( \tensorend (t)) }{ \htfcomp{\nu}{r} (t) }
	\text{\,.}
	\]
\end{proof}

\begin{lemma}
	\label{htf:lem:bal_zero_stays_zero}
	Let $\nu \in \interior (\htmodetree)$ and $r \in [ R_\nu ]$.
	If there exists a time $t_0 \geq 0$ at which $\wbf (t_0) = 0$ for all $\wbf \in \localcomp (\nu, r)$, then:
	\[
	\wbf (t) = 0 \quad , t \geq 0 ~,~ \wbf \in \localcomp (\nu, r)
	\text{\,,}
	\]
	\ie~$\wbf (t)$ is identically zero for all $\wbf \in \localcomp( \nu, r)$.
\end{lemma}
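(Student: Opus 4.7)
The plan is to invoke uniqueness of solutions for the gradient flow ODE, using Lemma~\ref{htf:lem:ht_weightvec_grad_zero_comp} to exhibit a solution along which the vectors in $\localcomp(\nu, r)$ are permanently zero. Since $\htfendloss$ is differentiable and locally smooth by assumption, and the map from $\big(\weightmat{\nu'}\big)_{\nu'\in\htmodetree}$ to $\tensorend$ is multilinear (Lemma~\ref{htf:lem:ht_multilinear}), the right-hand side $-\nabla \htfobj$ of the gradient flow ODE~\eqref{htf:eq:gf_htf} is continuously differentiable and therefore locally Lipschitz. This yields uniqueness of solutions forward and backward in time (Theorem~2.2 of~\cite{teschl2012ordinary}), which will be the main engine of the proof.

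Concretely, I would construct an auxiliary trajectory $\widetilde{\theta}(t) = \big(\widetilde{\weightmat{\nu'}}(t)\big)_{\nu'\in\htmodetree}$ as follows. Set $\widetilde{\weightmat{\nu}}(t)_{r,:} \equiv 0$ and $\widetilde{\weightmat{\nu_c}}(t)_{:,r} \equiv 0$ for every $\nu_c \in \children(\nu)$ and every $t$, and let all remaining entries of the weight matrices evolve under gradient flow on $\htfobj$, with initial condition at $t_0$ taken to agree with $\theta(t_0)$. Since at every time $\widetilde{\theta}(t)$ satisfies the hypotheses of Lemma~\ref{htf:lem:ht_weightvec_grad_zero_comp} for each child $\nu_c$ of $\nu$, the partial derivatives of $\htfobj$ with respect to $\widetilde{\weightmat{\nu}}_{r,:}$ and each $\widetilde{\weightmat{\nu_c}}_{:,r}$ vanish identically along $\widetilde{\theta}$; in particular, the prescribed derivative $0$ for these coordinates coincides with the gradient flow prescription $-\nabla \htfobj$. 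Hence $\widetilde{\theta}(t)$ is itself a full solution of the gradient flow ODE on some open interval containing $t_0$.

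Because $\widetilde{\theta}(t_0) = \theta(t_0)$ and both curves solve the same locally Lipschitz ODE, uniqueness (applied both forward and backward from $t_0$) forces $\widetilde{\theta}(t) = \theta(t)$ on the maximal common interval of existence, which includes all of $[0,\infty)$ since the gradient flow of the original problem is defined there by assumption. This yields $\wbf(t) = \widetilde{\wbf}(t) = 0$ for every $t \geq 0$ and every $\wbf \in \localcomp(\nu, r)$, as claimed.

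The only step that requires real care is verifying that $\widetilde{\theta}(t)$ is well-defined as a solution of the full ODE rather than of some restricted system. The subtlety is that the gradient flow ODE for the remaining coordinates, evaluated along $\widetilde{\theta}$, may a priori differ from what one gets by differentiating only the restricted objective. However, since Lemma~\ref{htf:lem:ht_weightvec_grad_zero_comp} only asserts vanishing of the gradient components along the zeroed coordinates (not of the others), one needs to observe that the remaining gradient components depend continuously on all weights, so the evolution of the non-zeroed coordinates under the full gradient flow, started from $\theta(t_0)$, automatically keeps the zeroed coordinates at zero by the argument above. This self-consistency is the main point to check, but it is a direct consequence of Lemma~\ref{htf:lem:ht_weightvec_grad_zero_comp}.
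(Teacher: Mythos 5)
Your proposal is correct and follows essentially the same route as the paper's proof: construct an auxiliary trajectory with the local component's weight vectors held at zero and the remaining weights evolving by gradient flow, verify via Lemma~\ref{htf:lem:ht_weightvec_grad_zero_comp} that this trajectory solves the full ODE of~\cref{htf:eq:gf_htf}, and conclude by uniqueness of solutions through $\theta(t_0)$. The "self-consistency" point you flag at the end is exactly the verification the paper carries out (it phrases the remaining coordinates' evolution as gradient flow on the restricted objective $\widebar{\phi}_{\mathrm{H}}$ and checks coincidence with the full gradient), so there is no gap.
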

\begin{proof}
	Standard existence and uniqueness theorems (\eg~Theorem 2.2 in~\cite{teschl2012ordinary}) imply that the system of differential equations governing gradient flow over $\htfobj$ (\eq~\eqref{htf:eq:gf_htf}) has a unique solution that passes through $\big ( \weightmat{\nu'} (t_0) \big )_{\nu' \in \htmodetree}$ at time $t_0$.
	It therefore suffices to show that there exist $\big (\widebar{\Wbf}^{(\nu')} (t) \big )_{\nu' \in \htmodetree}$ satisfying \eq~\eqref{htf:eq:gf_htf} such that $\widebar{\Wbf}^{(\nu')} (t_0) = \weightmat{\nu'} (t_0)$ for all $\nu' \in \htmodetree$, for which $\widebar{\Wbf}^{ (\nu) }_{r, :} (t)$ and $\big ( \widebar{\Wbf}^{(\nu_c)}_{:, r} (t) \big )_{\nu_c \in \children (\nu)}$ are zero for all $t \geq 0$ (recall that $\localcomp (\nu, r)$ consists of $\weightmat{\nu}_{r, :}$ and $\big ( \weightmat{\nu_c}_{:, r} \big )_{\nu_c \in \children (\nu)}$).
	
	We denote by $\Theta_{\nu, r} (t)$ all factorization weights at time $t \geq 0$, except for those in $\localcomp (\nu, r)$, \ie:
	\[
	\Theta_{\nu , r } (t) := \big ( \Wbf^{(\nu')} (t)  \big )_{\nu' \in \htmodetree \setminus \left ( \{ \nu \} \cup \children (\nu) \right ) } \cup \big ( \Wbf^{(\nu_c)}_{:, r'} (t) \big )_{\nu_c \in \children (\nu), r' \in [R_\nu] \setminus \{ r\} } \cup \big ( \Wbf^{(\nu)}_{r', :} (t) \big )_{r' \in [R_\nu] \setminus \{ r \}}
	\text{\,.}
	\]
	We construct $\big (\widebar{\Wbf}^{(\nu')} (t) \big )_{\nu' \in \htmodetree}$ as follows.
	First, let $\widebar{\Wbf}^{ (\nu) }_{r, :} (t) := 0$ and $\widebar{\Wbf}^{(\nu_c)}_{:, r} (t) := 0$ for all $\nu_c \in \children (\nu)$ and $t \geq 0$.
	Then, considering $\Wbf^{ (\nu) }_{r, :} (t)$ and $\big ( \Wbf^{(\nu_c)}_{:, r} (t) \big )_{\nu_c \in \children (\nu)}$ as fixed to zero, we denote by $\widebar{\phi}_{\mathrm{H}} ( \Theta_{\nu, r} (t) )$ the induced objective over all other weights, and let
	\[
	\widebar{\Theta}_{\nu , r } (t) := \big ( \widebar{\Wbf}^{(\nu')} (t)  \big )_{\nu' \in \htmodetree \setminus \left ( \{ \nu \} \cup \children (\nu) \right ) } \cup \big ( \widebar{\Wbf}^{(\nu_c)}_{:, r'} (t) \big )_{\nu_c \in \children (\nu) , r' \in [R_\nu] \setminus \{ r\}} \cup \big ( \widebar{\Wbf}^{(\nu)}_{r', :} (t) \big )_{r' \in [R_\nu] \setminus \{ r \}}
	\text{\,}
	\]
	be a gradient flow path over $\widebar{\phi}_{\mathrm{H}}$ satisfying $\widebar{\Theta}_{\nu, r} (t_0) =\Theta_{\nu, r} (t_0)$.
	By definition, it holds that $\widebar{ \Wbf }^{(\nu')} (t_0) = \weightmat{\nu'} (t_0)$ for all $\nu' \in \htmodetree$.
	Thus, it remains to show that $\big ( \widebar{\Wbf}^{(\nu')} (t) \big )_{\nu' \in \htmodetree}$ obey the differential equations defining gradient flow over $\htfobj$ (\eq~\eqref{htf:eq:gf_htf}).
	To see it is so, notice that since $\widebar{\Wbf}^{ (\nu) }_{r, :} (t)$ and $\big ( \widebar{\Wbf}^{(\nu_c)}_{:, r} (t) \big )_{\nu_c \in \children (\nu)}$ are identically zero, by the definition of $\widebar{\phi}_{\mathrm{H}}$ we have that:
	\be
	\frac{d}{dt} \widebar{\Theta}_{\nu, r} (t) = - \frac{ d }{ d \Theta_{\nu, r} } \widebar{\phi}_{\mathrm{H}} \left ( \widebar{\Theta}_{\nu, r} (t) \right ) = - \frac{ \partial }{ \partial \Theta_{\nu, r} } \htfobj \big (  \big ( \widebar{\Wbf}^{(\nu')} (t) \big )_{\nu' \in \htmodetree} \big ) 
	\text{\,.}
	\label{htf:eq:bar_all_but_local_comp_time_deriv}
	\ee
	Furthermore, by \lem~\ref{htf:lem:ht_weightvec_grad_zero_comp} we obtain:
	\be
	\frac{d}{dt} \widebar{\Wbf}^{ (\nu) }_{r, :} (t) = 0 = - \frac{ \partial }{ \partial \weightmat{\nu}_{r, :} } \htfobj \big ( \big ( \widebar{\Wbf}^{(\nu')} (t) \big )_{\nu' \in \htmodetree} \big )
	\text{\,,}
	\label{htf:eq:bar_row_zero_time_deriv}
	\ee
	and for all $\nu_c \in \children (\nu)$:
	\be
	\frac{d}{dt} \widebar{\Wbf}^{ (\nu_c) }_{:, r} (t) = 0 = - \frac{ \partial }{ \partial \weightmat{\nu_c}_{:, r} } \htfobj \big ( \big ( \widebar{\Wbf}^{(\nu')} (t) \big )_{\nu' \in \htmodetree} \big )
	\text{\,.}
	\label{htf:eq:bar_col_zero_time_deriv}
	\ee
	Combining \eqs~\eqref{htf:eq:bar_all_but_local_comp_time_deriv},~\eqref{htf:eq:bar_row_zero_time_deriv}, and~\eqref{htf:eq:bar_col_zero_time_deriv}, completes the proof:
	\[
	\frac{d}{dt} \widebar{ \Wbf }^{(\nu')} (t) = - \frac{ \partial }{ \partial \weightmat{\nu'} } \htfobj \big ( \big ( \widebar{\Wbf}^{(\bar{\nu})} (t) \big )_{\bar{\nu} \in \htmodetree} \big ) \quad , \nu' \in \htmodetree
	\text{\,.}
	\]
\end{proof}

\subsection{Proof of Lemma~\ref{htf:lem:loc_comp_ht_rank_bound}}
\label{htf:app:proofs:loc_comp_ht_rank_bound}
The proof follows a line similar to that of \thm~7 in~\cite{cohen2018boosting}, extending from binary to arbitrary trees its upper bound on $( \rank \mat{\tensorend }{ \nu} )_{\nu \in \htmodetree}$.

Towards deriving a matricized form of \eq~\eqref{htf:eq:ht_end_tensor}, we define the notion of \emph{index set reduction}.
The reduction of $\nu \in \htmodetree$ onto $\nu' \in \htmodetree$, whose elements are denoted by $i_1 < \cdots < i_{ \abs{\nu'} }$, is defined by:
\[
\nu |_{\nu'} := \left \{ n \in [ \abs{ \nu' } ] : i_n \in \nu \cap \nu' \right \}
\text{\,.}
\]

Now, fix $\nu \in \interior (\htmodetree)$ and $\nu_c \in \children (\nu)$.
By \lem~\ref{htf:lem:tenp_eq_kronp} and the linearity of the matricization operator, we may write the computation of $\mat{ \tensorend }{ \nu_c}$ based on \eq~\eqref{htf:eq:ht_end_tensor} as follows:
\[
\begin{split}
	&\text{For $\bar{\nu} \in \left \{ \{1\}, \ldots, \{ N\} \right \}$ (traverses leaves of $\htmodetree$):} \\
	&\quad \htftensorpart{\bar{\nu}}{r} := \weightmat{\bar{\nu}}_{:, r} \quad , r \in [ R_{ \parent ( \bar{\nu} ) } ] \text{\,,} \\[2.5mm]
	&\text{for $\bar{\nu} \in \interior (\htmodetree) \setminus \left \{ [N] \right \}$ (traverses interior nodes of $\htmodetree$ from leaves to root):} \\
	&\quad \mat{ \htftensorpart{\bar{\nu}}{r} }{ \nu_c |_{ \bar{\nu} }} :=  \Qbf^{(\bar{\nu})} \left ( \sum\nolimits_{r' = 1}^{R_{\bar{\nu}}} \weightmat{\bar{\nu}}_{r', r} \left [ \kronp_{\nu' \in \children ( \bar{\nu} )} \mat{ \htftensorpart{ \nu' }{ r' } }{ \nu_c |_{\nu'} } \right ] \right ) \widebar{\Qbf}^{(\bar{\nu})} \quad ,r \in [ R_{\parent (\bar{\nu} )} ] \text{\,,} \\[2.5mm]
	& \mat{ \tensorend }{ \nu_c} = \Qbf^{([N])} \left ( \sum\nolimits_{r' = 1}^{R_{ [N] } } \weightmat{ [N] }_{r', 1} \left [ \kronp_{ \nu' \in \children ( [N] )} \mat{ \htftensorpart{\nu'}{r'} }{ \nu_c |_{\nu'}  } \right ] \right ) \widebar{\Qbf}^{([N])}
	\text{\,,}
\end{split}
\]
where $\Qbf^{(\bar{\nu})}$ and $\widebar{\Qbf}^{ (\bar{\nu})}$, for $\bar{\nu} \in \interior (\htmodetree)$, are permutation matrices rearranging the rows and columns, respectively, to accord with an ascending order of $\bar{\nu}$, \ie~they fulfill the role of $\pi_{\bar{\nu}}$ in \eq~\eqref{htf:eq:ht_end_tensor}.
For $r \in [R_{ \parent (\nu) }]$, let us focus on $\matbig{\htftensorpart{\nu}{r} }{ \nu_c |_\nu}$.
Since $\nu_c |_{\nu_c} = [\abs{ \nu_c } ]$ and $\nu_c |_{\nu'} = \emptyset$ for all $\nu' \in \children (\nu) \setminus \{ \nu_c \}$, we have that:
{\fontsize{9}{9}\selectfont
\[
\begin{split}
	& \mat{\htftensorpart{\nu}{r} }{ \nu_c |_\nu} \\
	& \hspace{0mm} = \Qbf^{(\nu)} \left ( \sum_{r' = 1}^{R_{\nu}} \weightmat{\nu}_{r', r} \left [ \left ( \kronp_{\nu' \in \lsib ( \nu_c )} \mat{ \htftensorpart{ \nu' }{ r' } }{ \emptyset} \right ) \kronp \mat{ \htftensorpart{\nu_c}{r'} }{ [\abs{ \nu_c } ]} \kronp \left ( \kronp_{\nu' \in \rsib ( \nu_c )} \mat{ \htftensorpart{ \nu' }{ r' } }{ \emptyset} \right ) \right ] \right ) \widebar{\Qbf}^{ (\nu)} 
	\text{\,.}
\end{split}
\]
}
Notice that $\matbig{ \htftensorpart{ \nu' }{ r' } }{ \emptyset}$ is a row vector, whereas $\matbig{ \htftensorpart{\nu_c}{r'} }{ [\abs{ \nu_c } ]}$ is a column vector, for each $\nu' \in \children (\nu) \setminus \{ \nu_c\}$ and $r' \in [R_\nu]$.
Commutativity of the Kronecker product between a row and column vectors therefore leads to:
{\fontsize{9}{9}\selectfont
\[
\begin{split}
	\mat{\htftensorpart{\nu}{r} }{ \nu_c |_\nu} & = \Qbf^{(\nu)} \left ( \sum\nolimits_{r' = 1}^{R_{\nu}} \weightmat{\nu}_{r', r} \left [ \mat{ \htftensorpart{\nu_c}{r'} }{ [\abs{ \nu_c } ]} \kronp \left ( \kronp_{\nu' \in \children ( \nu ) \setminus \{ \nu_c \}} \mat{ \htftensorpart{ \nu' }{ r' } }{ \emptyset} \right ) \right ] \right ) \widebar{\Qbf}^{ (\nu)} \\
	& = \Qbf^{(\nu)} \left ( \sum\nolimits_{r' = 1}^{R_{\nu}} \weightmat{\nu}_{r', r} \left [ \mat{ \htftensorpart{\nu_c}{r'} }{ [\abs{ \nu_c } ]} \left ( \kronp_{\nu' \in \children ( \nu ) \setminus \{ \nu_c \}} \mat{ \htftensorpart{ \nu' }{ r' } }{ \emptyset} \right ) \right ] \right ) \widebar{\Qbf}^{ (\nu)}
	\text{\,,}
\end{split}
\]
}
where the second equality is by the fact that for any column vector $\ubf$ and row vector $\vbf$ it holds that $\ubf \kronp \vbf = \ubf \vbf$.
Defining $\Bbf^{(\nu_c)}$ to be the matrix whose column vectors are $\matbig{ \htftensorpart{\nu_c}{1} }{ [\abs{ \nu_c } ]}, \ldots, \matbig{ \htftensorpart{\nu_c}{ R_\nu } }{ [\abs{ \nu_c } ]}$, we can express the term between $\Qbf^{(\nu)}$ and $\widebar{\Qbf}^{ (\nu)}$ in the equation above as a $\Bbf^{(\nu_c)} \Abf^{(\nu, r)}$, where $\Abf^{(\nu, r)}$ is defined to be the matrix whose rows are $\weightmat{\nu}_{1, r} \big ( \kronp_{\nu' \in \children ( \nu ) \setminus \{ \nu_c \}} \matbig{ \htftensorpart{ \nu' }{ 1 } }{ \emptyset} \big ), \ldots, \weightmat{\nu}_{R_\nu, r} \big ( \kronp_{\nu' \in \children ( \nu ) \setminus \{ \nu_c \}} \matbig{ \htftensorpart{ \nu' }{ R_\nu } }{ \emptyset} \big )$.
That is:
\be
\mat{\htftensorpart{\nu}{r}}{\nu_c |_\nu} = \Qbf^{(\nu)} \Bbf^{(\nu_c)} \Abf^{(\nu, r)} \widebar{\Qbf}^{ (\nu)}
\text{\,.}
\label{htf:eq:tensorpart_nu_mat_B}
\ee
The proof proceeds by propagating  $\Bbf^{(\nu_c)}$ and the left permutation matrices up the tree, until reaching a representation of $\mat{\tensorend }{ \nu_c}$ as a product of matrices that includes $\Bbf^{(\nu_c)}$.
Since $\Bbf^{(\nu_c)}$ has $R_\nu$ columns, this will imply that the rank of $\mat{\tensorend}{\nu_c}$ is at most $R_\nu$, as required.

We begin with the propagation step from $\nu$ to $\parent (\nu)$.
For $r \in [ R_{ \parent ( \parent ( \nu ) )} ]$, we examine:
{
\fontsize{9}{9}\selectfont
\[
\begin{split}
	& \big ( \Qbf^{ (\parent (\nu))} \big )^{-1} \matbig{ \htftensorpart{ \parent (\nu) }{r} }{\nu_c |_{ \parent (\nu) } } \big ( \widebar{\Qbf}^{  ( \parent (\nu) ) } \big )^{-1} \\
	& \hspace{1mm} = \sum\nolimits_{r' = 1}^{ R_{ \parent (\nu) } }\weightmat{\parent (\nu)}_{r', r} \left [ \left ( \kronp_{\nu' \in \lsib ( \nu )} \mat{ \htftensorpart{ \nu' }{ r' } }{\nu_c |_{\nu'}} \right ) \kronp \mat{ \htftensorpart{\nu}{r'} }{\nu_c |_{\nu}} \kronp \left ( \kronp_{\nu' \in \rsib ( \nu )} \mat{ \htftensorpart{ \nu' }{ r' } }{\nu_c |_{\nu'}} \right )\right ]
	\text{\,.}
\end{split}
\]
}
Plugging in \eq~\eqref{htf:eq:tensorpart_nu_mat_B} while noticing that $\nu_c |_{\nu'} = \emptyset$ for any $\nu'$ which is not an ancestor of $\nu_c$, we arrive~at:
\be
\fontsize{9}{9}\selectfont
\sum\nolimits_{r' = 1}^{ R_{ \parent (\nu) } }\weightmat{\parent (\nu)}_{r', r} \left [ \left ( \kronp_{\nu' \in \lsib ( \nu )} \mat{ \htftensorpart{ \nu' }{ r' } }{\emptyset} \right ) \kronp \left ( \Qbf^{(\nu)} \Bbf^{(\nu_c)} \Abf^{(\nu, r')} \widebar{\Qbf}^{ (\nu)} \right ) \kronp \left ( \kronp_{\nu' \in \rsib ( \nu )} \mat{ \htftensorpart{ \nu' }{ r' } }{\emptyset} \right ) \right ]
\text{\,.}
\label{htf:eq:tensorpart_pa_nu_mat_no_Q}
\ee
Let $r' \in [ R_{ \parent (\nu) } ]$.
Since $\matbig{ \htftensorpart{ \nu' }{ r' } }{\emptyset}$ is a row vector for any $\nu' \in \children ( \parent (\nu) ) \setminus \{ \nu \}$, so are $\kronp_{\nu' \in \lsib ( \nu )} \matbig{ \htftensorpart{ \nu' }{ r' } }{\emptyset}$ and $\kronp_{\nu' \in \rsib ( \nu )} \matbig{ \htftensorpart{ \nu' }{ r' } }{\emptyset}$.
Applying \lem~\ref{htf:lem:kronp_mixed_prod_row_vec} twice we therefore have that:
{\fontsize{9}{9}\selectfont
\[
\begin{split}
	& \left ( \kronp_{\nu' \in \lsib ( \nu )} \mat{ \htftensorpart{ \nu' }{ r' } }{\emptyset} \right ) \kronp \left ( \Qbf^{(\nu)} \Bbf^{(\nu_c)} \Abf^{(\nu, r')} \widebar{\Qbf}^{ (\nu)} \right ) \kronp \left ( \kronp_{\nu' \in \rsib ( \nu )} \mat{ \htftensorpart{ \nu' }{ r' } }{\emptyset} \right ) \\[1mm]
	& \quad\quad = \left ( \Qbf^{(\nu)} \Bbf^{(\nu_c)} \left [ \left ( \kronp_{\nu' \in \lsib ( \nu )} \mat{ \htftensorpart{ \nu' }{ r' } }{\emptyset} \right ) \kronp \left ( \Abf^{(\nu, r')} \widebar{\Qbf}^{ (\nu)} \right ) \right ] \right ) \kronp \left ( \kronp_{\nu' \in \rsib ( \nu )} \mat{ \htftensorpart{ \nu' }{ r' } }{\emptyset} \right ) \\[1mm]
	& \quad\quad = \Qbf^{(\nu)} \Bbf^{(\nu_c)} \left [ \left ( \kronp_{\nu' \in \lsib ( \nu )} \mat{ \htftensorpart{ \nu' }{ r' } }{\emptyset} \right ) \kronp \left ( \Abf^{(\nu, r')} \widebar{\Qbf}^{ (\nu)} \right )  \kronp \left ( \kronp_{\nu' \in \rsib ( \nu )} \mat{ \htftensorpart{ \nu' }{ r' } }{\emptyset} \right ) \right ]
	\text{\,.}
\end{split}
\]
}
Going back to \eq~\eqref{htf:eq:tensorpart_pa_nu_mat_no_Q}, we obtain:
\be
\fontsize{9}{9}\selectfont
\begin{split}
	& \sum_{r' = 1}^{ R_{ \parent (\nu) } }\weightmat{\parent (\nu)}_{r', r} \left [ \left (  \kronp_{\nu' \in \lsib ( \nu )} \mat{ \htftensorpart{ \nu' }{ r' } }{\emptyset} \right ) \kronp \left ( \Qbf^{(\nu)} \Bbf^{(\nu_c)} \Abf^{(\nu, r')} \widebar{\Qbf}^{ (\nu)} \right ) \kronp \left ( \kronp_{\nu' \in \rsib ( \nu )} \mat{ \htftensorpart{ \nu' }{ r' } }{\emptyset} \right ) \right ] \\[1mm]
	& =  \sum_{r' = 1}^{ R_{ \parent (\nu) } }\weightmat{\parent (\nu)}_{r', r}  \left ( \Qbf^{(\nu)} \Bbf^{(\nu_c)} \left [ \left ( \kronp_{\nu' \in \lsib ( \nu )} \mat{ \htftensorpart{ \nu' }{ r' } }{\emptyset} \right ) \kronp \left ( \Abf^{(\nu, r')} \widebar{\Qbf}^{ (\nu)} \right )  \kronp \left ( \kronp_{\nu' \in \rsib ( \nu )} \mat{ \htftensorpart{ \nu' }{ r' } }{\emptyset} \right ) \right ] \right ) \\[1mm]
	& =  \Qbf^{(\nu)} \Bbf^{(\nu_c)} \left ( \sum\nolimits_{r' = 1}^{ R_{ \parent (\nu) } }\weightmat{\parent (\nu)}_{r', r} \left [ \left ( \kronp_{\nu' \in \lsib ( \nu )} \mat{ \htftensorpart{ \nu' }{ r' } }{\emptyset} \right ) \kronp \left ( \Abf^{(\nu, r')} \widebar{\Qbf}^{ (\nu)} \right )\kronp \left ( \kronp_{\nu' \in \rsib ( \nu )} \mat{ \htftensorpart{ \nu' }{ r' } }{\emptyset} \right ) \right ]  \right )
	\text{.}
\end{split}
\label{htf:eq:tensorpart_pa_nu_mat_no_Q_B_out}
\ee
For brevity, we denote the matrix multiplying $\Qbf^{(\nu)} \Bbf^{(\nu_c)}$ from the right in the equation above by $\Abf^{(\parent (\nu), r)}$, \ie:
{\fontsize{9}{9}\selectfont
\[
\Abf^{(\parent (\nu), r)} := \sum_{r' = 1}^{ R_{ \parent (\nu) } }\weightmat{\parent (\nu)}_{r', r} \left [ \left ( \kronp_{\nu' \in \lsib ( \nu )} \mat{ \htftensorpart{ \nu' }{ r' } }{\emptyset} \right ) \kronp \left ( \Abf^{(\nu, r')} \widebar{\Qbf}^{ (\nu)} \right )\kronp \left ( \kronp_{\nu' \in \rsib ( \nu )} \mat{ \htftensorpart{ \nu' }{ r' } }{\emptyset} \right ) \right ]
\text{\,.}
\]
}
Recalling that the expression in \eq~\eqref{htf:eq:tensorpart_pa_nu_mat_no_Q_B_out} is of 
\[
\big ( \Qbf^{ (\parent (\nu))} \big )^{-1} \matbig{ \htftensorpart{ \parent (\nu) }{r} }{\nu_c |_{ \parent (\nu) } } \big ( \widebar{\Qbf}^{  ( \parent (\nu) ) } \big )^{-1}
\]
completes the propagation step:
\[
\mat{ \htftensorpart{ \parent (\nu) }{r} }{\nu_c |_{ \parent (\nu) } } = \Qbf^{( \parent (\nu) ) }  \Qbf^{(\nu)} \Bbf^{(\nu_c)} \Abf^{ ( \parent (\nu), r)} \widebar{\Qbf}^{ ( \parent (\nu) )}
\text{\,.}
\]

Continuing this process, we propagate $\Bbf^{(\nu_c)}$, along with the left permutation matrices, upwards in the tree until reaching the root.
This brings forth the following representation of $\mat{ \tensorend }{\nu_c}$:
\[
\mat{ \tensorend }{\nu_c} =  \Qbf^{([N])} \Qbf \Bbf^{ (\nu_c )} \Abf^{ ([N])} \widebar{\Qbf}^{ ( [N] )}
\text{\,,}
\]
for appropriate $\Qbf$ and $\Abf^{([N])}$ encompassing the propagated permutation matrices and the “remainder'' of the decomposition, respectively.
Since $\Bbf^{(\nu_c)}$ has $R_\nu$ columns, we may conclude:
\[
\rank \mat{\tensorend}{\nu_c} \leq \rank \, \Bbf^{(\nu_c)} \leq R_\nu
\text{\,.}
\]
\qed

\subsection{Proof of Lemma~\ref{htf:lem:loc_comp_sq_norm_diff_invariant}}
\label{htf:app:proofs:loc_comp_sq_norm_diff_invariant}
For any $\nu \in \interior (\htmodetree)$, $r \in [R_\nu]$, and $\wbf, \wbf' \in \localcomp (\nu, r)$, \lem~\ref{htf:lem:weightvec_sq_norm_time_deriv} implies that:
\[
\frac{d}{dt} \normbig{ \wbf (t) }^2 = 2 \htfcompnorm{\nu}{r} (t) \inprod{ - \nabla \htfendloss ( \tensorend (t)) }{ \htfcomp{\nu}{r} (t) } = \frac{d}{dt} \normbig{ \wbf' (t) }^2 
\text{\,,}
\]
where $\htfcomp{\nu}{r} (t)$ is as defined in \thm~\ref{htf:thm:loc_comp_norm_bal_dyn}.
For $t \geq 0$, integrating both sides with respect to time leads to:
\[
\normbig{ \wbf (t) }^2 - \normbig{\wbf (0) }^2 = \normbig{ \wbf' (t) }^2 - \normbig{ \wbf' (0) }^2
\text{\,.}
\]
Rearranging the equality above yields the desired result.
\qed

\subsection{Proof of Theorem~\ref{htf:thm:loc_comp_norm_bal_dyn}}
\label{htf:app:proofs:loc_comp_norm_bal_dyn}
Let $t \geq 0$.

First, suppose that $\htfcompnorm{\nu}{r} (t) = 0$.
Since the unbalancedness magnitude at initialization is zero, from \lem~\ref{htf:lem:loc_comp_sq_norm_diff_invariant} we know that $\norm{ \wbf (t) } = \norm{ \wbf' (t) }$ for any $\wbf, \wbf' \in \localcomp (\nu, r)$.
Hence, the fact that $\htfcompnorm{\nu}{r} (t) = 0$ implies that $\norm{ \wbf (t) } = 0$ for all $\wbf \in \localcomp (\nu, r)$.
\lem~\ref{htf:lem:bal_zero_stays_zero} then establishes that $\htfcompnorm{\nu}{r} (t')$ is identically zero through time, in which case both sides of \eq~\eqref{htf:eq:loc_comp_norm_bal_dyn} are equal to zero.

We now move to the case where $\htfcompnorm{\nu}{r} (t) > 0$.
Since $\htfcompnorm{\nu}{r} (t) = \normnoflex{\tenp_{\wbf \in \localcomp (\nu, r)} \wbf (t) } = \prod\nolimits_{ \wbf \in \localcomp (\nu, r) } \norm{ \wbf (t) }$ (the norm of a tensor product is equal to the product of the norms), by the product rule of differentiation we have that:
\[
\frac{d}{dt} \htfcompnorm{\nu}{r} (t)^2 = \sum\nolimits_{ \wbf \in \localcomp (\nu, r) } \frac{d}{dt} \norm{ \wbf (t) }^2 \cdot \prod\nolimits_{ \wbf' \in \localcomp (\nu, r) \setminus \{ \wbf \} } \norm{ \wbf' (t) }^2
\text{\,.}
\]
Applying \lem~\ref{htf:lem:weightvec_sq_norm_time_deriv} then leads to:
\[
\begin{split}
	& \frac{d}{dt} \htfcompnorm{\nu}{r} (t)^2 \\
	& = \sum\nolimits_{ \wbf \in \localcomp (\nu, r) } 2 \htfcompnorm{\nu}{r} (t) \inprod{ - \nabla \htfendloss ( \tensorend (t)) }{ \htfcomp{\nu}{r} (t) } \cdot \prod\nolimits_{ \wbf' \in \localcomp (\nu, r) \setminus \{ \wbf \} } \norm{ \wbf' (t) }^2 \\
	& = 2 \htfcompnorm{\nu}{r} (t) \inprod{ - \nabla \htfendloss ( \tensorend (t)) }{ \htfcomp{\nu}{r} (t) } \sum\nolimits_{ \wbf \in \localcomp (\nu, r) } \prod\nolimits_{ \wbf' \in \localcomp (\nu, r) \setminus \{ \wbf \} } \norm{ \wbf' (t) }^2
	\text{\,.}
\end{split}
\]
From the chain rule we know that $\frac{d}{dt} \htfcompnorm{\nu}{r} (t)^2 = 2 \htfcompnorm{\nu}{r} (t) \cdot \frac{d}{dt} \htfcompnorm{\nu}{r} (t)$ (note that $\frac{d}{dt} \htfcompnorm{\nu}{r} (t)$ surely exists because $\htfcompnorm{\nu}{r} (t) > 0$).
Thus:
\be
\begin{split}
	\frac{d}{dt} \htfcompnorm{\nu}{r} (t) & = \frac{1}{2} \htfcompnorm{\nu}{r} (t)^{-1} \frac{d}{dt} \htfcompnorm{\nu}{r} (t)^2 \\
	& = \inprod{ - \nabla \htfendloss ( \tensorend (t)) }{ \htfcomp{\nu}{r} (t) } \sum\nolimits_{ \wbf \in \localcomp (\nu, r) } \prod\nolimits_{ \wbf' \in \localcomp (\nu, r) \setminus \{ \wbf \} } \norm{ \wbf' (t) }^2
	\text{\,.}
\end{split}
\label{htf:eq:localcompnorm_time_deriv_interm_bal}
\ee
According to \lem~\ref{htf:lem:loc_comp_sq_norm_diff_invariant}, the unbalancedness magnitude remains zero through time, and so $\norm{ \wbf (t) } = \norm{ \wbf' (t) }$ for any $\wbf, \wbf' \in \localcomp (\nu, r)$.
Recalling that $L_\nu := \children (\nu) + 1$ is the number of weight vectors in a local component at $\nu$, this implies that for each $\wbf \in \localcomp (\nu, r)$:
\be
\norm{ \wbf (t) }^2 =  \norm{ \wbf (t) }^{L_\nu \cdot \frac{2}{L_\nu} } = \left ( \prod\nolimits_{\wbf' \in \localcomp (\nu, r)} \norm{ \wbf' (t) } \right )^{\frac{2}{L_\nu}} =  \htfcompnorm{\nu}{r} (t)^{ \frac{2}{ L_\nu } }
\text{\,.}
\label{htf:eq:sq_weightvec_norm_eq_local_comp_norm}
\ee
Plugging \eq~\eqref{htf:eq:sq_weightvec_norm_eq_local_comp_norm} into \eq~\eqref{htf:eq:localcompnorm_time_deriv_interm_bal} completes the proof.
\qed

\subsection{Proof of Proposition~\ref{htf:prop:low_rank_dist_bound}}
\label{htf:app:proofs:low_rank_dist_bound}
We begin by establishing the following key lemma, which upper bounds the distance between the end tensor $\tensorend$ and the one obtained after setting a local component to zero.

\begin{lemma}
	\label{htf:lem:dist_from_pruned_local_comp_end_tensor}
	Let $\nu \in \interior ( \htmodetree )$ and $r \in [R_\nu]$.
	Denote by $\widebar{\W}_{\mathrm{H}}^{(\nu, r)}$ the end tensor obtained by pruning the $(\nu, r)$'th local component, \ie~by setting the $r$'th row of $\weightmat{\nu}$ and the $r$'th columns of $\big ( \weightmat{\nu_c} \big )_{\nu_c \in \children (\nu)}$ to zero.
	Then:
	\[
	\norm1{ \tensorend - \widebar{\W}_{\mathrm{H}}^{(\nu, r)} } \leq \htfcompnorm{\nu}{r} \cdot \prod\nolimits_{\nu' \in \htmodetree \setminus ( \{ \nu \} \cup \children (\nu) )} \norm1{ \weightmat{\nu'} }
	\text{\,.}
	\]
\end{lemma}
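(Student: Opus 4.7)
The plan is to reduce the inequality to an exact identity for $\tensorend - \widebar{\W}_{\mathrm{H}}^{(\nu,r)}$, plus a norm bound for $\htfcomp{\nu}{r}$ obtained by recursively propagating \lems~\ref{htf:lem:tensorpart_norm_bound} and~\ref{htf:lem:stacked_tensorpart_norm_bound} through the mode tree. First, I would split the weight matrix at $\nu$ as $\weightmat{\nu} = \padrow_r(\weightmat{\nu}_{r,:}) + \Wbf^{(\nu)}_{\neg r}$, where $\Wbf^{(\nu)}_{\neg r}$ denotes $\weightmat{\nu}$ with its $r$'th row replaced by zeros, and invoke additivity of the multilinear map $\tensorendmap$ (\lem~\ref{htf:lem:ht_multilinear}) in the $\nu$-th slot to obtain
\[
\tensorend = \tensorendmap\big((\weightmat{\nu'})_{\nu' \in \htmodetree \setminus \{\nu\}}, \padrow_r(\weightmat{\nu}_{r,:})\big) + \tensorendmap\big((\weightmat{\nu'})_{\nu' \in \htmodetree \setminus \{\nu\}}, \Wbf^{(\nu)}_{\neg r}\big).
\]
By \lem~\ref{htf:lem:tensorend_comp_eq_zeroing_cols_or_rows} the first summand equals $\htfcompnorm{\nu}{r} \cdot \htfcomp{\nu}{r}$. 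The second summand equals $\widebar{\W}_{\mathrm{H}}^{(\nu,r)}$, because the $r$'th columns of $\big(\weightmat{\nu_c}\big)_{\nu_c \in \children(\nu)}$ enter \eq~\eqref{htf:eq:ht_end_tensor} only through the summand at $\bar{r} = r$, whose coefficient $\weightmat{\nu}_{r, r'}$ has just been zeroed, so additionally zeroing those columns leaves the end tensor unchanged. This yields the exact identity
\[
\tensorend - \widebar{\W}_{\mathrm{H}}^{(\nu,r)} = \htfcompnorm{\nu}{r} \cdot \htfcomp{\nu}{r},
\]
so the proof reduces to showing $\normnoflex{\htfcomp{\nu}{r}} \leq \prod_{\nu' \in \htmodetree \setminus (\{\nu\} \cup \children(\nu))} \normnoflex{\weightmat{\nu'}}$.

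Next I would bound $\normnoflex{\htfcomp{\nu}{r}}$ by handling the subtree of $\nu$ and the remainder of the tree separately. Let $\widehat{\htftensorpart{\nu}{:}}$ denote the stacked tensor that replaces $\htftensorpart{\nu}{:}$ in the definition of $\htfcomp{\nu}{r}$. A short computation together with $\htfcompnorm{\nu}{r} = \normnoflex{\weightmat{\nu}_{r,:}} \prod_{\nu_c \in \children(\nu)} \normnoflex{\weightmat{\nu_c}_{:, r}}$ gives
\[
\normbig{\widehat{\htftensorpart{\nu}{:}}} = \prod_{\nu_c \in \children(\nu)} \frac{\normbig{\htftensorpart{\nu_c}{r}}}{\normbig{\weightmat{\nu_c}_{:, r}}},
\]
so the $r$'th column factors at the children cancel exactly. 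Bounding each $\normbig{\htftensorpart{\nu_c}{r}}$ via \lem~\ref{htf:lem:tensorpart_norm_bound} and then unrolling \lem~\ref{htf:lem:stacked_tensorpart_norm_bound} down each grandchild subtree yields $\normbig{\widehat{\htftensorpart{\nu}{:}}} \leq \prod_{\nu' \in \subtree(\nu) \setminus (\{\nu\} \cup \children(\nu))} \normnoflex{\weightmat{\nu'}}$. I would then propagate $\widehat{\htftensorpart{\nu}{:}}$ from $\nu$ up to the root by applying \lem~\ref{htf:lem:stacked_tensorpart_norm_bound} once at each ancestor $\mu$ of $\nu$; this picks up a factor $\normnoflex{\weightmat{\mu}}$ and, for every $\mu' \in \children(\mu)$ off the ancestor path, the bound $\normnoflex{\htftensorpart{\mu'}{:}} \leq \prod_{\nu' \in \subtree(\mu')} \normnoflex{\weightmat{\nu'}}$. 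Since the union of all ancestors of $\nu$ together with the subtrees rooted at their other children is exactly $\htmodetree \setminus \subtree(\nu)$, combining the two contributions gives the desired $\normnoflex{\htfcomp{\nu}{r}} \leq \prod_{\nu' \in \htmodetree \setminus (\{\nu\} \cup \children(\nu))} \normnoflex{\weightmat{\nu'}}$.

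The main obstacle I anticipate is the combinatorial bookkeeping in this second step: verifying that the normalizing factor $\htfcompnorm{\nu}{r}$ cancels exactly against the norms of $\weightmat{\nu}_{r,:}$ and of $\{\weightmat{\nu_c}_{:,r}\}_{\nu_c \in \children(\nu)}$, so these vectors do not appear in the final bound, and that the index set accumulated while propagating up to the root is precisely $\htmodetree \setminus \subtree(\nu)$. Two degenerate cases warrant a brief check at the end: $\nu = [N]$, where the weight matrix at the root is a single column and the upward propagation is vacuous, and $\htfcompnorm{\nu}{r} = 0$, where both sides vanish by the convention $\htfcomp{\nu}{r} = 0$.
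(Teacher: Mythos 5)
Your proof is correct and follows essentially the same route as the paper's: both exploit multilinearity of $\tensorendmap$ at node $\nu$ to isolate the pruned local component, then bound its norm by propagating \lems~\ref{htf:lem:tensorpart_norm_bound} and~\ref{htf:lem:stacked_tensorpart_norm_bound} through the subtree of $\nu$ and up to the root. The only (cosmetic) difference is that you factor out $\htfcompnorm{\nu}{r}$ at the start via the exact identity $\tensorend - \widebar{\W}_{\mathrm{H}}^{(\nu,r)} = \htfcompnorm{\nu}{r} \cdot \htfcomp{\nu}{r}$, whereas the paper works with the unnormalized difference $\htftensorpart{\nu}{:} - \widebar{\W}^{(\nu,:)}$ and extracts $\htfcompnorm{\nu}{r}$ at the end.
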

\begin{proof}
	Let $\big ( \widebar{ \Wbf }^{(\nu')} \big )_{\nu' \in \htmodetree}$ be the weight matrices corresponding to $\widebar{\W}_{\mathrm{H}}^{(\nu, r)}$, \ie~$\widebar{\Wbf}^{(\nu)}$ is the weight matrix obtained by setting the $r$'th row of $\weightmat{\nu}$ to zero, $\big ( \widebar{\Wbf}^{(\nu_c)} \big )_{\nu_c \in \children (\nu)}$ are the weight matrices obtained by setting the $r$'th columns of $\big ( \weightmat{\nu_c} \big )_{\nu_c \in \children (\nu) }$ to zero, and $\widebar{ \Wbf }^{(\nu')} = \weightmat{\nu'}$ for all $\nu' \in \htmodetree \setminus ( \{ \nu \} \cup \children (\nu))$.
	Accordingly, we denote by $\big ( \widebar{ \W }^{(\nu', r')} \big )_{\nu' \in \htmodetree, r' \in [R_{ \parent (\nu') }]}$ the intermediate tensors produced when computing $\widebar{\W}_{\mathrm{H}}^{(\nu, r)}$ according to \eq~\eqref{htf:eq:ht_end_tensor} (there denoted $\big ( \htftensorpart{\nu'}{r'} \big )_{\nu', r' }$).
	
	By definition, $\tensorendmap \big ( \big ( \weightmat{ \nu' } \big )_{ \nu' \in \htmodetree \setminus \subtree (\nu)}, \htftensorpart{\nu}{:} \big ) = \tensorend$ and $\tensorendmap \big ( \big ( \widebar{\Wbf}^{ (\nu') } \big )_{ \nu' \in \htmodetree \setminus \subtree (\nu)}, \widebar{\W}^{(\nu, :)} \big ) = \widebar{\W}_{\mathrm{H}}^{(\nu, r)}$.
	Since $\tensorendmap$ is multilinear (\lem~\ref{htf:lem:ht_multilinear}) and $\widebar{ \Wbf }^{(\nu')} = \weightmat{\nu'}$ for all $\nu' \in \htmodetree \setminus \subtree (\nu)$, we have that:
	\[
	\begin{split}
		\norm*{ \tensorend - \widebar{\W}_{\mathrm{H}}^{(\nu, r)} } & = \norm*{ \tensorendmap \big ( ( \weightmat{ \nu' } )_{ \nu' \in \htmodetree \setminus \subtree (\nu)}, \htftensorpart{\nu}{:} \big ) - \tensorendmap \big ( ( \widebar{\Wbf}^{ (\nu') } )_{ \nu' \in \htmodetree \setminus \subtree (\nu)}, \widebar{\W}^{(\nu, :)} \big ) } \\
		& = \norm*{ \tensorendmap \big ( ( \weightmat{ \nu' } )_{ \nu' \in \htmodetree \setminus \subtree (\nu)}, \htftensorpart{\nu}{:} -  \widebar{\W}^{(\nu, :)} \big ) }
		\text{\,.}
	\end{split}
	\]
	Heading from the root downwards, subsequent applications of \lem~\ref{htf:lem:stacked_tensorpart_norm_bound} over all nodes in the mode tree, except those belonging to the sub-tree whose root is $\nu$, then yield:
	\be
	\norm{ \tensorend - \widebar{\W}_{\mathrm{H}}^{(\nu, r)} } \leq \norm{  \htftensorpart{\nu}{:} -  \widebar{\W}^{(\nu, :)} } \cdot \prod\nolimits_{\nu' \in \htmodetree \setminus \subtree (\nu)} \norm{ \weightmat{\nu'} }
	\text{\,.}
	\label{htf:eq:tensorend_trimmed_comp_upper_bound_intermid}
	\ee
	Notice that for any $r' \in [R_{ \parent (\nu) } ]$:
	\[
	\begin{split}
		\left ( \htftensorpart{\nu}{:} -  \widebar{\W}^{(\nu, :)} \right )_{:, \ldots, :, r'} & = \sum_{\bar{r} \in [R_\nu]} \weightmat{\nu}_{\bar{r}, r'} \tenp_{ \nu_c \in \children (\nu) } \htftensorpart{\nu_c}{\bar{r}} - \sum_{\bar{r} \in [R_\nu] \setminus \{ r \}} \weightmat{\nu}_{\bar{r}, r'} \tenp_{ \nu_c \in \children (\nu) } \htftensorpart{\nu_c}{\bar{r}} \\
		& = \weightmat{\nu}_{r, r'} \tenp_{ \nu_c \in \children (\nu) } \htftensorpart{\nu_c}{r}
		\text{\,.}
	\end{split}
	\]
	Thus, a straightforward computation shows:
	\[
	\begin{split}
		\norm*{  \htftensorpart{\nu}{:} -  \widebar{\W}^{(\nu, :)} }^2 & = \sum\nolimits_{r' = 1}^{ R_{ \parent (\nu') } } \norm{  \weightmat{\nu}_{r, r'} \tenp_{ \nu_c \in \children (\nu) } \htftensorpart{\nu_c}{r} }^2 \\
		& = \sum\nolimits_{r' = 1}^{ R_{ \parent (\nu') } } \left ( \weightmat{\nu}_{r, r'} \right )^2 \cdot \prod\nolimits_{ \nu_c \in \children (\nu) } \norm*{ \htftensorpart{\nu_c}{r} }^2 \\
		& = \norm*{ \weightmat{\nu}_{r, :} }^2 \cdot \prod\nolimits_{ \nu_c \in \children (\nu) } \norm*{ \htftensorpart{\nu_c}{r} }^2
		\text{\,,}
	\end{split}
	\]
	where the second equality is by the fact that the norm of a tensor product is equal to the product of the norms.
	From \lem~\ref{htf:lem:tensorpart_norm_bound} we get that $\normbig{ \htftensorpart{\nu_c}{r} } \leq \normbig{ \weightmat{ \nu_c }_{:, r} } \cdot \prod\nolimits_{ \nu' \in \children (\nu_c) } \normbig{ \htftensorpart{\nu'}{:} }$ for all $\nu_c \in \children (\nu)$, which leads to:
	\[
	\begin{split}
		\norm*{  \htftensorpart{\nu}{:} -  \widebar{\W}^{(\nu, :)} }^2 & \leq \norm*{ \weightmat{\nu}_{r, :} }^2 \cdot \prod\nolimits_{ \nu_c \in \children (\nu) } \left ( \norm*{ \weightmat{ \nu_c }_{:, r} }^2 \cdot \prod\nolimits_{ \nu' \in \children (\nu_c) } \norm{ \htftensorpart{\nu'}{:} }^2 \right ) \\
		& = \left ( \htfcompnorm{\nu}{r} \right )^2 \cdot \prod\nolimits_{ \nu_c \in \children (\nu), \nu' \in \children (\nu_c) } \norm*{ \htftensorpart{\nu'}{:} }^2
		\text{\,.}
	\end{split}
	\]
	Taking the square root of both sides and plugging the inequality above into \eq~\eqref{htf:eq:tensorend_trimmed_comp_upper_bound_intermid}, we arrive at:
	\[
	\norm*{ \tensorend - \widebar{\W}_{\mathrm{H}}^{(\nu, r)} } \leq \htfcompnorm{\nu}{r} \cdot \prod\nolimits_{\nu_c \in \children (\nu), \nu' \in \children (\nu_c) } \norm*{ \htftensorpart{\nu'}{:} } \cdot \prod\nolimits_{\nu' \in \htmodetree \setminus \subtree (\nu)} \norm*{ \weightmat{\nu'} }
	\text{\,.}
	\]
	Applying \lem~\ref{htf:lem:stacked_tensorpart_norm_bound} iteratively over the sub-trees whose roots are $\children (\nu_c)$ gives:
	\[
	\prod\nolimits_{\nu_c \in \children (\nu), \nu' \in \children (\nu_c) } \norm*{ \htftensorpart{\nu'}{:} } \leq \prod\nolimits_{\nu' \in \subtree (\nu) \setminus ( \{ \nu \} \cup \children (\nu) )} \norm*{ \weightmat{\nu' } }
	\text{\,,}
	\]
	concluding the proof.
\end{proof}

\medskip

With \lem~\ref{htf:lem:dist_from_pruned_local_comp_end_tensor} in hand, we are now in a position to prove \prop~\ref{htf:prop:low_rank_dist_bound}.
Let
\[
\S := \left \{ (\nu, r) : \nu \in \interior (\htmodetree) , r \in \{ R'_{\nu} + 1, \ldots, R_\nu \right \}
\text{\,,}
\]
and denote by $\widebar{\W}_{\mathrm{H}}^{\S}$ the end tensor obtained by pruning all local components in $\S$, \ie~by setting to zero the $r$'th row of $\weightmat{\nu}$ and the $r$'th column of $\weightmat{\nu_c}$ for all $(\nu, r) \in \S$ and $\nu_c \in \children (\nu)$.
As can be seen from \eq~\eqref{htf:eq:ht_end_tensor}, we may equivalently discard these weight vectors instead of setting them to zero.
Doing so, we arrive at a representation of $\widebar{\W}_{\mathrm{H}}^{\S}$ as the end tensor of $\big ( \widebar{ \Wbf }^{(\nu)} \in \R^{R'_\nu \times R'_{ \parent (\nu) } } \big )_{\nu \in \htmodetree}$, where $R'_{ \parent ([N])} = 1$, $R'_{ \{n\}} = D_n$ for $n \in [N]$, and $\widebar{ \Wbf }^{(\nu)} = \weightmat{\nu}_{:R'_\nu, :R'_{\parent (\nu)}}$ for all $\nu \in \htmodetree$.
Hence, \lem~\ref{htf:lem:loc_comp_ht_rank_bound} implies that for any $\nu \in \htmodetree$ the rank of $\mat{ \widebar{\W}_{\mathrm{H}}^{\S} }{\nu}$ is at most $R'_{ \parent (\nu) }$.
This means that it suffices to show that:
\be
\norm*{ \tensorend - \widebar{\W}_{\mathrm{H}}^{\S} } \leq \epsilon
\text{\,.}
\label{htf:eq:tensorend_prunedtensorend_dist}
\ee
For $i \in [ \abs{\S} ]$, let $\S_i \subset \S$ be the set comprising the first $i$ local components in $\S$ according to an arbitrary order.
Adding and subtracting $\widebar{\W}_{\mathrm{H}}^{\S_i}$ for all $i \in [ \abs{\S} - 1]$, and applying the triangle inequality, we have:
\[
\norm*{ \tensorend - \widebar{\W}_{\mathrm{H}}^{\S} } \leq \sum\nolimits_{i = 0}^{\abs{\S} - 1} \norm*{ \widebar{\W}_{\mathrm{H}}^{\S_i} - \widebar{\W}_{\mathrm{H}}^{\S_{i + 1}} }
\text{\,,}
\]
where $\widebar{\W}_{\mathrm{H}}^{\S_0} := \tensorend$.
Upper bounding each term in the sum according to \lem~\ref{htf:lem:dist_from_pruned_local_comp_end_tensor}, while noticing that pruning a local component can only decrease the norms of weight matrices and other local components in the factorization, we obtain:
\[
\begin{split}
	\norm1{ \tensorend - \widebar{\W}_{\mathrm{H}}^{\S} } & \leq \sum\nolimits_{\nu \in \interior (\htmodetree) } \sum\nolimits_{r = R'_\nu + 1}^{R_\nu} \htfcompnorm{\nu}{r} \cdot \prod\nolimits_{\nu' \in \htmodetree \setminus \left ( \{ \nu \} \cup \children (\nu) \right )}  \normbig{ \weightmat{\nu'} } \\
	& \leq \sum\nolimits_{\nu \in \interior (\htmodetree) } B^{\abs{\htmodetree} - 1 - \abs{\children (\nu)}} \cdot \sum\nolimits_{r = R'_\nu + 1}^{R_\nu} \htfcompnorm{\nu}{r}
	\text{\,,}
\end{split}
\]
where the latter inequality is by recalling that $B = \max_{\nu \in \htmodetree} \normnoflex{ \weightmat{\nu}}$.
Since for all $\nu \in \interior (\htmodetree)$ we have that $\sum\nolimits_{r = R'_\nu + 1}^{R_\nu} \htfcompnorm{\nu}{r} \leq \epsilon \cdot (\abs{\htmodetree} - N)^{-1} B^{\abs{\children (\nu)} + 1 - \abs{\htmodetree} }$, \eq~\eqref{htf:eq:tensorend_prunedtensorend_dist} readily follows.
\qed

\subsection{Proof of Proposition~\ref{htf:prop:htr_multiple_minima}}
\label{htf:app:proofs:htr_multiple_minima}
Consider the tensor completion problem defined by the set of observed entries
\[
\Omega = \left \{ (1, \ldots, 1, 1, 1, 1), (1, \ldots, 1, 1, 1, 2), (1, \ldots, 1, 2, 2, 1), (1, \ldots, 1, 2, 2, 2) \right \}
\]
and ground truth $\W^* \in \R^{D_1 \times \cdots \times D_N}$, whose values at those locations are:
\be
\W^*_{1, \ldots, 1, 1, :2, :2} = \begin{bmatrix} 1 & 0 \\ ? & ? \end{bmatrix} ~~ , ~~ \W^*_{1, \ldots, 1, 2, :2, :2} = \begin{bmatrix} ? & ? \\ 0 & 1 \end{bmatrix} 
\text{\,,}
\label{htf:eq:ht_multiple_minima_observations}
\ee
where $?$ stands for an unobserved entry.
We define two solutions for the tensor completion problem, $\W$ and $\W'$ in $\R^{D_1 \times \cdots \times D_N}$, as follows:
\[
\begin{split}
& \W_{1, \ldots, 1, 1, :2, :2} := \begin{bmatrix} 1 & 0 \\ 1 & 0 \end{bmatrix} ~~ , ~~ \W_{1, \ldots, 1, 2, :2, :2} := \begin{bmatrix} 0 & 1 \\ 0 & 1 \end{bmatrix}  \text{\,,} \\[1mm]
& \W'_{1, \ldots, 1, 1, :2, :2} := \begin{bmatrix} 1 & 0 \\ 0 & 1 \end{bmatrix} ~~ , ~~ \W'_{1, \ldots, 1, 2, :2, :2} := \begin{bmatrix} 1 & 0 \\ 0 & 1 \end{bmatrix} 
\text{\,,}
\end{split}
\]
and the remaining entries of $\W$ and $\W'$ hold zero.
Clearly, $\L (\W) = \L (\W') = 0$.

Fix a mode tree $\htmodetree$ over $[N]$.
Since $\W$ and $\W'$ fit the observed entries their hierarchical tensor ranks with respect to $\htmodetree$, $ ( \rank \mat{ \W }{\nu} )_{\nu \in \htmodetree \setminus \{ [N] \}}$ and $ ( \rank \mat{ \W' }{\nu} )_{\nu \in \htmodetree \setminus \{ [N] \}}$, are in $\RR_\htmodetree$.
We prove that neither $ ( \rank \mat{ \W }{\nu} )_{\nu \in \htmodetree \setminus \{ [N] \}} \leq  ( \rank \mat{\W'}{\nu} )_{\nu \in \htmodetree \setminus \{ [N] \}}$ nor $( \rank \mat{ \W' }{\nu} )_{\nu \in \htmodetree \setminus \{ [N] \}} \leq  ( \rank \mat{\W}{\nu} )_{\nu \in \htmodetree \setminus \{ [N] \}}$ (with respect to the standard product partial order), by examining the matrix ranks of the matricizations of $\W$ and $\W'$ according to $\{ N - 2 \} \in \htmodetree$ and $\{ N - 1\} \in \htmodetree$ (recall that any mode tree has leaves $\{1\}, \ldots, \{N\}$).
For $\{ N - 2\}$, we have that $\rank \mat{\W}{\{N - 2\}} = 2$ whereas $\rank \mat{\W'}{\{ N - 2 \}} = 1$.
To see it is so, notice that:
\[
\mat{\W}{ \{N - 2\} }_{:2, :4} = \begin{bmatrix}
	1 & 0 & 1 & 0 \\
	0 & 1 & 0 & 1
\end{bmatrix} 
\quad , \quad
\mat{\W'}{\{N - 2\}}_{:2, :4} = \begin{bmatrix}
	1 & 0 & 0 & 1 \\
	1 & 0 & 0 & 1
\end{bmatrix} 
\text{\,,}
\]
and all other entries of $\mat{\W}{ \{N - 2 \} }$ and $\mat{\W'}{ \{N - 2\} }$ hold zero.
This means that $( \rank \mat{ \W }{\nu} )_{\nu \in \htmodetree \setminus \{ [N] \}} \leq  ( \rank \mat{\W'}{\nu})_{\nu \in \htmodetree \setminus \{ [N] \}}$ does not hold.
On the other hand, for $\{N - 1\}$ we have that $\rank \mat{\W}{\{N - 1\}} = 1$ while $\rank \mat{\W'}{\{N - 1\}} = 2$, because:
\[
\mat{\W}{ \{N - 1\}}_{:2, :4} = \begin{bmatrix}
	1 & 0 & 0 & 1 \\
	1 & 0 & 0 & 1
\end{bmatrix} 
\quad , \quad
\mat{\W'}{\{ N - 1\}}_{:2, :4} = \begin{bmatrix}
	1 & 0 & 1 & 0 \\
	0 & 1 & 0 & 1
\end{bmatrix} 
\text{\,,}
\]
and the remaining entries of $\mat{\W}{\{ N - 1\}}$ and $\mat{\W'}{\{ N - 1 \}}$ hold zero.
This implies that $( \rank \mat{ \W' }{\nu} )_{\nu \in \htmodetree \setminus \{ [N] \}} \leq  ( \rank \mat{\W}{\nu} )_{\nu \in \htmodetree \setminus \{ [N] \}}$ does not hold, and so the hierarchical tensor ranks of $\W$ and $\W'$ are incomparable, \ie~neither is smaller than or equal to the other.

It remains to show that there exists no 
\[
( R''_\nu )_{ \nu \in \htmodetree \setminus \{ [N] \} } \in \RR_\htmodetree \setminus \left \{ ( \rank \mat{\W}{\nu} )_{\nu \in \htmodetree \setminus \{ [N] \}},  ( \rank \mat{\W'}{\nu} )_{\nu \in \htmodetree \setminus \{ [N] \}} \right \}
\]
satisfying
\[
( R''_\nu )_{ \nu \in \htmodetree \setminus \{ [N] \} } \leq ( \rank \mat{ \W }{\nu} )_{\nu \in \htmodetree \setminus \{ [N] \}}
\]
or
\[
( R''_\nu )_{ \nu \in \htmodetree \setminus \{ [N] \} } \leq ( \rank \mat{ \W' }{\nu} )_{\nu \in \htmodetree \setminus \{ [N] \}}
\text{\,.}
\]
Assume by way of contradiction that there exists such $( R''_\nu )_{ \nu \in \htmodetree \setminus \{ [N] \} }$, and let $\W'' \in \R^{D_1 \times \cdots \times D_N}$ be a solution of this hierarchical tensor rank.
We now prove that \[
( R''_\nu )_{ \nu \in \htmodetree \setminus \{ [N] \} } \leq ( \rank \mat{ \W }{\nu} )_{\nu \in \htmodetree \setminus \{ [N] \}}
\]
entails a contradiction.
Since $( R''_\nu )_{ \nu \in \htmodetree \setminus \{ [N] \} }$ is not equal to the hierarchical tensor rank of $\W$, there exists $\nu \in \htmodetree \setminus \{[N]\}$ for which $\rank \mat{\W''}{\nu} = R''_\nu < \rank \mat{\W}{\nu}$.
Let us examine the possible cases:
\begin{itemize}
	\item If $\nu$ does not contain $N - 2, N - 1,$ and $N$, then $\rank \mat{\W}{\nu} = 1$ as all rows but the first of this matricization are zero.
	In this case $\mat{\W''}{\nu}  = R''_\nu = 0$, implying that $\W''$ is the zero tensor, in contradiction to it fitting the (non-zero) observed entries from \eq~\eqref{htf:eq:ht_multiple_minima_observations}.
	
	\item If $\nu$ contains $N$ but not $N-2$ and $N-1$, then $\rank \mat{\W}{\nu} = 2$ since:
	\[
	\mat{\W}{\nu}_{:2, :4} = \begin{bmatrix}
		1 & 1 & 0 & 0 \\
		0 & 0 & 1 & 1
	\end{bmatrix} 
	\text{\,,}
	\]
	and all other entries of $\mat{\W}{\nu}$ hold zero.
	In this case $\mat{\W''}{\nu}  = R''_\nu < 2$.
	However, the fact that $\W''$ fits the observed entries from \eq~\eqref{htf:eq:ht_multiple_minima_observations} leads to a contradiction, as $\mat{\W''}{\nu}$ must contain at least two linearly independent columns.
	To see it is so, notice that:
	\[
	\mat{\W^*}{\nu}_{:2, :4} = \begin{bmatrix}
		1 & ? & ? & 0 \\
		0 & ? & ? & 1
	\end{bmatrix} 
	\text{\,,}
	\]
	where recall that $?$ stands for an unobserved entry.
	
	\item If $\nu$ contains $N - 1$ but not $N - 2$ and $N$, then $\rank \mat{\W}{\nu} = 1$ since:
	\[
	\mat{\W}{\nu}_{:2, :4} = \begin{bmatrix}
		1 & 0 & 0 & 1 \\
		1 & 0 & 0 & 1
	\end{bmatrix} 
	\text{\,,}
	\]
	and all other entries of $\mat{\W}{\nu}$ hold zero.
	In this case $\mat{\W''}{\nu}  = R''_\nu = 0$, which means that $\W''$ is the zero tensor, in contradiction to it fitting the (non-zero) observed entries from \eq~\eqref{htf:eq:ht_multiple_minima_observations}.
	
	\item If $\nu$ contains $N - 2$ but not $N - 1$ and $N$, then $\rank \mat{\W}{\nu} = 2$ since:
	\[
	\mat{\W}{\nu}_{:2, :4} = \begin{bmatrix}
		1 & 0 & 1 & 0 \\
		0 & 1 & 0 & 1
	\end{bmatrix} 
	\text{\,,} 
	\]
	and all other entries of $\mat{\W}{\nu}$ hold zero.
	In this case $\mat{\W''}{\nu}  = R''_\nu  < 2$.
	Noticing that $\mat{\W''}{\{N - 2\}}_{:2, :4} = \mat{\W''}{\nu}_{:2, :4}$, and that entries of $\mat{\W''}{\{N - 2\}}$ outside its top $2$-by-$4$ submatrix hold zero, we get that $\mat{\W''}{\{N - 2\}} = \mat{\W''}{\nu} < 2$.
	Furthermore, from the assumption that $( R''_\nu )_{ \nu \in \htmodetree \setminus \{ [N] \} } \leq ( \rank \mat{ \W }{\nu} )_{\nu \in \htmodetree \setminus \{ [N] \}}$ and the previous three cases, we know that $R''_{\{n\}} = \mat{\W}{\{n\}} = 1$ for all $n \in [N - 3]$, $R''_{\{N\}} = \mat{\W}{\{N\}} = 2$, and $R''_{\{N - 1\}} = \mat{\W}{\{N - 1\}} = 1$.
	Any tensor $\V \in \R^{D_1 \times \cdots \times D_N}$ that satisfies $\rank \mat{\V}{\{n\}} \leq R_{\{n\}} \in \N$ for all $n \in [N]$ can be represented as:
	\[
	\V = \sum\nolimits_{r_1 = 1}^{R_{\{1\}}} \cdots \sum\nolimits_{r_N = 1}^{R_{\{N\}}} \mathcal{C}_{r_1, \ldots, r_N} \tenp_{n = 1}^N \Ubf^{(n)}_{:, r_n}
	\text{\,,}
	\]
	where $\mathcal{C} \in \R^{R_{\{1\}} \times \cdots \times R_{\{N\}}}$ and $\big ( \Ubf^{(n)} \in \R^{D_n \times R_{\{n\}}} \big )_{n = 1}^N$ (see, \eg,~Section 4 in~\cite{kolda2009tensor}).
	Thus, there exist $c_1, c_2 \in \R$, $\big ( \Ubf^{(n)} \in \R^{D_n \times 1} \big )_{n = 1}^{N - 1}$, and $\Ubf^{(N)} \in \R^{D_N \times 2}$ such that:
	\[
	\W'' = c_1 \cdot \bigl ( \tenp_{n = 1}^{N - 1} \Ubf^{(n)}_{:, 1} \bigr ) \tenp \Ubf^{(N)}_{:, 1} + c_2 \cdot \bigl ( \tenp_{n = 1}^{N - 1} \Ubf^{(n)}_{:, 1} \bigr ) \tenp \Ubf^{(N)}_{:, 2} 
	\text{\,.}
	\]
	By multilinearity of the tensor product, we may write: $\W'' = \bigl ( \tenp_{n = 1}^{N - 1} \Ubf^{(n)}_{:, 1} \bigr ) \tenp \big ( c_1 \cdot \Ubf^{(N)}_{:, 1} + c_2 \cdot \Ubf^{(N)}_{:2} \big )$, and so $\W''$ has tensor rank one (it can be represented as a single non-zero tensor product between vectors).
	Since the tensor rank of a given tensor upper bounds the ranks of its matricizations (Remark 6.21 in~\cite{hackbusch2012tensor}), $R''_{\{ n \}} = \rank \mat{ \W'' }{\{ n \}} = 1$ for all $n \in [N]$ (the matrix ranks of these matricizations cannot be zero as $\W''$ is not the zero tensor).
	Hence, we have arrived at a contradiction~---~$2 = R''_{ \{N\}} \leq 1$.
	
	\item Contradictions in the remaining cases, where $\nu$ contains $N - 2, N - 1,$ and $N$, or any two of them, readily follow from the previous cases due to the fact that $\mat{\V}{\nu} = \mat{\V}{[N] \setminus \nu}^\top$ for any tensor $\V \in \R^{D_1 \times \cdots \times D_N}$, and that the matrix rank of a matrix is equal to the matrix rank of its transpose.
	In particular, for any such $\nu$, it holds that $\rank \mat{ \W'' }{\nu} = \rank \mat{\W''}{[N] \setminus \nu}$ and $\rank \mat{\W}{\nu} = \rank \mat{\W}{[N] \setminus \nu }$.
	Therefore, if $\rank \mat{\W''}{\nu} = R''_\nu < \rank \mat{\W}{\nu}$, then $\rank \mat{\W''}{[N] \setminus \nu} < \rank \mat{\W}{[N] \setminus \nu}$.
	Since $\nu$ contains $N - 2, N - 1,$ and $N$, or any two of them, its complement $[N] \setminus \nu$ contains none or just one of them.
	Each of these scenarios was already covered in previous cases, which imply that $\rank \mat{\W''}{[N] \setminus \nu} < \rank \mat{\W}{[N] \setminus \nu}$ entails a contradiction.
\end{itemize}

In all cases, we have established that the existence of $( R''_\nu )_{ \nu \in \htmodetree \setminus \{ [N] \} } \in \RR_\htmodetree$, different from $( \rank \mat{\W}{\nu} )_{\nu \in \htmodetree \setminus \{ [N] \}}$ and $( \rank \mat{\W'}{\nu} )_{\nu \in \htmodetree \setminus \{ [N] \}}$, satisfying $( R''_\nu )_{ \nu \in \htmodetree \setminus \{ [N] \} } \leq ( \rank \mat{ \W }{\nu} )_{\nu \in \htmodetree \setminus \{ [N] \}}$ leads to a contradiction.
The claim for $\W'$, \ie~that there exists no such $( R''_\nu )_{ \nu \in \htmodetree \setminus \{ [N] \} }$ satisfying $( R''_\nu )_{ \nu \in \htmodetree \setminus \{ [N] \} } \leq ( \rank \mat{ \W' }{\nu} )_{\nu \in \htmodetree \setminus \{ [N] \}}$, is proven analogously.
Combined with the previous part of the proof, in which we established that neither $( \rank \mat{ \W }{\nu} )_{\nu \in \htmodetree \setminus \{ [N] \}}$ nor $( \rank \mat{ \W' }{\nu} )_{\nu \in \htmodetree \setminus \{ [N] \}}$ is smaller than or equal to the other, we conclude that $( \rank \mat{ \W }{\nu} )_{\nu \in \htmodetree \setminus \{ [N] \}}$ and $( \rank \mat{ \W' }{\nu} )_{\nu \in \htmodetree \setminus \{ [N] \}}$ are two different minimal elements of $\RR_\htmodetree$.
\qed

\subsection{Proof of Proposition~\ref{htf:prop:htf_cnn}}
\label{htf:app:proofs:htf_cnn}
For $l \in [L]$, the output of the $l$'th convolutional layer at index $n \in [N / P^{l - 1}]$ and channel $r \in [R_{l}]$ depends solely on inputs $\xbf^{( (n - 1) \cdot P^{l - 1} + 1 )}, \ldots ,\xbf^{(n \cdot P^{l - 1})}$.
Hence, we denote it by $conv_{l, n, r} \big ( \xbf^{( (n - 1) \cdot P^{l - 1} + 1 )}, \ldots ,\xbf^{(n \cdot P^{l - 1})} \big )$.
We may view the output linear layer as a $1 \times 1$ convolutional layer with a single output channel.
Accordingly, let $conv_{L + 1, 1, 1} \big ( \xbf^{(1)}, \ldots, \xbf^{(N)} \big ) := f_\theta \big ( \xbf^{(1)}, \ldots, \xbf^{(N)} \big )$ and $\htftensorpart{L +1, 1}{1} := \tensorend$.

We show by induction over the layer $l \in [L + 1]$ that for any $n \in [N / P^{l - 1}]$ and $r \in [R_{l}]$:
\be
conv_{l, n, r} \big ( \xbf^{( (n - 1) \cdot P^{l - 1} + 1 )}, \ldots ,\xbf^{(n \cdot P^{l - 1})} \big ) = \inprod{ \tenp_{p = (n -1 ) \cdot P^{l - 1} + 1}^{n \cdot P^{l - 1}} \xbf^{(p)} }{ \htftensorpart{l, n}{r} }
\text{\,.}
\label{htf:eq:htf_cnn_inductive_claim}
\ee
For $l = 1$, let $n \in [N]$ and $r \in [R_1]$.
From the definition of $\htftensorpart{1,n}{r}$ (\eq~\eqref{htf:eq:ht_pary_end_tensor}) we can see that:
\[
conv_{1, n, r} \big ( \xbf^{(n)} \big ) = \inprod{\xbf^{ (n)} }{ \weightmat{1, n}_{:, r} } = \inprod{ \xbf^{(n)} }{ \htftensorpart{1, n}{r} }
\text{\,.}
\]
Now, assuming that the inductive claim holds for $l - 1 \geq 1$, we prove that it holds for $l$.
Fix some $n \in [N / P^{l - 1}]$ and $r \in [R_{l}]$.
The $l$'th convolutional layer is applied to the output of the $l - 1$'th hidden layer, denoted $\big ( \hbf^{(l - 1, 1)}, \ldots, \hbf^{(l - 1, N / P^{l - 1})} \big ) \in \R^{R_{l - 1}} \times \cdots \times \R^{R_{l - 1}}$.
Each $\hbf^{(l-1, n)}$, for $n \in [N / P^{l - 1}]$, is a result of the product pooling operation (with window size $P$) applied to the output of the $l-1$'th convolutional layer. 
Thus:
\[
\begin{split}
	& conv_{l, n, r} \big ( \xbf^{( (n - 1) \cdot P^{l - 1} + 1 )}, \ldots ,\xbf^{(n \cdot P^{l - 1})} \big ) \\
	& \hspace{4.5mm} = \sum_{r' = 1}^{R_{l - 1}} \weightmat{l, n}_{r', r} \cdot \hbf^{(l - 1, n)}_{r'} \\
	& \hspace{4.5mm} = \sum_{r' = 1}^{R_{l - 1}} \weightmat{l, n}_{r', r}  \cdot \,\,  \prod_{\mathclap{p = (n - 1) \cdot P + 1}}^{n \cdot P} \,\,\, conv_{l - 1, p. r'} \big ( \xbf^{ ((p - 1) \cdot P^{l - 2} + 1) }, \ldots, \xbf^{ (p \cdot P^{l - 2}) } \big )
	\text{\,.}
\end{split}
\]
The inductive assumption for $l - 1$ then implies that:
\[
\begin{split}
& conv_{l, n, r} \big ( \xbf^{( (n - 1) \cdot P^{l - 1} + 1 )}, \ldots ,\xbf^{(n \cdot P^{l - 1})} \big ) \\
& \hspace{4.5mm} = \sum_{r' = 1}^{R_{l - 1}} \weightmat{l, n}_{r', r} \cdot \,\, \prod_{\mathclap{p = (n - 1) \cdot P + 1}}^{n \cdot P} \,\,\, \inprod{ \tenp_{n' = (p - 1) \cdot P^{l - 2} + 1}^{p \cdot P^{l - 2}} \xbf^{ (n') } }{ \htftensorpart{l - 1, p }{r'} }
\text{\,.}
\end{split}
\]
For any tensors $\A, \A', \B, \B'$ such that $\A$ is of the same dimensions as $\A'$ and $\B$ is of the same dimensions as $\B'$, it holds that $\inprod{ \A \tenp \B}{ \A' \tenp \B' } = \inprod{ \A }{ \A' } \cdot \inprod{ \B }{ \B' }$.
We may therefore write:
\[
\begin{split}
	& conv_{l, n, r} \big ( \xbf^{( (n - 1) \cdot P^{l - 1} + 1 )}, \ldots ,\xbf^{(n \cdot P^{l - 1})} \big ) \\[1mm]
	&\quad\quad = \sum\nolimits_{r' = 1}^{R_{l - 1}} \weightmat{l, n}_{r', r} \cdot \inprod{ \tenp_{p = (n - 1) \cdot P + 1}^{n \cdot P} \left ( \tenp_{n' = (p - 1) \cdot P^{l - 2} + 1}^{p \cdot P^{l - 2}} \xbf^{ (n') } \right ) }{ \tenp_{p = (n - 1) \cdot P + 1}^{n \cdot P} \htftensorpart{l - 1, p }{r'} } \\[1mm]
	&\quad\quad = \sum\nolimits_{r' = 1}^{R_{l - 1}} \weightmat{l, n}_{r', r} \cdot \inprod{ \tenp_{p = (n - 1) \cdot P^{l - 1} + 1}^{n \cdot P^{l - 1}} \xbf^{ (p) } }{ \tenp_{p = (n - 1) \cdot P + 1}^{n \cdot P} \htftensorpart{l - 1, p }{r'} } \\[1mm]\
	&\quad\quad = \inprod{ \tenp_{p = (n - 1) \cdot P^{l - 1} + 1}^{n \cdot P^{l - 1}} \xbf^{ (p) } }{ \sum\nolimits_{r' = 1}^{R_{l - 1}} \weightmat{l, n}_{r', r}  \left [ \tenp_{p = (n - 1) \cdot P + 1}^{n \cdot P} \htftensorpart{l - 1, p }{r'} \right ] }
	\text{\,.}
\end{split}
\]
Noticing that $\htftensorpart{l,n}{r} = \sum\nolimits_{r' = 1}^{R_{l - 1}} \weightmat{l, n}_{r', r} \big [ \tenp_{p = (n - 1) \cdot P + 1}^{n \cdot P} \htftensorpart{l - 1, p }{r'} \big ]$ (\eq~\eqref{htf:eq:ht_pary_end_tensor}) establishes \eq~\eqref{htf:eq:htf_cnn_inductive_claim}.

Applying the inductive claim for $l = L + 1, n = 1,$ and $r = 1$, while recalling that $L = \log_P N$, yields:
\[
\begin{split}
f_\theta \big ( \xbf^{(1)}, \ldots, \xbf^{(N)} \big ) & = conv_{L + 1, 1, 1} \big ( \xbf^{(1)}, \ldots, \xbf^{(N)} \big ) \\
& = \inprod{ \tenp_{n = 1}^N \xbf^{(n)} }{ \htftensorpart{L+1, 1}{1} } \\
& = \inprod{ \tenp_{n = 1}^N \xbf^{(n)} }{ \tensorend }
\text{\,.}
\end{split}
\]
\qed

\subsection{Proof of Theorem~\ref{htf:thm:loc_comp_norm_unbal_dyn}}
\label{htf:app:proofs:loc_comp_norm_unbal_dyn}
Let $t \geq 0$ be a time at which $\htfcompnorm{\nu}{r} (t) := \normnoflex{\tenp_{\wbf \in \localcomp (\nu, r)} \wbf (t) } = \prod\nolimits_{ \wbf \in \localcomp (\nu, r) } \norm{ \wbf (t) } > 0$.
We differentiate $\htfcompnorm{\nu}{r} (t)^2$ with respect to time as done in the proof of \thm~\ref{htf:thm:loc_comp_norm_bal_dyn} (\subapp~\ref{htf:app:proofs:loc_comp_norm_bal_dyn}).
From the product rule and \lem~\ref{htf:lem:weightvec_sq_norm_time_deriv} we get that:
\[
\frac{d}{dt} \htfcompnorm{\nu}{r} (t)^2 = 2 \htfcompnorm{\nu}{r} (t) \inprod{ - \nabla \htfendloss ( \tensorend (t)) }{ \htfcomp{\nu}{r} (t) } \sum\nolimits_{ \wbf \in \localcomp (\nu, r) } \prod\nolimits_{ \wbf' \in \localcomp (\nu, r) \setminus \{ \wbf \} } \norm{ \wbf' (t) }^2
\text{\,.}
\]
Since according to the chain rule $\frac{d}{dt} \htfcompnorm{\nu}{r} (t)^2 = 2 \htfcompnorm{\nu}{r} (t) \cdot \frac{d}{dt} \htfcompnorm{\nu}{r} (t)$, the equation above leads to:
\be
\frac{d}{dt} \htfcompnorm{\nu}{r} (t) = \inprod{ - \nabla \htfendloss ( \tensorend (t)) }{ \htfcomp{\nu}{r} (t) } \sum\nolimits_{ \wbf \in \localcomp (\nu, r) } \prod\nolimits_{ \wbf' \in \localcomp (\nu, r) \setminus \{ \wbf \} } \norm{ \wbf' (t) }^2
\text{\,.}
\label{htf:eq:localcompnorm_time_deriv_interm_unbal}
\ee
By \lem~\ref{htf:lem:loc_comp_sq_norm_diff_invariant}, the unbalancedness magnitude is constant through time, and so it remains equal to $\epsilon$~---~its value at initialization.
Hence, for any $\wbf \in \localcomp (\nu , r)$:
\be
\norm{ \wbf (t) }^2 \leq \min_{\wbf' \in \localcomp (\nu, r) } \norm{ \wbf' (t) }^2 + \epsilon = \Big ( \min_{\wbf' \in \localcomp (\nu, r) } \norm{ \wbf' (t) } \Big )^{L_\nu \cdot \frac{2}{L_\nu} } + \epsilon \leq \htfcompnorm{\nu}{r} (t)^{ \frac{2}{ L_\nu } } + \epsilon
\text{\,.}
\label{htf:eq:sq_weightvec_norm_up_local_comp_norm}
\ee

If $\inprodbig{- \nabla \htfendloss ( \tensorend (t) ) }{ \htfcomp{\nu}{r} (t) } \geq 0$,
applying the inequality above to each $\norm{ \wbf' (t) }^2$ in \eq~\eqref{htf:eq:localcompnorm_time_deriv_interm_unbal} yields the upper bound from \eq~\eqref{htf:eq:loc_comp_norm_unbal_pos_bound}:
\[
\begin{split}
	\frac{d}{dt} \htfcompnorm{\nu}{r} (t) & \leq \inprod{ - \nabla \htfendloss ( \tensorend (t)) }{ \htfcomp{\nu}{r} (t) } \sum\nolimits_{ \wbf \in \localcomp (\nu, r) } \prod\nolimits_{ \wbf' \in \localcomp (\nu, r) \setminus \{ \wbf \} } \left ( \htfcompnorm{\nu}{r} (t)^{ \frac{2}{ L_\nu } } + \epsilon \right ) \\
	& =  \left ( \htfcompnorm{\nu}{r} (t)^{\frac{2}{ L_\nu } } + \epsilon \right )^{ L_\nu - 1 } \cdot L_\nu \inprodbig{- \nabla \htfendloss ( \tensorend (t) ) }{ \htfcomp{\nu}{r} (t) }
	\text{\,.}
\end{split}
\]
To prove the lower bound from \eq~\eqref{htf:eq:loc_comp_norm_unbal_pos_bound}, we multiply and divide each summand on the right hand side of \eq~\eqref{htf:eq:localcompnorm_time_deriv_interm_unbal} by the corresponding $\norm{\wbf (t) }^2$ (non-zero because $\htfcompnorm{\nu}{r} (t) > 0$), \ie:
\[
\begin{split}
	\frac{d}{dt} \htfcompnorm{\nu}{r} (t)& = \inprod{ - \nabla \htfendloss ( \tensorend (t)) }{ \htfcomp{\nu}{r} (t) } \sum\nolimits_{ \wbf \in \localcomp (\nu, r) } \norm{ \wbf (t) }^{-2} \cdot \prod\nolimits_{ \wbf' \in \localcomp (\nu, r) } \norm{ \wbf' (t) }^2 \\
	& = \inprod{ - \nabla \htfendloss ( \tensorend (t)) }{ \htfcomp{\nu}{r} (t) } \htfcompnorm{\nu}{r} (t) \cdot \sum\nolimits_{ \wbf \in \localcomp (\nu, r) } \norm{ \wbf (t) }^{-2}
	\text{\,.}
\end{split}
\]
By \eq~\eqref{htf:eq:sq_weightvec_norm_up_local_comp_norm} we know that $\norm{ \wbf (t) }^{-2} \geq \big ( \htfcompnorm{\nu}{r} (t)^{ \frac{2}{ L_\nu} } + \epsilon \big )^{-1}$.
Thus, applying this inequality to the equation above establishes the desired lower bound.

If $\inprodbig{- \nabla \htfendloss ( \tensorend (t) ) }{ \htfcomp{\nu}{r} (t) } <  0$, the upper and lower bounds in \eq~\eqref{htf:eq:loc_comp_norm_unbal_neg_bound} readily follow by similar derivations, where the difference in the direction of inequalities is due to the negativity of $\inprodbig{- \nabla \htfendloss ( \tensorend (t) ) }{ \htfcomp{\nu}{r} (t) }$.
\qed

\subsection{Proof of Proposition~\ref{htf:prop:matrank_eq_seprank}}
\label{htf:app:proofs:matrank_eq_seprank}
We partition the proof into two parts: the first shows that $\rank \mat{\tensorend}{I} \geq \htfseprank (f_\Theta ; I)$, and the second establishes the converse.

\textbf{Proof of lower bound ($\rank \mat{\tensorend}{I} \geq \htfseprank (f_\Theta ; I)$).}~~Denote $R := \rank \mat{ \tensorend}{I}$, and assume without loss of generality that $I = [\abs{I}]$.
Since $\mat {\tensorend}{I}$ is a rank $R$ matrix, there exist $\vbf^{(1)}, \ldots, \vbf^{(R)} \in \R^{\prod_{n = 1}^{\abs{I}} D_n}$ and $\bar{\vbf}^{(1)}, \ldots, \bar{\vbf}^{(R)} \in \R^{ \prod_{n = \abs{I} + 1 }^{ N } D_n }$ such that:
\[
\mat{ \tensorend }{I} = \sum\nolimits_{r = 1}^R \vbf^{(r)} \big ( \bar{\vbf}^{(r)}\big )^\top
\text{\,.}
\]
For each $r \in [R]$, let $\V^{(r)} \in \R^{D_1 \times \cdots \times D_{ \abs{I} } }$ be the tensor whose arrangement as a column vector is equal to $\vbf^{(r)}$, \ie~$\matbig{ \V^{(r)} }{I} = \vbf^{(r)}$.
Similarly, for every $r \in [R]$ let $\widebar{\V}^{(r)} \in \R^{D_{ \abs{I} + 1} \times \cdots \times D_N }$ be the tensor whose arrangement as a row vector is equal to $( \bar{\vbf}^{(r)} )^\top$, \ie~$\matbig{ \widebar{\V}^{(r)} }{\emptyset} = ( \bar{\vbf}^{(r)} )^\top$.
Then:
\[
\begin{split}
	\mat{ \tensorend }{I} &= \sum\nolimits_{r = 1}^R \vbf^{(r)}  \big ( \bar{\vbf}^{(r)} \big )^\top \\
	& = \sum\nolimits_{r = 1}^R \mat{ \V^{(r)} }{I} \kronp \mat{ \widebar{\V}^{(r)} }{\emptyset} \\
	& = \sum\nolimits_{r = 1}^R \mat { \V^{(r)} \tenp \widebar{\V}^{(r)} }{I} \\
	& = \mat{ \sum\nolimits_{r = 1}^R \V^{(r)} \tenp \widebar{\V}^{(r)} }{I}
	\text{\,,}
\end{split}
\]
where the third equality makes use of \lem~\ref{htf:lem:tenp_eq_kronp}, and the last equality is by linearity of the matricization operator.
Since matricizations merely reorder the entries of tensors, the equation above implies that $\tensorend = \sum_{r = 1}^R \V^{(r)} \tenp \widebar{\V}^{(r)}$.
We therefore have that:
\[
\begin{split}
	f_\Theta \big ( \xbf^{(1)}, \ldots, \xbf^{(N)} \big ) & = \inprod{ \tenp_{n = 1}^N \xbf^{(n)}  }{ \tensorend } \\
	& = \inprodBig{ \tenp_{n = 1}^N \xbf^{(n)}  }{ \sum\nolimits_{r = 1}^R \V^{(r)} \tenp \widebar{\V}^{(r)} } \\
	& = \sum\nolimits_{r = 1}^R \inprod{ \tenp_{n = 1}^N \xbf^{(n)}  }{ \V^{(r)} \tenp \widebar{\V}^{(r)} }
	\text{\,.}
\end{split}
\]
For any $\A, \A' \in \R^{D_1 \times \cdots \times D_{ \abs{I} } }$ and $\B, \B' \in \R^{ D_{ \abs{I} + 1 } \times \cdots \times D_N }$ it holds that $\inprod{ \A \tenp \B}{ \A' \tenp \B' } = \inprod{ \A }{ \A' } \cdot \inprod{ \B }{ \B' }$.
Thus:
\[
\begin{split}
f_\Theta \big ( \xbf^{(1)}, \ldots, \xbf^{(N)} \big ) & = \sum\nolimits_{r = 1}^R \inprod{ \tenp_{n = 1}^N \xbf^{(n)}  }{ \V^{(r)} \tenp \widebar{\V}^{(r)} } \\
& = \sum\nolimits_{r = 1}^R \inprod{ \tenp_{n = 1}^{ \abs{I} } \xbf^{ (n) } }{ \V^{(r)} } \cdot \inprod{ \tenp_{n = \abs{I}  + 1}^{ N } \xbf^{ (n) } }{ \widebar{\V}^{(r)} }
\text{\,.}
\end{split}
\]
By defining $g_r : \times_{n =1}^{ \abs{I} } \R^{D_n} \to \R$ and $\bar{g}_r :\times_{n = \abs{I} + 1}^{N} \R^{D_n} \to \R$, for $r \in [R]$, as:
\[
g_r \big ( \xbf^{(1)}, \ldots, \xbf^{(\abs{I} )} \big ) = \inprod{ \tenp_{n = 1}^{ \abs{I} } \xbf^{ (n) } }{ \V^{(r)} } ~~ , ~~ \bar{g}_r \big ( \xbf^{ ( \abs{I} + 1 ) }, \ldots, \xbf^{(N)} \big ) = \inprod{ \tenp_{n = \abs{I}  + 1}^{ N } \xbf^{ (n) } }{ \widebar{\V}^{(r)} }
\text{\,,}
\]
we arrive at the following representation of $f_\Theta$ as a sum, where each summand is a product of two functions --- one that operates over inputs indexed by $I$ and another that operates over inputs indexed by $[N] \setminus I$:
\[
f_\Theta \big ( \xbf^{(1)}, \ldots, \xbf^{(N)} \big ) = \sum\nolimits_{r = 1}^R g_r \big ( \xbf^{(1)}, \ldots, \xbf^{ ( \abs{I} ) } \big ) \cdot \bar{g}_r \big ( \xbf^{( \abs{I} + 1 )}, \ldots, \xbf^{ ( N ) } \big )
\text{\,.}
\]
Since the separation rank of $f_\Theta$ is the minimal number of summands required to express it in such a manner, we conclude that $\rank \mat{\tensorend}{I} = R \geq \htfseprank (f_\Theta ; I)$.

\textbf{Proof of upper bound ($\rank \mat{\tensorend}{I} \leq \htfseprank (f_\Theta ; I)$).}~~Towards proving the upper bound, we establish the following lemma.
\begin{lemma}
	\label{htf:lem:grid_tensor_mat_rank_ub_by_sep_rank}
	Given $f : \times_{n = 1}^N \R^{D_n} \to \R$ and any $\big ( \xbf^{( 1, h_1 )} \in \R^{D_1} \big )_{h_1 = 1}^{H_1}, \ldots, \big ( \xbf^{ (N, h_N) } \in \R^{D_N} \big )_{h_N = 1}^{H_N}$, let $\W \in \R^{H_1 \times \cdots \times H_N}$ be the tensor defined by $\W_{h_1, \ldots, h_N} := f \big ( \xbf^{ (1, h_1) }, \ldots, \xbf^{ (N, h_N) }\big )$ for all $(h_1, \ldots, h_N) \in [H_1] \times \cdots \times [H_N]$.
	Then, for any $I \subset [N]$:
	\[
	\rank \mat{\W}{I} \leq \htfseprank (f ; I)
	\text{\,.}
	\]
	In words, for any tensor holding the outputs of $f$ over a grid of inputs, the rank of its matricization according to $I$ is upper bounded by the separation rank of $f$ with respect to $I$.
\end{lemma}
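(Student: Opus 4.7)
The plan is to prove \lem~\ref{htf:lem:grid_tensor_mat_rank_ub_by_sep_rank} directly from \defin~\ref{htf:def:sep_rank}, then apply it to a carefully chosen grid to complete the desired upper bound $\rank \mat{\tensorend}{I} \leq \htfseprank(f_\Theta; I)$. If $\htfseprank(f; I) = \infty$, the lemma is vacuous, so I assume $R := \htfseprank(f; I)$ is finite, and fix functions $g_1, \ldots, g_R$ and $\bar{g}_1, \ldots, \bar{g}_R$ satisfying \eq~\eqref{gnn:eq:sep_rank}. For each $r \in [R]$, I will define $\V^{(r)}$ to be the $\abs{I}$-dimensional array whose $(h_i)_{i \in I}$ entry is $g_r\bigl((\xbf^{(i, h_i)})_{i \in I}\bigr)$, and analogously $\widebar{\V}^{(r)}$ of complementary order with entries $\bar{g}_r\bigl((\xbf^{(j, h_j)})_{j \in [N] \setminus I}\bigr)$.

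Evaluating the separable decomposition of $f$ at the grid points, each entry of $\W$ decomposes as $\W_{h_1, \ldots, h_N} = \sum_{r=1}^R \V^{(r)}_{(h_i)_{i \in I}} \cdot \widebar{\V}^{(r)}_{(h_j)_{j \in [N] \setminus I}}$, which one recognizes (up to mode reordering) as the entry-wise form of the identity $\pi\bigl(\sum_{r=1}^R \V^{(r)} \tenp \widebar{\V}^{(r)}\bigr) = \W$, where $\pi$ is the mode-permutation placing the modes of $\V^{(r)}$ at positions indexed by $I$. Applying the matricization operator according to $I$, using linearity and \lem~\ref{htf:lem:tenp_eq_kronp} (which turns $\tenp$ into $\kronp$ when exactly one factor supplies the row modes and the other supplies the column modes), yields $\mat{\W}{I} = \sum_{r=1}^R \vectnoflex{\V^{(r)}} \cdot \vectnoflex{\widebar{\V}^{(r)}}^\top$. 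This expresses $\mat{\W}{I}$ as a sum of $R$ rank-one matrices, so $\rank \mat{\W}{I} \leq R = \htfseprank(f; I)$, completing the lemma.

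To close out the upper bound portion of \prop~\ref{htf:prop:matrank_eq_seprank}, I invoke the lemma with the canonical choice of templates: for each $n \in [N]$ and each $h_n \in [D_n]$, set $\xbf^{(n, h_n)} := \ebf_{h_n}$, the $h_n$'th standard basis vector of $\R^{D_n}$. Under this choice, $f_\Theta\bigl(\ebf_{h_1}, \ldots, \ebf_{h_N}\bigr) = \inprod{\tenp_{n=1}^N \ebf_{h_n}}{\tensorend} = \tensorend_{h_1, \ldots, h_N}$, so the resulting grid tensor equals $\tensorend$ itself. Plugging into the lemma gives $\rank \mat{\tensorend}{I} \leq \htfseprank(f_\Theta; I)$, which combined with the lower bound from the first half of the proof yields equality.

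The main obstacle is bookkeeping rather than substance: when $I$ is not of the form $\{1, \ldots, \abs{I}\}$ the indices of $\V^{(r)}$ and $\widebar{\V}^{(r)}$ must be interleaved according to $I$ before matricizing, and one must verify that the mode permutation does not alter the rank of $\mat{\W}{I}$ (it doesn't, since matricization is defined precisely so that rows correspond to modes indexed by $I$, irrespective of their positions in the original tensor). Once this is acknowledged, the argument reduces to the clean separable-to-low-rank correspondence already exploited in the lower-bound direction, just run in reverse.
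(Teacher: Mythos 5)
Your proof is correct and follows essentially the same route as the paper's: evaluate the separable decomposition of $f$ on the grid to write $\W$ as a sum of $R$ tensor products $\V^{(r)} \tenp \widebar{\V}^{(r)}$, matricize according to $I$ to obtain a sum of $R$ rank-one matrices, and conclude by sub-additivity of rank. The paper sidesteps your interleaving concern by assuming WLOG $I = [\abs{I}]$, but your explicit handling of the mode permutation is equally valid.
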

\begin{proof}
	If $\htfseprank (f ; I)$ is $\infty$ or zero, \ie~$f$ cannot be represented as a finite sum of separable functions (with respect to $I$) or is identically zero, then the claim is trivial. 
	Otherwise, denote $R := \htfseprank (f; I)$, and assume without loss of generality that $I = [\abs{I}]$.
	Let $g_1, \ldots, g_R : \times_{n =1}^{ \abs{I} } \R^{D_n} \to \R$ and $\bar{g}_1, \ldots, \bar{g}_R :\times_{n = \abs{I} + 1}^{N} \R^{D_n} \to \R$ such that:
	\be
	f \big ( \xbf^{(1)}, \ldots, \xbf^{(N)} \big ) = \sum\nolimits_{r = 1}^R g_r \big ( \xbf^{(1)}, \ldots, \xbf^{ ( \abs{I} ) } \big ) \cdot \bar{g}_r \big ( \xbf^{( \abs{I} + 1 )}, \ldots, \xbf^{ ( N ) } \big )
	\text{\,.}
	\label{htf:eq:grid_tensor_mat_ub_by_sep_rank:sep_rank}
	\ee
	We define $\big ( \V^{(r)} \in \R^{D_1 \times \cdots \times D_{\abs{I}} } \big )_{r = 1}^R$ to be the tensors holding the outputs of $( g_r )_{r = 1}^R$ over the grid of inputs
	\[
	\big ( \xbf^{( 1, h_1 )} \big )_{h_1 = 1}^{H_1}, \ldots, \big ( \xbf^{ (\abs{I} , h_{ \abs{I} } ) } \big )_{h_{ \abs{I} } = 1}^{ H_{ \abs{I} } }
	\text{\,,}
	\]
	\ie~for all $h_1, \ldots, h_{\abs{I}} \in [H_1] \times \cdots \times [ H_{ \abs{I} } ]$ and $r \in [R]$ it holds that $\V^{(r)}_{ h_1, \ldots, h_{\abs{I}} } = g_r \big ( \xbf^{(1, h_1)}, \ldots, \xbf^{(\abs{I}, h_{ \abs{I} } )} \big )$.
	Similarly, we let $\big ( \widebar{\V}^{(r)} \in \R^{D_{\abs{I} + 1} \times \cdots \times D_{N} } \big )_{r = 1}^R$ be the tensors holding the outputs of $( \bar{g}_r )_{r = 1}^R$ over their respective grid of inputs, \ie~for all $h_{\abs{I} + 1}, \ldots, h_N \in [H_{ \abs{I} + 1}] \times \cdots \times [H_N]$ and $r \in [R]$ it holds that $\widebar{\V}^{(r)}_{ h_{\abs{I} + 1}, \ldots, h_N  } = \bar{g}_r \big ( \xbf^{ ( \abs{I} + 1, h_{\abs{I} + 1} )}, \ldots, \xbf^{ ( N, h_N ) } \big )$.
	
	By \eq~\eqref{htf:eq:grid_tensor_mat_ub_by_sep_rank:sep_rank} and the definitions of $\W, ( \V^{(r)} )_{r = 1}^R$, and $( \widebar{\V}^{(r)} )_{r = 1}^R$, we have that for any $h_1, \ldots, h_N \in [H_1] \times \cdots \times [H_N]$:
	\[
	\begin{split}
		\W_{h_1 ,\ldots, h_N} & = f \big ( \xbf^{ (1, h_1) }, \ldots, \xbf^{ (N, h_N) }\big ) \\
		& = \sum\nolimits_{r = 1}^R g_r \big ( \xbf^{(1, h_1)}, \ldots, \xbf^{ ( \abs{I}, h_{ \abs{I} } ) } \big ) \cdot \bar{g}_r \big ( \xbf^{( \abs{I} + 1, h_{\abs{I} + 1} )}, \ldots, \xbf^{ ( N, h_N ) } \big )\\
		& =  \sum\nolimits_{r = 1}^R \V^{(r)}_{h_1, \ldots, h_{ \abs{I} }} \cdot \widebar{\V}^{(r)}_{ h_{ \abs{I} +  1}, \ldots, h_N }
		\text{\,,}
	\end{split}
	\]
	which means that $\W = \sum_{r = 1}^R \V^{(r)} \tenp \widebar{\V}^{(r)}$.
	From the linearity of the matricization operator and \lem~\ref{htf:lem:tenp_eq_kronp} we then get that $\mat{\W}{I} = \sum_{ r = 1}^R \mat{\V^{(r)}}{I} \kronp \mat{ \widebar{\V}^{(r)} }{\emptyset}$.
	Since $\mat{\V^{(r)}}{I}$ is a column vector and $\mat{ \widebar{\V}^{(r)} }{\emptyset}$ is a row vector for all $r \in [R]$, we have arrived at a representation of $\mat{\W}{I}$ as a sum of $R$ tensor products between two vectors.
	A tensor product of two vectors is a rank one matrix, and so, due to the sub-additivity of rank we conclude: $\rank \mat{\W}{I} \leq R = \htfseprank (f; I)$.
\end{proof}

\medskip 

Now, consider the grid of inputs defined by the standard bases of $\R^{D_1}, \ldots, R^{D_N}$, \ie~by:
\[
\big ( \ebf^{( 1, d_1 )} \in \R^{D_1} \big )_{d_1 = 1}^{D_1}, \ldots, \big ( \ebf^{ (N, d_N) } \in \R^{D_N} \big )_{d_N = 1}^{D_N} 
\text{\,,}
\]
where $\ebf^{(n, d_n)}$ is the vector holding one at its $d_n$'th entry and zero elsewhere for $n \in [N]$ and $d_n \in [D_n]$.
With Lemma~\ref{htf:lem:grid_tensor_mat_rank_ub_by_sep_rank} in hand, $\rank \mat{\tensorend}{I} \leq \htfseprank (f_\Theta ; I)$ follows by showing that $\tensorend$ is the tensor holding the outputs of $f_\Theta$ over this grid of inputs.
Indeed, for all $d_1, \ldots, d_N \in [D_1] \times \cdots \times [D_N]$:
\[
f_\Theta \big ( \ebf^{(1, d_1)}, \ldots, \ebf^{(N, d_N)} \big ) = \inprodbig{ \tenp_{n = 1}^N \ebf^{(n, d_n)}  }{ \tensorend } = (\tensorend)_{d_1, \ldots, d_N}
\text{\,.}
\]
\qed

\chapter{On the Ability of Graph Neural Networks to Model Interactions \\ Between Vertices} 
\label{app:gnn_interactions}

\section{Tightness of Upper Bounds for Separation Rank}
\label{gnn:app:sep_rank_examples}

\cref{gnn:thm:sep_rank_upper_bound} upper bounds the separation rank with respect to $\I \subseteq \vertices$ of a depth $L$ GNN with product aggregation.
According to it, under the setting of graph prediction, the separation rank is largely capped by the $(L-1)$-walk index of $\I$, \ie~the number of length $L - 1$ walks from $\cut_\I$~---~the set of vertices with an edge crossing the partition $(\I, \I^c)$.
Similarly, for prediction over $t \in \vertices$, separation rank is largely capped by the $(L - 1, t)$-walk index of $\I$, which takes into account only length $L - 1$ walks from $\cut_\I$ ending at~$t$.
\cref{gnn:thm:sep_rank_lower_bound} provides matching lower bounds, up to logarithmic terms and to the number of walks from $\cut_\I$ being replaced with the number of walks from any single admissible subset $\cut \in \cutset (\I)$ (\cref{gnn:def:admissible_subsets}).
Hence, the match between the upper and lower bounds is determined by the portion of $\cut_\I$ that can be covered by an admissible subset.

In this appendix, to shed light on the tightness of the upper bounds, we present several concrete examples on which a significant portion of $\cut_\I$ can be covered by an admissible subset.

\textbf{Complete graph.}~~Suppose that every two vertices are connected by an edge, \ie~$\edges = \brk[c]{ \{i, j\} : i,j \in \vertices }$.
For any non-empty $\I \subsetneq \vertices$, clearly $\cut_\I = \neigh (\I) \cap \neigh (\I^c) = \vertices$ .
In this case, $\cut_\I = \vertices \in \cutset (\I)$, meaning $\cut_\I$ is an admissible subset of itself.
To see it is so, notice that for any $i \in \I, j \in \I^c$, all vertices are neighbors of both $\I' := \{ i \}$ and $\J' := \{ j\}$, which trivially have no repeating shared neighbors (\cref{gnn:def:no_rep_neighbors}).
Thus, up to a logarithmic factor, the upper and lower bounds from~\cref{gnn:thm:sep_rank_upper_bound,gnn:thm:sep_rank_lower_bound} coincide.

\textbf{Chain graph.}~~Suppose that $\edges = \brk[c]{ \{ i, i + 1\} : i \in [\abs{\vertices} - 1] } \cup \{ \{ i, i \} : i \in \vertices \}$.
For any non-empty $\I \subsetneq \vertices$, at least half of the vertices in $\cut_\I$ can be covered by an admissible subset.
That is, there exists $\cut \in \cutset (\I)$ satisfying $\abs{ \cut } \geq 2^{-1} \cdot \abs{ \cut_\I}$.
For example, such $\cut$ can be constructed algorithmically as follows.
Let $\I', \J' = \emptyset$.
Starting from $k = 1$, if $\{k, k + 1\} \subseteq \cut_\I$ and one of $\{ k , k+ 1\}$ is in $\I$ while the other is in $\I^c$, then assign $\I' \leftarrow \I' \cup \brk{ \{k, k + 1\} \cap \I }$, $\J' \leftarrow \J' \cup \brk{ \{k, k + 1\} \cap \I^c }$, and $k \leftarrow k + 3$.
That is, add each of $\{ k, k + 1\}$ to either $\I'$ if it is in $\I$ or $\J'$ if it is in $\I^c$, and skip vertex $k + 2$.
Otherwise, set $k \leftarrow k + 1$.
The process terminates once $k > \abs{\vertices} - 1$.
By construction, $\I' \subseteq \I$ and $\J' \subseteq \I^c$, implying that $\neigh (\I') \cap \neigh (\J') \subseteq \cut_\I$.
Due to the chain graph structure, $\I' \cup \J' \subseteq \neigh (\I') \cap \neigh (\J')$ and $\I'$ and $\J'$ have no repeating shared neighbors (\cref{gnn:def:no_rep_neighbors}).
Furthermore, for every pair of vertices from $\cut_\I$ added to $\I'$ and $\J'$, we can miss at most two other vertices from $\cut_\I$.
Thus, $\cut := \neigh (\I') \cap \neigh (\J')$ is an admissible subset of $\cut_\I$ satisfying $\abs{\cut} \geq 2^{-1} \cdot \abs{\cut_\I}$.

\textbf{General graph.}~~For an arbitrary graph and non-empty $\I \subsetneq \vertices$, an admissible subset of $\cut_\I$ can be obtained by taking any sequence of pairs $(i_1, j_1), \ldots, (i_M, j_M) \in \I \times \I^c$ with no shared neighbors, in the sense that $\brk[s]{ \neigh (i_m) \cup \neigh (j_m) } \cap \brk[s] { \neigh (i_{m'}) \cup \neigh (j_{m'}) } = \emptyset$ for all $m \neq m' \in [M]$.
Defining $\I' := \brk[c]{ i_1, \ldots, i_M }$ and $\J' := \brk[c]{ j_1, \ldots, j_M}$, by construction they do not have repeating shared neighbors (\cref{gnn:def:no_rep_neighbors}), and so $\neigh (\I') \cap \neigh (\J') \in \cutset (\I)$.
In particular, the shared neighbors of each pair are covered by $\neigh (\I') \cap \neigh (\J')$, \ie~$\cup_{m = 1}^M \neigh (i_m) \cap \neigh (j_m) \subseteq \neigh (\I') \cap \neigh (\J')$.

\section{Extension of Analysis to Directed Graphs With Multiple Edge Types}
\label{gnn:app:extensions}

In this appendix, we generalize the separation rank bounds from~\cref{gnn:thm:sep_rank_upper_bound,gnn:thm:sep_rank_lower_bound} to directed graphs with multiple edge types.

Let $\graph = (\vertices, \edges, \edgetypemap)$ be a directed graph with vertices $\vertices = [\abs{\vertices}]$, edges $\edges \subseteq \{ (i, j) : i, j \in \vertices\}$, and a map $\edgetypemap : \edges \to [Q]$ from edges to one of $Q \in \N$ edge types.
For $i \in \vertices$, let $\neighin (i) := \brk[c]{ j \in \vertices : (j, i) \in \edges }$ be its \emph{incoming neighbors} and $\neighout (i) := \brk[c]{ j \in \vertices : (i, j) \in \edges }$ be its \emph{outgoing neighbors}.
For $\I \subseteq \vertices$, we denote $\neighin (\I) := \cup_{i \in \I} \neighin (i)$ and $\neighout (\I) := \cup_{i \in \I} \neighout (i)$.
As customary in the context of GNNs, we assume the existence of all self-loops (\cf~\cref{gnn:sec:prelim:notation}).

Message-passing GNNs (\cref{gnn:sec:gnns}) operate identically over directed and undirected graphs, except that in directed graphs the hidden embedding of a vertex is updated only according to its incoming neighbors.
For handling multiple edge types, common practice is to use different weight matrices per type in the GNN's update rule (\cf~\cite{hamilton2017inductive,schlichtkrull2018modeling}).
Hence, we consider the following update rule for directed graphs with multiple edge types, replacing that from~\cref{gnn:eq:gnn_update}:
\be
\hidvec{l}{i} = \agg \brk2{ \multisetbig{ \weightmat{l, \edgetypemap ( j , i ) } \hidvec{l - 1}{j} : j \in \neighin (i) } }
\text{\,,}
\label{gnn:eq:gnn_update_directed}
\ee
where $\brk{ \weightmat{1, q} \in \R^{\hdim \times \indim}}_{ q \in [Q]}$ and $\brk{ \weightmat{l, q} \in \R^{\hdim \times \hdim} }_{l \in \{2, \ldots, L\} , q \in [Q]}$ are learnable weight matrices.

In our analysis for undirected graphs (\cref{gnn:sec:analysis:formal}), a central concept is $\cut_\I$~---~the set of vertices with an edge crossing the partition induced by $\I \subseteq \vertices$.
Due to the existence of self-loops it is equal to the shared neighbors of $\I$ and $\I^c$, \ie~$\cut_\I = \neigh (\I) \cap \neigh (\I^c)$.
We generalize this concept to directed graphs, defining $\cutdir_\I$ to be the set of vertices with an incoming edge from the other side of the partition induced by $\I$, \ie~$\cutdir_\I := \{ i \in \I : \neighin (i) \cap \I^c \neq \emptyset \} \cup \{ j \in \I^c : \neighin (j) \cap \I \neq \emptyset \}$.
Due to the existence of self-loops it is given by $\cutdir_\I = \neighout (\I) \cap \neighout( \I^c)$.
Indeed, for undirected graphs $\cutdir_\I = \cut_\I$.

With the definition of $\cutdir_\I$ in place,~\cref{gnn:thm:directed_sep_rank_upper_bound} upper bounds the separation ranks a GNN can achieve over directed graphs with multiple edge types.
A technical subtlety is that the bounds depend on walks of lengths $l = L - 1, L - 2, \ldots, 0$, while those in~\cref{gnn:thm:sep_rank_upper_bound} for undirected graphs depend only on walks of length $L - 1$.
As shown in the proof of~\cref{gnn:thm:sep_rank_upper_bound}, this dependence exists in undirected graphs as well.
Though, in undirected graphs with self-loops, the number of length $l \in \N$ walks from $\cut_\I$ decays exponentially as $l$ decreases.
One can therefore replace the sum over walk lengths with walks of length $L - 1$ (up to a multiplicative constant).
By contrast, in directed graphs this is not true in general, \eg,~when $\cutdir_\I$ contains only vertices with no outgoing edges (besides self-loops).

\begin{theorem}
	\label{gnn:thm:directed_sep_rank_upper_bound}
	For a directed graph with multiple edge types $\graph$ and $t \in \vertices$, let $\funcgraph{\params}{\graph}$ and $\funcvert{\params}{\graph}{t}$ be the functions realized by depth $L$ graph and vertex prediction GNNs, respectively, with width~$\hdim$, learnable weights $\params$, and product aggregation (\cref{gnn:eq:gnn_update_directed,gnn:eq:graph_pred_gnn,gnn:eq:vertex_pred_gnn,gnn:eq:prod_gnn_agg}).
	Then, for any $\I \subseteq \vertices$ and assignment of weights $\params$ it holds that:
	\begin{align}
		\text{(graph prediction)} \quad &\log \brk1{ \seprankbig{ \funcgraph{\params}{\graph} }{\I} }
		\leq
		\log \brk{ \hdim } \cdot \brk2{ \sum\nolimits_{l = 1}^{L} \nwalk{L - l}{\cutdir_\I}{ \vertices } + 1 } \text{\,,}
		\label{gnn:eq:directed_sep_rank_upper_bound_graph_pred} \\[0.5em]
		\text{(vertex prediction)} \quad &\log \brk1{ \seprankbig{ \funcvert{\params}{\graph}{ t } }{\I} }
		\leq
		\log \brk{ \hdim } \cdot \sum\nolimits_{l = 1}^{L} \nwalk{L - l}{\cutdir_\I}{ \{ t \} } \text{\,.}
		\label{gnn:eq:directed_sep_rank_upper_bound_vertex_pred}
	\end{align}
\end{theorem}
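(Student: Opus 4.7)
The plan is to follow the same tensor-network strategy employed for the proof of \cref{gnn:thm:sep_rank_upper_bound}, adapting each step to the directed, multi-edge-type setting. First, I would formalize the GNN's computation as a tree-structured tensor network by unrolling the message passing over $L$ layers. In this unrolling, the root corresponds to the network output, each interior node at depth $d$ represents a hidden embedding $\hidvec{L-d+1}{v}$ for some $v$, and a node for $v$ has one child per incoming neighbor in $\neighin(v)$~---~in contrast to the undirected case, where children correspond to all neighbors. Each tree edge from a child $u$ to its parent $v$ carries the weight matrix $\weightmat{L-d+1, \tau(u,v)}$ keyed by the edge's type, so distinct edge types simply populate distinct tree edges while keeping every bond dimension at most $\hdim$. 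Product aggregation is encoded by delta (diagonal) tensors at each interior node, exactly as in the undirected construction of \cref{gnn:app:prod_gnn_as_tn}.

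Second, I would invoke the standard cut bound for tree tensor networks: the separation rank of the realized function with respect to $\I$ is at most the product of bond dimensions on any set of edges whose removal separates leaves associated with $\I$-vertices from leaves associated with $\I^c$-vertices. Because every bond has dimension at most $\hdim$, taking logarithms translates this into $\log(\hdim)$ times the number of cut edges.

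To match the theorem's bound, I would cut, at each tree depth $d \in \brk[c]{1, \ldots, L}$, the edge just above every node whose represented vertex lies in $\cutdir_\I$. The number of such nodes at depth $d$ equals the number of length-$(d-1)$ walks from $\cutdir_\I$ to $\brk[c]{t}$ for vertex prediction, or to $\vertices$ for graph prediction, so summing over $d$ yields exactly $\sum_{l=1}^{L} \nwalk{L-l}{\cutdir_\I}{\cdot}$; for graph prediction an extra cut at the final output node accounts for the $+1$ term. That this is a separating cut is motivated by the observation that any walk $v_0 \to \cdots \to v_L$ crossing the partition has some index $k \geq 1$ with $v_{k-1}$ and $v_k$ on different sides, forcing $v_k \in \cutdir_\I$ by definition and therefore severing the walk in the tree.

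The main obstacle, I expect, will be verifying validity of the cut with full care. Because a vertex in $\cutdir_\I$ can have incoming neighbors on both sides of the partition, the subtree rooted at such a vertex may in principle contain leaves from both sides, so the naive construction above may require a small additional refinement within each subtree, or equivalently, a more delicate accounting that exploits the delta-tensor structure of product aggregation to ensure each resulting component has single-sided leaves. In the undirected case, ubiquitous self-loops allow absorbing shorter walks into length-$(L-1)$ ones and collapsing the cut into a single layer; in the directed case, a vertex in $\cutdir_\I$ may lack the outgoing edges needed to perform such absorption (as the excerpt notes for vertices whose only outgoing edge is a self-loop), which is precisely why the directed bound sums over walk lengths $0, 1, \ldots, L-1$ rather than referencing a single length. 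Handling multiple edge types is routine, as distinct types only affect which weight matrix labels each tree edge, leaving the tree topology and bond dimensions intact.
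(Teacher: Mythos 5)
Your proposal follows the same route as the paper: represent the directed multi-edge-type GNN as a tree tensor network (children indexed by incoming neighbors, weight-matrix nodes labeled by edge type, $\delta$-tensors encoding product aggregation), then bound separation rank by a minimal cut separating $\I$-leaves from $\I^c$-leaves, with every bond dimension at most $\hdim$. The counting you arrive at~---~$\sum_{l=1}^{L}\nwalk{L-l}{\cutdir_\I}{\cdot}$, plus one for the output $\delta$-tensor in graph prediction~---~is exactly the paper's, and your explanation of why the directed bound cannot be collapsed to a single walk length (no self-loop absorption for vertices in $\cutdir_\I$ lacking outgoing edges) is correct.

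The one place where your argument is not yet sound is the cut itself. Cutting "the edge just above every node whose represented vertex lies in $\cutdir_\I$" does not produce a bipartition of the tensor network's nodes that separates $\I$-leaves from $\I^c$-leaves: as you yourself observe, the subtree rooted at a $\cutdir_\I$-node generally contains leaves from both sides, so the resulting components are mixed and the cut lemma does not apply. The resolution is not a refinement within each subtree but a different (and simpler) choice of cut: take $\J$ to be \emph{all} nodes of the tensor network associated with vertices in $\I$ (leaves, weight matrices, and $\delta$-tensors alike), and $\J^c$ the rest. The crossing legs are then exactly those joining a $\delta$-tensor for some vertex $i$ to a weight-matrix node for some $j\in\neighin(i)$ on the opposite side of $(\I,\I^c)$, and the $\delta$-tensors touched by such legs are precisely those associated with vertices in $\cutdir_\I$. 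The "delta-tensor structure" you gesture at is then made precise by the modified multiplicative cut weight (legs incident to the same $\delta$-tensor contribute only once), which turns the count of crossing legs into a count of $\delta$-tensor copies, namely $\sum_{i\in\cutdir_\I}\nwalk{L-l}{\{i\}}{\cdot}$ at layer $l$. With that substitution your argument closes and coincides with the paper's.
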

\begin{proof}[Proof sketch (proof in~\cref{gnn:app:proofs:directed_sep_rank_upper_bound})]
	The proof follows a line identical to that of~\cref{gnn:thm:sep_rank_upper_bound}, only requiring adjusting definitions from undirected graphs to directed graphs with multiple edge types.
\end{proof}

Towards lower bounding separation ranks, we generalize the definitions of vertex subsets with no repeating shared neighbors (\cref{gnn:def:no_rep_neighbors}) and admissible subsets of $\cut_\I$ (\cref{gnn:def:admissible_subsets}) to directed graphs.

\begin{definition}
	\label{gnn:def:directed_no_rep_neighbors}
	We say that $\I, \J \subseteq \vertices$ \emph{have no outgoing repeating shared neighbors} if every $k \in \neighout (\I) \cap \neighout (\J)$ has only a single incoming neighbor in each of $\I$ and $\J$, \ie~$\abs{\neighin (k) \cap \I} = \abs{\neighin (k) \cap \J} = 1$.
\end{definition}

\begin{definition}
	\label{gnn:def:directed_admissible_subsets}
	For $\I \subseteq \vertices$, we refer to $\cut \subseteq \cutdir_\I$ as an \emph{admissible subset of $\cutdir_\I$} if there exist $\I' \subseteq \I, \J' \subseteq \I^c$ with no outgoing repeating shared neighbors such that $\cut = \neighout (\I') \cap \neighout (\J')$.
	We use $\cutsetdir (\I)$ to denote the set comprising all admissible subsets of $\cutdir_\I$:
	\[
	\cutsetdir (\I) := \brk[c]1{ \cut \subseteq \cutdir_\I : \cut \text{ is an admissible subset of $\cutdir_\I$} }
	\text{\,.}
	\]
\end{definition}

\cref{gnn:thm:directed_sep_rank_lower_bound} generalizes the lower bounds from~\cref{gnn:thm:sep_rank_lower_bound} to directed graphs with multiple edge types.

\begin{theorem}
	\label{gnn:thm:directed_sep_rank_lower_bound}
	Consider the setting and notation of~\cref{gnn:thm:directed_sep_rank_upper_bound}.
	Given $I \subseteq \vertices$, for almost all assignments of weights $\params$, \ie~for all but a set of Lebesgue measure zero, it holds that:
	\begin{align}
		\text{(graph prediction)} \quad &\log \brk1{ \seprankbig{ \funcgraph{\params}{\graph} }{\I} }
		\geq
		\max_{ \cut \in \cutsetdir (\I) } \log \brk{ \alpha_{\cut} } \cdot \nwalk{L - 1}{\cut}{\vertices}
		\text{\,,}
		\label{gnn:eq:directed_sep_rank_lower_bound_graph_pred} \\[0.5em]
		\text{(vertex prediction)} \quad &\log \brk1{ \seprankbig{ \funcvert{\params}{\graph}{ t } }{\I} }
		\geq
		\max_{ \cut \in \cutsetdir (\I) } \log \brk{ \alpha_{\cut, t} } \cdot \nwalk{L - 1}{\cut}{ \{ t \} }
		\text{\,,}
		\label{gnn:eq:directed_sep_rank_lower_bound_vertex_pred}
	\end{align}
	where:
	\[
	\begin{split}
	\alpha_{\cut} & := \begin{cases}
		\mindim^{1 / \nwalk{0}{\cut}{\vertices} } & , \text{if } L = 1 \\
		\brk{ \mindim - 1 } \cdot \nwalk{L - 1}{\cut}{\vertices}^{-1} + 1 & , \text{if } L \geq 2
	\end{cases}
	\text{\,,} \\
	\alpha_{\cut, t} & := \begin{cases}
		\mindim & , \text{if } L = 1 \\
		\brk{ \mindim - 1 } \cdot \nwalk{L - 1}{\cut}{ \{ t \}}^{-1} + 1 & , \text{if } L \geq 2
	\end{cases}
	\text{\,,}
	\end{split}
	\]
	with $\mindim := \min \brk[c]{ \indim, \hdim }$.
	If $\nwalk{L - 1}{\cut}{\vertices} = 0$ or $\nwalk{L - 1}{\cut}{ \{ t \}} = 0$, the respective lower bound (right hand side of~\cref{gnn:eq:directed_sep_rank_lower_bound_graph_pred} or~\cref{gnn:eq:directed_sep_rank_lower_bound_vertex_pred}) is zero by convention.
\end{theorem}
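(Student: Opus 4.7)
The plan is to mirror the proof of~\cref{gnn:thm:sep_rank_lower_bound} (sketched above and detailed in the appendix referenced there), adapting each step from undirected single-edge-type graphs to the directed, multi-edge-type setting. The starting observation will remain unchanged: for any $f : (\R^{\indim})^{\abs{\vertices}} \to \R$, fixing template vectors $\vbf^{(1)}, \ldots, \vbf^{(\mindim)} \in \R^{\indim}$ and forming the grid tensor $\gridtensor{f}$ by evaluating $f$ on every point of $\{ (\vbf^{(d_1)}, \ldots, \vbf^{(d_{\abs{\vertices}})}) \}_{d_1, \ldots, d_{\abs{\vertices}} = 1}^{\mindim}$, the matricization $\gridmatnoflex{f}$ with rows indexed by $\I$ satisfies $\rank (\gridmatnoflex{f}) \leq \seprank{f}{\I}$. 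This argument makes no reference to the graph, so it carries over verbatim to $\funcgraph{\params}{\graph}$ and $\funcvert{\params}{\graph}{t}$ in the directed multi-edge-type setting.

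Next I would express the grid tensor of a product-aggregation GNN as a tensor network, extending the construction used in the proof of~\cref{gnn:thm:sep_rank_upper_bound} (and referenced for the undirected lower bound). The only structural changes are that in the update~\cref{gnn:eq:gnn_update_directed} aggregation runs over $\neighin (i)$ rather than $\neigh (i)$, and weights are indexed by the edge type $\edgetypemap(j,i)$; both changes amount to relabeling the interior nodes and edges of the tensor network, without altering its tree topology around each leaf. Given an admissible $\cut \in \cutsetdir (\I)$ realized by $\I' \subseteq \I$, $\J' \subseteq \I^c$ with no outgoing repeating shared neighbors (\cref{gnn:def:directed_no_rep_neighbors,gnn:def:directed_admissible_subsets}), I would unroll the computation of $\funcgraph{\params}{\graph}$ (resp.\ $\funcvert{\params}{\graph}{t}$) into a sum of contributions along length-$(L-1)$ walks terminating at $\vertices$ (resp.\ $\{t\}$) that \emph{originate} at $\cut$, exactly as in the undirected proof but now using the outgoing-neighbor structure. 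The no-outgoing-repeating-shared-neighbors condition guarantees that the walk contributions entering through distinct vertices of $\cut$ are tensor-multiplicatively independent, so the relevant matricization decomposes as a Kronecker-like product whose factors correspond bijectively to walks counted by $\nwalk{L-1}{\cut}{\vertices}$ or $\nwalk{L-1}{\cut}{\{t\}}$.

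The main obstacle, as in the undirected case, is choosing template vectors $\vbf^{(1)}, \ldots, \vbf^{(\mindim)}$ and a corresponding Zariski-open set of weight assignments $\params$ for which the Kronecker-like product above attains full rank $\alpha_{\cut}^{\nwalk{L-1}{\cut}{\vertices}}$ (resp.\ $\alpha_{\cut,t}^{\nwalk{L-1}{\cut}{\{t\}}}$). I would follow the strategy of~\cite{levine2020limits,wies2021transformer,levine2022inductive}: pick $\vbf^{(1)}, \ldots, \vbf^{(\mindim)}$ to be the first $\mindim$ standard basis vectors (extended appropriately when $\indim \ne \hdim$), reduce the question of full rank to the non-vanishing of a polynomial in the entries of $\params$ (together with a finite collection of edge-type-indexed weight matrices), and exhibit a single assignment~---~\eg,~setting each $\weightmat{l,q}$ to an appropriate truncated identity~---~that makes this polynomial nonzero. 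Since a nonzero polynomial vanishes only on a set of Lebesgue measure zero, the full-rank condition, and hence the claimed lower bound, holds for almost every~$\params$. Handling the case $L=1$ requires separate (simpler) bookkeeping because there are no interior contractions, and the degenerate cases $\nwalk{L-1}{\cut}{\vertices}=0$ or $\nwalk{L-1}{\cut}{\{t\}}=0$ are absorbed by the convention stated in the theorem. Taking the maximum over $\cut \in \cutsetdir(\I)$ then yields~\cref{gnn:eq:directed_sep_rank_lower_bound_graph_pred,gnn:eq:directed_sep_rank_lower_bound_vertex_pred}.
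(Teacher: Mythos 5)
Your overall route is the same as the paper's: lower bound the separation rank by the rank of a grid tensor matricization, invoke the polynomial/measure-zero argument (\cref{gnn:lem:poly_mat_max_rank}) so that a single witnessing assignment of weights and templates suffices, and adapt the undirected construction by using the directed notions $\cutdir_\I$, $\cutsetdir(\I)$ and by assigning the same matrix to all edge types at a given layer. Those adaptations are exactly what the paper does.

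However, there is a concrete gap in your witnessing construction for $L \geq 2$. You propose taking the template vectors to be the first $\mindim$ standard basis vectors and then "exhibiting a single assignment that makes the polynomial nonzero." Standard basis templates are what the paper uses only for $L = 1$, where the grid tensor is diagonal-like and the achievable rank is exactly $\mindim$. For $L \geq 2$ the theorem demands rank at least $\alpha_{\cut}^{\nwalk{L-1}{\cut}{\vertices}}$, which is typically far larger than $\mindim$, and the walk count enters not as a Kronecker factor count but as the \emph{exponent} of an inner product: with the paper's weights, the relevant hidden entries take the form $\phi(\vbf)\inprod{\zbf^{(m)}}{\zbf^{(n)}}^{\nwalk{L-1}{\cut}{\vertices}}\psi(\vbf')$, so the key submatrix is $\Sbf\brk{\hadmp^{\nwalk{L-1}{\cut}{\vertices}}\brk{\Zbf\Zbf^\top}}\Qbf$. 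To make this have rank $\multisetcoeff{\mindim}{\nwalk{L-1}{\cut}{\vertices}}$ (which is then lower bounded by $\alpha_{\cut}^{\nwalk{L-1}{\cut}{\vertices}}$), the paper needs $\multisetcoeff{\mindim}{\nwalk{L-1}{\cut}{\vertices}} + 1$ template vectors whose first $\mindim$ coordinates are \emph{strictly positive} rows of a carefully chosen $\Zbf$, constructed via the vector rearrangement inequality (\cref{gnn:lem:hadm_of_gram_full_rank}). With orthogonal standard-basis templates, $\Zbf\Zbf^\top$ and all its Hadamard powers are the identity of size $\mindim$, so your submatrix rank never exceeds $\mindim$ and the bound fails. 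The step you defer as routine~---~showing the determinant of the appropriate minor is nonzero at one point~---~is precisely this construction and is the crux of the argument.
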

\begin{proof}[Proof sketch (proof in~\cref{gnn:app:proofs:directed_sep_rank_lower_bound})]
	The proof follows a line identical to that of~\cref{gnn:thm:sep_rank_lower_bound}, only requiring adjusting definitions from undirected graphs to directed graphs with multiple edge types.
\end{proof}

\section{Representing Graph Neural Networks With Product Aggregation as Tensor Networks}
\label{gnn:app:prod_gnn_as_tn}

In this appendix, we prove that GNNs with product aggregation (\cref{gnn:sec:gnns}) can be represented through tensor networks~---~a graphical language for expressing tensor contractions, widely used in quantum physics literature for modeling quantum states (\cf~\cite{vidal2008class}).
This representation facilitates upper bounding the separation ranks of a GNN with product aggregation (proofs for \cref{gnn:thm:sep_rank_upper_bound} and its extension in~\cref{gnn:app:extensions}), and is delivered in~\cref{gnn:app:prod_gnn_as_tn:correspondence}.
We note that analogous tensor network representations were shown for variants of recurrent and convolutional neural networks~\cite{levine2018benefits,levine2018deep}.
For the convenience of the reader, we lay out basic concepts from the field of tensor analysis in~\cref{gnn:app:prod_gnn_as_tn:tensors} and provide a self-contained introduction to tensor networks in~\cref{gnn:app:prod_gnn_as_tn:tensor_networks} (see~\cite{orus2014practical} for a more in-depth treatment).

\subsection{Primer on Tensor Analysis}
\label{gnn:app:prod_gnn_as_tn:tensors}

For our purposes, a \emph{tensor} is simply a multi-dimensional array.
The \emph{order} of a tensor is its number of axes, which are typically called \emph{modes} (\eg~a vector is an order one tensor and a matrix is an order two tensor).
The \emph{dimension} of a mode refers to its length, \ie~the number of values it can be indexed with.
For an order $N \in \N$ tensor $\Atensor \in \R^{D_1 \times \cdots \times D_N}$ with modes of dimensions $D_1, \ldots, D_N \in \N$, we will denote by $\Atensor_{d_1, \ldots, d_N}$ its $(d_1, \ldots, d_N)$'th entry, where $\brk{d_1, \ldots, d_N} \in [D_1] \times \cdots \times [D_N]$.

It is possible to rearrange tensors into matrices~---~a process known as \emph{matricization}.
The matricization of $\Atensor$ with respect to $\I \subseteq [N]$, denoted $\mat{\Atensor}{\I} \in \R^{\prod_{i \in \I} D_i \times \prod_{j \in \I^c} D_j}$ is its arrangement as a matrix where rows correspond to modes indexed by $\I$ and columns correspond to the remaining modes.
Specifically, denoting the elements in $\I$ by $i_1 < \cdots < i_{\abs{\I}}$ and those in $\I^c$ by $j_1 < \cdots < j_{ \abs{\I^c} }$, the matricization $\mat{\Atensor}{\I}$ holds the entries of $\Atensor$ such that $\Atensor_{d_1, \ldots, d_N}$ is placed in row index $1 + \sum_{l = 1}^{\abs{\I}} (d_{i_{l}} - 1) \prod_{l' = l + 1}^{\abs{\I}} D_{i_{l'}}$ and column index $1 + \sum_{l = 1}^{ \abs{\I^c} } ( d_{ j_{l} } - 1 ) \prod_{l' = l + 1}^{ \abs{\I^c} } D_{ j_{l'} }$.

Tensors with modes of the same dimension can be combined via \emph{contraction}~---~a generalization of matrix multiplication.
It will suffice to consider contractions where one of the modes being contracted is the last mode of its tensor.

\begin{definition}
	\label{gnn:def:tensor_contraction}
	Let $\Atensor \in \R^{D_1 \times \cdots \times D_N}, \Btensor \in \R^{D'_1 \times \cdots \times D'_{N'}}$ for orders $N, N' \in \N$ and mode dimensions $D_1, \ldots, D_N, D'_1, \ldots, D'_{N'} \in \N$ satisfying $D_n = D'_{N'}$ for some $n \in [N]$.
	The \emph{mode-$n$ contraction} of $\Atensor$ with $\Btensor$, denoted $\Atensor \contract{n} \Btensor \in \R^{D_1 \times \cdots \times D_{n - 1} \times D'_1 \times \cdots \times D'_{N' - 1} \times D_{n + 1} \times \cdots \times D_N}$, is given element-wise by:
	\[
	\brk*{ \Atensor \contract{n} \Btensor }_{d_1, \ldots, d_{n - 1}, d'_1, \ldots, d'_{N' - 1}, d_{n + 1}, \ldots, d_N} = \sum\nolimits_{d_n = 1}^{D_n} \Atensor_{d_1, \ldots, d_N} \cdot \Btensor_{d'_1, \ldots, d'_{N' - 1}, d_n}
	\text{\,,}
	\]
	for all indices $d_1 \in [D_1], \ldots, d_{n - 1} \in [D_{n - 1}], d'_1 \in [D'_1], \ldots, d'_{N' - 1} \in [D'_{N' - 1}], d_{n + 1} \in [D_{n + 1}], \ldots, d_N \in [D_N]$.
\end{definition}
For example, the mode-$2$ contraction of $\Abf \in \R^{D_1 \times D_2}$ with $\Bbf \in \R^{D'_1 \times D_2}$ boils down to multiplying $\Abf$ with $\Bbf^\top$ from the right, \ie~$\Abf \contract{2} \Bbf = \Abf \Bbf^\top$.
It is oftentimes convenient to jointly contract multiple tensors.
Given an order $N$ tensor $\Atensor$ and $M \in \N_{\leq N}$ tensors $\Btensor^{(1)}, \ldots, \Btensor^{(M)}$, we use $\Atensor \contract{i \in [M]} \Btensor^{(i)}$ to denote the contraction of $\Atensor$ with $\Btensor^{(1)}, \ldots, \Btensor^{(M)}$ in modes $1, \ldots, M$, respectively (assuming mode dimensions are such that the contractions are well-defined).

\subsection{Tensor Networks}
\label{gnn:app:prod_gnn_as_tn:tensor_networks}

\begin{figure*}[t]
	\vspace{0mm}
	\begin{center}
		\includegraphics[width=\textwidth]{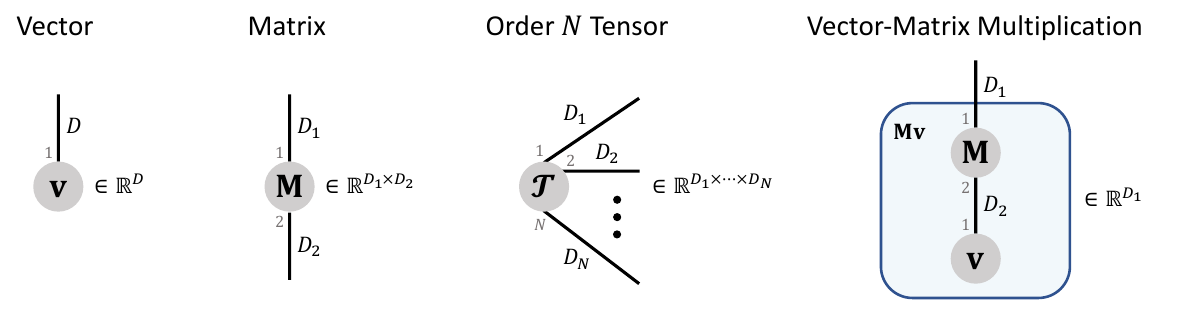}
	\end{center}
	\vspace{-1mm}
	\caption{
		Tensor network diagrams of (from left to right): a vector $\vbf \in \R^D$, matrix $\Mbf \in \R^{D_1 \times D_2}$, order $N \in \N$ tensor $\Ttensor \in \R^{D_1 \times \cdots \times D_N}$, and vector-matrix multiplication $\Mbf \vbf \in \R^{D_1}$.
		The mode index associated with a leg's end point is specified in gray, and the weight of the leg, specified in black, determines the mode dimension.
	}
	\label{gnn:fig:tensor_networks_examples}
\end{figure*}

A \emph{tensor network} is an undirected weighted graph $\tngraph = \brk{ \tnvertices{\tngraph}, \tnedges{\tngraph}, \tnedgeweights{\tngraph} }$ that describes a sequence of tensor contractions (\cref{gnn:def:tensor_contraction}), with vertices $\tnvertices{\TT}$, edges $\tnedges{\TT}$, and a function mapping edges to natural weights $\tnedgeweights{\TT}: \tnedges{\TT} \to \N$.
We will only consider tensor networks that are connected.
To avoid confusion with vertices and edges of a GNN's input graph, and in accordance with tensor network terminology, we refer by \emph{nodes} and \emph{legs} to the vertices and edges of a tensor network, respectively.

Every node in a tensor network is associated with a tensor, whose order is equal to the number of legs emanating from the node.
Each end point of a leg is associated with a mode index, and the leg's weight determines the dimension of the corresponding tensor mode.
That is, an end point of $e \in \tnedges{\TT}$ is a pair $(\Atensor, n) \in \tnvertices{\TT} \times \N$, with $n$ ranging from one to the order of $\Atensor$, and $\tnedgeweights{\TT} (e)$ is the dimension of $\Atensor$ in mode $n$.
A leg can either connect two nodes or be connected to a node on one end and be loose on the other end.
If two nodes are connected by a leg, their associated tensors are contracted together in the modes specified by the leg.
Legs with a loose end are called \emph{open legs}.
The number of open legs is exactly the order of the tensor produced by executing all contractions in the tensor network, \ie~by contracting the tensor network.
\cref{gnn:fig:tensor_networks_examples} presents exemplar tensor network diagrams of a vector, matrix, order $N \in \N$ tensor, and vector-matrix multiplication.

\subsection{Tensor Networks Corresponding to Graph Neural Networks With Product Aggregation}
\label{gnn:app:prod_gnn_as_tn:correspondence}

Fix some undirected graph $\graph$ and learnable weights $\params = \brk{ \weightmat{1}, \ldots, \weightmat{L}, \weightmat{o} }$.
Let $\funcgraph{\params}{\graph}$ and $\funcvert{\params}{\graph}{t}$, for $t \in \vertices$, be the functions realized by depth $L$ graph and vertex prediction GNNs, respectively, with width~$\hdim$ and product aggregation (\cref{gnn:eq:gnn_update,gnn:eq:graph_pred_gnn,gnn:eq:vertex_pred_gnn,gnn:eq:prod_gnn_agg}).
For $\fmat = \brk{ \fvec{1}, \ldots, \fvec{ \abs{\vertices} } }\in \R^{\indim \times \abs{\vertices} }$, we construct tensor networks $\tngraph (\fmat)$ and $\tngraph^{(t)} (\fmat)$ whose contraction yields $\funcgraph{\params}{\graph} (\fmat)$ and $\funcvert{\params}{\graph}{t} (\fmat)$, respectively. 
Both $\tngraph (\fmat)$ and $\tngraph^{(t)} (\fmat)$ adhere to a tree structure, where each leaf node is associated with a vertex feature vector, \ie~one of $\fvec{1}, \ldots, \fvec{ \abs{\vertices} }$, and each interior node is associated with a weight matrix from $\weightmat{1}, \ldots, \weightmat{L}, \weightmat{o}$ or a \emph{$\delta$-tensor} with modes of dimension $\hdim$, holding ones on its hyper-diagonal and zeros elsewhere.
We denote an order $N \in \N$ tensor of the latter type by $\deltatensor{N} \in \R^{\hdim \times \cdots \times \hdim}$, \ie~$\deltatensor{N}_{d_1, \ldots, d_N} = 1$ if $d_1 = \cdots = d_N$ and $\deltatensor{N}_{d_1, \ldots, d_N} = 0$ otherwise for all $d_1, \ldots, d_N \in [\hdim]$.

\begin{figure*}[t]
	\vspace{0mm}
	\begin{center}
		\includegraphics[width=1\textwidth]{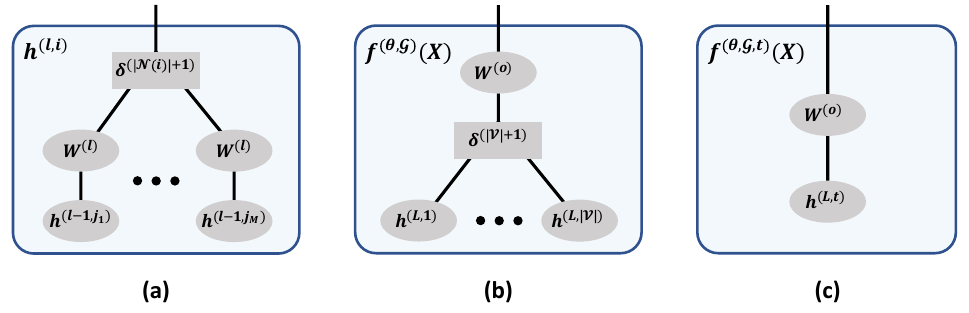}
	\end{center}
	\vspace{-2mm}
	\caption{
		Tensor network diagrams of the operations performed by GNNs with product aggregation (\cref{gnn:sec:gnns}).
		\textbf{(a)}~Hidden embedding update (\cf~\cref{gnn:eq:gnn_update,gnn:eq:prod_gnn_agg}): $\hidvec{l}{i} = \brk{ \weightmat{l} \hidvec{l - 1}{j_1} } \hadmp \cdots \hadmp \brk{ \weightmat{l} \hidvec{l - 1}{j_M} }$, where $\neigh (i) = \brk[c]{ j_1, \ldots, j_M}$, for $l \in [L], i \in \vertices$.
		\textbf{(b)}~Output layer for graph prediction (\cf~\cref{gnn:eq:graph_pred_gnn,gnn:eq:prod_gnn_agg}): $\funcgraph{\params}{\graph} (\fmat) = \weightmat{o} \brk{ \hidvec{L}{1} \hadmp \cdots \hadmp \hidvec{L}{ | \vertices | } }$. 
		\textbf{(c)}~Output layer for vertex prediction over $t \in \vertices$ (\cf~\cref{gnn:eq:vertex_pred_gnn}): $\funcvert{\params}{\graph}{t} (\fmat) = \weightmat{o} \hidvec{L}{t}$.
		We draw nodes associated with $\delta$-tensors as rectangles to signify their special (hyper-diagonal) structure, and omit leg weights to avoid clutter (legs connected to $\hidvec{0}{i} = \fvec{i}$, for $i \in \vertices$, have weight $\indim$ while all other legs have weight $\hdim$).
	}
	\label{gnn:fig:gnn_tensor_network_parts}
\end{figure*}

Intuitively, $\tngraph (\fmat)$ and $\tngraph^{(t)} (\fmat)$ embody unrolled computation trees, describing the operations performed by the respective GNNs through tensor contractions.
Let $\hidvec{l}{i} = \hadmp_{ j \in \neigh (i) } \brk{ \weightmat{l} \hidvec{l - 1}{j} }$ be the hidden embedding of $i \in \vertices$ at layer $l \in [L]$ (recall $\hidvec{0}{j} = \fvec{j}$ for $j \in \vertices$), and denote $\neigh (i) = \brk[c]{ j_1, \ldots, j_M}$.
We can describe $\hidvec{l}{i}$ as the outcome of contracting each $\hidvec{l - 1}{ j_1 }, \ldots, \hidvec{ l - 1}{j_M}$ with $\weightmat{l}$, \ie~computing $\weightmat{l} \hidvec{l - 1}{j_1}, \ldots, \weightmat{l} \hidvec{l - 1}{j_M}$, followed by contracting the resulting vectors with $\deltatensor{ \abs{\neigh (i)} +1}$, which induces product aggregation (see~\cref{gnn:fig:gnn_tensor_network_parts}(a)).
Furthermore, in graph prediction, the output layer producing $\funcgraph{\params}{\graph} (\fmat) = \weightmat{o} \brk{ \hadmp_{ i \in \vertices} \hidvec{L}{i} }$ amounts to contracting $\hidvec{L}{1}, \ldots, \hidvec{L}{ \abs{\vertices}}$ with $\deltatensor{ \abs{\vertices} + 1}$, and subsequently contracting the resulting vector with $\weightmat{o}$ (see~\cref{gnn:fig:gnn_tensor_network_parts}(b)); while for vertex prediction, $\funcvert{\params}{\graph}{t} (\fmat) = \weightmat{o} \hidvec{L}{t}$ is a contraction of $\hidvec{L}{t}$ with $\weightmat{o}$ (see~\cref{gnn:fig:gnn_tensor_network_parts}(c)).

Overall, every layer in a GNN with product aggregation admits a tensor network formulation given the outputs of the previous layer.
Thus, we can construct a tree tensor network for the whole GNN by starting from the output layer~---~\cref{gnn:fig:gnn_tensor_network_parts}(b) for graph prediction or \cref{gnn:fig:gnn_tensor_network_parts}(c) for vertex prediction~---~and recursively expanding nodes associated with $\hidvec{l}{i}$ according to~\cref{gnn:fig:gnn_tensor_network_parts}(a), for $l = L, \ldots, 1$ and $i \in \vertices$.
A technical subtlety is that each $\hidvec{l}{i}$ can appear multiple times during this procedure.
In the language of tensor networks this translate to duplication of nodes.
Namely, there are multiple copies of the sub-tree representing $\hidvec{l}{i}$ in the tensor network~---~one copy per appearance when unraveling the recursion.
\cref{gnn:fig:prod_gnn_tensor_networks_example} displays examples for tensor network diagrams of $\tngraph (\fmat)$ and $\tngraph^{(t)} (\fmat)$.

We note that, due to the node duplication mentioned above, the explicit definitions of $\tngraph (\fmat)$ and $\tngraph^{(t)} (\fmat)$ entail cumbersome notation.
Nevertheless, we provide them in Appendix~\ref{gnn:app:prod_gnn_as_tn:correspondence:explicit_tn} for the interested reader.

\begin{figure*}[t]
	\vspace{0mm}
	\begin{center}
		\includegraphics[width=1\textwidth]{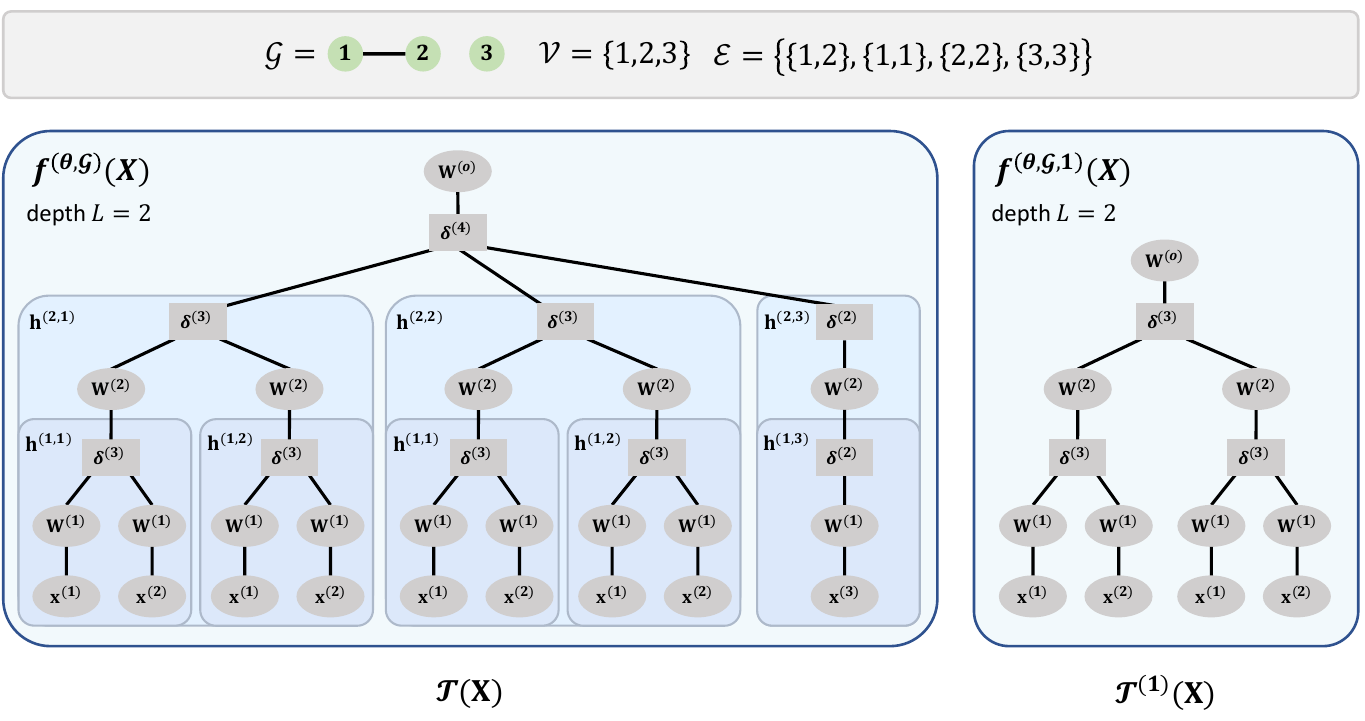}
	\end{center}
	\vspace{-2mm}
	\caption{
		Tensor network diagrams of $\tngraph (\fmat)$ (left) and $\tngraph^{(t)} (\fmat)$ (right) representing $\funcgraph{\params}{\graph} ( \fmat )$ and $\funcvert{\params}{\graph}{t} (\fmat)$, respectively, for $t = 1 \in \vertices$, vertex features $\fmat = \brk{ \fvec{1}, \ldots, \fvec{ | \vertices | } }$, and depth $L = 2$ GNNs with product aggregation (\cref{gnn:sec:gnns}).
		The underlying input graph $\graph$, over which the GNNs operate, is depicted at the top.
		We draw nodes associated with $\delta$-tensors as rectangles to signify their special (hyper-diagonal) structure, and omit leg weights to avoid clutter (legs connected to $\fvec{1}, \fvec{2}, \fvec{3}$ have weight $\indim$ while all other legs have weight $\hdim$).
		See~\cref{gnn:app:prod_gnn_as_tn:correspondence} for further details on the construction of $\tngraph (\fmat)$ and $\tngraph^{(t)} (\fmat)$, and Appendix~\ref{gnn:app:prod_gnn_as_tn:correspondence:explicit_tn} for explicit formulations.
	}
	\label{gnn:fig:prod_gnn_tensor_networks_example}
\end{figure*}

\subsubsection{Explicit Tensor Network Definitions}
\label{gnn:app:prod_gnn_as_tn:correspondence:explicit_tn}

The tree tensor network representing $\funcgraph{\params}{\graph} (\fmat)$ consists of an initial input level~---~the leaves of the tree~---~comprising $\nwalk{L}{ \{i\} }{ \vertices }$ copies of $\fvec{i}$ for each $i \in \vertices$.
We will use $\fvecdup{i}{\gamma}$ to denote the copies of $\fvec{i}$ for $i \in \vertices$ and $\gamma \in [\nwalk{L}{ \{i\} }{ \vertices }]$.
In accordance with the GNN inducing $\funcgraph{\params}{\graph}$, following the initial input level are $L + 1$ \emph{layers}.
Each layer $l \in [L]$ includes two levels: one comprising $\nwalk{L - l + 1}{\vertices}{\vertices}$ nodes standing for copies of $\weightmat{l}$, and another containing $\delta$-tensors~---~$\nwalk{L - l}{ \{i\} }{\vertices}$ copies of $\deltatensor{ \abs{\neigh(i)} + 1 }$ per $i \in \vertices$.
We associate each node in these layers with its layer index and a vertex of the input graph $i \in \vertices$.
Specifically, we will use $\weightmatdup{l}{i}{\gamma}$ to denote copies of $\weightmat{l}$ and $\deltatensordup{l}{i}{\gamma}$ to denote copies of $\deltatensor{ \abs{ \neigh (i) } + 1}$, for $l \in [L], i \in \vertices$, and $\gamma \in \N$.
In terms of connectivity, every leaf $\fvecdup{i}{\gamma}$ has a leg to $\weightmatdup{1}{i}{\gamma}$.
The rest of the connections between nodes are such that each sub-tree whose root is $\deltatensordup{l}{i}{\gamma}$ represents $\hidvec{l}{i}$, \ie~contracting the sub-tree results in the hidden embedding for $i \in \vertices$ at layer $l \in [L]$ of the GNN inducing $\funcgraph{\params}{\graph}$.
Last, is an output layer consisting of two connected nodes: a $\deltatensor{\abs{\vertices} + 1}$ node, which has a leg to every $\delta$-tensor from layer $L$, and a $\weightmat{o}$ node.
See~\cref{gnn:fig:prod_gnn_tensor_networks_example_formal} (left) for an example of a tensor network diagram representing~$\funcgraph{\params}{\graph} (\fmat)$ with this notation.

\begin{figure*}[t]
	\vspace{0mm}
	\begin{center}
		\includegraphics[width=1\textwidth]{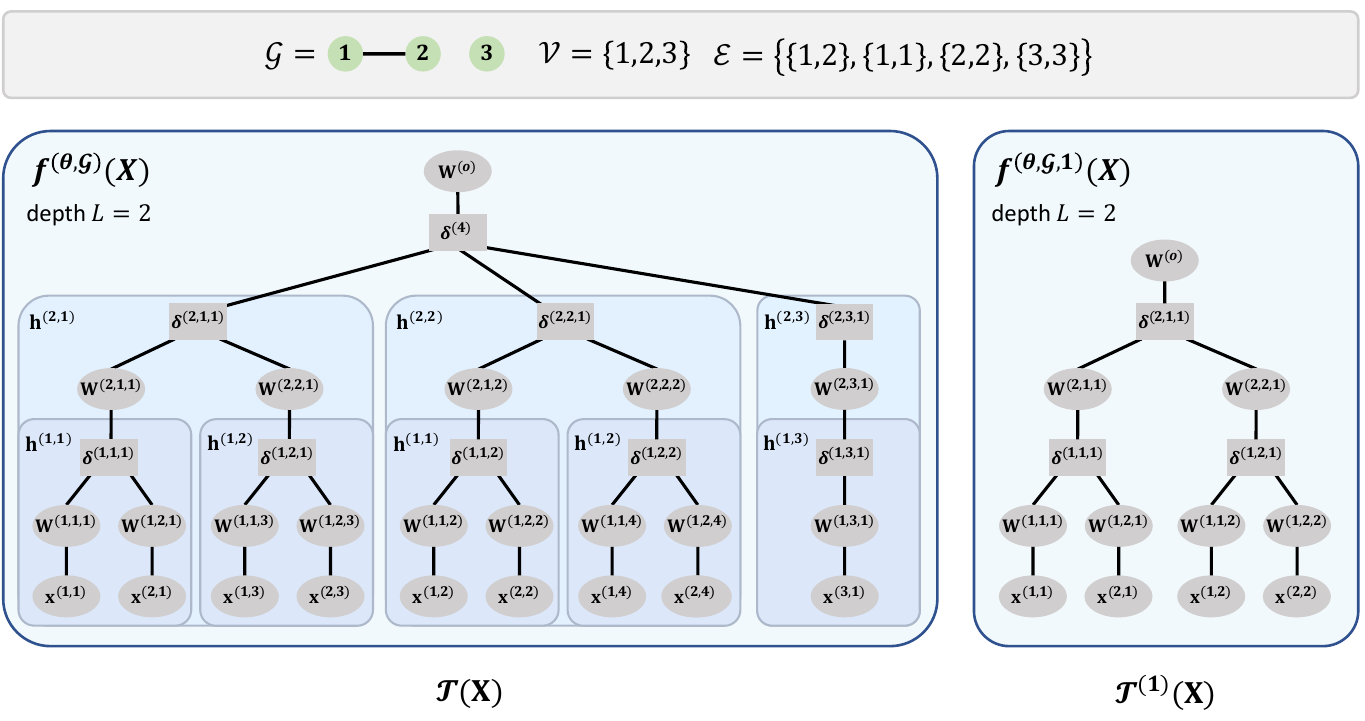}
	\end{center}
	\vspace{-2mm}
	\caption{
		Tensor network diagrams (with explicit node duplication notation) of $\tngraph (\fmat)$ (left) and $\tngraph^{(t)} (\fmat)$ (right) representing $\funcgraph{\params}{\graph} ( \fmat )$ and $\funcvert{\params}{\graph}{t} (\fmat)$, respectively, for $t = 1 \in \vertices$, vertex features $\fmat = \brk{ \fvec{1}, \ldots, \fvec{ | \vertices | } }$, and depth $L = 2$ GNNs with product aggregation (\cref{gnn:sec:gnns}).
		This figure is identical to~\cref{gnn:fig:prod_gnn_tensor_networks_example}, except that it uses the explicit notation for node duplication detailed in Appendix~\ref{gnn:app:prod_gnn_as_tn:correspondence:explicit_tn}.
		Specifically, each feature vector, weight matrix, and $\delta$-tensor is attached with an index specifying which copy it is (rightmost index in the superscript).
		Additionally, weight matrices and $\delta$-tensors are associated with a layer index and vertex in~$\vertices$ (except for the output layer $\delta$-tensor in $\tngraph (\fmat)$ and $\weightmat{o}$).
		See~\cref{gnn:eq:graphtensor_tn,gnn:eq:vertextensor_tn} for the explicit definitions of these tensor networks.
	}
	\label{gnn:fig:prod_gnn_tensor_networks_example_formal}
\end{figure*}

The tensor network construction for $\funcvert{\params}{\graph}{t} (\fmat)$ is analogous to that for $\funcgraph{\params}{\graph} (\fmat)$, comprising an initial input level followed by $L + 1$ layers.
Its input level and first $L$ layers are structured the same, up to differences in the number of copies for each node.
Specifically, the number of copies of $\fvec{i}$ is $\nwalk{L}{\{ i\}}{\{t\}}$ instead of $\nwalk{L}{\{i\}}{\vertices}$, the number of copies of $\weightmat{l}$ is $\nwalk{L - l + 1}{\vertices}{ \{t\}}$ instead of $\nwalk{L -l + 1}{\vertices}{ \vertices }$, and the number of copies of $\deltatensor{ \abs{ \neigh (i) } + 1 }$ is $\nwalk{L - l}{ \{i\} }{ \{t\} }$ instead of $\nwalk{L - l}{ \{i\} }{ \vertices }$, for $i \in \vertices$ and $l \in [L]$.
The output layer consists only of a $\weightmat{o}$ node, which is connected to the $\delta$-tensor in layer $L$ corresponding to vertex $t$.
See~\cref{gnn:fig:prod_gnn_tensor_networks_example_formal} (right) for an example of a tensor network diagram representing $\funcvert{\params}{\graph}{t} (\fmat)$ with this notation.

Formally, the tensor network producing $\funcgraph{\params}{\graph} (\fmat)$, denoted 
\[
\tngraph (\fmat) = \brk{ \tnvertices{\tngraph (\fmat)}, \tnedges{\tngraph (\fmat)}, \tnedgeweights{ \tngraph (\fmat) } }
\text{\,,}
\]
is defined by:
\be
\fontsize{9}{9}\selectfont
\begin{split}
	\tnvertices{\tngraph (\fmat)} := &~\brk[c]2{ \fvecdup{i}{\gamma} : i \in \vertices, \gamma \in [ \nwalk{L}{ \{i\} }{\vertices} ] } \cup \\
	&~\brk[c]2{ \weightmatdup{l}{i}{\gamma} : l \in [L], i \in \vertices, \gamma \in [ \nwalk{L - l + 1}{ \{i\} }{\vertices} ] } \cup \\
	&~\brk[c]2{ \deltatensordup{l}{i}{\gamma} : l \in [L], i \in \vertices, \gamma \in [ \nwalk{L - l}{ \{i\} }{ \vertices } ] } \cup \\
	&~\brk[c]2{ \deltatensor{\abs{\vertices} + 1 }, \weightmat{o} } \text{\,,} \\[1em]
	\tnedges{\tngraph (\fmat)} :=&~\brk[c]2{ \brk[c]1{ ( \fvecdup{i}{\gamma}, 1) , (\weightmatdup{1}{i}{\gamma}, 2 ) } : i \in \vertices , \gamma \in [ \nwalk{L}{ \{i\} }{\vertices} ] }\cup \\
	&~\brk[c]2{ \brk[c]1{ ( \deltatensordup{l}{i}{\gamma}, j ) , ( \weightmatdup{l}{ \neigh(i)_j }{ \dupmap{l}{i}{j} (\gamma) } , 1 ) } : l \in [L], i \in \vertices, j \in [ \abs{ \neigh (i) } ] , \gamma \in [ \nwalk{L - l}{ \{i\} }{ \vertices } ] } \cup \\
	&~\brk[c]2{ \brk[c]1{ ( \deltatensordup{l}{i}{\gamma}, \abs{\neigh (i)} + 1 ) , ( \weightmatdup{l + 1}{i}{\gamma} , 2 ) } : l \in [L - 1] , i \in \vertices , \gamma \in [ \nwalk{L - l}{ \{i\} }{ \vertices } ] } \cup \\
	&~\brk[c]2{ \brk[c]1{ ( \deltatensor{ \abs{\vertices} + 1 } , i ) , ( \deltatensordup{L}{i}{1} , \abs{ \neigh (i) } + 1 )} : i \in \vertices } \cup \brk[c]2{ \brk[c]1{ ( \deltatensor{ \abs{\vertices} + 1} , \abs{\vertices} + 1 ) , ( \weightmat{o} , 2 ) } } \text{\,,} \\[1em]
	\tnedgeweights{\tngraph (\fmat)} (e) :=& \begin{cases}
		\indim	& , \text{ if $(\fvecdup{i}{\gamma}, 1)$ is an endpoint of $e \in \tnedges{\tngraph}$ for some $i \in \vertices , \gamma \in [ \nwalk{L}{ \{i\} }{ \vertices } ]$} \\
		\hdim	& , \text{ otherwise}
	\end{cases} 
	\text{\,,}
\end{split}
\label{gnn:eq:graphtensor_tn}
\ee
where $\dupmap{l}{i}{j} (\gamma) := \gamma + \sum\nolimits_{k < i \text{ s.t. } k \in \neigh (j)} \nwalk{L - l}{ \{ k \} }{\vertices}$, for $l \in [L], i \in \vertices,$ and $\gamma \in [ \nwalk{L - l}{ \{i\} }{ \vertices } ]$, is used to map a $\delta$-tensor copy corresponding to $i$ in layer $l$ to a $\weightmat{l}$ copy, and $\neigh (i)_j$, for $i \in \vertices$ and $j \in [\abs{\neigh (i)}]$, denotes the $j$'th neighbor of $i$ according to an ascending order (recall vertices are represented by indices from $1$ to $\abs{\vertices}$).

Similarly, the tensor network producing $\funcvert{\params}{\graph}{t} (\fmat)$, denoted 
\[
\tngraph^{(t)} (\fmat) = \brk{ \tnvertices{\tngraph^{(t)} (\fmat)}, \tnedges{\tngraph^{(t)} (\fmat)}, \tnedgeweights{\tngraph^{(t)} (\fmat)} }
\text{\,,}
\]
is defined by:
\be
\fontsize{9}{9}\selectfont
\begin{split}
	\tnvertices{\tngraph^{(t)} (\fmat)} := &~\brk[c]2{ \fvecdup{i}{\gamma} : i \in \vertices, \gamma \in [ \nwalk{L}{ \{i\} }{\{t\}} ] } \cup \\
	&~\brk[c]2{ \weightmatdup{l}{i}{\gamma} : l \in [L], i \in \vertices, \gamma \in [ \nwalk{L - l + 1}{ \{i\} }{ \{t\} } ] } \cup \\
	&~\brk[c]2{ \deltatensordup{l}{i}{\gamma} : l \in [L], i \in \vertices, \gamma \in [ \nwalk{L - l}{ \{i\} }{ \{t\} } ] } \cup \\
	&~\brk[c]2{ \weightmat{o} } \text{\,,} \\[1em]
	\tnedges{\tngraph^{(t)} (\fmat)} :=&~\brk[c]2{ \brk[c]1{ ( \fvecdup{i}{\gamma} , 1 ) , ( \weightmatdup{1}{i}{\gamma}, 2 ) } : i \in \vertices , \gamma \in [ \nwalk{L}{ \{i\} }{ \{t\} } ] } \cup \\
	&~\brk[c]2{ \brk[c]1{ ( \deltatensordup{l}{i}{\gamma}, j ) , ( \weightmatdup{l}{ \neigh(i)_j }{ \dupmapvertex{l}{i}{j} (\gamma) } , 1 ) } : l \in [L], i \in \vertices, j \in [ \abs{ \neigh (i) } ] , \gamma \in [ \nwalk{L - l}{ \{i\} }{ \{t\} } ] } \cup \\
	&~\brk[c]2{ \brk[c]1{ ( \deltatensordup{l}{i}{\gamma}, \abs{\neigh (i)} + 1 ) , ( \weightmatdup{l + 1}{i}{\gamma} , 2 ) } : l \in [L - 1] , i \in \vertices , \gamma \in [ \nwalk{L - l}{ \{i\} }{ \{t\} } ] } \cup \\
	&~\brk[c]2{ \brk[c]1{ ( \deltatensordup{L}{t}{1} , \abs{ \neigh (t) } + 1 ) , ( \weightmat{o} , 2 ) } } \text{\,,} \\[1em]
	\tnedgeweights{\tngraph^{(t)} (\fmat)} (e) :=& \begin{cases}
		\indim	& , \text{ if $(\fvecdup{i}{\gamma}, 1)$ is an endpoint of $e \in \tnedges{\tngraph}$ for some $i \in \vertices , \gamma \in [ \nwalk{L}{ \{i\} }{ \{t\} } ]$ } \\
		\hdim	& , \text{ otherwise}
	\end{cases} 
	\text{\,,}
\end{split}
\label{gnn:eq:vertextensor_tn}
\ee
where $\dupmapvertex{l}{i}{j} (\gamma) := \gamma + \sum\nolimits_{k < i \text{ s.t. } k \in \neigh (j)} \nwalk{L - l}{ \{ k \} }{ \{t\} }$, for $l \in [L], i \in \vertices,$ and $\gamma \in [ \nwalk{L - l}{ \{i\} }{ \{t\} } ]$, is used to map a $\delta$-tensor copy corresponding to $i$ in layer $l$ to a $\weightmat{l}$ copy.

\cref{gnn:prop:tn_constructions} verifies that contracting $\tngraph (\fmat)$ and $\tngraph^{(t)} (\fmat)$ results in $\funcgraph{\params}{\graph} (\fmat)$ and $\funcvert{\params}{\graph}{t} (\fmat)$, respectively.

\begin{proposition}
	\label{gnn:prop:tn_constructions}
	For an undirected graph $\graph$ and $t \in \vertices$, let $\funcgraph{\params}{\graph}$ and $\funcvert{\params}{\graph}{t}$ be the functions realized by depth $L$ graph and vertex prediction GNNs, respectively, with width~$\hdim$, learnable weights $\params$, and product aggregation (\cref{gnn:eq:gnn_update,gnn:eq:graph_pred_gnn,gnn:eq:vertex_pred_gnn,gnn:eq:prod_gnn_agg}).
	For vertex features $\fmat = \brk{ \fvec{1}, \ldots, \fvec{\abs{\vertices}} } \in \R^{\indim \times \abs{\vertices}}$, let the tensor networks $\tngraph (\fmat) = \brk{ \tnvertices{\tngraph (\fmat)}, \tnedges{\tngraph (\fmat)}, \tnedgeweights{\tngraph (\fmat)} }$ and $\tngraph^{(t)} (\fmat) = \brk{ \tnvertices{\tngraph^{(t)} (\fmat)}, \tnedges{\tngraph^{(t)} (\fmat)}, \tnedgeweights{\tngraph^{(t)} (\fmat)} }$ be as defined in~\cref{gnn:eq:graphtensor_tn,gnn:eq:vertextensor_tn}, respectively.
	Then, performing the contractions described by $\tngraph (\fmat)$ produces~$\funcgraph{\params}{\graph} (\fmat)$, and performing the contractions described by $\tngraph^{(t)} (\fmat)$ produces~$\funcvert{\params}{\graph}{t} (\fmat)$.
\end{proposition}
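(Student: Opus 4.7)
The plan is to argue by induction on the layer index that every sub-tensor-network rooted at a node of the form $\deltatensordup{l}{i}{\gamma}$ contracts to $\hidvec{l}{i}$, and then to glue these partial contractions with the output layer of the tensor network. The key algebraic fact that drives everything is the following identity for the $\delta$-tensor: for any vectors $\vbf^{(1)}, \ldots, \vbf^{(N)} \in \R^{\hdim}$,
\[
\deltatensor{N+1} \contract{1} \vbf^{(1)} \contract{2} \cdots \contract{N} \vbf^{(N)} \;=\; \vbf^{(1)} \hadmp \cdots \hadmp \vbf^{(N)} \in \R^{\hdim}\text{,}
\]
because $\deltatensor{N+1}_{d_1,\ldots,d_{N+1}}$ is nonzero only when all indices agree. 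This identity is what lets the $\delta$-tensor in~\cref{gnn:fig:gnn_tensor_network_parts}(a) implement product aggregation, and it lets the top $\delta$-tensor in~\cref{gnn:fig:gnn_tensor_network_parts}(b) implement the Hadamard product over all vertex embeddings prior to applying $\weightmat{o}$.

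I would first prove the graph prediction case. The inductive claim, indexed by $l \in \{0,1,\ldots,L\}$, is that for every $i \in \vertices$ and every copy index $\gamma$ in the allowed range, the sub-tensor-network of $\tngraph(\fmat)$ induced by the descendants (in the tree) of $\deltatensordup{l}{i}{\gamma}$ (with the convention that the ``sub-tree'' at level $0$ is just the leaf $\fvecdup{i}{\gamma}$) contracts to $\hidvec{l}{i}$. The base case $l=0$ is immediate since $\hidvec{0}{i} = \fvec{i}$ and each leaf copy is literally $\fvec{i}$. For the inductive step, observe that by the construction in~\cref{gnn:eq:graphtensor_tn} the node $\deltatensordup{l}{i}{\gamma}$ has one leg per neighbor $j \in \neigh(i)$, connecting through $\weightmatdup{l}{j}{\dupmap{l}{i}{j}(\gamma)}$ down into a unique $\deltatensordup{l-1}{j}{\gamma'}$ (for $l \geq 2$) or into a leaf $\fvecdup{j}{\gamma'}$ (for $l = 1$). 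By the inductive hypothesis the sub-tree rooted at that lower $\delta$-tensor contracts to $\hidvec{l-1}{j}$, so after the intermediate contraction with $\weightmatdup{l}{j}{\cdot}$ (a copy of $\weightmat{l}$) the corresponding mode of $\deltatensordup{l}{i}{\gamma}$ receives the vector $\weightmat{l}\hidvec{l-1}{j}$. Applying the $\delta$-tensor identity above, the whole sub-tree evaluates to $\hadmp_{j \in \neigh(i)}\bigl(\weightmat{l}\hidvec{l-1}{j}\bigr) = \hidvec{l}{i}$, which is exactly the update rule from~\cref{gnn:eq:gnn_update,gnn:eq:prod_gnn_agg}. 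Finally, the output level $\deltatensor{\abs{\vertices}+1}$ has one leg connecting to each $\deltatensordup{L}{i}{1}$ for $i \in \vertices$, so by the same identity its contraction yields $\hadmp_{i\in\vertices}\hidvec{L}{i}$, and the remaining contraction with $\weightmat{o}$ produces $\funcgraph{\params}{\graph}(\fmat)$ as defined in~\cref{gnn:eq:graph_pred_gnn}. The vertex prediction case $\tngraph^{(t)}(\fmat)$ is identical up to the output layer, where one simply contracts $\weightmat{o}$ directly against $\deltatensordup{L}{t}{1}$ to recover $\weightmat{o}\hidvec{L}{t} = \funcvert{\params}{\graph}{t}(\fmat)$ from~\cref{gnn:eq:vertex_pred_gnn}.

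The main obstacle is purely combinatorial bookkeeping rather than any conceptual difficulty: one has to verify that the copy-indexing scheme in~\cref{gnn:eq:graphtensor_tn,gnn:eq:vertextensor_tn} provides a bijection between the legs of the $\delta$-tensors at layer $l$ and the weight matrix copies at layer $l$, so that the induction hypothesis can be invoked on a unique sub-tree per leg. Concretely, the map $\dupmap{l}{i}{j}(\gamma) = \gamma + \sum_{k<i,\, k\in\neigh(j)}\nwalk{L-l}{\{k\}}{\vertices}$ is designed so that, as $i$ ranges over $\neigh(j)$ and $\gamma$ ranges over $[\nwalk{L-l}{\{i\}}{\vertices}]$, the values $\dupmap{l}{i}{j}(\gamma)$ enumerate $[\nwalk{L-l+1}{\{j\}}{\vertices}]$ without repetition (because $\nwalk{L-l+1}{\{j\}}{\vertices} = \sum_{i\in\neigh(j)}\nwalk{L-l}{\{i\}}{\vertices}$ by walk decomposition at the first step). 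I would verify this enumeration claim as a short lemma and use it to conclude that each $\weightmatdup{l}{j}{\cdot}$ is attached to exactly one $\deltatensordup{l}{i}{\gamma}$ and exactly one lower-layer node, so that the tree structure used by the induction is well defined. Once this bijection is in hand, the inductive argument runs without further complication.
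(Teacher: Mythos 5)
Your proposal is correct and follows essentially the same route as the paper's proof: an induction over layers showing each sub-tree rooted at a $\delta$-tensor copy contracts to the corresponding hidden embedding, via the identity that contracting a $\delta$-tensor with vectors yields their Hadamard product (the paper's \cref{gnn:lem:delta_contract}), followed by handling the output layer separately for graph and vertex prediction. The only differences are cosmetic~---~you start the induction at $l=0$ rather than $l=1$ and propose to explicitly verify the bijectivity of the copy-indexing map, which the paper treats as part of the construction.
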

\begin{proof}[Proof sketch (proof in~\cref{gnn:app:proofs:tn_constructions})]
	For both $\tngraph (\fmat)$ and $\tngraph^{(t)} (\fmat)$, a straightforward induction over the layer $l \in [L]$ establishes that contracting the sub-tree whose root is $\deltatensordup{l}{i}{\gamma}$ results in $\hidvec{l}{i}$ for all $i \in \vertices$ and $\gamma$, where $\hidvec{l}{i}$ is the hidden embedding for $i$ at layer $l$ of the GNNs inducing $\funcgraph{\params}{\graph}$ and $\funcvert{\params}{\graph}{t}$, given vertex features $\fvec{1}, \ldots, \fvec{\abs{\vertices}}$.
	The proof concludes by showing that the contractions in the output layer of $\tngraph (\fmat)$ and $\tngraph^{(t)} (\fmat)$ reproduce the operations defining $\funcgraph{\params}{\graph} (\fmat)$ and $\funcvert{\params}{\graph}{t} (\fmat)$ in~\cref{gnn:eq:graph_pred_gnn,gnn:eq:vertex_pred_gnn}, respectively.
\end{proof}

\section{General Walk Index Sparsification}
\label{gnn:app:walk_index_sparsification_general}

Our edge sparsification algorithm~---~Walk Index Sparsification (WIS)~---~was obtained as an instance of the General Walk Index Sparsification (GWIS) scheme described in~\cref{gnn:sec:sparsification}.
\cref{alg:walk_index_sparsification_general} formally outlines this general scheme.

\begin{algorithm}[H]
	\caption{$(L - 1)$-General Walk Index Sparsification (GWIS)} 
	\label{alg:walk_index_sparsification_general}
	\begin{algorithmic}
		\STATE \!\!\!\!\textbf{Input:} {
			\begin{itemize}[leftmargin=2em]
				\vspace{-1mm}
				\item $\graph$~---~graph
				\vspace{-1mm}
				\item $L \in \N$~---~GNN depth
				\vspace{-1mm}
				\item $N \in \N$~---~number of edges to remove
				\vspace{-1mm}
				\item $\I_{1}, \ldots, \I_{M} \subseteq \vertices$~---~vertex subsets specifying walk indices to maintain for graph prediction
				\vspace{-1mm}
				\item $\J_1, \ldots, \J_{M'} \subseteq \vertices$ and $t_1, \ldots, t_{M'} \in \vertices$~---~vertex subsets specifying walk indices to maintain with respect to target vertices, for vertex prediction
				\vspace{-1mm}
				\item {\sc argmax }~---~operator over tuples $\brk{ \sbf^{(e)} \in \R^{M + M'}}_{e \in \edges}$ that returns the edge whose tuple is maximal according to some order
			\end{itemize}
		}
		\STATE \!\!\!\!\textbf{Result:} Sparsified graph obtained by removing $N$ edges from $\graph$ \\[0.4em]
		\hrule
		\vspace{2mm}
		\FOR{$n = 1, \ldots, N$}
		\vspace{0.5mm}
		\STATE \darkgray{\# for every edge, compute walk indices of partitions after the edge's removal}
		\vspace{0.5mm}
		\FOR{$e \in \edges$ (excluding self-loops)}
		\vspace{0.5mm}
		\STATE initialize $\sbf^{(e)} = \brk{0, \ldots, 0} \in \R^{M + M'}$
		\vspace{0.5mm}
		\STATE remove $e$ from $\graph$ (temporarily)
		\vspace{0.5mm}
		\STATE for every $m \in [M]$, set $\sbf^{(e)}_{m} = \walkin{L - 1}{ \I_{m} }$ ~\darkgray{\# $= \nwalk{L - 1}{ \cut_{ \I_{m} } }{ \vertices }$}		
		\vspace{0.5mm}
		\STATE for every $m \in [M']$, set $\sbf^{(e)}_{M + m} = \walkinvert{L - 1}{t_m}{ \J_m }$ ~\darkgray{\# $= \nwalk{L - 1}{ \cut_{\J_m} }{ \{ t_m\}}$}		
		\vspace{0.5mm}
		\STATE add $e$ back to $\graph$
		\vspace{0.5mm}
		\ENDFOR
		\vspace{0.5mm}
		\STATE \darkgray{\# prune edge whose removal harms walk indices the least according to the {\sc argmax} operator}
		\vspace{0.5mm}
		\STATE let $e' \in \text{\sc{argmax}}_{e \in \edges} \sbf^{(e)}$
		\vspace{0.5mm}
		\STATE \textbf{remove} $e'$ from $\graph$ (permanently)
		\vspace{0.5mm}
		\ENDFOR
	\end{algorithmic}
\end{algorithm}

\section{Efficient Implementation of $1$-Walk Index Sparsification}
\label{gnn:app:one_wis_efficient}

\cref{alg:walk_index_sparsification_vertex_one} (\cref{gnn:sec:sparsification}) provides an efficient implementation for $1$-WIS, \ie~\cref{alg:walk_index_sparsification_vertex} with $L = 2$.
In this appendix, we formalize the equivalence between the two algorithms, meaning, we establish that~\cref{alg:walk_index_sparsification_vertex_one} indeed implements $1$-WIS.

Examining some iteration $n \in [N]$ of $1$-WIS, let $\sbf \in \R^{ \abs{\vertices} }$ be the tuple defined by $\sbf_t = \walkinvert{1}{t}{ \{ t\} } = \nwalk{1}{ \cut_{ \{t\} } }{ \{t\} }$ for $t \in \vertices$.
Recall that $\cut_{\{t\}}$ is the set of vertices with an edge crossing the partition induced by $\{t\}$.
Thus, if $t$ is not isolated, then $\cut_{\{t\}} = \neigh (t)$ and $\sbf_t = \walkinvert{1}{t}{ \{t\}} = \abs{ \neigh (t) }$.
Otherwise, if $t$ is isolated, then $\cut_{ \{t\} } = \emptyset$ and $\sbf_t = \walkinvert{1}{t}{ \{t\}} = 0$.
$1$-WIS computes for each $e \in \edges$ (excluding self-loops) a tuple $\sbf^{(e)} \in \R^{\abs{\vertices}}$ holding in its $t$'th entry what the value of $\walkinvert{1}{t}{ \{t\}}$ would be if $e$ is to be removed, for all $t \in \vertices$.
Notice that $\sbf^{(e)}$ and $\sbf$ agree on all entries except for $i, j \in e$, since removing $e$ from the graph only affects the degrees of $i$ and $j$.
Specifically, for $i \in e$, either $\sbf^{(e)}_i = \sbf_i - 1 = \abs{\neigh (i)} - 1$ if the removal of $e$ did not isolate $i$, or $\sbf^{(e)}_i = \sbf_i - 2 = 0$ if it did (due to self-loops, if a vertex has a single edge to another then $\abs{ \neigh (i) } = 2$, so removing that edge changes $\walkinvert{1}{i}{\{i\}}$ from two to zero).
As a result, for any $e = \brk[c]{i, j} , e' = \brk[c]{i', j'} \in \edges$, after sorting the entries of $\sbf^{(e)}$ and $\sbf^{(e')}$ in ascending order we have that $\sbf^{(e')}$ is greater in lexicographic order than $\sbf^{(e)}$ if and only if the pair $\brk{ \min \brk[c]{ \abs{\neigh (i') }, \abs{ \neigh (j') } } , \max \brk[c]{ \abs{\neigh (i') }, \abs{ \neigh (j') } }}$ is greater in lexicographic order than $\brk{ \min \brk[c]{ \abs{\neigh (i) }, \abs{ \neigh (j) } } , \max \brk[c]{ \abs{\neigh (i) }, \abs{ \neigh (j) } }}$.
Therefore, at every iteration $n \in [N]$~\cref{alg:walk_index_sparsification_vertex_one} and $1$-WIS (\cref{alg:walk_index_sparsification_vertex} with $L = 2$) remove the same edge.

\section{Further Experiments and Implementation Details}
\label{gnn:app:experiments}

\subsection{Further Experiments}
\label{gnn:app:experiments:further}

\begin{figure*}[t!]
	\begin{center}
		\hspace{-2mm}
		\includegraphics[width=1\textwidth]{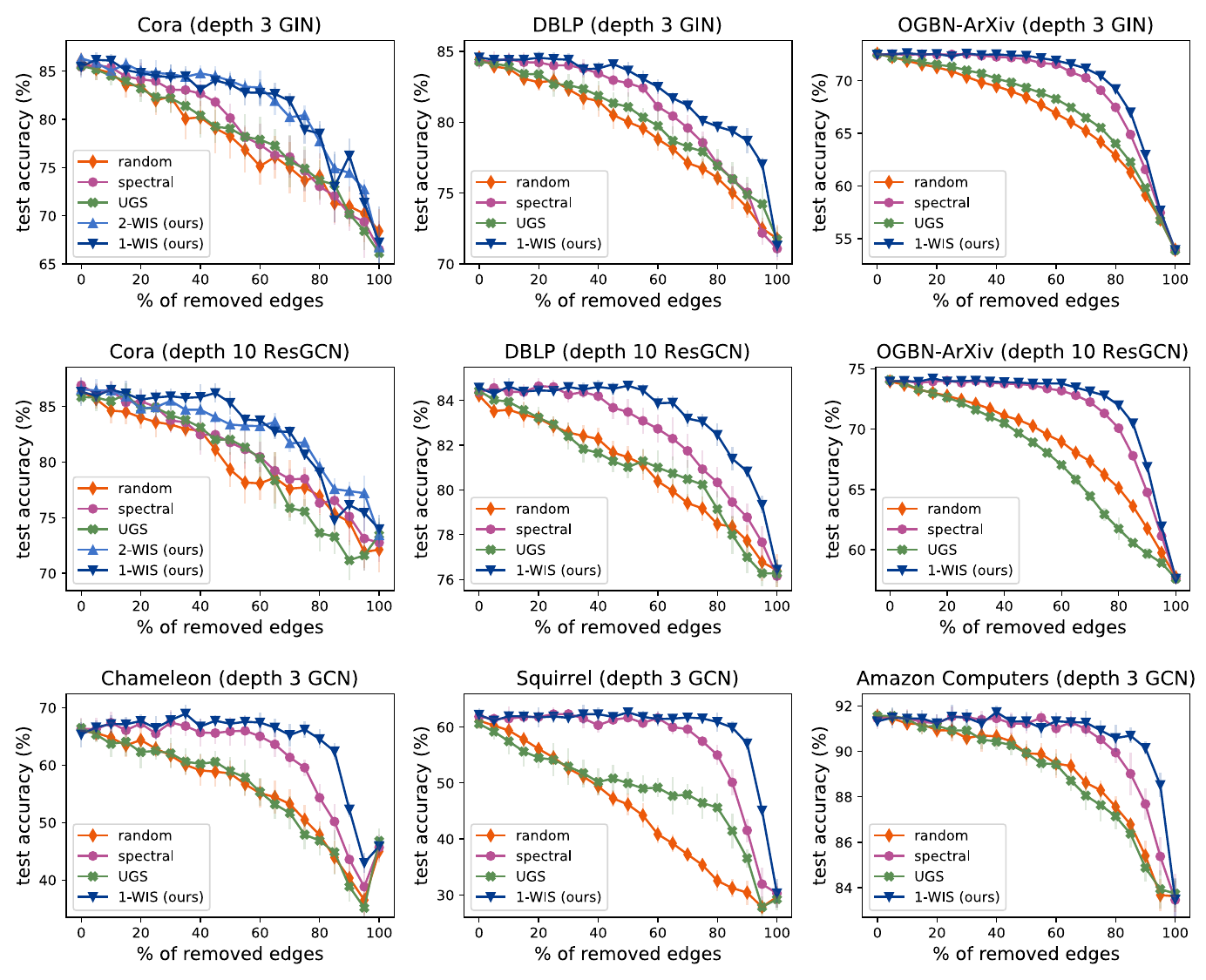}
	\end{center}
	\vspace{-2mm}
	\caption{
		Comparison of GNN accuracies following sparsification of input edges~---~WIS, the edge sparsification algorithm brought forth by our theory (\cref{alg:walk_index_sparsification_vertex}), markedly outperforms alternative methods.
		This figure supplements~\cref{gnn:fig:sparse_gcn} from~\cref{gnn:sec:sparsification:eval} by including experiments with: \emph{(i)} a depth $L = 3$ GIN over the Cora, DBLP, and OGBN-ArXiv datasets; \emph{(ii)} a depth $L = 10$ ResGCN over the Cora, DBLP, and OGBN-ArXiv datasets; and \emph{(iii)} a depth $L = 3$ GCN over the Chameleon, Squirrel, and Amazon Computers datasets.
		Markers and error bars report means and standard deviations, respectively, taken over ten runs per configuration for GCN and GIN, and over five runs per configuration for ResGCN (we use fewer runs due to the larger size of ResGCN).
		For further details see caption of~\cref{gnn:fig:sparse_gcn} as well as~\cref{gnn:app:experiments:details}.
	}
	\label{gnn:fig:sparse_gin}
\end{figure*}

\cref{gnn:fig:sparse_gin} supplements~\cref{gnn:fig:sparse_gcn} from~\cref{gnn:sec:sparsification:eval} by including experiments with additional: \emph{(i)} GNN architectures~---~GIN and ResGCN; and \emph{(ii)} datasets~---~Chameleon, Squirrel, and Amazon Computers.

\subsection{Further Implementation Details}
\label{gnn:app:experiments:details}

We provide implementation details omitted from our experimental reports (\cref{gnn:sec:analysis:experiments},~\cref{gnn:sec:sparsification}, and~\cref{gnn:app:experiments:further}).
Source code for reproducing our results and figures, based on the PyTorch~\cite{paszke2019pytorch} and PyTorch Geometric~\cite{fey2019fast} frameworks, can be found at \url{https://github.com/noamrazin/gnn_interactions}.
All experiments were run either on a single Nvidia RTX 2080 Ti GPU or a single Nvidia RTX A6000 GPU.

\subsubsection{Empirical Demonstration of Theoretical Analysis (\cref{tab:low_vs_high_walkindex})}
\label{gnn:app:experiments:details:analysis}

\textbf{Models.}~~All models used, \ie~GCN, GAT, and GIN, had three layers of width~$16$ with ReLU non-linearity.
To ease optimization, we added layer normalization~\cite{ba2016layer} after each one.
Mean aggregation and a linear output layer were applied over the last hidden embeddings for prediction.
As in the synthetic experiments of~\cite{alon2021bottleneck}, each GAT layer consisted of four attention heads.
Each GIN layer had its $\epsilon$ parameter fixed to zero and contained a two-layer feed-forward network, whose layers comprised a linear layer, batch normalization~\cite{ioffe2015batch}, and ReLU non-linearity.

\textbf{Data.}~~The datasets consisted of $10000$ train and $2000$ test graphs.
For every graph, we drew uniformly at random a label from $\{0, 1\}$ and an image from Fashion-MNIST.
Then, depending on the chosen label, another image was sampled either from the same class (for label $1$) or from all other classes (for label $0$).
We extracted patches of pixels from each image by flattening it into a vector and splitting the vector to $16$ equally sized segments.

\textbf{Optimization.}~~The binary cross-entropy loss was minimized via the Adam optimizer~\cite{kingma2015adam} with default $\beta_1, \beta_2$ coefficients and full-batches (\ie~every batch contained the whole training set).
Optimization proceeded until the train accuracy did not improve by at least $0.01$ over $1000$ consecutive epochs or $10000$ epochs elapsed.
The learning rates used for GCN, GAT, and GIN were $5 \cdot 10^{-3}, 5 \cdot 10^{-3},$ and $10^{-2}$, respectively.

\textbf{Hyperparameter tuning.}~~For each model separately, to tune the learning rate we carried out five runs (differing in random seed) with every value in the range $\{ 10^{-1}, 5 \cdot 10^{-2}, 10^{-2}, 5 \cdot 10^{-3}, 10^{-3} \}$ over the dataset whose essential partition has low walk index.
Since our interest resides in expressiveness, which manifests in ability to fit the training set, for every model we chose the learning rate that led to the highest mean train accuracy.

\subsubsection{Edge Sparsification (\cref{gnn:fig:sparse_gcn,gnn:fig:sparse_gin})}
\label{gnn:app:experiments:details:sparsification}

\textbf{Adaptations to UGS~\cite{chen2021unified}.}~~\cite{chen2021unified} proposed UGS as a framework for jointly pruning input graph edges and weights of a GNN.
At a high-level, UGS trains two differentiable masks, $\boldsymbol{m}_g$ and $\boldsymbol{m}_\theta$, that are multiplied with the graph adjacency matrix and the GNN's weights, respectively.
Then, after a certain number of optimization steps, a predefined percentage $p_g$ of graph edges are removed according to the magnitudes of entries in~$\boldsymbol{m}_g$, and similarly, $p_\theta$ percent of the GNN's weights are fixed to zero according to the magnitudes of entries in~$\boldsymbol{m}_\theta$.
This procedure continues in iterations, where each time the remaining GNN weights are rewinded to their initial values, until the desired sparsity levels are attained~---~see Algorithms~1 and~2 in~\cite{chen2021unified}.
To facilitate a fair comparison of our $(L - 1)$-WIS edge sparsification algorithm with UGS, we make the following adaptations to UGS.
\begin{itemize}[leftmargin=2em]
	\vspace{-2mm}
	\item We adapt UGS to only remove edges, which is equivalent to fixing the entries in the weight mask $\boldsymbol{m}_\theta$ to one and setting $p_\theta = 0$ in Algorithm~1 of~\cite{chen2021unified}.
	
	\item For comparing performance across a wider range of sparsity levels, the number of edges removed at each iteration is changed from $5\%$ of the current number of edges to $5\%$ of the original number of edges.
	
	\item Since our evaluation focuses on undirected graphs, we enforce the adjacency matrix mask $\boldsymbol{m}_g$ to be symmetric.
\end{itemize}

\textbf{Spectral sparsification~\cite{spielman2011graph}.}~~For Cora and DBLP, we used a Python implementation of the spectral sparsification algorithm from~\cite{spielman2011graph}, based on the PyGSP library implementation.\footnote{
	See \url{https://github.com/epfl-lts2/pygsp/}.
}
To enable more efficient experimentation over the larger scale OGBN-ArXiv dataset, we used a Julia implementation based on that from the Laplacians library.\footnote{
	See \url{https://github.com/danspielman/Laplacians.jl}.
}

\textbf{Models.}~~The GCN and GIN models had three layers of width~$64$ with ReLU non-linearity.
As in the experiments of~\cref{gnn:sec:analysis:experiments}, we added layer normalization~\cite{ba2016layer} after each one.
Every GIN layer had a trainable $\epsilon$ parameter and contained a two-layer feed-forward network, whose layers comprised a linear layer, batch normalization~\cite{ioffe2015batch}, and ReLU non-linearity.
For ResGCN, we used the implementation from~\cite{chen2021unified} with ten layers of width~$64$.
In all models, a linear output layer was applied over the last hidden embeddings for prediction.

\textbf{Data.}~~All datasets in our evaluation are multi-class vertex prediction tasks, each consisting of a single graph.
In Cora, DBLP, and OGBN-ArXiv, vertices represent scientific publications and edges stand for citation links.
In Chameleon and Squirrel, vertices represent web pages on Wikipedia and edges stand for mutual links between pages.
In Amazon Computers, vertices represent products and edges indicate that two products are frequently bought together.
For simplicity, we treat all graphs as undirected.
\cref{tab:dataset_sizes}~reports the number of vertices and undirected edges in each dataset.
For all datasets, except OGBN-ArXiv, we randomly split the labels of vertices into train, validation, and test sets comprising $80\%, 10\%,$ and $10\%$ of all labels, respectively.
For OGBN-ArXiv, we used the default split from~\cite{hu2020open}.

\begin{table}[H]
	\caption{
		Graph size of each dataset used for comparing edge sparsification algorithms in~\cref{gnn:fig:sparse_gcn,gnn:fig:sparse_gin}.
	}
	\label{tab:dataset_sizes}
	\vspace{-2mm}
	\begin{center}
		\fontsize{8}{8}\selectfont
			\begin{tabular}{lcc}
				\toprule
				& \# of Vertices & \# of Undirected Edges \\
				\midrule
				Cora & $2,\!708$ & $5,\!278$ \\
				DBLP & $17,\!716$ & $52,\!867$ \\
				OGBN-ArXiv & $169,\!343$ & $1,\!157,\!799$ \\
				Chameleon & $2,\!277$ & $31,\!396$ \\
				Squirrel & $5,\!201$ & $198,\!423$ \\
				Amazon Computers & $13,\!381$ & $245,\!861$ \\
				\bottomrule
			\end{tabular}
	\end{center}
\end{table}

\textbf{Optimization.}~~The cross-entropy loss was minimized via the Adam optimizer~\cite{kingma2015adam} with default $\beta_1, \beta_2$ coefficients and full-batches (\ie~every batch contained the whole training set).
Optimization proceeded until the validation accuracy did not improve by at least $0.01$ over $1000$ consecutive epochs or $10000$ epochs elapsed.
The test accuracies reported in~\cref{gnn:fig:sparse_gcn} are those achieved during the epochs with highest validation accuracies. 
\cref{tab:edge_sparse_hyperparam} specifies additional optimization hyperparameters.

\begin{table}[t]
	\caption{
		Optimization hyperparameters used in the experiments of~\cref{gnn:fig:sparse_gcn,gnn:fig:sparse_gin} per model and dataset.
	}
	\label{tab:edge_sparse_hyperparam}
	\vspace{-2mm}
	\begin{center}
		\fontsize{8}{8}\selectfont
			\begin{tabular}{llccc}
				\toprule
				& & Learning Rate & Weight Decay & Edge Mask $\ell_1$ Regularization of UGS \\
				\midrule
				\multirow{6}{*}{GCN}& Cora & $5 \cdot 10^{-4}$ & $10^{-3}$ & $10^{-2}$ \\
				& DBLP & $10^{-3}$ & $10^{-4}$ & $10^{-2}$ \\
				& OGBN-ArXiv & $10^{-3}$ & $0$ & $10^{-2}$ \\
				& Chameleon & $10^{-3}$ & $10^{-4}$  & $10^{-2}$ \\
				& Squirrel & $5 \cdot 10^{-4}$ & $0$  & $10^{-4}$ \\
				& Amazon Computers & $10^{-3}$ & $10^{-4}$  & $10^{-2}$ \\
				\midrule
				\multirow{3}{*}{GIN}& Cora & $10^{-3}$ & $10^{-3}$ & $10^{-2}$ \\
				& DBLP & $10^{-3}$ & $10^{-3}$ & $10^{-2}$ \\
				& OGBN-ArXiv & $10^{-4}$ & $0$ & $10^{-2}$ \\		
				\midrule
				\multirow{3}{*}{ResGCN}& Cora & $5 \cdot 10^{-4}$ & $10^{-3}$  & $10^{-4}$  \\
				& DBLP & $5 \cdot 10^{-4}$ & $10^{-4}$  & $10^{-4}$ \\
				& OGBN-ArXiv & $10^{-3}$ & $0$ & $10^{-2}$ \\
				\bottomrule
			\end{tabular}
	\end{center}
\end{table}

\textbf{Hyperparameter tuning.}~~For each combination of model and dataset separately, we tuned the learning rate, weight decay coefficient, and edge mask $\ell_1$ regularization coefficient for UGS, and applied the chosen values for evaluating all methods without further tuning (note that the edge mask $\ell_1$ regularization coefficient is relevant only for UGS).
In particular, we carried out a grid search over learning rates $\{ 10^{-3}, 5 \cdot 10^{-4}, 10^{-4} \}$, weight decay coefficients $\{ 10^{-3}, 10^{-4}, 0\}$, and edge mask $\ell_1$ regularization coefficients $\{ 10^{-2}, 10^{-3}, 10^{-4} \}$.
Per hyperparameter configuration, we ran ten repetitions of UGS (differing in random seed), each until all of the input graph's edges were removed.
At every edge sparsity level ($0\%, 5\%, 10\%, \ldots, 100\%$), in accordance with~\cite{chen2021unified}, we trained a new model with identical hyperparameters, but a fixed edge mask, over each of the ten graphs.
We chose the hyperparameters that led to the highest mean validation accuracy, taken over the sparsity levels and ten runs.

Due to the size of the ResGCN model, tuning its hyperparameters entails significant computational costs.
Thus, over the Cora and DBLP datasets, per hyperparameter configuration we ran five repetitions of UGS with ResGCN instead of ten.
For the large-scale OGBN-ArXiv dataset, we adopted the same hyperparameters used for GCN.

\textbf{Other.}~~To allow more efficient experimentation, we compute the edge removal order of $2$-WIS (\cref{alg:walk_index_sparsification_vertex}) in batches of size $100$.
Specifically, at each iteration of $2$-WIS, instead of removing the edge $e'$ with maximal walk index tuple $\sbf^{(e')}$, the $100$ edges with largest walk index tuples are removed.
For randomized edge sparsification algorithms~---~random pruning, the spectral sparsification method of~\cite{spielman2011graph}, and the adaptation of UGS~\cite{chen2021unified}~---~the evaluation runs for a given dataset and percentage of removed edges were carried over sparsified graphs obtained using different random seeds.

\section{Deferred Proofs}
\label{gnn:app:proofs}

\subsection{Additional Notation}
\label{gnn:app:proofs:notation}

For vectors, matrices, or tensors, parenthesized superscripts denote elements in a collection, \eg~$\brk{\abf^{(i)} \in \R^D }_{n = 1}^N$, while subscripts refer to entries, \eg~$\Abf_{d_1, d_2} \in \R$ is the $(d_1, d_2)$'th entry of $\Abf \in \R^{D_1 \times D_2}$.
A colon is used to indicate a range of entries, \eg~$\abf_{:d}$ is the first $d$ entries of $\abf \in \R^D$.
We use $\contractlast$ to denote tensor contractions (\cref{gnn:def:tensor_contraction}),~$\kronp$ to denote the Kronecker product, and~$\hadmp$ to denote the Hadamard product.
For $P \in \N_{\geq 0}$ , the $P$'th Hadamard power operator is denoted by $\hadmp^P$, \ie~$\brk[s]{ \hadmp^P \Abf }_{d_1, d_2} = \Abf_{d_1, d_2}^P$ for $\Abf \in \R^{D_1 \times D_2}$.
Lastly, when enumerating over sets of indices an ascending order is assumed.

\subsection{Proof of~\cref{gnn:thm:sep_rank_upper_bound}}
\label{gnn:app:proofs:sep_rank_upper_bound}

We assume familiarity with the basic concepts from tensor analysis introduced in~\cref{gnn:app:prod_gnn_as_tn:tensors}, and rely on the tensor network representations established for GNNs with product aggregation in~\cref{gnn:app:prod_gnn_as_tn}.
Specifically, we use the fact that for any $\fmat = \brk{\fvec{1}, \ldots, \fvec{\abs{\vertices}} } \in \R^{\indim \times \abs{\vertices} }$ there exist tree tensor networks $\tngraph (\fmat)$ and $\tngraph^{(t)} (\fmat)$ (described in~\cref{gnn:app:prod_gnn_as_tn:correspondence} and formally defined in~\cref{gnn:eq:graphtensor_tn,gnn:eq:vertextensor_tn}) such that: \emph{(i)}~their contraction yields $\funcgraph{\params}{\graph} (\fmat)$ and $\funcvert{\params}{\graph}{t} (\fmat)$, respectively (\cref{gnn:prop:tn_constructions}); and \emph{(ii)}~each of their leaves is associated with a vertex feature vector, \ie~one of $\fvec{1}, \ldots, \fvec{ \abs{\vertices} }$, whereas all other aspects of the tensor networks do not depend on $\fvec{1}, \ldots, \fvec{ \abs{\vertices} }$.

The proof proceeds as follows.
In Appendix~\ref{gnn:app:proofs:sep_rank_upper_bound:min_cut_tn_ub}, by importing machinery from tensor analysis literature (in particular, adapting Claim~7 from~\cite{levine2018deep}), we show that the separation ranks of $\funcgraph{\params}{\graph}$ and $\funcvert{\params}{\graph}{t}$ can be upper bounded via cuts in their corresponding tensor networks.
Namely, $\sepranknoflex{\funcgraph{\params}{\graph} }{\I}$ is at most the minimal multiplicative cut weight in $\tngraph (\fmat)$, among cuts separating leaves associated with vertices of the input graph in $\I$ from leaves associated with vertices of the input graph in $\I^c$, where multiplicative cut weight refers to the product of weights belonging to legs crossing the cut.
Similarly, $\sepranknoflex{ \funcvert{\params}{\graph}{t} }{\I}$ is at most the minimal multiplicative cut weight in $\tngraph^{(t)} (\fmat)$, among cuts of the same form.
We conclude in Appendices~\ref{gnn:app:proofs:sep_rank_upper_bound:graph} and~\ref{gnn:app:proofs:sep_rank_upper_bound:vertex} by applying this technique for upper bounding $\sepranknoflex{ \funcgraph{\params}{\graph} }{\I}$ and $\sepranknoflex{ \funcvert{\params}{\graph}{t} }{ \I}$, respectively, \ie~by finding cuts in the respective tensor networks with sufficiently low multiplicative weights.

\subsubsection{Upper Bounding Separation Rank via Multiplicative Cut Weight in Tensor Network}
\label{gnn:app:proofs:sep_rank_upper_bound:min_cut_tn_ub}

In a tensor network $\tngraph = \brk{ \tnvertices{\tngraph}, \tnedges{\tngraph}, \tnedgeweights{\tngraph} }$, every $\J_{\tngraph} \subseteq \tnvertices{\tngraph}$ induces a \emph{cut} $(\J_{\tngraph} , \J_{\tngraph}^c )$, \ie~a partition of the nodes into two sets.
We denote by 
\[
\tnedges{\tngraph} (\J_\tngraph) := \brk[c]{ \brk[c]{ u, v} \in \tnedges{\tngraph} : u \in \J_\tngraph , v \in \J_\tngraph^c }
\]
the set of legs crossing the cut, and define the \emph{multiplicative cut weight} of $\J_{\tngraph}$ to be the product of weights belonging to legs in $\tnedges{\tngraph} (\J_\tngraph)$,~\ie:
\[
\mulcutweight{\tngraph} ( \J_{\tngraph} ) := \prod\nolimits_{ e \in \tnedges{\tngraph} (\J_{\tngraph}) } \tnedgeweights{\tngraph} (e)
\text{\,.}
\]
For $\fmat = \brk{\fvec{1}, \ldots, \fvec{\abs{\vertices}} } \in \R^{\indim \times \abs{\vertices} }$, let $\tngraph (\fmat)$ and $\tngraph^{(t)} (\fmat)$ be the tensor networks corresponding to $\funcgraph{\params}{\graph} (\fmat)$ and $\funcvert{\params}{\graph}{t} (\fmat)$ (detailed in~\cref{gnn:app:prod_gnn_as_tn:correspondence}), respectively.
Both $\tngraph (\fmat)$ and $\tngraph^{(t)} (\fmat)$ adhere to a tree structure.
Each leaf node is associated with a vertex feature vector (\ie~one of $\fvec{1}, \ldots, \fvec{\abs{\vertices}}$), while interior nodes are associated with weight matrices or $\delta$-tensors.
The latter are tensors with modes of equal dimension holding ones on their hyper-diagonal and zeros elsewhere.
The restrictions imposed by $\delta$-tensors induce a modified notion of multiplicative cut weight, where legs incident to the same $\delta$-tensor only contribute once to the weight product (note that weights of legs connected to the same $\delta$-tensor are equal since they stand for mode dimensions).

\begin{definition}
	\label{gnn:def:modified_mul_cut_weight}
	For a tensor network $\tngraph = \brk{ \tnvertices{\tngraph}, \tnedges{\tngraph}, \tnedgeweights{\tngraph} }$ and subset of nodes $\J_{\tngraph} \subseteq \tnvertices{\tngraph}$, let $\tnedges{\tngraph} (\J_{\tngraph})$ be the set of edges crossing the cut $(\J_{\tngraph}, \J_{\tngraph}^c)$.
	Denote by $\modcutedges{\tngraph} (\J_{\tngraph}) \subseteq \tnedges{\tngraph} (\J_{\tngraph})$ a subset of legs containing for each $\delta$-tensor in $\tnvertices{\tngraph}$ only a single leg from $\tnedges{\tngraph} (\J_{\tngraph})$ incident to it, along with all legs in $\tnedges{\tngraph} (\J_{\tngraph}) $ not connected to $\delta$-tensors.
	Then, the \emph{modified multiplicative cut weight} of $\J_{\tngraph}$ is:
	\[
	\modcutweight{\tngraph} (\J_{\tngraph}) := \prod\nolimits_{ e \in \modcutedges{\tngraph} (\J_{\tngraph}) } \tnedgeweights{\tngraph} (e)
	\text{\,.}
	\]
\end{definition}

\cref{gnn:lem:min_cut_tn_sep_rank_ub} establishes that $\sepranknoflex{ \funcgraph{\params}{\graph} }{\I}$ and $\sepranknoflex{ \funcvert{\params}{\graph}{t} }{ \I}$ are upper bounded by the minimal modified multiplicative cut weights in $\tngraph (\fmat)$ and $\tngraph^{(t)} (\fmat)$, respectively, among cuts separating leaves associated with vertices in $\I$ from leaves associated vertices in $\I^c$.

\begin{lemma}
	\label{gnn:lem:min_cut_tn_sep_rank_ub}
	For any $\fmat = \brk{\fvec{1}, \ldots, \fvec{\abs{\vertices}} } \in \R^{\indim \times \abs{\vertices} }$, let $\tngraph (\fmat) = \brk{ \tnvertices{\tngraph (\fmat)}, \tnedges{\tngraph (\fmat)}, \tnedgeweights{\tngraph (\fmat)} }$ and $\tngraph^{(t)} (\fmat) = \brk{ \tnvertices{\tngraph^{(t)} (\fmat)}, \tnedges{\tngraph^{(t)} (\fmat)}, \tnedgeweights{\tngraph^{(t)} (\fmat)} }$ be the tensor network representations of $\funcgraph{\params}{\graph} (\fmat)$ and $\funcvert{\params}{\graph}{t} (\fmat)$ (described in~\cref{gnn:app:prod_gnn_as_tn:correspondence} and formally defined in~\cref{gnn:eq:graphtensor_tn,gnn:eq:vertextensor_tn}), respectively.
	Denote by $\tnverticesleaf{ \tngraph (\fmat) }{\I} \subseteq \tnvertices{ \tngraph (\fmat) }$ and $\tnverticesleaf{ \tngraph^{(t)} (\fmat) }{\I} \subseteq \tnvertices{ \tngraph^{(t)} (\fmat) }$ the sets of leaf nodes in $\tngraph (\fmat)$ and $\tngraph^{(t)} (\fmat)$, respectively, associated with vertices in $\I$ from the input graph $\graph$.
	Formally:
	\[
	\begin{split}
		\tnverticesleaf{ \tngraph (\fmat) }{\I} & := \brk[c]1{ \fvecdup{i}{\gamma} \in \tnvertices{ \tngraph (\fmat) } : i \in \I , \gamma \in [ \nwalk{L}{ \{i\} }{ \vertices} ] } \text{\,,} \\[0.5em]
		\tnverticesleaf{ \tngraph^{(t)} (\fmat) }{\I} & := \brk[c]1{ \fvecdup{i}{\gamma} \in \tnvertices{ \tngraph^{(t)} (\fmat) } : i \in \I , \gamma \in [ \nwalk{L}{ \{i\} }{ \{t\}} ] } \text{\,.}
	\end{split}
	\]
	Similarly, denote by $\tnverticesleaf{ \tngraph (\fmat) }{\I^c} \subseteq \tnvertices{ \tngraph (\fmat) }$ and $\tnverticesleaf{ \tngraph^{(t)} (\fmat) }{\I^c} \subseteq \tnvertices{ \tngraph^{(t)} (\fmat) }$ the sets of leaf nodes in $\tngraph (\fmat)$ and $\tngraph^{(t)} (\fmat)$, respectively, associated with vertices in $\I^c$.
	Then, the following hold:
	\begin{align}
		\text{(graph prediction)} \quad & \seprankbig{ \funcgraph{\params}{\graph} }{\I}
		\leq \!\!\!\!\!\!
		\min_{ \substack{ \J_{ \tngraph (\fmat) } \subseteq \tnvertices{ \tngraph (\fmat) } \\[0.2em] \text{s.t. } \tnverticesleaf{ \tngraph (\fmat) }{\I} \subseteq \J_{ \tngraph (\fmat) } \text{ and } \tnverticesleaf{ \tngraph (\fmat) }{\I^c} \subseteq \J^c_{\tngraph (\fmat)} } } \!\!\!\!\!\!\!\!\!\!\!\!\!\!\! \modcutweight{\tngraph (\fmat)} ( \J_{ \tngraph (\fmat) } )
		\text{\,,}
		\label{gnn:eq:sep_rank_min_cut_bound_graph} \\[0.5em]
		\text{(vertex prediction)} \quad & \seprankbig{ \funcvert{\params}{\graph}{ t } }{\I} 
		\leq \!\!\!\!\!\!
		\min_{ \substack{ \J_{ \tngraph^{(t)} (\fmat) } \subseteq \tnvertices{ \tngraph^{(t)} (\fmat) } \\[0.2em] \text{s.t. } \tnverticesleaf{ \tngraph^{(t)} (\fmat) }{\I} \subseteq \J_{ \tngraph^{(t)} (\fmat) } \text{ and } \tnverticesleaf{ \tngraph^{(t)} (\fmat) }{\I^c} \subseteq \J^c_{\tngraph^{(t)} (\fmat)} } } \!\!\!\!\!\!\!\!\!\!\!\!\!\!\! \modcutweight{\tngraph^{(t)} (\fmat)} ( \J_{ \tngraph^{(t)} (\fmat) } )	
		\text{\,,}
		\label{gnn:eq:sep_rank_min_cut_bound_vertex}
	\end{align}
	where $\modcutweight{\tngraph (\fmat)} ( \J_{ \tngraph (\fmat) } )$ is the modified multiplicative cut weight of $\J_{ \tngraph (\fmat) }$ in $\tngraph (\fmat)$ and $\modcutweight{\tngraph^{(t)} (\fmat)} ( \J_{ \tngraph^{(t)} (\fmat) } )$ is the modified multiplicative cut weight of $\J_{ \tngraph^{(t)} (\fmat) }$ in $\tngraph^{(t)} (\fmat)$ (\cref{gnn:def:modified_mul_cut_weight}).
\end{lemma}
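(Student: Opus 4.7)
The plan is to adapt the standard tensor-network min-cut argument (as in Claim~7 of \cite{levine2018deep}) to our setting. Fix $\fmat = \brk{\fvec{1}, \ldots, \fvec{\abs{\vertices}}}$ and any cut $\J_{\tngraph(\fmat)} \subseteq \tnvertices{\tngraph(\fmat)}$ satisfying $\tnverticesleaf{\tngraph(\fmat)}{\I} \subseteq \J_{\tngraph(\fmat)}$ and $\tnverticesleaf{\tngraph(\fmat)}{\I^c} \subseteq \J_{\tngraph(\fmat)}^c$. The first step is to reorder the sequence of contractions described by $\tngraph(\fmat)$: contract all nodes in $\J_{\tngraph(\fmat)}$ first to obtain an intermediate tensor $\A^{\I}$, and similarly contract all nodes in $\J_{\tngraph(\fmat)}^c$ into $\A^{\I^c}$. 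The open modes of $\A^\I$ and $\A^{\I^c}$ are exactly the legs crossing the cut, and the final output is a contraction of these two tensors along those cut modes. Since vertex features are carried only by leaves and these are split by the assumption on $\J_{\tngraph(\fmat)}$, we have $\A^\I = \A^\I(\fmat_\I)$ with $\fmat_\I := (\fvec{i})_{i \in \I}$ and $\A^{\I^c} = \A^{\I^c}(\fmat_{\I^c})$.

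Writing the remaining contraction as an explicit sum over index assignments to the cut legs gives $\funcgraph{\params}{\graph}(\fmat) = \sum_{\mathbf{k}} \A^\I(\fmat_\I)_{\mathbf{k}} \cdot \A^{\I^c}(\fmat_{\I^c})_{\mathbf{k}}$, where $\mathbf{k}$ ranges over $\prod_{e \in \tnedges{\tngraph(\fmat)}(\J_{\tngraph(\fmat)})} [\tnedgeweights{\tngraph(\fmat)}(e)]$. Each summand is a product of a function of $\fmat_\I$ and a function of $\fmat_{\I^c}$, so by \cref{gnn:def:sep_rank} this already yields the weaker bound $\seprankbig{\funcgraph{\params}{\graph}}{\I} \leq \mulcutweight{\tngraph(\fmat)}(\J_{\tngraph(\fmat)})$.

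The main obstacle is tightening this to $\modcutweight{\tngraph(\fmat)}(\J_{\tngraph(\fmat)})$ by exploiting the diagonal structure of the $\delta$-tensor nodes. The key observation is that for every $\delta$-tensor whose incident legs are split across the cut, only a single $\hdim$-valued index is actually transmitted across, not one per crossing leg. This can be made rigorous through the identity $\deltatensor{n}_{d_1,\ldots,d_n} = \sum_{k=1}^{\hdim} \deltatensor{n_1+1}_{(d_i)_{i \in S_1},\, k} \cdot \deltatensor{n_2+1}_{(d_j)_{j \in S_2},\, k}$, valid for any partition $S_1 \cupdot S_2 = [n]$ with $\abs{S_1} = n_1, \abs{S_2} = n_2$ (both sides vanish unless $d_1 = \cdots = d_n$, in which case both equal $1$). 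Applying this identity to replace each cut $\delta$-tensor by two $\delta$-tensor pieces, one on each side of the cut, joined by a single new leg of weight $\hdim$, produces an equivalent tensor network whose cut legs are in bijection with $\modcutedges{\tngraph(\fmat)}(\J_{\tngraph(\fmat)})$. Repeating the sum-of-products argument on this refactored network then delivers $\seprankbig{\funcgraph{\params}{\graph}}{\I} \leq \modcutweight{\tngraph(\fmat)}(\J_{\tngraph(\fmat)})$. Minimizing over admissible $\J_{\tngraph(\fmat)}$ yields \cref{gnn:eq:sep_rank_min_cut_bound_graph}; repeating the entire argument verbatim on $\tngraph^{(t)}(\fmat)$, whose structure differs only in the output layer and in the number of leaf/interior copies, yields \cref{gnn:eq:sep_rank_min_cut_bound_vertex}.
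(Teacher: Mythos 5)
Your proposal is correct in its essentials but takes a genuinely different route from the paper. The paper first separates features from weights: it contracts all interior nodes of $\tngraph (\fmat)$ into a single order-$\nwalk{L}{\vertices}{\vertices}$ tensor $\tensorendgraph$, writes $\funcgraph{\params}{\graph} (\fmat)$ as a bilinear form in $\kronp_{n} \fvec{\mu(n)}$ via the matricization $\mat{\tensorendgraph}{\mu^{-1}(\I)}$, shows $\seprankbig{ \funcgraph{\params}{\graph} }{\I} \leq \rank \mat{\tensorendgraph}{\mu^{-1}(\I)}$, and then invokes Claim~7 of~\cite{levine2018deep} as a black box to bound that matricization rank by the minimal modified cut weight. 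You instead work directly with the function and the full network (feature leaves included): you reorder the contractions across the cut, read off a separable decomposition with one summand per assignment to the cut legs, and then re-derive the $\delta$-aware improvement yourself via the splitting identity. What your route buys is self-containedness~---~you effectively re-prove the relevant case of the imported claim~---~at the cost of redoing the combinatorics that the citation handles; the paper's route is shorter but leans on the external result and on the intermediate object $\tensorendgraph$.

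One point needs care in your tightening step: the claim that after splitting each cut $\delta$-tensor the crossing legs are \emph{in bijection} with $\modcutedges{\tngraph(\fmat)}(\J_{\tngraph(\fmat)})$. In $\tngraph (\fmat)$ the output node $\deltatensor{\abs{\vertices}+1}$ is adjacent to the $\delta$-tensors $\deltatensordup{L}{i}{1}$, and for the cuts used downstream (Appendix~\ref{gnn:app:proofs:sep_rank_upper_bound:graph}) this whole cluster straddles the cut. Splitting each of these nodes separately charges one factor of $\hdim$ per node, \ie~$\hdim^{\abs{\I}+1}$ for the cluster, whereas the paper's accounting of $\modcutweight{\tngraph(\fmat)}(\cdot)$ charges the cluster only $\hdim^{1}$ (all second-type legs are absorbed into a single factor for $\deltatensor{\abs{\vertices}+1}$). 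The fix is within the spirit of your argument: first contract every connected group of adjacent $\delta$-tensors into a single (higher-order) $\delta$-tensor~---~a contraction of $\delta$-tensors along shared legs is again a $\delta$-tensor~---~and only then apply your splitting identity once per resulting cluster that has legs on both sides. With that amendment your bound matches $\modcutweight{\tngraph(\fmat)}(\J_{\tngraph(\fmat)})$ as the paper uses it, and the rest of the argument, including the verbatim transfer to $\tngraph^{(t)}(\fmat)$, goes through.
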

\begin{proof}
	We first prove~\cref{gnn:eq:sep_rank_min_cut_bound_graph}.
	Examining $\tngraph (\fmat)$, notice that: \emph{(i)}~by~\cref{gnn:prop:tn_constructions} its contraction yields $\funcgraph{\params}{\graph} (\fmat)$;~\emph{(ii)} it has a tree structure; and~\emph{(iii)}~each of its leaves is associated with a vertex feature vector, \ie~one of $\fvec{1}, \ldots, \fvec{ \abs{\vertices} }$, whereas all other aspects of the tensor network do not depend on $\fvec{1}, \ldots, \fvec{ \abs{\vertices} }$.
	Specifically, for any $\fmat$ and $\fmat'$ the nodes, legs, and leg weights of $\tngraph (\fmat)$ and $\tngraph (\fmat')$ are identical, up to the assignment of features in the leaf nodes.
	Let $\tensorendgraph \in \R^{ \indim \times \cdots \times \indim }$ be the order $\nwalk{L}{ \vertices }{ \vertices }$ tensor obtained by contracting all interior nodes in $\tngraph (\fmat)$.
	The above implies that we may write $\funcgraph{\params}{\graph} (\fmat)$ as a contraction of $\tensorendgraph$ with $\fvec{1}, \ldots, \fvec{ \abs{ \vertices } }$.
	Specifically, it holds that:
	\be
	\funcgraph{\params}{\graph} (\fmat) = \tensorendgraph \contract{n \in [\nwalk{L}{\vertices}{\vertices}]} \fvec{ \modemapgraph (n) }
	\text{\,,}
	\label{gnn:eq:funcgraph_as_contraction_sep_rank}
	\ee
	for any $\fmat = \brk{\fvec{1}, \ldots, \fvec{\abs{\vertices}} } \in \R^{\indim \times \abs{\vertices} }$, where $\modemapgraph : [ \nwalk{L}{\vertices}{\vertices} ] \to \vertices$ maps a mode index of $\tensorendgraph$ to the appropriate vertex of $\graph$ according to $\tngraph (\fmat)$.
	Let $\mu^{-1} ( \I ) := \brk[c]{ n \in [ \nwalk{L}{\vertices}{\vertices} ] : \mu (n) \in \I }$ be the mode indices of $\tensorendgraph$ corresponding to vertices in~$\I$.
	Invoking~\cref{gnn:lem:contraction_matricization} leads to the following matricized form of~\cref{gnn:eq:funcgraph_as_contraction_sep_rank}:
	\[
	\funcgraph{\params}{\graph} (\fmat) = \brk1{ \kronp_{ n \in \mu^{-1} (\I) } \fvec{\mu (n)} }^\top \mat{ \tensorendgraph }{ \mu^{-1} (\I)} \brk1{ \kronp_{n \in \mu^{-1} ( \I^{c} ) } \fvec{ \mu (n) } }
	\text{\,,}
	\]
	where $\kronp$ denotes the Kronecker product.
	
	We claim that $\sepranknoflex{ \funcgraph{\params}{\graph} }{\I} \leq \rank \mat{ \tensorendgraph }{ \mu^{-1} (\I)}$.
	To see it is so, denote
	\[
	R := \rank \mat{ \tensorendgraph }{ \mu^{-1} (\I)}
	\]
	and let $\ubf^{(1)}, \ldots, \ubf^{(R)} \in \R^{ \indim^{ \nwalk{L}{\I}{\vertices} } }$ and $\bar{\ubf}^{(1)}, \ldots, \bar{\ubf}^{(R)} \in \R^{ \indim^{ \nwalk{L}{\I^c}{\vertices} } }$ be such that 
	\[
	\mat{ \tensorendgraph }{ \mu^{-1} (\I)} = \sum\nolimits_{r = 1}^R \ubf^{(r)} \brk{ \bar{\ubf}^{(r)} }^\top
	\text{\,.}
	\]
	Then, defining $g^{(r)} : ( \R^{\indim} )^{ \abs{\I } } \to \R$ and $\bar{g}^{(r)} : ( \R^{\indim} )^{ \abs{ \I^c } } \to \R$, for $r \in [R]$, as:
	\[
	g^{(r)} ( \fmat_{\I} ) := \inprod{ \kronp_{ n \in \mu^{-1} (\I) } \fvec{\mu (n)} }{ \ubf^{(r)} } \quad , \quad \bar{g}^{(r)} (\fmat_{ \I^c} ) := \inprod{ \kronp_{n \in \mu^{-1} ( \I^{c} ) } \fvec{ \mu (n) } }{ \bar{\ubf}^{(r)} }
	\text{\,,}
	\]
	where $\fmat_\I := \brk{ \fvec{i} }_{i \in \I}$ and $\fmat_{\I^c} := \brk{ \fvec{j} }_{j \in \I^c}$, we have that:
	\[
	\begin{split}
		\funcgraph{\params}{\graph} (\fmat) & = \brk1{ \kronp_{ n \in \mu^{-1} (\I) } \fvec{\mu (n)} }^\top \brk2{ \sum\nolimits_{r = 1}^R \ubf^{(r)} \brk1{ \bar{\ubf}^{(r)} }^\top } \brk1{ \kronp_{n \in \mu^{-1} ( \I^{c} ) } \fvec{ \mu (n) } } \\
		& = \sum\nolimits_{r = 1}^R \inprod{ \kronp_{ n \in \mu^{-1} (\I) } \fvec{\mu (n)} }{ \ubf^{(r)} } \cdot \inprod{ \kronp_{n \in \mu^{-1} ( \I^{c} ) } \fvec{ \mu (n) } }{ \bar{\ubf}^{(r)} } \\
		& = \sum\nolimits_{r = 1}^R g^{(r)} ( \fmat_{\I} ) \cdot \bar{g}^{(r)} (\fmat_{ \I^c} )
		\text{\,.}
	\end{split}
	\]
	Since $\sepranknoflex{ \funcgraph{\params}{\graph} }{\I}$ is the minimal number of summands in a representation of this form of $\funcgraph{\params}{\graph}$, indeed, $\sepranknoflex{ \funcgraph{\params}{\graph} }{\I} \leq R = \rank \mat{ \tensorendgraph }{ \mu^{-1} (\I)}$.
	
	What remains is to apply~Claim~7 from~\cite{levine2018deep}, which upper bounds the rank of a tensor's matricization with multiplicative cut weights in a tree tensor network.
	In particular, consider an order $N \in \N$ tensor $\Atensor$ produced by contracting a tree tensor network $\tngraph$.
	Then, for any $\K \subseteq [N]$ we have that $\rank \mat{\Atensor}{\K}$ is at most the minimal modified multiplicative cut weight in $\tngraph$, among cuts separating leaves corresponding to modes $\K$ from leaves corresponding to modes $\K^c$.
	Thus, invoking Claim~7 from~\cite{levine2018deep} establishes~\cref{gnn:eq:sep_rank_min_cut_bound_graph}:
	\[
	\seprankbig{ \funcgraph{\params}{\graph} }{\I}
	\leq
	\rank \mat{ \tensorendgraph }{ \mu^{-1} (\I)} \leq 
	\min_{ \substack{ \J_{ \tngraph (\fmat) } \subseteq \tnvertices{ \tngraph (\fmat) } \\[0.2em] \text{s.t. } \tnverticesleaf{ \tngraph (\fmat) }{\I} \subseteq \J_{ \tngraph (\fmat) } \text{ and } \tnverticesleaf{ \tngraph (\fmat) }{\I^c} \subseteq \J^c_{\tngraph (\fmat)} } } \!\!\!\!\!\!\!\!\!\!\!\!\!\!\! \modcutweight{\tngraph (\fmat)} ( \J_{ \tngraph (\fmat) } )
	\text{\,.}
	\]
	
	\medskip
	
	\cref{gnn:eq:sep_rank_min_cut_bound_vertex} readily follows by steps analogous to those used above for proving~\cref{gnn:eq:sep_rank_min_cut_bound_graph}.
\end{proof}

\subsubsection{Cut in Tensor Network for Graph Prediction (Proof of~\cref{gnn:eq:sep_rank_upper_bound_graph_pred})}
\label{gnn:app:proofs:sep_rank_upper_bound:graph}

For $\fmat = \brk{\fvec{1}, \ldots, \fvec{\abs{\vertices}} } \in \R^{\indim \times \abs{\vertices} }$, let $\tngraph (\fmat) = \brk{ \tnvertices{\tngraph (\fmat)}, \tnedges{\tngraph (\fmat)}, \tnedgeweights{\tngraph (\fmat)} }$ be the tensor network corresponding to $\funcgraph{\params}{\graph} (\fmat)$ (detailed in~\cref{gnn:app:prod_gnn_as_tn:correspondence} and formally defined in~\cref{gnn:eq:graphtensor_tn}).
By~\cref{gnn:lem:min_cut_tn_sep_rank_ub}, to prove that
\[
\seprankbig{ \funcgraph{\params}{\graph} }{\I} \leq \hdim^{4 \nwalk{L - 1}{\cut_\I}{\vertices} + 1 }
\text{\,,}
\]
it suffices to find $\J_{ \tngraph (\fmat) } \subseteq \tnvertices{\tngraph (\fmat) }$ satisfying: \emph{(i)}~leaves of $\tngraph (\fmat)$ associated with vertices in~$\I$ are in $\J_{ \tngraph (\fmat) }$, whereas leaves associated with vertices in~$\I^c$ are not in $\J_{ \tngraph (\fmat) }$; and \emph{(ii)}~$\modcutweight{\tngraph (\fmat)} ( \J_{ \tngraph (\fmat) } ) \leq \hdim^{4 \nwalk{L - 1}{\cut_\I}{\vertices} + 1}$, where $\modcutweight{\tngraph (\fmat)} ( \J_{ \tngraph (\fmat) } )$ is the modified multiplicative cut weight of $\J_{ \tngraph (\fmat) }$ (\cref{gnn:def:modified_mul_cut_weight}).
To this end, define $\J_{ \tngraph (\fmat) }$ to hold all nodes in $\tnvertices{\tngraph (\fmat) }$ corresponding to vertices in~$\I$.
Formally:
\[
\begin{split}
	\J_{ \tngraph (\fmat) } := &~\brk[c]2{ \fvecdup{i}{\gamma} : i \in \I, \gamma \in [ \nwalk{L}{ \{i\} }{\vertices} ] } \cup \\
	&~\brk[c]2{ \weightmatdup{l}{i}{\gamma} : l \in [L], i \in \I, \gamma \in [ \nwalk{L - l + 1}{ \{i\} }{\vertices} ] } \cup \\
	&~\brk[c]2{ \deltatensordup{l}{i}{\gamma} : l \in [L], i \in \I, \gamma \in [ \nwalk{L - l}{ \{i\} }{ \vertices } ] }
	\text{\,.}
\end{split}
\]
Clearly, $\J_{ \tngraph (\fmat) }$ upholds \emph{(i)}.

As for \emph{(ii)}, there are two types of legs crossing the cut induced by $\J_{ \tngraph (\fmat) }$ in $\tngraph (\fmat)$.
First, are those connecting a $\delta$-tensor with a weight matrix in the same layer, where one is associated with a vertex in~$\I$ and the other with a vertex in~$\I^c$.
That is, legs connecting $\deltatensordup{l}{i}{\gamma}$ with $\weightmatdup{l}{ \neigh (i)_j }{ \dupmap{l}{i}{j} (\gamma) }$, where $i \in \vertices$ and $\neigh (i)_j \in \vertices$ are on different sides of the partition $(\I, \I^c)$ in the input graph, for $j \in [\abs{ \neigh (i) } ], l \in [L] , \gamma \in [ \nwalk{L - l}{ \{i\}}{\vertices} ]$.
The $\delta$-tensors participating in these legs are exactly those associated with some $i \in \cut_\I$ (recall $\cut_\I$ is the set of vertices with an edge crossing the partition $(\I, \I^c)$).
So, for every $l \in [L]$ and $i \in \cut_\I$ there are $\nwalk{L - l}{\{i\}}{\vertices}$ such $\delta$-tensors.
Second, are legs from $\delta$-tensors associated with $i \in \I$ in the $L$'th layer to the $\delta$-tensor in the output layer of $\tngraph (\fmat)$.
That is, legs connecting $\deltatensordup{L}{i}{1}$ with $\deltatensor{ \abs{\vertices} + 1 }$, for $i \in \I$.
Legs incident to the same $\delta$-tensor only contribute once to $\modcutweight{\tngraph (\fmat)} ( \J_{ \tngraph (\fmat) } )$.
Thus, since the weights of all legs connected to $\delta$-tensors are equal to $\hdim$, we have that:
\[
\modcutweight{\tngraph (\fmat)} (\J_{\tngraph (\fmat)}) \leq \hdim^{ 1 + \sum\nolimits_{ l = 1 }^L \sum\nolimits_{i \in \cut_\I} \nwalk{L - l}{\{i\}}{\vertices} } = \hdim^{ 1 + \sum\nolimits_{ l = 1 }^L \nwalk{L - l}{ \cut_\I }{\vertices} }
\text{\,.}
\]
Lastly, it remains to show $\sum\nolimits_{ l = 1 }^L \nwalk{L - l}{ \cut_\I }{\vertices} \leq 4 \nwalk{L - 1}{ \cut_\I }{\vertices}$, since in that case~\cref{gnn:lem:min_cut_tn_sep_rank_ub} implies:
\[
\seprankbig{ \funcgraph{\params}{\graph} }{\I} \leq \modcutweight{\tngraph (\fmat)} (\J_{\tngraph (\fmat)}) \leq \hdim^{ 4 \nwalk{L - 1}{ \cut_\I }{\vertices} + 1 }
\text{\,,}
\]
which yields~\cref{gnn:eq:sep_rank_upper_bound_graph_pred} by taking the log of both sides.

The main idea is that, in an undirected graph with self-loops, the number of length $l \in \N$ walks from vertices with at least one neighbor decays exponentially when $l$ decreases.
Observe that $\nwalk{l}{ \cut_\I }{\vertices} \leq \nwalk{l + 1}{ \cut_\I }{\vertices}$ for all $l \in \N$.
Hence:
\be
\sum\nolimits_{ l = 1 }^L \nwalk{L - l}{ \cut_\I }{\vertices} \leq 2 \sum\nolimits_{l \in \{1, 3, \ldots, L - 1\} } \nwalk{L - l}{ \cut_\I }{\vertices}
\text{\,.}
\label{gnn:eq:nwalk_increasing_undirected_graph}
\ee
Furthermore, any length $l \in \N_{\geq 0}$ walk $i_0, i_1, \ldots, i_l \in \vertices$ from $\cut_\I$ induces at least two walks of length $l + 2$ from $\cut_\I$, distinct from those induced by other length $l$ walks~---~one which goes twice through the self-loop of $i_0$ and then proceeds according to the length $l$ walk, \ie~$i_0, i_0, i_0, i_1, \ldots, i_l$, and another that goes to a neighboring vertex (exists since $i_0 \in \cut_\I$), returns to $i_0$, and then proceeds according to the length $l$ walk.
This means that $\nwalk{L - l}{ \cut_\I}{\vertices} \leq 2^{-1} \cdot \nwalk{L - l + 2}{ \cut_\I}{\vertices} \leq \cdots \leq 2^{ - \lfloor l / 2 \rfloor } \cdot \nwalk{L -1 }{\cut_\I}{\vertices}$ for all $l \in \{ 3, 5, \ldots, L - 1\}$.
Going back to~\cref{gnn:eq:nwalk_increasing_undirected_graph}, this leads to:
\[
\begin{split}
	\sum\nolimits_{ l = 1 }^L \nwalk{L - l}{ \cut_\I }{\vertices}
	& \leq 
	2 \sum\nolimits_{l \in \{1, 3, \ldots, L - 1\} } 2^{ \lfloor l / 2 \rfloor } \cdot \nwalk{L - 1}{ \cut_\I }{\vertices} \\
	& \leq 
	2 \sum\nolimits_{l = 0}^{\infty} 2^{ - l} \cdot \nwalk{L - 1}{ \cut_\I }{\vertices} \\
	& = 4 \nwalk{L - 1}{ \cut_\I }{ \vertices }
	\text{\,,}
\end{split}
\]
completing the proof of~\cref{gnn:eq:sep_rank_upper_bound_graph_pred}.

\subsubsection{Cut in Tensor Network for Vertex Prediction (Proof of~\cref{gnn:eq:sep_rank_upper_bound_vertex_pred})}
\label{gnn:app:proofs:sep_rank_upper_bound:vertex}

This part of the proof follows a line similar to that of Appendix~\ref{gnn:app:proofs:sep_rank_upper_bound:graph}, with differences stemming from the distinction between the operation of a GNN over graph and vertex prediction tasks.

For $\fmat = \brk{\fvec{1}, \ldots, \fvec{\abs{\vertices}} } \in \R^{\indim \times \abs{\vertices} }$, let $\tngraph^{(t)} (\fmat) = \brk{ \tnvertices{\tngraph^{(t)} (\fmat)}, \tnedges{\tngraph^{(t)} (\fmat)}, \tnedgeweights{\tngraph^{(t)} (\fmat)} }$ be the tensor network corresponding to $\funcvert{\params}{\graph}{t} (\fmat)$ (detailed in~\cref{gnn:app:prod_gnn_as_tn:correspondence} and formally defined in~\cref{gnn:eq:vertextensor_tn}).
By~\cref{gnn:lem:min_cut_tn_sep_rank_ub}, to prove that
\[
\seprankbig{ \funcvert{\params}{\graph}{t} }{\I} \leq \hdim^{4 \nwalk{L - 1}{\cut_\I}{ \{t\} } }
\text{\,,}
\]
it suffices to find $\J_{ \tngraph^{(t)} (\fmat) } \subseteq \tnvertices{\tngraph^{(t)} (\fmat) }$ satisfying: \emph{(i)}~leaves of $\tngraph^{(t)} (\fmat)$ associated with vertices in~$\I$ are in $\J_{ \tngraph^{(t)} (\fmat) }$, whereas leaves associated with vertices in~$\I^c$ are not in $\J_{ \tngraph^{(t)} (\fmat) }$; and \emph{(ii)}~$\modcutweight{\tngraph^{(t)} (\fmat)} ( \J_{ \tngraph^{(t)} (\fmat) } ) \leq \hdim^{4 \nwalk{L - 1}{\cut_\I}{ \{t\}}}$, where $\modcutweight{\tngraph^{(t)} (\fmat)} ( \J_{ \tngraph^{(t)} (\fmat) } )$ is the modified multiplicative cut weight of $\J_{ \tngraph^{(t)} (\fmat) }$ (\cref{gnn:def:modified_mul_cut_weight}).
To this end, define $\J_{ \tngraph^{(t)} (\fmat) }$ to hold all nodes in $\tnvertices{\tngraph^{(t)} (\fmat) }$ corresponding to vertices in~$\I$.
Formally:
\[
\begin{split}
	\J_{ \tngraph^{(t)} (\fmat) } := &~\brk[c]2{ \fvecdup{i}{\gamma} : i \in \I, \gamma \in [ \nwalk{L}{ \{i\} }{ \{t\}} ] } \cup \\
	&~\brk[c]2{ \weightmatdup{l}{i}{\gamma} : l \in [L], i \in \I, \gamma \in [ \nwalk{L - l + 1}{ \{i\} }{\{t\}} ] } \cup \\
	&~\brk[c]2{ \deltatensordup{l}{i}{\gamma} : l \in [L], i \in \I, \gamma \in [ \nwalk{L - l}{ \{i\} }{ \{t\} } ] } \cup \\
	&~\WW^{(o)}
	\text{\,,}
\end{split}
\]
where $\WW^{(o)} := \brk[c]{ \weightmat{o} }$ if $t \in \I$ and $\WW^{(o)} := \emptyset$ otherwise.
Clearly, $\J_{ \tngraph^{(t)} (\fmat) }$ upholds \emph{(i)}.

As for \emph{(ii)}, the legs crossing the cut induced by $\J_{ \tngraph^{(t)} (\fmat) }$ in $\tngraph^{(t)} (\fmat)$ are those connecting a $\delta$-tensor with a weight matrix in the same layer, where one is associated with a vertex in~$\I$ and the other with a vertex in~$\I^c$.
That is, legs connecting $\deltatensordup{l}{i}{\gamma}$ with $\weightmatdup{l}{ \neigh (i)_j }{ \dupmapvertex{l}{i}{j} (\gamma) }$, where $i \in \vertices$ and $\neigh (i)_j \in \vertices$ are on different sides of the partition $(\I, \I^c)$ in the input graph, for $j \in [\abs{ \neigh (i) } ], l \in [L] , \gamma \in [ \nwalk{L - l}{ \{i\}}{ \{ t \} } ]$.
The $\delta$-tensors participating in these legs are exactly those associated with some $i \in \cut_\I$ (recall $\cut_\I$ is the set of vertices with an edge crossing the partition $(\I, \I^c)$).
Hence, for every $l \in [L]$ and $i \in \cut_\I$ there are $\nwalk{L - l}{\{i\}}{ \{ t\} }$ such $\delta$-tensors.
Legs connected to the same $\delta$-tensor only contribute once to $\modcutweight{\tngraph^{(t)} (\fmat)} ( \J_{ \tngraph^{(t)} (\fmat) } )$.
Thus, since the weights of all legs connected to $\delta$-tensors are equal to $\hdim$, we have that:
\[
\modcutweight{\tngraph^{(t)} (\fmat)} (\J_{\tngraph^{(t)} (\fmat)}) = \hdim^{ \sum\nolimits_{ l = 1 }^L \sum\nolimits_{i \in \cut_\I} \nwalk{L - l}{\{i\}}{\{t\}} } = \hdim^{ \sum\nolimits_{ l = 1 }^L \nwalk{L - l}{ \cut_\I }{\{t\}} }
\text{\,.}
\]
Lastly, it remains to show $\sum\nolimits_{ l = 1 }^L \nwalk{L - l}{ \cut_\I }{\{t\}} \leq 4 \nwalk{L - 1}{ \cut_\I }{\{t\}}$, as in that case~\cref{gnn:lem:min_cut_tn_sep_rank_ub} implies:
\[
\seprankbig{ \funcvert{\params}{\graph}{t} }{\I} \leq \modcutweight{\tngraph^{(t)} (\fmat)} (\J_{\tngraph^{(t)} (\fmat)}) \leq \hdim^{ 4 \nwalk{L - 1}{ \cut_\I }{ \{ t \}} }
\text{\,,}
\]
which leads to~\cref{gnn:eq:sep_rank_upper_bound_vertex_pred} by taking the log of both sides.

The main idea is that, in an undirected graph with self-loops, the number of length $l \in \N$ walks ending at $t$ that originate from vertices with at least one neighbor decays exponentially when $l$ decreases.
First, clearly $\nwalk{l}{ \cut_\I }{ \{t\} } \leq \nwalk{l + 1}{ \cut_\I }{\{t\}}$ for all $l \in \N$.
Therefore:
\be
\sum\nolimits_{ l = 1 }^L \nwalk{L - l}{ \cut_\I }{\{t\}} \leq 2 \sum\nolimits_{l \in \{1, 3, \ldots, L - 1\} } \nwalk{L - l}{ \cut_\I }{\{t\}}
\text{\,.}
\label{gnn:eq:nwalk_increasing_undirected_vertex}
\ee
Furthermore, any length $l \in \N_{\geq 0}$ walk $i_0, i_1, \ldots, i_{l - 1}, t \in \vertices$ from $\cut_\I$ to $t$ induces at least two walks of length $l + 2$ from $\cut_\I$ to $t$, distinct from those induced by other length $l$ walks~---~one which goes twice through the self-loop of $i_0$ and then proceeds according to the length $l$ walk, \ie~$i_0, i_0, i_0, i_1, \ldots, i_{l - 1}, t$, and another that goes to a neighboring vertex (exists since $i_0 \in \cut_\I$), returns to $i_0$, and then proceeds according to the length $l$ walk.
This means that $\nwalk{L - l}{ \cut_\I}{\{t\}} \leq 2^{-1} \cdot \nwalk{L - l + 2}{ \cut_\I}{\{t\}} \leq \cdots \leq 2^{ - \lfloor l / 2 \rfloor } \cdot \nwalk{L -1 }{\cut_\I}{\{t\}}$ for all $l \in \{ 3, 5, \ldots, L - 1\}$.
Going back to~\cref{gnn:eq:nwalk_increasing_undirected_vertex}, we have that:
\[
\begin{split}
	\sum\nolimits_{ l = 1 }^L \nwalk{L - l}{ \cut_\I }{\{t\}}
	& \leq 
	2 \sum\nolimits_{l \in \{1, 3, \ldots, L - 1\} } 2^{ \lfloor l / 2 \rfloor } \cdot \nwalk{L - 1}{ \cut_\I }{\{t\}} \\
	& \leq 
	2 \sum\nolimits_{l = 0}^{\infty} 2^{ - l} \cdot \nwalk{L - 1}{ \cut_\I }{\{t\}} \\
	& = 4 \nwalk{L - 1}{ \cut_\I }{ \{t\} }
	\text{\,,}
\end{split}
\]
concluding the proof of~\cref{gnn:eq:sep_rank_upper_bound_vertex_pred}.
\qed

\subsubsection{Technical Lemma}
\label{gnn:app:proofs:sep_rank_upper_bound:lemmas}

\begin{lemma}
	\label{gnn:lem:contraction_matricization}
	For any order $N \in \N$ tensor $\Atensor \in \R^{D \times \cdots \times D}$, vectors $\xbf^{(1)}, \ldots, \xbf^{(N)} \in \R^D$, and subset of mode indices $\I \subseteq [N]$, it holds that $\Atensor \contract{i \in [N]} \xbf^{(i)} = \brk1{ \kronp_{i \in I} \xbf^{(i)} }^\top \mat{ \Atensor }{ \I } \brk1{ \kronp_{j \in \I^c} \xbf^{(j)} } \in \R$.
\end{lemma}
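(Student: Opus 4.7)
The plan is to verify the identity by expanding both sides in coordinates and matching them term by term. This is essentially a bookkeeping argument: the full contraction $\Atensor \contract{i \in [N]} \xbf^{(i)}$ sums $\Atensor_{d_1,\ldots,d_N} \prod_{i=1}^N \xbf^{(i)}_{d_i}$ over all multi-indices in $[D]^N$, while the bilinear form on the right sums the same quantities, only grouped according to the bipartition of modes induced by $\I$ and $\I^c$.

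First, I would unfold the left-hand side directly from Definition~\ref{gnn:def:tensor_contraction} by induction on $\abs{[N]} = N$ (or simply invoke the definition of the iterated contraction), obtaining
\[
\Atensor \contract{i \in [N]} \xbf^{(i)} \,=\, \sum\nolimits_{d_1,\ldots,d_N \in [D]} \Atensor_{d_1,\ldots,d_N} \prod\nolimits_{i=1}^N \xbf^{(i)}_{d_i}
\text{\,.}
\]
Then, letting $\I = \{i_1 < \cdots < i_{\abs{\I}}\}$ and $\I^c = \{j_1 < \cdots < j_{\abs{\I^c}}\}$, I would write
\[
\brk1{ \kronp_{i \in \I} \xbf^{(i)} }^\top \mat{\Atensor}{\I} \brk1{ \kronp_{j \in \I^c} \xbf^{(j)} } \,=\, \sum\nolimits_{r, c} \brk1{ \kronp_{i \in \I} \xbf^{(i)} }_r \mat{\Atensor}{\I}_{r,c} \brk1{ \kronp_{j \in \I^c} \xbf^{(j)} }_c
\text{\,,}
\]
where $r$ ranges over $[D^{\abs{\I}}]$ and $c$ over $[D^{\abs{\I^c}}]$.

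The key step is to invoke the standard bijection between row indices $r$ of $\mat{\Atensor}{\I}$ and tuples $(d_{i_1}, \ldots, d_{i_{\abs{\I}}}) \in [D]^{\abs{\I}}$, and between column indices $c$ and tuples $(d_{j_1}, \ldots, d_{j_{\abs{\I^c}}}) \in [D]^{\abs{\I^c}}$, induced by mixed-radix encoding. Under exactly the same encoding, the Kronecker product of vectors satisfies $\brk1{ \kronp_{i \in \I} \xbf^{(i)} }_r = \prod_{i \in \I} \xbf^{(i)}_{d_i}$ and analogously for the $\I^c$ factor, and by the definition of matricization $\mat{\Atensor}{\I}_{r,c} = \Atensor_{d_1,\ldots,d_N}$. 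Together, the pair $(r, c)$ is in bijection with $(d_1, \ldots, d_N) \in [D]^N$.

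Substituting, the right-hand side becomes $\sum_{d_1, \ldots, d_N \in [D]} \Atensor_{d_1,\ldots,d_N} \prod_{i \in \I} \xbf^{(i)}_{d_i} \prod_{j \in \I^c} \xbf^{(j)}_{d_j}$, which coincides with the expansion of the left-hand side. The only potential obstacle is keeping the row/column orderings of the matricization consistent with the chosen Kronecker product convention; since both are defined by the same ascending order of indices and the same mixed-radix encoding, this reduces to a direct verification rather than a substantive difficulty.
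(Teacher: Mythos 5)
Your proposal is correct and follows essentially the same route as the paper's proof, which likewise verifies the identity by expanding the contraction in coordinates and matching it against the bilinear form via the definitions of matricization and the Kronecker product. You simply spell out the index bijection and ordering check that the paper leaves implicit.
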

\begin{proof}
	The identity follows directly from the definitions of tensor contraction, matricization, and Kronecker product (\cref{gnn:app:proofs:notation}):
	\[
	\Atensor \contract{i \in [N]} \xbf^{(i)} = \sum\nolimits_{d_1, \ldots, d_N = 1}^D \Atensor_{d_1, \ldots, d_N} \cdot \prod\nolimits_{i \in [N]} \xbf^{(i)}_{d_i} = \brk1{ \kronp_{i \in I} \xbf^{(i)} }^\top \mat{ \Atensor }{ \I } \brk1{ \kronp_{j \in \I^c} \xbf^{(j)} }
	\text{\,.}
	\]
\end{proof}

\subsection{Proof of~\cref{gnn:thm:sep_rank_lower_bound}}
\label{gnn:app:proofs:sep_rank_lower_bound}

We assume familiarity with the basic concepts from tensor analysis introduced in~\cref{gnn:app:prod_gnn_as_tn:tensors}.

We begin by establishing a general technique for lower bounding the separation rank of a function through \emph{grid tensors}, also used in~\cite{levine2020limits,wies2021transformer,levine2022inductive,razin2022implicit}.
For any $f : (\R^{\indim})^N \to \R$ and $M \in \N$ \emph{template vectors} $\vbf^{(1)}, \ldots, \vbf^{(M)} \in \R^{\indim}$, we can create a grid tensor of $f$, which is a form of function discretization, by evaluating it over each point in \smash{$\brk[c]{ \brk{ \vbf^{(d_1)} , \ldots, \vbf^{(d_{ N } ) } } }_{d_1, \ldots, d_{ N } = 1}^M$} and storing the outcomes in an order $N$ tensor with modes of dimension $M$.
That is, the grid tensor of $f$ for templates $\vbf^{(1)}, \ldots, \vbf^{(M)}$, denoted $\gridtensor{f} \in \R^{M \times \cdots \times M}$, is defined by $\gridtensor{f}_{d_1, \ldots, d_N} = f \brk{ \vbf^{(d_1)}, \ldots, \vbf^{(d_N)} }$ for all $d_1, \ldots, d_N \in [M]$.\footnote{
	The template vectors of a grid tensor $\gridtensor{f}$ will be clear from context, thus we omit them from the notation.
}
\cref{gnn:lem:grid_tensor_sep_rank_lb} shows that $\sepranknoflex{f}{\I}$ is lower bounded by the rank of $\gridtensor{f}$'s matricization with respect to $\I$.

\begin{lemma}
	\label{gnn:lem:grid_tensor_sep_rank_lb}
	For $f : ( \R^{\indim} )^N \to \R$ and $M \in \N$ template vectors $\vbf^{(1)}, \ldots, \vbf^{(M)} \in \R^{\indim}$, let $\gridtensor{f} \in \R^{M \times \cdots \times M}$ be the corresponding order $N$ grid tensor of $f$.
	Then, for any $\I \subseteq [N]$:
	\[
	\rank \mat{\gridtensor{f}}{\I} \leq \seprank{f}{\I}
	\text{\,.}
	\]
\end{lemma}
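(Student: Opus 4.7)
The plan is to reduce the separation rank bound to a statement about matrix rank by decomposing the grid tensor according to a separable representation of $f$, and then showing that each summand contributes at most one to the rank of the matricization.

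First I would dispense with the trivial cases. If $\sepranknoflex{f}{\I} = \infty$ the bound holds vacuously, and if $\sepranknoflex{f}{\I} = 0$ then $f \equiv 0$, so $\gridtensor{f}$ is the zero tensor and $\rank \mat{\gridtensor{f}}{\I} = 0$. Otherwise set $R := \sepranknoflex{f}{\I}$; by \cref{gnn:def:sep_rank} there exist functions $g^{(1)},\ldots,g^{(R)} : (\R^{\indim})^{|\I|} \to \R$ and $\bar g^{(1)},\ldots,\bar g^{(R)} : (\R^{\indim})^{|\I^c|} \to \R$ with
\[
f(\fmat) = \sum\nolimits_{r=1}^R g^{(r)}(\fmat_\I)\cdot \bar g^{(r)}(\fmat_{\I^c})
\text{\,.}
\]

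Next I would introduce the grid tensors of the separable components relative to the same templates $\vbf^{(1)},\ldots,\vbf^{(M)}$. Namely, define $\V^{(r)}\in\R^{M\times\cdots\times M}$ of order $|\I|$ by $\V^{(r)}_{d_{i_1},\ldots,d_{i_{|\I|}}} = g^{(r)}\bigl(\vbf^{(d_{i_1})},\ldots,\vbf^{(d_{i_{|\I|}})}\bigr)$, where $i_1<\cdots<i_{|\I|}$ enumerate $\I$, and analogously $\widebar{\V}^{(r)}$ of order $|\I^c|$ from $\bar g^{(r)}$. Evaluating both sides of the separable representation on each grid point and matching indices mode by mode gives, up to a reordering of modes that places those of $\I$ first, the identity
\[
\gridtensor{f} \;=\; \sum\nolimits_{r=1}^R \V^{(r)} \tenp \widebar{\V}^{(r)}
\text{\,.}
\]

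Then I would matricize with respect to $\I$. Using linearity of matricization together with \cref{htf:lem:tenp_eq_kronp} (the tensor-product-to-Kronecker-product identity, already invoked in the closely related \cref{htf:lem:grid_tensor_mat_rank_ub_by_sep_rank}), the matricization reads
\[
\mat{\gridtensor{f}}{\I} \;=\; \sum\nolimits_{r=1}^R \mat{\V^{(r)}}{[\,|\I|\,]} \kronp \mat{\widebar{\V}^{(r)}}{\emptyset}
\;=\; \sum\nolimits_{r=1}^R \vbf_r\, \bar{\vbf}_r^{\!\top}
\text{\,,}
\]
where $\vbf_r := \vectnoflex{\V^{(r)}}$ is a column vector and $\bar{\vbf}_r^{\!\top} := \mat{\widebar{\V}^{(r)}}{\emptyset}$ is a row vector, so each summand is a rank-one matrix. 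Subadditivity of matrix rank then yields $\rank \mat{\gridtensor{f}}{\I} \leq R = \sepranknoflex{f}{\I}$.

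The only subtle step is the mode-ordering bookkeeping when $\I$ is not an initial segment of $[N]$: one must verify that the grid tensor of $f$ (which uses the natural order $1,\ldots,N$) agrees with the tensor-product construction after the appropriate permutation, and that the matricization with respect to $\I$ commutes with this permutation. This is a purely combinatorial check on indices, identical in spirit to the argument given for \cref{htf:lem:grid_tensor_mat_rank_ub_by_sep_rank}, so I expect no real obstacle beyond careful notation.
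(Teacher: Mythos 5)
Your proposal is correct and follows essentially the same route as the paper's proof: decompose $f$ via a minimal separable representation, form the grid tensors of the components, express $\mat{\gridtensor{f}}{\I}$ as a sum of $R$ rank-one outer products, and conclude by subadditivity of matrix rank. The mode-ordering bookkeeping you flag is handled implicitly in the paper by the definition of matricization, so there is no gap.
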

\begin{proof}
	If $\sepranknoflex{f}{\I}$ is $\infty$ or zero, \ie~$f$ cannot be represented as a finite sum of separable functions (with respect to $\I$) or is identically zero, then the claim is trivial. 
	Otherwise, denote $R := \seprank{f}{\I}$, and let $g^{(1)}, \ldots, g^{(R)} : (\R^{\indim})^{ \abs{\I} } \to \R$ and $\bar{g}^{(1)}, \ldots, \bar{g}^{(R)} : (\R^{\indim})^{ \abs{\I^c} } \to \R$ such that:
	\be
	f \brk{ \fmat } = \sum\nolimits_{r = 1}^R g^{(r)} \brk{ \fmat_{\I} } \cdot \bar{g}^{(r)} \brk{ \fmat_{\I^c} }
	\text{\,,}
	\label{gnn:eq:grid_tensor_mat_ub_by_sep_rank:sep_rank}
	\ee
	where $\fmat := \brk{\fvec{1}, \ldots, \fvec{N}}$, $\fmat_\I := \brk{ \fvec{i} }_{i \in \I}$, and $\fmat_{\I^c} := \brk{ \fvec{j} }_{j \in \I^c}$.
	For $r \in [R]$, let $\gridtensornoflex{g^{(r)}}$ and $\gridtensornoflex{\bar{g}^{(r)}}$ be the grid tensors of $g^{(r)}$ and $\bar{g}^{(r)}$ over templates $\vbf^{(1)}, \ldots, \vbf^{(M)}$, respectively.
	That is, $\gridtensornoflex{g^{(r)}}_{d_i : i \in \I } = g^{(r)} \brk{ \brk{ \vbf^{ (d_i) } }_{i \in \I} }$ and $\gridtensornoflex{ \bar{g}^{(r)} }_{ d_j : j \in \I^c } = \bar{g}^{(r)} \brk{ \brk{ \vbf^{(d_j)} }_{j \in \I^c} }$ for all $d_1, \ldots, d_{ N } \in [M]$.
	By~\cref{gnn:eq:grid_tensor_mat_ub_by_sep_rank:sep_rank} we have that for any $d_1, \ldots, d_N \in [M]$:
	\[
	\begin{split}
		\gridtensor{f}_{d_1, \ldots, d_N} & = f \brk1{ \vbf^{(d_1)}, \ldots, \vbf^{(d_N)} } \\
		& = \sum\nolimits_{r = 1}^R g^{(r)} \brk1{ \brk{ \vbf^{(d_i)} }_{i \in \I} } \cdot \bar{g}^{(r)} \brk1{ \brk{ \vbf^{(d_j)} }_{j \in \I^c} } \\
		& = \sum\nolimits_{r = 1}^R \gridtensorbig{g^{(r)}}_{d_i : i \in \I } \cdot \gridtensorbig{ \bar{g}^{(r)} }_{ d_j : j \in \I^c }
		\text{\,.}
	\end{split}
	\]
	Denoting by $\ubf^{(r)} \in \R^{M^{\abs{\I}}}$ and $\bar{\ubf}^{(r)} \in \R^{M^{ \abs{\I^c} }}$ the arrangements of $\gridtensornoflex{g^{(r)}}$ and $\gridtensornoflex{ \bar{g}^{(r)} }$ as vectors, respectively for $r \in [R]$, this implies that the matricization of $\gridtensor{f}$ with respect to $\I$ can be written as:
	\[
	\mat{ \gridtensor{f} }{\I} = \sum\nolimits_{r = 1}^R \ubf^{(r)} \brk1{ \bar{\ubf}^{(r)} }^{\top}
	\text{\,.}
	\]
	We have arrived at a representation of $\mat{ \gridtensor{f} }{\I}$ as a sum of $R$ outer products between two vectors.
	An outer product of two vectors is a matrix of rank at most one.
	Consequently, by sub-additivity of rank we conclude: $\rank \mat{ \gridtensor{f} }{\I} \leq R = \seprank{f}{\I}$.
\end{proof}

\medskip

In the context of graph prediction, let $\cut^* \in \argmax_{ \cut \in \cutset (\I) } \log \brk{ \alpha_{\cut} } \cdot \nwalk{L - 1}{\cut}{\vertices}$.
Then, by \cref{gnn:lem:grid_tensor_sep_rank_lb}, to prove that~\cref{gnn:eq:sep_rank_lower_bound_graph_pred} holds for weights~$\params$, it suffices to find template vectors for which $\log \brk{ \rank \mat{ \gridtensor{ \funcgraph{\params}{\graph} } }{\I} } \geq \log \brk{ \alpha_{\cut^*} } \cdot \nwalk{L - 1}{\cut^*}{\vertices}$.
Notice that, since the outputs of $\funcgraph{\params}{\graph}$ vary polynomially with the weights $\params$, so do the entries of $\mat{ \gridtensor{ \funcgraph{\params}{\graph} } }{\I}$ for 
any choice of template vectors.
Thus, according to~\cref{gnn:lem:poly_mat_max_rank}, by constructing weights~$\params$ and template vectors satisfying $\log \brk{ \rank \mat{ \gridtensor{ \funcgraph{\params}{\graph} } }{\I} } \geq \log \brk{ \alpha_{\cut^*} } \cdot \nwalk{L - 1}{\cut^*}{\vertices}$, we may conclude that this is the case for almost all assignments of weights, meaning~\cref{gnn:eq:sep_rank_lower_bound_graph_pred} holds for almost all assignments of weights.
In Appendix~\ref{gnn:app:proofs:sep_rank_lower_bound:graph} we construct such weights and template vectors.

In the context of vertex prediction, let $\cut_t^* \in \argmax_{ \cut \in \cutset (\I) } \log \brk{ \alpha_{\cut, t} } \cdot \nwalk{L - 1}{\cut}{ \{ t \} }$.
Due to arguments analogous to those above, to prove that~\cref{gnn:eq:sep_rank_lower_bound_vertex_pred} holds for almost all assignments of weights, we need only find weights $\params$ and template vectors satisfying $\log \brk{ \rank \mat{ \gridtensor{ \funcvert{\params}{\graph}{t} } }{\I} } \geq \log \brk{ \alpha_{\cut^*_t, t} } \cdot \nwalk{L - 1}{\cut^*_t}{ \{ t \} }$.
In Appendix~\ref{gnn:app:proofs:sep_rank_lower_bound:vertex} we do so.

Lastly, recalling that a finite union of measure zero sets has measure zero as well establishes that~\cref{gnn:eq:sep_rank_lower_bound_graph_pred,gnn:eq:sep_rank_lower_bound_vertex_pred} jointly hold for almost all assignments of weights.
\qed

\subsubsection{Weights and Template Vectors Assignment for Graph Prediction (Proof of~\cref{gnn:eq:sep_rank_lower_bound_graph_pred})}
\label{gnn:app:proofs:sep_rank_lower_bound:graph}

We construct weights~$\params$ and template vectors satisfying $\log \brk{ \rank \mat{ \gridtensor{ \funcgraph{\params}{\graph} } }{\I} } \geq \log \brk{ \alpha_{\cut^*} } \cdot \nwalk{L - 1}{\cut^*}{\vertices}$, where $\cut^* \in \argmax_{ \cut \in \cutset (\I) } \log \brk{ \alpha_{\cut} } \cdot \nwalk{L - 1}{\cut}{\vertices}$.

If $\nwalk{L - 1}{\cut^*}{\vertices} = 0$, then the claim is trivial since there exist weights and template vectors for which $\mat{ \gridtensor{ \funcgraph{\params}{\graph} } }{\I}$ is not the zero matrix (\eg~taking all weight matrices to be zero-padded identity matrices and choosing a single template vector holding one in its first entry and zeros elsewhere).

Now, assuming that $\nwalk{L - 1}{\cut^*}{\vertices} > 0$, which in particular implies that $\I \neq \emptyset, \I \neq \vertices,$ and $\cut^* \neq \emptyset$, we begin with the case of GNN depth $L = 1$, after which we treat the more general $L \geq 2$ case.

\textbf{Case of $L = 1$:}
Consider the weights $\params = \brk{ \weightmat{1}, \weightmat{o} }$ given by $\weightmat{1} := \Ibf \in \R^{\hdim \times \indim}$ and $\weightmat{o} := (1, \ldots, 1) \in \R^{1 \times \hdim}$, where $\Ibf$ is a zero padded identity matrix, \ie~it holds ones on its diagonal and zeros elsewhere.
We choose template vectors $\vbf^{(1)}, \ldots, \vbf^{(\mindim)} \in \R^{\indim}$ such that $\vbf^{(m)}$ holds the $m$'th standard basis vector of $\R^{\mindim}$ in its first $\mindim$ coordinates and zeros in the remaining entries, for $m \in [\mindim]$ (recall $\mindim := \min \{ \indim, \hdim \}$).
Namely, denote by $\ebf^{(1)}, \ldots, \ebf^{(\mindim)} \in \R^{\mindim}$ the standard basis vectors of $\R^{\mindim}$, \ie~$\ebf^{(m)}_d = 1$ if $d = m$ and $\ebf^{(m)}_d = 0$ otherwise for all $m, d \in [\mindim]$. We let $\vbf^{(m)}_{:\mindim} := \ebf^{(m)}$ and $\vbf^{(m)}_{\mindim + 1:} := 0$ for all $m \in [\mindim]$.

We prove that for this choice of weights and template vectors, for all $d_1, \ldots, d_{\abs{\vertices}} \in [\mindim]$:
\be
\funcgraph{\params}{\graph} \brk1{ \vbf^{(d_1)}, \ldots, \vbf^{ ( d_{\abs{\vertices}} )} } = \begin{cases}
	1	& , \text{if } d_1 = \cdots = d_{\abs{\vertices}} \\
	0	& , \text{otherwise} 
\end{cases}
\text{\,.}
\label{gnn:eq:standard_basis_grid_graph}
\ee
To see it is so, notice that:
\[
\funcgraph{\params}{\graph} \brk1{ \vbf^{(d_1)}, \ldots, \vbf^{ ( d_{\abs{\vertices}} )} } = \weightmat{o} \brk1{ \hadmp_{i \in \vertices} \hidvec{1}{i} } = \sum\nolimits_{d = 1}^{\hdim} \prod\nolimits_{i \in \vertices} \hidvec{1}{i}_d 
\text{\,,}
\]
with $\hidvec{1}{i} = \hadmp_{j \in \neigh (i)} \brk{\weightmat{1} \vbf^{(d_j)} } = \hadmp_{j \in \neigh (i)} \brk{ \Ibf \vbf^{(d_j)} }$ for all $i \in \vertices$.
Since $ \vbf^{(d_j)}_{:\mindim} = \ebf^{(d_j)}$ for all $j \in \neigh (i)$ and $\Ibf$ is a zero-padded $\mindim \times \mindim$ identity matrix, it holds that: 
\[
\funcgraph{\params}{\graph} \brk1{ \vbf^{(d_1)}, \ldots, \vbf^{ ( d_{\abs{\vertices}} )} } = \sum\nolimits_{d = 1}^{\mindim} \prod\nolimits_{i \in \vertices, j \in \neigh (i)} \ebf^{(d_j)}_d
\text{\,.}
\]
Due to the existence of self-loops (\ie~$i \in \neigh (i)$ for all $i \in \vertices$), for every $d \in [\mindim]$ the product $\prod\nolimits_{i \in \vertices, j \in \neigh (i)} \ebf^{(d_j)}_d$ includes each of $\ebf^{(d_1)}_d, \ldots, \ebf^{(d_{\abs{\vertices}})}_d$ at least once.
Consequently, $\prod\nolimits_{i \in \vertices, j \in \neigh (i)} \ebf^{(d_j)}_d = 1$ if $d_1 = \cdots = d_{\abs{\vertices}} = d$ and $\prod\nolimits_{i \in \vertices, j \in \neigh (i)} \ebf^{(d_j)}_d = 0$ otherwise.
This implies that $\funcgraph{\params}{\graph} \brk{ \vbf^{(d_1)}, \ldots, \vbf^{ ( d_{\abs{\vertices}} )} } = 1$ if $d_1 = \cdots = d_{\abs{\vertices}}$ and $\funcgraph{\params}{\graph} \brk{ \vbf^{(d_1)}, \ldots, \vbf^{ ( d_{\abs{\vertices}} )} } = 0$ otherwise, for all $d_1, \ldots, d_{\abs{\vertices}} \in [\mindim]$.

\cref{gnn:eq:standard_basis_grid_graph} implies that $\mat{ \gridtensor{ \funcgraph{\params}{\graph} } }{\I}$ has exactly $\mindim$ non-zero entries, each in a different row and column.
Thus, $\rank \mat{ \gridtensor{ \funcgraph{\params}{\graph} } }{\I} = \mindim$.
Recalling that $\alpha_{\cut^*} := \mindim^{1 / \nwalk{0}{\cut^*}{\vertices}}$ for $L = 1$, we conclude: 
\[
\log \brk*{ \rank \mat{ \gridtensor{ \funcgraph{\params}{\graph} } }{\I} } =  \log (\mindim) = \log \brk{ \alpha_{\cut^*} } \cdot \nwalk{0}{\cut^*}{\vertices}
\text{\,.}
\]

\textbf{Case of $L \geq 2$:}
Let $M := \multisetcoeffnoflex{\mindim}{ \nwalk{L - 1}{\cut^*}{\vertices}} = \binom{\mindim + \nwalk{L - 1}{\cut^*}{\vertices} - 1}{ \nwalk{L - 1}{\cut^*}{\vertices}}$ be the multiset coefficient of $\mindim$ and $ \nwalk{L - 1}{\cut^*}{\vertices}$ (recall $\mindim := \min \brk[c]{ \indim, \hdim}$).
By~\cref{gnn:lem:hadm_of_gram_full_rank}, there exists $\Zbf \in \R_{ > 0}^{M \times \mindim}$ for which
\[
\rank \brk*{ \hadmp^{ \nwalk{L - 1}{\cut^*}{\vertices} } \brk*{ \Zbf \Zbf^\top } } = \multisetcoeff{ \mindim }{ \nwalk{L - 1}{ \cut^* }{ \vertices } }
\text{\,,}
\]
with $\hadmp^{ \nwalk{L - 1}{\cut^*}{\vertices} } \brk{ \Zbf \Zbf^\top }$ standing for the $\nwalk{L - 1}{\cut^*}{\vertices}$'th Hadamard power of $\Zbf \Zbf^\top$.
For this $\Zbf$, by~\cref{gnn:lem:grid_submatrix_hadm_of_gram_graph} below we know that there exist weights $\params$ and template vectors such that $\mat{ \gridtensornoflex{ \funcgraph{\params}{\graph} } }{ \I }$ has an $M \times M$ sub-matrix of the form $\Sbf \brk{ \hadmp^{ \nwalk{L - 1}{\cut^*}{\vertices} } \brk{ \Zbf \Zbf^\top } } \Qbf$, where $\Sbf, \Qbf \in \R^{M \times M}$ are full-rank diagonal matrices.
Since the rank of a matrix is at least the rank of any of its sub-matrices:
\[
\begin{split}
	\rank \brk*{ \mat{ \gridtensorbig{ \funcgraph{\params}{\graph} } }{ \I } } & \geq \rank \brk*{ \Sbf \brk*{ \hadmp^{ \nwalk{L - 1}{\cut^*}{\vertices} } \brk*{ \Zbf \Zbf^\top } } \Qbf } \\
	& = \rank \brk*{ \hadmp^{ \nwalk{L - 1}{\cut^*}{\vertices} } \brk*{ \Zbf \Zbf^\top } } \\
	& = \multisetcoeff{ \mindim }{ \nwalk{L - 1}{ \cut^* }{ \vertices } }
	\text{\,,}
\end{split}
\]
where the second transition stems from $\Sbf$ and $\Qbf$ being full-rank.
Applying~\cref{gnn:lem:multiset_coeff_lower_bound} to lower bound the multiset coefficient, we have that:
\[
\rank \brk*{ \mat{ \gridtensorbig{ \funcgraph{\params}{\graph} } }{ \I } } \geq \multisetcoeff{ \mindim }{ \nwalk{L - 1}{ \cut^* }{ \vertices } } \geq \brk*{ \frac{\mindim - 1}{ \nwalk{L - 1}{ \cut^* }{ \vertices }} + 1 }^{ \nwalk{L - 1}{ \cut^* }{ \vertices } }
\text{\,.}
\]
Taking the log of both sides while recalling that $\alpha_{\cut^*} := \brk{ \mindim - 1 } \cdot \nwalk{L - 1}{\cut^*}{\vertices}^{-1} + 1$, we conclude that:
\[
\log \brk{ \rank \mat{ \gridtensor{ \funcgraph{\params}{\graph} } }{\I} } \geq \log \brk{ \alpha_{\cut^*} } \cdot \nwalk{L - 1}{\cut^*}{\vertices}
\text{\,.}
\]

\begin{lemma}
	\label{gnn:lem:grid_submatrix_hadm_of_gram_graph}
	Suppose that the GNN inducing $\funcgraph{\params}{\graph}$ is of depth $L \geq 2$ and that $\nwalk{L - 1}{\cut^*}{\vertices} > 0$.
	For any $M \in \N$ and matrix with positive entries $\Zbf \in \R_{> 0}^{M \times \mindim}$, there exist weights $\params$ and $M + 1$ template vectors $\vbf^{(1)}, \ldots, \vbf^{(M + 1)} \in \R^{\indim}$~such that $\mat{ \gridtensornoflex{ \funcgraph{\params}{\graph} } }{ \I }$ has an $M \times M$ sub-matrix $\Sbf \brk{ \hadmp^{ \nwalk{L - 1}{\cut^*}{\vertices} } \brk{ \Zbf \Zbf^\top } } \Qbf$, where $\Sbf, \Qbf \in \R^{M \times M}$ are full-rank diagonal matrices and $\hadmp^{ \nwalk{L - 1}{\cut^*}{\vertices} } \brk{ \Zbf \Zbf^\top }$ is the $\nwalk{L - 1}{\cut^*}{\vertices}$'th Hadamard power of $\Zbf \Zbf^\top$.
\end{lemma}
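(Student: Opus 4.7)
Since $\cut^* \in \cutset(\I)$, there exist $\I' \subseteq \I$ and $\J' \subseteq \I^c$ with no repeating shared neighbors (\cref{gnn:def:no_rep_neighbors}) such that $\cut^* = \neigh(\I') \cap \neigh(\J')$; this admissibility decomposition drives the entire construction. I would choose template vectors $\vbf^{(1)}, \ldots, \vbf^{(M)} \in \R^{\indim}$ whose first $\mindim$ entries encode the rows of $\Zbf$ (so that $\inprod{\vbf^{(m_1)}}{\vbf^{(m_2)}} = (\Zbf\Zbf^\top)_{m_1,m_2}$), together with a neutral template $\vbf^{(M+1)}$ whose first $\mindim$ entries equal $1$ and which behaves as a multiplicative identity in coordinate-wise products. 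I would then take every $\weightmat{l}$ for $l \in [L]$ to be a zero-padded identity on the first $\mindim$ coordinates and $\weightmat{o}$ to be the row of $\mindim$ ones followed by zeros.

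Under these choices, unrolling the product-aggregation update rule (equivalently, contracting the tensor network $\tngraph(\fmat)$ from \cref{gnn:app:prod_gnn_as_tn}) makes the GNN realize the polynomial $\funcgraph{\params}{\graph}(\fmat) = \sum_{d=1}^{\mindim} \prod_{w} \fvec{\mathrm{end}(w)}_d$, where the product ranges over the multiset of walks indexed by the leaves of $\tngraph(\fmat)$. The $M \times M$ submatrix of the matricization will be extracted as follows: the row indexed by $m_1 \in [M]$ is the grid-tensor row in which every vertex of $\I'$ is assigned template $\vbf^{(m_1)}$ and every vertex of $\I \setminus \I'$ is assigned the neutral template $\vbf^{(M+1)}$; analogously, the column indexed by $m_2 \in [M]$ assigns $\vbf^{(m_2)}$ on $\J'$ and the neutral template on $\I^c \setminus \J'$. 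The neutral template contributes a factor of $1$ in every coordinate-wise product in which it appears, effectively silencing vertices outside $\I' \cup \J'$ and confining the interesting content to walks that meet $\cut^*$.

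The core analytical step is to decompose this polynomial using the structure of the tensor network. The admissibility of $\cut^*$, and in particular the no-repeating-shared-neighbors property, ensures that inside $\tngraph(\fmat)$ there are exactly $N := \nwalk{L-1}{\cut^*}{\vertices}$ cross-partition $\delta$-tensor summations, each summing independently over an internal index $d \in [\mindim]$ and each contributing a factor $\sum_{d} Z_{m_1,d} Z_{m_2,d} = (\Zbf\Zbf^\top)_{m_1, m_2}$. The remaining (non-crossing) legs of the tensor network contribute factors that depend on at most one of $m_1$ or $m_2$, which collect into diagonal multipliers $\Sbf_{m_1}$ on the left and $\Qbf_{m_2}$ on the right. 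This yields the required factorization of the submatrix as $\Sbf [\hadmp^N (\Zbf\Zbf^\top)] \Qbf$, and the strict positivity of $\Zbf$'s entries guarantees that $\Sbf$ and $\Qbf$ are full-rank.

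The main obstacle will be formalizing the independence of the $N$ cross-cut $\delta$-tensor contractions. A priori, two length-$(L-1)$ walks emanating from $\cut^*$ could share intermediate legs in $\tngraph(\fmat)$, causing their internal summations to entangle and preventing a clean Hadamard-power factorization. The no-repeating-shared-neighbors condition is precisely what rules this out: each $k \in \cut^*$ has exactly one $\I'$-neighbor and one $\J'$-neighbor, so the crossing walks decouple into independent $(m_1, m_2)$-bilinear contributions. Carrying out this combinatorial disentanglement rigorously inside the tensor network, and correctly accounting for how the neutral template preserves crossing contributions while suppressing spurious one-sided ones, is the delicate step; once it is in place, the matrix-level conclusion follows by routine bookkeeping.
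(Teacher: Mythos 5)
Your choice of template vectors, the neutral all-ones template, and the way you select the $M\times M$ sub-matrix (assigning $\vbf^{(m)}$ on $\I'$, $\vbf^{(n)}$ on $\J'$, and the neutral template elsewhere, using the admissibility decomposition $\cut^* = \neigh(\I')\cap\neigh(\J')$) all match the paper's construction. The gap is in your weight assignment: taking \emph{every} $\weightmat{l}$, $l\in[L]$, to be a zero-padded identity does not produce the claimed factorization. With identity weights, the internal indices of all $\delta$-tensors in $\tngraph(\fmat)$ are identified with one another through the chains of identity matrices (the network is connected via self-loops and neighborhoods), so contracting the network yields a \emph{single} global summation $\funcgraph{\params}{\graph}(\fmat)=\sum_{d=1}^{\mindim}\prod_{w}\fvec{\cdot}_d$ rather than $\nwalk{L-1}{\cut^*}{\vertices}$ independent ones. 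The resulting sub-matrix entry has the form $\sum_{d=1}^{\mindim}\brk1{\zbf^{(m)}_d}^{a}\brk1{\zbf^{(n)}_d}^{b}$ for fixed exponents $a,b$, a matrix of rank at most $\mindim$; it is not $\Sbf\brk{\hadmp^{\nwalk{L-1}{\cut^*}{\vertices}}\brk{\Zbf\Zbf^\top}}\Qbf$, whose rank must reach $\multisetcoeffnoflex{\mindim}{\nwalk{L-1}{\cut^*}{\vertices}}$ for the downstream lower bound. Your assertion that the no-repeating-shared-neighbors property makes the cross-cut $\delta$-tensor summations decouple is exactly the point that fails: that property controls \emph{which} leaves carry both $\vbf^{(m)}$ and $\vbf^{(n)}$ dependence, but it cannot decouple summation indices that your weights have wired together.

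The paper's fix is the choice of $\weightmat{2}$ as the matrix whose first row is all ones (and zeros elsewhere), with $\weightmat{l}$ for $l\geq 3$ keeping only the $(1,1)$ entry and $\weightmat{o}=(1,0,\ldots,0)$. The all-ones row collapses each copy of a layer-$1$ embedding to the scalar $\sum_d \hidvec{1}{j}_d$ \emph{separately per copy}; for $j\in\cut^*$ (which, by no repeating shared neighbors, has exactly one neighbor in each of $\I',\J'$) this scalar equals $\inprod{\zbf^{(m)}}{\zbf^{(n)}}$, and the subsequent product aggregations multiply $\nwalk{L-1}{\cut^*}{\{i\}}$ such independent factors into $\hidvec{L}{i}_1$ (the induction in \cref{gnn:lem:grid_tensor_entry_hadm_induction}), summing over $i\in\vertices$ in the exponent to give $\inprod{\zbf^{(m)}}{\zbf^{(n)}}^{\nwalk{L-1}{\cut^*}{\vertices}}$ times positive functions of $\vbf^{(m)}$ and $\vbf^{(n)}$ alone. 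Some layer must perform this per-copy summation after the first product aggregation; without it your argument does not go through.
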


\begin{proof}
	Consider the weights $\params = \brk{ \weightmat{1}, \ldots, \weightmat{L}, \weightmat{o} }$ given by:
	\[
	\begin{split}
		& \weightmat{1} := \Ibf \in \R^{\hdim \times \indim} \text{\,,} \\[0.2em]
		& \weightmat{2} := \begin{pmatrix} 1 & 1 & \cdots & 1 \\ 0 & 0 & \cdots & 0 \\ \vdots & \vdots & \cdots & \vdots \\ 0 & 0 & \cdots & 0\end{pmatrix} \in \R^{\hdim \times \hdim} \text{\,,} \\[0.2em]
		\forall l \in \{ 3, \ldots, L\} : ~&\weightmat{l} := \begin{pmatrix} 1 & 0 & \cdots & 0 \\ 0 & 0 & \cdots & 0 \\ \vdots & \vdots & \cdots & \vdots \\ 0 & 0 & \cdots & 0\end{pmatrix} \in \R^{\hdim \times \hdim} \text{\,,} \\[0.2em]
		& \weightmat{o} := \begin{pmatrix} 1 & 0 & \cdots & 0 \end{pmatrix} \in \R^{1 \times \hdim} \text{\,,}
	\end{split}
	\]
	where $\Ibf$ is a zero padded identity matrix, \ie~it holds ones on its diagonal and zeros elsewhere.
	We define the templates $\vbf^{(1)}, \ldots, \vbf^{(M)} \in \R^{\indim}$ to be the vectors holding the respective rows of $\Zbf$ in their first $\mindim$ coordinates and zeros in the remaining entries (recall $\mindim := \min \brk[c]{ \indim, \hdim}$).
	That is, denoting the rows of $\Zbf$ by $\zbf^{(1)}, \ldots, \zbf^{(M)} \in \R_{> 0}^{\mindim}$, we let $\vbf^{(m)}_{: \mindim} := \zbf^{(m)}$ and $\vbf^{(m)}_{\mindim + 1: } := 0$ for all $m \in [M]$.
	We set all entries of the last template vector to one, \ie~$\vbf^{(M + 1)} := (1, \ldots, 1) \in \R^{\indim}$.
	
	Since $\cut^* \in \cutset (\I)$, \ie~it is an admissible subset of $\cut_\I$ (\cref{gnn:def:admissible_subsets}), there exist $\I' \subseteq \I, \J' \subseteq \I^c$ with no repeating shared neighbors (\cref{gnn:def:no_rep_neighbors}) such that $\cut^* = \neigh (\I') \cap \neigh ( \J' )$.
	Notice that $\I'$ and $\J'$ are non-empty as $\cut^* \neq \emptyset$ (this is implied by $\nwalk{L - 1}{\cut^*}{\vertices} > 0$).
	We focus on the $M \times M$ sub-matrix of $\mat{ \gridtensornoflex{ \funcgraph{\params}{\graph} } }{ \I }$ that includes only rows and columns corresponding to evaluations of $\funcgraph{\params}{\graph}$ where all variables indexed by $\I'$ are assigned the same template vector from $\vbf^{(1)}, \ldots, \vbf^{(M)}$, all variables indexed by $\J'$ are assigned the same template vector from $\vbf^{(1)}, \ldots, \vbf^{(M)}$, and all remaining variables are assigned the all-ones template vector $\vbf^{(M + 1)}$.
	Denoting this sub-matrix by $\Ubf \in \R^{M \times M}$, it therefore upholds:
	\[
	\Ubf_{m, n} = \funcgraph{\params}{\graph} \brk*{ \brk1{ \fvec{i} \leftarrow \vbf^{(m)} }_{i \in \I'} , \brk1{ \fvec{j} \leftarrow \vbf^{(n)}}_{j \in \J'} , \brk1{ \fvec{k} \leftarrow \vbf^{(M + 1)} }_{k \in \vertices \setminus \brk{ \I' \cup \J' } } }
	\text{\,,}
	\]
	for all $m, n \in [M]$, where we use $\brk{ \fvec{i} \leftarrow \vbf^{(m)} }_{i \in \I'}$ to denote that input variables indexed by $\I'$ are assigned the value $\vbf^{(m)}$.
	To show that $\Ubf$ obeys the form 
	\[
	\Sbf \brk{ \hadmp^{ \nwalk{L - 1}{\cut^*}{\vertices} } \brk{ \Zbf \Zbf^\top } } \Qbf
	\]
	for full-rank diagonal $\Sbf, \Qbf \in \R^{M \times M}$, we prove there exist $\phi, \psi : \R^{\indim} \to \R_{> 0}$ such that $\Ubf_{m, n} = \phi \brk{ \vbf^{(m)} } \inprodnoflex{ \zbf^{(m)} }{ \zbf^{(n)} }^{ \nwalk{L - 1}{ \cut^* }{ \vertices } } \psi \brk{ \vbf^{(n)} }$ for all $m, n \in [M]$.
	Indeed, defining $\Sbf$ to hold $\phi \brk{ \vbf^{(1)} }, \ldots, \phi \brk{ \vbf^{(M)} }$ on its diagonal and $\Qbf$ to hold $\psi \brk{ \vbf^{(1)} }, \ldots, \psi \brk{ \vbf^{(M)} }$ on its diagonal, we have that $\Ubf = \Sbf \brk{ \hadmp^{ \nwalk{L - 1}{\cut^*}{\vertices} } \brk{ \Zbf \Zbf^\top } } \Qbf$.
	Since $\Sbf$ and $\Qbf$ are clearly full-rank (diagonal matrices with non-zero entries on their diagonal), the proof concludes.
	
	\medskip
	
	For $m, n \in [M]$, let $\hidvec{l}{i} \in \R^{\hdim}$ be the hidden embedding for $i \in \vertices$ at layer $l \in [L]$ of the GNN inducing $\funcgraph{\params}{\graph}$, over the following assignment to its input variables (\ie~vertex features):
	\[
	\brk1{ \fvec{i} \leftarrow \vbf^{(m)} }_{i \in \I'} , \brk1{ \fvec{j} \leftarrow \vbf^{(n)}}_{j \in \J'} , \brk1{ \fvec{k} \leftarrow \vbf^{(M + 1)} }_{k \in \vertices \setminus \brk{ \I' \cup \J' } }
	\text{\,.}
	\]
	Invoking~\cref{gnn:lem:grid_tensor_entry_hadm_induction} with $\vbf^{(m)}, \vbf^{(n)}, \I',$ and $\J'$, for all $i \in \vertices$ it holds that:
	\[
	\hidvec{L}{i}_{1} = \phi^{(L, i)} \brk1{ \vbf^{(m)} } \inprodbig{ \zbf^{(m)} }{ \zbf^{(n)} }^{ \nwalk{L - 1}{ \cut^* }{ \{i\} } } \psi^{(L, i)} \brk1{ \vbf^{(n)} } ~~~ , ~~~ \forall d \in \{ 2, \ldots, \hdim \} :~\hidvec{L}{i}_d = 0
	\text{\,,}
	\]
	for some $\phi^{(L, i)}, \psi^{(L, i)} : \R^{\indim} \to \R_{ > 0}$.
	Since
	\[
	\begin{split}
		\Ubf_{m, n} & = \funcgraph{\params}{\graph} \brk*{ \brk1{ \fvec{i} \leftarrow \vbf^{(m)} }_{i \in \I'} , \brk1{ \fvec{j} \leftarrow \vbf^{(n)}}_{j \in \J'} , \brk1{ \fvec{k} \leftarrow \vbf^{(M + 1)} }_{k \in \vertices \setminus \brk{ \I' \cup \J' } } } \\
		& = \weightmat{o} \brk1{ \hadmp_{i \in \vertices} \hidvec{L}{i} }
	\end{split}
	\]
	and $\weightmat{o} = \brk{1, 0, \ldots, 0}$, this implies that:
	\[
	\begin{split}
		\Ubf_{m, n} & = \prod\nolimits_{ i \in \vertices } \hidvec{L}{i}_1 \\
		& = \prod\nolimits_{ i \in \vertices } \phi^{(L, i)} \brk1{ \vbf^{(m)} } \inprodbig{ \zbf^{(m)} }{ \zbf^{(n)} }^{ \nwalk{L - 1}{ \cut^* }{ \{i\} } } \psi^{(L, i)} \brk1{ \vbf^{(n)} } 
		\text{\,.}
	\end{split}
	\]
	Rearranging the last term leads to:
	\[
	\Ubf_{m, n} = \brk*{ \prod\nolimits_{ i \in \vertices } \phi^{(L, i)} \brk1{ \vbf^{(m)} } } \cdot \inprodbig{ \zbf^{(m)} }{ \zbf^{(n)} }^{ \sum\nolimits_{i \in \vertices} \nwalk{L - 1}{ \cut^* }{ \{i\} } } \cdot \brk*{ \prod\nolimits_{ i \in \vertices } \psi^{(L, i)} \brk1{ \vbf^{(n)} } }
	\text{\,.}
	\]
	Let $\phi : \vbf \mapsto \prod\nolimits_{i \in \vertices } \phi^{(L, i)} \brk{ \vbf }$ and $\psi : \vbf \mapsto \prod\nolimits_{ i \in \vertices } \psi^{(L, i)} \brk{ \vbf }$.
	Noticing that their range is indeed $\R_{ > 0}$ and that $\sum\nolimits_{i \in \vertices} \nwalk{L - 1}{ \cut^* }{ \{i\} } = \nwalk{L- 1}{ \cut^*}{ \vertices }$ yields the sought-after expression for $\Ubf_{m, n}$:
	\[
	\Ubf_{m, n} = \phi \brk1{ \vbf^{(m)} } \inprodbig{ \zbf^{(m)} }{ \zbf^{(n)} }^{ \nwalk{L - 1}{ \cut^* }{ \vertices } } \psi \brk1{ \vbf^{(n)} }
	\text{\,.}
	\]
\end{proof}

\subsubsection{Weights and Template Vectors Assignment for Vertex Prediction (Proof of~\cref{gnn:eq:sep_rank_lower_bound_vertex_pred})}
\label{gnn:app:proofs:sep_rank_lower_bound:vertex}

This part of the proof follows a line similar to that of Appendix~\ref{gnn:app:proofs:sep_rank_lower_bound:graph}, with differences stemming from the distinction between the operation of a GNN over graph and vertex prediction.
Namely, we construct weights~$\params$ and template vectors satisfying $\log \brk{ \rank \mat{ \gridtensor{ \funcvert{\params}{\graph}{t} } }{\I} } \geq \log \brk{ \alpha_{\cut^*_t, t} } \cdot \nwalk{L - 1}{\cut^*_t}{ \{ t \} }$, where: 
\[
\cut^*_t \in \argmax\nolimits_{ \cut \in \cutset (\I) } \log \brk{ \alpha_{\cut, t} } \cdot \nwalk{L - 1}{\cut}{\{t\}}
\text{\,.}
\]

If $\nwalk{L - 1}{\cut^*_t}{\{t\}} = 0$, then the claim is trivial since there exist weights and template vectors for which  $\mat{ \gridtensor{ \funcvert{\params}{\graph}{t} } }{\I}$ is not the zero matrix (\eg~taking all weight matrices to be zero-padded identity matrices and choosing a single template vector holding one in its first entry and zeros elsewhere).

Now, assuming that $\nwalk{L - 1}{\cut^*_t}{\{t\}} > 0$, which in particular implies that $\I \neq \emptyset, \I \neq \vertices,$ and $\cut^*_t \neq \emptyset$, we begin with the case of GNN depth $L = 1$, after which we treat the more general $L \geq 2$ case.

\textbf{Case of $L = 1$:}
Consider the weights $\params = \brk{ \weightmat{1}, \weightmat{o} }$ given by $\weightmat{1} := \Ibf \in \R^{\hdim \times \indim}$ and $\weightmat{o} := (1, \ldots, 1) \in \R^{1 \times \hdim}$, where $\Ibf$ is a zero padded identity matrix, \ie~it holds ones on its diagonal and zeros elsewhere.
We choose template vectors $\vbf^{(1)}, \ldots, \vbf^{(\mindim)} \in \R^{\indim}$ such that $\vbf^{(m)}$ holds the $m$'th standard basis vector of $\R^{\mindim}$ in its first $\mindim$ coordinates and zeros in the remaining entries, for $m \in [\mindim]$ (recall $\mindim := \min \{ \indim, \hdim \}$).
Namely, denote by $\ebf^{(1)}, \ldots, \ebf^{(\mindim)} \in \R^{\mindim}$ the standard basis vectors of $\R^{\mindim}$, \ie~$\ebf^{(m)}_d = 1$ if $d = m$ and $\ebf^{(m)}_d = 0$ otherwise for all $m, d \in [\mindim]$. We let $\vbf^{(m)}_{:\mindim} := \ebf^{(m)}$ and $\vbf^{(m)}_{\mindim + 1:} := 0$ for all $m \in [\mindim]$.

We prove that for this choice of weights and template vectors, for all $d_1, \ldots, d_{\abs{\vertices}} \in [\mindim]$:
\be
\funcvert{\params}{\graph}{t} \brk1{ \vbf^{(d_1)}, \ldots, \vbf^{ ( d_{\abs{\vertices}} )} } = \begin{cases}
	1	& , \text{if } d_{j} = d_{j'} \text{ for all } j, j' \in \neigh (t) \\
	0	& , \text{otherwise} 
\end{cases}
\text{\,.}
\label{gnn:eq:standard_basis_grid_vertex}
\ee
To see it is so, notice that:
\[
\funcvert{\params}{\graph}{t} \brk1{ \vbf^{(d_1)}, \ldots, \vbf^{ ( d_{\abs{\vertices}} )} } = \weightmat{o} \hidvec{1}{t} = \sum\nolimits_{d = 1}^{\hdim} \hidvec{1}{t}_d
\text{\,,}
\]
with $\hidvec{1}{t} = \hadmp_{j \in \neigh (t)} \brk{\weightmat{1} \vbf^{(d_j)} } = \hadmp_{j \in \neigh (t)} \brk{ \Ibf \vbf^{(d_j)} }$.
Since $ \vbf^{(d_j)}_{:\mindim} = \ebf^{(d_j)}$ for all $j \in \neigh (t)$ and $\Ibf$ is a zero-padded $\mindim \times \mindim$ identity matrix, it holds that: 
\[
\funcvert{\params}{\graph}{t} \brk1{ \vbf^{(d_1)}, \ldots, \vbf^{ ( d_{\abs{\vertices}} )} } = \sum\nolimits_{d = 1}^{\mindim} \prod\nolimits_{j \in \neigh (t)} \ebf^{(d_j)}_d
\text{\,.}
\]
For every $d \in [\mindim]$ we have that $\prod\nolimits_{j \in \neigh (t)} \ebf^{(d_j)}_d = 1$ if $d_{j} = d$ for all $j \in \neigh (t)$ and $\prod\nolimits_{j \in \neigh (t)} \ebf^{(d_j)}_d = 0$ otherwise.
This implies that $\funcvert{\params}{\graph}{t} \brk{ \vbf^{(d_1)}, \ldots, \vbf^{ ( d_{\abs{\vertices}} )} } = 1$ if $d_{j} = d_{j'}$ for all $j, j' \in \neigh (t)$ and $\funcvert{\params}{\graph}{t} \brk{ \vbf^{(d_1)}, \ldots, \vbf^{ ( d_{\abs{\vertices}} )} } = 0$ otherwise, for all $d_1, \ldots, d_{\abs{\vertices}} \in [\mindim]$.

\cref{gnn:eq:standard_basis_grid_vertex} implies that $\mat{ \gridtensor{ \funcvert{\params}{\graph}{t} } }{\I}$ has a sub-matrix of rank $\mindim$.
Specifically, such a sub-matrix can be obtained by examining all rows and columns of $\mat{ \gridtensor{ \funcvert{\params}{\graph}{t} } }{\I}$ corresponding to some fixed indices $\brk{ d_i \in [\mindim] }_{i \in \vertices \setminus \neigh (t)}$ for the vertices that are not neighbors of $t$.
Thus, $\rank \mat{ \gridtensor{ \funcvert{\params}{\graph}{t} } }{\I} \geq \mindim$.
Notice that necessarily $\nwalk{0}{\cut^*_t}{\{t\}} = 1$, as it is not zero and there can only be one length zero walk to $t$ (the trivial walk that starts and ends at $t$).
Recalling that $\alpha_{\cut^*_t, t} := \mindim$ for $L = 1$, we therefore conclude: 
\[
\log \brk*{ \rank \mat{ \gridtensor{ \funcvert{\params}{\graph}{t} } }{\I} } \geq \log (\mindim) = \log \brk{ \alpha_{\cut^*_t, t} } \cdot \nwalk{0}{\cut^*_t}{\{t\}}
\text{\,.}
\]

\textbf{Case of $L \geq 2$:}
Let $M :=\multisetcoeffnoflex{\mindim}{ \nwalk{L - 1}{\cut^*_t}{\{t\}}}= \binom{\mindim + \nwalk{L - 1}{\cut^*_t}{\{t\}} - 1}{ \nwalk{L - 1}{\cut^*_t}{\{t\}} }$ be the multiset coefficient of $\mindim$ and $ \nwalk{L - 1}{\cut^*_t}{\{t\}}$ (recall $\mindim := \min \brk[c]{ \indim, \hdim}$).
By~\cref{gnn:lem:hadm_of_gram_full_rank}, there exists $\Zbf \in \R_{ > 0}^{M \times \mindim}$ for which
\[
\rank \brk*{ \hadmp^{ \nwalk{L - 1}{ \cut^*_t }{ \{t\} } } \brk*{ \Zbf \Zbf^\top } } = \multisetcoeff{ \mindim }{ \nwalk{L - 1}{ \cut^*_t }{ \{t\} } }
\text{\,,}
\]
with $\hadmp^{ \nwalk{L - 1}{\cut^*_t}{\{t\}} } \brk{ \Zbf \Zbf^\top }$ standing for the $\nwalk{L - 1}{\cut^*_t}{\{t\}}$'th Hadamard power of $\Zbf \Zbf^\top$.
For this $\Zbf$, by~\cref{gnn:lem:grid_submatrix_hadm_of_gram_vertex} below we know that there exist weights $\params$ and template vectors such that $\mat{ \gridtensornoflex{ \funcvert{\params}{\graph}{t} } }{ \I }$ has an $M \times M$ sub-matrix of the form $\Sbf \brk{ \hadmp^{ \nwalk{L - 1}{\cut^*_t}{\{t\}} } \brk{ \Zbf \Zbf^\top } } \Qbf$, where $\Sbf, \Qbf \in \R^{M \times M}$ are full-rank diagonal matrices.
Since the rank of a matrix is at least the rank of any of its sub-matrices:
\[
\begin{split}
	\rank \brk*{ \mat{ \gridtensorbig{ \funcvert{\params}{\graph}{t} } }{ \I } } & \geq \rank \brk*{ \Sbf \brk*{ \hadmp^{ \nwalk{L - 1}{\cut^*_t}{\{t\}} } \brk*{ \Zbf \Zbf^\top } } \Qbf } \\
	& = \rank \brk*{ \hadmp^{ \nwalk{L - 1}{\cut^*_t}{\{t\}} } \brk*{ \Zbf \Zbf^\top } } \\
	& = \multisetcoeff{ \mindim }{ \nwalk{L - 1}{ \cut^*_t }{ \{t\}} }
	\text{\,,}
\end{split}
\]
where the second transition is due to $\Sbf$ and $\Qbf$ being full-rank.
Applying~\cref{gnn:lem:multiset_coeff_lower_bound} to lower bound the multiset coefficient, we have that:
\[
\rank \brk*{ \mat{ \gridtensorbig{ \funcvert{\params}{\graph}{t} } }{ \I } } \geq \multisetcoeff{ \mindim }{ \nwalk{L - 1}{ \cut^*_t }{ \{t\} } } \geq \brk*{ \frac{\mindim - 1}{ \nwalk{L - 1}{ \cut^*_t }{ \{t\} }} + 1 }^{ \nwalk{L - 1}{ \cut^*_t }{ \{t\} } }
\text{\,.}
\]
Taking the log of both sides while recalling that $\alpha_{\cut^*_t, t} := \brk{ \mindim - 1 } \cdot \nwalk{L - 1}{\cut^*_t}{\{t\}}^{-1} + 1$, we conclude that:
\[
\log \brk{ \rank \mat{ \gridtensor{ \funcvert{\params}{\graph}{t} } }{\I} } \geq \log \brk{ \alpha_{\cut^*_t, t} } \cdot \nwalk{L - 1}{\cut^*_t}{\{t\}}
\text{\,.}
\]

\begin{lemma}
	\label{gnn:lem:grid_submatrix_hadm_of_gram_vertex}
	Suppose that the GNN inducing $\funcvert{\params}{\graph}{t}$ is of depth $L \geq 2$ and $\nwalk{L - 1}{\cut^*_t}{\{t\}} > 0$.
	For any $M \in \N$ and matrix with positive entries $\Zbf \in \R_{> 0}^{M \times D}$, there exist weights $\params$ and $M + 1$ template vectors $\vbf^{(1)}, \ldots, \vbf^{(M + 1)} \in \R^{\indim}$~such that $\mat{ \gridtensornoflex{ \funcvert{\params}{\graph}{t} } }{ \I }$ has an $M \times M$ sub-matrix $\Sbf \brk{ \hadmp^{ \nwalk{L - 1}{\cut^*_t}{\{t\}} } \brk{ \Zbf \Zbf^\top } } \Qbf$, where $\Sbf, \Qbf \in \R^{M \times M}$ are full-rank diagonal matrices and $\hadmp^{ \nwalk{L - 1}{\cut^*_t}{\{t\}} } \brk{ \Zbf \Zbf^\top }$ is the $\nwalk{L - 1}{\cut^*_t}{\{t\}}$'th Hadamard power of $\Zbf \Zbf^\top$.
\end{lemma}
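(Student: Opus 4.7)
The plan is to mimic the argument used for the graph-prediction analogue (\cref{gnn:lem:grid_submatrix_hadm_of_gram_graph}), exploiting the fact that the GNN architecture and the hidden-embedding recursion are identical in both settings; the only difference is that in vertex prediction the output layer reads off a single hidden embedding $\hidvec{L}{t}$ instead of aggregating over all vertices. I will therefore reuse the same weight assignment $\Wbf^{(1)} := \Ibf$ (zero-padded identity of size $\hdim \times \indim$), $\Wbf^{(2)}$ whose top row is all ones and whose remaining rows are zero, $\Wbf^{(l)}$ for $l \geq 3$ with a single one in position $(1,1)$ and zeros elsewhere, and $\Wbf^{(o)} := (1, 0, \ldots, 0)$. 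The templates will be $\vbf^{(m)}_{:\mindim} := \zbf^{(m)}$ and $\vbf^{(m)}_{\mindim+1:} := 0$ for $m \in [M]$ (where $\zbf^{(m)}$ is the $m$'th row of $\Zbf$), together with the all-ones vector $\vbf^{(M+1)}$.

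Next, since $\cut^*_t \in \cutset(\I)$ and $\nwalk{L-1}{\cut^*_t}{\{t\}} > 0$, unpack the definition of admissibility to obtain non-empty $\I' \subseteq \I$ and $\J' \subseteq \I^c$ with no repeating shared neighbors such that $\cut^*_t = \neigh(\I') \cap \neigh(\J')$. I will focus on the $M \times M$ sub-matrix $\Ubf$ of $\mat{\gridtensor{\funcvert{\params}{\graph}{t}}}{\I}$ whose $(m, n)$ entry corresponds to assigning $\vbf^{(m)}$ to every variable indexed by $\I'$, $\vbf^{(n)}$ to every variable indexed by $\J'$, and $\vbf^{(M+1)}$ to all remaining variables. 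The key step is to invoke \cref{gnn:lem:grid_tensor_entry_hadm_induction} (exactly as in the graph case, with the same inputs $\vbf^{(m)}, \vbf^{(n)}, \I', \J'$) to conclude that for every $i \in \vertices$,
\[
\hidvec{L}{i}_1 \;=\; \phi^{(L,i)}\!\bigl(\vbf^{(m)}\bigr)\, \inprodbig{\zbf^{(m)}}{\zbf^{(n)}}^{\nwalk{L-1}{\cut^*_t}{\{i\}}}\, \psi^{(L,i)}\!\bigl(\vbf^{(n)}\bigr),
\]
with $\phi^{(L,i)}, \psi^{(L,i)} : \R^{\indim} \to \R_{>0}$, and $\hidvec{L}{i}_d = 0$ for $d \geq 2$. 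The crucial departure from the graph-prediction case enters here: instead of taking the product over all $i \in \vertices$, the vertex-prediction readout gives $\Ubf_{m,n} = \Wbf^{(o)} \hidvec{L}{t} = \hidvec{L}{t}_1$, so only the $i = t$ instance of the above identity matters, and the exponent is exactly $\nwalk{L-1}{\cut^*_t}{\{t\}}$.

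Setting $\phi := \phi^{(L,t)}$ and $\psi := \psi^{(L,t)}$, this yields
\[
\Ubf_{m,n} \;=\; \phi\!\bigl(\vbf^{(m)}\bigr)\, \inprodbig{\zbf^{(m)}}{\zbf^{(n)}}^{\nwalk{L-1}{\cut^*_t}{\{t\}}}\, \psi\!\bigl(\vbf^{(n)}\bigr),
\]
so defining diagonal matrices $\Sbf$ and $\Qbf$ holding $\phi(\vbf^{(1)}), \ldots, \phi(\vbf^{(M)})$ and $\psi(\vbf^{(1)}), \ldots, \psi(\vbf^{(M)})$ on their diagonals, respectively, gives $\Ubf = \Sbf\,[\hadmp^{\nwalk{L-1}{\cut^*_t}{\{t\}}}(\Zbf \Zbf^\top)]\,\Qbf$. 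Both $\Sbf$ and $\Qbf$ are full-rank since $\phi$ and $\psi$ take values in $\R_{>0}$, which establishes the claim. I do not expect a significant obstacle: all nontrivial combinatorial work is already isolated inside \cref{gnn:lem:grid_tensor_entry_hadm_induction}, and the only conceptual adjustment relative to the graph-prediction proof is replacing the product-over-vertices readout with single-vertex readout, which automatically converts $\sum_{i \in \vertices} \nwalk{L-1}{\cut^*_t}{\{i\}}$ into $\nwalk{L-1}{\cut^*_t}{\{t\}}$, matching the target exponent.
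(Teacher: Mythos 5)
Your proposal is correct and follows essentially the same route as the paper's proof: identical weight and template assignments, the same admissible-subset sub-matrix $\Ubf$, the same invocation of \cref{gnn:lem:grid_tensor_entry_hadm_induction}, and the same observation that the single-vertex readout $\weightmat{o}\hidvec{L}{t}$ isolates the exponent $\nwalk{L-1}{\cut^*_t}{\{t\}}$. No gaps.
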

\begin{proof}
	Consider the weights $\params = \brk{ \weightmat{1}, \ldots, \weightmat{L}, \weightmat{o} }$ defined by:
	\[
	\begin{split}
		& \weightmat{1} := \Ibf \in \R^{\hdim \times \indim} \text{\,,} \\[0.2em]
		& \weightmat{2} := \begin{pmatrix} 1 & 1 & \cdots & 1 \\ 0 & 0 & \cdots & 0 \\ \vdots & \vdots & \cdots & \vdots \\ 0 & 0 & \cdots & 0\end{pmatrix} \in \R^{\hdim \times \hdim} \text{\,,} \\[0.2em]
		\forall l \in \{ 3, \ldots, L\} : ~&\weightmat{l} := \begin{pmatrix} 1 & 0 & \cdots & 0 \\ 0 & 0 & \cdots & 0 \\ \vdots & \vdots & \cdots & \vdots \\ 0 & 0 & \cdots & 0\end{pmatrix} \in \R^{\hdim \times \hdim} \text{\,,} \\[0.2em]
		& \weightmat{o} := \begin{pmatrix} 1 & 0 & \cdots & 0 \end{pmatrix} \in \R^{1 \times \hdim} \text{\,,}
	\end{split}
	\]
	where $\Ibf$ is a zero padded identity matrix, \ie~it holds ones on its diagonal and zeros elsewhere.
	We let the templates $\vbf^{(1)}, \ldots, \vbf^{(M)} \in \R^{\indim}$ be the vectors holding the respective rows of $\Zbf$ in their first $\mindim$ coordinates and zeros in the remaining entries (recall $\mindim := \min \brk[c]{ \indim, \hdim}$).
	That is, denoting the rows of $\Zbf$ by $\zbf^{(1)}, \ldots, \zbf^{(M)} \in \R_{> 0}^{\mindim}$, we let $\vbf^{(m)}_{: \mindim} := \zbf^{(m)}$ and $\vbf^{(m)}_{\mindim + 1: } := 0$ for all $m \in [M]$.
	We set all entries of the last template vector to one, \ie~$\vbf^{(M + 1)} := (1, \ldots, 1) \in \R^{\indim}$.
	
	Since $\cut^*_t \in \cutset (\I)$, \ie~it is an admissible subset of $\cut_\I$ (\cref{gnn:def:admissible_subsets}), there exist $\I' \subseteq \I, \J' \subseteq \I^c$ with no repeating shared neighbors (\cref{gnn:def:no_rep_neighbors}) such that $\cut^*_t = \neigh (\I') \cap \neigh ( \J' )$.
	Notice that $\I'$ and $\J'$ are non-empty as $\cut^*_t  \neq \emptyset$ (this is implied by $\nwalk{L - 1}{\cut^*_t}{\{t\}} > 0$).
	We focus on the $M \times M$ sub-matrix of $\mat{ \gridtensornoflex{ \funcvert{\params}{\graph}{t} } }{ \I }$ that includes only rows and columns corresponding to evaluations of $\funcvert{\params}{\graph}{t}$ where all variables indexed by $\I'$ are assigned the same template vector from $\vbf^{(1)}, \ldots, \vbf^{(M)}$, all variables indexed by $\J'$ are assigned the same template vector from $\vbf^{(1)}, \ldots, \vbf^{(M)}$, and all remaining variables are assigned the all-ones template vector $\vbf^{(M + 1)}$.
	Denoting this sub-matrix by $\Ubf \in \R^{M \times M}$, it therefore upholds:
	\[
	\Ubf_{m, n} = \funcvert{\params}{\graph}{t} \brk*{ \brk1{ \fvec{i} \leftarrow \vbf^{(m)} }_{i \in \I'} , \brk1{ \fvec{j} \leftarrow \vbf^{(n)}}_{j \in \J'} , \brk1{ \fvec{k} \leftarrow \vbf^{(M + 1)} }_{k \in \vertices \setminus \brk{ \I' \cup \J' } } }
	\text{\,,}
	\]
	for all $m, n \in [M]$, where we use $\brk{ \fvec{i} \leftarrow \vbf^{(m)} }_{i \in \I'}$ to denote that input variables indexed by $\I'$ are assigned the value $\vbf^{(m)}$.
	To show that $\Ubf$ obeys the form
	\[
	\Sbf \brk{ \hadmp^{ \nwalk{L - 1}{\cut^*_t}{ \{t\} } } \brk{ \Zbf \Zbf^\top } } \Qbf
	\]
	for full-rank diagonal $\Sbf, \Qbf \in \R^{M \times M}$, we prove there exist $\phi, \psi : \R^{\indim} \to \R_{> 0}$ such that $\Ubf_{m, n} = \phi \brk{ \vbf^{(m)} } \inprodnoflex{ \zbf^{(m)} }{ \zbf^{(n)} }^{ \nwalk{L - 1}{ \cut^*_t }{ \{t\}} } \psi \brk{ \vbf^{(n)} }$ for all $m, n \in [M]$.
	Indeed, defining $\Sbf$ to hold $\phi \brk{ \vbf^{(1)} }, \ldots, \phi \brk{ \vbf^{(M)} }$ on its diagonal and $\Qbf$ to hold $\psi \brk{ \vbf^{(1)} }, \ldots, \psi \brk{ \vbf^{(M)} }$ on its diagonal, we have that $\Ubf = \Sbf \brk{ \hadmp^{ \nwalk{L - 1}{\cut^*_t}{\{t\}} } \brk{ \Zbf \Zbf^\top } } \Qbf$.
	Since $\Sbf$ and $\Qbf$ are clearly full-rank (diagonal matrices with non-zero entries on their diagonal), the proof concludes.
	
	\medskip
	
	For $m, n \in [M]$, let $\hidvec{l}{i} \in \R^{\hdim}$ be the hidden embedding for $i \in \vertices$ at layer $l \in [L]$ of the GNN inducing $\funcvert{\params}{\graph}{t}$, over the following assignment to its input variables (\ie~vertex features):
	\[
	\brk1{ \fvec{i} \leftarrow \vbf^{(m)} }_{i \in \I'} , \brk1{ \fvec{j} \leftarrow \vbf^{(n)}}_{j \in \J'} , \brk1{ \fvec{k} \leftarrow \vbf^{(M + 1)} }_{k \in \vertices \setminus \brk{ \I' \cup \J' } }
	\text{\,.}
	\]
	Invoking~\cref{gnn:lem:grid_tensor_entry_hadm_induction} with $\vbf^{(m)}, \vbf^{(n)}, \I',$ and $\J'$, it holds that:
	\[
	\hidvec{L}{t}_{1} = \phi^{(L, t)} \brk1{ \vbf^{(m)} } \inprodbig{ \zbf^{(m)} }{ \zbf^{(n)} }^{ \nwalk{L - 1}{ \cut^*_t }{ \{t\} } } \psi^{(L, t)} \brk1{ \vbf^{(n)} } ~~~ , ~~~ \forall d \in \{ 2, \ldots, \hdim \} :~\hidvec{L}{t}_d = 0
	\text{\,,}
	\]
	for some $\phi^{(L, t)}, \psi^{(L, t)} : \R^{\indim} \to \R_{ > 0}$.
	Since
	\[
	\begin{split}
		\Ubf_{m, n} & = \funcvert{\params}{\graph}{t} \brk*{ \brk1{ \fvec{i} \leftarrow \vbf^{(m)} }_{i \in \I'} , \brk1{ \fvec{j} \leftarrow \vbf^{(n)}}_{j \in \J'} , \brk1{ \fvec{k} \leftarrow \vbf^{(M + 1)} }_{k \in \vertices \setminus \brk{ \I' \cup \J' } } } \\
		& = \weightmat{o} \hidvec{L}{t}
	\end{split}
	\]
	and $\weightmat{o} = \brk{1, 0, \ldots, 0}$, this implies that:
	\[
	\Ubf_{m, n} = \hidvec{L}{t}_1 = \phi^{(L, t)} \brk1{ \vbf^{(m)} } \inprodbig{ \zbf^{(m)} }{ \zbf^{(n)} }^{ \nwalk{L - 1}{ \cut^*_t }{ \{t\} } } \psi^{(L, t)} \brk1{ \vbf^{(n)} } 
	\text{\,.}
	\]
	Defining $\phi := \phi^{(L, t)}$ and $\psi := \psi^{(L, t)}$ leads to the sought-after expression for $\Ubf_{m, n}$:
	\[
	\Ubf_{m, n} = \phi \brk1{ \vbf^{(m)} } \inprodbig{ \zbf^{(m)} }{ \zbf^{(n)} }^{ \nwalk{L - 1}{ \cut^*_t }{ \{t\} } } \psi \brk1{ \vbf^{(n)} } 
	\text{\,.}
	\]
\end{proof}

\subsubsection{Technical Lemmas}
\label{gnn:app:proofs:sep_rank_lower_bound:lemmas}

For completeness, we include the \emph{vector rearrangement inequality} from~\cite{levine2018benefits}, which we employ for proving the subsequent~\cref{gnn:lem:hadm_of_gram_full_rank}.

\begin{lemma}[Lemma~1 from~\cite{levine2018benefits}]
	\label{gnn:lem:vector_rearrangement}
	Let $\abf^{(1)}, \ldots, \abf^{(M)} \in \R_{\geq 0}^D$ be $M \in \N$ different vectors with non-negative entries.
	Then, for any permutation $\sigma : [M] \to [M]$ besides the identity permutation it holds that:
	\[
	\sum\nolimits_{m = 1}^M \inprod{ \abf^{(m)} }{ \abf^{(\sigma(m))} } < \sum\nolimits_{m = 1}^M \normbig{ \abf^{(m)} }^2
	\text{\,.}
	\]
\end{lemma}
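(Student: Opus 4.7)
The plan is to establish the stronger identity
\[
\sum\nolimits_{m=1}^M \normbig{ \abf^{(m)} }^2 \;-\; \sum\nolimits_{m=1}^M \inprod{\abf^{(m)}}{\abf^{(\sigma(m))}} \;=\; \tfrac{1}{2}\sum\nolimits_{m=1}^M \normbig{ \abf^{(m)} - \abf^{(\sigma(m))} }^2
\]
and then read off strict positivity of the right-hand side from the hypotheses. This identity bypasses Cauchy--Schwarz/rearrangement considerations entirely and actually does not use the non-negativity assumption on the entries of the vectors, only their pairwise distinctness.

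To derive the identity, I would expand each squared norm on the right via $\normnoflex{ \abf^{(m)} - \abf^{(\sigma(m))} }^2 = \normnoflex{\abf^{(m)}}^2 - 2\inprod{\abf^{(m)}}{\abf^{(\sigma(m))}} + \normnoflex{\abf^{(\sigma(m))}}^2$, and then use the fact that $\sigma$ is a bijection of $[M]$ so that $\sum_m \normnoflex{\abf^{(\sigma(m))}}^2 = \sum_m \normnoflex{\abf^{(m)}}^2$. Summing the three pieces and rearranging gives the claimed identity immediately.

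Given the identity, the right-hand side is a sum of non-negative terms, so it is certainly $\geq 0$, which recovers the weak inequality. To upgrade to strict inequality under the hypotheses, I need a single index $m^\star \in [M]$ at which $\abf^{(m^\star)} \neq \abf^{(\sigma(m^\star))}$. Because $\sigma$ is not the identity, there exists $m^\star$ with $\sigma(m^\star) \neq m^\star$; because the vectors $\abf^{(1)}, \ldots, \abf^{(M)}$ are pairwise distinct, this implies $\abf^{(m^\star)} \neq \abf^{(\sigma(m^\star))}$, whence $\normnoflex{ \abf^{(m^\star)} - \abf^{(\sigma(m^\star))} }^2 > 0$. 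Adding this strictly positive term to the other non-negative summands gives a strictly positive right-hand side, and therefore strict inequality in the statement.

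There is no real obstacle here; the only care needed is to invoke distinctness of the vectors exactly where it is required, namely to rule out $\abf^{(m^\star)} = \abf^{(\sigma(m^\star))}$ for the displaced index $m^\star$. The proof is essentially two lines and will fit comfortably into the proofs appendix.
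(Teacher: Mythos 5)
Your proof is correct. The polarization identity you propose follows exactly as you describe (expanding each $\normnoflex{\abf^{(m)} - \abf^{(\sigma(m))}}^2$ and using that $\sigma$ is a bijection to re-index $\sum_m \normnoflex{\abf^{(\sigma(m))}}^2$), and the strict-inequality step correctly isolates a displaced index $m^\star$ and invokes pairwise distinctness of the vectors to get $\normnoflex{\abf^{(m^\star)} - \abf^{(\sigma(m^\star))}}^2 > 0$. Note that the paper does not prove this lemma at all~---~it imports it verbatim from~\cite{levine2018benefits}~---~so there is no in-paper argument to compare against; your two-line derivation is a perfectly good self-contained substitute. Your observation that non-negativity of the entries is not actually needed is also accurate: the hypothesis is inherited from the cited source (and is automatically satisfied where the lemma is applied, to exponent configurations $\qbf^{(m)} \in \N_{\geq 0}^D$ in the proof of~\cref{gnn:lem:hadm_of_gram_full_rank}), but plays no role in the inequality itself.
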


Taking the $P$'th Hadamard power of a rank at most $D$ matrix results in a matrix whose rank is at most the multiset coefficient $\multisetcoeffnoflex{D}{P} := \binom{D + P - 1}{P}$ (see, \eg, Theorem~1 in~\cite{amini2011low}).
\cref{gnn:lem:hadm_of_gram_full_rank}, adapted from Appendix~B.2 in~\cite{levine2020limits}, guarantees that we can always find a $\multisetcoeffnoflex{D}{P} \times D$ matrix $\Zbf$ with positive entries such that $\rank \brk{ \hadmp^P \brk*{ \Zbf \Zbf^\top } }$ is maximal, \ie~equal to $\multisetcoeffnoflex{D}{P}$.

\begin{lemma}[adapted from Appendix~B.2 in~\cite{levine2020limits}]
	\label{gnn:lem:hadm_of_gram_full_rank}
	For any $D \in \N$ and $P \in \N_{\geq 0}$, there exists a matrix with positive entries $\Zbf \in \R_{> 0}^{\multisetcoeff{D}{P} \times D}$ for which:
	\[
	\rank \brk*{ \hadmp^P \brk*{ \Zbf \Zbf^\top } } = \multisetcoeff{D}{P}
	\text{\,,}
	\]
	where $\hadmp^{ P } \brk{ \Zbf \Zbf^\top }$ is the $P$'th Hadamard power of $\Zbf \Zbf^\top$.
\end{lemma}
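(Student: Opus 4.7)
Set $M := \multisetcoeffnoflex{D}{P}$, and write $\zbf^{(1)}, \ldots, \zbf^{(M)} \in \R^D_{> 0}$ for the rows of $\Zbf$. The first step is to apply the multinomial theorem to expand
\[
\brk1{ \hadmp^P \brk{ \Zbf \Zbf^\top } }_{i, j} = \inprodbig{ \zbf^{(i)} }{ \zbf^{(j)} }^P = \sum_{ \alpha \in \N^D_{\geq 0},~ |\alpha| = P } \binom{P}{\alpha} \prod_{d = 1}^D \brk1{ z^{(i)}_d }^{\alpha_d} \brk1{ z^{(j)}_d }^{\alpha_d}
\text{\,,}
\]
where the outer sum has exactly $M$ terms. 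Defining $\Ybf \in \R^{M \times M}$ by $\Ybf_{i, \alpha} = \prod_{d = 1}^D \brk{ z^{(i)}_d }^{\alpha_d}$ (indexed by $i \in [M]$ and multi-indices $\alpha$ with $|\alpha| = P$), and the diagonal matrix $\Dbf \in \R^{M \times M}$ by $\Dbf_{\alpha, \alpha} = \binom{P}{\alpha} > 0$, this expansion reads $\hadmp^P \brk{ \Zbf \Zbf^\top } = \Ybf \Dbf \Ybf^\top$. Because $\Dbf$ is positive definite, we have $\rank \brk{ \Ybf \Dbf \Ybf^\top } = \rank \brk{ \Ybf \Dbf^{1/2} } = \rank \brk{ \Ybf }$, so it suffices to exhibit $\Zbf \in \R^{M \times D}_{> 0}$ for which $\Ybf$ is invertible.

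Next, I would observe that $\det \brk{ \Ybf }$, viewed as a function of the $MD$ variables $\brk[c]{ z^{(i)}_d }$, is a polynomial. If this polynomial is not identically zero, then its nonzero locus is open and dense, and in particular intersects $\R^{M \times D}_{> 0}$ (the zero set of a nontrivial polynomial has Lebesgue measure zero). To show nonvanishing, I would exhibit a single choice of positive $\Zbf$ for which $\det \brk{ \Ybf } \neq 0$ by reducing to a generalized Vandermonde determinant. Specifically, pick distinct positive reals $t_1, \ldots, t_M > 0$ and set $z^{(i)}_d := t_i^{p_d}$, where the exponents $p_1, \ldots, p_D \in \N$ are chosen such that the linear map $\alpha \mapsto \sum_{d = 1}^D p_d \alpha_d$ is injective on $\brk[c]{ \alpha \in \N^D_{\geq 0} : |\alpha| = P}$; one valid choice is $p_d = (P + 1)^{d - 1}$, since then each multi-index $\alpha$ is recovered from the base-$(P + 1)$ expansion of $\sum_d p_d \alpha_d$. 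Under this substitution, $\Ybf_{i, \alpha} = t_i^{ q (\alpha) }$ where $q (\alpha) := \sum_d p_d \alpha_d$ is an injective map, so $\Ybf$ becomes a generalized Vandermonde matrix on the $M$ distinct positive bases $t_1, \ldots, t_M$ with $M$ distinct nonnegative exponents $\brk[c]{ q(\alpha) }_\alpha$. By the classical result that such generalized Vandermonde matrices are nonsingular, $\det \brk{ \Ybf } \neq 0$ at this choice, so the polynomial $\det \brk{ \Ybf }$ is not identically zero, and we can find $\Zbf \in \R^{M \times D}_{> 0}$ making $\Ybf$ (hence $\hadmp^P ( \Zbf \Zbf^\top )$) of rank $M$, as required.

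\textbf{Main obstacle.} The only nontrivial step is verifying that $\det \brk{ \Ybf }$ is not the zero polynomial; the rest is bookkeeping. I expect this to hinge on choosing the exponents $p_d$ so that $\alpha \mapsto q(\alpha)$ is injective, after which nonvanishing follows from the positivity of generalized Vandermonde determinants on ordered distinct positive bases (a consequence, e.g., of total positivity of the matrix $[t_i^{q_k}]$ when $0 < t_1 < \cdots < t_M$ and $q_1 < \cdots < q_M$). An alternative route, if one prefers to avoid invoking generalized Vandermonde, would be to apply the vector rearrangement inequality (Lemma~\ref{gnn:lem:vector_rearrangement}) directly to the rows of $\Ybf$ to prove strict positive-definiteness of $\Ybf \Ybf^\top$ for suitably chosen rows, but the Vandermonde reduction above seems more transparent.
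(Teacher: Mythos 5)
Your proposal is correct, and its first half coincides exactly with the paper's proof: the multinomial expansion yielding $\hadmp^P \brk{\Zbf \Zbf^\top} = \Ybf \Dbf \Ybf^\top$ with $\Dbf$ a positive diagonal matrix of multinomial coefficients is precisely the paper's factorization $\Abf \Sbf \Abf^\top$, and both arguments then reduce to exhibiting positive rows making the $M \times M$ monomial matrix nonsingular. Where you diverge is in the non-degeneracy step. The paper substitutes $\zbf^{(m)}_d = \gamma^{\qbf^{(m)}_d}$ for a single scalar $\gamma$, so that $\det(\Abf) = \sum_\sigma \sign(\sigma)\, \gamma^{\sum_m \inprod{\qbf^{(m)}}{\qbf^{(\sigma(m))}}}$ becomes a univariate polynomial in $\gamma$, and then invokes the vector rearrangement inequality (Lemma~\ref{gnn:lem:vector_rearrangement}) to show the top-degree monomial is attained only by the identity permutation, so the polynomial is nonzero and some $\gamma > 0$ works. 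You instead substitute $\zbf^{(i)}_d = t_i^{p_d}$ with $p_d = (P+1)^{d-1}$, which collapses $\Ybf$ to a generalized Vandermonde matrix $[t_i^{q(\alpha)}]$ with distinct positive bases and distinct exponents, and appeal to its classical nonsingularity. Both are sound; the trade-off is that the paper's route is self-contained given the rearrangement lemma it already states, whereas yours imports the generalized Vandermonde fact (which itself requires a short Descartes-rule or Chebyshev-system argument) but arguably makes the structure more transparent and avoids the permanent-like bookkeeping over permutations. Your remark about the nonzero locus being dense is redundant given that your explicit witness already has positive entries, but it is harmless.
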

\begin{proof}
	We let $M := \multisetcoeffnoflex{D}{P}$ for notational convenience.
	Denote by $\zbf^{(1)}, \ldots, \zbf^{( M )} \in \R^D$ the row vectors of $\Zbf \in \R_{> 0}^{M \times D}$.
	Observing the $(m, n)$'th entry of $\hadmp^P \brk*{ \Zbf \Zbf^\top }$:
	\[
	\brk[s]*{ \hadmp^P \brk*{ \Zbf \Zbf^\top } }_{m, n} = \inprod{ \zbf^{(m)} }{ \zbf^{(n)} }^P = \brk*{ \sum\nolimits_{d = 1}^D \zbf^{(m)}_d \cdot \zbf^{(n)}_d }^P
	\text{\,,}
	\]
	by expanding the power using the multinomial identity we have that:
	\be
	\begin{split}
		\brk[s]*{ \hadmp^P \brk*{ \Zbf \Zbf^\top } }_{m, n} & = \sum_{ \substack{ q_1, \ldots, q_D \in \N_{\geq 0} \\[0.1em] \text{s.t. } \sum\nolimits_{d = 1}^D q_d = P } } \binom{ P }{ q_1, \ldots, q_D } \prod_{d = 1}^D \brk*{ \zbf^{(m)}_d \cdot \zbf^{(n)}_d }^{q_d} \\
		& = \sum_{ \substack{ q_1, \ldots, q_D \in \N_{\geq 0} \\[0.1em] \text{s.t. } \sum\nolimits_{d = 1}^D q_d = P } } \binom{ P }{ q_1, \ldots, q_D } \brk*{ \prod_{d = 1}^D \brk*{ \zbf^{(m)}_d }^{q_d} } \cdot \brk*{ \prod_{d = 1}^D \brk*{ \zbf^{(n)}_d }^{q_d} }
		\text{\,,}
	\end{split}
	\label{gnn:eq:gram_hadmp_entry}
	\ee
	where in the last equality we separated terms depending on $m$ from those depending on $n$.
	
	Let $\brk{ \abf^{ (q_1, \ldots, q_D) } \in \R^M }_{q_1, \ldots, q_D \in \N_{\geq 0} \text{ s.t } \sum\nolimits_{d = 1}^D q_d = P }$ be $M$ vectors that, for all $q_1, \ldots, q_D \in \N_{\geq 0}$ satisfying $\sum\nolimits_{d = 1}^D q_d = P$ and $m \in [M]$, are defined by $\abf^{(q_1, \ldots, q_D)}_{m} = \prod_{d = 1}^D \brk1{ \zbf^{(m)}_d }^{q_d}$.
	As can be seen from~\cref{gnn:eq:gram_hadmp_entry}, we can write:
	\[\
	\hadmp^P \brk*{ \Zbf \Zbf^\top } = \Abf \Sbf \Abf^\top
	\text{\,,}
	\]
	where $\Abf \in \R^{M \times M}$ is the matrix whose columns are $\brk{ \abf^{ (q_1, \ldots, q_D) } }_{q_1, \ldots, q_D \in \N_{\geq 0} \text{ s.t } \sum\nolimits_{d = 1}^D q_d = P }$ and $\Sbf \in \R^{M \times M}$ is the diagonal matrix holding $\binom{P}{q_1, \ldots, q_D}$ for every $q_1, \ldots, q_D \in \N_{\geq 0}$ satisfying $\sum\nolimits_{d = 1}^D q_d = P$ on its diagonal.
	Since all entries on the diagonal of $\Sbf$ are positive, it is of full-rank, \ie~$\rank (\Sbf) = M$.
	Thus, to prove that there exists $\Zbf \in \R_{> 0}^{M \times D}$ for which $\rank \brk{ \hadmp^P \brk{ \Zbf \Zbf^\top } } = M$, it suffices to show that we can choose $\zbf^{(1)}, \ldots, \zbf^{(M)}$ with positive entries inducing $\rank \brk{ \Abf } = M$, for $\Abf$ as defined above.
	Below, we complete the proof by constructing such $\zbf^{(1)}, \ldots, \zbf^{(M)}$.
	
	We associate each of $\zbf^{(1)}, \ldots, \zbf^{(M)}$ with a different configuration from the set:
	\[
	\brk[c]*{ \qbf = \brk*{ q_1, \ldots, q_D } : q_1, \ldots, q_D \in \N_{\geq 0}  ~,~ \sum\nolimits_{d = 1}^D q_d = P }
	\text{\,,}
	\]
	where note that this set contains $M = \multisetcoeffnoflex{D}{P}$ elements.
	For $m \in [M]$, denote by $\qbf^{(m)}$ the configuration associated with $\zbf^{(m)}$.
	For a variable $\gamma \in \R$, to be determined later on, and every $m \in [M]$ and $d \in [D]$, we set:
	\[
	\zbf^{(m)}_d = \gamma^{\qbf_d^{(m)}}
	\text{\,.}
	\]
	Given these $\zbf^{(1)}, \ldots, \zbf^{(M)}$, the entries of $\Abf$ have the following form:
	\[
	\Abf_{m, n} = \prod\nolimits_{d = 1}^D \brk2{ \zbf^{(m)}_d }^{\qbf^{(n)}_d} = \prod\nolimits_{d = 1}^D \brk2{ \gamma^{\qbf_d^{(m)}} }^{\qbf^{(n)}_d} = \gamma^{ \sum\nolimits_{d = 1}^D \qbf_d^{(m)} \cdot \qbf_d^{(n)} } = \gamma^{ \inprod{\qbf^{(m)} }{ \qbf^{(n)} } } 
	\text{\,,}
	\]
	for all $m, n \in [M]$.
	Thus, $\det \brk{ \Abf } = \sum\nolimits_{\text{permutation } \sigma : [M] \to [M]} \sign (\sigma) \cdot \gamma^{ \sum\nolimits_{m = 1}^M \inprod{ \qbf^{(m)} }{ \qbf^{( \sigma (m) )} } }$ is polynomial in~$\gamma$.
	By~\cref{gnn:lem:vector_rearrangement},~$\sum\nolimits_{m = 1}^M \inprod{ \qbf^{(m)} }{ \qbf^{( \sigma (m) )} } < \sum\nolimits_{m = 1}^M \norm{ \qbf^{(m)} }^2$ for all $\sigma$ which is not the identity permutation.
	This implies that $\sum\nolimits_{m = 1}^M \norm{ \qbf^{(m)} }^2$ is the maximal degree of a monomial in $\det (\Abf)$, and it is attained by a single element in $\sum\nolimits_{\text{permutation } \sigma : [M] \to [M]} \sign (\sigma) \cdot \gamma^{ \sum\nolimits_{m = 1}^M \inprod{ \qbf^{(m)} }{ \qbf^{( \sigma (m) )} } }$~---~that corresponding to the identity permutation.
	Consequently, $\det (\Abf)$ cannot be the zero polynomial with respect to $\gamma$, and so it vanishes only on a finite set of values for $\gamma$.
	In particular, there exists $\gamma > 0$ such that $\det (\Abf) \neq 0$, meaning $\rank \brk{ \Abf } = M$.
	The proof concludes by noticing that for a positive $\gamma$ the entries of the chosen $\zbf^{(1)}, \ldots, \zbf^{(M)}$ are positive as well.
\end{proof}

Additionally, we make use of the following lemmas.

\begin{lemma}
	\label{gnn:lem:multiset_coeff_lower_bound}
	For any $D, P \in \N$, let $\multisetcoeff{D}{P} := \binom{D + P - 1}{P}$ be the multiset coefficient.
	Then:
	\[
	\multisetcoeff{D}{P} \geq \brk*{ \frac{D - 1}{P} + 1 }^{P}
	\text{\,.}
	\]
\end{lemma}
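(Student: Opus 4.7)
The plan is to establish the inequality by rewriting the multiset coefficient as a product of $P$ fractions and lower bounding each fraction separately by the same quantity $(D+P-1)/P$, which directly produces the claimed lower bound.

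First I would expand the binomial coefficient as
\[
\multisetcoeff{D}{P} = \binom{D+P-1}{P} = \prod_{i=0}^{P-1} \frac{D+P-1-i}{P-i}.
\]
Next I would show that each factor $\frac{D+P-1-i}{P-i}$ in this product is at least $\frac{D+P-1}{P}$. For $i=0$ this is an equality, and for $1\le i\le P-1$ one checks that
\[
\frac{D+P-1-i}{P-i} \;\geq\; \frac{D+P-1}{P} \iff P(D+P-1-i) \geq (P-i)(D+P-1),
\]
which after cancellation reduces to $i(D-1)\geq 0$. This holds since $D\geq 1$ and $i\geq 0$.

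Taking the product of these $P$ inequalities then gives
\[
\multisetcoeff{D}{P} \;\geq\; \left(\frac{D+P-1}{P}\right)^{P} \;=\; \left(\frac{D-1}{P}+1\right)^{P},
\]
which is exactly the desired bound. There is no real obstacle here: the only subtlety is verifying the per-factor inequality, which is elementary. I would note that this is essentially the standard estimate $\binom{n}{k}\geq (n/k)^{k}$ (valid when $n\geq k$) specialized to $n=D+P-1$ and $k=P$, where the condition $D+P-1\geq P$ is guaranteed by $D\in\mathbb{N}$.
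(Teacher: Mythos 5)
Your proof is correct and follows essentially the same route as the paper: the paper simply invokes the standard bound $\binom{N}{K} \geq (N/K)^K$ for $N \geq K$ and specializes it to $N = D+P-1$, $K = P$, whereas you additionally supply the elementary term-by-term verification of that standard bound. No issues.
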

\begin{proof}
	For any $N \geq K \in \N$, a known lower bound on the binomial coefficient is $\binom{N}{K} \geq \brk*{ \frac{N}{K} }^K$.
	Hence:
	\[
	\multisetcoeff{D}{P} = \binom{D + P - 1}{P} \geq \brk*{ \frac{D + P - 1}{P} }^P = \brk*{ \frac{D - 1}{P} + 1 }^{P}
	\text{\,.}
	\] 
\end{proof}

\begin{lemma}
	\label{gnn:lem:poly_mat_max_rank}
	For $D_1, D_2, K \in \N$, consider a polynomial function mapping variables $\params \in \R^K$ to matrices $\Abf (\params) \in \R^{D_1 \times D_2}$, \ie~the entries of $\Abf (\params)$ are polynomial in $\params$.
	If there exists a point $\params^* \in \R^K$ such that $\rank \brk{ \Abf (\params^*) } \geq R$, for $R \in [ \min \brk[c]{ D_1, D_2 } ]$, then the set $\brk[c]{ \params \in \R^K : \rank \brk{ \Abf (\params) } < R }$ has Lebesgue measure zero.
\end{lemma}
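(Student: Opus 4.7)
The plan is to reduce the rank condition to the vanishing of a single polynomial, and then apply the standard fact that the zero set of a non-zero real polynomial has Lebesgue measure zero. Concretely, I would use the classical characterization: $\rank \brk{ \Abf (\params) } \geq R$ if and only if there exists some $R \times R$ submatrix of $\Abf (\params)$ whose determinant is non-zero. Equivalently, $\rank \brk{ \Abf (\params) } < R$ if and only if \emph{every} $R \times R$ minor of $\Abf (\params)$ vanishes at $\params$.

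For each choice of row indices $\I \subseteq [D_1]$ and column indices $\J \subseteq [D_2]$ with $\abs{\I} = \abs{\J} = R$, let $p_{\I, \J} : \R^K \to \R$ denote the determinant of the corresponding $R \times R$ submatrix of $\Abf (\params)$. Since $\Abf (\params)$ has entries that are polynomial in $\params$, and the determinant is a polynomial in the entries of its argument, each $p_{\I, \J}$ is a polynomial function of $\params$. Thus
\[
\brk[c]{ \params \in \R^K : \rank \brk{ \Abf (\params) } < R } = \bigcap\nolimits_{\I, \J} \brk[c]{ \params \in \R^K : p_{\I, \J} (\params) = 0 }
\text{\,.}
\]
By assumption, $\rank \brk{ \Abf (\params^*) } \geq R$, so there is at least one pair $(\I^*, \J^*)$ for which $p_{\I^*, \J^*} (\params^*) \neq 0$. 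In particular, $p_{\I^*, \J^*}$ is not the identically zero polynomial.

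The main (and only nontrivial) step is to invoke the standard fact that the zero set of a non-zero polynomial in $K$ real variables has Lebesgue measure zero in $\R^K$; this can be shown by induction on $K$ using Fubini's theorem and the fact that a non-zero univariate polynomial has only finitely many roots. Applying this to $p_{\I^*, \J^*}$ gives that $\brk[c]{ \params \in \R^K : p_{\I^*, \J^*} (\params) = 0 }$ has Lebesgue measure zero. Since the set $\brk[c]{ \params \in \R^K : \rank \brk{ \Abf (\params) } < R }$ is contained in this zero set (by the intersection representation above), it too has Lebesgue measure zero, completing the proof. I do not anticipate a real obstacle here: the only subtle point is ensuring at least one minor gives a non-zero polynomial, which is guaranteed by the hypothesis on $\params^*$.
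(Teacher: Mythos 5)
Your proposal is correct and follows essentially the same route as the paper's proof: reduce the rank condition to the vanishing of all $R \times R$ minors, note that each minor is a polynomial in $\params$, observe that the hypothesis on $\params^*$ guarantees at least one minor is not identically zero, and conclude via the standard fact that the zero set of a non-zero polynomial has Lebesgue measure zero (which the paper cites rather than sketches).
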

\begin{proof}
	A matrix is of rank at least $R$ if and only if it has a $R \times R$ sub-matrix whose determinant is non-zero.
	The determinant of any sub-matrix of $\Abf (\params)$ is polynomial in the entries of $\Abf (\params)$, and so it is polynomial in $\params$ as well.
	Since the zero set of a polynomial is either the entire space or a set of Lebesgue measure zero~\cite{caron2005zero}, the fact that $\rank \brk{ \Abf (\params^*) } \geq R$ implies that $\brk[c]{ \params \in \R^K : \rank \brk{ \Abf (\params) } < R }$ has Lebesgue measure zero.
\end{proof}

\begin{lemma}
	\label{gnn:lem:grid_tensor_entry_hadm_induction}
	Let $\vbf, \vbf' \in \R_{\geq 0}^{\indim}$ whose first $\mindim := \min \{ \indim, \hdim\}$ entries are positive, and disjoint $\I', \J' \subseteq \vertices$ with no repeating shared neighbors (\cref{gnn:def:no_rep_neighbors}).
	Denote by $\hidvec{l}{i} \in \R^{\hdim}$ the hidden embedding for $i \in \vertices$ at layer $l \in [L]$ of a GNN with depth $L \geq 2$ and product aggregation (\cref{gnn:eq:gnn_update,gnn:eq:prod_gnn_agg}), given the following assignment to its input variables (\ie~vertex features):
	\[
	\brk1{ \fvec{i} \leftarrow \vbf }_{i \in \I'} , \brk1{ \fvec{j} \leftarrow \vbf' }_{j \in \J'} , \brk1{ \fvec{k} \leftarrow \1 }_{k \in \vertices \setminus \brk{ \I' \cup \J' } }
	\text{\,,}
	\]
	where $\1 \in \R^{\indim}$ is the vector holding one in all entries.
	Suppose that the weights $\weightmat{1}, \ldots, \weightmat{L}$ of the GNN are given by:
	\[
	\begin{split}
		& \weightmat{1} := \Ibf \in \R^{\hdim \times \indim} \text{\,,} \\[0.2em]
		& \weightmat{2} := \begin{pmatrix} 1 & 1 & \cdots & 1 \\ 0 & 0 & \cdots & 0 \\ \vdots & \vdots & \cdots & \vdots \\ 0 & 0 & \cdots & 0\end{pmatrix} \in \R^{\hdim \times \hdim} \text{\,,} \\[0.2em]
		\forall l \in \{ 3, \ldots, L\} : ~&\weightmat{l} := \begin{pmatrix} 1 & 0 & \cdots & 0 \\ 0 & 0 & \cdots & 0 \\ \vdots & \vdots & \cdots & \vdots \\ 0 & 0 & \cdots & 0\end{pmatrix} \in \R^{\hdim \times \hdim} \text{\,,}
	\end{split}
	\]
	where $\Ibf$ is a zero padded identity matrix, \ie~it holds ones on its diagonal and zeros elsewhere.
	Then, for all $l \in \{ 2, \ldots, L\}$ and $i \in \vertices$, there exist $\phi^{(l, i)}, \psi^{(l, i)} : \R^{\indim} \to \R_{ > 0}$ such that:
	\[
	\hidvec{l}{i}_{1} = \phi^{(l, i)} \brk{ \vbf } \inprod{ \vbf_{:\mindim} }{ \vbf'_{:D} }^{ \nwalk{l - 1}{ \cut }{ \{i\} } } \psi^{(l, i)} \brk{ \vbf' } \quad , \quad \forall d \in \{ 2, \ldots, \hdim \} :~\hidvec{l}{i}_d = 0
	\text{\,,}
	\]
	where $\cut := \neigh (\I') \cap \neigh (\J')$.
\end{lemma}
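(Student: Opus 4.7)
The plan is to proceed by induction on $l \in \{2,\dots,L\}$, after first explicitly computing the first-layer embeddings. Because of the special form of the weight matrices, all "action" is confined to the first coordinate from layer $2$ onwards, so the second claim ($\hidvec{l}{i}_d = 0$ for $d \geq 2$) will fall out for free at every step: $\weightmat{2}$ maps everything into the first coordinate and zeros out the rest, and for $l \geq 3$, $\weightmat{l}$ preserves only the first coordinate. Since product (Hadamard) aggregation preserves zero entries, this propagates through the remaining layers. The whole content of the lemma therefore lies in tracking the scalar $\hidvec{l}{i}_1$.

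\textbf{Layer 1 and the base case $l=2$.} First I would compute, using $\weightmat{1}=\Ibf$ (zero-padded identity), that for each $j \in \vertices$ and $d \in [\mindim]$,
\[
\hidvec{1}{j}_d = \vbf_d^{|\neigh(j)\cap\I'|}\,(\vbf'_d)^{|\neigh(j)\cap\J'|}\,,
\]
and $\hidvec{1}{j}_d = 0$ for $d > \mindim$ (entries outside $\I'\cup\J'$ contribute $1$'s). Since the first row of $\weightmat{2}$ is all-ones, applying the $\hadmp_{j\in\neigh(i)}$ aggregation then gives
\[
\hidvec{2}{i}_1 \;=\; \prod_{j\in\neigh(i)}\; \sum_{d=1}^{\mindim} \vbf_d^{|\neigh(j)\cap\I'|}\,(\vbf'_d)^{|\neigh(j)\cap\J'|}\,.
\]
For each factor I split on four cases according to $(|\neigh(j)\cap\I'|,|\neigh(j)\cap\J'|)$: both zero gives the constant $\mindim$; only the $\I'$ part nonzero gives a positive function of $\vbf$ alone; only $\J'$ nonzero gives a positive function of $\vbf'$ alone; both nonzero means $j\in\cut$, and this is where the \emph{no repeating shared neighbors} hypothesis is crucial~---~it forces both counts to equal exactly $1$, so the sum collapses to $\inprod{\vbf_{:\mindim}}{\vbf'_{:\mindim}}$. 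Counting the number of $j\in\neigh(i)$ that lie in $\cut$ yields $\nwalk{1}{\cut}{\{i\}}$ factors of $\inprod{\vbf_{:\mindim}}{\vbf'_{:\mindim}}$, and collecting the remaining factors into $\phi^{(2,i)}(\vbf)$ and $\psi^{(2,i)}(\vbf')$ gives the base case. Positivity of $\phi^{(2,i)}$ and $\psi^{(2,i)}$ follows from the positivity assumption on the first $\mindim$ coordinates of $\vbf,\vbf'$.

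\textbf{Inductive step.} Assuming the claim for $l\in\{2,\dots,L-1\}$, the special shape of $\weightmat{l+1}$ (only $(1,1)$ entry nonzero) gives $(\weightmat{l+1}\hidvec{l}{j})_1=\hidvec{l}{j}_1$ and $0$ elsewhere. The Hadamard aggregation then yields
\[
\hidvec{l+1}{i}_1 \;=\; \prod_{j\in\neigh(i)}\hidvec{l}{j}_1\,,
\]
and plugging in the inductive form factorizes cleanly into a product of $\phi^{(l,j)}(\vbf)$'s, a product of $\psi^{(l,j)}(\vbf')$'s, and a power of $\inprod{\vbf_{:\mindim}}{\vbf'_{:\mindim}}$ with exponent $\sum_{j\in\neigh(i)}\nwalk{l-1}{\cut}{\{j\}}$. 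The walk-counting identity $\sum_{j\in\neigh(i)}\nwalk{l-1}{\cut}{\{j\}}=\nwalk{l}{\cut}{\{i\}}$ (each length-$l$ walk from $\cut$ ending at $i$ is uniquely determined by its penultimate vertex $j\in\neigh(i)$ together with a length-$(l-1)$ walk from $\cut$ to $j$) finishes the step, and the new $\phi^{(l+1,i)},\psi^{(l+1,i)}$ inherit positivity.

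\textbf{Main obstacle.} The inductive step is essentially bookkeeping; the only delicate point is the base case $l=2$, specifically the identification of the exponent on $\inprod{\vbf_{:\mindim}}{\vbf'_{:\mindim}}$. This is where the admissibility hypothesis on $\cut$ (via the no-repeating-shared-neighbors property of $\I',\J'$) is used, and it is essential that it forces the two neighborhood counts to be exactly $1$ rather than merely both positive~---~otherwise the four-case analysis would yield mixed $\vbf,\vbf'$ terms that are not pure inner products and the whole factorization would break.
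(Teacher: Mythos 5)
Your proof is correct and follows essentially the same route as the paper's: an induction over $l$ starting from an explicit computation of $\hidvec{1}{j}$ as coordinatewise powers of $\vbf$ and $\vbf'$, with the no-repeating-shared-neighbors hypothesis used in the base case $l=2$ to force both neighborhood counts to equal one for $j \in \cut$ (so the sum over coordinates collapses to the inner product), and the identity $\sum_{j\in\neigh(i)}\nwalk{l-1}{\cut}{\{j\}}=\nwalk{l}{\cut}{\{i\}}$ driving the inductive step. Your four-case bookkeeping at the base case and the observation that the weight matrices confine everything to the first coordinate from layer $2$ onward match the paper's argument exactly.
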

\begin{proof}
	The proof is by induction over the layer $l \in \{2, \ldots, L\}$.
	For $l = 2$, fix $i \in \vertices$.
	By the update rule of a GNN with product aggregation:
	\[
	\hidvec{2}{i} = \hadmp_{j \in \neigh (i) } \brk1{ \weightmat{2} \hidvec{1}{j} }
	\text{\,.}
	\]
	Plugging in the value of $\weightmat{2}$ we get:
	\be
	\hidvec{2}{i}_{1} = \prod\nolimits_{ j \in \neigh (i) } \brk2{ \sum\nolimits_{d = 1}^{\hdim} \hidvec{1}{j}_d } \quad , \quad \forall d \in \{ 2, \ldots, \hdim \} :~\hidvec{2}{i}_d = 0
	\text{\,.}
	\label{gnn:eq:hid_vec_l2_entries_inter}
	\ee
	Let $\bar{\vbf}, \bar{\vbf}' \in \R^{\hdim}$ be the vectors holding $\vbf_{:\mindim}$ and $\vbf'_{:\mindim}$ in their first $\mindim$ coordinates and zero in the remaining entries, respectively.
	Similarly, we use $\bar{\1} \in \R^{\hdim}$ to denote the vector whose first $\mindim$ entries are one and the remaining are zero.
	Examining $\hidvec{1}{j}$ for $j \in \neigh (i)$, by the assignment of input variables and the fact that $\weightmat{1}$ is a zero padded identity matrix we have that:
	\[
	\begin{split}
		\hidvec{1}{j} & = \hadmp_{ k \in \neigh (j) } \brk1{ \weightmat{1} \fvec{k} } = \brk1{ \hadmp^{ \abs{ \neigh (j) \cap \I' } } \bar{\vbf} } \hadmp \brk1{ \hadmp^{ \abs{ \neigh (j) \cap \J' } } \bar{\vbf}' } \hadmp \brk1{ \hadmp^{ \abs{ \neigh (j) \setminus \brk{ \I' \cup \J' } } } \bar{\1} } \\
		& = \brk1{ \hadmp^{ \abs{ \neigh (j) \cap \I' } } \bar{\vbf} } \hadmp \brk1{ \hadmp^{ \abs{ \neigh (j) \cap \J' } } \bar{\vbf}' }
		\text{\,.}
	\end{split}
	\]
	Since the first $\mindim$ entries of $\bar{\vbf}$ and $\bar{\vbf}'$ are positive while the rest are zero, the same holds for $\hidvec{1}{j}$.
	Additionally, recall that $\I'$ and $\J'$ have no repeating shared neighbors.
	Thus, if $j \in \neigh (\I') \cap \neigh (\J') = \cut$, then $j$ has a single neighbor in $\I'$ and a single neighbor in $\J'$, implying $\hidvec{1}{j} = \bar{\vbf} \hadmp \bar{\vbf}'$.
	Otherwise, if $j \notin \cut$, then $\neigh (j) \cap \I' = \emptyset$ or $\neigh (j) \cap \J' = \emptyset$ must hold.
	In the former $\hidvec{1}{j}$ does not depend on $\vbf$, whereas in the latter $\hidvec{1}{j}$ does not depend on $\vbf'$.
	
	Going back to~\cref{gnn:eq:hid_vec_l2_entries_inter}, while noticing that $\abs{ \neigh (i) \cap \cut } = \nwalk{1}{\cut}{ \{i\} }$, we arrive at:
	\[
	\begin{split}
		\hidvec{2}{i}_{1} & = \prod\nolimits_{ j \in \neigh (i) \cap \cut } \brk2{ \sum\nolimits_{d = 1}^{\hdim} \hidvec{1}{j}_d } \cdot \prod\nolimits_{ j \in \neigh (i) \setminus \cut } \brk2{ \sum\nolimits_{d = 1}^{\hdim} \hidvec{1}{j}_d } \\
		& = \prod\nolimits_{ j \in \neigh (i) \cap \cut } \brk2{ \sum\nolimits_{d = 1}^{\hdim} \brk[s]1{ \bar{\vbf} \hadmp \bar{\vbf}' }_d } \cdot \prod\nolimits_{ j \in \neigh (i) \setminus \cut } \brk2{ \sum\nolimits_{d = 1}^{\hdim} \hidvec{1}{j}_d } \\
		& = \inprod{ \vbf_{:D} }{ \vbf'_{:D} }^{\nwalk{1}{\cut}{\{i\}}} \cdot \prod\nolimits_{ j \in \neigh (i) \setminus \cut } \brk2{ \sum\nolimits_{d = 1}^{\hdim} \hidvec{1}{j}_d }
		\text{\,.}
	\end{split}
	\]
	As discussed above, for each $j \in \neigh (i) \setminus \cut$ the hidden embedding $\hidvec{1}{j}$ does not depend on $\vbf$ or it does not depend on $\vbf'$.
	Furthermore, $\sum\nolimits_{d = 1}^{\hdim} \hidvec{1}{j}_d > 0$ for all $j \in \neigh (i)$.
	Hence, there exist $\phi^{(2, i)}, \psi^{(2, i)} : \R^{\indim} \to \R_{> 0}$ such that:
	\[
	\hidvec{2}{i}_{1} = \phi^{(2, i)} \brk{ \vbf } \inprod{ \vbf_{:\mindim} }{ \vbf'_{:D} }^{ \nwalk{1}{ \cut }{ \{i\} } } \psi^{(2, i)} \brk{ \vbf' }
	\text{\,,}
	\]
	completing the base case.
	
	Now, assuming that the inductive claim holds for $l - 1 \geq 2$, we prove that it holds for $l$.
	Let $i \in \vertices$.
	By the update rule of a GNN with product aggregation $\hidvec{l}{i} = \hadmp_{j \in \neigh (i)} \brk{ \weightmat{l} \hidvec{l -1}{j} }$.
	Plugging in the value of $\weightmat{l}$ we get:
	\[
	\hidvec{l}{i}_{1} = \prod\nolimits_{ j \in \neigh (i) } \hidvec{l - 1}{j}_1 \quad , \quad \forall d \in \{ 2, \ldots, \hdim \} :~\hidvec{l}{i}_d = 0
	\text{\,.}
	\]
	By the inductive assumption $\hidvec{l - 1}{j}_1 = \phi^{(l - 1, j)} \brk{ \vbf } \inprod{ \vbf_{:\mindim} }{ \vbf'_{:D} }^{ \nwalk{l - 2}{ \cut }{ \{j\} } } \psi^{(l - 1, j)} \brk{ \vbf' }$ for all $j \in \neigh (i)$, where $\phi^{(l - 1, j)}, \psi^{(l - 1, j)} : \R^{\indim} \to \R_{> 0}$.
	Thus:
	\[
	\begin{split}
		\hidvec{l}{i}_{1} & = \prod\nolimits_{ j \in \neigh (i) } \hidvec{l - 1}{j}_1 \\
		& = \prod\nolimits_{j \in \neigh (i) } \phi^{(l - 1, j)} \brk{ \vbf } \inprod{ \vbf_{:\mindim} }{ \vbf'_{:D} }^{ \nwalk{l - 2}{ \cut }{ \{j\} } } \psi^{(l - 1, j)} \brk{ \vbf' } \\
		& = \brk*{ \prod\nolimits_{j \in \neigh (i) } \phi^{(l - 1, j)} \brk{ \vbf } } \cdot \inprod{ \vbf_{:\mindim} }{ \vbf'_{:D} }^{ \sum\nolimits_{j \in \neigh (i) } \nwalk{l - 2}{\cut}{ \{j\} } } \cdot \brk*{ \prod\nolimits_{j \in \neigh (i) } \psi^{(l - 1, j)} \brk{ \vbf' } }
		\text{\,.}
	\end{split}
	\]
	Define $\phi^{(l, i)} : \vbf \mapsto \prod\nolimits_{j \in \neigh (i) } \phi^{(l - 1, j)} \brk{ \vbf }$ and $\psi^{(l, i)} : \vbf' \mapsto \prod\nolimits_{j \in \neigh (i) } \psi^{(l - 1, j)} \brk{ \vbf' }$.
	Since the range of $\phi^{(l - 1, j)}$ and $\psi^{(l - 1, j)}$ is $\R_{> 0}$ for all $j \in \neigh (i)$, so is the range of $\phi^{(l, i)}$ and $\psi^{(l, i)}$.
	The desired result thus readily follows by noticing that $\sum\nolimits_{j \in \neigh (i) } \nwalk{l - 2}{\cut}{ \{j\} } = \nwalk{l - 1}{ \cut}{ \{i\}}$:
	\[
	\hidvec{l}{i}_{1} = \phi^{(l, i)} \brk{ \vbf } \inprod{ \vbf_{:\mindim} }{ \vbf'_{:D} }^{ \nwalk{l - 1}{ \cut }{ \{i\} } } \psi^{(l, i)} \brk{ \vbf' }
	\text{\,.}
	\]
\end{proof}

\subsection{Proof of~\cref{gnn:thm:directed_sep_rank_upper_bound}}
\label{gnn:app:proofs:directed_sep_rank_upper_bound}

The proof follows a line identical to that of~\cref{gnn:thm:sep_rank_upper_bound} (\cref{gnn:app:proofs:sep_rank_upper_bound}), requiring only slight adjustments.
We outline the necessary changes.

Extending the tensor network representations of GNNs with product aggregation to directed graphs and multiple edge types is straightforward.
Nodes, legs, and leg weights are as described in~\cref{gnn:app:prod_gnn_as_tn} for undirected graphs with a single edge type, except that:
\begin{itemize}[leftmargin=2em]
	\vspace{-1mm}
	\item Legs connecting $\delta$-tensors with weight matrices in the same layer are adapted such that only incoming neighbors are considered.
	Formally, in~\cref{gnn:eq:graphtensor_tn,gnn:eq:vertextensor_tn}, $\neigh (i)$ is replaced by $\neighin (i)$ in the leg definitions, for $i \in \vertices$.
	
	\item Weight matrices $\brk{ \weightmat{l, q} }_{l \in [L] , q \in [Q]}$ are assigned to nodes in accordance with edge types.
	Namely, if at layer $l \in [L]$ a $\delta$-tensor associated with $i \in \vertices$ is connected to a weight matrix associated with $j \in \neighin (i)$, then $\weightmat{l, \edgetypemap (j, i)}$ is assigned to the weight matrix node, as opposed to $\weightmat{l}$ in the single edge type setting.
	Formally, let $\weightmatdup{l}{j}{\gamma}$ be a node at layer $l \in [L]$ connected to $\deltatensordup{l}{i}{\gamma'}$, for $i \in \vertices, j \in \neighin (i)$, and some $\gamma, \gamma' \in \N$.
	Then, $\weightmatdup{l}{j}{\gamma}$ stands for a copy of $\weightmat{l, \edgetypemap (j, i)}$.
\end{itemize}

For $\fmat = \brk{\fvec{1}, \ldots, \fvec{\abs{\vertices}} } \in \R^{\indim \times \abs{\vertices} }$, let $\tngraph (\fmat)$ and $\tngraph^{(t)} (\fmat)$ be the tensor networks corresponding to $\funcgraph{\params}{\graph} (\fmat)$ and $\funcvert{\params}{\graph}{t} (\fmat)$, respectively, whose construction is outlined above.
Then,~\cref{gnn:lem:min_cut_tn_sep_rank_ub} (from Appendix~\ref{gnn:app:proofs:sep_rank_upper_bound:min_cut_tn_ub}) and its proof apply as stated.
Meaning, $\sepranknoflex{\funcgraph{\params}{\graph} }{\I}$ and $\sepranknoflex{ \funcvert{\params}{\graph}{t} }{\I}$ are upper bounded by the minimal modified multiplicative cut weights in $\tngraph (\fmat)$ and $\tngraph^{(t)} (\fmat)$, respectively, among cuts separating leaves associated with vertices of the input graph in~$\I$ from leaves associated with vertices of the input graph in~$\I^c$.
Therefore, to establish~\cref{gnn:eq:directed_sep_rank_upper_bound_graph_pred,gnn:eq:directed_sep_rank_upper_bound_vertex_pred}, it suffices to find cuts in the respective tensor networks with sufficiently low modified multiplicative weights.
As is the case for undirected graphs with a single edge type (see Appendices~\ref{gnn:app:proofs:sep_rank_upper_bound:graph} and \ref{gnn:app:proofs:sep_rank_upper_bound:vertex}), the cuts separating nodes corresponding to vertices in~$\I$ from all other nodes yield the desired upper bounds. 
\qed

\subsection{Proof of~\cref{gnn:thm:directed_sep_rank_lower_bound}}
\label{gnn:app:proofs:directed_sep_rank_lower_bound}

The proof follows a line identical to that of~\cref{gnn:thm:sep_rank_lower_bound} (\cref{gnn:app:proofs:sep_rank_lower_bound}), requiring only slight adjustments.
We outline the necessary changes.

In the context of graph prediction, let $\cut^* \in \argmax_{ \cut \in \cutsetdir (\I) } \log \brk{ \alpha_{\cut} } \cdot \nwalk{L - 1}{\cut}{\vertices}$.
By~\cref{gnn:lem:grid_tensor_sep_rank_lb} (from~\cref{gnn:app:proofs:sep_rank_lower_bound}), to prove that~\cref{gnn:eq:directed_sep_rank_lower_bound_graph_pred} holds for weights~$\params$, it suffices to find template vectors for which $\log \brk{ \rank \mat{ \gridtensor{ \funcgraph{\params}{\graph} } }{\I} } \geq \log \brk{ \alpha_{\cut^*} } \cdot \nwalk{L - 1}{\cut^*}{\vertices}$.
Notice that, since the outputs of $\funcgraph{\params}{\graph}$ vary polynomially with~$\params$, so do the entries of~$\mat{ \gridtensor{ \funcgraph{\params}{\graph} } }{\I}$ for any choice of template vectors.
Thus, according to~\cref{gnn:lem:poly_mat_max_rank} (from Appendix~\ref{gnn:app:proofs:sep_rank_lower_bound:lemmas}), by constructing weights~$\params$ and template vectors satisfying $\log \brk{ \rank \mat{ \gridtensor{ \funcgraph{\params}{\graph} } }{\I} } \geq \log \brk{ \alpha_{\cut^*} } \cdot \nwalk{L - 1}{\cut^*}{\vertices}$, we may conclude that this is the case for almost all assignments of weights, meaning~\cref{gnn:eq:directed_sep_rank_lower_bound_graph_pred} holds for almost all assignments of weights.
For undirected graphs with a single edge type, Appendix~\ref{gnn:app:proofs:sep_rank_lower_bound:graph} provides such weights $\weightmat{1}, \ldots, \weightmat{L}, \weightmat{o}$ and template vectors.
The proof in the case of directed graphs with multiple edge types is analogous, requiring only a couple adaptations: \emph{(i)}~weight matrices of all edge types at layer $l \in [L]$ are set to the $\weightmat{l}$ chosen in Appendix~\ref{gnn:app:proofs:sep_rank_lower_bound:graph}; and \emph{(ii)}~$\cut_\I$ and $\cutset (\I)$ are replaced with their directed counterparts $\cutdir_\I$ and $\cutsetdir (\I)$, respectively.

In the context of vertex prediction, let $\cut_t^* \in \argmax_{ \cut \in \cutsetdir (\I) } \log \brk{ \alpha_{\cut, t} } \cdot \nwalk{L - 1}{\cut}{ \{ t \} }$.
Due to arguments similar to those above, to prove that~\cref{gnn:eq:directed_sep_rank_lower_bound_vertex_pred} holds for almost all assignments of weights, we need only find weights $\params$ and template vectors satisfying $\log \brk{ \rank \mat{ \gridtensor{ \funcvert{\params}{\graph}{t} } }{\I} } \geq \log \brk{ \alpha_{\cut^*_t, t} } \cdot \nwalk{L - 1}{\cut^*_t}{ \{ t \} }$.
For undirected graphs with a single edge type, Appendix~\ref{gnn:app:proofs:sep_rank_lower_bound:vertex} provides such weights and template vectors.
The adaptations necessary to extend Appendix~\ref{gnn:app:proofs:sep_rank_lower_bound:vertex} to directed graphs with multiple edge types are identical to those specified above for extending Appendix~\ref{gnn:app:proofs:sep_rank_lower_bound:graph} in the context of graph prediction.

Lastly, recalling that a finite union of measure zero sets has measure zero as well establishes that~\cref{gnn:eq:directed_sep_rank_lower_bound_graph_pred,gnn:eq:directed_sep_rank_lower_bound_vertex_pred} jointly hold for almost all assignments of weights.
\qed

\subsection{Proof of~\cref{gnn:prop:tn_constructions}}
\label{gnn:app:proofs:tn_constructions}

We first prove that the contractions described by $\tngraph (\fmat)$ produce $\funcgraph{\params}{\graph} (\fmat)$.
Through an induction over the layer $l \in [L]$, for all $i \in \vertices$ and $\gamma \in [ \nwalk{L - l}{ \{i\} }{\vertices} ]$ we show that contracting the sub-tree whose root is $\deltatensordup{l}{i}{\gamma}$ yields $\hidvec{l}{i}$~---~the hidden embedding for $i$ at layer $l$ of the GNN inducing $\funcgraph{\params}{\graph}$, given vertex features $\fvec{1}, \ldots, \fvec{\abs{\vertices}}$.

For $l = 1$, fix some $i \in \vertices$ and $\gamma \in [ \nwalk{L - 1}{ \{i\} }{\vertices} ]$.
The sub-tree whose root is $\deltatensordup{1}{i}{\gamma}$ comprises $\abs{ \neigh (i ) }$ copies of $\weightmat{1}$, each associated with some $j \in \neigh (i)$ and contracted in its second mode with a copy of $\fvec{ j }$.
Additionally, $\deltatensordup{1}{i}{\gamma}$, which is a copy of $\deltatensor{ \abs{ \neigh (i) } +1 }$, is contracted with the copies of $\weightmat{1}$ in their first mode.
Overall, the execution of all contractions in the sub-tree can be written as $\deltatensor{ \abs{ \neigh (i) } + 1} \contract{ j \in [ \abs{ \neigh (i) } ] } \brk{ \weightmat{ 1 } \fvec{ \neigh (i)_j }}$, where $\neigh (i)_j$, for $j \in [\abs{\neigh (i)}]$, denotes the $j$'th neighbor of $i$ according to an ascending order (recall vertices are represented by indices from $1$ to $\abs{\vertices}$).
The base case concludes by~\cref{gnn:lem:delta_contract}: 
\[
\deltatensor{ \abs{ \neigh (i) } + 1} \contract{ j \in [ \abs{ \neigh (i) } ] } \brk*{ \weightmat{ 1 } \fvec{ \neigh (i)_j }} = \hadmp_{ j \in [ \abs{ \neigh (i) } ] } \brk*{ \weightmat{ 1 } \fvec{ \neigh (i)_j }} = \hidvec{1}{i}
\text{\,.}
\]

Assuming that the inductive claim holds for $l - 1 \geq 1$, we prove that it holds for $l$.
Let $i \in \vertices$ and $\gamma \in [ \nwalk{L - l}{\{i\}}{\vertices} ]$.
The children of $\deltatensordup{l}{i}{\gamma}$ in the tensor network are of the form $\weightmatdup{l}{ \neigh (i)_j }{ \dupmap{l}{i}{j} (\gamma) }$, for $j \in [ \abs{ \neigh (i) } ]$, and each $\weightmatdup{l}{ \neigh (i)_j }{ \dupmap{l}{i}{j} (\gamma) }$ is connected in its other mode to $\deltatensordup{l - 1}{ \neigh (i)_j }{ \dupmap{l}{i}{j} (\gamma) }$.
By the inductive assumption for~$l - 1$, we know that performing all contractions in the sub-tree whose root is $\deltatensordup{l - 1}{ \neigh (i)_j }{ \dupmap{l}{i}{j} (\gamma) }$ produces $\hidvec{l-1}{ \neigh (i)_j }$, for all $j \in [ \abs{ \neigh (i) } ]$.
Since $\deltatensordup{l}{i}{\gamma}$ is a copy of $\deltatensor{ \abs{ \neigh (i) } + 1 }$, and each $\weightmatdup{l}{ \neigh (i)_j }{ \dupmap{l}{i}{j} (\gamma) }$ is a copy of $\weightmat{l}$, the remaining contractions in the sub-tree of $\deltatensordup{l}{i}{\gamma}$ thus give:
\[
\deltatensor{ \abs{ \neigh (i) } + 1 } \contract{ j \in [ \abs{ \neigh (i) } ] } \brk*{ \weightmat{l} \hidvec{l - 1}{ \neigh (i)_{j } } }
\text{\,,}
\]
which according to~\cref{gnn:lem:delta_contract} amounts to:
\[
\deltatensor{ \abs{ \neigh (i) } + 1 } \contract{ j \in [ \abs{ \neigh (i) } ] } \brk*{ \weightmat{l} \hidvec{l - 1}{ \neigh (i)_{j } } } = \hadmp_{ j \in [ \abs{ \neigh (i) } ] } \brk*{ \weightmat{l} \hidvec{l - 1}{ \neigh (i)_{j } } } = \hidvec{l}{i}
\text{\,,}
\]
establishing the induction step.

\medskip

With the inductive claim at hand, we show that contracting $\tngraph (\fmat)$ produces $\funcgraph{\params}{\graph} (\fmat)$.
Applying the inductive claim for $l = L$, we have that $\hidvec{L}{1}, \ldots, \hidvec{L}{ \abs{ \vertices} }$ are the vectors produced by executing all contractions in the sub-trees whose roots are $\deltatensordup{L}{1}{1}, \ldots, \deltatensordup{L}{ \abs{\vertices} }{1}$, respectively.
Performing the remaining contractions, defined by the legs of $\deltatensor{ \abs{\vertices} + 1}$, therefore yields $ \weightmat{o} \brk1{ \deltatensor{ \abs{\vertices} + 1} \contract{ i \in [\abs{\vertices}] } \hidvec{L}{ i } }$. By~\cref{gnn:lem:delta_contract}:
\[
\deltatensor{ \abs{\vertices} + 1} \contract{ i \in [\abs{\vertices}] } \hidvec{L}{ i } = \hadmp_{ i \in [\abs{\vertices}] } \hidvec{L}{ i }
\text{\,.}
\]
Hence, $ \weightmat{o} \brk1{ \deltatensor{ \abs{\vertices} + 1} \contract{ i \in [\abs{\vertices}] } \hidvec{L}{ i } } = \weightmat{o} \brk{ \hadmp_{ i \in [\abs{\vertices}] } \hidvec{L}{ i } } = \funcgraph{\params}{\graph} (\fmat)$, meaning contracting $\tngraph (\fmat)$ results in $\funcgraph{\params}{\graph} (\fmat)$.

\medskip

An analogous proof establishes that the contractions described by $\tngraph^{(t)} (\fmat)$ yield $\funcvert{\params}{\graph}{t} (\fmat)$.
Specifically, the inductive claim and its proof are the same, up to $\gamma$ taking values in $[ \nwalk{L - l}{ \{i\} }{ \{t\}} ]$ instead of $[ \nwalk{L - l}{ \{i\} }{ \vertices } ]$, for $l \in [L]$.
This implies that $\hidvec{L}{t}$ is the vector produced by contracting the sub-tree whose root is $\deltatensordup{L}{t}{1}$.
Performing the only remaining contraction, defined by the leg connecting $\deltatensordup{L}{t}{1}$ with $\weightmat{o}$, thus results in $\weightmat{o} \hidvec{L}{t} = \funcvert{\params}{\graph}{t} (\fmat)$.
\qed

\subsubsection{Technical Lemma}
\label{gnn:app:proofs:tn_constructions:lemmas}

\begin{lemma}
	\label{gnn:lem:delta_contract}
	Let $\deltatensor{N + 1} \in \R^{D \times \cdots \times D}$ be an order $N + 1 \in \N$ tensor that has ones on its hyper-diagonal and zeros elsewhere, \ie~$\deltatensor{N + 1}_{d_1, \ldots, d_{N + 1}} = 1$ if $d_1 = \cdots = d_{N + 1}$ and $\deltatensor{N + 1}_{d_1, \ldots, d_{N + 1}} = 0$ otherwise, for all $d_1, \ldots, d_{N + 1} \in [D]$.
	Then, for any $\xbf^{(1)}, \ldots, \xbf^{(N)} \in \R^{D}$ it holds that $\deltatensor{N + 1} \contract{i \in [N]} \xbf^{(i)} = \hadmp_{i \in [N]} \xbf^{(i)} \in \R^D$.
\end{lemma}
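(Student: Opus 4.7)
The plan is to prove the identity by a direct entry-wise computation, unfolding the definition of tensor contraction against the sparsity of the $\delta$-tensor. Let $\ybf := \deltatensor{N + 1} \contract{i \in [N]} \xbf^{(i)}$. Since the contraction eliminates the first $N$ modes of $\deltatensor{N+1}$ while leaving the $(N+1)$-th mode, the result $\ybf$ is a vector in $\R^D$, so it suffices to compute its $d$-th entry for an arbitrary $d \in [D]$.

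First, I would apply the definition of mode contraction iteratively (\cref{gnn:def:tensor_contraction}), which yields the expression
\[
\ybf_d = \sum\nolimits_{d_1, \ldots, d_N = 1}^D \deltatensor{N+1}_{d_1, \ldots, d_N, d} \cdot \prod\nolimits_{i \in [N]} \xbf^{(i)}_{d_i} \text{\,.}
\]
The key observation is then the defining property of $\deltatensor{N+1}$: the summand $\deltatensor{N+1}_{d_1, \ldots, d_N, d}$ vanishes unless $d_1 = d_2 = \cdots = d_N = d$, in which case it equals $1$. Consequently, the $D^N$-term sum collapses to a single surviving term, giving
\[
\ybf_d = \prod\nolimits_{i \in [N]} \xbf^{(i)}_d = \brk[s]1{ \hadmp_{i \in [N]} \xbf^{(i)} }_d \text{\,,}
\]
where the last equality is just the definition of the Hadamard product. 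Since this holds for every $d \in [D]$, the desired identity $\deltatensor{N+1} \contract{i \in [N]} \xbf^{(i)} = \hadmp_{i \in [N]} \xbf^{(i)}$ follows.

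There is no real obstacle here; the only minor bookkeeping issue is making sure that the order in which the $N$ contractions are performed does not matter, but this is immediate from the fact that each contraction operates on a distinct mode and the tensors $\xbf^{(i)}$ are vectors (so applying \cref{gnn:def:tensor_contraction} in any order produces the same multi-sum). The entire proof is therefore a one-line calculation once the sparsity of $\deltatensor{N+1}$ is invoked.
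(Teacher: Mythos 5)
Your proof is correct and follows exactly the same route as the paper's: write the $d$'th entry of the contraction as a multi-sum, observe that the hyper-diagonal structure of $\deltatensor{N+1}$ collapses it to the single term with $d_1 = \cdots = d_N = d$, and identify the result with the Hadamard product. No gaps.
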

\begin{proof}
	By the definition of tensor contraction (\cref{gnn:def:tensor_contraction}), for all $d \in [D]$ we have that:
	\[
	\brk1{ \deltatensor{N + 1} \contract{i \in [N]} \xbf^{(i)} }_{d} = \sum\nolimits_{d_1, \ldots, d_N = 1}^{D} \deltatensor{N + 1}_{d_1, \ldots, d_N, d} \cdot \prod\nolimits_{i \in [N]} \xbf^{(i)}_{d_i} = \prod\nolimits_{i \in [N]} \xbf^{(i)}_d = \brk1{ \hadmp_{i \in [N]} \xbf^{(i)} }_d
	\text{\,.}
	\]
\end{proof}

}

\end{document}